\newcommand{\bA}{\ensuremath{\mathbb{A}}}
\newcommand{\bE}{\ensuremath{\mathbb{E}}}
\newcommand{\bI}{\ensuremath{\mathbb{I}}}
\newcommand{\bM}{\ensuremath{\mathbb{M}}}
\newcommand{\bN}{\ensuremath{\mathbb{N}}}
\newcommand{\bP}{\ensuremath{\mathbb{P}}}
\newcommand{\bQ}{\ensuremath{\mathbb{Q}}}
\newcommand{\bR}{\ensuremath{\mathbb{R}}}
\newcommand{\bW}{\ensuremath{\mathbb{W}}}
\newcommand{\cA}{\ensuremath{\mathcal{A}}}
\newcommand{\cB}{\ensuremath{\mathcal{B}}}
\newcommand{\cC}{\ensuremath{\mathcal{C}}}
\newcommand{\cE}{\ensuremath{\mathcal{E}}}
\newcommand{\cF}{\ensuremath{\mathcal{F}}}
\newcommand{\cG}{\ensuremath{\mathcal{G}}}
\newcommand{\cH}{\ensuremath{\mathcal{H}}}
\newcommand{\cJ}{\ensuremath{\mathcal{J}}}
\newcommand{\cK}{\ensuremath{\mathcal{K}}}
\newcommand{\cL}{\ensuremath{\mathcal{L}}}
\newcommand{\cN}{\ensuremath{\mathcal{N}}}
\newcommand{\cO}{\ensuremath{\mathcal{O}}}
\newcommand{\cP}{\ensuremath{\mathcal{P}}}
\newcommand{\cR}{\ensuremath{\mathcal{R}}}
\newcommand{\cS}{\ensuremath{\mathcal{S}}}
\newcommand{\cT}{\ensuremath{\mathcal{T}}}
\newcommand{\cU}{\ensuremath{\mathcal{U}}}
\newcommand{\cW}{\ensuremath{\mathcal{W}}}
\newcommand{\cX}{\ensuremath{\mathcal{X}}}
\newcommand{\cY}{\ensuremath{\mathcal{Y}}}
\newcommand{\cZ}{\ensuremath{\mathcal{Z}}}
\newcommand{\ccA}{\ensuremath{\mathscr{A}}}
\newcommand{\ccB}{\ensuremath{\mathscr{B}}}
\newcommand{\ccF}{\ensuremath{\mathscr{F}}}
\newcommand{\ccR}{\ensuremath{\mathscr{R}}}
\newcommand{\ccX}{\ensuremath{\mathscr{X}}}
\newcommand{\ccY}{\ensuremath{\mathscr{Y}}}
\newcommand{\sff}{\ensuremath{\mathsf{f}}}
\newcommand{\sfp}{\ensuremath{\mathsf{p}}}
\newcommand{\sfq}{\ensuremath{\mathsf{q}}}
\newcommand{\rmb}{\ensuremath{\mathrm{b}}}
\newcommand{\rmd}{\ensuremath{\mathrm{d}}}
\newcommand{\rmC}{\ensuremath{\mathrm{C}}}
\newcommand{\rmD}{\ensuremath{\mathrm{D}}}
\newcommand{\rmH}{\ensuremath{\mathrm{H}}}
\newcommand{\rmI}{\ensuremath{\mathrm{I}}}
\newcommand{\ent}{\rmH}
\newcommand{\entber}{\rmH_\rmb}
\newcommand{\relent}{\ensuremath{\rmD_{\mathrm{KL}}}}
\newcommand{\relentber}{\ensuremath{\rmd_{\mathrm{KL}}}}
\newcommand{\renyidiv}[1]{\ensuremath{\rmD}_{#1}}
\newcommand{\minf}{\rmI}
\newcommand{\fdiv}{\ensuremath{\rmD_{f}}}
\newcommand{\tv}{\ensuremath{\mathrm{TV}}}
\newcommand{\hel}{\ensuremath{\mathrm{Hel}}}
\newcommand{\var}{\ensuremath{\mathrm{Var}}}
\newcommand{\supp}{\ensuremath{\mathrm{supp}}}
\newcommand{\poprisk}{\ensuremath{\ccR}}
\newcommand{\emprisk}{\ensuremath{\widehat{\ccR}}}
\newcommand{\gen}{\ensuremath{\mathrm{gen}}}
\newcommand{\empgen}{\ensuremath{\widehat{\mathrm{gen}}}}
\newcommand{\excess}{\ensuremath{\mathrm{excess}}}
\newcommand{\growth}{\ensuremath{\Pi}}
\newcommand{\vcdim}{\ensuremath{\mathrm{VCdim}}}
\newcommand{\ndim}{\ensuremath{\mathrm{Ndim}}}
\newcommand{\rad}{\ensuremath{\mathrm{Rad}}}
\newcommand{\desc}{\ensuremath{\mathrm{desc}}}
\newcommand{\comp}{\mathfrak{C}_{n,\beta,S}}
\newcommand{\SCOprob}{(\cW,\cZ,\ell)}
\newcommand{\clb}{\cC_{L,B}}
\newcommand{\coorvec}[1]{\mathsf{e}({#1})}
\newcommand{\diam}{\ensuremath{\mathrm{diam}}}
\newcommand{\sign}{\ensuremath{\mathrm{sign}}}
\newtheorem{definition}{Definition}[chapter]
\newtheorem{theorem}{Theorem}[chapter]
\newtheorem{lemma}{Lemma}[chapter]
\newtheorem{proposition}{Proposition}[chapter]
\newtheorem{remark}{Remark}[chapter]
\newtheorem{corollary}{Corollary}[chapter]
\newtheorem{example}{Example}[chapter]
\definecolor{kth-blue}{RGB}{25,84,166}
\definecolor{kth-pink}{RGB}{216,84,151}
\definecolor{kth-gray}{RGB}{101,101,108}
\Crefname{definition}{\protect\textbf{Definition}}{\protect\textbf{Definitions}}
\Crefname{theorem}{\protect\textbf{Theorem}}{\protect\textbf{Theorems}}
\Crefname{lemma}{\protect\textbf{Lemma}}{\protect\textbf{Lemmata}}
\Crefname{proposition}{\protect\textbf{Proposition}}{\protect\textbf{Propositions}}
\Crefname{remark}{\protect\textbf{Remark}}{\protect\textbf{Remarks}}
\Crefname{example}{\protect\textbf{Example}}{\protect\textbf{Examples}}
\Crefname{corollary}{\protect\textbf{Corollary}}{\protect\textbf{Corollaries}}
\Crefname{figure}{\protect\textbf{Figure}}{\protect\textbf{Figures}}
\Crefname{section}{\protect\textbf{Section}}{\protect\textbf{Sections}}
\Crefname{chapter}{\protect\textbf{Chapter}}{\protect\textbf{Chapters}}
\Crefname{appendix}{\protect\textbf{Appendix}}{\protect\textbf{Appendices}}
\Crefname{objective}{\protect\textbf{Objective}}{\protect\textbf{Objectives}}
\DeclareMathOperator*{\esssup}{ess\,sup}
\DeclareMathOperator*{\argmin}{arg\,min}
\newcommand{\stack}[2]{\stackrel{\mathclap{(#1)}}{#2}}
\DeclareRobustCommand\ltseries
\newcommand{\ltdefault}{l}
\DeclareTextFontCommand{\textlt}{\ltseries}
\DeclareRobustCommand\hbseries
\newcommand{\hbdefault}{hb}
\DeclareTextFontCommand{\texthb}{\hbseries}
\begin{document}

\title{\textsc{\huge{An Information-Theoretic Approach to Generalization Theory}\\}}
\author{\\Borja Rodríguez-Gálvez, Ragnar Thobaben, and Mikael Skoglund \\
KTH Royal Institute of Technology \\
\texttt{borjabrg12@gmail.com}, \texttt{\{ragnart, skoglund\}@kth.se} \vspace{30em}}
\date{$^*$Based on the \href{https://kth.diva-portal.org/smash/record.jsf?dswid=901&pid=diva2:1848094}{PhD dissertation manuscript} of Borja Rodríguez-Gálvez. The text is taken almost verbose from there.}

\maketitle

\frontmatter %
\begin{abstract}
In this manuscript, we investigate the \emph{in-distribution generalization} of machine learning algorithms, focusing on establishing \emph{rigorous upper bounds on the generalization error}. We depart from traditional complexity-based approaches by introducing and analyzing \emph{information-theoretic bounds} that quantify the \emph{dependence between a learning algorithm and the training data}.
We consider two categories of generalization guarantees:
\begin{itemize}
    \item \emph{Guarantees in expectation}. These bounds measure performance in the average case. Here, the dependence between the algorithm and the data is often captured by the \emph{mutual information} or other information measures based on \emph{$f$-divergences}. While these measures offer an intuitive interpretation, they might overlook the \emph{geometry of the algorithm's hypothesis class}. To address this limitation, we introduce bounds using the \emph{Wasserstein distance}, which incorporates geometric considerations at the cost of being mathematically more involved. Furthermore, we propose a \emph{structured, systematic method} to derive bounds capturing the dependence between the algorithm and an individual datum, and between the algorithm and subsets of the training data, conditioned on knowing the rest of the data. These types of bounds provide deeper insights, as we demonstrate by applying them to derive generalization error bounds for the stochastic gradient Langevin dynamics algorithm.
    
    \item \emph{PAC-Bayesian guarantees}. These bounds measure the performance level with high probability. Here, the dependence between the algorithm and the data is often measured by the \emph{relative entropy}. We establish connections between the Seeger--Langford and Catoni's bounds, revealing that the former is optimized by the \emph{Gibbs posterior}. Additionally, we introduce \emph{novel, tighter bounds} for various types of loss functions, including those with a bounded range, cumulant generating function, moment, or variance. To achieve this, we introduce a new technique to optimize parameters in probabilistic statements.

\end{itemize}

\emph{We also study the limitations of these approaches}. We present a counter-example where most of the existing (relative entropy-based) information-theoretic bounds fail and where traditional approaches do not. 
Finally, we \emph{explore the relationship between privacy and generalization}. We show that algorithms with a bounded maximal leakage generalize. Moreover, for discrete data, we derive new bounds for differentially private algorithms that vanish as the number of samples increases, thus guaranteeing their generalization even with a constant privacy parameter. This contrasts with previous bounds in the literature, which require the privacy parameter to decrease with the number of samples to ensure generalization.

\vfill

\noindent \textbf {Keywords:} Generalization, Information-Theoretic Bounds

\end{abstract}

\mainmatter %
\renewcommand{\thefigure}{\thechapter.\arabic{figure}}
\tableofcontents

\chapter{Introduction}
\label{ch:introduction}
Currently, \emph{machine learning} silently shapes the world around us: from the curated content we see on social media to high-stakes industries such as healthcare, finance, or autonomous driving~\citep{jiang2017artificial,yurtsever2020survey,badue2021self,bolton2002statistical}. Machine learning algorithms are used to analyze data for early disease detection, fraud prevention, or steering the wheels, breaking, and speeding up autonomous vehicles. 
Therefore, it is crucial to have a robust theoretical understanding of \emph{when} and \emph{why} these algorithms will perform appropriately in critical situations.

\section{Background}

A machine learning algorithm (or just a \emph{learning algorithm}) is a mechanism that observes a sequence of instances of a problem (often referred to as a \emph{training set}) and returns a hypothesis for the said problem. For example, given a training set of medical records containing two biological markers and a label stating if a person has a certain disease or not, a learning algorithm returns a model that predicts the presence of that disease from a medical record. In this case, the hypothesis is the mapping between medical records and the presence of the disease characterized by the predictive model, as depicted in~\Cref{fig:medical_example}.

The hypothesis returned by the algorithm may perform really well in the training set. However, this does not necessarily mean that it will perform well on new, unseen data. Consider, for example, the complex hypothesis from~\Cref{fig:medical_example}; while it can perfectly explain the relationship between the medical markers and the presence of disease for all instances in the training data, it fails when it comes to describing unseen data from the same distribution.
The field of \emph{generalization theory} focuses on finding mathematical guarantees on the performance of the hypotheses returned by learning algorithms applied to new, unseen data.

\begin{figure}[t]
    \centering
    \includegraphics[width=0.9\textwidth]{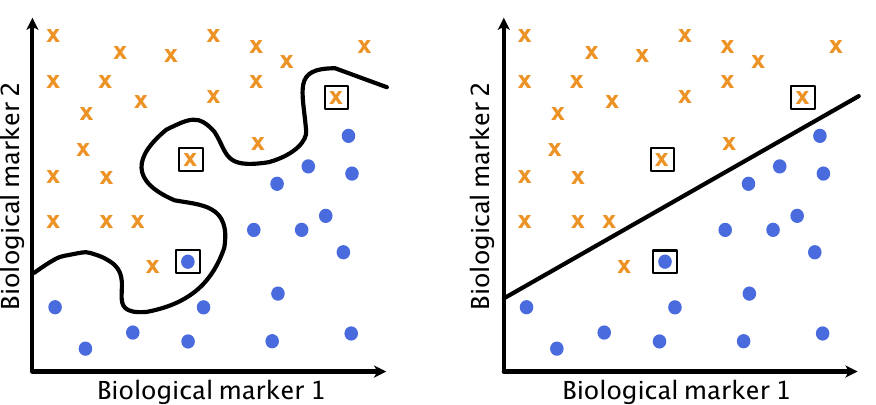}
    \caption{Fictitious relationship between two biological markers and the presence (orange crosses) or absence (blue dots) of a disease. The marks without a box represent the training set available to the algorithm, and the ones with a box represent new, unseen data. On the left, there is a complex hypothesis that overfits the training set. On the right, there is a simple hypothesis that performs slightly worse in the training set but generalizes to new instances.}
    \label{fig:medical_example}
\end{figure}

\looseness=-1 Often, we describe the performance of an algorithm through a \emph{loss function}. In the example above, the loss function returns 0 if the algorithm correctly determines if a person has a disease from their medical record, and 1 otherwise. Therefore, smaller values of the loss function mean a better performance. The aforementioned \emph{generalization guarantees} are often in the form of an upper bound on the expected value of the loss function on samples from the problem distribution, also known as the \emph{population risk}. Alternatively, the guarantees come as an upper bound on the \emph{generalization error}, which is defined as the difference between the population risk and the average loss function on the instances from the training set, or \emph{empirical risk}.

When studying the generalization of learning algorithms, we may consider either that the new data come from the same distribution (\emph{in-distribution generalization}) or from a close, albeit different distribution (\emph{out-of-distribution generalization}). For example, the medical records from patients from the same demographic collected in the same hospital following the same protocol may be considered from the same distribution. However, the medical records from patients from a different demographic are often considered from a different distribution. As an example, the blood oxygen levels of people living in high-altitude cities are lower than those of people living at sea level. This change in distribution is often known as a~\emph{distribution shift}. In this monograph, and thus throughout the forthcoming text, we will focus only on in-distribution generalization.

There are two main types of generalization guarantees:
\begin{itemize}
    \item \emph{Guarantees in expectation.} These guarantees are the less specific of the two. They ensure that the average performance of the hypotheses returned by the algorithm is never smaller than a certain amount.
    \item \emph{High-probability guarantees.} These guarantees ensure that, with high probability, the performance of the hypotheses returned by the algorithm is never smaller than a certain amount. In the context of information-theoretic generalization, this kind of guarantees are referred to as \emph{PAC-Bayesian guarantees}.
\end{itemize}

Classical approaches to studying the generalization of learning algorithms focus on the complexity of the class of hypotheses considered by the algorithm. This refers to the richness of the possible hypotheses the algorithm can select. The idea behind these approaches is that hypotheses from more complex classes can fit the training set really well, but they may potentially \emph{overfit}, that is, they may adapt to the sampling noise for the specific data they observe, which may differ from future instances. On the other hand, hypotheses from simpler classes cannot adapt to the sampling noise and need to focus on the high signal patterns in the data, representative of the data distribution. The downside of the more simple classes is that their lack of richness may mean that none of their hypotheses can describe well the patterns in the training set. However, they still have a small \emph{generalization error}; their bad performance in the training set is predictive of a bad performance on new, unseen data. See \Cref{fig:medical_example} for an example of this dichotomy.

An important feature of classical approaches like those based on \emph{uniform convergence} or on the \emph{Rademacher complexity} is that they provide generalization guarantees for the \emph{whole hypothesis class} considered by the algorithm. This can be too restrictive, as within a complex hypothesis class, there may still be hypotheses that generalize well. A step away from this restriction is given by the \emph{algorithmic stability framework}, which gives generalization guarantees for \emph{stable algorithms}, that is, algorithms that return similar hypotheses when presented with a similar training set. In this way, algorithmic stability can guarantee the generalization of algorithms that consider complex hypotheses, as long as they are consistent in returning these hypotheses for similar training sets. Often, the concept of ``similar training sets'' only takes into account every possible value of the observed data, however unlikely, rendering the framework too restrictive for certain situations. Finally, the framework of \emph{information-theoretic generalization} gives guarantees that are \emph{specific} to both the algorithm and the data distribution. The idea is to use an \emph{information measure} to capture the dependence between the hypothesis returned by the algorithm and the training set. Intuitively, the more the hypothesis depends on the specific instances it has seen, the more it overfits and the less it generalizes. This individualized approach, while more nuanced, allows for stronger theoretical results in situations where the previous approaches fail.

The information-theoretic framework to generalization elucidates a \emph{connection between privacy and generalization}. An algorithm is private if the hypothesis it returns does not leak or rely on much information about the instances used for training. Therefore, the more private an algorithm is, the smaller the dependence between the hypothesis returned by the algorithm and training set, and the better it generalizes.

\section{Overview of the Monograph}

In this monograph, we study the information-theoretic framework for generalization. We introduce the topic in this chapter, which is followed by two other chapters to prepare the reader for the later content. 

In \Cref{ch:preliminaries}, we detail the notation and definitions necessary to follow the monograph. Particularly, we describe the elements of probability and information theory that we use throughout the monograph. Moreover, we discuss the concepts of convex conjugacy and Wasserstein distances and present a straightforward extension of the Bobkov–Götze theorem connecting the Wasserstein distance to the relative entropy that will be of use later on.

In \Cref{ch:generalization_theory}, we introduce the notation and definitions related to generalization theory and go into greater detail about the classical frameworks to study it. We use this chapter to further motivate and contextualize the information-theoretic framework to generalization with respect to the other alternatives.

Once the reader has the proper context of the elements of information theory, probability, and generalization theory, we commence with our investigation. Our study may be separated into three main blocks: the study of guarantees in expectation, the study of PAC-Bayesian guarantees, and the connections between privacy and generalization. Each of these topics has a dedicated chapter to them, namely~\Cref{ch:expected_generalization_error,ch:pac_bayesian_generalization,ch:privacy_and_generalization}. Nonetheless, the analysis and findings of each block are interconnected. Below, in~\Cref{subsec:introduction_guarantees_expectation,subsec:introduction_pac_bayes,subsec:introduction_privacy_generalization}, we outline the scope and contents of these chapters.

Finally, in~\Cref{ch:discussion_and_future_work}, we reflect on the state of the art and the results from the monograph and we discuss potential future avenues of research.

\subsection{Guarantees in Expectation}
\label{subsec:introduction_guarantees_expectation}

\sloppy \citet{xu2017information}, based on~\citep{russo2016controlling}, sparked the interest in information-theoretic generalization guarantees in expectation. Their result captured the dependence between the hypothesis returned by the algorithm and the training set using mutual information. After that, \citet{bu2020tightening} showed that considering the information that the returned hypothesis contains about each individual datum provides a better characterization of the expected generalization performance. Similarly, \citet{negrea2019information} realized that there are situations where considering the information contained in a subset of the training data, given that the rest of the data is known, is beneficial to characterize the generalization of certain algorithms. Later, \citet{steinke2020reasoning} discussed how instead of studying the information the hypothesis contains about the training data, studying if the training data can be \emph{identified} from the hypothesis was more appropriate. 

Throughout \Cref{ch:expected_generalization_error}, especially in~\Cref{sec:bounds_using_mutual_information,sec:bounds_using_conditional_mutual_information,sec:random_subset_and_single_letter,sec:bounds_using_wasserstein_distance}, we organize all these different observations into a structured, systematic methodology to derive information-theoretic guarantees in expectation. Furthermore, this methodology is agnostic to the measure considered to characterize the dependence between the algorithm's returned hypothesis and the training set. 

Despite the fact that the methodology can be made agnostic to the metric used to capture the dependence between the hypothesis and the training data, traditionally, the relative entropy and the mutual information are often chosen for this endeavor. This is due to their simplicity, interpretability, and mathematical tractability. 

In~\Cref{sec:noisy_iterative_learning_algos,sec:bounds_using_wasserstein_distance,sec:limitations_bounds_using_mutual_information}, we delve deeper into the choice of this metric. First, in \Cref{sec:bounds_using_wasserstein_distance}, we show that the relative entropy-based guarantees are tight in non-trivial scenarios. Then, in~\Cref{sec:noisy_iterative_learning_algos}, we demonstrate how the relative entropy-based guarantees can be employed to gain a better understanding of the generalization of noisy, iterative algorithms such as the stochastic gradient Langevin dynamics. Second, in~\Cref{sec:limitations_bounds_using_mutual_information}, we show that there are situations where the relative entropy-based guarantees cannot predict the generalization of an algorithm, while other frameworks like uniform stability can.

Usually, to capture the hypothesis' dependence on the training data, we measure the discrepancy between the algorithm's output hypothesis distribution and a reference distribution that does not depend on the data. As mentioned above, this discrepancy is often determined by the relative entropy, albeit there exist other works that consider other $f$-divergences or the Rényi divergence~\citep{esposito2021generalization,ohnishi2021novel,hellstrom2020generalization}. Unfortunately, all these divergences lead to vacuous guarantees if the reference distribution places no density on a hypothesis that may be returned by the algorithm, even if it very rarely returns it. The reason for this failure is that these divergences are agnostic to the geometry, and their value solely depends on the ratio of densities of the algorithm's output and the reference distributions.

In~\Cref{sec:bounds_using_wasserstein_distance}, we study how to capture the algorithm's returned hypothesis dependence on the training set using the Wasserstein distance. This metric takes the geometry of the hypothesis' space into consideration and does not depend on the ratio of the densities of the algorithm's output and the reference distributions.

Throughout~\Cref{ch:expected_generalization_error}, we obtain other findings and insights. For example, the development of new generalization guarantees when the performance metric considered has heavy tails.

\subsection{PAC-Bayesian Guarantees}
\label{subsec:introduction_pac_bayes}

The study of information-theoretic high-probability generalization guarantees can be traced back to~\citet{shawe1996framework} or, even earlier, to the minimum description length principle from~\citet{rissanen1978modeling} and its relationship with generalization from, for example, Barron, Cover, Kearns, and others~\citep{barron1991complexity,barron1991minimum,kearns1995experimental}. Nonetheless, the field started to gain traction with the seminal works from~\citet{mcallester1998some,mcallester1999pac,mcallester2003pac}, where he introduced the nomenclature of PAC-Bayesian generalization.

\looseness=-1 Similarly to what we mentioned above in~\Cref{subsec:introduction_guarantees_expectation}, the main result from \citet{mcallester1998some,mcallester1999pac,mcallester2003pac} related the generalization of a learning algorithm to the relative entropy of the returned hypothesis distribution (also known as \emph{posterior}) with respect to a reference measure (also known as \emph{prior}) independent of the training data. His results were restricted to settings where the performance was evaluated by a loss function with a bounded range. This guarantee presents a \emph{slow rate} with respect to the number of samples, that is, the generalization error is inversely proportional to the square root of the number of samples $(\nicefrac{1}{\sqrt{n}})$. Moreover, it is of \emph{high-probability}, that is, it depends logarithmically with the probability of failure $(\log \nicefrac{1}{\beta})$.

For losses with a bounded range, Seeger and Langford~\citep{seeger2002pac,langford2001bounds} and \citet{catoni2003pac,catoni2007pac} improved upon the works from~\citet{mcallester1998some,mcallester1999pac,mcallester2003pac} and developed some of the tightest guarantees for this setting. Unfortunately, these results have the shortcoming of being rather involved and/or not having a closed-form solution for the posterior characterizing the optimal algorithm (that is, the algorithm that optimizes the generalization guarantee). After that, there have been some other developments bringing more interpretable guarantees~\citep{tolstikhin2013pac,thiemann2017strongly,rivasplata2019pac}, albeit at the cost of providing a worse characterization of the generalization error.

In \Cref{sec:bounds_bounded_losses}, we recover a connection between the results from Seeger and Langford~\citep{seeger2002pac,langford2001bounds} and \citet{catoni2003pac,catoni2007pac}. Using this connection, we then obtain an equivalent guarantee that highlights that the bound has a \emph{fast rate} of $\nicefrac{1}{n}$. Moreover, this guarantee reveals that the Gibbs posterior characterizes the algorithm that optimizes the bound from Seeger and Langford~\citep{seeger2002pac,langford2001bounds}.

When the loss has an unbounded range, there have been several attempts at extending~\citet{mcallester1998some,mcallester1999pac,mcallester2003pac}'s theory~\citep{catoni2004statistical, hellstrom2020generalization, guedj2021still, alquier2006transductive, alquier2018simpler, holland2019pac, kuzborskij2019efron, haddouche2021pac, haddouche2023pacbayes, wang2015pac}. However, most of these guarantees are either not of high probability (that is, they depend \emph{linearly} or \emph{polynomially} with the probability of failure $\beta$), contain terms that often make the guarantees non-decreasing with the number of samples $n$, decrease at a slower rate than the results from~\citet{mcallester1998some,mcallester1999pac,mcallester2003pac} when restricted to the case of losses with a bounded range, or depend on parameters that need to be chosen before the draw of the training data.

In~\Cref{sec:bounds_unbounded_losses}, we devote our attention to these shortcomings. First, in~\Cref{subsec:losses_with_bounded_CGF}, we obtain a PAC-Bayesian Chernoff analogue for losses with light tails, that is, with a bounded cumulant generating function. Then, in~\Cref{subsec:losses_with_bounded_moment}, we build on the previous results to obtain PAC-Bayesian guarantees for losses with a bounded raw moment and a bounded variance. We emphasize that all the guarantees developed in~\Cref{subsec:losses_with_bounded_CGF,subsec:losses_with_bounded_moment} are of high-probability, improve upon (or are equivalent) to the bound from~\citet{mcallester1998some,mcallester1999pac,mcallester2003pac} when restricted to losses with a bounded range, and are parameter independent (or, in case to be parameterized, they hold simultaneously for all values of the parameter, permitting a free optimization).

Finally, it is worth mentioning that the PAC-Bayesian results from~\citet{mcallester1998some,mcallester1999pac,mcallester2003pac}, even though they hold with high probability with respect to the training set, they provide guarantees for the average hypothesis returned by the algorithm. That is, the guarantees state that, with high probability, we will observe a training set for which the algorithm will generalize \emph{on average}. If the algorithm is deterministic, this does not mean much, as the algorithm always returns the same hypothesis when it observes the same training set. However, for randomized algorithms, this subtlety makes this kind of guarantees less specific.

An alternative to this kind of guarantees is provided by~\citet{catoni2003pac,catoni2007pac}, who studied single-draw PAC-Bayesian guarantees.\footnote{This nomenclature is due to~\citet{hellstrom2020generalization}.} These guarantees state that, with high probability, we will observe a training set for which the algorithm returns a hypothesis that generalizes. 

In~\Cref{sec:single-draw-pac-bayes}, we obtain single-draw PAC-Bayesian analogues to every previous PAC-Bayesian guarantee seen previously in~\Cref{ch:pac_bayesian_generalization}, where the relative entropy is substituted by the Radon--Nikodym derivative between the posterior and the prior.

\looseness=-1 Throughout the chapter, we also obtain other findings and realizations. For instance, to develop the results from~\Cref{sec:bounds_unbounded_losses}, we developed a technique to optimize parameters in probabilistic statements that can be of independent interest.

\subsection{Connection Between Privacy and Generalization}
\label{subsec:introduction_privacy_generalization}

There are three main frameworks to define and describe the privacy of mechanisms. The first one is \emph{semantic security}~\citep{Goldwasser1984ProbabilisticE}, which guarantees an absolute disclosure prevention of the information from the data. Unfortunately, applied to machine learning algorithms, this implies that the algorithm cannot even learn the patterns regarding the underlying population distribution~\citep{dwork2010difficulties}. A second framework is \emph{quantitative information flow}~\citep{smith2009foundations}. This framework encompasses many different sub-frameworks defining notions of information leakage and relate the amount of privacy to the amount of leakage of the mechanism. In this framework, one first defines a \emph{threat model}, that is, an adversary with certain capabilities that has a specific objective, and the leakage is defined as the adversary's ability to achieve their objective. Two particularly compelling leakage definitions, due to their strong operational meaning, are \emph{maximal leakage}~\citep{issa2020operational} and \emph{maximal $g$-leakage}~\citep{m2012measuring}. Luckily, it was shown that the two are equivalent~\citep{issa2020operational,alvim2014additive}. The third framework is~\emph{differential privacy}~\citep{dwork2006calibrating,dwork2014algorithmic}, which ensures that two neighbouring training sets, that is, two training sets differing in a single instance, cannot be distinguished based on their returned hypothesis.

With respect to the maximal leakage, previous studies have shown that, for potentially unbounded losses with light tails, algorithms with a finite maximal leakage generalize~\citep{esposito2021generalization,hellstrom2020generalization}. However, it is generally known that most of the dependence measures employed to characterize the information-theoretic guarantees are smaller than the maximal leakage. Therefore, algorithms with a bounded maximal leakage generalize more broadly.

In~\Cref{sec:maximal_leakage_privacy}, we use the results from~\Cref{ch:expected_generalization_error,ch:pac_bayesian_generalization} to recover known results and develop generalization guarantees for algorithms with a bounded maximal leakage. Although the presented results in this section are considered ``folklore'', they are now formalized and collected explicitly.

The level of privacy of differentially private algorithms depends on two privacy parameters, $\varepsilon$ and $\delta$. There have been several dedicated studies deriving generalization guarantees for differentially private algorithms~\citep{dwork2015preserving,dwork2015generalization,JMLR:v17:15-313,jung2021new}. However, in most of these results, the generalization guarantees do not improve as the number of samples $n$ increases. In these guarantees, the generalization error is proportional to the privacy parameter $\varepsilon$, and thus they require this parameter to decrease with the number of samples to imply a strong generalization performance.

Similarly to what happened with the maximal leakage, most of the dependence measures used to derive information-theoretic guarantees can be bounded by the privacy parameter $\varepsilon$. Unfortunately, this analysis also leads to generalization guarantees where the generalization error is proportional to the privacy parameter $\varepsilon$ but does not decrease with the number of samples $n$.

In~\Cref{sec:differential_privacy_generalization}, we first recover results similar to those in the state of the art, which guarantee the generalization of differentially private algorithms as long as their privacy parameters decrease with the number of samples. Then, we restrict ourselves to permutation invariant algorithms that operate on discrete data. We show that every algorithm of that kind generalizes in this setting. Moreover, we show that if the algorithm is also differentially private, then the generalization guarantees are stronger and vanish even for constant privacy parameters.

\chapter{Preliminaries}
\label{ch:preliminaries}

In this chapter, we detail the notation and definitions necessary to follow the manuscript. In~\Cref{sec:prob_theory}, we discuss the notation employed to treat the elements of probability theory and justify the usage and assumption of standard Borel spaces throughout the manuscript. This first section will also serve to avoid measurability problems later on. After that, in~\Cref{sec:information_theory}, we review some important quantities in information theory, present some of their properties, and give some intuition about them. Finally, in~\Cref{sec:miscellaneous}, we discuss the concepts of convex conjugacy and Wasserstein distances and offer a trivial extension of the Bobkov--Götze theorem to functions with a bounded cumulant generating function.

The intention of this chapter is briefly introduce the above topics and the notation we will use to refer to them. Thus, an advanced reader may skip this chapter and only come back when they do not recognize some of the notation. The other readers are welcome to go over this chapter since, even if the treatment of the subjects is simplified, it contains insights that aid the understanding of the following chapters. Moreover, some of the insights may help more novice readers to introduce themselves into the fields before inquire into them using the bibliography provided.

\section{Probability Theory}
\label{sec:prob_theory}

Probability theory deals with the study of random phenomena. Since this article is centered around the generalization properties of (potentially stochastic) algorithms, this section lays down the notation used throughout the manuscript and some fundamental definitions and results that aid the understanding of what follows. The covering of the topic, even at an introductory and fundamental level, is far from complete, and we refer the reader to the excellent books \cite{gray2009probability,kallenberg1997foundations,mcdonald1999course} for further reading. 

\subsection{Probability Spaces and Random Objects}
\label{subsec:probability_spaces_and_random_objects}

A \emph{sample space} $\cA$ is a collection of elements that nature can select randomly. A set of elements from the sample space is an \emph{event}, and any collection $\ccA$ of events that form a $\sigma$-field (or $\sigma$-algebra~\cite[Section 1.3]{gray2009probability}) is an \emph{event space}. Together, the tuple $(\cA, \ccA)$, or simply $\cA$ if the event space $\ccA$ is clear from the context, is a \emph{measurable space}. Let $\varnothing$ be the empty set. A function $\bP: \ccA \to [0,1]$ such that $\bP \big[ \varnothing \big] = 0$, $\bP \big[ \cA \big] = 1$, and $\bP\big[\bigcup_{i=1}^\infty \cE_i \big] = \sum_{i=1}^\infty \bP\big[ \cE_i \big]$ for all disjoint events $\cE_i$ is a \emph{probability measure}, and $\bP \big[ \cE \big]$ is interpreted as the probability of the event $\cE \in \ccA$.

\begin{definition}
\label{def:probability_space}
A \emph{probability space} $(\cA,\ccA,\bP)$ consists of a measurable space $(\cA,\ccA)$ and a probability measure $\bP:\ccA \to [0,1]$ that assigns probabilities to events.
\end{definition}

Throughout the manuscript, there are times where a condition \texttt{C} is said to hold $\bP$-almost surely (or $\bP$-a.s.).\footnote{This is also known as almost everywhere (a.e.), almost always (a.a.), for almost all $a \in \cA$, almost certainly (a.c.), and with probability one (w.p.1).} This means that the set of elements for which the condition does not hold is contained in a set that has probability zero, that is, there is a set $\cS$ such that  $\{ a \in \cA : \textnormal{\texttt{C} does not hold} \} \subseteq \cS$ and $\bP \big[ \cS \big] = 0$. This concept is useful, for example, to define the concepts of \emph{essential supremum} and \emph{essential infimum}. Consider a function $f : \cA \to \bR$, the essential supremum of $f$ with respect to $\bP$ is defined as the smallest value $b \in \bR$ that bounds $f$ from above $\bP$-almost surely. That is, $\esssup_\bP f \coloneqq \inf \{ b \in \bR : f \leq b \ \bP\textnormal{-a.s.} \}$. The essential infimum is defined analogously. 

\begin{definition}
\label{def:random_object}
A \emph{random object} $X$ is an $\ccA$-measurable function $X: \cA \to \cX$ from a source measurable space $(\cA, \ccA)$ to a target measurable space $(\cX, \ccX)$. That is, for all events $\cE$ in $\ccX$, their anti-images $X^{-1}(\cE)$ are in $\ccA$.
\end{definition}

The pair of an abstract probability space $(\cA, \ccA, \bP)$ and a random object $X$ on a measurable space $(\cX, \ccX)$ give a model for random phenomena: nature chooses an outcome at random from $\cA$ and we observe the outcome $X(a)$. In particular, the probability that we observe an event $\cE \in \ccX$ is
\begin{equation*}
    \bP \big[ \{ a \in \cA : X(a) \in \cE \} \big] = \bP \circ X^{-1} \big[ \cE \big].
\end{equation*}
The probability measure $\bP_X := \bP \circ X^{-1} :  \ccX \to [0,1]$ is the \emph{probability distribution} (or just distribution) of $X$, and it describes the probability space $(\cX, \ccX, \bP_X)$. The set of all probability distributions on $\cX$ is denoted as $\cP(\cX)$. Often, we make probabilistic statements involving random objects such as: ``condition $\texttt{C}(X)$ holds''. When we want to measure the probability of such statements we abuse notation and write $\bP \big[ \texttt{C}(X) \big] = \bP \big[ \{a \in \cA : \texttt{C} \circ X (a) \textnormal{ holds} \} \big]$.

Finally, the \emph{expectation} (or expected value or mean) of a random object $X$ is 
\begin{equation*}
	\bE[X] := \int_\cA X(a) \rmd \bP[a] = \int_\cX x  \rmd \bP_X[x] 
\end{equation*}
whenever each integral exists.

Throughout the manuscript, random objects $X$ are written in capital letters, their realizations or outcomes $x$ in lowercase letters, their target sample space $\cX$ in calligraphic letters, their target event space $\ccX$ in script-style letters, and their probability distribution is written as $\bP_X$. Expectations of random objects are written as $\bE[X]$ unless when their probability distribution $\bQ = \bP_X$ is not clear, in which case they are written as $\bE_{x \sim \bQ} [x]$. 

\subsection{Polish and Standard Borel Spaces}
\label{subsec:polish_borel_spaces}

Modeling random phenomena as pairs of abstract probability spaces $(\cA, \ccA, \bP)$ and random objects $X$ allows us to study the probability space $(\cX, \ccX, \bP_X)$ induced by the said objects. In this way, one may demand certain structural properties to the target space $(\cX, \ccX)$ generated by the outcomes of the random object and maintain the source space $(\cA, \ccA)$ from nature abstract. Ideally, the demanded properties are restrictive enough to allow us to develop fundamental results in probability theory, while being general enough to capture the behavior of many random phenomena of interest.

A natural\footnote{Arguably the most general measurable spaces that allow us to proof fundamental results in probability theory are standard spaces~\cite{gray2009probability}. For brevity and simplicity, the discussion and results developed are for standard Borel spaces, though some of the results can be directly extended to standard spaces.} approach to equipping a target sample space $\cX$ with structure is to consider a metric $\rho: \cX \times \cX \to \bR$. The tuple $(\cX, \rho)$, or simply $\cX$ if the metric $\rho$ is clear from the context, is called a \emph{metric space}. A desirable property of a metric space $\cX$ is that every element $x \in \cX$ can be well approximated. This is formalized with the following two properties:
\begin{enumerate}
    \item \emph{Completeness}: The metric space $(\cX,\rho)$ is \emph{complete} if all Cauchy sequences converge to points in $\cX$. A sequence $\{ x_n \in \cX \}_{n=1}^\infty$ is Cauchy if for all $\varepsilon > 0$ there is a $k \in \bN$ such that $\rho(x_n, x_m) < \varepsilon$ whenever $n,m \geq k$.
    \item \emph{Separability}: The sample space $\cX$ is \emph{separable} if there is a countable, dense subset $\cE \subseteq \cX$. That is, a set $\cE$ whose closure is the whole space ($\overline{\cE} = \cX$).
\end{enumerate}

\begin{definition}
\label{def:polish_space}
A \emph{Polish space} $(\cX, \rho)$ is a complete and separable metric space.
\end{definition}

Given a metric space $(\cX, \rho)$, one may consider the collection of open balls $\cB_r(x) \coloneqq \{ y \in \cX : \rho(x,y) < r \}$ (or its induced topology~\cite[Section 1.2]{gray2009probability}). Then, the \emph{Borel $\sigma$-field} is the $\sigma$-field generated by such a collection; that is,
\begin{equation*}
    \ccB(\cX) \coloneqq \sigma \big(\{ \cB_r(x): x \in \cX, r > 0 \} \big).
\end{equation*}

Therefore, this manuscript focuses on target spaces $(\cX, \ccX)$ where the sample space $\cX$ is Polish and the event space is its Borel $\sigma$-field $\ccX = \ccB(\cX)$. These measurable spaces are called \emph{standard Borel spaces}. 

\begin{definition}
\label{def:standard_borel_spaces}
A \emph{standard Borel space} $(\cX, \ccX)$ is a measurable space that is Borel isomorphic to $(\cS,\cB(\cS))$, where $\cS$ is some subset of $[0,1]$. That is, there is a measurable one-to-one and onto function $f: \cX \to \cS$ such that $f^{-1}$ is also measurable.
\end{definition}

\begin{theorem}[Polish and standard Borel spaces~{\cite[Theorem~4.3]{gray2009probability}}]
\label{th:polish_and_borel_spaces}
Let $(\cX,\rho)$ be a Polish space. If $\cS$ is a subset of $\cX$, then $(\cS, \cB(\cS))$ is a standard Borel space.
\end{theorem}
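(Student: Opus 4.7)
The plan is to construct, for the given Polish space $(\cX,\rho)$ and subset $\cS \subseteq \cX$, a Borel isomorphism between $(\cS, \cB(\cS))$ and $(\cT, \cB(\cT))$ for some $\cT \subseteq [0,1]$. Because the trace Borel $\sigma$-field $\cB(\cS)$ is the restriction of $\cB(\cX)$ to subsets of $\cS$, it suffices to produce a Borel isomorphism $\Psi: \cX \to \Psi(\cX) \subseteq [0,1]$ and then take $\cT = \Psi(\cS)$; restricting $\Psi$ to $\cS$ yields the desired map.

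First, I would embed $\cX$ into the Hilbert cube using separability. Fix a countable dense sequence $\{x_n\}_{n \in \bN} \subseteq \cX$ and define
\begin{equation*}
\phi: \cX \to [0,1]^{\bN}, \qquad \phi(x) = \left( \frac{\rho(x,x_n)}{1 + \rho(x,x_n)} \right)_{n \in \bN}.
\end{equation*}
Each coordinate of $\phi$ is continuous because $\rho(\cdot,x_n)$ is $1$-Lipschitz, and density of $\{x_n\}$ makes $\phi$ injective with continuous inverse on its image. Hence $\phi$ is a topological (and in particular Borel) embedding of $\cX$ into $[0,1]^{\bN}$.

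Next, I would produce a Borel isomorphism $\chi$ from the Hilbert cube onto a subset of $[0,1]$. The central tool is binary expansion: the map $\beta: \{0,1\}^{\bN} \to [0,1]$, $\beta(b) = \sum_{n \geq 1} b_n 2^{-n}$, is a continuous surjection that is a bijection except on the countable set of sequences representing dyadic rationals. A countable modification of $\beta$ yields a Borel bijection between $\{0,1\}^{\bN}$ and a Borel subset of $[0,1]$. Interleaving coordinates through any bijection $\bN \leftrightarrow \bN \times \bN$ gives the Borel isomorphism $\{0,1\}^{\bN} \cong (\{0,1\}^{\bN})^{\bN}$. Applying these Borel isomorphisms coordinate-wise yields the desired $\chi : [0,1]^{\bN} \to \chi([0,1]^{\bN}) \subseteq [0,1]$.

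Setting $\Psi = \chi \circ \phi$ and restricting to $\cS$ completes the proof. The main obstacle is the bookkeeping in the binary-expansion step, since $\beta$ fails to be a bijection on the countable set of dyadic rationals; the cleanest way to handle this is to invoke a Borel Schr\"oder--Bernstein argument, which states that any two measurable spaces that Borel-embed into each other are Borel isomorphic, thereby sidestepping the explicit patching of countable exceptional sets.
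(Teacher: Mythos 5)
Your proof is correct, but note that the paper itself does not prove this statement at all: it is quoted verbatim from Gray's textbook (the citation [Theorem 4.3] accompanying the theorem), so there is no in-paper argument to compare against. Judged on its own, your route is the standard one and it works under the paper's \Cref{def:standard_borel_spaces}, which only asks for a Borel isomorphism onto \emph{some} subset of $[0,1]$ (not necessarily a Borel subset) — this is why your argument legitimately never uses completeness of $(\cX,\rho)$, and why the reduction ``embed $\cX$, then restrict to $\cS$'' is sound, since $\cB(\cS)$ is the trace $\sigma$-field and restricting an injective bimeasurable map preserves bimeasurability. The Kuratowski-type embedding $\phi$ into the Hilbert cube and the coordinatewise coding into $\{0,1\}^{\bN}$ are fine (one should say explicitly that each coordinate of $[0,1]$ is first sent to its binary digit sequence, e.g.\ the expansion not eventually $1$, which is Borel, before interleaving). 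Two small remarks: the dyadic-rational bookkeeping you defer to a Borel Schr\"oder--Bernstein argument can be avoided entirely by replacing the binary map $\beta$ with the ternary (Cantor-set) map $b \mapsto \sum_{n\geq 1} 2 b_n 3^{-n}$, which is a homeomorphism of $\{0,1\}^{\bN}$ onto the Cantor set in $[0,1]$, so $\chi$ is an honest topological embedding and no exceptional sets arise; and be aware that if ``standard Borel'' were given its usual meaning (isomorphic to a \emph{Borel} subset of $[0,1]$), the statement for arbitrary subsets $\cS \subseteq \cX$ would be false, so your proof's validity genuinely hinges on the paper's weaker definition.
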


\Cref{def:standard_borel_spaces} tells us that any uncountable standard Borel space inherits the properties of the measurable space $\big((0,1),\cB((0,1))\big)$ and \Cref{th:polish_and_borel_spaces} states that all the target spaces $(\cX, \ccX)$ constructed as described before are indeed standard Borel. At first glance, these attractive properties may seem very limiting, but in fact many common and useful measurable spaces are standard Borel. For example, all the following sample spaces construct standard Borel spaces with the following selected, usual metrics:
\begin{itemize}
    \item Any finite set $\cX$ such that $|\cX| < \infty$ with the discrete metric $\rho(x,y) = \bI[x \neq y]$.
    \item The rational $\cX = \bQ$, the irrational $\cX = \bR \setminus \bQ$, and the real numbers $\cX = \bR$ with the metric $\rho(x,y) = |x - y|$.
    \item The product of the rational $\cX = \bQ^n$, the irrational $\cX = (\bR \setminus \bQ)^n$, and the real numbers $\cX = \bR^n$ with the metric $\rho(x,y) = \lVert x - y \rVert_2$.
    \item Any separable Hilbert space, such as the space of square-integrable functions $\cX = \cL^2(\mu)$ with metric $\rho(f,g)^2 = \int (f-g)^2 d\mu$.
\end{itemize}

For any distribution $\bP$ on a Polish space $\cX$, the \emph{support} of the distribution is defined as the subset $\supp(\cX)$ of $\cX$ such that every neighbourhood of an element of $\supp(\cX)$ has a positive probability, that is, $\supp(\bP) = \{x \in \cX : \bP \big[\cB_r(x) \big] > 0 \textnormal{ for all } r > 0 \}$.

\subsection{Conditional Probability and Conditional Expectation}

Consider a probability space $(\cA, \ccA, \bP)$, a random object $X$ in $(\cX,\ccX)$, and an event $\cF \in \ccA$ such that $\bP[\cF] > 0$. Given that the event $\cF$ occurred, the probability that the event $\cE \in \ccA$ occurs is the \emph{conditional probability of $\cE$ given $\cF$}
\begin{equation*}
    \bP^{\cF} \big[ \cE \big] = \bP \big[ \cE \mid \cF \big] := \frac{\bP \big[ \cE \cap \cF \big]}{\bP \big[ \cF \big]},
\end{equation*}
and the expectation of $X$ is the \emph{conditional expectation of $X$ given $\cF$}
\begin{equation*}
	\bE^\cF \big[ X \big] = \bE[X \mid \cF] := \int_\cA X \rmd \bP^\cF  = \frac{1}{\bP \big[\cF \big]} \int_\cF X \rmd \bP.
\end{equation*}
Therefore, combining the definition of a probability measure from~\Cref{subsec:probability_spaces_and_random_objects} and the definition of the conditional probability and conditional expectation, we obtain the so called \emph{law of total probability} and \emph{total expectation}. Namely, that for every collection of disjoint events $\cF_i$ such that $\bP \big[ \cF_i \big] > 0$ and that $\sum_{i} \bP \big[ \cF_i \big] = 1$, for every event $\cE$ and every random object $X$
\begin{equation*}
    \bP \big[ \cE \big] = \sum_{i} \bP^{\cF_i} \big[ \cE \big] \cdot \bP \big[ \cF_i \big] \hspace{1em} \textnormal{ and } \hspace{1em} \bE \big[ X \big] = \sum_{i} \bE^{\cF_i} \big[ X \big] \cdot \bP \big[ \cF_i \big],
\end{equation*}
where the summation over $i$ is kept vague as this holds true for both a finite and infinitely countable number of events $\cF_i$.

In this way, the \emph{conditional probability of an event $\cE \in \ccA$ given a sub $\sigma$-field $\ccF \subseteq \ccA$} is the probability that the event $\cE$ occurred knowing if each event $\cF \in \ccF$ occurred or not. It is written as $\bP^\ccF \big[ \cE \big]$ or $\bP\big[ \cE \mid \ccF \big]$, and $\bP^\ccF$ is the $\bP$-a.s. unique $\ccF$-measurable function such that for all events $\cF \in \ccF$,
\begin{equation*}
    \bP \big[ \cE \cap \cF \big] = \int_\cF \bP^\ccF \big[ \cE \big] \rmd\bP.
\end{equation*}
Similarly, the \emph{conditional expectation of $X$ given a sub $\sigma$-field $\ccF \subseteq \ccA$} is written as $\bE^\ccF \big[ X \big]$ or $\bE \big [X \mid \ccF \big]$ and $\bE^\ccF$ is the $\bP$-a.s. unique $\ccF$-measurable function such that for all events $\cF \in \ccF$,
\begin{equation*}
	\int_\cF X d\bP = \int_\cF \bE^\ccF \big[X \big] \rmd\bP.
\end{equation*}

Finally, the \emph{conditional distribution of an event $\cE \in \ccA$ given a random object $Y$} is defined as $\bP^Y \big[ \cE \big] := \bP^{\sigma(Y)} \big[ \cE \big]$, where $\sigma(Y) \subseteq \ccA$ is the smallest $\sigma$-field for which $Y$ is measurable. Moreover, the \emph{conditional expectation of the random object $X$ given $Y$} is $\bE^Y \big[X \big] = \bE^{\sigma(Y)} \big[X \big]$.

\subsection{Several Random Objects}

Consider a couple of random objects\footnote{The following text applies to a countable set of random objects analogously, but the exposition is reduced to only two for simplicity and clarity.} $X: \cA \to \cX$ and $Y: \cA \to \cY$, where $(\cX,\ccX)$ and $(\cY,\ccY)$ are standard Borel spaces. Then, they may be treated as a single random object $(X,Y): \cA \to \cX \times \cY$ with target space $(\cX \times \cY, \ccX \otimes \ccY)$, where $\ccX \otimes \ccY$ is the $\sigma$-field generated by the measurable rectangles of $\cX \times \cY$. That is,
\begin{equation*}
    \ccX \otimes \ccY \coloneqq \sigma \big( \{ \cE \times \cF : \cE \in \ccX, \cF \in \ccY \} \big).
\end{equation*}
Moreover, the space $(\cX \times \cY, \ccX \otimes \ccY)$ is also standard Borel and the event space is the Borel $\sigma$-field of the sample space, i.e. $\ccX \otimes \ccY = \cB(\cX \times \cY)$\cite[Lemma 1.2]{kallenberg1997foundations}. 

The distribution $\bP_{X,Y}$ of the random object $(X,Y)$ is called the \emph{joint distribution} of the random objects. The \emph{marginal distribution} on $\cX$ of a probability distribution $\bQ$ on $\cX \times \cY$ is defined as $\bQ \big[\cdot, \cY \big]: \ccX \to [0,1]$. In particular, the marginal distribution on $\cX$ of the joint distribution agrees with the distribution of $X$, that is, $\bP_X = \bP_{X,Y}\big[ \cdot, \cY \big]$. However, there are more distributions in $\cX \times \cY$ that have $\bP_X$ and $\bP_Y$ as marginals. For example, the \emph{product distribution} $\bP_X \otimes \bP_Y$, where $\bP_X \otimes \bP_Y [\cE \times \cF] = \bP_X[\cE]  \bP_Y[\cF]$ for all $\cE \in \ccX, \cF \in \ccY$. The set of all such distributions is called the \emph{couplings} of $\bP_X$ and $\bP_Y$ and is denoted as $\Pi(\bP_X, \bP_Y)$. The random objects $X$ and $Y$ are said to be \emph{independent} if $\bP_{X,Y} = \bP_X \otimes \bP_Y$.

Given two standard Borel spaces $(\cX, \ccX)$ and $(\cY, \ccY)$, a \emph{probability (or Markov) kernel} from $\cX$ to $\cY$ is a mapping $\kappa: \cX \times \ccY \to [0,1]$ such that (i) $\kappa(x,\cdot)$ is a probability measure on  $(\cY, \ccY)$ for all $x \in \cX$; and (ii) $\kappa(\cdot, \cF)$ is a measurable function from $\cX$ to the set of all probability measures on $\cY$, $\cP(\cY)$, for all $\cF \in \ccY$.

\begin{definition}
\label{def:conditional_distribution}
Let $X$ and $Y$ be random objects with target standard Borel spaces $(\cX, \ccX)$ and $(\cY, \ccY)$. The (regular) conditional distribution of $Y$ given $X$ is the $\bP_X$-a.s. unique probability kernel $\kappa$ from $\cX$ to $\cY$ satisfying that $\bP^X \big[ Y \in \cF] = \kappa(X,\cF)$ $\bP$-a.s. for all $\cF \in \ccY$. In what follows, it is denoted as $\bP^X_Y \coloneqq \kappa(X,\cdot)$
\end{definition}
This way, the conditional distribution $\bP^X_Y$ of $Y$ given $X$ is a random object on $\cP(\cY)$ where for each realization $x$ of $X$, the realization $\bP^{X=x}_Y \in \cP(\cY)$ is a probability distribution on $\cY$. The marginal distribution of $Y$ is obtained from the conditional distribution as $\bP_Y = \bP_Y^X \circ \bP_X \coloneqq \bE_{x \sim \bP_X}\big[\bP^{X=x}_Y \big]$ and the joint distribution of $(X,Y)$ satisfies that  
\begin{equation*}
	\bP_{X,Y}\big[\cE\big] =  \int \bigg( \int \bI_\cE(x,y) \rmd\bP^{X=x}_{Y}(y) \bigg) \rmd\bP_X (x)
\end{equation*}
for all $\cE \in \ccX \otimes \ccY$, where $\bI_\cE$ is the indicator function of the set $\cE$ which returns 1 if $(x,y) \in \cE$ and 0 otherwise. Therefore, the joint distribution is sometimes written as $\bP_{X,Y} = \bP_X \otimes \bP^X_{Y}$.

\subsection{Densities}

At times, working with probability measures and distributions is cumbersome: for instance, when comparing two probability measures $\bP$ and $\bQ$, or when calculating expectations or probabilities of events in a target space under certain distributions $\bP_X$. In these cases, it is often helpful to consider a suitable $\sigma$-finite~\cite[Section 1.6]{gray2009probability} base measure $\mu$ and study the \emph{densities} of said probability measures with respect to $\mu$.

Given two measures $\bP$ and $\mu$ on a measurable space $(\cA, \ccA)$, the measure $\bP$ is \emph{absolutely continuous} with respect to $\mu$ (or $\bP \ll \mu$) if $\bP[\cE] = 0$ whenever $\mu(\cE) = 0$ for all events $\cE \in \ccA$. 

\begin{theorem}[{Radon--Nikodym theorem~\cite [Theorem~6.10]{mcdonald1999course}}]
\label{th:radon-nikodym}
Consider a probability measure\footnote{Technically, it is enough for $\bP$ to be $\sigma$-finite, but we restrict the theorem to probability measures for our purposes.} $\bP$ and a $\sigma$-finite measure $\mu$ on a measurable space $(\cA,\ccA)$ such that $\bP \ll \mu$. Then, there is a $\mu$-a.s. unique $\ccA$-measurable function $\sfp: \ccA \to \bR_+$ such that for all $\cE \in \ccA$
\begin{equation*}
	\bP \big[ \cE \big] = \int_\cE \sfp \rmd\mu.
\end{equation*}
Such a function is written $\sfp \coloneqq \frac{\rmd\bP}{\rmd\mu}$ and is called the Radon--Nikodym derivative of $\bP$ with respect to $\mu$.
\end{theorem}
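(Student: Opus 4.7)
The plan is to use von Neumann's Hilbert space approach, which yields a clean existence argument, and then to handle uniqueness separately by a direct integration argument.

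First, I would reduce to the case in which $\mu$ is a finite measure. Using $\sigma$-finiteness, partition $\cA$ into a countable disjoint union $\cA = \bigsqcup_n \cA_n$ with $\mu[\cA_n] < \infty$. On each $\cA_n$, the restrictions of $\bP$ and $\mu$ are finite and still satisfy absolute continuity, so if a density $\sfp_n$ can be produced on each piece, the function $\sfp \coloneqq \sum_n \sfp_n \bI_{\cA_n}$ will serve as a global density by monotone convergence applied to each $\cE \in \ccA$.

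For the finite case, I would introduce $\nu \coloneqq \bP + \mu$, a finite positive measure dominating both $\bP$ and $\mu$. The linear functional $T(f) \coloneqq \int f \, \rmd\bP$ on the Hilbert space $L^2(\cA, \ccA, \nu)$ is bounded by Cauchy--Schwarz since $\nu[\cA] < \infty$. The Riesz representation theorem then produces a $\nu$-a.s.\ unique $g \in L^2(\nu)$ with
\begin{equation*}
\int f \, \rmd\bP \;=\; \int f g \, \rmd\nu \qquad \text{for all } f \in L^2(\nu),
\end{equation*}
which upon splitting $\nu = \bP + \mu$ rearranges to $\int f (1-g) \, \rmd\bP = \int f g \, \rmd\mu$.

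Next, I would show that $0 \leq g < 1$ holds $\mu$-a.s. Testing the rearranged identity against the indicators of $\{g < 0\}$, $\{g > 1\}$, and $\{g = 1\}$ (which all lie in $L^2(\nu)$ since $\nu$ is finite) shows that $\mu$ assigns zero mass to each of these sets; the hypothesis $\bP \ll \mu$ is invoked on $\{g = 1\}$ to guarantee that $\bP$ does not concentrate mass there either. On the complement I would define $\sfp \coloneqq g/(1-g)$, setting it to zero on the null set $\{g \geq 1\}$. Substituting the truncated test function $f_k \coloneqq \bI_\cE \cdot \bigl((1-g)^{-1} \wedge k\bigr) \in L^2(\nu)$ into the rearranged identity and letting $k \to \infty$ via monotone convergence on both sides yields $\bP[\cE] = \int_\cE \sfp \, \rmd\mu$ for every $\cE \in \ccA$.

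For uniqueness, if $\sfp$ and $\sfp'$ both satisfy the integral identity, then $\int_\cE (\sfp - \sfp') \, \rmd\mu = 0$ for every $\cE$; applying this to $\{\sfp > \sfp'\} \cap \cA_n$ and $\{\sfp < \sfp'\} \cap \cA_n$ forces $\sfp = \sfp'$ $\mu$-a.s. The hard part will be the control of $g$ on $\{g = 1\}$: this is the only place where the absolute continuity hypothesis $\bP \ll \mu$ is genuinely required, since without it one could have $\bP$-mass hidden on a $\mu$-null set where $g$ pins to the boundary, in which case the quotient $g/(1-g)$ would fail to recover all of $\bP$ and the representation would break down.
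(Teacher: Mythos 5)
Your argument is correct, but note that the paper itself does not prove this statement at all: it is imported verbatim from the cited textbook (McDonald--Weiss, Theorem~6.10), so there is no in-paper proof to compare against. What you have written is the standard von Neumann route (reduce to finite $\mu$ by $\sigma$-finiteness, apply Riesz representation on $L^2(\bP+\mu)$, show $0 \leq g < 1$ off a null set, set $\sfp = g/(1-g)$, pass to the limit by monotone convergence), which is a complete and self-contained alternative to the classical measure-decomposition development one finds in references of that type; it buys a short existence argument at the price of invoking Hilbert-space machinery. Two small polish points, neither a gap: testing the \emph{original} identity $\int f\,\rmd\bP = \int fg\,\rmd\nu$ against $\bI_{\{g<0\}}$ and $\bI_{\{g>1\}}$ shows these sets are $\nu$-null (hence null for both $\bP$ and $\mu$ at once), so absolute continuity is genuinely needed only on $\{g=1\}$, exactly as you observe; and in the limiting step you should first discard the $(\bP+\mu)$-null set $\{g \geq 1\}$, since on it $f_k(1-g)$ is not a nonnegative increasing sequence, before invoking monotone convergence on the remainder. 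With those remarks your existence and uniqueness arguments go through as stated, and the uniqueness step correctly uses that $\int_\cE \sfp\,\rmd\mu = \bP[\cE] < \infty$ to avoid any $\infty - \infty$ issue.
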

\begin{definition}
\label{def:density}
\looseness=-1 Given a probability measure $\bP$ and a $\sigma$-finite measure $\mu$ on a measurable space $(\cA, \ccA)$ such that $\bP \ll \mu$, the \emph{density} of $\bP$ with respect to $\mu$ (or just the density of $\bP$ if $\mu$ is clear from context) is the Radon--Nikodym derivative $\frac{\rmd\bP}{\rmd\mu}$.
\end{definition}

An immediate consequence of \Cref{th:radon-nikodym} is that the expected value of a random object $X$ with density $\sfp_X$ with respect to $\mu$ is 
\begin{equation*}
	\bE \big[ X \big] = \int_{\cE} x \sfp_X \rmd \mu.
\end{equation*}

This result is often also presented in a slightly different way and is referred to as the \emph{change of measure} result. More precisely, let $\bP$ and $\bQ$ be two distributions on $\cX$ such that $\bP \ll \bQ$, then 
\begin{equation}
\label{eq:change_of_measure}
	\bE_{x \sim \bP} \big[ x \big] = \bE_{x \sim \bQ} \bigg[x \cdot \frac{\rmd \bP}{\rmd \bQ}(x)  \bigg].
\end{equation}

Throughout the manuscript, densities of probability measures $\bP$ and distributions $\bP_X$ are written with their lowercase sans serif letters $\sfp$, $\sfp_X$. 
A canonical example when densities are helpful are \emph{probability density functions} (pdfs) $\sff_X$ of a random object $X: \cA \to \bR^n$, where the base measure is the Lebesgue measure $\lambda$~\cite[Sections 3.4 and 7.2]{gray2009probability} and $\sff_X = \sfp_X = \frac{\rmd\bP_X}{\rmd\lambda}$. In this situation, the densities of many distributions are known and Lebesgue integration~\cite[Section 4.3]{mcdonald1999course} (which is an extension of the standard Riemann integral that coincides with it in the measurable rectangles on $\bR^n$~\cite[Section 4.3 and Section 6.4]{mcdonald1999course}) can be employed to calculate events' probabilities. We will refer to this kind of random objects as \emph{(absolutely) continuous random variables}.
Another example are \emph{probability mass functions} (pmfs) of a random object $X: \cA \to [k]$, where the base measure is the counting measure $\mathfrak{c}$. In this situation, the density of $X$ is $\sfp_X = \frac{\rmd\bP_X}{\rmd\mathfrak{c}}$ and $\sfp_X (i) = \bP_X \big[ \lbrace i \rbrace \big]$. We will refer to this kind of random objects as \emph{discrete random variables}. More generally, we will refer to random objects that can either be continuous or discrete random variables simply as \emph{random variables}.

\subsection{Moments and Generating Functions}
\label{subsec:moments_and_generating_functions}

Consider a random variable $X$ with a target space $\cX \subseteq \bR$. The $p$-th \emph{absolute moment} of $X$ is $\bE \big[ |X|^p \big]$ and it is said to exist when $\bE \big[ |X|^p \big] < \infty$. By either Hölder's or Jensen's inequality it follows that the absolute moments of a random variable are non-decreasing, that is, $\bE \big[ |X|^p \big] \leq \bE \big[ |X|^q \big]$ for all $p \leq q$. Hence, if the $p$-th absolute moment does not exist, no larger moment does. 
The $p$-th \emph{moment} (or \emph{raw moment}) of $X$ is $\bE \big[X^p\big]$ and it is said to exist when the $p$-th absolute moment exists. When the random variable $X$ is non-negative, then its $p$-th moment and its $p$-th absolute moment coincide. This will be the case throughout most of this manuscript and therefore we will use both terms interchangeably.
For $p > 1$, we are often interested in the $p$-th \emph{central moment}, which is defined as $\bE \big[ |X - \bE[X]|^p \big]$ and gives us properties of the random variable independent of translation. For example, a notable central moment is the \emph{variance}, which is defined as
\begin{equation*}
	\var[X] \coloneqq \bE \big[ (X - \bE[X])^2 \big] = \bE \big[ X^2 \big] - \bE \big[X \big]^2.
\end{equation*}

Another important quantity related to a random variable $X$ is its \emph{moment} (or \emph{probability}) \emph{generating function} (MGF), which is defined as the Laplace transform of $-X$.

\begin{definition}
\label{def:mgf}
The \emph{moment-generating function} of a random variable $X$ is $M_X(\lambda) \coloneqq \bE \big[e^{\lambda X} \big]$ for all $\lambda \in \bR$. We say that it exits if it is bounded in $(-b,b)$ for some $b > 0$.
\end{definition}

The MGF of a random variable $X$ completely characterizes it, to the point that $M_X(\lambda) = M_Y(\lambda)$ if and only if $X = Y$ almost surely. Moreover, every moment of a random variable $X$ can be obtained from its MGF as
\begin{equation}
	\bE \big[ X^p \big] \coloneqq \frac{\rmd^p M_X}{\rmd \lambda^p} (0).
\end{equation}
Therefore, the existence of the MGF implies the existence of every moment. However, the reverse is not true: for example, the Pareto distribution of the first kind with parameters $a=3$ and $k=1$ does not have an MGF but its variance is $\nicefrac{3}{4}$~\cite[Chapter 20]{johnson1994continuous}, and the lognormal distribution does not have an MGF but all its moments are finite~\cite[Chapter 14]{johnson1994continuous}~\cite{asmussen2016laplace}.

In this manuscript, we will not deal with the MGFs of random variables, but with their logarithm, which is referred to as the \emph{cumulant-generating function} (CGF). More precisely, we will abuse notation and refer to the CGF of a random variable $X$ to mean the CGF of the centered random variable $X - \bE \big[ X \big]$ to simplify the calculations and adhere to the standard usage in the machine learning theory community. In practice, this is not problematic since $\log M_{X-a}(\lambda) = \log M_X(\lambda) - \lambda a$ for every $\lambda, a \in \bR$.

\begin{definition}
\label{def:cgf}
The \emph{cumulant-generating function} of a random variable $X$ is $\Lambda_X(\lambda) \coloneqq  \log \bE \big[ e^{\lambda (X - \bE[X])} \big]$ for all $\lambda \in \bR$. We say that it exits if it is bounded in $(-b,b)$ for some $b > 0$.
\end{definition}
 
If it exists, the CGF is a convex, continuously differentiable function such that $\Lambda_X(0) = \Lambda_X'(0) = 0$~\cite{banerjee2021information, zhang2006information}. For the purpose of this manuscript, there are other two properties of the CGF that we will consider:
\begin{enumerate}
	\item Scaling a random variable scales the argument of the CGF, that is, $\Lambda_{\alpha X}(\lambda) = \Lambda_X(\alpha \lambda)$ for all $\lambda \in \bR$ and all $\alpha \in \bR$ such that $\alpha \lambda \in (-b,b)$.
	\item The CGF of the sum of independent random variables is the sum of their CGFs, that is, $\Lambda_{X + Y}(\lambda) = \Lambda_X(\lambda) + \Lambda_Y(\lambda)$ for all independent $X$ and $Y$.
\end{enumerate}

\section{Information Theory}
\label{sec:information_theory}

Information theory deals with the intangible concept of information. Informally, the occurrence of a rare event gives us more information than the occurrence of common ones. Although information theory was originated to deal with problems from communications' theory, it has grown into a field of its own and has applications to probability theory, statistics, computer science, and even physics. 

This section describes certain quantities paramount to information theory, presents some of their properties, and gives some intuition about them. These quantities and intuition will be useful later in the manuscript to better understand some conditions under which learning algorithms generalize. As in the previous section, the coverage of the topic is really scarse and the interested reader may refer to the wonderful books \cite{Cover2006,polianskyi2022,mackay2003information} and papers~\cite{dwork2006calibrating,dwork2014algorithmic,dwork2016concentrated, mironov2017renyi, bun2016concentrated, balle2020hypothesis, issa2020operational,saeidian2023apointwise,saeidian2023bpointwise} for further reading. 

\subsection{Entropy}
\label{subsec:ent}

Perhaps the most important quantity in information theory is the \emph{entropy}. It is a measure of the uncertainty of a random object $X$. The larger the entropy of the random object $X$, the more uncertain we are about its outcome. 

\begin{definition}
\label{def:entropy}
The \emph{entropy} of a random object $X$ with density $\sfp_X$ is
\begin{equation*}
	\ent(X) \coloneqq - \bE \big[ \log \sfp_X(X) \big].
\end{equation*}
\end{definition}

For multiple random variables $X_1, \ldots, X_n$ this concept is readily extended and $\ent(X_1, \ldots, X_n) = - \bE \big[ \log \sfp_{X_1, \ldots, X_n} (X_1, \ldots, X_n) \big]$. Usually, the entropy is considered for either discrete or absolutely continuous random variables~\cite{Cover2006,polianskyi2022}, and thus that $\sfp_X$ is a pmf or a pdf is understood from the context. 

In information theory, a Markov kernel $\bP_Y^X$ is often called a \emph{channel} and it is understood as a mechanism that processes a random object $X$ into another random object $Y$ (see \Cref{fig:channel} above). In this manuscript, a common channel will be an algorithm $\bA$ that transforms a dataset $S$ into a hypothesis $W = \bA(S)$. It is natural to wonder what is the uncertainty of the processed object $Y$ \emph{given} the input $X$. This concept is referred to as the \emph{disintegrated entropy of $Y$ given $X$} and it is a random object depending on $X$~\cite{negrea2019information}. The expectation of this random object is referred to as the \emph{conditional entropy of $Y$ given $X$.}

\begin{definition}
\label{def:conditional_entropy}
The \emph{disintegrated entropy of $Y$ given $X$} is 
\begin{equation*}
	\ent^X(Y) \coloneqq - \bE^X \big[ \log \sfp_Y^X(Y) \big]
\end{equation*}
and the \emph{conditional entropy of $Y$ given $X$} is 
\begin{equation*}
	\ent(Y|X) \coloneqq \bE \big[ \ent^X (Y) \big] = - \bE \big[ \log \sfp_Y^X(X) \big].
\end{equation*}

\end{definition}

\begin{figure}[t]
\centering
\begin{tikzpicture}[->, line width=0.5pt, node distance=2cm, scale=1]]

	\node (X) at (0,0) {$X$};
	\node[draw, rectangle, minimum width=1cm] (channel) at (2,0) {$\bP_Y^X$};
	\node (Y) at (4,0) {$Y$};
	
	\path (X) edge (channel);
	\path (channel) edge (Y);
\end{tikzpicture}
\caption{Illustration of a channel processing $X$ to $Y$.}
\label{fig:channel}
\end{figure}
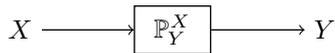

Two important properties of the entropy are summarized below~\cite{Cover2006, polianskyi2022}. The first one states that the knowledge of the input of a Markov kernel can only decrease its uncertainty, and the second one describes how to decompose the entropy of a collection of random objects.

\begin{proposition}[\!\! {\cite[Theorems 1.4 and 2.6]{polianskyi2022}}]
\label{prop:entropy_properties_general}
Consider the random objects $X$, $Y$, and $X_1, \cdots, X_n$. Then,
\begin{itemize}
	\item Conditioning reduces entropy: $\ent(Y|X) \leq \ent(Y)$.
	\item Chain rule: $\ent(X_1, \cdots, X_n) = \sum_{i=1}^n \ent(X_i | X_1, \cdots, X_{i-1}) \leq \sum_{i=1}^n \ent(X_i)$.
\end{itemize}
\end{proposition}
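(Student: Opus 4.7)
The plan is to prove both claims by direct manipulation of densities, leveraging the factorisation of the joint density through the regular conditional distributions from \Cref{def:conditional_distribution} together with Jensen's inequality applied to the concave function $\log$. The key algebraic identity underlying both parts is the chain factorisation
\[
\sfp_{X_1, \ldots, X_n}(x_1, \dots, x_n) \;=\; \prod_{i=1}^{n} \sfp_{X_i}^{X_1, \ldots, X_{i-1}}(x_i),
\]
with the convention that the $i=1$ factor is the unconditional density $\sfp_{X_1}(x_1)$; in the standard Borel setting this decomposition follows by iterating the relation $\sfp_{X,Y}(x,y) = \sfp_X(x)\,\sfp_Y^{X=x}(y)$ that is implicit in \Cref{def:conditional_distribution}.

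For the chain rule equality, I would take $-\log$ of the factorisation above, evaluate at the random tuple $(X_1, \dots, X_n)$, and take expectation under $\bP_{X_1, \dots, X_n}$. By linearity of expectation the $i$-th summand reduces to $-\bE[\log \sfp_{X_i}^{X_1, \ldots, X_{i-1}}(X_i)]$, which matches \Cref{def:conditional_entropy} for $\ent(X_i \mid X_1, \dots, X_{i-1})$, yielding the identity $\ent(X_1, \dots, X_n) = \sum_{i=1}^{n} \ent(X_i \mid X_1, \dots, X_{i-1})$.

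For ``conditioning reduces entropy'' I would expand
\[
\ent(Y \mid X) - \ent(Y) \;=\; \bE\!\left[\log \frac{\sfp_Y(Y)}{\sfp_Y^X(Y)}\right],
\]
where the expectation is under $\bP_{X,Y}$. Jensen's inequality for the concave function $\log$ bounds this by $\log \bE[\sfp_Y(Y)/\sfp_Y^X(Y)]$, and a short Fubini computation using $\sfp_{X,Y} = \sfp_X \cdot \sfp_Y^X$ evaluates the inner expectation to $\int \sfp_X\,\rmd\mu_X \cdot \int \sfp_Y\,\rmd\mu_Y = 1$, so the left-hand side is at most $\log 1 = 0$. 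The chain rule inequality then follows by applying this bound termwise to each conditional entropy in the equality already established.

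The main technical subtlety is well-definedness of the ratio $\sfp_Y(Y)/\sfp_Y^X(Y)$ where the denominator vanishes; however, the set $\{\sfp_Y^X(Y) = 0\}$ has $\bP_{X,Y}$-probability zero (since it forces $\sfp_{X,Y}(x,y) = 0$ in the factorisation), so the ratio is defined almost surely, and the usual convention $0 \log 0 = 0$ handles a vanishing numerator. Beyond these bookkeeping items the argument is essentially Gibbs' inequality repackaged for entropy, and there is no deep obstacle.
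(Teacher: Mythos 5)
Your argument is correct and is essentially the standard one: the paper states this proposition without proof, citing \citet{polianskyi2022}, and your factorisation-plus-Jensen derivation (the chain rule from the density factorisation, and conditioning reduces entropy from $\bE[\log(\sfp_Y(Y)/\sfp_Y^X(Y))]\leq \log 1 = 0$, i.e.\ non-negativity of $\minf(X;Y)$) is the same route taken in that reference. No gaps beyond the bookkeeping you already flag, so nothing further is needed.
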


\paragraph{Shannon's Entropy.} Originally, \citet{shannon1948mathematical} introduced the concept of entropy for discrete random variables. For this reason, for discrete random variables, we will refer to the entropy as Shannon's entropy. Shannon introduced it to describe the amount of information contained in a random variable, which we now measure in \emph{bits} if the logarithm is in base 2 or \emph{nats} if it is in base $e$. Shannon showed that the minimum description length (on average) of realizations of a discrete random variable $X$ is within one bit of the entropy $\ent(X)$. Therefore, the entropy is a fundamental limit of \emph{compression}. With this interpretation, there are a couple important properties of Shannon's entropy.

\begin{proposition}[\!\! {\cite[Theorem 1.4]{polianskyi2022}}]
\label{prop:entropy_properties_discrete}
Consider a discrete random variable $X$, a bijective function $f: \cX \to \cX$, and a one-to-one function $g: \cX \to \cY$. Then, 
\begin{itemize}
	\item Shannon's entropy is non-negative: $\ent(X) \geq 0$. 
	\item The uniform distribution maximizes the Shannon's entropy: $\ent(X) \leq \log |\cX|$.
	\item Shannon's entropy is invariant to relabeling: $\ent(X) = \ent(f(X))$.
	\item Shannon's entropy is monotone: $\ent(X, Y) \geq \ent(X)$ with equality if and only if $Y = g(X)$.
\end{itemize}
\end{proposition}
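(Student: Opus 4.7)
The four claims are all elementary consequences of \Cref{def:entropy,def:conditional_entropy} combined with basic properties of the logarithm and Jensen's inequality, and for the last one the chain rule from \Cref{prop:entropy_properties_general}. My plan is to treat them in the order (1), (3), (2), (4), because (2) is the only one requiring a convexity argument and (4) is the only one invoking the already-established chain rule.

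\paragraph{Non-negativity and relabeling invariance.}
For (1), since $X$ is discrete, $\sfp_X(x)\in[0,1]$ for every $x\in\supp(\bP_X)$, so $-\log\sfp_X(x)\geq 0$, and taking expectations gives $\ent(X)=-\bE[\log\sfp_X(X)]\geq 0$. For (3), since $f$ is a bijection, the change-of-variables $y=f(x)$ gives $\sfp_{f(X)}(y)=\sfp_X(f^{-1}(y))$, and I would just rewrite
\begin{equation*}
\ent(f(X)) = -\sum_{y\in f(\cX)}\sfp_X(f^{-1}(y))\log\sfp_X(f^{-1}(y)) = -\sum_{x\in\cX}\sfp_X(x)\log\sfp_X(x) = \ent(X).
\end{equation*}

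\paragraph{Uniform maximizer.}
For (2), I would apply Jensen's inequality to the (strictly) concave map $\log$. Writing $\ent(X)=\bE[\log(1/\sfp_X(X))]$ and pulling the expectation inside the $\log$,
\begin{equation*}
\ent(X)\leq \log\bE\!\left[\frac{1}{\sfp_X(X)}\right]=\log\sum_{x\in\supp(\bP_X)}\sfp_X(x)\cdot\frac{1}{\sfp_X(x)}=\log|\supp(\bP_X)|\leq \log|\cX|.
\end{equation*}
A small point of care is that the sum is taken over $\supp(\bP_X)$ to avoid dividing by zero; this is the only mildly subtle step in the whole proposition.

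\paragraph{Monotonicity and its equality case.}
For (4), the inequality $\ent(X,Y)\geq \ent(X)$ is immediate from the chain rule in \Cref{prop:entropy_properties_general}, namely $\ent(X,Y)=\ent(X)+\ent(Y\mid X)$, together with the fact that $\ent(Y\mid X)\geq 0$ (the same pointwise argument as in (1), applied to $\sfp^X_Y$). For the equality case, equality forces $\ent(Y\mid X)=0$, which in turn forces $\sfp^{X=x}_Y$ to be a Dirac mass at some value $g(x)\in\cY$ for $\bP_X$-a.s.\ $x$; thus $Y=g(X)$ $\bP$-a.s. Conversely, if $Y=g(X)$ then $\sfp^{X=x}_Y(g(x))=1$ and $\ent(Y\mid X)=0$. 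The main (minor) obstacle here is just being careful about the $\bP_X$-a.s.\ qualifier when inverting the conditional pmf to define $g$; injectivity of $g$ plays no role in the equality condition itself but ensures $g(X)$ takes values in a one-to-one image of $\cX$, as the statement declares.
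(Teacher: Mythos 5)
Your proof is correct. Note that the paper does not actually prove this proposition—it is quoted directly from the cited reference (\citep[Theorem 1.4]{polianskyi2022}) as background—so there is no internal argument to compare against; yours are the standard textbook derivations (for the uniform bound, your Jensen step is equivalent to the usual route via $\relent(\bP_X \Vert \mathrm{Unif}) \geq 0$). Your side remark on the last item is also accurate: equality in $\ent(X,Y) \geq \ent(X)$ characterizes $Y$ being an almost-surely deterministic function of $X$, so injectivity of $g$ plays no role and the ``only if'' direction of the statement is properly read as ``$Y = g(X)$ for some function $g$''.
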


\subsection{Relative Entropy}
\label{subsec:relative_entropy}

\looseness=-1 The \emph{relative entropy} $\relent(\bP \Vert \bQ)$ of a distribution $\bP$ with respect to another distribution $\bQ$ (or \emph{Kullback--Leibler divergence}) is a measure of the dissimilarity between the distributions $\bP$ and $\bQ$. 
It is important to note that the relative entropy is asymmetric, i.e. $\relent(\bP \Vert \bQ) \neq \relent(\bQ \Vert \bP)$, and therefore it is not a distance. This is natural in certain situations like hypothesis testing. For instance, consider that we want to study the bias of a die and we consider the distribution of a fair die $\bP$ and of a biased die $\bQ$ that always lands on even numbers. If we observe the sequence ``$\mathrm{2,4,4,2,6}$'', then we may believe that the distribution is $\bQ$, but we cannot be absolutely sure. On the other hand, if we observe the sequence ``$\mathrm{2, 4, 4, 2, 3}$'', then we are absolutely certain that the distribution is $\bP$. The relative entropy describes this imbalance as $\relent(\bQ \Vert \bP) = \log 2$ whereas $\relent(\bP \Vert \bQ) \to \infty$ \cite[Section 2.1]{polianskyi2022}.

From an information-theoretic perspective, it describes the inefficiency of assuming that a random variable $X$ has distribution $\bQ$, when in reality it has distribution $\bP_X = \bP$. That is, if we constructed a code for a discrete random variable $X$ assuming it has a distribution $\bQ$, then we would need \emph{at least} $\ent(X) + \relent(\bP \Vert \bQ)$ bits (on average) to describe the said random variable~\cite[Section 2.3]{Cover2006}. The example with the die above describes a pathological worst-case situation. Since the odd outcomes have probability zero under $\bQ$, the code constructed assuming this distribution may not even have codewords to describe them, and therefore the outcome cannot be described with this code even if we were to have access to infinite extra bits.

\begin{definition}
\label{def:relative_entropy}
The \emph{relative entropy} of a distribution $\bP$ with respect to another distribution $\bQ$ is
\begin{equation}
\label{eq:relative_entropy}
	\relent(\bP \Vert \bQ) \coloneqq \bE_{x \sim \bP} \bigg[ \log \frac{\rmd \bP}{\rmd \bQ} (x) \bigg] = \int \log \Big(\frac{\rmd \bP}{\rmd \bQ} \Big) \rmd \bP
\end{equation}
if $\bP \ll \bQ$ and $\relent(\bP \Vert \bQ) \to \infty$ otherwise.
\end{definition}

Another formulation of the relative entropy comes from defining it as an instance of an $f$-divergence (see \Cref{subsec:f-divergences} below). More precisely, \Cref{eq:relative_entropy} can be equivalently written using a change of measure (\Cref{th:radon-nikodym}) as
\begin{equation*}
	\relent(\bP \Vert \bQ) \coloneqq \bE_{x \sim \bQ} \bigg[ \frac{\rmd \bP}{\rmd \bQ} (x) \log \frac{\rmd \bP}{\rmd \bQ} (x) \bigg] = \int \Big(\frac{\rmd \bP}{\rmd \bQ} \Big) \log \Big(\frac{\rmd \bP}{\rmd \bQ} \Big) \rmd \bQ.
\end{equation*}

When both $\bP$ and $\bQ$ have a density with respect to a common dominating measure $\mu$ (e.g. $\mu = \bP + \bQ$), then it is convenient to write the relative entropy in terms of these densities using the \emph{likelihood ratio $\nicefrac{\sfp}{\sfq}$}.
\begin{equation*}
	\relent(\bP \Vert \bQ) = \bE_{x \sim \bP} \bigg[ \log \frac{\sfp(x)}{\sfq(x)} \bigg].
\end{equation*}

As with the entropy, we sometimes want to study the relative entropy of a channel $\bP_Y^X$ with respect to another channel $\bQ_Y^X$, or of a channel with respect to a distribution. Moreover, if we consider two channels $\bP_Y^X$ and $\bQ_Y^X$ and a common input distribution $\bP_X$, then the relative entropy of $\bP_Y^X$ with respect to $\bQ_Y^X$ is a continuous random variable depending on $X$ and its expectation is the \emph{conditional relative entropy given $X$}.

\begin{definition}
\label{def:conditional_relative_entropy}
The conditional relative entropy of the channel $\bP_Y^X$ with respect to the channel $\bQ_Y^X$ given $X$ is
\begin{equation*}
	\relent(\bP_Y^X \Vert \bQ_Y^X | \bP_X) \coloneqq \bE \big[ \relent(\bP_Y^X \Vert \bQ_Y^X) \big].
\end{equation*}
\end{definition}

The relative entropy enjoys several properties that are paramount in many important results in information theory. Some of these properties are summarized below~\cite{Cover2006, polianskyi2022}.

\begin{proposition}
\label{prop:properties_relative_entropy}
Consider two distributions $\bP_X$ and $\bQ_X$ on $\cX$, two channels $\bP_Y^X$ and $\bQ_Y^X$ from $\cX$ to $\cY$, and a one-to-one function $f$ from $\cX$ to $\cY$. Moreover, let $\bP_{X,Y} = \bP_{Y}^X \otimes \bP_X$ and $\bQ_{X,Y} = \bQ_{Y}^X \otimes \bQ_X$ and also consider the joint distributions $\bP_{X^n} = \bP_{X_1, \ldots X_n}$ and $\bQ_{X^n} = \bQ_{X_1, \ldots, X_n}$. Then, 
\begin{itemize}
	\item The relative entropy is non-negative: $\relent(\bP_X \Vert \bQ_X) \geq 0$.
	\item The relative entropy $\relent(\bP_X \Vert \bQ_X)$ is convex and lower semicontinuous with respect to $\bP_X$ for every fixed $\bQ_X$.
	\item The relative entropy is monotone: $\relent(\bP_{X,Y} \Vert \bQ_{X,Y}) \geq \relent(\bP_Y \Vert \bQ_Y)$.
	\item Chain rule. $\relent(\bP_{X^n} \Vert \bQ_{X^n}) = \sum_{i=1}^n \relent(\bP_{X_i | X^{i-1}} \Vert \bQ_{X_i | X^{i-1}} | \bP_{X^{i-1}})$. 
	
	\item The relative entropy tensorizes: If $Q_{X^n} = Q_{X_1} \otimes \cdots \otimes Q_{X_n}$, then 
\begin{align*}	
	\relent(\bP_{X^n} \Vert \bQ_{X^n}) &= \relent(\bP_{X^n} \Vert \bP_{X_1} \otimes \cdots \otimes \bP_{X_n}) + \sum_{i=1}^n \relent(\bP_{X_i} \Vert \bQ_{X_i}) \\
	&\geq \sum_{i=1}^n \relent(\bP_{X_i} \Vert \bQ_{X_i})
\end{align*}
	with equality if and only if $\bP_{X^n} = \bP_{X_1} \otimes \cdots \bP_{X_n}$.

	\item Conditioning increases the relative entropy (\Cref{fig:channels_to_aid_relative_entropy_properties} left). Assume that $\bP_Y$ and $\bQ_Y$ are both obtained after passing a random object $X$ with distribution $\bP_X$ through the channels $\bP_Y^X$ and $\bQ_Y^X$, i.e. $\bP_Y = \bP_Y^X \circ \bP_X$ and $\bQ_Y = \bQ_Y^X \circ \bP_X$. Then $\relent(\bP_Y \Vert \bQ_Y) \leq \relent(\bP_Y^X \Vert \bQ_Y^X | \bP_X)$.
	\item Data processing inequality (\Cref{fig:channels_to_aid_relative_entropy_properties} right). Now consider that $\bP_Y$ and $\bQ_Y$ are obtained after passing a random object $X$ and a random object $\bar{X}$ with distributions $\bP_X$ and $\bQ_X = \bP_{\bar{X}}$ through the channel $\bP_Y^X$, i.e. $\bP_Y = \bP_Y^X \circ \bP_X$ and $\bQ_Y = \bP_Y^X \circ \bQ_X$. Then, $\relent(\bP_{Y} \Vert \bQ_Y) \leq \relent(\bP_X \Vert \bQ_Y)$ with equality if and only if $\bP_Y^X$ is a deterministic and losseless channel described by $f$.
\end{itemize}
\end{proposition}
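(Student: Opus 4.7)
The plan is to prove the seven items by establishing a small number of core facts first and then reducing the rest to them, in the order: (i) non-negativity, (ii) convexity/lower semicontinuity, (iii) chain rule, and then (iv)--(vii) as corollaries.

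For non-negativity, I would apply Jensen's inequality to the convex function $-\log$ using the change of measure in~\eqref{eq:change_of_measure}: since $\bP_X \ll \bQ_X$,
\begin{equation*}
-\relent(\bP_X \Vert \bQ_X) = \bE_{x\sim \bP_X}\Bigl[\log \tfrac{\rmd \bQ_X}{\rmd \bP_X}(x)\Bigr] \leq \log \bE_{x\sim \bP_X}\Bigl[\tfrac{\rmd \bQ_X}{\rmd \bP_X}(x)\Bigr] = \log 1 = 0,
\end{equation*}
with the convention that $\relent(\bP_X\Vert\bQ_X)=+\infty$ when $\bP_X \not\ll \bQ_X$. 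For convexity and lower semicontinuity in $\bP_X$, the cleanest route is the Donsker--Varadhan variational formula, which I would either cite or sketch via a one-line Fenchel duality argument from $\relent(\bP_X\Vert\bQ_X) = \sup_{f\in C_b(\cX)} \bigl\{\bE_{\bP_X}[f] - \log \bE_{\bQ_X}[e^f]\bigr\}$; this exhibits $\relent(\cdot\Vert\bQ_X)$ as a supremum of continuous affine functionals of $\bP_X$, yielding convexity and weak lower semicontinuity simultaneously.

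For the chain rule I would unfold densities with respect to a common dominating product measure $\mu^{\otimes n}$ and use the factorization $\sfp_{X^n} = \prod_{i=1}^n \sfp_{X_i|X^{i-1}}$ and similarly for $\sfq_{X^n}$, so that $\log(\sfp_{X^n}/\sfq_{X^n}) = \sum_i \log(\sfp_{X_i|X^{i-1}}/\sfq_{X_i|X^{i-1}})$; taking $\bE_{\bP_{X^n}}$ and recognizing the outer expectation in each summand as $\bE_{\bP_{X^{i-1}}}$ of the disintegrated KL gives the identity by \Cref{def:conditional_relative_entropy}. Monotonicity then follows immediately from the two-variable chain rule $\relent(\bP_{X,Y}\Vert\bQ_{X,Y}) = \relent(\bP_Y\Vert\bQ_Y) + \relent(\bP_{X|Y}^{Y}\Vert\bQ_{X|Y}^{Y}\mid \bP_Y)$ together with non-negativity of the conditional term. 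Tensorization is obtained by applying the chain rule to $\relent(\bP_{X^n}\Vert\bQ_{X^n})$ with $\bQ_{X^n} = \bigotimes_i \bQ_{X_i}$, noting that the conditional KL of $\bP_{X_i|X^{i-1}}$ against $\bQ_{X_i}$ telescopes as $\relent(\bP_{X^n}\Vert \bigotimes_i \bP_{X_i}) + \sum_i \relent(\bP_{X_i}\Vert\bQ_{X_i})$, and the inequality is then a consequence of non-negativity with equality iff the mutual-information-like term vanishes, i.e.\ $\bP_{X^n} = \bigotimes_i \bP_{X_i}$.

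The last two items I would reduce to monotonicity by a coupling trick. For ``conditioning increases KL'', define the two joints $\bP_X \otimes \bP_Y^X$ and $\bP_X \otimes \bQ_Y^X$; by the chain rule their KL equals $\relent(\bP_X\Vert\bP_X) + \relent(\bP_Y^X\Vert\bQ_Y^X\mid \bP_X) = \relent(\bP_Y^X\Vert\bQ_Y^X\mid\bP_X)$, while monotonicity with respect to the $Y$-marginal gives $\relent(\bP_Y\Vert\bQ_Y)$ as a lower bound. For the data-processing inequality I would instead form $\bP_X \otimes \bP_Y^X$ and $\bQ_X \otimes \bP_Y^X$: both have the \emph{same} kernel, so the chain rule yields joint KL equal to $\relent(\bP_X\Vert\bQ_X)$, while marginalizing to $Y$ and invoking monotonicity gives $\relent(\bP_Y\Vert\bQ_Y) \leq \relent(\bP_X\Vert\bQ_X)$. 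The main obstacle will be the \emph{equality case} in the DPI: proving that equality forces $\bP_Y^X$ to act as a sufficient (deterministic, lossless, and thus bijective on supports) statistic requires a careful a.s.\ analysis of when $\relent(\bP_{X|Y}^Y\Vert\bQ_{X|Y}^Y\mid \bP_Y) = 0$, which forces $\bP_{X|Y}^Y = \bQ_{X|Y}^Y$ $\bP_Y$-a.s.\ and combined with the shared kernel $\bP_Y^X$ identifies $X$ from $Y$; the remaining items are essentially bookkeeping on Radon--Nikodym derivatives.
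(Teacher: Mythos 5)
Your sketch is correct in its main architecture, but note that the paper itself does not prove this proposition at all: it is presented as a summary of known facts with citations to \citet{Cover2006} and \citet{polianskyi2022}, so your write-up supplies arguments where the paper defers to textbooks. The route you take (Jensen for non-negativity, Donsker--Varadhan for convexity and lower semicontinuity, density factorization for the chain rule, and then monotonicity, tensorization, ``conditioning increases relative entropy'', and the data-processing inequality as corollaries of the chain rule via the two coupling constructions — common marginal $\bP_X$ with different kernels, and common kernel $\bP_Y^X$ with different input laws) is the standard one and works on the standard Borel spaces the paper assumes; it also makes visible that the displayed DPI in the proposition contains a typo, since what one actually obtains (and what you prove) is $\relent(\bP_Y\Vert\bQ_Y)\leq\relent(\bP_X\Vert\bQ_X)$, not $\relent(\bP_X\Vert\bQ_Y)$. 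Two small cautions: in the non-negativity step, $\bE_{x\sim\bP_X}\bigl[(\rmd\bP_X/\rmd\bQ_X)^{-1}(x)\bigr]$ equals $\bQ_X\bigl[\{\rmd\bP_X/\rmd\bQ_X>0\}\bigr]\leq 1$ rather than $1$ when $\bQ_X\not\ll\bP_X$, which still yields the bound; and invoking Donsker--Varadhan is legitimate since the paper states it independently of this proposition, though the log-sum inequality gives convexity more elementarily if you want to avoid any appearance of circularity.

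The one genuine gap is the equality clause of the data-processing inequality. Your reduction correctly shows that equality holds if and only if $\relent(\bP_X^Y\Vert\bQ_X^Y\mid\bP_Y)=0$, i.e.\ the backward channels agree $\bP_Y$-almost surely ($Y$ is a sufficient statistic for $\{\bP_X,\bQ_X\}$). But the further step ``combined with the shared kernel $\bP_Y^X$ this identifies $X$ from $Y$'' is not a valid deduction, and in fact the biconditional as stated in the proposition cannot be proved: if $\bP_X=\bQ_X$ then both sides vanish and \emph{any} channel achieves equality, and more generally any channel for which $Y$ is sufficient does so without being deterministic or lossless. A deterministic, lossless channel described by a one-to-one $f$ with measurable inverse is \emph{sufficient} for equality but not necessary. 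So you should either prove only the ``if'' direction of the equality claim together with the correct characterization (equality iff the backward channels coincide), or add hypotheses; the almost-sure analysis you anticipate cannot close this step because the ``only if'' direction is false in general.
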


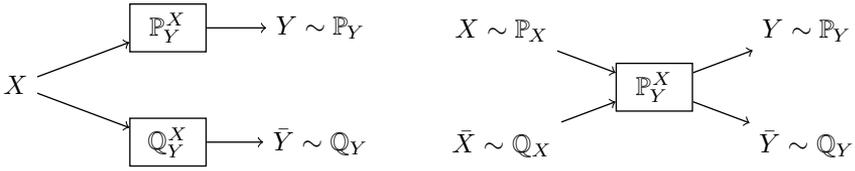
\begin{figure}[t]
	\centering
	\begin{subfigure}{0.4\textwidth}
		\centering
		\begin{tikzpicture}[->, line width=0.5pt, node distance=2cm, scale=1]]
			\node (X) at (0,0) {$X$};
			\node [draw, rectangle, minimum width=1cm] (channel_1) at (2,0.75) {$\bP_Y^X$};
			\node [draw, rectangle, minimum width=1cm] (channel_2) at (2,-0.75) {$\bQ_Y^X$};
			\node (Y_1) at (4,0.75) {$Y \sim \bP_Y$};
			\node (Y_2) at (4,-0.75) {$\bar{Y} \sim \bQ_Y$};
			
			\path (X) edge (channel_1);
			\path (X) edge (channel_2);
			\path (channel_1) edge (Y_1);
			\path (channel_2) edge (Y_2); 
		\end{tikzpicture}
	\end{subfigure}
	\hspace{2em} 
	\begin{subfigure}{0.4\textwidth}
		\centering
		\begin{tikzpicture}[->, line width=0.5pt, node distance=2cm, scale=1]]
			\node (X_1) at (0, 0.75) {$X \sim \bP_X$};
			\node (X_2) at (0, -0.75) {$\bar{X} \sim \bQ_X$};
			\node[draw, rectangle, minimum width=1cm] (channel) at (2,0) {$\bP_Y^X$};
			\node (Y_1) at (4,0.75) {$Y \sim \bP_Y$};
			\node (Y_2) at (4,-0.75) {$\bar{Y} \sim \bQ_Y$};
			
			\path (X_1) edge (channel);
			\path (X_2) edge (channel);
			\path (channel) edge (Y_1);
			\path (channel) edge (Y_2); 
		\end{tikzpicture}
		\end{subfigure}
	\caption{Illustrations of a single random object $X$ processed with different channels (left) and two different random objects processed with the same channel (right).}
	\label{fig:channels_to_aid_relative_entropy_properties}
\end{figure}

Finally, an important result concerning the relative entropy that will be repeatedly used and referred to during this manuscript is that it enjoys a variational representation due to Donsker and Varadhan~\cite{donsker1975asymptotic} and Gibbs~\cite{gibbs1902elementary}.

\begin{lemma}[{Donsker and Varadhan lemma \cite[Lemma
2.1]{donsker1975asymptotic} and Gibbs variational principle \cite[Theorem III on Chapter XI]{gibbs1902elementary}, respectively}]
\label{lemma:dv_and_gvp}
Let $\cX$ be a measurable space and $\bQ$ be a probability measure on $\cX$.
\begin{itemize}
	\item Let $\bP$ be also a probability measure on $\cX$ such that $\bP \ll \bQ$ and  $\cG_\bQ$ be the set of all measurable functions on $\cX$ such that $\bE_{x \sim \bQ} \Big[e^{g(x)} \Big] < \infty$. Then,
	\begin{equation}
	\label{eq:dv}
		\relent(\bP \Vert \bQ) = \sup_{g \in \cG_\bQ} \Big\{ \bE_{x \sim \bP} \big[ g(x) \big] - \log \bE_{x \sim \bQ} \Big[e^{g(x)} \Big] \Big\}.
	\end{equation}
	
	\item Let $g$ be a measurable function on $\cX$ such that $\bE_{x \sim \bQ} \Big[e^{g(x)} \Big] < \infty$ and $\cP_{\bQ}(\cX)$ be the set of all probability measures $\bP$ on $\cX$ such that $\bP \ll \bQ$. Then, 
	\begin{equation}
	\label{eq:gvp}
		\log \bE_{x \sim \bQ} \Big[e^{g(x)} \Big] = \sup_{\bP \in \cP_\bQ(\cX)} \Big\{ \bE_{x \sim \bP} \big[ g(x) \big] - \relent(\bP \Vert \bQ) \Big\}.
	\end{equation}
\end{itemize}

\end{lemma}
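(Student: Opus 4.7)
The plan rests on a single identity that couples the two variational principles to the non-negativity of one well-chosen relative entropy. For any $g \in \cG_\bQ$, introduce the associated \emph{Gibbs measure} $\bP_g$ defined by
\begin{equation*}
    \frac{\rmd\bP_g}{\rmd\bQ}(x) \coloneqq \frac{e^{g(x)}}{Z_g}, \qquad Z_g \coloneqq \bE_{x \sim \bQ}\big[e^{g(x)}\big] \in (0,\infty),
\end{equation*}
which lies in $\cP_\bQ(\cX)$ by construction. For any $\bP \in \cP_\bQ(\cX)$, since $\bP_g$ and $\bQ$ are mutually absolutely continuous we also have $\bP \ll \bP_g$ with $\frac{\rmd\bP}{\rmd\bP_g} = \frac{\rmd\bP}{\rmd\bQ} \cdot \frac{Z_g}{e^g}$; integrating the logarithm against $\bP$ yields the key identity
\begin{equation}
\label{eq:proposal_key_identity}
    \relent(\bP \Vert \bP_g) = \relent(\bP \Vert \bQ) - \bE_{x \sim \bP}\big[g(x)\big] + \log Z_g.
\end{equation}

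First, applying non-negativity of relative entropy (\Cref{prop:properties_relative_entropy}) to \eqref{eq:proposal_key_identity} immediately delivers both $\relent(\bP \Vert \bQ) \geq \bE_{x \sim \bP}[g(x)] - \log Z_g$ and $\log Z_g \geq \bE_{x \sim \bP}[g(x)] - \relent(\bP \Vert \bQ)$; taking suprema on the appropriate sides provides the $\geq$ direction of \eqref{eq:dv} and \eqref{eq:gvp} respectively. The equality case of the non-negativity bound singles out the candidate maximizer: a pair $(\bP,g)$ saturates \eqref{eq:proposal_key_identity} precisely when $\bP_g = \bP$.

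Second, the $\leq$ direction of \eqref{eq:gvp} follows by choosing $\bP = \bP_g$, which forces $\relent(\bP \Vert \bP_g) = 0$ and makes \eqref{eq:proposal_key_identity} an equality, attaining the supremum. For the $\leq$ direction of \eqref{eq:dv}, the natural candidate is $g^\star \coloneqq \log \frac{\rmd\bP}{\rmd\bQ}$, which satisfies $Z_{g^\star} = \bE_{x \sim \bQ}\big[\frac{\rmd\bP}{\rmd\bQ}\big] = 1$, hence $g^\star \in \cG_\bQ$ and $\bP_{g^\star} = \bP$. When $\relent(\bP \Vert \bQ) < \infty$, substituting $g = g^\star$ into \eqref{eq:proposal_key_identity} gives the equality directly. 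When $\relent(\bP \Vert \bQ) = +\infty$, I would switch to the truncations $g_n \coloneqq g^\star \wedge n$: these satisfy $Z_{g_n} \leq e^n < \infty$, so $g_n \in \cG_\bQ$, and using $\bE_{x \sim \bP}[(g^\star)^-] \leq 1/e$ (from $-x \log x \leq 1/e$ on $(0,1)$) the monotone convergence theorem applied to the positive part forces $\bE_{x \sim \bP}[g_n(x)] \to +\infty$ while $\log Z_{g_n} \leq n$, so the supremum in \eqref{eq:dv} is $+\infty$ and matches $\relent(\bP \Vert \bQ)$.

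The main obstacle will be the degenerate regime $\relent(\bP \Vert \bQ) = +\infty$: one must verify that $\bE_{x \sim \bP}[g^\star]$ is unambiguously defined (which is what the $1/e$ bound on the negative part achieves) and that the truncation sequence genuinely drives $\bE_\bP[g_n] - \log Z_{g_n}$ to $+\infty$, which works because the positive part of $\bE_\bP[g_n]$ blows up while $\log Z_{g_n}$ grows at most linearly in $n$. Every other step reduces to \eqref{eq:proposal_key_identity} together with the non-negativity of a single relative entropy.
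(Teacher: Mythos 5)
The paper never proves this lemma — it is imported directly from Donsker--Varadhan and Gibbs — so there is no internal proof to compare against; your exponential-tilting argument built on the compensation identity $\relent(\bP \Vert \bP_g) = \relent(\bP \Vert \bQ) - \bE_{x \sim \bP}[g(x)] + \log Z_g$ is the standard self-contained route, and its architecture is sound: non-negativity of a single relative entropy yields the $\geq$ directions of both \eqref{eq:dv} and \eqref{eq:gvp} simultaneously, and the equality case identifies the optimizers $g^\star = \log \frac{\rmd \bP}{\rmd \bQ}$ and $\bP_g$.

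Two steps need tightening, both repaired by the same truncation you already use. First, in the case $\relent(\bP \Vert \bQ) = \infty$ of \eqref{eq:dv}, the conclusion does not follow from ``$\bE_{x \sim \bP}[g_n(x)] \to \infty$ while $\log Z_{g_n} \leq n$'': both quantities are allowed to be of order $n$, so their difference need not diverge. The fix is one line: since $g_n = g^\star \wedge n \leq g^\star$ and $Z_{g^\star} = 1$, you have $Z_{g_n} \leq 1$, hence $\bE_{x \sim \bP}[g_n(x)] - \log Z_{g_n} \geq \bE_{x \sim \bP}[g_n(x)] \to \infty$ (in fact $Z_{g_n} \uparrow 1$ by monotone convergence, so the truncated expressions converge to $\relent(\bP \Vert \bQ)$ in the finite case as well). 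Second, the attainment step for \eqref{eq:gvp} evaluates the key identity at $\bP = \bP_g$, but $\bE_{x \sim \bQ}[e^{g(x)}] < \infty$ does not imply $\bE_{x \sim \bQ}[g(x) e^{g(x)}] < \infty$; when the latter fails, $\bE_{x \sim \bP_g}[g(x)]$ and $\relent(\bP_g \Vert \bQ)$ are both $+\infty$, the identity reads $\infty - \infty$, and the supremum is approached but not attained at $\bP_g$. Truncate again: with $g_n = g \wedge n$ and $\bP_{g_n}$ the corresponding tilted measure, all terms are finite and $\bE_{x \sim \bP_{g_n}}[g(x)] - \relent(\bP_{g_n} \Vert \bQ) \geq \bE_{x \sim \bP_{g_n}}[g_n(x)] - \relent(\bP_{g_n} \Vert \bQ) = \log Z_{g_n} \to \log Z_g$ by monotone convergence, which together with your lower bound gives \eqref{eq:gvp} in full generality. (You may also want a word on $g^\star$ taking the value $-\infty$ on $\{\rmd\bP/\rmd\bQ = 0\}$ and on why $\bE_{x \sim \bP}[g(x)]$ is well defined when $\relent(\bP \Vert \bQ) < \infty$ — a Fenchel--Young bound gives $\bE_{x\sim\bP}[g(x)^+] < \infty$ — but these are cosmetic.) With these patches the proof is complete.
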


Consider two fixed random variables $X$ and $Y$ on a measurable space $\cX$ distributed according to $\bP_X = \bP$ and $\bP_{Y}= \bQ$. The Donsker and Varadhan lemma~\citep{donsker1975asymptotic} tells us that the relative entropy of $\bP$ with respect to $\bQ$ is no smaller than the difference of the expected value of $g(X)$ and the CGF of $g(Y)$ for all functions $g$ such that the CGF exists. The Gibbs variational principle~\citep{gibbs1902elementary} is its dual dual formulation. It says that, for a fixed function $g$ and a fixed random variable $X'$ with distribution $\bP_{Y} = \bQ$, the CGF of $g(Y)$ is no smaller than the difference between the expected value of $g(X)$ and the relative entropy of $\bP$ with respect to $\bQ$ for all random variables $X$ distributed according to $\bP_X = \bQ$. As we will see in~\Cref{ch:expected_generalization_error,ch:pac_bayesian_generalization}, these results are very useful to bound the expectation of functions of random variables with complicated distributions when we know that the CGF of those functions with respect to simpler distributions is bounded. We also found these results helpful to characterize the regret of the Thompson sampling in different Markov decision problems~\citep{gouverneur2022information,gouverneur2023thompson,gouverneur2024chained}.

\subsection{Mutual Information}
\label{subsec:mutual_information}

The \emph{mutual information} $\minf(X ; Y)$ between two random objects $X$ and $Y$ represents the information that they contain about each other. More precisely, it is defined as the reduction of uncertainty of one random object when the other is known. Mathematically, it is also equal to the relative entropy of the joint distribution $\bP_{X,Y}$ with respect to the product of the priors $\bP_{X} \otimes \bP_{Y}$. That is, it can be interpreted as the inefficiency of describing the two random objects independently instead of jointly.

\begin{definition}
\label{def:mutual_information}
\looseness=-1 The \emph{mutual information} between two random variables $X$ and $Y$ is 
\begin{equation*}
	\minf(X;Y) \coloneqq \ent(X) - \ent(X | Y) = \relent(\bP_{X,Y} \Vert \bP_X \otimes \bP_Y).
\end{equation*}
\end{definition}

Consider a situation like the one in~\Cref{fig:channel} where a random object $X$ is passed through a channel $\bP_Y^X$ to generate $Y$. The mutual information can also be written as the conditional relative entropy of the channel $\bP_Y^X$ with respect to the output distribution $\bP_Y$ given the input distribution $\bP_X$, i.e. 
\begin{equation*}
	\minf(X;Y) = \relent(\bP_Y^X \Vert \bP_Y | \bP_X).
\end{equation*}
This suggests that the mutual information is the average inefficiency of describing the output of a channel $Y$ without using the input $X$ when it is known. This interpretation coincides with the original definition of uncertainty reduction. However, it is especially useful in communications as the \emph{capacity} of a channel $\rmC(\bP_Y^X)$, or maximum rate rate at which we can send information over the channel and recover it at the output with a vanishingly low probability of error, is exactly the supremum over all input distributions of the mutual information, that is, $\rmC(\bP_Y^X) = \sup_{\bP_X} \relent(\bP_Y^X \Vert \bP_Y | \bP_X)$. This interpretation will also be useful to understand some results concerning the generalization error and mutual information later in the manuscript.

It is often useful to consider the mutual information between two random objects when a third one is known. This is quantified with the \emph{disintegrated mutual information}~\cite{negrea2019information} and its expectation, the \emph{conditional mutual information}.

\begin{definition}
\label{def:conditional_minf}
\looseness=-1 The \emph{disintegrated mutual information between $X$ and $Y$ given $Z$} is 
\begin{equation*}
	\minf^Z(X;Y) \coloneqq \ent^Z(X) - \ent^Z(X | Y) = \relent( \bP^Z_{X,Y} \Vert \bP^Z_X \otimes \bP^Z_Y)
\end{equation*}
and the \emph{conditional mutual information between $X$ and $Y$ given $Z$} is 
\begin{equation*}
	\minf(X;Y|Z) \coloneqq \bE \big[ \minf^Z(X;Y) \big] = \ent(X | Z) - \ent(X | Y, Z) =  \relent( \bP^Z_{X,Y} \Vert \bP^Z_X \otimes \bP^Z_Y | \bP_Z) .
\end{equation*}
\end{definition}

Some properties of the mutual information are described below~\cite{Cover2006, polianskyi2022}.

\begin{proposition}[\!\! {\cite[Theorems 3.2, 3.7, and 4.1]{polianskyi2022}}]
\label{prop:properties_minf}
Consider the random objects $X$, $Y$, $Z$, and $X_1, \cdots, X_n$. Then,
\begin{itemize}
	\item The mutual information is symmetric: $\minf(X;Y) = \minf(Y;X)$.
	\item The mutual information is non-negative: $\minf(X;Y) \geq 0$ with equality if and only if $X$ and $Y$ are independent.
	\item More data, more information: $\minf(X; Y,Z) \geq \minf(X;Y)$.
	\item Data processing inequality: If $X$, $Y$, and $Z$ form a Markov chain $X - Y - Z$, then $\minf(X;Y) \geq \minf(X;Z)$ with equality if and only if they also form the Markov chain $Y - Z - X$. Moreover $\minf(X;Y|Z) \leq \minf(X;Y)$ with equality if and only if $X$ is independent of $Z$.
	\item Permutation invariance: $\minf(X;Y) \geq \minf(f(X);g(Y))$, with equality if and only if $f: \cX \to \cA$ and $g: \cY \to \cB$ are one-to-one functions with a measurable inverse.
	\item Chain rule: $\minf(X^n ; Y) = \sum_{i=1}^n \minf(X_i; Y | X^{i-1})$.
	\item Golden formula: $\minf(X;Y) = \inf_{\bQ \in \cP(\cY)} \relent(\bP_Y^X \Vert \bQ | \bP_X)$.
\end{itemize}
\end{proposition}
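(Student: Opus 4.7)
The plan is to derive each property from the identity $\minf(X;Y) = \relent(\bP_{X,Y} \Vert \bP_X \otimes \bP_Y)$ together with the properties of relative entropy established in \Cref{prop:properties_relative_entropy}, together with the alternative representation $\minf(X;Y) = \relent(\bP_Y^X \Vert \bP_Y \mid \bP_X)$ which follows by the chain rule for relative entropy and the disintegration $\bP_{X,Y} = \bP_X \otimes \bP_Y^X$.

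First I would handle the easy entries in one stroke. \textbf{Symmetry} is immediate because $\bP_{X,Y}$ and $\bP_{Y,X}$ agree under the canonical swap of the product space, and so do $\bP_X\otimes\bP_Y$ and $\bP_Y\otimes\bP_X$. \textbf{Non-negativity} together with the equality condition is just the non-negativity of relative entropy, since $\relent(\bP_{X,Y}\Vert \bP_X\otimes\bP_Y)=0$ iff $\bP_{X,Y}=\bP_X\otimes\bP_Y$, i.e.\ iff $X\perp Y$. The \textbf{chain rule} follows by applying the chain rule for relative entropy to $\bP_{X^n,Y}$ and $\bP_{X^n}\otimes\bP_Y$, where the product measure factorizes so that the marginal contributions telescope, leaving $\sum_{i=1}^n \relent(\bP_{X_i,Y\mid X^{i-1}}\Vert \bP_{X_i\mid X^{i-1}}\otimes\bP_{Y\mid X^{i-1}}\mid\bP_{X^{i-1}}) = \sum_{i=1}^n \minf(X_i;Y\mid X^{i-1})$.

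Next I would address the \textbf{more data} inequality and the \textbf{data processing inequality} together, since both rest on the chain rule $\minf(X;Y,Z) = \minf(X;Y) + \minf(X;Z\mid Y) = \minf(X;Z) + \minf(X;Y\mid Z)$ combined with non-negativity of (conditional) mutual information. The first inequality is then immediate. For the DPI under $X-Y-Z$, the Markov property gives $\minf(X;Z\mid Y)=0$, so $\minf(X;Y)=\minf(X;Y,Z)\geq \minf(X;Z)$ with equality iff $\minf(X;Y\mid Z)=0$, which is the condition $Y-Z-X$. The same identity also yields $\minf(X;Y)=\minf(X;Z)+\minf(X;Y\mid Z)\geq \minf(X;Y\mid Z)$ with equality iff $\minf(X;Z)=0$, i.e.\ $X\perp Z$. \textbf{Permutation invariance} is then a corollary: the pair $(X,Y)\mapsto(f(X),g(Y))$ induces the Markov chain $(X,Y)-(X,Y)-(f(X),g(Y))$, so DPI applied twice gives $\minf(X;Y)\geq\minf(f(X);g(Y))$; equality requires no loss in processing, which is exactly measurable bijectivity of $f$ and $g$ (so that $X$ and $Y$ can be recovered $\bP$-a.s.\ from their images).

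Finally, for the \textbf{golden formula}, I would fix $\bP_X$ and any $\bQ\in\cP(\cY)$ with $\bP_Y\ll\bQ$ (otherwise the right side is infinite and there is nothing to prove), and compute
\begin{equation*}
\relent(\bP_Y^X\Vert\bQ\mid\bP_X) - \relent(\bP_Y^X\Vert\bP_Y\mid\bP_X)
= \bE_{(x,y)\sim\bP_{X,Y}}\!\left[\log\frac{\rmd\bP_Y}{\rmd\bQ}(y)\right]
= \relent(\bP_Y\Vert\bQ) \geq 0,
\end{equation*}
using absolute continuity, a change of measure, and the marginalization $\bP_Y=\bP_Y^X\circ\bP_X$. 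Hence the infimum over $\bQ$ is achieved at $\bQ=\bP_Y$, yielding $\minf(X;Y)=\inf_{\bQ}\relent(\bP_Y^X\Vert\bQ\mid\bP_X)$. The main obstacle, in my view, is purely bookkeeping: making sure the equality conditions in the data processing inequality and permutation invariance are stated as almost-sure statements with respect to the correct measures, and handling the case $\bP_Y\not\ll\bQ$ in the golden formula by a separate appeal to the convention $\relent=\infty$.
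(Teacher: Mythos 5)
Your proposal is correct, and it is essentially the standard derivation: the paper states this proposition without proof, citing Polyanskiy--Wu, and your route (reducing everything to the relative-entropy properties of \Cref{prop:properties_relative_entropy}, the chain rule $\minf(X;Y,Z)=\minf(X;Y)+\minf(X;Z\mid Y)$, and the identity $\relent(\bP_Y^X\Vert\bQ\mid\bP_X)=\minf(X;Y)+\relent(\bP_Y\Vert\bQ)$ for the golden formula) is exactly that textbook argument. The only loose point is the ``only if'' direction of the permutation-invariance equality condition, which you state at the same level of precision as the proposition itself, so no gap relative to the cited statement.
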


\subsection{$f$-divergences}
\label{subsec:f-divergences}

The relative entropy $\relent$ is a special case of an \emph{$f$-divergence}, which is a family of dissimilarity measures between probability distributions. They are defined as an expectation of a convex function of the likelihood ratio of the probability densities. 

\begin{definition}
\label{def:f-divergence}
Consider two probability distributions $\bP$ and $\bQ$ on $\cX$ with densities $\sfp$ and $\sfq$. Further consider a convex function $f: \bR_+ \to \bR$ such that $f(1) = 0$ and let $f(0) \coloneqq \lim_{x \to 0^+} f(x)$. Then, the $f$-\emph{divergence} of $\bP$ with respect to $\bQ$ is
\begin{equation*}
	\fdiv(\bP \Vert \bQ) \coloneqq \bE_{x \sim \bQ} \bigg[ f \bigg( \frac{\rmd \bP}{\rmd \bQ}(x) \bigg) \bigg] = \bE_{x \sim \bQ} \bigg[ f \bigg( \frac{\sfp(x)}{\sfq(x)} \bigg) \bigg]
\end{equation*}
if $\bP \ll \bQ$ and $\fdiv(\bP \Vert \bQ) \to \infty$ otherwise.
\end{definition}

Like the relative entropy, there are many common dissimilarity measures that are instances of $f$-divergences. Consider two probability distributions $\bP$ and $\bQ$ on $\cX$ with densities $\sfp$ and $\sfq$. Below, we define some of the ones that will be considered in this manuscript and give some interpretation about them.
\begin{itemize}
	\item  The \emph{total variation} is defined as an $f$-divergence with $f(x) = \frac{1}{2} | x - 1|$, that is,
	\begin{equation*}
		\tv(\bP , \bQ) \coloneqq \frac{1}{2} \bE_{x \sim \bQ} \mleft[ \Big| \frac{\sfp(x)}{\sfq(x)} -1 \Big| \mright] = \sup_{\cA \subseteq \cX} \Big \{ \bP \big[ \cA \big] - \bQ \big[ \cA \big ] \Big\} \in [0,1].
	\end{equation*}
	
	The total variation has a direct interpretation via \emph{binary hypothesis testing}~\cite{polianskyi2022, wainwright2019high}. Consider a random object $X$ with an unknown distribution. We know that $X$ can either be distributed according to $\bP$ or $\bQ$, and therefore its distribution is $\bP_X = \bP \cdot \bP_U[0] + \bQ \cdot \bP_U[1]$, where $U$ is a binary random variable indicating if the distribution of $X$ is $\bP$ (when $U = 0$) or $\bQ$ (when $U = 1$). If we have no prior knowledge of which distribution is more likely we let $\bP_U[0] = \bP_U[1] = \nicefrac{1}{2}$. Then, in binary hypothesis testing one tries to design a decision function $\phi: \cX \to \{0, 1\}$ that accurately predicts the indicating variable $U$ (or hypothesis). The error probability achieved with the best decision function is completely characterized by the total variation
	\begin{equation*}
		\inf_{\phi} \bP \Big[ \phi(X) \neq U \Big] = \frac{1}{2} \Big(1 - \tv(\bP , \bQ) \Big),
	\end{equation*}
that is, the more different are the distributions $\bP$ and $\bQ$, the smaller error probability can be achieved by the best decision function.

	Another interesting property of the total variation is that it is a \emph{distance} between probability distributions. More precisely, it is a \emph{Wasserstein distance} with respect to the discrete metric (as shown in~\Cref{prop:wasserstein_and_total_variation} below).

	\item The \emph{squared Hellinger distance} (or \emph{Hellinger squared divergence}) is defined as an $f$-divergence with $f(x) = (1 - \sqrt{x})^2$, that is,
	\begin{equation*}
		\hel^2(\bP , \bQ) \coloneqq \bE_{x \sim \bQ} \left[ \Bigg( \sqrt{\frac{\sfp(x)}{\sfq(x)}} - 1\Bigg)^2 \right] \in [0,2].
	\end{equation*}
	
	The Hellinger distance, like the total variation, is a distance between probability measures. One of its main interests is that it (weakly) characterizes the total variation (see \cite[Section 7.3]{polianskyi2022}) and that it tensorizes, that is, $\hel^2(\bP , \bQ) = 2 - 2 \prod_{i=1}^n \big(1 - \nicefrac{1}{2} \cdot \hel^2(\bP_i , \bQ_i) \big)$ for all $\bP = \bigotimes_{i=1}^n \bP_i$ and $\bQ = \bigotimes_{i=1}^n \bQ_i$. This makes this divergence very useful for the problem of composite hypothesis testing and therefore a well studied quantity. %
	
	\item The \emph{$\chi^2$-divergence} is defined as an $f$-divergence with $f(x) = (x-1)^2$, that is,
	\begin{equation*}
		\chi^2(\bP \Vert \bQ) \coloneqq  \bE_{x \sim \bQ} \bigg[ \Big( \frac{\sfp(x)}{\sfq(x)} - 1 \Big)^2 \bigg].
	\end{equation*}
	
	Most $f$-divergences locally behave like the $\chi^2$-divergence \cite[Section 7.10]{polianskyi2022}. This is one of the reasons for which this quantity is well understood in the information theory community. 
\end{itemize}

It is often useful to know the relationships between different $f$-divergences. For instance, sometimes we can prove a result with the total variation, but we are really interested in the properties of the relative entropy. In this case, Pinsker's inequality~\cite{pinsker1964information} or the Bretagnolle--Huber's inequality~\cite{bretagnolle1978estimation} are useful.

\begin{lemma}[{\citet{pinsker1964information} with constants from \cite{kullback1967lower, csiszar1967information, kemperman1969optimum}} \!\!]
\label{lemma:pinsker-inequality}
	Let $\bP$ and $\bQ$ be two probability distributions on $\cX$. Then,
	\begin{equation*}
		\tv(\bP , \bQ) \leq \sqrt{\frac{1}{2} \cdot \relent(\bP \Vert \bQ)}.
	\end{equation*}
\end{lemma}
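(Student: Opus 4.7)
The plan is to prove Pinsker's inequality by reducing to the scalar (Bernoulli) case and then establishing the tight quadratic lower bound on the binary relative entropy.

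First, I would exploit the variational characterization of total variation already stated in the excerpt, $\tv(\bP,\bQ) = \sup_{\cA \subseteq \cX} \{\bP[\cA] - \bQ[\cA]\}$. Fix any measurable event $\cA$ and let $p \coloneqq \bP[\cA]$, $q \coloneqq \bQ[\cA]$. Applying the data processing inequality from \Cref{prop:properties_relative_entropy} to the deterministic channel $x \mapsto \bI[x \in \cA]$ (which pushes $\bP$ and $\bQ$ forward to the Bernoulli distributions $\mathrm{Ber}(p)$ and $\mathrm{Ber}(q)$) yields
\begin{equation*}
    \relent(\bP \Vert \bQ) \;\geq\; \relent(\mathrm{Ber}(p) \Vert \mathrm{Ber}(q)) \;=\; p \log \tfrac{p}{q} + (1-p) \log \tfrac{1-p}{1-q} \;\eqqcolon\; \relentber(p \Vert q).
\end{equation*}
Consequently, it suffices to show the scalar inequality $2(p-q)^2 \leq \relentber(p \Vert q)$ for all $p, q \in [0,1]$, since then $|p - q| \leq \sqrt{\tfrac{1}{2}\relent(\bP \Vert \bQ)}$ uniformly in $\cA$, and taking the supremum over $\cA$ concludes the proof.

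The scalar inequality I would establish by elementary calculus. Fix $q \in (0,1)$ and define $g(p) \coloneqq \relentber(p \Vert q) - 2(p-q)^2$ on $[0,1]$. Direct computation gives $g(q) = 0$ and $g'(p) = \log\tfrac{p}{1-p} - \log\tfrac{q}{1-q} - 4(p-q)$, so $g'(q) = 0$ as well. Differentiating once more,
\begin{equation*}
    g''(p) \;=\; \frac{1}{p(1-p)} - 4 \;\geq\; 0,
\end{equation*}
since $p(1-p) \leq \tfrac{1}{4}$ for $p \in [0,1]$. Hence $g$ is convex with a stationary point at $p = q$, so $g$ attains its minimum value $0$ there and $g \geq 0$ throughout $[0,1]$. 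The boundary cases $q \in \{0,1\}$ are handled either by continuity or by noting that $\relentber(p \Vert q) = +\infty$ unless $p = q$.

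The only mild obstacle is ensuring the data processing step is applicable even when $\bP$ is not absolutely continuous with respect to $\bQ$; but in that case $\relent(\bP \Vert \bQ) = +\infty$ and the inequality is trivial, so we may assume $\bP \ll \bQ$ throughout. The remainder is routine: the reduction to the binary case via data processing is the conceptual heart, while the inequality $\frac{1}{p(1-p)} \geq 4$ provides the crucial convexity that gives the sharp constant $\tfrac{1}{2}$.
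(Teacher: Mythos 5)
Your proof is correct: the reduction to the Bernoulli case via the data processing inequality of \Cref{prop:properties_relative_entropy}, followed by the convexity argument showing $\relentber(p \Vert q) - 2(p-q)^2 \geq 0$ from $\tfrac{1}{p(1-p)} \geq 4$, is the standard argument that yields the sharp constant $\tfrac{1}{2}$. The monograph states \Cref{lemma:pinsker-inequality} without proof, deferring to \citet{pinsker1964information,kullback1967lower,csiszar1967information,kemperman1969optimum}, and your argument is essentially the one found in those references, with the edge cases ($\bP \not\ll \bQ$ and $q \in \{0,1\}$) handled correctly.
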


\begin{lemma}[{\citet[Lemma 2.1]{bretagnolle1978estimation}}]
\label{lemma:bretagnolle-huber-inequality}
	Let $\bP$ and $\bQ$ be two probability distributions on $\cX$. Then,
	\begin{equation*}
		\tv(\bP , \bQ) \leq \sqrt{1 - e^{- \relent(\bP \Vert \bQ)}}.
	\end{equation*}
\end{lemma}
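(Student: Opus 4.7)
The plan is to route the bound through the Hellinger affinity (Bhattacharyya coefficient) $\rho(\bP, \bQ) \coloneqq \int \sqrt{\sfp \sfq}\, \rmd\mu$, where $\mu$ is any common dominating measure (e.g., $\mu = \bP + \bQ$) with corresponding densities $\sfp, \sfq$. The target inequality will follow by combining two intermediate bounds: a \emph{lower} bound $\rho(\bP, \bQ)^2 \geq e^{-\relent(\bP \Vert \bQ)}$ linking the affinity to the relative entropy, and the \emph{upper} bound $\tv(\bP, \bQ)^2 + \rho(\bP, \bQ)^2 \leq 1$ linking the affinity to the total variation. Concatenating these will immediately give $\tv(\bP,\bQ)^2 \leq 1 - e^{-\relent(\bP \Vert \bQ)}$.

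For the first ingredient, I would write $\rho(\bP, \bQ) = \bE_{x \sim \bP}[\sqrt{\sfq(x)/\sfp(x)}]$ (on the support of $\bP$, which is enough since the case $\bP \not\ll \bQ$ is trivial as the right hand side equals $1$) and apply Jensen's inequality to the concave function $\log$. This yields $\log \rho(\bP,\bQ) \geq \tfrac{1}{2}\bE_{x \sim \bP}[\log(\sfq(x)/\sfp(x))] = -\tfrac{1}{2}\relent(\bP \Vert \bQ)$, from which exponentiating and squaring gives the desired estimate. For the second ingredient, I would use the dual representation $\tv(\bP, \bQ) = 1 - \int \min(\sfp, \sfq)\, \rmd\mu$, set $a \coloneqq \int \min(\sfp, \sfq)\, \rmd\mu$ so that $\int \max(\sfp, \sfq)\, \rmd\mu = 2 - a$, and apply Cauchy--Schwarz to the pointwise factorization $\sqrt{\sfp \sfq} = \sqrt{\min(\sfp,\sfq)} \cdot \sqrt{\max(\sfp,\sfq)}$. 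This produces $\rho(\bP, \bQ)^2 \leq a(2-a) = 1-(1-a)^2 = 1-\tv(\bP,\bQ)^2$.

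The main subtlety is the Cauchy--Schwarz step: it relies on spotting the factorization of $\sqrt{\sfp \sfq}$ through the min and the max, and the nontrivial observation that this estimate is sharp enough to match $1 - \tv(\bP,\bQ)^2$ on the nose. The Jensen step is standard, and the final combination is a one-line chaining followed by taking square roots. Notably, this proof complements Pinsker's inequality (\Cref{lemma:pinsker-inequality}): the Bretagnolle--Huber bound is never vacuous since the right-hand side is at most $1$, which makes it the inequality of choice in the regime where $\relent(\bP \Vert \bQ)$ is large.
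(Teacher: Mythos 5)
Your proof is correct, and it is the standard route to Bretagnolle--Huber via the Hellinger affinity: both ingredients check out — Jensen on $\log$ gives $\rho(\bP,\bQ)^2 \geq e^{-\relent(\bP \Vert \bQ)}$ (with the $\bP \not\ll \bQ$ and $\relent = \infty$ cases trivially covered), and Cauchy--Schwarz applied to $\sqrt{\sfp\sfq} = \sqrt{\min(\sfp,\sfq)}\sqrt{\max(\sfp,\sfq)}$ together with $\int \min(\sfp,\sfq)\,\rmd\mu = 1 - \tv(\bP,\bQ)$ and $\int \max(\sfp,\sfq)\,\rmd\mu = 1 + \tv(\bP,\bQ)$ gives $\rho(\bP,\bQ)^2 \leq 1 - \tv(\bP,\bQ)^2$. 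The monograph itself does not prove this lemma — it is quoted directly from Bretagnolle and Huber — so there is no internal argument to compare against; your write-up is a valid self-contained substitute, and as a side remark the same two ingredients sharpened via $\int \min(\sfp,\sfq)\,\rmd\mu \geq \tfrac{1}{2}\rho(\bP,\bQ)^2$ yield the slightly stronger original form $1 - \tv(\bP,\bQ) \geq \tfrac{1}{2}e^{-\relent(\bP \Vert \bQ)}$.
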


These relationships exist between any two $f$-divergences as shown by Harremoës
and Vajda with their joint range technique~\cite{harremoes2011pairs}. The joint range theorem gives the sharpest possible inequalities between any two $f$-divergences, although sometimes these are cumbersome. Some other good relationships involving the total variation, the relative entropy, the Hellinger distance, and the $\chi^2$ divergence are~\cite[Sections 7.3 to 7.6]{polianskyi2022}
\begin{align}
	\label{eq:relation_tv_hel}
	\tv(\bP , \bQ) &\leq \hel(\bP , \bQ) \sqrt{ 1 - \frac{1}{4} \cdot \hel^2(\bP , \bQ)}, \\
	\label{eq:relation_tv_chi}
	\tv(\bP , \bQ) &\leq \frac{1}{2} \sqrt{\chi^2 ( \bP \Vert \bQ)}, \textnormal{ and} \\
    \label{eq:relation_relent_chi}
    \relent(\bP , \bQ) &\leq \log \mleft( 1 + \chi^2 ( \bP \Vert \bQ) \mright) \leq \chi^2 ( \bP \Vert \bQ)
\end{align}

All $f$-divergences share some important properties with the relative entropy, like the non-negativity, the data processing inequality, or the fact that conditioning increases their value~\cite[Theorems 7.4 and 7.5]{polianskyi2022}. However, in this manuscript, we will only employ that they have a variational representation.

\begin{lemma}[{Variational representation of $f$-divergences~\cite[Theorem 7.24]{polianskyi2022}}]
\label{lemma:variational_representation_f_divergences}
	Let $\bP$ and $\bQ$ be two probability distributions on $\cX$ such that $\bP \ll \bQ$, $f_*$ be the convex conjugate of $f$ (see~\Cref{def:convex_conjugate}), and $\cG_{\bQ, f}$ be the set of all measurable functions on $\cX$ such that $\bE_{x \sim \bQ} \big[f_* \circ g (x) \big] < \infty$ . Then,
	\begin{equation*}
		\fdiv(\bP \Vert \bQ) = \sup_{g \in \cG_{\bQ, f}} \Big \{ \bE_{x \sim \bP} \big[ g(x) \big] - \bE_{x \sim \bQ} \big[ f_* \circ g (x) \big] \Big\}.
	\end{equation*}
\end{lemma}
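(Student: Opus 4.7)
The plan is to obtain both inequalities from the Fenchel--Young relation between $f$ and its convex conjugate $f_*$, exploiting the change of measure formula \eqref{eq:change_of_measure} together with the assumption $\bP \ll \bQ$.

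For the direction $\fdiv(\bP \Vert \bQ) \geq \sup_{g \in \cG_{\bQ,f}} \{\bE_{x \sim \bP}[g(x)] - \bE_{x \sim \bQ}[f_* \circ g(x)]\}$, I would start from the Fenchel--Young inequality, which states that for every $t \in \bR_+$ and every $y \in \bR$ in the domain of $f_*$,
\begin{equation*}
    f(t) \geq t\cdot y - f_*(y).
\end{equation*}
Applying this pointwise with $t = \frac{\rmd\bP}{\rmd\bQ}(x)$ and $y = g(x)$ for an arbitrary $g \in \cG_{\bQ,f}$, integrating against $\bQ$, and using the change of measure in~\eqref{eq:change_of_measure} to rewrite $\bE_{x \sim \bQ}\big[\frac{\rmd\bP}{\rmd\bQ}(x) g(x)\big] = \bE_{x \sim \bP}[g(x)]$, I obtain
\begin{equation*}
    \fdiv(\bP\Vert\bQ) = \bE_{x\sim\bQ}\Big[f\Big(\tfrac{\rmd\bP}{\rmd\bQ}(x)\Big)\Big] \geq \bE_{x\sim\bP}[g(x)] - \bE_{x\sim\bQ}[f_*\circ g(x)].
\end{equation*}
Taking the supremum over $g \in \cG_{\bQ,f}$ yields the desired inequality. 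Here the hypothesis $\bP \ll \bQ$ is used to legitimize the change of measure; otherwise $\fdiv = \infty$ and the inequality is trivial.

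For the reverse direction, I would invoke the Fenchel--Moreau (biconjugate) theorem: since $f$ is convex and lower semicontinuous with $f(1)=0$ (extending $f$ to $+\infty$ outside $\bR_+$ preserves these properties), we have $f(t) = f_{**}(t) = \sup_{y} \{ty - f_*(y)\}$ for every $t \geq 0$. The idea is then to choose, for each $x$ in the support of $\bQ$, a value $g^\star(x)$ in the subdifferential $\partial f\big(\tfrac{\rmd\bP}{\rmd\bQ}(x)\big)$, so that Fenchel--Young holds with equality pointwise:
\begin{equation*}
    f\Big(\tfrac{\rmd\bP}{\rmd\bQ}(x)\Big) = \tfrac{\rmd\bP}{\rmd\bQ}(x) \cdot g^\star(x) - f_*(g^\star(x)).
\end{equation*}
Integrating against $\bQ$ and again applying the change of measure produces equality; one concludes by verifying $g^\star \in \cG_{\bQ,f}$.

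The main obstacle is the measurable selection required to define $g^\star$: one must pick a point from the subdifferential $\partial f\big(\tfrac{\rmd\bP}{\rmd\bQ}(x)\big)$ in a jointly measurable way in $x$. On a standard Borel space (as assumed throughout~\Cref{subsec:polish_borel_spaces}) this follows from the Kuratowski--Ryll-Nardzewski selection theorem applied to the closed-valued multifunction $x \mapsto \partial f\big(\tfrac{\rmd\bP}{\rmd\bQ}(x)\big)$, which is measurable because $\tfrac{\rmd\bP}{\rmd\bQ}$ is. A mild technical subtlety is that $\partial f$ may be empty or unbounded at points where $\tfrac{\rmd\bP}{\rmd\bQ}$ takes the value $0$ or lies on the boundary of $\mathrm{dom}\,f$; I would handle this by truncating $\tfrac{\rmd\bP}{\rmd\bQ}$ to $[\varepsilon, 1/\varepsilon]$, applying the argument, and passing $\varepsilon \to 0$ via monotone/dominated convergence, using the definition $f(0) = \lim_{x\to 0^+} f(x)$ from~\Cref{def:f-divergence} to control the boundary contribution. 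Alternatively, one can avoid selection entirely by approximating $\tfrac{\rmd\bP}{\rmd\bQ}$ by simple functions and using the lower semicontinuity of the functional $g \mapsto \bE_\bP[g] - \bE_\bQ[f_*\circ g]$ in an appropriate sense.
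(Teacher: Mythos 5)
The paper does not actually prove this lemma---it is imported verbatim from Polyanskiy--Wu \cite[Theorem 7.24]{polianskyi2022}---and your argument is precisely the standard proof of that result: the Fenchel--Young inequality combined with the change of measure~\eqref{eq:change_of_measure} gives $\fdiv(\bP \Vert \bQ) \geq \bE_{x\sim\bP}[g(x)] - \bE_{x\sim\bQ}[f_*\circ g(x)]$ for every admissible $g$, and a (sub)gradient selection evaluated at a truncated likelihood ratio gives the matching upper bound in the limit. Your sketch of the harder direction is sound; the only details worth writing out are that the truncated candidate $g_\varepsilon$ indeed lies in $\cG_{\bQ,f}$ (immediate, since $f$ is finite and hence Lipschitz on $[\varepsilon, \nicefrac{1}{\varepsilon}] \subset (0,\infty)$, so $f_*\circ g_\varepsilon$ is bounded), that no measurable-selection machinery is needed (the right derivative $f'_+$ of a convex function on $\bR_+$ is monotone, hence Borel, so $g_\varepsilon = f'_+$ composed with the truncated $\nicefrac{\rmd\bP}{\rmd\bQ}$ works), and the convergence argument as $\varepsilon \to 0$, including the vanishing of the contribution on the event $\{\nicefrac{\rmd\bP}{\rmd\bQ} > \nicefrac{1}{\varepsilon}\}$.
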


In particular, we will employ the corollaries stating a variational characterization of the $\chi^2$-divergence and the relative entropy.

\begin{corollary}[{Variational representation of the $\chi^2$ divergence}]
\label{cor:variational_representation_chi_2}
Let $\bP$ and $\bQ$ be two probability distributions on $\cX$ such that $\bP \ll \bQ$ and $\cG_\bQ$ be the set of all measurable functions on $\cX$ such that $\bE_{x \sim \bQ} \big[ e^{g(x)}\big] < \infty$. Then,
	\begin{equation*}
		\chi^2(\bP \Vert \bQ) = \sup_{g : \cX \to \bR} \bigg \{ \frac{\big( \bE_{x \sim \bP}[g(x)] - \bE_{x \sim \bQ}[g(x)] \big)^2}{\var_{x \sim \bQ}[g(x)]} \bigg \}.
	\end{equation*}
\end{corollary}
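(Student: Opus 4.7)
The plan is to specialize the variational representation of $f$-divergences (the preceding Lemma) to the choice $f(x) = (x-1)^2$ and then strip the unwanted quadratic penalty via a one-parameter rescaling. The Legendre transform is easy to compute: differentiating $xy - (x-1)^2$ in $x$ yields the stationary point $x^\star = 1 + y/2$, and hence $f_*(y) = y + y^2/4$. Since $f_*$ grows only quadratically, the admissible class $\cG_{\bQ,f}$ reduces to measurable $g \in L^2(\bQ)$, and the Lemma becomes
\begin{equation*}
\chi^2(\bP \Vert \bQ) = \sup_{g \in L^2(\bQ)} \Big\{ \bE_{x \sim \bP}[g(x)] - \bE_{x \sim \bQ}[g(x)] - \tfrac{1}{4} \bE_{x \sim \bQ}[g(x)^2] \Big\}.
\end{equation*}

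To convert the raw second moment into a variance and absorb the factor $\tfrac{1}{4}$, I would restrict the supremum to the two-parameter family $g = \alpha\,(h - \bE_{x \sim \bQ}[h(x)])$ indexed by $\alpha \in \bR$ and $h \in L^2(\bQ)$ with $\var_{x \sim \bQ}[h(x)] > 0$. Centering makes $\bE_{x \sim \bQ}[g(x)] = 0$ and $\bE_{x \sim \bQ}[g(x)^2] = \alpha^2 \var_{x \sim \bQ}[h(x)]$, so the objective collapses to the downward parabola $\alpha\, d - \tfrac{\alpha^2}{4}\, v$ with $d := \bE_{x \sim \bP}[h(x)] - \bE_{x \sim \bQ}[h(x)]$ and $v := \var_{x \sim \bQ}[h(x)]$. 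Its unconstrained maximum in $\alpha$ is attained at $\alpha^\star = 2d/v$ and equals $d^2/v$, which after taking the supremum over $h$ yields the lower bound $\chi^2(\bP \Vert \bQ) \geq \sup_h d^2/v$.

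The matching upper bound is achieved by the distinguished test function $h^\star := \rmd\bP/\rmd\bQ - 1$. Indeed $\bE_{x \sim \bQ}[h^\star(x)] = 0$, so by the change of measure formula $\bE_{x \sim \bP}[h^\star(x)] = \bE_{x \sim \bQ}[h^\star(x)(h^\star(x) + 1)] = \bE_{x \sim \bQ}[h^\star(x)^2] = \chi^2(\bP \Vert \bQ)$, and likewise $\var_{x \sim \bQ}[h^\star(x)] = \chi^2(\bP \Vert \bQ)$, so the ratio equals $\chi^2(\bP \Vert \bQ)$ exactly. The only mild obstacle is the integrability bookkeeping: one must check that $\alpha(h - \bE_{x \sim \bQ}[h(x)])$ lies in $\cG_{\bQ,f}$ for every $h \in L^2(\bQ)$ and $\alpha \in \bR$, which is immediate from the quadratic growth of $f_*$; and restricting the outer supremum to non-constant $h$ is harmless since constant $h$ produce the indeterminate ratio $0/0$. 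If $\chi^2(\bP \Vert \bQ) = \infty$, one recovers the result by approximating $h^\star$ by bounded truncations.
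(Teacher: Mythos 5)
Your proposal is correct and takes essentially the route the paper intends for this corollary: specialize \Cref{lemma:variational_representation_f_divergences} to $f(x) = (x-1)^2$, compute the conjugate, restrict to centered and rescaled test functions $\alpha\big(h - \bE_{x\sim\bQ}[h(x)]\big)$, optimize over $\alpha$, and exhibit the maximizer $h^\star = \frac{\rmd \bP}{\rmd \bQ} - 1$. The only minor point is that, since $f$ is defined on $\bR_+$, one has $f_*(y) = y + \frac{y^2}{4}$ only for $y \geq -2$ (and $f_*(y) = -1$ below), but this is harmless: the quadratic surrogate dominates the true conjugate, so it only weakens the bound in the direction you use it, and your explicit maximizer takes values in $[-2,\infty)$, so both directions of the argument go through unchanged.
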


\begin{corollary}[{Variational representation of the relative entropy}]
\label{cor:variational_representation_relative_entropy}
Let $\bP$ and $\bQ$ be two probability distributions on $\cX$ such that $\bP \ll \bQ$. Then, 
	\begin{equation*}
		\relent(\bP \Vert \bQ) = 1 + \sup_{g \in \cG_\bQ} \Big\{ \bE_{x \sim \bP} \big[ g(x) \big] - \bE_{x \sim \bQ} \Big[e^{g(x)} \Big] \Big\}
	\end{equation*}
\end{corollary}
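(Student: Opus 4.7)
The plan is to derive the stated identity as a direct application of the variational representation of $f$-divergences (\Cref{lemma:variational_representation_f_divergences}), together with an explicit computation of the convex conjugate of the generator of the relative entropy, followed by a single translation of the test-function variable.

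First I would identify the generator: the relative entropy is the $f$-divergence associated with $f(x) = x \log x$ (with the convention $0 \log 0 = 0$), which is convex on $\bR_+$ and satisfies $f(1) = 0$. The key calculation is then to verify that the convex conjugate (in the sense of \Cref{def:convex_conjugate}) is
\begin{equation*}
    f_*(y) = \sup_{x \geq 0} \bigl\{ xy - x \log x \bigr\} = e^{y-1}.
\end{equation*}
This is a one-variable optimization: differentiating $x \mapsto xy - x \log x$ yields the stationarity condition $y - \log x - 1 = 0$, so the supremum is attained at $x^\star = e^{y-1}$, and substituting back gives $f_*(y) = e^{y-1}$.

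Next I would invoke \Cref{lemma:variational_representation_f_divergences} with this $f$ and $f_*$ to obtain
\begin{equation*}
    \relent(\bP \Vert \bQ) = \sup_{g \in \cG_{\bQ,f}} \Bigl\{ \bE_{x \sim \bP}[g(x)] - \bE_{x \sim \bQ}\bigl[ e^{g(x) - 1} \bigr] \Bigr\}.
\end{equation*}
To reach the claimed form I would perform the translation $g \mapsto g + 1$. Because the set $\cG_{\bQ,f}$ consists of measurable functions for which $\bE_{x \sim \bQ}[f_*(g)] < \infty$, i.e.\ $\bE_{x \sim \bQ}[e^{g-1}] < \infty$, and since this condition is equivalent (under $g \mapsto g+1$) to $\bE_{x \sim \bQ}[e^{g}] < \infty$, the translation is a bijection between $\cG_{\bQ,f}$ and the set $\cG_\bQ$ appearing in the statement. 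After the substitution, the $\bE_\bP$ term contributes an additive constant $1$, producing
\begin{equation*}
    \relent(\bP \Vert \bQ) = 1 + \sup_{g \in \cG_\bQ} \Bigl\{ \bE_{x \sim \bP}[g(x)] - \bE_{x \sim \bQ}\bigl[ e^{g(x)} \bigr] \Bigr\},
\end{equation*}
which is the desired identity.

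The only genuinely delicate point is bookkeeping around the domain of $f$ (nonnegative reals) when computing $f_*$, and ensuring that the translation correctly matches the class $\cG_{\bQ,f}$ to the class $\cG_\bQ$ in the statement; the rest is mechanical. I do not expect the passage from $\cG_{\bQ,f}$ to $\cG_\bQ$ to cause any real trouble, since translation by a constant preserves integrability of $e^g$ under $\bQ$.
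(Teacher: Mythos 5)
Your proposal is correct and is exactly the route the paper intends: the statement is presented as a corollary of \Cref{lemma:variational_representation_f_divergences} with generator $f(x)=x\log x$, whose conjugate $f_*(y)=e^{y-1}$ you compute correctly, followed by the shift $g\mapsto g+1$ (note $\bE_{x\sim\bQ}[e^{g-1}]<\infty$ iff $\bE_{x\sim\bQ}[e^{g}]<\infty$, so $\cG_{\bQ,f}=\cG_\bQ$ and the translation is indeed a bijection of this class onto itself). No gaps.
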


The variational representation of the relative entropy borrowed from $f$-divergences (\Cref{cor:variational_representation_relative_entropy}) is looser than the one by Donsker and Varadhan (\Cref{lemma:dv_and_gvp}).\footnote{A slight modification of the definition of $f$-divergences can recover the Donsker and Varadhan lemma, as shown in \cite[Section 7.13]{polianskyi2022}.} However, it is still useful, as \Cref{cor:variational_representation_relative_entropy} is \emph{still an equivalence} of the relative entropy which sometimes is easier to manipulate as shown in~\Cref{subsec:a_fast_rate_bound_mi}.

\subsection{Rényi Divergence}
\label{subsec:renyi_divergence}

To conclude with this section, we describe another family of dissimilarity measures between probability distributions: the \emph{Rényi divergences of order $\alpha$}. In particular, the relative entropy $\relent$  is the Rényi divergence of order 1.

\begin{definition}
\label{def:renyi_divergence}
Consider two probability distributions $\bP$ and $\bQ$ on $\cX$ with densities $\sfp$ and $\sfq$. If $\bP \ll \bQ$, for every $\alpha \in (0,1) \cup (1,\infty)$, the \emph{Rényi divergence} of order $\alpha$ of $\bP$ with respect to $\bQ$ is
\begin{equation*}
	\renyidiv{\alpha}(\bP \Vert \bQ) \coloneqq \frac{1}{\alpha - 1} \log \bE_{x \sim \bQ} \bigg[ \bigg( \frac{\rmd \bP}{\rmd \bQ}(x) \bigg)^{\alpha} \bigg] = \frac{1}{\alpha - 1} \log \bE_{x \sim \bQ} \bigg[ \bigg( \frac{\sfp (x)}{\sfq(x)} \bigg)^{\alpha} \bigg].
\end{equation*}
For the orders $\alpha=0$, $\alpha = 1$, and $\alpha \to \infty$, the definition is obtained by continuity. To be precise, if $\alpha = 0$, then
\begin{equation*}
	\renyidiv{0}(\bP \Vert \bQ) \coloneqq - \log \bQ \big[ \supp(\bP) \big] = - \log \bQ \big[ \{ x \in \cX:  \sfp(x) > 0 \}\big];
\end{equation*}
if $\alpha = 1$, then by continuity it is equivalent to the relative entropy, that is, $\renyidiv{1}(\bP \Vert \bQ) \coloneqq \relent(\bP \Vert \bQ)$; and if $\alpha \to \infty$, then
\begin{equation*}
	\renyidiv{\infty} (\bP \Vert \bQ) \coloneqq \log \Big( \esssup_{\bP} \frac{\rmd \bP}{\rmd \bQ}   \Big) = \log \Big( \esssup_{\bP} \frac{\rmd \sfp}{\rmd \sfq}   \Big) .
\end{equation*}
If $\bP \not \ll \bQ$, then $\renyidiv{\alpha}(\bP \Vert \bQ) \to \infty$ for all $\alpha \geq 0$.
\end{definition}

\sloppy The Rényi divergences also share many valuable properties with the $f$-divergences, like the non-negativity and the data processing inequality~\cite[Theorems 8 and 9]{van2014renyi}. Moreover, it is non-decreasing with respect to the order $\alpha$, that is, $\renyidiv{\alpha} \leq \renyidiv{\alpha'}$ for all $0 \leq \alpha \leq \alpha'$.

The Rényi divergences have several interpretations. For example, let $X$ be a random object with some private information we wish to conceal. Consider a mechanism $\bM$ that processes the input $X$ and generates a sanitized version $Y$, and that can be described by the Markov kernel $\bP_Y^X$. This kind of mechanisms are referred to as \emph{privacy mechanisms}. The essential supremum of the Rényi divergence of order $\infty$ of the mechanism's channel with respect to the marginal distribution of the sanitized output is equivalent to the \emph{maximal leakage} $\cL(X \to Y)$ of $X$ into $Y$, that is
\begin{equation*}
	\cL(X \to Y) \coloneqq \esssup_{\bP_X} \renyidiv{\infty}(\bP_Y^X \Vert \bP_Y).
\end{equation*}
The maximal leakage describes the largest amount of information that the sanitized output $Y$ leaks about $X$ to adversaries seeking to guess arbitrary (possibly randomized) functions of $X$, or equivalently, aiming to maximize arbitrary gain functions~\cite{issa2020operational}. Similarly, a version describing the leakage of every \emph{realization} $y$ of the mechanism is the \emph{pointwise maximal leakage}, which is equal to the Rényi divergence of order $\infty$ of the Bayesian posterior of the input $\bP_X^{Y=y}$ with respect to its real distribution $\bP_X$, that is
\begin{equation*}
	\ell(X \to y) \coloneqq \renyidiv{\infty}(\bP_X^{Y=y} \Vert \bP_X).
\end{equation*}
The pointwise maximal leakage describes the largest amount of information that a realization $y$ leaks about $X$ to adversaries
seeking to guess arbitrary (possibly randomized) functions of $X$, or equivalently, aiming to maximize arbitrary gain functions~\cite{saeidian2023apointwise,saeidian2023bpointwise}.

Now consider that $X$ is a dataset with $n$ entries, each corresponding to a person, that is, $\cX = (\cX')^n$ for some space $\cX'$. Further consider that the private information in $X$ is the membership of each person to the dataset. The Rényi divergences are also related to the amount of information that an adversary can obtain about each individual as described by the \emph{differential privacy} framework~\cite{dwork2006calibrating, dwork2014algorithmic}. More precisely, an algorithm is $\varepsilon$-differentially private if the Rényi divergence of order infinity of the output of the mechanism with the dataset $x$ with respect to the output with the dataset $x'$ is smaller than $\varepsilon$, that is, $\renyidiv{\infty}(\bP_Y^{X=x} \Vert \bP_Y^{X=x'}) \leq \varepsilon$ for every two datasets $x,x' \in \cX$ differing only in one element. The $\varepsilon$ parameter quantifies the amount of information obtainable about each individual~\cite{balle2020hypothesis}. Furthermore, this definition can be extended to Rényi divergences with other orders to simplify the practical calculations of the mechanisms' guarantees~\cite{dwork2016concentrated, mironov2017renyi, bun2016concentrated, balle2020hypothesis}. 

From an information-theoretic perspective, the Rényi divergence of order $\infty$ has a similar interpretation to the relative entropy, but in the worst case sense. More precisely, if we constructed a code for a discrete random variable $X$ assuming it has a distribution $\bQ$, when in reality it has a distribution $\bP_X = \bP$, then we would need at least $\renyidiv{\infty}(\bP \Vert \bQ)$ more bits to describe the said random variable than if we would have considered the real distribution $\bP$~\cite[Chapter 6]{grunwald2007minimum}.

\section{Miscellaneous}
\label{sec:miscellaneous}

This section compiles the treatment of miscellaneous topics that will be useful to understand the rest of the manuscript. These topics belong to fields that do not directly intersect with the development of the text and do not require a separate section. More precisely, an understanding of the convex conjugate and the Wasserstein distance is necessary to follow some of the arguments henceforth, but a primer on convex (or even functional) optimization or on optimal transport would be excessive for this manuscript. 

\subsection{Convex Conjugate}
\label{subsec:convex_conjugate}

The convex conjugate is an important concept in the fields of convex (and functional) analysis and in optimization theory. As mentioned above, we will not cover these fields here but the interested reader is referred to the books~\cite{boyd2004convex, hiriart2004fundamentals}.

\begin{definition}
\label{def:convex_conjugate}
The \emph{convex conjugate} (or just \emph{conjugate} or \emph{Fenchel--Legendre}'s dual) of a function $f : \bR \to \bR$ is defined as
\begin{equation}
	\label{eq:convex_conjugate}
	f_*(y) \coloneqq \sup_{x \in \mathrm{dom}(f)} \big\{ x y - f(x) \big\}.
\end{equation}
\end{definition}

The convex conjugate $f_*$ of a function $f$ is convex, regardless if $f$ is convex or not. The \emph{Fenchel--Young inequality} follows immediately from the definition, and it states that for all $x, y\in \bR$,
\begin{equation*}
	f_*(y) + f(x) \geq xy.
\end{equation*}

When the function $f$ is convex and differentiable, the convex conjugate is also known as the \emph{Legendre's transform}.\footnote{There are different terminologies for this restriction: some state that it restricts to differentiable functions~\cite[Section 3.2.2]{boyd2004convex}, some to differentiable functions with an invertible derivative~\cite[Remark 0.1]{hiriart2004fundamentals}} %
More specifically, when the function $f$ represents or dominates a CGF, then the convex conjugate is known as the \emph{Cramér's transform}. In this manuscript, we will primarily deal with the inverse of the convex conjugate of CGFs. A particularly useful result in this scenario states an expression of the inverse of the convex conjugate of a smooth convex function. This result is often used to obtain the classical \emph{Chernoff's inequality} via the \emph{Cramér--Chernoff} method~\cite[Sections 2.2. and 2.3]{boucheron2003concentration}.

\begin{lemma}[\!\!{\cite[Lemma 2.4]{boucheron2003concentration}}]
\label{lemma:boucheron_convex_conjugate_inverse}
Let $f$ be a convex and continuously differentiable function defined on $[0,b)$, where $0 < b \leq \infty$. Assume that $f(0) = f'(0) = 0$. Then, the convex conjugate $f_*$ is a non-negative, convex, and non-decreasing function on $[0, \infty)$. Moreover, for every $z \geq 0$, the set $\{ y \geq 0: f_*(y) > z \}$ is non-empty and the generalized inverse of $f_*$, defined as $f_*^{-1}(z) \coloneqq \inf \{ y \geq 0 : f_*(y) > z \}$ is concave and can be also written as
\begin{equation*}
	f_*^{-1}(z) = \inf_{x \in (0,b)} \bigg \{ \frac{z + f(x)}{x} \bigg\}.
\end{equation*}
\end{lemma}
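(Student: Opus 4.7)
The plan is to break the proof into three stages: basic properties of $f_*$ on $[0,\infty)$, non-emptiness of the super-level sets, and the alternative formula for $f_*^{-1}$ (from which concavity will follow for free). The first stage reads directly off the definition $f_*(y) = \sup_{x \in [0,b)} \{xy - f(x)\}$: non-negativity follows by substituting $x = 0$ and using $f(0) = 0$; convexity holds because $f_*$ is a pointwise supremum of affine functions of $y$; and monotonicity on $[0,\infty)$ follows because for each fixed $x \geq 0$ the map $y \mapsto xy - f(x)$ is non-decreasing, and this property survives the supremum.

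For the second stage, to show non-emptiness of $\{y \geq 0 : f_*(y) > z\}$ for all $z \geq 0$, I would fix any $x_0 \in (0,b)$; then $f_*(y) \geq x_0 y - f(x_0)$ for all $y$, and the right-hand side tends to $\infty$ as $y \to \infty$, which produces a $y$ with $f_*(y) > z$ for every $z \geq 0$.

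The third and main stage is to establish the equivalence $f_*(y) > z \iff \exists\, x \in (0,b) : y > (z + f(x))/x$. The forward direction uses that a supremum strictly exceeds $z$ iff some term in it does, together with the observation that $x = 0$ contributes value $0 \leq z$ (since $z \geq 0$) and can be discarded; any witness $x \in (0,b)$ of $xy - f(x) > z$ rearranges to $y > (z + f(x))/x$. The converse is immediate. Hence $\{y \geq 0 : f_*(y) > z\}$ equals $\bigcup_{x \in (0,b)} ((z + f(x))/x, \infty) \cap [0,\infty)$, and a routine $\varepsilon$-argument --- any $y$ strictly above $B := \inf_{x \in (0,b)} (z + f(x))/x$ lies in the union, and any element of the union exceeds $B$ --- shows $f_*^{-1}(z) = B$. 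Note that $f \geq 0$ on $[0,b)$ (a convex function whose minimum is $f(0) = 0$ forced by $f'(0) = 0$), so every $(z + f(x))/x$ is non-negative, which reconciles the infimum over $y \geq 0$ with the unconstrained infimum over $x$. Concavity of $f_*^{-1}$ is then automatic: for each fixed $x$, the map $z \mapsto (z + f(x))/x = z/x + f(x)/x$ is affine, and an infimum of affine functions is concave.

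The only real subtlety is in the third stage: one has to carefully use $z \geq 0$ to exclude the $x = 0$ corner when converting the sup inequality into an existential statement, and then argue cleanly that passing to infima over the two descriptions of the same set yields the claimed closed form. Once this equivalence is in place, concavity and the explicit representation drop out simultaneously from the same derivation, so no further ideas are required.
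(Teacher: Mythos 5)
Your proof is correct; the paper itself does not prove this lemma but only cites it from Boucheron, Lugosi, and Massart, and your argument is essentially the standard one given there (reading off non-negativity, convexity and monotonicity from the supremum definition, then converting $f_*(y)>z$ into the existence of $x\in(0,b)$ with $y>(z+f(x))/x$, which yields both the closed form and concavity as an infimum of affine functions). The only points needing care — discarding the $x=0$ term using $z\geq 0$, and using $f\geq 0$ (from convexity with $f(0)=f'(0)=0$) to reconcile the constraint $y\geq 0$ with the unconstrained infimum — are handled correctly in your write-up.
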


\subsection{Wasserstein Distance}
\label{subsec:wasserstein_distance}

The Wasserstein distance was introduced by Kantorovich as a generalization of the ``déblais'' and ``remblais'' problem from Gaspard Monge. This problem assumes that there is an amount of soil to be extracted from the ground (``déblai'') and transported to places where it should be incorporated in a construction (``remblai''). The soil's extraction and placement locations were known from the start. The goal was to find the optimal assignment of each piece of soil from an extraction $x$ to a placement $y$ location (or \emph{optimal transport}) taking into account that the transport is costly. More precisely, he considered that the cost to transport a mass $m$ for a distance $d(x,y)$ amounted to $d(x,y) \ m$. 

In this generalization from Kantorovich, the positions of the ``deblai'' and ``remblai'' are described by two distributions $\bP$ and $\bQ$ on a Polish space $(\cX, \rho)$. Larger values of the distribution on a subset of $\cX$ represent larger amounts of soil in a position. The cost to transport pieces of soil from an extraction $x$ to a placement $y$ position is represented by the separation of the positions given by the metric $\rho$ and the density of these elements with respect to a coupling $\mu$ between the ``déblai'' $\bP$ and ``remblai'' $\bQ$ distributions, that is, $\rho(x,y) \rmd \mu(x,y)$. The field of optimal transport is wide and well-studied, and a further discussion about it is outside of the scope of this manuscript. For a further introduction to the field and its history, we kindly refer the reader to the book~\cite{villani2009optimal}.

\begin{definition}
\label{def:wasserstein_distance}
Let $(\cX,\rho)$ be a Polish space and let $p \in [1, \infty)$. Then, the \emph{Wasserstein distance} of order $p$ with respect to the metric $\rho$ between two probability distributions $\bP$ and $\bQ$ on $\cX$ is 
\begin{equation*}
	\bW_{p,\rho}(\bP, \bQ) \coloneqq \Big( \inf_{\mu \in \Pi(\bP, \bQ)} \bE_{(x,y) \sim \mu} \big[ \rho(x,y)^p \big] \Big)^{\frac{1}{p}}.
\end{equation*}
\end{definition}

Hölder's inequality implies that the Wasserstein distance is non-decreasing with respect to the order, that is, $\bW_{p,\rho} \leq \bW_{q, \rho}$ for all $p \leq q$ \cite[Remark 6.6]{villani2009optimal}. In this manuscript, we mainly use the Wasserstein distance of order 1, also known as the \emph{Kantorovich--Rubinstein distance}. Therefore, in the interest of brevity, we define $\bW_\rho \coloneqq \bW_{1,\rho}$. To be precise, the Wasserstein distance is not a distance in $\cP(\cX)$ as there are distributions $\bP, \bQ \in \cP(\cX)$ such that $\bW_\rho(\bP, \bQ) \to \infty$. This is solved by working on the Wasserstein space
\begin{equation*}
	\cP_{\rho}(\cX) \coloneqq \{ \bP \in \cP(\cX) : \bE_{x \sim \cP} \big[ \rho(x,y) \big] < \infty \textnormal{ for all } y \in \cX \}.
\end{equation*}

Similarly to the relative entropy and other $f$-divergences, the Wasserstein distance also enjoys a variational representation that will be employed later in the manuscript. This result comes from the \emph{Kantorovich--Rubinstein} duality, which states that the minimization over couplings in the Wasserstein distance definition is equivalent to a maximization over \emph{Lipschitz continuous} functions.
Intuitively, a function $f$ cannot change rapidly if it is Lipschitz continuous. That is, the distance between the images $f(x)$ and $f(y)$ of two points $x$ and $y$ is bounded by how separated these points are with respect to the metric $\rho$.

\begin{definition}
\label{def:lipschitz}
A function $f: \cX \to \bR$ is \emph{$L$-Lipschitz continuous} with respect to the metric $\rho$, or simply $f \in L\textnormal{-}\mathrm{Lip}(\rho)$, if $|f(x) - f(y)| \leq L \rho(x,y)$ for all $x, y \in \cX$.
\end{definition}

\begin{lemma}[{Kantorovich--Rubinstein duality~\cite[Remark 6.5]{villani2009optimal}}]
\label{lemma:kantorovich_rubinstein_duality}
Let $\bP$ and $\bQ$ be two probability distributions in $\cP_\rho(\cX)$. Then,
\begin{equation}
\label{eq:kantorovich_rubinstein_duality}
	\bW_\rho(\bP, \bQ) \coloneqq \sup_{f \in 1\textnormal{-}\mathrm{Lip}(\rho)} \Big \{ \bE_{x \sim \bP} \big[ f(x) \big] - \bE_{x \sim \bQ} \big[ f(x) \big] \Big\}.
\end{equation}
\end{lemma}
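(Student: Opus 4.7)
The plan is to prove both inequalities in \Cref{eq:kantorovich_rubinstein_duality}. The easy direction is $\leq$ on the right-hand side (i.e.\ the sup bounds the Wasserstein distance from below). For any coupling $\mu \in \Pi(\bP, \bQ)$ and any $f \in 1\textnormal{-}\mathrm{Lip}(\rho)$, I compute
\begin{equation*}
    \bE_{x \sim \bP}[f(x)] - \bE_{x \sim \bQ}[f(x)] = \int \bigl(f(x) - f(y)\bigr) \rmd \mu(x,y) \leq \int \rho(x,y) \rmd \mu(x,y),
\end{equation*}
where the first equality uses that $\bP$ and $\bQ$ are the marginals of $\mu$ and the inequality uses the Lipschitz property. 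Taking the supremum over $f$ on the left and the infimum over $\mu$ on the right yields the easy direction. The assumption $\bP, \bQ \in \cP_\rho(\cX)$ ensures the expectations are finite, since any 1-Lipschitz $f$ satisfies $|f(x)| \leq |f(x_0)| + \rho(x, x_0)$ for a fixed $x_0 \in \cX$.

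For the reverse direction, my plan is to invoke \emph{Kantorovich duality}, which states that the primal problem $\inf_{\mu \in \Pi(\bP,\bQ)} \int \rho \rmd\mu$ equals the dual problem
\begin{equation*}
    \sup_{(\varphi, \psi) : \varphi(x) + \psi(y) \leq \rho(x,y)} \Bigl\{ \bE_{x \sim \bP}[\varphi(x)] + \bE_{y \sim \bQ}[\psi(y)] \Bigr\},
\end{equation*}
with the supremum over integrable functions. This strong duality can be established by a Fenchel--Rockafellar argument on the space of bounded continuous functions (or by a Hahn--Banach separation argument on the cone of admissible transport plans). Once duality is in hand, I reduce the dual problem using the $c$-transform. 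Given any dual-feasible pair $(\varphi, \psi)$, define $\varphi^c(y) \coloneqq \inf_{x \in \cX} \{ \rho(x,y) - \varphi(x) \}$; feasibility gives $\psi \leq \varphi^c$, so $(\varphi, \varphi^c)$ is also feasible and has at least as large dual value.

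The key observation is that when the cost is a metric $\rho$, the $c$-transform collapses nicely. Since $y \mapsto \rho(x,y)$ is 1-Lipschitz for every $x$, the infimum $\varphi^c$ is also 1-Lipschitz. Moreover, if $\varphi$ itself is 1-Lipschitz, then setting $x = y$ yields $\varphi^c(y) \leq -\varphi(y)$, and the Lipschitz bound $\varphi(x) \leq \varphi(y) + \rho(x,y)$ yields $\rho(x,y) - \varphi(x) \geq -\varphi(y)$, hence $\varphi^c = -\varphi$. Applying the $c$-transform twice (i.e., replacing $\varphi$ by $\varphi^{cc}$, which is 1-Lipschitz and dominates $\varphi$ pointwise so improves the $\bP$-expectation term), I may restrict the dual optimization to pairs of the form $(f, -f)$ with $f \in 1\textnormal{-}\mathrm{Lip}(\rho)$, yielding exactly the right-hand side of \Cref{eq:kantorovich_rubinstein_duality}.

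The main obstacle is establishing strong duality rigorously in the setting of a Polish space, where $\cX$ may be noncompact and the test functions need not be bounded. Standard approaches handle this by first restricting to bounded continuous functions (where Fenchel--Rockafellar applies cleanly) and then passing to the limit using the integrability afforded by $\bP, \bQ \in \cP_\rho(\cX)$, together with a truncation argument $f_N \coloneqq (f \wedge N) \vee (-N)$ on the Lipschitz side. Rather than redoing this classical machinery here, I would simply cite~\cite[Theorem~5.10 and Particular Case~5.16]{villani2009optimal} for the duality and focus the exposition on the $c$-transform reduction, which is the content that is most specific to the Wasserstein-1 case.
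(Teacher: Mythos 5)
Your argument is correct, but note that the paper itself offers no proof of this lemma at all: it is stated as an imported classical result with a pointer to \citep[Remark~6.5]{villani2009optimal}, and everything downstream simply uses it as a black box. Your write-up is the standard textbook derivation, and both halves check out: the easy inequality via an arbitrary coupling plus the Lipschitz bound (with integrability of $f$ under $\bP,\bQ\in\cP_\rho(\cX)$ secured by $|f(x)|\leq|f(x_0)|+\rho(x,x_0)$), and the hard inequality by citing general Kantorovich duality and then collapsing the dual via the $c$-transform: $\varphi^c$ is an infimum of $1$-Lipschitz functions of $y$, hence $1$-Lipschitz (and finite everywhere, since feasibility gives $\varphi^c\geq\psi$ at some point where $\psi$ is finite), and a second transform yields a pair of the form $(f,-f)$ with $f\in 1\textnormal{-}\mathrm{Lip}(\rho)$ that weakly improves the dual value. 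Since you, like the paper, ultimately lean on \citep[Theorem~5.10, Particular Case~5.16]{villani2009optimal} for strong duality on a Polish space, the two routes rest on the same external machinery; what your version buys is an explicit, self-contained account of the metric-cost reduction, which is the only part specific to the order-$1$ Wasserstein distance, at the cost of reproving something the paper deliberately treats as background.
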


This lemma can also be written considering two random variables $X$ and $Y$ distributed according to $\bP$ and $\bQ$ in $\cP_\rho(\cX)$. Then,
\begin{equation}
\label{eq:kantorovich_rubinstein_duality_rv}
	\bW_\rho(\bP, \bQ) \coloneqq \sup_{f \in 1\textnormal{-}\mathrm{Lip}(\rho)} \Big \{ \bE \big[ f(X) \big] - \bE \big[ f(Y) \big] \Big\}.
\end{equation}

From~\Cref{lemma:kantorovich_rubinstein_duality} we can see that the total variation is a Wasserstein distance with respect to the discrete metric $\rho_\rmH(x,y) = \bI_{\{x \neq y\}} (x,y)$. Only bounded functions with range 1 are $1$-Lipschitz with respect to the discrete metric, and the function that maximizes the variational representation~\eqref{eq:kantorovich_rubinstein_duality} is $\bI_\cA$ for some subset $\cA \subseteq \cX$, which recovers the deviation of the total variation. 
In fact, the Wasserstein distance is generally dominated by the total variation, as shown below.

\begin{proposition}[{\cite[Theorem 6.15 and discussion thereafter]{villani2009optimal}}]
\label{prop:wasserstein_and_total_variation}
The Wasserstein distance of order 1 is dominated by the total variation. That is, if $\bP$ and $\bQ$ are two distributions on $\cX$, then $\bW_\rho(\bP, \bQ) \leq \diam_\rho(\cX) \tv(\bP , \bQ)$, where $\diam_\rho(\cX) \coloneqq \sup \{ \rho(x,y) : x,y \in \cX \} $ is the diameter of $\cX$. In particular, the inequality holds with equality with respect to the discrete metric $\rho_\rmH$, that is, $\bW_{\rho_\rmH}(\bP, \bQ) = \tv(\bP , \bQ)$.
\end{proposition}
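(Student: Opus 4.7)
The plan is to derive both claims from the Kantorovich--Rubinstein duality (\Cref{lemma:kantorovich_rubinstein_duality}), using that any $1$-Lipschitz function has oscillation bounded by the diameter of $\cX$. The two characterizations of total variation already at hand---the event-supremum form $\tv(\bP , \bQ) = \sup_{\cA \subseteq \cX}\{\bP[\cA] - \bQ[\cA]\}$ and the density form $\tv(\bP , \bQ) = \tfrac{1}{2}\int |\sfp - \sfq|\,\rmd\mu$ for any common dominating $\mu$---will then furnish the upper and lower bounds on $\bW_\rho$, respectively.

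For the upper bound, I would start from an arbitrary $f \in 1\textnormal{-}\mathrm{Lip}(\rho)$ and note that $|f(x) - f(y)| \leq \rho(x,y) \leq D \coloneqq \diam_\rho(\cX)$ for all $x, y \in \cX$, so $\sup_\cX f - \inf_\cX f \leq D$. Setting $c \coloneqq (\sup_\cX f + \inf_\cX f)/2$, the shifted function $g \coloneqq f - c$ satisfies $\|g\|_\infty \leq D/2$ while leaving the quantity $\bE_{x \sim \bP}[f(x)] - \bE_{x \sim \bQ}[f(x)]$ unchanged. The core estimate is then
\begin{equation*}
\bE_{x \sim \bP}[g(x)] - \bE_{x \sim \bQ}[g(x)] = \int g\,(\sfp - \sfq)\,\rmd\mu \leq \|g\|_\infty \int |\sfp - \sfq|\,\rmd\mu = 2\|g\|_\infty\,\tv(\bP , \bQ),
\end{equation*}
where I take $\mu \coloneqq \bP + \bQ$ so that both densities exist by \Cref{th:radon-nikodym} and invoke the density form of $\tv$. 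Combining with $\|g\|_\infty \leq D/2$ and taking the supremum over $f$ via Kantorovich--Rubinstein gives $\bW_\rho(\bP,\bQ) \leq D\,\tv(\bP , \bQ)$.

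For the equality with $\rho_\rmH$, the inequality ``$\leq$'' is immediate from the first part since $\diam_{\rho_\rmH}(\cX) \leq 1$ (and both sides vanish when $|\cX| = 1$). For the reverse direction, for any measurable $\cA \subseteq \cX$ the indicator $\bI_\cA$ is $1$-Lipschitz with respect to $\rho_\rmH$, since $|\bI_\cA(x) - \bI_\cA(y)| = 0$ whenever $x = y$ and is at most $1 = \rho_\rmH(x,y)$ otherwise. Invoking Kantorovich--Rubinstein with $f = \bI_\cA$ yields $\bW_{\rho_\rmH}(\bP, \bQ) \geq \bP[\cA] - \bQ[\cA]$; taking the supremum over $\cA$ and applying the event-supremum form of $\tv$ recovers $\bW_{\rho_\rmH}(\bP, \bQ) \geq \tv(\bP , \bQ)$, matching the upper bound. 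The only mildly delicate step in the argument is the density-form estimate $\int g\,(\sfp - \sfq)\,\rmd\mu \leq \|g\|_\infty \int |\sfp - \sfq|\,\rmd\mu$, which is nothing more than the elementary triangle-type bound for integration against the signed measure $\bP - \bQ$.
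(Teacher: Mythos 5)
Your proof is correct. Note, however, that the paper does not prove this proposition at all: it simply cites \citet[Theorem 6.15]{villani2009optimal}, whose standard argument is primal rather than dual. Villani's route takes a coupling $\mu \in \Pi(\bP,\bQ)$ attaining (or approaching) $\bP_{X,Y}[X \neq Y] = \tv(\bP,\bQ)$ and bounds $\bE_\mu[\rho(X,Y)] \leq \diam_\rho(\cX)\, \bP_{X,Y}[X \neq Y]$, with equality for $\rho_\rmH$ because $\rho_\rmH(X,Y) = \bI_{\{X \neq Y\}}$; this has the side benefit of directly yielding the probabilistic identity $\tv(\bP,\bQ) = \inf_{\Pi(\bP,\bQ)} \bP_{X,Y}[X \neq Y]$ that the paper quotes immediately after the proposition. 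Your route instead goes through the Kantorovich--Rubinstein duality already stated as \Cref{lemma:kantorovich_rubinstein_duality}: the centering trick $g = f - (\sup f + \inf f)/2$ with $\lVert g \rVert_\infty \leq \diam_\rho(\cX)/2$, the density form of $\tv$, and indicators as $1$-Lipschitz functions for $\rho_\rmH$ — which is essentially the informal remark the paper makes just before the proposition, now made rigorous, and it avoids invoking existence of a maximal coupling. Two small caveats worth stating explicitly: the centering step presupposes $\diam_\rho(\cX) < \infty$ (when the diameter is infinite the claimed inequality is trivial, modulo the degenerate case $\bP = \bQ$ where both sides are zero), and \Cref{lemma:kantorovich_rubinstein_duality} is stated for $\bP,\bQ \in \cP_\rho(\cX)$, a condition that holds automatically once the diameter is finite, so your use of it is legitimate.
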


With this relationship between the Wasserstein distance and the total variation, we can recover another probabilistic interpretation of the total variation.
Consider two distributions $\bP$ and $\bQ$ and their couplings $\Pi(\bP, \bQ)$. The total variation is the smallest probability that elements drawn from the joint distribution of a coupling are different. That is, 
\begin{equation*}
	\tv(\bP, \bQ) = \inf_{\bP_{X,Y} \in \Pi(\bP, \bQ)} \bP_{X,Y} \big[ X \neq Y \big].
\end{equation*}

Finally, we conclude with an extension of the Bobkov--Götze's theorem~\citep[Theorem~4.8]{van2014probability} relating the Wasserstein distance with the relative entropy. This result will be useful later in~\Cref{ch:expected_generalization_error}.

\begin{lemma}[{Extension of the Bobkov--Götze's theorem~\citep[Theorem~4.8]{van2014probability}}]
\label{lemma:bobkov_gotze}
    Consider a Polish space $(\cX, \rho)$ and a probability distribution $\bP$ on $\cX$ with a finite first moment. Let $X$ be a random object distributed according to $\bP$. Then, the following two are equivalent:
    \begin{enumerate}
        \item \emph{For every} 1-Lipschitz function $f$, the CGF $\Lambda_{f(X)}(\lambda)$ is bounded from above by a function $\psi(\lambda) < \infty$ for all $\lambda \in [0,b)$ and some $b \in \bR_+$, where the function $\psi$ is convex, continuously differentiable, and $\psi(0) = \psi'(0) = 0$.
        \item $\bW_\rho(\bQ, \bP) \leq \psi_*^{-1} \mleft( \relent(\bQ \Vert \bQ )\mright)$ for all distributions $\bQ$ on $\cX$.    
    \end{enumerate}
\end{lemma}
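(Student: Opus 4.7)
The plan is to prove the equivalence by establishing each direction via the appropriate variational representation: Donsker--Varadhan from~\Cref{lemma:dv_and_gvp} for the forward implication and the Gibbs variational principle for the reverse, combined with Kantorovich--Rubinstein duality (\Cref{lemma:kantorovich_rubinstein_duality}) and the inverse-of-conjugate formula from~\Cref{lemma:boucheron_convex_conjugate_inverse} (interpreting the $\relent(\bQ \Vert \bQ)$ in item~2 as the intended $\relent(\bQ \Vert \bP)$).

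For $(1) \Rightarrow (2)$, I would fix a distribution $\bQ$ with $\bQ \ll \bP$ (otherwise the right-hand side is infinite) and a 1-Lipschitz function $f$. Applied to $g = \lambda (f - \bE_\bP[f])$ for $\lambda \in [0,b)$, Donsker--Varadhan gives
\begin{equation*}
    \relent(\bQ \Vert \bP) \geq \lambda \big( \bE_\bQ[f] - \bE_\bP[f] \big) - \Lambda_{f(X)}(\lambda) \geq \lambda \big( \bE_\bQ[f] - \bE_\bP[f] \big) - \psi(\lambda),
\end{equation*}
using assumption~(1). Rearranging, dividing by $\lambda$, and taking the supremum over $f \in 1\text{-}\mathrm{Lip}(\rho)$ on the left (which equals $\bW_\rho(\bQ, \bP)$ by \Cref{lemma:kantorovich_rubinstein_duality}) and the infimum over $\lambda \in (0,b)$ on the right yields
\begin{equation*}
    \bW_\rho(\bQ, \bP) \leq \inf_{\lambda \in (0,b)} \frac{\relent(\bQ \Vert \bP) + \psi(\lambda)}{\lambda} = \psi_*^{-1}\big( \relent(\bQ \Vert \bP) \big),
\end{equation*}
where the final equality is precisely the expression of the generalized inverse provided by~\Cref{lemma:boucheron_convex_conjugate_inverse} (whose hypotheses on $\psi$ hold by assumption).

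For $(2) \Rightarrow (1)$, I would fix a 1-Lipschitz $f$ and apply the Gibbs variational principle to $g = \lambda(f - \bE_\bP[f])$, obtaining
\begin{equation*}
    \Lambda_{f(X)}(\lambda) = \sup_{\bQ \ll \bP} \big\{ \lambda (\bE_\bQ[f] - \bE_\bP[f]) - \relent(\bQ \Vert \bP) \big\}.
\end{equation*}
Since $f$ is 1-Lipschitz, $\bE_\bQ[f] - \bE_\bP[f] \leq \bW_\rho(\bQ, \bP) \leq \psi_*^{-1}(\relent(\bQ \Vert \bP))$ by hypothesis, so
\begin{equation*}
    \Lambda_{f(X)}(\lambda) \leq \sup_{t \geq 0} \big\{ \lambda \psi_*^{-1}(t) - t \big\}.
\end{equation*}
It remains to show that this supremum equals $\psi(\lambda)$; intuitively this is Fenchel duality applied to the pair $(\psi, \psi_*)$, but it must be justified using only the generalized inverse of~\Cref{lemma:boucheron_convex_conjugate_inverse}. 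The clean way is to invoke that same lemma's formula in the opposite direction: for any $t \geq 0$ and any $\mu \in (0,b)$, $\psi_*^{-1}(t) \leq (t + \psi(\mu))/\mu$, and choosing $\mu = \lambda$ gives $\lambda \psi_*^{-1}(t) - t \leq \psi(\lambda)$ uniformly in $t$, so the supremum is $\leq \psi(\lambda)$ as needed.

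The main obstacle I anticipate lies in the second direction, specifically in handling the generalized inverse $\psi_*^{-1}$ rigorously when $\psi_*$ is not strictly increasing or not surjective: the textbook Fenchel--Young identity does not directly apply, and one has to invoke the characterization of $\psi_*^{-1}$ from~\Cref{lemma:boucheron_convex_conjugate_inverse} as an infimum over $\lambda$ to obtain the required pointwise bound $\lambda \psi_*^{-1}(t) - t \leq \psi(\lambda)$. Everything else is a fairly mechanical combination of the three variational principles already assembled in the preliminaries. I would also briefly verify, as a sanity check, that restricting $\psi(\lambda) = \sigma^2 \lambda^2/2$ recovers the classical Bobkov--Götze inequality $\bW_\rho(\bQ,\bP) \leq \sqrt{2\sigma^2 \relent(\bQ\Vert\bP)}$, since in that case $\psi_*^{-1}(z) = \sqrt{2\sigma^2 z}$.
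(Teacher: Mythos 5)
Your proof is correct and follows essentially the same route as the paper's: both rest on the Donsker--Varadhan/Gibbs variational principle of \Cref{lemma:dv_and_gvp}, the Kantorovich--Rubinstein duality of \Cref{lemma:kantorovich_rubinstein_duality}, and the convex-conjugate machinery of \Cref{lemma:boucheron_convex_conjugate_inverse} (with the same reading of the typo $\relent(\bQ \Vert \bQ)$ as $\relent(\bQ \Vert \bP)$). The only difference is organizational: the paper runs a single chain of equivalences, exchanging the suprema over $\lambda$, $f$, and $\bQ$ and then inverting $\psi_*$ at the end, whereas you prove the two implications separately and use the infimum representation $\psi_*^{-1}(z)=\inf_{\lambda\in(0,b)}\{(z+\psi(\lambda))/\lambda\}$ in both directions, which treats the generalized inverse at least as carefully as the paper does.
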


This extension follows in a straightforward manner from the proof given in~\citep[Section 4.1]{van2014probability}. The first property of the lemma states that for all $\lambda \in [0,b)$ and all $f \in \mathrm{1-Lip}(\rho)$
\begin{equation*}
    \log \bE \mleft[ e^{\lambda(X - \bE[X])} \mright] \leq \psi(\lambda) .
\end{equation*}

From the Gibbs variational principle of~\Cref{lemma:dv_and_gvp}, this is equivalent to 
\begin{equation*}
    \sup_{\lambda \in [0,b]} \sup_{f \in \mathrm{1-Lip}(\rho)} \sup_{\bQ \in \bP_\cP(\cX)} \mleft \{ \lambda \big( \bE[f(X)] - \bE_{x \sim \bQ} [f(x)] \big) - \relent(\bQ \Vert \bP) - \psi(\lambda)  \mright\} \leq 0.
\end{equation*}

Exchanging the order of the suprema and evaluating the suprema over $\lambda$ and $f$ explicitly using the definition of the convex conjugate (\Cref{def:convex_conjugate}) and the Kantorovich--Rubinstein duality of~\Cref{lemma:kantorovich_rubinstein_duality} respectively yields that the above expression is equivalent to
\begin{equation*}
    \sup_{\bQ \in \bP_\cP(\cX)} \mleft \{ \psi_* \mleft( \bW_\rho(\bP, \bQ) \mright) - \relent(\bQ \Vert \bP) \mright \} \leq 0 ,
\end{equation*}
which is a reformulation of the second property in the lemma noting that $\psi_*^{-1}$ is the generalized inverse of $\psi_*$.

\chapter{A Primer on Generalization Theory}
\label{ch:generalization_theory}

In this chapter, we introduce the notation and definitions related to generalization theory. We also put into context the framework of information-theoretic generalization with respect to other frameworks and motivate its adoption. 

In \Cref{sec:formal_model_learning}, we start describing the notation commonly employed in learning theory. After that, in \Cref{sec:pac_learning,sec:rademacher_complexity}, we review classical approaches to generalization based on the study of the complexity of the hypothesis class. These sections include results based on the uniform convergence (or Glivenko--Cantelli) property of the class, and its Rademacher complexity. These frameworks provide the whole hypothesis class with generalization guarantees, which often is a strong requirement. This leads us to the introduction of the minimum description length (MDL) principle and the associated parsimonious philosophical postulate of the Occam's razor in \Cref{sec:mdl_and_occams_razor}. 

The MDL principle provides guarantees for every hypothesis in a finite class, but these guarantees do not explicitly take into account the algorithm that returns the said hypothesis. Hence, we discuss algorithmic stability next in \Cref{sec:algorithmic_stability}, which gives guarantees for algorithms with a certain stability property. In particular, we first discuss the uniform stability framework, as it is very successful in providing deterministic algorithms with generalization guarantees. After that, we also discuss how privacy can be interpreted as a stability property of an algorithm, and therefore be linked to generalization. We conclude the section by discussing how one can employ different information measures to define an algorithm's stability. In this last subsection, we note that the ``privacy as stability'' interpretation can also be formalized in terms of information measures and how this particular way of understanding stability gives guarantees to particular algorithms and particular classes of data distributions. 

Equipped with this knowledge, in \Cref{sec:information_theoretic_generalization}, we describe the information-theoretic framework for generalization. This framework can be seen as encompassing the framework of algorithmic stability with information measures and, therefore, also takes into account both the algorithm and the data distribution. Therefore, the framework relaxes the requirements for the guarantees, and they do not need to hold for every hypothesis in a class and every data distribution. This allows the information-theoretic framework for generalization to yield tighter guarantees than previous frameworks, although at the price of being more individualized, and it motivates its adoption in this manuscript. We conclude this section with a description of the different kinds of information-theoretic generalization guarantees based on their specificity level: either guarantees in expectation or in probability (both PAC-Bayesian or single-draw PAC-Bayesian). 

Finally, in~\Cref{sec:parameterized_models}, the chapter concludes with a justification of why, for parameterized models, one can express the results based on the parameters of the models instead of their resulting hypothesis. In particular, this justification is important for this manuscript as the results hereafter will be presented in this form.

The intention of this chapter is to give a brief introduction to the field of generalization theory and the notation that we will employ to refer to the topics within the field. We also wanted to discuss the different theoretical frameworks to study generalization, along with their trade-offs, in order to clarify our reasoning to study the information-theoretic generalization framework and how we position it with respect to other frameworks. Experts on the topic may choose to skip this chapter. More novice readers can benefit from the digestion of the concepts and the intuitions given to better understand the following chapters and to get a taste of the field and the relevant literature before delving deeper into it.

\section{A Formal Model of Learning}
\label{sec:formal_model_learning}

Consider a problem we want to solve and a set of hypotheses $\cH$ to solve the problem. The \emph{instances} (or \emph{examples}) $z$ of the problem lie in a space $\cZ$ and are distributed according to a distribution $\bP_Z$. The performance of a hypothesis $h \in \cH$ on an instance $z \in \cZ$ can be evaluated with a \emph{loss function} $\ell: \cH \times \cZ \to \bR_+$. Larger values of the loss function denote a worse performance and a value of zero means perfect performance. To solve the problem, we want to find a hypothesis $h$ that minimizes the \emph{population risk} $\ccR(h)$. The population risk is defined as the expected value of the loss of the hypothesis $h$ on instances of the problem, namely
\begin{equation*}
	\poprisk(h) \coloneqq \bE \big[ \ell(h, Z) \big].
\end{equation*}
For example, a common type of problems are \emph{supervised learning} problems. Here, the instances $z = (x,y)$ are a tuple formed by a \emph{feature} and a \emph{label} (or \emph{target}), and the hypotheses $h : \cX \to \cY$ are functions that return a label when given a feature.  Then, the loss functions are often of the type $\ell(h, z) = \rho(h(x), y)$, where $\rho$ is a measure of dissimilarity between the predicted $h(x)$ and true $y$ label associated to the feature $x$. Traditionally, a large volume of the theory has focused on \emph{classification tasks} where the labels' set consists of $k$ classes $\cY = [k]$ and the loss function is the 0--1 loss $\ell(h, z) = \bI_{\cE_h}(x,y)$, where $\cE_h = \{ (x,y) \in \cX \times \cY : h(x) \neq y \}$.

In a \emph{learning problem}, we assume that we have access to a sequence of $n$ data instances $s \coloneqq (z_1, \cdots, z_n) \in \cZ^n$, or \emph{training set}. Usually, we assume that these instances are all independent and identically distributed (i.i.d.); that is, they are sampled from $\bP_Z^{\otimes n}$. Unless we explicitly state it otherwise, this will also be assumed throughout the manuscript. Then, a \emph{learning algorithm} $\bA$ is a (possibly randomized) mechanism that generates a hypothesis $H$ of the solution of a problem given a training set $s$. The algorithm $\bA$ is characterized by the Markov kernel $\bP_H^S$ that, for a given training dataset $s$, returns a distribution on the hypothesis space $\bP_H^{S=s}$. An algorithm \emph{generalizes} if the population risk of the hypothesis that it generates is small. %
Then, a quantity of interest is the \emph{excess risk} of a hypothesis $h$, which is defined as the difference between its population risk and the smallest population risk of hypothesis in that class, that is
\begin{equation*}
	\excess(h, \cH) \coloneqq \poprisk(h) - \inf_{h^\star \in \cH} \poprisk(h^\star).
\end{equation*} 
Therefore, small values of $\excess(h, \cH)$ guarantee small values of the population risk of a hypothesis $h$ relative to the optimal risk achievable in that class.

Often, we do not have full knowledge of the distribution $\bP_Z$, so calculating the population risk or the excess risk is not feasible. However, a good proxy of the population risk is the \emph{empirical risk} $\emprisk(h, s)$, which is defined as the average loss of a hypothesis $h$ on the samples from the training set $s$, namely
\begin{equation*}
	\emprisk(h,s) \coloneqq \frac{1}{n} \sum_{i=1}^n \ell(h,z_i).
\end{equation*}
Indeed, for every \emph{fixed} hypothesis $h$, the empirical risk is an unbiased estimator of the population risk, that is, $\bE \big[ \emprisk(h,S) \big] = \poprisk(h)$. For this reason, many learning algorithms attempt to return a hypothesis minimizing the empirical risk, also known as performing \emph{empirical risk minimization} (ERM). This is still a complicated task that only results in a low population risk if the difference between the population and empirical risks is small. Consequently, many generalization bounds, or bounds on the population risk, are obtained by bounding the \emph{generalization error} (or \emph{generalization gap}) 
\begin{equation*}
	\gen(h, s) \coloneqq \poprisk(h) - \emprisk(h,s).
\end{equation*}
More precisely, consider the decomposition
\begin{equation}
\label{eq:gen_decomposition}
	\poprisk(h) %
	=  \emprisk(h,s) + \gen(h,s).
\end{equation}
Then, a bound on the generalization error $\gen(h,s)$ directly gives a computable bound on the population risk. Hence, if a hypothesis $h$ has a small empirical risk and a small generalization error, we can say that the hypothesis $h$ generalizes.

The generalization error is also useful to bound the excess risk from above. Note that the excess risk can be decomposed as
\begin{align}
\label{eq:excess_risk_decomposition}
	&\excess(h, \cH) = \nonumber \\
	& \quad \gen(h, s) + \big( \underbrace{\emprisk(h,s) - \inf_{\tilde{h}_s \in \cH} \emprisk(\tilde{h}_s, s)}_{\textnormal{optimization error}} \big) + \big( \underbrace{\inf_{\tilde{h}_s \in \cH} \emprisk(\tilde{h}_s, s) - \inf_{h^\star \in \cH} \poprisk(h^\star)}_{\textnormal{approximation error}}\big).
\end{align}
Generally, the approximation error is non-positive as the ERM achieves a better empirical risk than the smallest possible population risk. Therefore, if the optimization error can be sufficiently controlled, bounding the generalization error is sufficient to bound the excess risk.

\section{Probably Approximately Correct Learning}
\label{sec:pac_learning}

The \emph{Probably Approximately Correct} (PAC) framework from \citet{valiant1984theory} was originally formulated for the 0--1 loss for classification tasks in the \emph{realizable setting}. This setting assumes that there exists a hypothesis $h^\star \in \cH$ such that $\poprisk(h^\star) = 0$. In this setting, every ERM $h_{\mathrm{ERM}}$ achieves zero empirical risk, that is, $\emprisk(h_{\mathrm{ERM}},S) = 0$ almost surely.

Informally, the framework states that a hypothesis class $\cH$ is \emph{PAC learnable} if, for all $\alpha, \beta \in (0,1)$, there exists a function $n_\cH : (0,1)^2 \to \bN$ and a learning algorithm $\bA$ such that, after observing $n \geq n_\cH(\alpha, \beta)$ samples, it returns a hypothesis that is \emph{approximately correct} (has a population risk smaller than $\alpha$) with \emph{probability} at least $1 - \beta$. If such an algorithm exists, then it is called a \emph{PAC learning algorithm} for $\cH$. PAC learnability can be extended outside the realizable setting and to general loss functions~\cite[Chapter 3]{shalev2014understanding}. %

\begin{definition}
\label{def:agnostic_pac_learnable}
A hypothesis class $\cH$ is \emph{agnostic PAC learnable} if, for all $\alpha \in \bR_+$ and $\beta \in (0,1)$, and for every distribution $\bP_Z$, there exists a function $n_\cH: \bR_+ \times (0,1) \to \bN$ and a learning algorithm $\bA$ such that, after observing $n \geq n_\cH(\alpha, \beta)$ samples from $\bP_Z$, it returns a hypothesis $h$ such that, with probability at least $1 - \beta$, 
\begin{equation*}
	\excess(h,\cH) \leq \alpha.
\end{equation*}
\end{definition}

The function $n_\cH: \bR_+ \times (0,1) \to \bN$ is called the \emph{sample complexity} of learning $\cH$ and determines how many examples are required to guarantee a PAC solution. To be precise, for an agnostic PAC learnable hypothesis class $\cH$, there are multiple functions $n_\cH : \bR_+ \times (0,1)$ that satisfy the requirements given in \Cref{def:agnostic_pac_learnable}. %
Hence, the sample complexity generally refers to the minimal of these functions. 

Generally, PAC learning theory focuses on the study and bounding from above of the sample complexity of hypothesis classes.  However, we often have access to a fixed, finite number of samples $n$ and want to understand the amount of error that our algorithm will suffer. Therefore, in order to have a better comparison between the generalization guarantees of the different frameworks, instead of focusing on the sample complexity, we will study a complementary concept: the \emph{error complexity} $\alpha_\cH : \bN \times (0,1) \to \bR_+$, which is defined as the generalized inverse of the \emph{sample complexity} for a fixed confidence $\beta$, that is 
\begin{equation*}
	\alpha_\cH(n, \beta) = \inf \{ \alpha \in \bR_+ : n_\cH(\alpha, \beta) \leq n \}.
\end{equation*}

In this way, the generalization guarantees %
from the PAC learning framework will be formulated as follows: for all $n \in \bN$ and $\beta \in (0,1)$, there exists a function $\alpha_\cH: \bN \times (0,1) \to \bN$ and a learning algorithm $\bA$ such that, after observing $n$ samples, it returns a hypothesis $h$ such that, with probability at least $1 - \beta$, 
\begin{equation*}
	\excess(h,\cH) \leq \alpha_\cH(n,\beta).
\end{equation*} 
That is, the error complexity is an upper bound on the excess risk. If the \emph{error complexity} $\alpha_\cH(n,\beta)$ is non-increasing in $n$ and $\lim_{n \to \infty} \alpha_\cH(n,\beta) = 0$ for all $\beta \in (0,1)$, then the hypothesis class $\cH$ is agnostic PAC learnable. 

\subsection{Uniform Convergence}
\label{subsec:uniform_convergence}

Above, we saw that for a hypothesis class $\cH$ to be agnostic PAC learnable, there needs to exist an algorithm such that, for every data distribution $\bP_Z$, it attains an arbitrarily small population risk when given a sufficiently large number of instances.

Similarly, the \emph{uniform convergence property} of a hypothesis class $\cH$ states that, for every data distribution $\bP_Z$, the absolute generalization error  of every hypothesis in the class $\cH$ is arbitrarily small for a sufficiently large number of instances~\cite[Chapter 4]{shalev2014understanding}. 

\begin{definition}
\label{def:uniform_convergence}
A hypothesis class $\cH$ has the \emph{uniform convergence property} (or is a \emph{uniformly Glivenko--Cantelli class}~\cite{dudley1991uniform}) if, for all $n \in \bN$ and $\beta \in (0,1)$, for every distribution $\bP_Z$, for every dataset $S$ of $n$ instances sampled from $\bP_Z$, and for every hypothesis $h \in \cH$, there exists a function $\alpha_\mathrm{UC} : \bN \times (0,1) \to \bR_+$ such that $\lim_{n \to \infty} \alpha_\mathrm{UC}(n, \beta) = 0$ and that, with probability at least $1 - \beta$,
\begin{equation*}
	\big| \gen(h,S) \big| \leq \alpha_\mathrm{UC}(n, \beta).
\end{equation*}
\end{definition}

Therefore, based on the decomposition~\eqref{eq:gen_decomposition}, the uniform convergence property guarantees that the population risk will be close to the observed empirical risk regardless of the learning algorithm used and the underlying data distribution. %
Moreover, based on the decomposition~\eqref{eq:excess_risk_decomposition}, controlling the optimization error is sufficient to ensure that the uniform convergence property also gives guarantees on the excess risk.
In fact, if a hypothesis class $\cH$ has the uniform convergence property, then the class is agnostic PAC learnable and the ERM algorithm is an agnostic PAC learner for $\cH$~\cite[Corollary 4.4]{shalev2014understanding}. Moreover, for binary classification problems, if a hypothesis class is agnostic PAC learnable, then it has the uniform convergence property~\cite[Theorem 6.7]{shalev2014understanding}, although this is not true for more general problems~\cite[Section 4]{shalev2010learnability}. 

For instance, if the loss function has a range contained in $[a,b]$ and the hypothesis class is finite $|\cH| < \infty$, then it can be shown that $\cH$ has the uniform convergence property by the union bound and Hoeffding's inequality~\cite[Chapter 4]{shalev2014understanding}. More precisely, in this case
\begin{equation}
\label{eq:uniform_convergence_finite_hypothesis}
	\alpha_\mathrm{UC}(n,\beta) \leq (b-a) \sqrt{\frac{2 \log \frac{2 |\cH|}{\beta}}{n}}.
\end{equation}

Practically, this can be extended to real valued parameterized hypothesis classes. Namely, if the class is parameterized with $d$ parameters and the computer uses a 64 bit precision, the set of hypotheses is upper bounded by $2^{64 d}$ and $\log |\cH| \leq 64 d \log 2$. Unfortunately, the requirements for this property are too strict for current learning algorithms such as parameterized, differentiable networks, where the number of parameters $d$ is very large~\cite{zhang2021understanding, nagarajan2019uniform}. However, it seems that for interpolating algorithms, that is, algorithms that output a hypothesis $h$ such that $\emprisk(h,S) = 0$ a.s., it is still possible to avoid the problems outlined by \citet{nagarajan2019uniform}. The idea is to consider a surrogate hypothesis $\tilde{h}$ that belongs to a \emph{structural Glivenko--Cantelli hypothesis class} (a weaker notion of uniform convergence for sequences of learning problems) and study its deviation from from the original hypothesis~\cite{negrea2020defense}. 

\subsection{Vapnik--Chervonenkis and Natarajan Dimensions}
\label{subsec:vc_and_natarajan_dimensions}

In the previous subsection, we established that finite hypothesis classes $\cH$ are agnostic PAC learnable and that their generalization gap is in $\cO \big( \sqrt{\nicefrac{1}{n} \cdot \log \nicefrac{|\cH|}{\beta}} \big)$ with probability at least $1 - \beta$. However, we know that there are infinite hypothesis classes that also have a vanishing generalization gap. The \emph{Vapnik--Chervonenkis} (VC) and the \emph{Natarajan} dimensions give a sharp characterization of this kind of hypothesis classes for classification problems.

Let us focus first on the binary classification setting. In this setting, a hypothesis corresponds to a function $h$ from $\cX$ to $\cY = \{0,1\}$, where we recall that $\cZ = \cX \times \cY$. The maximum number of distinct ways of classifying $n$ instances using hypotheses in a hypothesis class $\cH$ is known as the \emph{growth function} $\growth_\cH(n)$ of that class, and provides us with a measure of the richness of the class $\cH$.

\begin{definition}
\label{def:growth_function}
The \emph{growth function} $\growth_\cH : \bN \to \bN$ of a hypothesis class $\cH$ is 
\begin{equation*}
	\growth_\cH(n) \coloneqq \max_{\{ x_1, \ldots, x_n \} \in \cX^n} \Big| \big\{ \big( h(x_1), \ldots, h(x_n) \big) : h \in \cH \big\} \Big|.
\end{equation*}
\end{definition}

The growth function $\Pi_\cH(n)$ of a hypothesis class $\cH$ can be used to bound the absolute generalization error and, hence, establish a uniform convergence property of the class~\cite[Section 3.2]{mohri2018foundations}. More precisely, if a hypothesis class $\cH$ has a growth function $\Pi_\cH(n)$, then
\begin{equation}
\label{eq:generalization_growth_function}
	\alpha_\mathrm{UC}(n, \beta) = \sqrt{\frac{8 \log \frac{4 \Pi_\cH(2n)}{\beta}}{n}}.
\end{equation} 

However, calculating the growth function is cumbersome, as it requires the calculation %
of $\Pi_\cH(n)$ for each $n \in \bN$ by definition. The growth function has a trivial bound of $\growth_\cH(n) \leq 2^n$. This bound does not provide us with any interesting results on generalization as~\eqref{eq:generalization_growth_function} becomes vacuous. Nonetheless, it is useful to define the concept of \emph{shattering}. A set $\{ x_1, \ldots, x_n \}$ of $n$ instances is said to be shattered by a hypothesis class $\cH$ if
the instances of the set can be classified by elements in the class $\cH$ in every possible way, that is, if $\growth_\cH(n) = 2^n$. The VC dimension of a hypothesis class $\cH$ is defined as the largest size of a set shattered by $\cH$. More precisely,
\begin{equation*}
	\vcdim(\cH) = \sup \big \{ n \in \bN : \growth_\cH(n) = 2^n \big \}.
\end{equation*}

The Sauer--Shelah--Perles lemma~\cite[Lemma 6.10]{shalev2014understanding} bounds the growth function of a hypothesis class $\cH$ in terms of its VC dimension. Formally, it states that
\begin{equation*}
\label{eq:sauer_lemma}
	\growth_\cH(n) \leq \sum_{i=0}^{\vcdim(\cH)} \binom{n}{i} \leq 
	\begin{cases}
		2^{\vcdim(\cH)} & \textnormal{ if } n \leq \vcdim(\cH) \\
		\bigg( \frac{en}{\vcdim(\cH)} \bigg)^{\vcdim(\cH)} & \textnormal{ otherwise}
	\end{cases}.
\end{equation*}
Hence, the VC dimension characterizes the uniform convergence of a hypothesis class and, since we are considering the binary classification problem, also its agnostic PAC learnability. To be more exact, there exist absolute constants $c_1$ and $c_2$ such that~\cite[Theorem 6.8]{shalev2014understanding}
\begin{equation*}
	c_1 \cdot \sqrt{\frac{\vcdim(\cH) + \log \frac{1}{\beta}}{n}} \leq \alpha_\mathrm{UC}(n,\beta) \leq c_2 \cdot \sqrt{\frac{\vcdim(\cH) + \log \frac{1}{\beta}}{n}}.
\end{equation*}

The VC dimension is particularly useful since it allows us to obtain uniform convergence generalization bounds for infinite hypothesis classes. A canonical example is the class of threshold functions $\cH = \{ \bI_{ \{x \leq a \}} : a \in \bR \}$. Clearly, there are an infinite number of thresholds on the real line. However, their VC dimension is 1, and therefore they have the uniform convergence property. Other classical examples of hypotheses classes with finite VC dimension are the intervals, hyperplanes, or axis aligned rectangles~\cite{shalev2014understanding, mohri2018foundations}.

The notion of VC dimension can be extended to non-binary classification problems with $k$ classes~\cite[Chapter 29]{shalev2014understanding}. A set $\cA \subseteq \cX$ is shattered by the hypothesis class $\cH$ if there exist two functions $f_0, f_1 : \cA \to [k]$ such that
\begin{itemize}
	\item For every $x \in \cA$, $f_0(x) \neq f_1(x)$ .
	\item For every subset $\cB \subseteq \cA$, there exists a function $h \in \cH$ such that $h(x) = f_0(x)$ for all $x \in \cB$ and $h(x) = f_1(x)$ for all $x \in \cA \setminus \cB$.
\end{itemize}
Then, the Natarajan dimension $\ndim(\cH)$ of a hypothesis class $\cH$ is defined as the maximal size of a set shattered by $\cH$. Moreover, the Natarajan dimension also characterizes the uniform convergence and the agnostic PAC learnability of a general classification problem. More precisely, there exist absolute constants $c_1$ and $c_2$ such that~\cite[Theorem 29.3]{shalev2014understanding}
\begin{equation*}
	c_1 \cdot \sqrt{\frac{\ndim(\cH) + \log \frac{1}{\beta}}{n}} \leq \alpha_\mathrm{UC}(n,\beta) \leq c_2 \cdot \sqrt{\frac{\ndim(\cH) \log k + \log \frac{1}{\beta}}{n}}.
\end{equation*}

In the realizable setting, both the VC and the Natarajan dimension of a hypothesis class $\cH$ characterize more strongly the uniform convergence and the PAC learnability of a general classification problem. More precisely there exist absolute constants $c_1$ and $c_2$ such that~\cite[Theorem 29.3]{shalev2014understanding}
\begin{equation*}
	c_1 \cdot \frac{\ndim(\cH) + \log \frac{1}{\beta}}{n} \leq \alpha_\mathrm{UC}(n,\beta) \leq c_2 \cdot \frac{\ndim(\cH)}{n} \cdot \mathtt{W} \left[ \frac{\big(\frac{1}{\beta}\big)^{\ndim(\cH)} k n}{c_2 \ndim(\cH)} \right],
\end{equation*}
\looseness=-1 where $\mathrm{W}$ is the Lambert function, and the 0 branch can be bounded from above as $\mathrm{W}(x) \leq \log(x + 1)$~\cite[Theorem 2.3.]{hoorfar2008inequalities}. Therefore, in the realizable setting the uniform convergence of a class is of the order $\nicefrac{(\ndim(\cH) + \log \nicefrac{1}{\beta})}{n}$ up to logarithmic terms.

Unfortunately, the VC or the Natarajan dimensions still do not help us to characterize the generalization behavior of deep learning. For a feed-forward network with sigmoid activation functions of $p$ parameters and $c$ connections between the parameters, the VC dimension is lower bounded in $\Omega(c^2)$ and upper bounded in $\cO(p^2 c^2)$~\cite[Chapter 20]{shalev2014understanding}.

\section{Rademacher Complexity}
\label{sec:rademacher_complexity}

The requirements needed for the PAC learning and uniform convergence based guarantees are very strong. They need to hold simultaneously for all hypotheses in a hypothesis class $\cH$ and all data distributions $\bP_Z$, independently of the draw of the training data $S$. A first relaxation of these requirements come from the \emph{empirical Rademacher complexity} of a hypothesis class $\cH$ with respect to a training dataset $S$~\cite{koltchinskii2000rademacher, bartlett2002rademacher}.

\begin{definition}
\label{def:rademacher_complexity}
The \emph{empirical Rademacher complexity} of a hypothesis class $\cH$ with respect to a fixed training set $s = (z_1, \ldots, z_n)$ of $n$ instances and a loss function $\ell$ is
\begin{equation*}
	\rad(\ell \circ \cH, s) \coloneqq \frac{1}{n} \bE \bigg[ \sup_{h \in \cH} \sum_{i=1}^n R_i \cdot \ell(h,z_i) \bigg],
\end{equation*}
where $R_i$ are independent and identically distributed random variables such that $\bP_{R_i} \big[ -1 \big] = \bP_{R_i} \big[1 \big] = \nicefrac{1}{2}$ for all $i \in [n]$. This kind of variables are known as \emph{Rademacher random variables}.
\end{definition}

The Rademacher complexity can be understood in two different, but equally valid ways:
\begin{enumerate}
	\item As a measure of richness of the hypothesis class by measuring the degree to which it can fit random noise~\cite[Section 3.1]{mohri2018foundations}. More precisely, let $R \coloneqq (R_1, \cdots, R_n)$ and $\ell_s(h) \coloneqq (\ell(h,z_1), \ldots, \ell(h,z_n))$ be the vectors describing the uniform noise from the Rademacher random variables and the losses of a hypothesis $h$ on the instances of the training set $s$. Then, the Rademacher complexity can be written as
	\begin{equation*}
		\rad(\ell \circ \cH, s) = \frac{1}{n} \bE \bigg[ \sup_{h \in \cH} R^\intercal \ell_s(h) \bigg],
	\end{equation*}
	where the inner product $R^\intercal \ell_s(h)$ measures the correlation between the random noise and the losses. Thus, the Rademacher complexity measures how well, on average, the hypothesis class correlates with random noise. Richer or more complex hypothesis classes $\cH$ can generate more losses and correlate better with random noise. This understanding is not unique to Rademacher random variables, and there is an analogue to the Rademacher complexity with standard Gaussian random variables named \emph{Gaussian complexity}, and the two are equivalent up to constants~\cite[Excercise 5.5]{wainwright2019high}\cite{bartlett2002rademacher}. As in~\Cref{subsec:uniform_convergence}, more complex hypothesis classes will have a larger generalization error than simpler ones.
	\item As a measure of the discrepancy between fictitious training and test sets~\cite[Section 26.1]{shalev2014understanding}. Indeed, for a fixed training set $s$, one can construct a fictitious training set as $S_1 = \{ z_i \in s: R_i = 1 \}$ and a fictitious test set as $S_2 = \{ z_i \in s: R_i = -1 \}$, where the randomness of the sets only comes from their construction using the Rademacher random variables. Then, the Rademacher complexity calculates what is, on average, the worst difference between these two sets. Namely, we may write
	\begin{equation*}
		\rad(\ell \circ \cH, s) = \frac{1}{n} \bE \bigg[ \sup_{h \in \cH} \bigg\{ \sum_{z \in S_1} \ell(h,z) - \sum_{z \in S_2} \ell(h,z) \bigg\} \bigg].
	\end{equation*}
	In this way, the connection with generalization becomes apparent.
\end{enumerate}

In both interpretations it is clear that the Rademacher complexity depends on the training dataset $s$, and that the larger the number of samples, the better it represents either (i) the richness or complexity of the hypothesis class, or (ii) the discrepancy between fictitious training and test sets. This is reflected in the following theorem relating the Rademacher complexity with the generalization error of hypotheses from a class $\cH$.

\begin{theorem}[\!\!{\cite[Theorem 3.3]{mohri2018foundations}}]
\label{th:generalization_rademacher_complexity}
Consider a loss function with a range contained in $[a,b]$. Then, for all $\beta \in (0,1)$ and all $h \in \cH$, with probability no less than $1 - \beta$
\begin{equation*}
\label{eq:generalization_rademacher_complexity}
	\gen(h,S) \leq 2 \rad(\ell \circ \cH, S) + (b-a) \sqrt{\frac{9 \log \frac{2}{\beta}}{2n}}
\end{equation*}
\end{theorem}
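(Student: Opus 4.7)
The plan is to follow the classical symmetrization-plus-concentration recipe. First I would introduce the uniform deviation $\Phi(S) \coloneqq \sup_{h \in \cH} \gen(h,S) = \sup_{h \in \cH}\bigl(\poprisk(h) - \emprisk(h,S)\bigr)$. Bounding $\Phi(S)$ with high probability immediately implies the bound for every individual $h \in \cH$, so the task reduces to controlling this data-dependent supremum. The strategy has two ingredients: a concentration bound for $\Phi(S)$ around its expectation, and an expectation bound $\bE[\Phi(S)] \leq 2\,\bE[\rad(\ell \circ \cH, S)]$. Combining them with a third concentration bound, this time for the empirical Rademacher complexity around its mean, will produce the stated inequality via a union bound.

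For the concentration of $\Phi(S)$, I would use the bounded-differences (McDiarmid) inequality. Since $\ell$ takes values in $[a,b]$, swapping a single coordinate $Z_i \to Z'_i$ changes $\emprisk(h,S)$ by at most $(b-a)/n$ uniformly in $h$, which in turn alters $\Phi(S)$ by at most $(b-a)/n$. McDiarmid then gives, with probability at least $1-\beta/2$,
\begin{equation*}
\Phi(S) \leq \bE[\Phi(S)] + (b-a)\sqrt{\frac{\log(2/\beta)}{2n}}.
\end{equation*}
For the expectation bound I would invoke the standard ghost-sample trick: introduce an i.i.d.\ copy $S' = (Z'_1,\ldots,Z'_n)$ of $S$, write $\poprisk(h) = \bE_{S'}[\emprisk(h,S')]$, push the supremum inside the outer expectation by Jensen's inequality, and then exploit the symmetry of each pair $(Z_i, Z'_i)$ to insert i.i.d.\ Rademacher signs $R_i$ that flip $Z_i \leftrightarrow Z'_i$ without changing the joint law. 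Splitting the resulting supremum and relabeling $S'$ back to $S$ yields $\bE[\Phi(S)] \leq 2\,\bE[\rad(\ell \circ \cH, S)]$.

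To replace the expected complexity by its empirical counterpart, I would apply McDiarmid a second time, now to $S \mapsto \rad(\ell \circ \cH, S)$. An identical sensitivity analysis (the loss is in $[a,b]$ and the supremum is $1$-Lipschitz in each coordinate) gives bounded differences of $(b-a)/n$, so with probability at least $1-\beta/2$,
\begin{equation*}
\bE[\rad(\ell \circ \cH, S)] \leq \rad(\ell \circ \cH, S) + (b-a)\sqrt{\frac{\log(2/\beta)}{2n}}.
\end{equation*}
Chaining the three inequalities via a union bound (each failure event having probability at most $\beta/2$) gives, with probability at least $1-\beta$,
\begin{equation*}
\gen(h,S) \leq \Phi(S) \leq 2\,\rad(\ell \circ \cH, S) + 3(b-a)\sqrt{\frac{\log(2/\beta)}{2n}},
\end{equation*}
and pulling the $3$ into the square root as $3 = \sqrt{9}$ recovers the stated $\sqrt{9\log(2/\beta)/(2n)}$ constant.

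The only genuinely delicate step is the symmetrization: one has to be careful that the Jensen pass, the insertion of the Rademacher variables, and the splitting of the supremum combine to a factor of exactly $2$ (and not $4$), and that the resulting object is literally $\rad(\ell \circ \cH, S)$ rather than a variant with a different normalization. The two McDiarmid applications and the union bound are routine bookkeeping once the sensitivity constants $(b-a)/n$ are verified.
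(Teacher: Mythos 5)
Your proposal is correct and is essentially the canonical argument behind this result, which the paper itself does not prove but imports directly from \citet[Theorem 3.3]{mohri2018foundations}: McDiarmid on $\sup_{h \in \cH}\gen(h,S)$ with sensitivity $(b-a)/n$, ghost-sample symmetrization giving the factor $2\,\bE[\rad(\ell \circ \cH, S)]$, a second McDiarmid step for the empirical Rademacher complexity, and a union bound. Your constant bookkeeping is also right: one deviation term from the first concentration step plus two from the second yields $3(b-a)\sqrt{\log(2/\beta)/(2n)} = (b-a)\sqrt{9\log(2/\beta)/(2n)}$, matching the stated bound.
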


The Rademacher complexity can be used to derive non-vacuous generalization bounds for important hypothesis classes such as support vector machines (SVMs) and other kernel based hypotheses~\cite[Chapter 26]{shalev2014understanding}~\cite[Chapters 5 and 6]{mohri2018foundations}. However, even if the empirical Rademacher complexity is data dependent and could in theory be calculated with the training dataset, the expectation with respect to the Rademacher random variables $R_i$ requires performing $2^n$ empirical risk minimizations. This is computationally hard for some hypothesis classes and often one resorts to results either using the expected Rademacher complexity or bounding this empirical measure with the growth function or the VC dimension, hence losing the advantage of having a data dependent measure.

\looseness=-1 The usage of the Rademacher complexity to explain the generalization of deep learning models seems to be complicated. Even if the measure takes advantage of the collected training dataset, it is still a quantity that holds uniformly for all hypotheses in a class $\cH$, which is a very strong requirement. For example, although there are works that obtain upper bounds on the Rademacher complexity of feed-forward networks that depend on different norms of the weights~\cite{bartlett1996sample, neyshabur2015norm, bartlett2017spectrally, golowich2018size, liang2019fisher}, these are still not tight enough to characterize their generalization. Moreover, there are reasons to believe that this measure will not be sufficient for this task. Indeed, the first interpretation of the Rademacher complexity of the class $\cH$ describes how well the hypotheses from the class can fit random binary label assignments, and it has been shown that parameterized, differentiable networks can perfectly fit random labels~\cite{zhang2021understanding}. Hence, it is expected that the Rademacher complexity is close to 1 for binary classification with the 0--1 loss, resulting in a trivial generalization bound.

\section{Minimum Description Length and Occam's Razor}
\label{sec:mdl_and_occams_razor}

So far, all frameworks to guarantee the generalization of a hypothesis $h$ only depended on the hypothesis class $\cH$ and held simultaneously for all hypotheses in the class. When the hypothesis class is ``simple'', this leads to good generalization guarantees, but for more ``complex'' classes the guarantees are vacuous. 

A first step towards considering a more ``per hypothesis'' specialized measure of complexity is done in \emph{structural risk minimization} (SRM)~\citep{vapnik1999nature,shawe1996framework}. The idea behind SRM is to decompose a complex hypothesis class $\cH$ into a countable union of hypothesis classes $\cH = \bigcup_{k = 1}^\infty \cH_k$ such that $\cH_k \subseteq \cH_{k+1}$ for all $k \in \bN$ and where the complexity of each class $\cH_k$ is non-decreasing in $k$. The complexity of the class $\cH_k$ can be measured with any of the previous methods, either uniform convergence~\cite[Section 7.2]{shalev2014understanding} or the Rademacher complexity~\cite[Section 4.3]{mohri2018foundations}. Then, the bounds on the population risk of a hypothesis $h$ depend on the class $\cH_k$ they belong: they trade-off a smaller empirical risk for more complex classes (or larger $k$) for a larger uniform convergence or Rademacher complexity.

A further step is given by the \emph{minimum description length} (MDL) \emph{principle}~\cite{rissanen1978modeling, grunwald2007minimum}~\cite[Section 7.3]{shalev2014understanding}, where a different generalization guarantee is given to each hypothesis depending on the ``preference'' given to it. To be more precise, consider a countable hypothesis class such that $\cH = \bigcup_{k = 1}^\infty \{ h_k \}$. Furthermore, assume that we have a preference to each hypothesis $h_k$ determined by a probability distribution $\bQ$: larger values of $\bQ \big[ h_k \big]$ mean a larger preference for that hypothesis. Then, an application of Hoeffding's inequality and the union bound yields that, for all losses with a range contained in $[a,b]$ and all hypotheses $h_k \in \cH$, with probability no less than $1 - \beta$~\citep[Theorem 2]{mcallester1998some}\citep{barron1991complexity, barron1991minimum, kearns1995experimental,mcallester1999pac,mcallester2003pac},
\begin{equation}
\label{eq:mdl_generalization}
	\gen(h_k, S) \leq \sqrt{\frac{-\log \bQ \big[ h_k \big] + \log \frac{1}{\delta}}{2n}}.
\end{equation}

A difficult question is \emph{how} to select the preference over the hypotheses in the class $\cH$ when no prior knowledge is available. For instance, if the hypothesis class is finite $|\cH| < \infty$ and we have no preference for any hypothesis (that is, we consider a uniform preference distribution $\bQ \big [ h_k \big] = \nicefrac{1}{|\cH|}$), then the guarantee is equal for each hypothesis and~\eqref{eq:mdl_generalization} is reduced to the same guarantee (up to constants) received by the uniform convergence property in~\eqref{eq:uniform_convergence_finite_hypothesis}. A simple way to express a preference is to favour hypotheses that are simpler to describe, or that have a smaller \emph{description length}. A binary string is a finite sequence of zeros and ones. A description language for a hypothesis class $\cH$ is a function mapping each hypothesis $h \in \cH$ to a string $\desc(h)$ with length $|\desc(h)|$, where $\desc(h)$ is called the \emph{description of $h$} and $|\desc(h)|$ is its \emph{description length}. %
If the language is prefix-free, then Kraft's inequality holds~\cite[Theorem 5.2.1]{Cover2006}, namely $\sum_{h \in \cH} 2^{- |\desc(h)| } \leq 1$. %
In this way, one may consider the simple prior $\bQ \big[ h_k \big] \propto 2^{- |\desc(h)|}$ resulting in the bound
\begin{equation}
\label{eq:mdl_generalization_compression}
	\gen(h_k,S) \leq \sqrt{\frac{ |\desc(h_k)| \log 2 + \log \frac{1}{\delta}}{2n}},
\end{equation}
where the description language could be any prefix-free compression algorithm and $|\desc(h_k)|$ is its length (or bits). When the description length of a hypothesis $h$ is the smallest possible it is known as its \emph{Kolmogorov complexity}~\cite{grunwald2007minimum} and the prior $\bQ$ is referred to as the \emph{universal prior}~\cite[Section 14.6]{Cover2006}.

Then, following the MDL principle leads to selecting hypotheses that trade off a good empirical performance (small empirical risk) and a small complexity or description length (small generalization error). This is aligned with the \emph{Occam's razor}, which states that ``it is futile to do with more, what can be done with fewer''~\cite{de1974summa,ball2016tyranny}. This has been incorporated into the methodology of science in the following form~\cite{blumer1987occam, shalev2014understanding}: ``given two explanations of the data, all other things being equal, the simpler explanation is preferable''. Indeed, following the MDL principle, given two hypotheses $h_k$ and $h_l$ with the same empirical risk $\emprisk(h_k,s) = \emprisk(h_l,s)$ for a fixed dataset $s$, the one that is easier to describe (or has a smaller description length) is preferable.

\sloppy The parsimonious philosophical principle of the Occam's razor will also resonate with the rest of the generalization guarantees described in this manuscript, although with other characterizations of the hypotheses complexity and not necessarily their description length.

The MDL principle has big connections with other theoretical frameworks to generalization such as PAC Bayesian theory~\cite{shawe1997pac,mcallester1998some,mcallester1999pac, mcallester2003pac}. More precisely, the two are equivalent when the considered algorithms are deterministic and the hypothesis class is discrete, as shown below in~\Cref{subsec:levels_of_specificity}. In fact, under the PAC Bayesian umbrella, the MDL has been employed to obtain non-vacuous bounds for deep learning algorithms~\cite{lotfi2022pac}. The idea is to describe the parameters of the networks with a tunable  prefix-free variable-length code that acts as the description language $\desc$. Then, both the quantized parameters and the quantization levels of the code are learnt simultaneously to minimize a variant of the MDL generalization guarantee in~\eqref{eq:mdl_generalization_compression}.

\section{Algorithmic Stability}
\label{sec:algorithmic_stability}

The MDL principle takes a step away from previous frameworks like PAC learning and uniform convergence and establishes generalization guarantees that are specific for each hypothesis in the hypothesis class $\cH$. However, it requires that the hypothesis class is discrete (or belongs to a subclass $\cH_k$ from a countable set of subclasses that cover the whole class in the case of SRM) and it does not take into account the algorithm that selects the hypothesis. Even though one, in theory, could study specific algorithms and give bounds on the description length of their selected hypotheses, this is not embedded into the definition of the framework.

\emph{Algorithmic stability}~\cite{rogers1978finite, devroye1979adistribution, devroye1979bdistribution}, on the other hand, makes the algorithm the central object of its framework. To put it simply, an algorithm is stable if, when presented with similar input training datasets, it outputs similar hypotheses. The connection to generalization is then simple. If the hypothesis returned by an algorithm does not change much for similar training datasets, given a training dataset sampled from the data distribution, then the output hypothesis would be similar to other training datasets from the same distribution. Hence, the generalization error will depend on how well the training dataset represents the data distribution. This framework still follows the Occam's razor principle if we measure the complexity of the hypothesis returned by an algorithm by how unstable the algorithm is.

Precisely describing what it means for the input training datasets and for the output hypotheses to be ``similar'' is complicated, and there are different notions of stability with different generalization guarantees. Below, we will describe a particularly successful notion named \emph{uniform stability}~\cite{bousquet2002stability}. Then, we will discuss how privacy can be understood as a stability notion: if an algorithm is private, then it is stable. This will serve as a connection point to~\Cref{ch:privacy_and_generalization}. Finally, we will mention how the stability of an algorithm can be described with information-theoretic measures of dependence of the output hypothesis on the training data. This final subsection will be the link to~\Cref{sec:information_theoretic_generalization} discussing information-theoretic generalization and, more broadly, to the rest of the manuscript. We omit the discussion around other notions of stability, such as the \emph{leave-one-out}, the \emph{leave-one-out cross validation}, or \emph{hypothesis stability}, among others~\cite{bousquet2002stability, kutin2002almost, rakhlin2005stability, mukherjee2006learning, shalev2010learnability} due to space and scope constraints.

\subsection{Uniform Stability}
\label{subsec:uniform_stability}

Consider a \emph{deterministic} algorithm $\bA: \cZ^n \to \cH$. \citet{bousquet2002stability} described the stability of the algorithm as the largest difference in performance of hypotheses generated by the algorithm when presented to \emph{neighbouring datasets}. Therefore, for them, two input training datasets are ``similar'' if they are neighbours, that is, if they differ in at most one element; and two output hypotheses are ``similar'' if the absolute difference in their performance is bounded.

\begin{definition}
\label{def:uniform_stability}
A deterministic algorithm $\bA$ is \emph{uniformly stable} with parameter $\gamma$ if for every training dataset $s = (z_1, \ldots, z_n) \in \cZ^n$, every neighbouring dataset $s^{(i)} = (z_1, \ldots, z_{i-1}, z_i', z_{i+1}, \ldots, z_n) \in \cZ^n$ for all $i \in [n]$, and every sample $z \in \cZ$
\begin{equation*}
	\big| \ell \big( \bA(S), z\big) - \ell \big(\bA(S^{(i)}), z \big) \big| \leq \gamma.
\end{equation*}
\end{definition}

By the definition of uniform stability, and the fact that $\bE \big[ \gen(\bA(S),S) \big] = \bE \big[ \ell \big( \bA(S), z\big) - \ell \big(\bA(S^{(i)}), z \big) \big]$~\citep[Lemma 7]{bousquet2002stability}, it directly follows that $\bE \big[ \gen(\bA(S),S) \big] \leq \gamma$.
There have been increasingly better bounds on the generalization error based on the clear intuition that if an algorithm is uniformly stable, then it generalizes~\cite{bousquet2002stability,feldman2018generalization, feldman2019high, bousquet2020sharper}. To our knowledge, the following one is the one that best characterizes the generalization error of uniformly stable algorithms, although there exists better characterizations of the excess risk~\cite{klochkov2021stability}.

\begin{theorem}[{\citet[Corollary 8]{bousquet2020sharper}}]
\label{th:generalization_uniform_stability}
Consider a loss function with a range contained in $[a,b]$.  Let $h = \bA(S)$, where $\bA$ is a deterministic, uniformly stable algorithm with parameter $\gamma$. Then, there exist universal constants $c_1, c_2 \in \bR_+$ such that, for all $\beta \in (0,1)$, with probability no less than $1 - \beta$,
\begin{equation*}
	|\gen( h, S) | \leq c_1 \cdot \gamma \log (n) \log \bigg(\frac{1}{\beta} \bigg) + c_2 \cdot (b-a) \cdot \sqrt{\frac{\log \frac{1}{\beta}}{n}}.
\end{equation*}
\end{theorem}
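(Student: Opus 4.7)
The plan is to control $F(S) := \gen(\bA(S),S)$ via a moment method, proving
\[
  \|F(S) - \bE F(S)\|_p \;\le\; C_1\,\gamma\,p \log n \;+\; C_2\,(b-a)\sqrt{p/n} \quad \text{for every integer } p \ge 2,
\]
and then applying Markov's inequality to $|F-\bE F|^p$ with $p \asymp \log(1/\beta)$ to obtain the stated tail bound. First I would handle the mean: the classical in-expectation argument gives $|\bE F(S)| \le \gamma$ by writing $\bE[\emprisk(\bA(S),S)] = \frac{1}{n}\sum_{i=1}^n \bE[\ell(\bA(S),Z_i)]$, coupling each term with the dataset $S^{(i)}$ in which $Z_i$ is replaced by an independent copy $Z_i'$, and using exchangeability together with \Cref{def:uniform_stability}. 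This contribution is absorbed into the $c_1\gamma\log(n)\log(1/\beta)$ term, so it suffices to bound $F-\bE F$ in probability.

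The natural first attempt through McDiarmid's inequality is inadequate. Swapping one coordinate of $S$ changes $F$ by at most $\gamma + (b-a)/n$ (stability of the population-risk term plus replacing a single summand of the empirical risk), which yields $|F-\bE F| \lesssim \gamma\sqrt{n\log(1/\beta)} + (b-a)\sqrt{\log(1/\beta)/n}$. The empirical-risk piece is already optimal, but the stability piece overshoots by a factor of order $\sqrt{n}/\log n$. The reason is that the $\gamma$-sized swap costs are not truly independent across coordinates: they are coordinated through the single function $\bA$, so their aggregated effect on $F$ is substantially smaller than an independent accounting predicts, and the analysis must exploit this.

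To recover the correct rate I would work with the Doob martingale $D_i := \bE[F\mid Z_{1:i}] - \bE[F\mid Z_{1:i-1}]$, which satisfies $\|D_i\|_\infty \le \gamma + (b-a)/n$, and apply a Burkholder--Rosenthal $L^p$ inequality to split $\|F-\bE F\|_p$ into a sub-Gaussian piece of order $\sqrt{p\sum_i \bE[D_i^2]}$ and a sub-exponential piece of order $p\,\max_i\|D_i\|_\infty$. The sub-exponential piece immediately produces the linear-in-$p$ stability term, which after the Markov step becomes the $\gamma\log(n)\log(1/\beta)$ term of the theorem. The main obstacle is the sub-Gaussian piece: a pointwise bound only yields $\sum_i \bE[D_i^2] \le n(\gamma+(b-a)/n)^2$, exactly reproducing the McDiarmid failure. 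The resolution is a self-bounded, recursive moment argument in the spirit of the entropy method: $\sum_i D_i^2$ is itself a data-dependent functional satisfying a stability-type inequality, so the same moment bound can be applied to it rather than bounding it pointwise. Each iteration of this recursion converts a factor of $\sqrt{n}$ into a factor of $\log n$, and after $O(\log n)$ rounds the recursion stabilizes at a bound of order $\gamma^2\log^2(n) + (b-a)^2/n$. Plugging this back into the Burkholder--Rosenthal estimate produces the claimed moment inequality, and Markov's inequality with $p = \log(1/\beta)$ completes the proof.
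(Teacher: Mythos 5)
First, note that the manuscript does not prove this statement at all: it is quoted directly from \citet[Corollary 8]{bousquet2020sharper}, so the relevant comparison is with that cited proof (and the closely related argument of Feldman and Vondr\'ak). Your overall scaffolding is the right one and matches theirs at the top level: bound the mean by $\gamma$, prove a moment inequality of the form $\lVert \gen(\bA(S),S) - \bE[\gen(\bA(S),S)] \rVert_p \lesssim \gamma\, p \log n + (b-a)\sqrt{p/n}$, and convert it to a tail bound by Markov with $p \asymp \log(1/\beta)$; your diagnosis of why a single application of McDiarmid loses a factor $\sqrt{n}/\log n$ is also correct.

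The gap is in the step that is supposed to produce the $\log n$: the claim that the Burkholder--Rosenthal quadratic-variation term can be reduced from $n\gamma^2$ to order $\gamma^2\log^2 n + (b-a)^2/n$ by ``applying the same moment bound recursively'' to $\sum_i D_i^2$. That functional is not of the same type as $F$: it is not the generalization error of a uniformly stable algorithm, and no stability-type inequality for it is exhibited. Concretely, each Doob increment $D_i$ depends on all of $Z_1,\dots,Z_i$, a single coordinate swap can move every $D_j$ by order $\gamma$, so the bounded-differences constant of $\sum_i D_i^2$ is of order $n\gamma^2$ --- the same size as the trivial pointwise bound --- and the recursion as described gains nothing; moreover, since $\bE\big[\sum_i D_i^2\big] = \var(F)$, asserting that this quantity is $O(\gamma^2\log^2 n + (b-a)^2/n)$ is essentially the $p=2$ case of the inequality you are trying to prove, so the argument is circular at its base. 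The proofs in the literature avoid this by first using stability to \emph{decouple} the hypothesis from its own evaluation point: one writes $\gen(\bA(S),S)$, up to an additive $O(\gamma)$, as $\frac{1}{n}\sum_i g_i(S)$ where each $g_i$ is bounded by $b-a$, has conditional mean zero in its own coordinate $Z_i$ (because $Z_i$ is replaced by an independent copy inside $\bA$), and is $2\gamma$-stable with respect to the \emph{other} coordinates only. The moment inequality is then proved for sums with exactly this structure by an explicit recursion over roughly $\log n$ scales (binary splitting of the index set, or iterated subsampling), and it is the depth of that recursion --- not a self-bounding property of the Doob quadratic variation --- that produces the $\log n$ factor. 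Without the decoupled representation and a concrete, verifiable recursion, your sketch does not yield the theorem.
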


The bound in \Cref{th:generalization_uniform_stability} tells us that if an algorithm is uniformly stable, as long as the stability parameter $\gamma$ decreases with respect to the number of samples $n$ faster than logarithmically, then the algorithm will generalize for a large enough training dataset. The framework of uniform stability has been relatively successful in providing us with generalization error guarantees for known algorithms, that is, we know that many known algorithms are uniformly stable. For example, the ERM solution to convex learning problems with a strongly convex regularization term~\cite{shalev2010learnability} is uniformly stable with $\gamma \in \Theta( \nicefrac{1}{\sqrt{n}})$. Moreover, there is a line of work establishing the uniform stability of SGD under different combinations of conditions such as the Bernstein or Polyak--Łojasiewicz conditions, or smoothness, convexity, or Lipschitzness, among others~\cite{hardt2016train, kuzborskij2018data, lei2020fine, bassily2020stability, klochkov2021stability, charles2018stability, lei2023stability}, which is an encouraging direction to better understand the generalization in deep learning.

\subsection{Privacy as a Stability Measure}
\label{subsec:privacy_as_stability}

Recall from \Cref{subsec:renyi_divergence} that %
a \emph{privacy mechanism} is an algorithm that answers queries about a given dataset in a way that is informative about the queries themselves but not the specific records (or instances) in the dataset. Private learning algorithms are a special class of learning algorithms that employ a privacy mechanism to analyze the training dataset. Hence, these algorithms produce hypotheses that are uninformative about the dataset with which they were trained. In this way, private learning algorithms are stable, as ideally the hypotheses generated by the algorithm when presented with similar training datasets are also similar. 

To visualize this concept more clearly, let us use Dwork et al.'s definition of \emph{differential privacy}~\cite{dwork2006calibrating, dwork2014algorithmic}. A \emph{randomized} algorithm $\bA$ is $(\varepsilon, \delta)$-\emph{differentially private} if for all subsets of hypotheses $\cA \subseteq \cH$ and all neighbouring datasets $s$ and $s'$
\begin{equation}
\label{eq:differential_privacy}
	\bP \big[ \bA(s) \in \cA \big] \leq e^\varepsilon  \bP \big[ \bA(s') \in \cA \big] + \delta.
\end{equation}
If $\delta = 0$, then the algorithm is just $\varepsilon$-\emph{differentially private}. This definition clarifies the idea that private algorithms are stable: like for uniform stability, two input training datasets are ``similar'' if they are neighbours, and two output hypotheses are ``similar'' if their distributions are close as measured by~\eqref{eq:differential_privacy}.

Since differentially private algorithms are stable, under certain requirements on the privacy parameters $(\varepsilon, \delta)$, they should generalize. The connection between differential privacy (and other privacy notions like maximal leakage) and generalization has been previously studied~\cite{dwork2015preserving, dwork2015generalization, raginsky2016information, oneto2017differential, steinke2020reasoning, hellstrom2020generalization, rodriguez2021upper, esposito2021generalization, kulynych2022you, bun2023stability} and will be further discussed in more detail in~\Cref{ch:privacy_and_generalization}.

\subsection{Stability via Information Measures}
\label{subsec:stability_via_information_measures}

\begin{figure}[t]
\centering
\begin{tikzpicture}[->, line width=0.5pt, node distance=2cm, scale=1]]

	\node (S) at (0,0) {$S$};
	\node[draw, rectangle, minimum width=1cm] (algo) at (2,0) {$\bA \equiv \bP_H^S$};
	\node (H) at (4,0) {$H$};algo
	
	\path (S) edge (algo);
	\path (algo) edge (H);
\end{tikzpicture}
\caption{Illustration of a learning algorithm $\bA$ viewed as a channel processing a dataset $S$ to obtain a hypothesis $H$.}
\label{fig:algorithm_as_channel}
\end{figure}
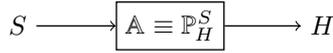

The stability notion borrowed from differential privacy in the previous subsection is inherently information-theoretic. To illustrate this, consider a (possibly randomized) algorithm $\bA$ characterized by the Markov kernel $\bP_H^S$, that is, given a fixed input dataset $s$, the output of the algorithm is a random hypothesis $H \coloneqq \bA(s)$ distributed according to $\bP_{H}^{S=s}$. If the algorithm is deterministic, then $\bP_H^{S=s} = \bI_{\{h = h_s \}}(h)$ and $\bA(s) = h_s$. From an information-theoretic perspective, an algorithm is a channel processing a dataset into a hypothesis (\Cref{fig:algorithm_as_channel}). Then, recall from \Cref{subsec:renyi_divergence} that an algorithm is $\varepsilon$-differentially private if, for every two neighbouring training datasets $s$ and $s'$, the output distributions of the hypotheses resulting from the channel processing are close in Rényi divergence of order $\infty$, that is, 
\begin{equation*}
	\renyidiv{\infty}(\bP_H^{S=s} \Vert \bP_H^{S=s'}) \leq \varepsilon.	
\end{equation*}

In this way, an algorithm is \emph{$\varepsilon$-$\mathrm{IM}$ stable} if, given two neighbouring input datasets, the difference between the two output hypotheses, as measured by some information measure $\mathrm{IM}$ like the ones presented in \Cref{sec:information_theory}, is smaller than $\varepsilon$. This definition, like uniform stability, considers two input datasets to be ``similar'' if they are neighbours, and understands that two output hypotheses are ``similar'' if their distributions are close according to some information measure. Other common examples of this information-theoretic stability are total variation, relative entropy, and Wasserstein distance stability~\cite{bassily2016algorithmic, raginsky2016information}. As for the previous definitions of stability, if an algorithm satisfies any of these notions, its generalization error is also bounded~\cite{bassily2016algorithmic, raginsky2016information, steinke2020reasoning, rodriguez2021upper}. 

So far, the stability definitions were agnostic to the data distribution $\bP_Z$. \citet{raginsky2016information} introduced the concept of \emph{$(\varepsilon, \bP_Z)$-$\mathrm{IM}$ stability} to take into account the effect of the data distribution into the stability of an algorithm. Given a dataset $s = (z_1, \ldots, z_n)$, define $s^{-i} = (z_1, \ldots, z_{i-1}, z_{i+1}, \ldots, z_n)$ as the dataset obtained by removing the $i$-th sample from $s$. In this way, the distribution 
\begin{equation*}
	\bP_H^{S^{-i}=s^{-i}} = \bP_H^{S = (z_1, \ldots, z_{i-1}, Z_i, z_{i+1}, \ldots, z_n)} \circ \bP_Z
\end{equation*}
represents the expected distribution of the output hypothesis after processing a dataset $S$ where $Z_i$ is distributed according to the data distribution $\bP_Z$ and the other instances are fixed $S^{-i} = s^{-i}$. Then, $(\varepsilon, \bP_Z)$-$\mathrm{IM}$ stability guarantees that, on average, the output hypothesis does not change much when each instance of the training dataset is changed by another instance from the data distribution. For example, an algorithm is $(\varepsilon, \bP_Z)$-\emph{total variation stable} if\footnote{The original definition in~\cite{raginsky2016information} was written in a different, but equivalent form. We choose this presentation to ease the understanding of the concepts.}
\begin{equation*}
	\frac{1}{n} \sum_{i=1}^n \bE \Big[ \tv\big( \bP_H^{S} , \bP_H^{S^{-i}} \big) \Big] \leq \varepsilon.
\end{equation*}

Taking this argument to the extreme, one may consider how much, on average, the distribution of the output hypothesis $\bP_H^{S=s}$ for a given realization $S=s$ of the training dataset changes with respect to the prototypical distribution on samples from the data distribution $\bP_H = \bP_H^S \circ \bP_S$. For example, this was the rationale considered in~\cite{alabdulmohsin2015algorithmic,alabdulmohsin2017information} to define the stability of an algorithm and to prove generalization guarantees in terms of the total variation. This argument can also be interpreted as to how much the algorithm's output distribution depends on the input training dataset. Both of these interpretations are paramount in the information-theoretic generalization framework introduced in the next section. Therefore, the stability framework to characterize the generalization of learning algorithms, when the stability notion is defined via privacy or information measures, can be understood as belonging to the information-theoretic generalization framework and \emph{vice versa}. The lines between the different frameworks are blurry and, in the end, the terminology barely depends on personal taste.

\section{Information-Theoretic Generalization}
\label{sec:information_theoretic_generalization}

The MDL principle provides us guarantees that are specific for each hypothesis $h$ in the class $\cH$, unlike those given to us by uniform convergence or the Rademacher complexity. Algorithmic stability takes the algorithm into account and gives us generalization guarantees that are specific to the algorithm $\bA$ used to find the hypothesis. Information-theoretic generalization also provides us with guarantees that are specific to the learning algorithm and, depending on the level of specificity, that are also specific to each hypothesis. Not needing to provide guarantees that hold uniformly for all elements in the hypothesis class $\cH$ allows these frameworks to attain tighter characterizations of the generalization of learning algorithms.

\looseness=-1 Often, the guarantees the information-theoretic generalization framework provides us do not hold for every data distribution $\bP_Z$, and they are specialized to different classes of data distributions. These classes are chosen depending of the behavior of the loss random variable $\ell(h,Z)$ for hypotheses in $h \in \cH$. This consideration allows the framework to derive bounds on the population risk for potentially unbounded losses and separates it from frameworks like uniform stability. 

The information-theoretic framework, like algorithmic stability via information measures, considers the (possibly randomized) algorithm as a channel processing a dataset into a hypothesis (\Cref{fig:algorithm_as_channel}). Essentially, this framework encompasses all generalization guarantees that depend on information measures like those presented in \Cref{sec:information_theory} involving the algorithm's Markov kernel $\bP_H^S$ and the data distribution $\bP_Z$ in some capacity. A common theme in the guarantees obtained in this framework is that they can be interpreted with classically information-theoretic concepts like \emph{information} or \emph{compression}. 

Intuitively, the more information the algorithm's output hypothesis captures about the dataset that it used for training, the worse it will generalize.  The reason is closely related to the concept of \emph{overfitting}, which is a phenomenon that occurs when the output hypothesis describes very well the training data but fails to describe the underlying distribution. The idea is that, in order to perfectly describe the training data, the algorithm adapted to the sampling noise for the specific data it observes, which may differ from future observations. This can be quantified, for example, with the dissimilarity between the distribution of the algorithm's output $\bP_H^{S=s}$ and the smoothed, prototypical distribution on samples from the data distribution $\bP_H = \bP_H^S \circ \bP_S$, for instance $\relent(\bP_H^S \Vert \bP_H)$~\cite{shawe1997pac,mcallester1998some,mcallester1999pac,mcallester2003pac,zhang2006information,raginsky2016information}. This example is purposely chosen to highlight again the connection between stability via information measures and information-theoretic stability.

This intuition also follows the parsimonious philosophical principle of the Occam's razor from \Cref{sec:mdl_and_occams_razor}: given two algorithms that output hypotheses with the same empirical risk, the one that extracts the least information, or needs the least bits to be compressed, is preferred.

In practice, the foundational generalization bounds from this framework are usually obtained combining a \emph{change of measure} (or a \emph{decoupling lemma}) and a \emph{concentration inequality}. For randomized algorithms, the population risk $\poprisk(H)$ is a random variable that depends on the joint distribution $\bP_S \otimes \bP_H^S$, where $S$ is the random training dataset and $\bP_S = \bP_Z^{\otimes n}$. This dependence between the hypothesis $H$ and the dataset $S$ makes the usage of standard concentration inequalities around the empirical risk impossible. For this reason, a common first step is to use a change of measure to consider the population risk $\poprisk(H')$ of an auxiliary random hypothesis $H' \sim \bQ$  that is independent (or \emph{decoupled}) of the training data $S$. Ideally, the chosen distribution $\bQ$ is one that allows us to control the new population risk using standard concentration inequalities like those in~\cite{boucheron2003concentration}. The penalty for studying the risk of the auxiliary hypothesis $H'$ instead of the real one $H$ is captured by an information measure that describes the dissimilarity between their distributions. When the chosen distribution is the prototypical distributions on samples from the distribution $\bP_H$, then we recover the intuition and interpretation given above. We will go deeper into this in \Cref{ch:expected_generalization_error,ch:pac_bayesian_generalization,ch:privacy_and_generalization}, but the reader is also referred to the review papers~\cite{alquier2021user, hellstrom2023generalization}.

The main criticism of this framework is that in order to find computable generalization guarantees it is necessary to evaluate or further bound the information measures between distributions. This is often difficult or impractical, leading to crude bounds that may be vacuous~\cite{bassily2018learners,livni2020limitation,haghifam2023limitations,livni2023information}. However, they can still lead to non-vacuous generalization statements in deep learning~\cite{dziugaite2017computing,dziugaite2018data,rivasplata2019pac,perez2021tighter,zhou2019non,lotfi2022pac,rodriguez2023morepac}, to recover from below known results for hypothesis classes with a bounded VC or Natarajan dimension~\cite{steinke2020reasoning,grunwald2021pac,hellstrom2022new}, or helps us better understand noisy, iterative algorithms like stochastic gradient descent~(SGD)~\cite{neu2021information}, among other upsides. More examples of the benefits obtained from guarantees derived from this framework are described in the following chapters.

\subsection{Levels of Specificity}
\label{subsec:levels_of_specificity}

There are different levels of specificity when it comes to the guarantees provided to us by information-theoretic generalization. The specificity levels come depending if we require the results to hold for a specific hypothesis returned from the algorithm given a specific training dataset, or if we only need them to hold on average for the outputs of the algorithm given a specific training dataset, or, finally, if we need them to hold on average for the outputs of the algorithm given training datasets from a certain distribution. These specificity levels are also referred to as ``flavours'' of generalization~\cite{hellstrom2020generalization, hellstrom2023generalization} and are further described below from less, to more specific together with a classical example. All guarantees are provided to a given algorithm described by a Markov kernel $\bP_H^S$ and a data distribution $\bP_Z$.

\begin{itemize}
	\item Generalization guarantees \emph{in expectation} (or ``PAC-Bayesian guarantees in expectation'', ``expectedly approximately correct (EAC) guarantees''~\cite{dalalyan2008aggregation,salmon2011optimal,dalalyan2012sharp}, or ``mean approximately correct (MAC) guarantees''~\cite{grunwald2021pac}). This is the least specific level. This kind of guarantee states that
	\begin{equation*}
		\bE[ \gen(H,S) ] \leq \alpha_{\mathrm{exp}},
	\end{equation*}
	where the expectation is taken with respect to the marginal, prototypical distribution $\bP_H = \bP_H^S \circ \bP_S$, where we recall that $\bP_S = \bP_Z^{\otimes n}$.
	
	For a loss with a range contained in $[a,b]$, the classical example of this guarantee using the Donsker and Varadhan~\Cref{lemma:dv_and_gvp} is~\cite{raginsky2016information}
	\begin{equation*}
		\alpha_{\mathrm{exp}} = (b-a) \sqrt{\frac{\minf(H;S)}{2n}}.
	\end{equation*}

	\item \emph{PAC-Bayesian} guarantees. These guarantees are specific to the observed realization of the training dataset $S$. More precisely, this kind of guarantee states that, for all $\beta \in (0,1)$, with probability no less than $1 - \beta$,
	\begin{equation*}
		\bE^S [ \gen(H,S) ] \leq \alpha_{\mathrm{PAC\textnormal{-}Bayes}},
	\end{equation*}
	where the probability is taken with respect to the draw of the training data from $\bP_S$ and the expectation is taken with respect to the conditional distribution of the output hypothesis for that training data $\bP_H^S$.
	
	For a loss with a range contained in $[a,b]$, the classical example of this guarantee using the Donsker and Varadhan~\Cref{lemma:dv_and_gvp} is~\cite{mcallester1998some,mcallester1999pac,mcallester2003pac} 
	\begin{equation}
		\label{eq:mc_allester_prev}
		\alpha_{\mathrm{PAC\textnormal{-}Bayes}} = (b-a) \sqrt{\frac{\relent(\bP_H^S \Vert \bQ) + \log \frac{\xi(n)}{\beta}}{2n}},
	\end{equation}
	where $\xi(n) \in [\sqrt{n}, 2 + \sqrt{2n}]$~\cite{maurer2004note,germain2015risk,rodriguez2023morepac} and $\bQ$ is any distribution on $\cH$ such that $\bP_H^S \ll \bQ$. Note that compared to the guarantee in expectation, this includes a penalty for the confidence $1-\beta$ of the statement.
	
	Note that if the learning algorithm is deterministic (that is, if the algorithm's Markov kernel is $\bP_H^S(h) = \bI_{\{ h = h_S \}}(h)$) and the hypothesis class is discrete, then $\relent(\bP_H^S \Vert \bQ) = - \log \bQ[h_S]$ and the PAC-Bayesian guarantees are equivalent to the guarantees from the MDL framework (see \Cref{sec:mdl_and_occams_razor} above).
	
	\item \emph{Single-draw PAC-Bayesian} guarantees (or ``pointwise or derandomized PAC-Bayesian guarantees''~\cite{alquier2013sparse,guedj2013pac})\footnote{We choose this name as we feel it is the one that better clarifies that the bounds are on a single draw of the hypothesis rather than on the algorithm's distribution. This is also the name employed in~\cite{hellstrom2020generalization,hellstrom2023generalization}}. These guarantees are specific to the observed realization of the training dataset $S$ and the particular output hypothesis returned by the algorithm. To be precise, this kind of guarantee states that, for all $\beta \in (0,1)$, with probability no less than $1 - \beta$,
	\begin{equation*}
		\gen(H,S) \leq \alpha_{\mathrm{sdPAC}},
	\end{equation*}
	where the probability is taken with respect to the draw of the training data from $\bP_S$ and the later draw of the algorithm's output hypothesis from $\bP_H^S$.
	
	For a loss with a range contained in $[a,b]$, the classical example\footnote{The bound was not originally given in this form in \cite{rivasplata2020pac}, but one can recover it after routine manipulations.} of this guarantee using the change of measure stemming from the Radon--Nikodym theorem~\eqref{eq:change_of_measure} is~\cite{rivasplata2020pac}
	\begin{equation*}
		\alpha_{\mathrm{sdPAC}} = (b-a) \sqrt{\frac{\log \frac{\rmd \bP_H^S}{\rmd \bQ}(H) + \log \frac{\xi(n)}{\beta}}{2n}},
	\end{equation*}
	where we recall that $\nicefrac{\rmd \bP_H^S}{\rmd \bQ}$ is the Radon--Nikodym derivative between the algorithm's output distribution and the reference distribution $Q$. Note that compared to standard PAC-Bayesian and the expectation guarantees, this changes depending on the particular realization of the output hypothesis.
\end{itemize}

\looseness=-1 The more specific guarantees give us more practical information. For example, the single-draw PAC-Bayesian guarantees hold for the particular realization of the hypothesis that an algorithm returns for a particular dataset. However, they are often harder to calculate or control. On the other hand, the less specific guarantees gives us more abstract information and are often easier to calculate and control. For instance, the mutual information can be upper bounded by other quantities relevant to a particular algorithm, giving us a concrete understanding of the important elements for that algorithm to generalize. For example, for the stochastic gradient Langevin dynamics (SGLD)~\cite{gelfand1991recursive,welling2011bayesian} algorithm, we learnt that the gradient incoherence between samples important in determining its generalization~\cite{negrea2019information,haghifam2020sharpened,rodriguez2020randomsubset} (see \Cref{sec:noisy_iterative_learning_algos}). More examples of these bounds' usefulness are given in \Cref{ch:expected_generalization_error}.  

Before moving to some bibliographic remarks we clarify some notation. We say that a bound has a \emph{fast rate} if it decreases linearly with the number of samples, that is, if it is in $\cO(\nicefrac{1}{n})$. For example, the guarantees for the realizable setting in classification problems provided by uniform convergence. Similarly, we say that a bound has a \emph{slow rate} if it is in $\cO(\nicefrac{1}{\sqrt{n}})$. If a bound has a rate different than these two, we will refer to the rate in comparison to them. For probabilistic bounds, we say that a bound is of \emph{high probability} if its dependence with the confidence parameter $\beta$ is logarithmic, that is, it depends on $\log \nicefrac{1}{\beta}$ like~\eqref{eq:mc_allester_prev}. Other dependencies, such as the linear one $\nicefrac{1}{\beta}$, will simply be referred to as not of high probability.\footnote{There are other notions of ``high probability''. For example, a common definition states that a bound is of high probability if the probability of failure $\beta$ can be made arbitrarily close to 0 as the number of samples tend to infinity~\citep{wainwright2019high}. We choose this nomenclature, similarly to~\citet{hellstrom2020generalization}, since it helps us distinguish between an exponential decay in the probability of failure (when the dependence is logarithmic) and weaker decays such as polynomial.}

\subsection{Bibliographic Remarks}
\label{subsec:bibliographic_remarks_primer}

Arguably, the first relation of information-theoretic concepts and generalization guarantees is due to \citet[Section 4.6]{vapnik1999nature} prior to 1995. He formalized a relationship between the MDL principle from~\Cref{sec:mdl_and_occams_razor} and compression to a generalization bound similar to~\eqref{eq:mdl_generalization_compression}. After that, Shawe-Taylor, Bartlett, and Williamson~\citep{shawe1996framework,shawe1997pac} introduced the first PAC-Bayesian bounds using a luckiness factor in 1996-1997, and those were further developed by \citet{mcallester1998some,mcallester1999pac,mcallester2003pac} in 1998-2003. These latter results first obtained another formal relationship between the MDL principle and generalization, now in the form of~\eqref{eq:mdl_generalization}, and then evolved into the more general PAC-Bayesian bound from~\eqref{eq:mc_allester_prev}. Interestingly, McAllester did not use neither the Donsker and Varadhan nor the Gibbs~\Cref{lemma:dv_and_gvp} to obtain the result and, to our knowledge, the first to do so in the context of generalization was \citet{seeger2002pac} in 2002, who actually re-discovered it. After that, the usage of this result became customary in the PAC-Bayesian literature, popularized by \citet{audibert2004better}, \citet{catoni2003pac,catoni2007pac}, \citet{zhang2006information}, and Germain and others~\citep{germain2009pac,begin2016pac}, although some re-derived it again and gave it a different name for the context of generalization, such as the \emph{information exponential inequality} from \citet{zhang2006information} in 2006.

Information-theoretic generalization bounds in expectation were derived in the PAC-Bayesian community, at least, since 2006~\cite{alquier2006transductive,catoni2007pac,juditsky2008learning}. These were popularized in 2016 after \citet{xu2017information} extended the results from \citet{russo2016controlling} from the bias of adaptive data analysis to the generalization error or learning algorithms. The popularity of these results came, in part, for the simplicity of the proofs that directly used the Donsker and Varadhan~\Cref{lemma:dv_and_gvp} and the fact that the bound explicitly featured the mutual information. The combination of these two facts allowed for the development of many results and interpetations after that, some of which will be discussed later in \Cref{ch:expected_generalization_error}.

\begin{remark}
Another, previous conceptual connection between information-theoretic concepts and generalization guarantees came from Dudley's \emph{metric entropy}~\cite{dudley2010universal} to bound the rate of uniform convergence~\cite{pollard1984convergence} in 1984. For a metric hypothesis space $(\cH, \rho)$, an $\varepsilon$-net is a set of hypotheses $\cN(\varepsilon,\cH,\rho)$ that cover the hypothesis class with $\varepsilon$ precision, that is, that for all hypothesis $h \in \cH$ there is a hypothesis $h' \in \cN(\varepsilon,\cH,\rho)$  such that $\rho(h,h') \leq \varepsilon$. The metric entropy is defined as the logarithm of the covering number $\log |\cN^\star(\varepsilon,\cH,\rho,)|$, which is defined as the smallest cardinality of an $\varepsilon$-net. This concept measures the spread of the hypotheses in the class $\cH$, relating it with the entropy. Moreover, it coincides with the entropy of a uniform distribution on the elements of the cover.
\end{remark}

\section{Parameterized Models}
\label{sec:parameterized_models}

Modern learning algorithms are often a combination of a parameterized model and an optimization algorithm, where the parameters are usually referred to as the \emph{weights}. For example, a parameterized, differentiable network and stochastic gradient descent (SGD). 

Consider for example a supervised learning problem where we recall that the instances $z = (x,y)$ are a tuple formed by a feature and a label, and the hypotheses $h: \cX \to \cY$ are functions that return a label when given a feature. In a parameterized model, the hypotheses $h_w : \cX \to \cY$ are completely described by the weights $w \in \cW$. In this way, the hypothesis space is
\begin{equation}
	\label{eq:hypothesis_class_weights}
	\cH = \{ h_w : w \in \cW \}.
\end{equation}
Therefore, each weight $w$ uniquely determines a hypothesis $h_w \in \cH$, but the reverse is not necessarily true: there can be some hypothesis $h$ that results from multiple weights. For example, consider the simple parameterized model $h_w(x) = \sign (x w)$ with $\cX = \cW = \bR$. Here, every choice of $w \in \cW$ uniquely determines the hypothesis $h_w$. On the other hand, the hypothesis $h_w(x) = \sign(x)$ results from every positive weight $w > 0$ and the hypothesis $h_w(x) = - \sign(x)$ results from every negative one $w < 0$. In other words, the relationship between weights and hypotheses is \emph{onto} but not \emph{one-to-one}.

This onto relationship between parameters and weights allows us to find generalization guarantees considering the weights instead of the hypotheses themselves.
\begin{itemize}
	\item Recall the results stemming from uniform convergence or the Rademacher complexity from \Cref{sec:pac_learning,sec:rademacher_complexity}, these results needed to hold for every hypothesis $h$ in the hypothesis class $\cH$. If, instead, we ensure that they hold for every weight $w$ in the weight space $\cW$, they will necessarily hold for all hypotheses in $\cH$ as per~\eqref{eq:hypothesis_class_weights}.
	\item The reasoning is similar for the guarantees coming from the MDL principle in \Cref{sec:mdl_and_occams_razor}, as one can describe every hypothesis $h_w$ by describing the weights $w$. When multiple weights result in the same hypothesis, then the one that is more easily described will be selected as per the Occam's razor.
	\item For deterministic algorithms and uniform stability (\Cref{subsec:uniform_stability}), the algorithm is at the center and therefore considering the weights that uniquely determine they hypothesis or the hypothesis itself is irrelevant.
	\item \looseness=-1 When privacy is considered as a stability measure (\Cref{subsec:privacy_as_stability}) the rationale is slightly different. When the model is parameterized, usually the private algorithm returns the weights $W$, and these are the ones that enjoy the privacy guarantees. Luckily, most privacy frameworks like differential privacy or maximal leakage have \emph{post-processing} guarantees~\cite{dwork2014algorithmic,issa2020operational,saeidian2023apointwise}. This guarantee states that no amount of post-processing can degrade the privacy guarantees. Therefore, the hypothesis $h_W$ maintains the same privacy guarantees and therefore also the same generalization guarantees derived from them.
	\item Finally, for algorithmic stability based on information measures or information-theoretic generalization (\Cref{subsec:stability_via_information_measures,sec:information_theoretic_generalization}), the maintainance of the generalization guarantees comes from the data processing inequality (see~\Cref{prop:properties_relative_entropy} and the rest of \Cref{sec:information_theory}). Consider, for example, the relative entropy: for two weights $W$ and $W'$, this inequality guarantees that $\relent(\bP_{h_W} \Vert \bP_{h_{W'}}) \leq \relent(\bP_W \Vert \bP_{W'})$. Therefore, the guarantees obtained for the weights also translate to the hypotheses.
\end{itemize}

\looseness=-1 For this reason, a good volume of the information-theoretic generalization literature presents their results abusing notation and describing the random hypothesis returned by the algorithm with the letter $W$ and using it indistinguishably from the model's weights. Henceforth, in this manuscript, we will also present our results in this way. However, it is important to mention that, until the weights are employed to find specific guarantees for some particular algorithm like the SGLD in~\Cref{sec:noisy_iterative_learning_algos}, the results hold for any hypothesis and not only for parameterized models.

\begin{remark}
	\looseness=-1 In information theory, the letter $H$ is historically reserved to the Shannon entropy $\ent$ and expressions like $\ent(H)$ can be confusing. Pragmatically, this is also a reason why in information-theoretic generalization the letter $W$ is employed instead of the classical $H$ in learning theory.
\end{remark}

\chapter{Generalization Guarantees in Expectation}
\label{ch:expected_generalization_error}
In this chapter, we discuss generalization guarantees in expectation within the framework of information-theoretic generalization introduced in~\Cref{sec:information_theoretic_generalization}. This generalization guarantee is the least specific within the three levels of specificity described in~\Cref{subsec:levels_of_specificity}. To be precise, the guarantees of this kind describe what is the smallest expected difference between the population risk and the empirical risk. In other words, if we consider the empirical risk as an estimator of the population risk, these guarantees look for an expression that characterizes (or bounds from above) the bias of the said estimator. 

The first question we should ask ourselves before continuing the study is: ``\emph{What do we desire from guarantees in expectation?}''. If the purpose is to find an estimator with a small bias, we could evaluate the loss on a single sample and obtain an unbiased estimator. The objective with these kinds of bounds is to gain \emph{understanding}. The reason why the estimator taken is the empirical risk is no coincidence, as most learning algorithms return a hypothesis trying to minimize this objective. Therefore, we seek to know what drives apart the performance of an algorithm on its objective (the population risk) with respect to the proxy it uses to return a hypothesis (the empirical risk). Bounds in expectation, due to their simplicity, allow us to find the average performance discrepancy more easily and get a better idea of a problem. For example, through the lens of the uniform stability of~\Cref{subsec:uniform_stability}, we know that, under certain conditions on the loss and the parameter's space, the parameters of gradient descent can be chosen so that the expected generalization rate is in $\cO(\nicefrac{1}{n})$ (see \Cref{subsec:gd_in_clb_setting}). 

A second question should be: \emph{``Why do we study information-theoretic bounds?''}. The reason has been mostly outlined in~\Cref{ch:generalization_theory}, particularly in~\Cref{sec:information_theoretic_generalization}, and it is because this framework can give us guarantees that are specific to the learning algorithm and the data distribution. This greatly differs from previous frameworks where the guarantees are given \emph{uniformly} to a whole class of hypothesis classes and data distributions like the ones in~\Cref{sec:pac_learning,sec:rademacher_complexity}. As we will see shortly in~\Cref{sec:bounds_using_mutual_information}, a common information-theoretic bound states that the generalization error of an algorithm evaluated on a loss with a bounded range is bounded from above by $$\bE \big[ \gen(W;S) \big] \leq (b-a) \sqrt{\frac{\minf(W;S)}{2n}},$$ where $n$ is the number of samples and $\minf(W;S)$ is the mutual information that the algorithm's output hypothesis $W$ has about the training data $S$ (see \Cref{fig:algorithm_as_channel_and_backward_channel}). If, in our problem, the data distribution $\bP_Z$ belongs to a well-behaved class, we may work out how much information the algorithm will capture about the data and find a \emph{specific} bound in that situation, even though the bound does not hold \emph{uniformly} over all algorithms and all data distributions. Moreover, a bound of this kind gives a better understanding of what can hinder the generalization performance of an algorithm: the more information about the data that is encoded into the output of an algorithm, the worse it generalizes.

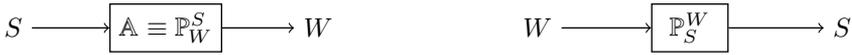
\begin{figure}[t]
\centering
\begin{subfigure}[b]{0.45\textwidth}
    \centering
    \begin{tikzpicture}[->, line width=0.5pt, node distance=2cm, scale=1]]

		\node (S) at (0,0) {$S$};
		\node[draw, rectangle, minimum width=1cm] (algo) at (2,0) {$\bA \equiv \bP_W^S$};
		\node (W) at (4,0) {$W$};
	
	    \path (S) edge (algo);
	    \path (algo) edge (W);
    \end{tikzpicture}
\end{subfigure}
\hfill
\begin{subfigure}[b]{0.45\textwidth}
    \centering
    \begin{tikzpicture}[->, line width=0.5pt, node distance=2cm, scale=1]]

		\node (W) at (0,0) {$W$};
		\node[draw, rectangle, minimum width=1cm] (reverse) at (2,0) {$\bP_S^W$};
		\node (S) at (4,0) {$S$};
	
	    \path (W) edge (reverse);
	    \path (reverse) edge (S);
    \end{tikzpicture}
\end{subfigure}

\caption{\looseness=-1 Illustration of a learning algorithm $\bA$ viewed as a channel processing a dataset $S$ to obtain a hypothesis $W$ (left), and of the backward channel describing the processing of a hypothesis $W$ to obtain the dataset with which it was trained (right).}
\label{fig:algorithm_as_channel_and_backward_channel}
\end{figure}

With this motivation in mind, the chapter starts in~\Cref{sec:bounds_using_mutual_information} describing bounds using mutual information as their main contributor. This section is based on the results from~\citet{xu2017information} and~\citet{bu2020tightening} in~\Cref{subsec:slow_rate_cmi}, and our results from~\citep{rodriguez2023morepac,rodriguez2024moments} in~\Cref{subsec:a_fast_rate_bound_mi,subsec:losses_with_bounded_moment}, including some new reflections that are not published. Then, the chapter continues in~\Cref{sec:bounds_using_conditional_mutual_information}, describing bounds using conditional mutual information as the main element and it is based on the results from~\citet{steinke2020reasoning} and \citet{hellstrom2022new}. This conditional mutual information shifts the focus from how much information the algorithm's output has about the training data to how much we can identify the training data based on the algorithm's output. This change in perspective will prove relevant.

After that, in~\Cref{sec:random_subset_and_single_letter}, we describe some variants of the bounds in the previous two sections that have proven useful for gaining insights into the elements that hinder generalization: single-letter and random-subset bounds. The first kind notes that we can focus on the information that the algorithm's output has \emph{individually} about each data instance, and the second kind allows us to leverage some of the problem's information to design problem-specific bounds later on. This section is based mostly on the works from~\citet{bu2020tightening}, \citet{negrea2019information}, \citet{haghifam2020sharpened}, and our results in~\citep{rodriguez2020randomsubset,rodriguez2021tighter}. 

In \Cref{sec:bounds_using_wasserstein_distance}, we abstract the formula to obtain the bounds so far and present bounds using the Wasserstein distance. These bounds serve two purposes. First, they allow us to consider the geometry of the hypothesis space \emph{and} the dependence between the algorithm's output and the training set. Second, they allow us to re-discover most of the bounds in the previous \Cref{sec:bounds_using_mutual_information,sec:bounds_using_conditional_mutual_information,sec:random_subset_and_single_letter} with different assumptions and to prove that these new bounds are tight for non-trivial situations. This section is mainly based on our results in~\citep{rodriguez2021tighter} and part of our results in~\citep{haghifam2023limitations}. 

Then, in \Cref{sec:noisy_iterative_learning_algos}, we describe an application of the mutual information and conditional mutual information-based bounds in the context of noisy, iterative learning algorithms. We show how the presented bounds can be used to find factors that may drive the generalization performance of the stochastic gradient Langevin dynamics and stochastic gradient descent. This chapter is based on the results from~\citet{pensia2018generalization}, \citet{bu2020tightening}, \citet{negrea2019information}, \citet{haghifam2020sharpened}, our results from~\citep{rodriguez2020randomsubset}, and \citet{neu2021information}. 

In \Cref{sec:futher_advances_bounds_using_information_measures}, we describe further advances in information-theoretic generalization error bounds. This section is mainly based on the results from many other works~\citep{steinke2020reasoning,haghifam2021towards,hellstrom2022new,harutyunyan2021information,Wang2023TighterIG,wang2024sample,asadi2018chaining,asadi2020chaining}, with the exception of~\Cref{th:ecmi}, which our result from~\citep{haghifam2023limitations}. Finally, in~\Cref{sec:limitations_bounds_using_mutual_information}, we challenge the bounds described throughout the chapter and show that, despite being tight, there are problems where frameworks like uniform stability from~\Cref{subsec:uniform_stability} can guarantee that an algorithm generalizes, but that the information-theoretic bounds previously discussed fail. This last section is based on our results from~\citep{haghifam2023limitations}.

The chapter intends to be didactic and convey the key messages and results from the current information-theoretic bounds on the expected generalization error. This is why many other works are included together with our own. This is also the reason why many of both the other works and ours are  (i) written in a different way than what can be found in the papers, (ii) contain different, more pedagogical proofs, and/or (iii) are more general and extend the published results. We hope the reader can benefit from this chapter, which complements the recent survey from~\citet {hellstrom2023generalization}.

\section{Bounds Using Mutual Information}
\label{sec:bounds_using_mutual_information}

In this section, we will consider the problem of bounding the generalization error from above. To do so, we will consider the analogous problem of bounding the bias of the empirical risk in estimating the population risk when the hypothesis is obtained using the same training set used to calculate the empirical risk. This framework inspired \citet{xu2017information} to adapt the work from \citet{russo2016controlling} studying the bias in generic adaptive data analysis that later, to the context of generalization theory.\footnote{Interestingly, essentially the same building result appeared previously in~\citep[Exercise 4.13]{boucheron2003concentration} in an abstract manner, in the context of information inequalities.} Their derived bounds depend on the amount of information that the hypothesis returned by the algorithm $W$ has about the training set $S$ that it received, that is, $\minf(W'S)$. Throughout the section, we will discuss the implications of such an information measure and different generalization bounds that can be derived depending on the generality of the assumptions on the loss function $\ell$.

In~\Cref{subsec:a_slow_rate_bound_mi}, after understanding the ideas from~\citet{xu2017information} for losses with a bounded range in~\Cref{subsec:a_slow_rate_bound_mi}, we will see~\citet{bu2020tightening}' generalization the results to losses with a bounded CGF. These bounds are characterized by their slow rate in $\cO(\sqrt{\nicefrac{\minf(W;S)}{n}})$. Then, we will present our~\citep{rodriguez2024moments} tighter bounds when the loss has a bounded range along with \citet{catoni2007pac}'s results in this setting in~\Cref{subsec:a_fast_rate_bound_mi}. These bounds are characterized by their fast rate in $\cO(\nicefrac{\minf(W;S)}{n})$. Finally, in~\Cref{subsec:losses_with_bounded_moment}, we will conclude discussing our results from~\citep{rodriguez2024moments} showing how we can interpolate between fast rate and slow-rate bounds depending on which moments of the loss function are bounded. In particular, when only the second moment is bounded, we can achieve a slow-rate bound, and in the limit, when all moments are bounded, we recover a fast-rate bound.

\subsection{A Slow-Rate Bound: Losses With a Bounded CGF}
\label{subsec:a_slow_rate_bound_mi}

Consider a random hypothesis $W'$ distributed according to the data-independent distribution $\bP_{W'} = \bQ$. The expected generalization error of this hypothesis $W'$ is $\bE [ \gen(W',S) ] = 0$ since the instances are identically distributed and hence the empirical risk is an unbiased estimator of the population risk, namely
\begin{equation*}
	\bE [ \emprisk(W',S) ] = \frac{1}{n} \sum_{i=1}^n \bE_{w' \sim \bQ} \big[ \bE[ \ell(w',Z_i) ] \big] = \bE[\poprisk(W')].
\end{equation*}

However, by construction, a learning algorithm returns a hypothesis sampled from the conditional distribution $\bP_W^S$, and therefore, it depends on the training data $S$. If the empirical risk is considered an estimator of the population risk, then this dependence makes it potentially biased.

A way around this bias problem is to do a change of measure and study the generalization error of the data-independent hypothesis $W'$. A change of measure is a result that relates the expectation of two random objects with distinct distributions and a correction term that determines how far apart these two distributions are. The canonical example is the change of measure~\eqref{eq:change_of_measure} resulting from the Radon--Nikodym~\Cref{th:radon-nikodym}. Nonetheless, now we will employ the Donsker and Varadhan~\Cref{lemma:dv_and_gvp}. In~\eqref{eq:dv}, let the measurable space be $\cW \times \cZ^{\otimes n}$, the distributions be $\bP_{W,S}$ and $\bQ \otimes \bP_S$, and the function be $\lambda \gen(w,s)$ for some $\lambda \in \bR$; then
\begin{equation}
	\label{eq:dv_to_lambda_gen_prev}
	\relent(\bP_{W,S} \Vert \bQ \otimes \bP_S) \geq \lambda \bE \big[ \gen(W,S) \big] - \log \bE \Big[ e^{\lambda \gen(W',S)} \Big].
\end{equation}
Re-arranging the equation, we obtain an upper bound of the expected generalization error that holds for every $\lambda > 0$, namely
\begin{equation}
	\label{eq:dv_to_lambda_gen}
	\bE \big[ \gen(W,S) \big] \leq \frac{1}{\lambda} \bigg( \relent(\bP_{W,S} \Vert \bQ \otimes \bP_S)  + \log \bE \Big[ e^{\lambda \gen(W',S)} \Big] \bigg).
\end{equation}

Dissecting~\eqref{eq:dv_to_lambda_gen}, we note that the Donsker and Varadhan~\Cref{lemma:dv_and_gvp} served effectively to decouple the hypothesis $W$ and the data $S$. The effects of this decoupling are manifested in the right-hand side of the equation:
\begin{enumerate}
    \item The dependence between the two random objects is now captured by the relative entropy $\relent(\bP_{W,S} \Vert \bQ \otimes \bP_S)$, where $\bP_{W,S}$ is the joint distribution (or coupling) between $W$ and $S$, and where $\bQ \otimes \bP_S$ is the distribution between the data $S$ and the independent (or decoupled) hypothesis $W'$. The larger the dependence, the less effective the decoupling lemma, and the worse the upper bound.
    \item The concentration of the decoupled function is now captured by $\log \bE \big[ \exp \big( \lambda \gen(W',S) \big) \big]$. Since under the decoupled distribution $\bE \big[ \gen(W',S) \big] = 0$, this term is equivalent to the CGF $\Lambda_{\gen(W',S)}(\lambda)$. The CGF of a random variable is intimately connected to its tails through the Cramér--Chernoff method~\citep[Section 2.2]{boucheron2003concentration}. The faster the CGF grows as $\lambda$ increases, the less the random variable $\gen(W',S)$ concentrates around zero. This makes this conversion less helpful and the upper bound larger.
\end{enumerate}

\subsubsection{Warm-Up: A Slow-Rate Bound for Losses With a Bounded Range}

A requirement for the Donsker and Varadhan~\Cref{lemma:dv_and_gvp} and~\eqref{eq:dv_to_lambda_gen} to hold is that the MGF of $\gen(W',S)$ is bounded. We will deal with this more generally below. For now, it suffices to know that if the loss has a range contained in $[a,b]$  and the instances are independent and identically distributed, then $\log \bE \Big[ e^{\lambda \gen(W',S)} \Big] \leq \nicefrac{\lambda^2 (b-a)^2}{8 n}$. Therefore, \Cref{eq:dv_to_lambda_gen} can be further bounded by
\begin{equation*}
	\bE \big[ \gen(W,S) \big] \leq \frac{\relent(\bP_{W,S} \Vert \bQ \otimes \bP_S)}{\lambda}  + \frac{\lambda (b-a)^2}{8n}
\end{equation*}
for all $\lambda > 0$. The right-hand side of this equation is convex with respect to $\lambda$; therefore, it has a minimum that tightens the bound. The optimization of the parameter $\lambda$ results in the following theorem due to~\citet{xu2017information}.

\begin{theorem}[{\citet[Theorem 1, adapted to bounded losses]{xu2017information}}]
\label{th:mi_bound_bounded}
	Consider a loss function with a range contained in $[a,b]$. Then, for every data-independent distribution $\bQ$ on $\cW$
	\begin{equation*}
		\bE \big[ \gen(W,S) \big] \leq (b-a) \sqrt{ \frac{\relent(\bP_{W,S} \Vert \bQ \otimes \bP_S)}{2n}}
	\end{equation*}
	In particular, choosing $\bQ$ to be the marginal distribution $\bP_W = \bP_W^S \circ \bP_S$ is optimal and
	\begin{equation*}
		\bE \big[ \gen(W,S) \big] \leq (b-a) \sqrt{ \frac{\minf(W;S)}{2n}}.
	\end{equation*}
\end{theorem}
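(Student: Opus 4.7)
The plan is to combine the Donsker--Varadhan change of measure, already set up in equation~\eqref{eq:dv_to_lambda_gen}, with a sub-Gaussian estimate of the decoupled cumulant generating function, and finally to optimize the resulting upper bound over the free parameter $\lambda > 0$.

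First, I would verify the sub-Gaussian claim that is used as a black box just above the theorem: for $W'$ drawn from $\bQ$ independently of $S$, I need $\log \bE\big[e^{\lambda\gen(W',S)}\big] \leq \lambda^2(b-a)^2/(8n)$. Conditioning on $W'=w$, the quantity $\gen(w,S) = \bE[\ell(w,Z)] - \tfrac{1}{n}\sum_{i=1}^n \ell(w,Z_i)$ is an average of $n$ i.i.d.\ centered random variables, each bounded in $[a-b,b-a]$. Hoeffding's lemma gives that each summand is sub-Gaussian with variance proxy $(b-a)^2/4$, and independence yields that $\gen(w,S)$ is sub-Gaussian with variance proxy $(b-a)^2/(4n)$. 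Thus $\bE\bigl[e^{\lambda \gen(w,S)}\bigr] \leq e^{\lambda^2(b-a)^2/(8n)}$ for every $w$, and taking expectation over $W'\sim\bQ$ and then the logarithm preserves the bound.

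Substituting this CGF estimate into~\eqref{eq:dv_to_lambda_gen} yields, for every $\lambda > 0$,
\begin{equation*}
\bE\big[\gen(W,S)\big] \leq \frac{\relent(\bP_{W,S}\Vert \bQ\otimes\bP_S)}{\lambda} + \frac{\lambda(b-a)^2}{8n}.
\end{equation*}
The right-hand side is convex in $\lambda$, and the standard inequality $\tfrac{A}{\lambda}+\lambda B \geq 2\sqrt{AB}$ (saturated at $\lambda=\sqrt{A/B}$) with $A=\relent(\bP_{W,S}\Vert \bQ\otimes\bP_S)$ and $B=(b-a)^2/(8n)$ delivers the first claimed bound $(b-a)\sqrt{\relent(\bP_{W,S}\Vert\bQ\otimes\bP_S)/(2n)}$.

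For the optimal choice of $\bQ$, I would use the chain-rule identity $\bP_{W,S} = \bP_W^S \otimes \bP_S$ to rewrite $\relent(\bP_{W,S}\Vert \bQ\otimes\bP_S)=\relent(\bP_W^S\Vert \bQ\mid\bP_S)$, and then invoke the golden formula from \Cref{prop:properties_minf}, which states that $\inf_{\bQ\in\cP(\cW)}\relent(\bP_W^S\Vert \bQ\mid\bP_S)=\minf(W;S)$ with infimum attained at the marginal $\bQ=\bP_W$. Since the right-hand side of the bound is monotonically increasing in the relative entropy, this choice is optimal and yields the second displayed inequality. The main (very mild) obstacle is simply verifying the sub-Gaussian CGF estimate carefully, together with noting that the inequality degrades gracefully when $\bP_{W,S}\not\ll\bQ\otimes\bP_S$, in which case the relative entropy is infinite and the bound is vacuously true.
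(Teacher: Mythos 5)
Your proposal is correct and follows essentially the same route as the paper: the Donsker--Varadhan decoupling in~\eqref{eq:dv_to_lambda_gen}, the sub-Gaussian bound $\log \bE\big[e^{\lambda\gen(W',S)}\big] \leq \nicefrac{\lambda^2(b-a)^2}{8n}$ for bounded losses, optimization over $\lambda$, and the golden formula for the optimality of $\bQ = \bP_W$. The only difference is that you spell out the Hoeffding-lemma verification of the CGF estimate, which the paper takes as known at that point and treats more generally in the subsequent subsection on losses with a bounded CGF.
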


The optimality of the choice of the prototypical distribution on samples from the data distribution $\bP_W = \bP_W^S \circ \bP_S$ comes from the golden formula from~\Cref{prop:properties_minf} and the theorem holds for distributions $\bQ$ such that $\bP_{W,S} \not \ll \bQ \otimes \bP_S$ by the convention that in that case $\relent(\bP_{W,S} \Vert \bQ \otimes \bP_S) \to \infty$. For this reason, after this subsection, all results in this chapter will be presented in terms of mutual information only. This result was particularly celebrated since it provides us with a simple relationship between the generalization error of a learning algorithm and the mutual information between the algorithm's input (the training data $S$) and its output hypothesis $W$. Moreover, this relationship resembles the classical relationships provided by the uniform convergence property of a hypothesis class $\cW$ where the upper bound on the generalization error was of the order $\sqrt{\nicefrac{\mathfrak{C}}{n}}$, where $\mathfrak{C}$ represents the complexity of the class $\cW$ measured by, for example, the VC or the Natarajan dimension (\Cref{sec:pac_learning}). In this case, however, the complexity term describes the algorithm and its relationship with the data and is represented by the mutual information, suggesting that the more the algorithm depends on the data, the less it generalizes.
If the hypothesis returned by the algorithm is independent of the data and $\bP_W^S = \bQ$ a.s.; then, the mutual information is 0 and the generalization error is also 0, as we expected from the start. On the other hand, if the returned hypothesis is a deterministic function of the training set $\bP_W^S = \delta_{f(S)}$ and both the hypothesis $W$ and the instances $Z_i$ are continuous, then the mutual information tends to infinity and the bound is vacuous.

If we consider the algorithm as a channel processing a training set $S$ to obtain a hypothesis $W$ (\Cref{fig:algorithm_as_channel_and_backward_channel} left), the mutual information describes the average inefficiency in trying to encode the hypothesis $W$ without employing the training set $S$, when this set is known. That is, it describes how much knowing the training set $S$ reduces the uncertainty about the hypothesis $W$ on average with respect to the data distribution. 

Due to the symmetry of the mutual information (\Cref{prop:properties_minf}), the reverse is also true. Consider the backward channel $\bP_S^W$ that is naturally defined from the joint distribution $\bP_{W,S}$. Then, the mutual information describes the average inefficiency in trying to encode the dataset $S$ without employing the hypothesis $W$ when this hypothesis is known. That is, how much does knowing the hypothesis $W$ reduces the uncertainty of the dataset $S$ on average with respect to the prototypical hypothesis distribution. In particular, for discrete data and hypotheses, it amounts to the average extra bits needed to encode the training set if the returned hypothesis is not used. Therefore, the hypothesis contains information on the training set. Considering this rather artificial channel already elucidates some connections between privacy and generalization: if an algorithm is private, the information that an algorithm contains on the training set should be small, and therefore $\minf(W;S)$ should be bounded from above. This intuition will be formalized later in~\Cref{ch:privacy_and_generalization}.

\subsubsection{Losses With a Bounded CGF}

The technique described above is not restricted to bounded losses. As mentioned before, a requirement for the Donsker and Varadhan~\Cref{lemma:dv_and_gvp} and~\eqref{eq:dv_to_lambda_gen} to hold is that the MGF of $\gen(W',S)$ is bounded, that is, that $\bE \Big[ e^{\lambda \gen(W',S)} \Big] < \infty$. Since we are actually interested in controlling the CGF of $\gen(W',S)$ as per~\eqref{eq:dv_to_lambda_gen}, it seems logical to consider losses with a bounded CGF. This includes unbounded losses with sufficiently behaved tails, that is, losses that have a probability of taking large values that vanishes exponentially fast.

\begin{definition}[Bounded CGF] 
\label{def:bounded_cgf}
	A loss function $\ell: \cW \times \cZ$ is of \emph{bounded CGF} with respect to the data distribution $\bP_Z$ (or just of bounded CGF, if the data distribution is clear from the context) if, for every hypothesis $w \in \cW$, there is a convex and continuously differentiable function $\psi(\lambda)$ defined on $[0, b)$ for some $b \in \bR_+$ such that $\psi(0) = \psi'(0) = 0$ and $\Lambda_{- \ell(w,Z)} \leq \psi(\lambda)$ for all $\lambda \in [0,b)$. 
\end{definition}

For a fixed hypothesis $w \in \cW$, using the algebra of CGFs from~\Cref{subsec:moments_and_generating_functions} yields that 
\begin{equation*}
	\Lambda_{\gen(w,S)}(\lambda) = \sum_{i=1}^n \Lambda_{-\ell(w,Z_i)}\Big(\frac{\lambda}{n}\Big) \leq n \psi \Big(\frac{\lambda}{n}\Big).
\end{equation*}
for all $\lambda \in [0,n b)$. Therefore, the CGF of $\gen(W',S)$ can be bounded by
\begin{equation}
\label{eq:cgf_algebra}
	\log \bE \Big[ e^{\lambda \gen(W',S)} \Big] = \log \bE_{w \sim \bQ} \bigg[ \bE \Big[ e^{\lambda \gen(w,S)} \Big] \bigg] \leq n \psi \Big(\frac{\lambda}{n}\Big)
\end{equation} 
for all $\lambda \in [0,nb)$. Combining this upper bound and the change of variable $\lambda = \lambda' n$ with~\eqref{eq:dv_to_lambda_gen} results in 
\begin{equation*}
	\bE \big[ \gen(W,S) \big] \leq \frac{1}{\lambda'} \bigg( \frac{\relent(\bP_{W,S} \Vert \bQ \otimes \bP_S)}{n}  -  \psi(\lambda') \bigg),
\end{equation*}
which holds for all $\lambda' \in [0,b)$. Finally, optimizing this equation using~\Cref{lemma:boucheron_convex_conjugate_inverse} from~\Cref{subsec:convex_conjugate} gives us the following guarantee in expectation.

\begin{theorem}[{\citet[Theorem 2 and Proposition 2]{bu2020tightening}}]
\label{th:mi_bound_general_cgf}
	Consider a loss function with a bounded CGF (\Cref{def:bounded_cgf}). Then, for every data-independent distribution $\bQ$ on $\cW$ 
	\begin{equation*}
		\bE \big[ \gen(W,S) \big] \leq \psi_*^{-1} \bigg( \frac{\relent(\bP_{W,S} \Vert \bQ \otimes \bP_S)}{n}  \bigg).
	\end{equation*}
	In particular, choosing $\bQ$ to be the marginal distribution $\bP_W = \bP_W^S \circ \bP_S$,
	\begin{equation*}
		\bE \big[ \gen(W,S) \big] \leq \psi_*^{-1} \bigg( \frac{\minf(W;S)}{n}  \bigg).
	\end{equation*}
\end{theorem}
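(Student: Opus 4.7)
The plan is to mimic the derivation already sketched in the passage, making sure to use the Donsker--Varadhan lemma as the decoupling step, the algebra of cumulant generating functions for the concentration step, and Lemma \ref{lemma:boucheron_convex_conjugate_inverse} for the optimization step. I will first reprove the displayed identity \eqref{eq:dv_to_lambda_gen} for convenience: applying the Donsker--Varadhan half of Lemma \ref{lemma:dv_and_gvp} with measurable space $\cW \times \cZ^n$, distributions $\bP_{W,S}$ and $\bQ \otimes \bP_S$, and test function $(w,s) \mapsto \lambda \gen(w,s)$ for $\lambda > 0$ yields, after rearrangement,
\begin{equation*}
    \bE\big[\gen(W,S)\big] \leq \frac{1}{\lambda}\Big(\relent(\bP_{W,S} \Vert \bQ \otimes \bP_S) + \log \bE\big[e^{\lambda \gen(W',S)}\big]\Big),
\end{equation*}
where $W' \sim \bQ$ is independent of $S$. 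Admissibility of this test function is precisely the bounded-CGF condition carried over to $\gen$; I would verify it in the next step.

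Next, for each fixed $w \in \cW$, since $\gen(w,S) = \frac{1}{n}\sum_{i=1}^n \big(\bE[\ell(w,Z_i)] - \ell(w,Z_i)\big)$ is $\nicefrac{1}{n}$ times a sum of $n$ i.i.d.\ centered random variables whose CGF is $\Lambda_{-\ell(w,Z)}$, the scaling and additivity properties of the CGF listed in \Cref{subsec:moments_and_generating_functions} give $\Lambda_{\gen(w,S)}(\lambda) = \sum_{i=1}^n \Lambda_{-\ell(w,Z_i)}(\lambda/n) \leq n \psi(\lambda/n)$ for all $\lambda \in [0, nb)$, using \Cref{def:bounded_cgf}. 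Taking expectations over $w \sim \bQ$ and using that the resulting bound is deterministic in $w$ yields
\begin{equation*}
    \log \bE\big[e^{\lambda \gen(W',S)}\big] = \log \bE_{w \sim \bQ}\bE\big[e^{\lambda \gen(w,S)}\big] \leq n\psi(\lambda/n).
\end{equation*}

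Plugging this into the Donsker--Varadhan inequality and performing the change of variables $\lambda' = \lambda/n$ produces
\begin{equation*}
    \bE\big[\gen(W,S)\big] \leq \frac{\relent(\bP_{W,S} \Vert \bQ \otimes \bP_S)/n + \psi(\lambda')}{\lambda'}, \qquad \lambda' \in (0,b).
\end{equation*}
This is now in exactly the form to which \Cref{lemma:boucheron_convex_conjugate_inverse} applies: taking the infimum over $\lambda' \in (0,b)$ of the right-hand side equals $\psi_*^{-1}\big(\relent(\bP_{W,S} \Vert \bQ \otimes \bP_S)/n\big)$, which establishes the first inequality of the theorem.

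For the second inequality, I would invoke the golden formula from \Cref{prop:properties_minf}, which gives $\minf(W;S) = \inf_\bQ \relent(\bP_W^S \Vert \bQ \mid \bP_S) = \inf_\bQ \relent(\bP_{W,S} \Vert \bQ \otimes \bP_S)$, attained at $\bQ = \bP_W$. Combined with the monotonicity of $\psi_*^{-1}$ (also in \Cref{lemma:boucheron_convex_conjugate_inverse}), this choice tightens the first bound to the desired mutual-information form. I expect no serious obstacle: the only subtle point is checking that \Cref{lemma:boucheron_convex_conjugate_inverse} is indeed applicable, which requires $\psi$ to be convex, continuously differentiable, and to satisfy $\psi(0) = \psi'(0) = 0$, all of which are built into \Cref{def:bounded_cgf}; so the argument is essentially assembly.
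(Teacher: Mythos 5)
Your proposal is correct and matches the paper's own derivation step for step: Donsker--Varadhan decoupling as in \eqref{eq:dv_to_lambda_gen}, the CGF scaling/additivity bound $\Lambda_{\gen(w,S)}(\lambda) \leq n\psi(\lambda/n)$ as in \eqref{eq:cgf_algebra}, the change of variables $\lambda = \lambda' n$, optimization via \Cref{lemma:boucheron_convex_conjugate_inverse}, and the golden formula from \Cref{prop:properties_minf} for the mutual-information specialization. There is nothing to add; the argument is, as you say, essentially assembly.
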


The inverse of the convex conjugate of the function dominating the CGF $\psi_*^{-1}$ characterizes the tail behavior of the loss function through the Cramér--Chernoff method~\citep[Section 2.2]{boucheron2003concentration} as we will see in~\Cref{sec:classical_concentration_inequalities}. For now, it is enough to know that $\psi_*^{-1}$ is a non-decreasing function such that $\psi_*^{-1}(0) = 0$ and $\lim_{y \to \infty} \psi_*^{-1}(y) \to \infty$, and that, for a fixed hypothesis $w$, for all $\beta \in (0,1)$, with probability no smaller than $1 - \beta$
\begin{equation*}
	\poprisk(w) \leq \emprisk(w,S) + \psi_*^{-1} \bigg( \frac{\log \frac{1}{\beta}}{n} \bigg) .
\end{equation*}
That is, the rate at which the empirical risk of a fixed hypothesis concentrates around the population risk is at least $\psi_*^{-1}(\nicefrac{1}{n})$. With this in mind, \Cref{th:mi_bound_general_cgf} becomes clearer. The bias of the empirical risk estimator of the population risk decreases at the rate of concentration of the said estimator for a fixed hypothesis, inflated by the dependence between the hypothesis and the training set, that is $\psi_*^{-1} \big( \nicefrac{\minf(W;S)}{n} \big)$. As previously, if the hypothesis is independent of the training data, then $\minf(W;S) = 0$ and the theorem indicates that the empirical risk is an unbiased estimator of the population risk. On the other hand, for continuous hypotheses obtained deterministically from continuous training data $\minf(W;S) \to \infty$ and the bound is vacuous.

There are several tail behaviors that are sufficiently common to have a proper name. Below, we discuss three of them: \emph{sub-gamma}, \emph{sub-exponential}\footnote{Here, we are considering the sub-exponential characterization of random variables from~\citet[Theorem 2.13]{wainwright2019high} and not the one given by~\cite[Excercise 2.22]{boucheron2003concentration}.}, and \emph{sub-Gaussian}. Loosely speaking, they represent losses whose tail is lighter than a gamma, exponential, and Gaussian random variable respectively. As a result, these three behaviors have in common that the generalization error bound has a dominating term that vanishes at a slow rate $\sqrt{\nicefrac{\minf(W;S)}{n}}$.

\paragraph{Sub-Gamma Losses.} A loss is \emph{$(\sigma^2, c)$-sub-gamma} if it has a CGF bounded by $\psi(\lambda) = \nicefrac{\lambda^2 \sigma^2}{2(1-c\lambda)}$ for all $\lambda \in (0, \nicefrac{1}{c})$ in the sense of~\Cref{def:bounded_cgf}. Then, since $\psi_*^{-1}(y) = \sqrt{2 \sigma^2 y} + cy$~\citep[Section 2.4]{boucheron2003concentration}, a corollary of~\Cref{th:mi_bound_general_cgf} is that if the loss is $(\sigma^2, c)$-sub-gamma, then
\begin{equation*}
	\bE \big[ \gen(W,S) \big] \leq \sqrt{2 \sigma^2  \cdot \frac{\minf(W;S)}{n}} + c \cdot \frac{\minf(W;S)}{n},
\end{equation*}
combining a slow and a fast rate of bias decrease.

\paragraph{Sub-Exponential Losses.} A loss is \emph{$(\sigma^2, c)$-sub-exponential} if it has a CGF bounded by $\psi(\lambda) = \nicefrac{\lambda^2 \sigma^2}{2}$ for all $\lambda \in (0, \frac{1}{c})$ in the sense of~\Cref{def:bounded_cgf}. Therefore,
\begin{equation}
\psi_*^{-1}(y) = 
\begin{cases}
	\sqrt{2 \sigma^2 y} & \textnormal{if } \lambda = \sqrt{\frac{2 y}{\sigma^2}} \leq \frac{1}{c} \\
	cy + \frac{\sigma^2}{2c^2} & \textnormal{otherwise}
\end{cases}.
\end{equation}
As we show in~\cite[Appendix B.4]{rodriguez2023morepac}, the condition for the first case may be re-written as $y \leq \nicefrac{\sigma^2}{2c^2}$ and similarly the condition for the second case as $y > \nicefrac{\sigma^2}{2c^2}$, which simplifies the equation to
\begin{equation}
\psi_*^{-1}(y) = 
\begin{cases}
	\sqrt{2 \sigma^2 y} & \textnormal{if } y \leq \nicefrac{\sigma^2}{2c^2} \\
	(c+1) y & \textnormal{otherwise}
\end{cases}.	
\end{equation}
In this way, a corollary of~\Cref{th:mi_bound_general_cgf} is that if the loss is $(\sigma^2, c)$-sub-exponential, then
\begin{equation*}
	\bE \big[ \gen(W,S) \big] \leq 
	\begin{cases}
		\sqrt{2 \sigma^2 \cdot \frac{\minf(W;S)}{n}} & \textnormal{if } \minf(W;S) \leq \frac{n \sigma^2}{2 c^2} \\
		(c+1) \cdot \frac{\minf(W;S)}{n} & \textnormal{otherwise}
	\end{cases},
\end{equation*}
which also combines a slow and a fast rate of bias decrease. For smaller values of the dependence $\minf(W;S)$, the slow rate dominates; while for larger values the fast rate dominates.

\paragraph{Sub-Gaussian Losses.} A loss is \emph{$\sigma^2$-sub-Gaussian} if it has a CGF bounded by $\psi(\lambda) = \nicefrac{\lambda^2 \sigma^2}{2}$ for all $\lambda \in \bR$ in the sense of~\Cref{def:bounded_cgf}. Then, since $\psi_*^{-1}(y) = \sqrt{2 \sigma^2 y}$, a corollary of~\Cref{th:mi_bound_general_cgf} is that if the loss is $\sigma^2$-sub-Gaussian, then
\begin{equation}
\label{eq:mi_bound_subgaussian}
	\bE \big[ \gen(W,S) \big] \leq \sqrt{2 \sigma^2  \cdot \frac{\minf(W;S)}{n}}.
\end{equation}
This was, in fact, the theorem presented by~\citet{xu2017information}. \Cref{th:mi_bound_bounded} is a corollary of this result using the property that if a loss has a range contained in $[a,b]$ a.s., then the loss is $\nicefrac{(b-a)^2}{4}$-sub-Gaussian.

\subsection{A Fast-Rate and a Mixed-Rate Bound: Losses With a Bounded Range} 
\label{subsec:a_fast_rate_bound_mi}

In the subsection above, we noted that for losses with a range contained in $[a,b]$, the bias of the empirical risk estimator of the population risk vanishes at a slow rate $\sqrt{\nicefrac{\minf(W;S)}{n}}$. However, we can benefit from using a different estimator. If we consider a linear function of the empirical risk as our estimator, we can find a bound that vanishes at a fast rate $\nicefrac{\minf(W;S)}{n}$. This advantage comes from considering a tighter characterization of the CGF of bounded losses than the one provided by their sub-Gaussian nature. Without loss of generality, in this subsection, we will consider losses with a range bounded in $[0,1]$. If, instead, the loss has a range bounded in $[a,b]$, the bound can be shifted by $-a$ and scaled by $\nicefrac{1}{(b-a)}$.

\begin{theorem}
\label{th:mi_bound_bounded_fast_rate}
Consider a loss function with a range contained in $[0,1]$. Then, for every $c  \in (0,1]$ and every $\gamma > 1$,
\begin{equation*}
	\bE \big[ \poprisk(W) \big] \leq c \gamma \log \left( \frac{\gamma}{\gamma - 1} \right) \cdot \bE \big[ \emprisk(W, S) \big] + c \gamma \cdot \frac{\minf(W;S)}{n} + \gamma \kappa(c),
\end{equation*}
where $\kappa(c) \coloneqq 1 - c(1 - \log c)$.
In particular, for interpolating algorithms (that is, when $\bE^S [ \emprisk(W,S) ] = 0$), the optimal parameters are $\gamma \to 1$ and $c = \exp\big({-\frac{\minf(W;S)}{n}}\big)$ and then
\begin{equation*}
	\bE \big[ \poprisk(W) \big] \leq 1 - e^{- \frac{\minf(W;S)}{n}} \leq \frac{\minf(W;S)}{n}.
\end{equation*}
\end{theorem}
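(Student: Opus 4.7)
The plan is to apply the Donsker--Varadhan variational representation of the relative entropy (\Cref{lemma:dv_and_gvp}) to $\bP = \bP_{W,S}$ and $\bQ = \bP_W \otimes \bP_S$, so that $\relent(\bP \Vert \bQ) = \minf(W;S)$, evaluated at a carefully chosen test function. Writing $L \coloneqq \log(\gamma/(\gamma-1))$ so that $1 - e^{-L} = 1/\gamma$, I would use
\[
g(w,s) \coloneqq -n \log\Bigl(1 - \frac{\poprisk(w)}{\gamma}\Bigr) - L \sum_{i=1}^n \ell(w,z_i),
\]
which is well defined since $\poprisk(w) \in [0,1]$ and $\gamma > 1$ force $1 - \poprisk(w)/\gamma \in (0,1]$. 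By Donsker--Varadhan, $\bE[g(W,S)] \leq \minf(W;S) + \log \bE[e^{g(\bar W, \bar S)}]$, where $(\bar W, \bar S) \sim \bP_W \otimes \bP_S$ are decoupled copies.

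The next step is to show that the MGF term is bounded by $1$. The chord bound applied to the convex function $x \mapsto e^{-L x}$ on $[0,1]$ gives $e^{-L \ell(\bar W, z)} \leq 1 - \ell(\bar W,z)/\gamma$; taking a conditional expectation over the i.i.d.\ sample,
\[
\bE\Bigl[\prod_{i=1}^n e^{-L \ell(\bar W, \bar Z_i)} \,\Big|\, \bar W\Bigr] \leq \Bigl(1 - \frac{\poprisk(\bar W)}{\gamma}\Bigr)^n,
\]
which cancels exactly with the factor $(1 - \poprisk(\bar W)/\gamma)^{-n}$ appearing in $e^{g(\bar W, \bar S)}$, yielding $\bE[e^{g(\bar W, \bar S)}] \leq 1$. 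Plugging back into Donsker--Varadhan and applying Jensen's inequality to the convex function $x \mapsto -\log(1 - x/\gamma)$, one obtains, with $p \coloneqq \bE[\poprisk(W)]$ and $q \coloneqq \bE[\emprisk(W,S)]$,
\[
-n \log(1 - p/\gamma) \;\leq\; -n \bE\bigl[\log(1 - \poprisk(W)/\gamma)\bigr] \;\leq\; \minf(W;S) + nLq,
\]
which rearranges to the tight form $p \leq \gamma\bigl(1 - \exp(-Lq - \minf(W;S)/n)\bigr)$.

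The stated bound then follows from the elementary inequality $1 - e^{-y} \leq cy + \kappa(c)$ for $y \geq 0$ and $c \in (0,1]$, a Fenchel--Young inequality for $\kappa$ whose minimizer is $c^\star = e^{-y}$. Applied with $y = Lq + \minf(W;S)/n$, this gives exactly $p \leq c\gamma L q + c\gamma \minf(W;S)/n + \gamma \kappa(c)$. For the interpolating case $q = 0$, the tight form simplifies to $p \leq \gamma(1 - e^{-\minf(W;S)/n})$; this is increasing in $\gamma$ on $(1, \infty)$, so its infimum is attained as $\gamma \to 1^+$, yielding $p \leq 1 - e^{-\minf(W;S)/n} \leq \minf(W;S)/n$, which in the stated bound corresponds to $c = e^{-\minf(W;S)/n}$. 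The main obstacle is guessing the right $g$: the log-barrier term $-n \log(1 - \poprisk(w)/\gamma)$ is precisely engineered to annihilate the product of chord bounds coming from the loss MGF, and it is this exact cancellation---rather than the usual sub-Gaussian estimate---that replaces the slow $\cO(\sqrt{\minf/n})$ rate by the fast $\cO(\minf/n)$ one.
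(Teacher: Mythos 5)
Your proof is correct, and it reaches the result by a route that differs from the one in the paper in its middle portion. The paper applies the Donsker--Varadhan decoupling to the linear function $\lambda\,\gen(w,s)$, invokes Maurer's lemma (\Cref{lemma:convex_dominated_by_bernoulli}) to dominate the CGF of $-\ell(W',Z)$ by that of a Bernoulli variable with mean $\bE[\poprisk(W)]$, computes the convex conjugate to obtain the small-kl inequality $\relentber\big(\bE[\emprisk(W,S)]\,\Vert\,\bE[\poprisk(W)]\big)\leq \minf(W;S)/n$ (\Cref{lemma:small_kl_mi}), and then linearizes the binary relative entropy through its variational representation; Catoni's form (\Cref{th:catoni_mi}) appears as an equivalent corollary of the same small-kl lemma. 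You instead feed Donsker--Varadhan a single engineered test function $g(w,s)=-n\log\big(1-\poprisk(w)/\gamma\big)-L\sum_i\ell(w,z_i)$, show $\bE[e^{g(\bar W,\bar S)}]\leq 1$ by the chord bound on $x\mapsto e^{-Lx}$ (which is exactly the exponential-function special case of Maurer's lemma, applied conditionally on $\bar W$ so that the barrier term cancels the product of chord bounds), pull the expectation inside the log-barrier by Jensen, and land directly on Catoni's exponential form, which you then linearize with the tangent-line inequality $1-e^{-y}\leq cy+\kappa(c)$ --- the same envelope step the paper mentions as the bridge from \Cref{th:catoni_mi} to \Cref{th:mi_bound_bounded_fast_rate}. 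What the paper's path buys is the small-kl lemma as a reusable intermediate (it is the in-expectation analogue of Seeger--Langford and the hub from which the equivalence of the small-kl, Catoni, and fast-rate forms is established); what your path buys is brevity and a cleaner treatment of the hypothesis dependence, since the MGF control holds pointwise in $w$ rather than through a CGF bound that depends on the marginal $\bP_W$, at the cost of not exhibiting that equivalence structure. Your handling of the interpolating case (monotonicity in $\gamma$, infimum as $\gamma\to 1^+$, and the identification $c=e^{-\minf(W;S)/n}$) matches the statement.
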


This bound shows that the population risk can be bounded from above by a linear combination of the empirical risk and the normalized complexity measured by $\nicefrac{\minf(W;S)}{n}$, where the coefficients for each term and the bias can change depending of the values of each term and ensure that the bound is always in $[0,1]$. 

The parameter $\gamma$ controls the influence of the empirical risk compared to the normalized complexity: if the empirical risk is large relative to the normalized complexity, then $\gamma$ is larger and the normalized complexity coefficient increases, if instead the empirical risk is small or even close to interpolation, then $\gamma$ is close to $1$ and the empirical risk coefficient increases. In particular, for a fixed value of $c$, the optimal value of $\gamma$ is
\begin{align*}
	\gamma &= 1 + \left[ -1 - \mathtt{W} \left( - \exp \left(-1 - \frac{c\cdot \frac{\minf(W;S)}{n} + \kappa(c)}{c \cdot \bE \big[ \emprisk(W,S) \big]} \right) \right) \right]^{-1} \\
	&\approx 1 + \left[ \sqrt{2 \cdot \frac{c\cdot \frac{\minf(W;S)}{n} + \kappa(c)}{c \cdot \bE \big[ \emprisk(W,S) \big]}} +  \frac{5}{6} \cdot \frac{c\cdot \frac{\minf(W;S)}{n} + \kappa(c)}{c \cdot \bE \big[ \emprisk(W,S) \big]}\right],
\end{align*}
\looseness=-1 where $\mathtt{W}$ is the Lambert W function and the -1 branch is approximated following~\citep{chatzigeorgiou2013bounds}.

The parameter $c \in (0,1]$ controls how much weight is given to the empirical risk and the normalized complexity terms compared to the bias. For larger values of the empirical risk and the normalized complexity term, the value of $c$ is small, decreasing their contribution to the bound and increasing the contribution of the bias $\kappa(c) \in [0,1)$. If the empirical risk and the normalized complexity term are smaller, then the value of $c$ approaches $1$, where the contribution of these two terms is only controlled by $\gamma$ and the bias is $0$. In fact, a weaker version of~\Cref{th:mi_bound_bounded_fast_rate} can be obtained considering this small empirical risk and small normalized complexity regime by letting $c = 1$.

\begin{corollary}
\label{cor:mi_bound_bounded_fast_rate_weaker}
Consider a loss function with a range contained in $[0,1]$. Then, for every  $\gamma > 1$,
\begin{equation*}
	\bE \big[ \poprisk(W) \big] \leq \gamma \log \Big( \frac{\gamma}{\gamma - 1} \Big) \cdot \bE \big[ \emprisk(W, S) \big] + \gamma \cdot \frac{\minf(W;S)}{n}.
\end{equation*}
\end{corollary}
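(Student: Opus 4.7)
My plan is to obtain \Cref{cor:mi_bound_bounded_fast_rate_weaker} in two equivalent ways, either as a one-line specialization of \Cref{th:mi_bound_bounded_fast_rate} or as a self-contained derivation via Catoni's technique. For the direct corollary, I would simply evaluate \Cref{th:mi_bound_bounded_fast_rate} at $c = 1$. Note that $\kappa(1) = 1 - 1 \cdot (1 - \log 1) = 0$, so the bias term $\gamma \kappa(c)$ vanishes, and the coefficients $c \gamma \log(\gamma/(\gamma-1))$ and $c \gamma$ in front of $\bE[\emprisk(W,S)]$ and $\minf(W;S)/n$ collapse exactly to $\gamma \log(\gamma/(\gamma-1))$ and $\gamma$, which is the claimed inequality. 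This is legitimate for every $\gamma > 1$ since the choice $c = 1$ is permitted in the theorem.

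For a standalone proof (useful as a sanity check, since the corollary is in fact the historically older Catoni-type statement), the plan is as follows. First, I would exploit that $\ell(w,z) \in [0,1]$ and the convexity of $x \mapsto e^{-\lambda x}$ on $[0,1]$ for $\lambda > 0$ to get the pointwise bound $e^{-\lambda \ell(w,z)} \leq 1 - (1 - e^{-\lambda})\ell(w,z)$. Taking expectation over $Z$ and using $\log(1-x) \leq -x$ yields $\log \bE_Z[e^{-\lambda \ell(w,Z)}] \leq -(1-e^{-\lambda}) \poprisk(w)$. The i.i.d.\ structure then gives $\bE_S[e^{-\lambda n \emprisk(w,S)}] \leq e^{-n(1-e^{-\lambda}) \poprisk(w)}$.

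Second, I would apply the Donsker--Varadhan lemma (\Cref{lemma:dv_and_gvp}) with reference measure $\bP_W \otimes \bP_S$, target measure $\bP_{W,S}$, and test function $g(w,s) = n(1-e^{-\lambda}) \poprisk(w) - \lambda n \emprisk(w,s)$. The previous MGF bound shows that $\bE_{\bP_W \otimes \bP_S}[e^{g(W',S)}] \leq 1$, so its log contributes at most zero, and one obtains
\begin{equation*}
n(1-e^{-\lambda}) \bE[\poprisk(W)] - \lambda n \bE[\emprisk(W,S)] \leq \minf(W;S) .
\end{equation*}
Rearranging gives Catoni's bound $\bE[\poprisk(W)] \leq \tfrac{\lambda}{1-e^{-\lambda}} \bE[\emprisk(W,S)] + \tfrac{\minf(W;S)}{n(1-e^{-\lambda})}$ for every $\lambda > 0$.

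The only remaining step, and the one most prone to bookkeeping errors, is the reparameterization $\gamma = 1/(1 - e^{-\lambda})$, which bijectively maps $\lambda \in (0, \infty)$ to $\gamma \in (1, \infty)$. Under this substitution, $\lambda = \log(\gamma/(\gamma-1))$, so $\lambda/(1-e^{-\lambda}) = \gamma \log(\gamma/(\gamma-1))$ and $1/(1-e^{-\lambda}) = \gamma$, matching the stated bound. The main conceptual hurdle is not any computation but rather recognising that the Cramér--Chernoff bound used for the sub-Gaussian case in \Cref{subsec:a_slow_rate_bound_mi} is not sharp for $[0,1]$-valued losses; replacing it with the exponential moment bound above is precisely what produces the fast-rate term $\minf(W;S)/n$ instead of $\sqrt{\minf(W;S)/n}$.
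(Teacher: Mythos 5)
Your primary route is exactly the paper's proof: \Cref{cor:mi_bound_bounded_fast_rate_weaker} is obtained by setting $c=1$ in \Cref{th:mi_bound_bounded_fast_rate}, where $\kappa(1)=0$ makes the bias term vanish, so that step is complete and correct. Your standalone sanity check is also sound and essentially retraces the paper's route to \Cref{th:catoni_mi} followed by the linearization $1-e^{-x}\leq x$: the chord bound $e^{-\lambda \ell}\leq 1-(1-e^{-\lambda})\ell$ is precisely \Cref{lemma:convex_dominated_by_bernoulli} specialized to $f(x)=e^{-\lambda x}$, and your reparameterization $\gamma = (1-e^{-\lambda})^{-1}$ matches the paper's change of variable, so nothing is missing.
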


This weaker version of the result was discovered by~\citet[Theorem 1.2.1]{catoni2007pac} and presented in another form using a parameter $\lambda > 0$ such that $\lambda = n \log \nicefrac{\gamma}{(\gamma - 1)}$ around a decade prior to the \citet{xu2017information}'s \Cref{th:mi_bound_bounded}, but it did not receive as much attention. In part, this was because the implications of such a result and the nice properties of the mutual information $\minf(W;S)$ to describe the complexity of an algorithm were not emphasized. As in~\Cref{th:mi_bound_bounded_fast_rate}, for interpolating algorithms, the optimal parameter is $\gamma \to 1$ and the bound has a fast rate $\nicefrac{\minf(W;S)}{n}$. This is reminiscent of the classical generalization guarantees from uniform convergence in the realizable setting for classification problems (see~\Cref{subsec:vc_and_natarajan_dimensions}).

\citet[Theorem 1.2.1]{catoni2007pac} also presented an equivalent bound to \Cref{th:mi_bound_bounded_fast_rate}, although in a different form that does not make evident its fast-rate nature. Throughout this subsection, we will show that the two results are equivalent and how to obtain one from the other. We will focus the exposition following our proof of the result as it follows naturally from  an application of a variant of~\Cref{th:mi_bound_general_cgf}.
To prove \Cref{th:mi_bound_bounded_fast_rate}, we need to first introduce the following result~\citep{maurer2004note}.

\begin{lemma}[{\citet[Lemma 3]{maurer2004note}}]
\label{lemma:convex_dominated_by_bernoulli}
	Let $X$ be a random variable taking values in $[0,1]$ and $Y$ be the unique Bernoulli random variable such that $\bP_Y[1] = 1 - \bP_Y[0] = \bE[Y] = \bE[X]$. Then, if $f: [0,1] \to \bR$ is convex, then $\bE \big[ f(X) \big] \leq \bE \big[ f(Y) \big]$.
\end{lemma}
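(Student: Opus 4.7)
The plan is to exploit the fact that every point $x \in [0,1]$ lies in the convex hull of the two extreme points $0$ and $1$, so convexity of $f$ gives a pointwise bound that linearizes $f$ on $[0,1]$.

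More concretely, first I would write, for every $x \in [0,1]$, the trivial convex combination $x = x \cdot 1 + (1-x) \cdot 0$. Since $f : [0,1] \to \bR$ is convex and the coefficients $x$ and $1-x$ are nonnegative and sum to one, the definition of convexity yields the pointwise inequality
\begin{equation*}
    f(x) \leq x \, f(1) + (1-x) \, f(0) \qquad \textnormal{for all } x \in [0,1].
\end{equation*}

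Next, I would substitute the random variable $X$ (which takes values in $[0,1]$ almost surely) for $x$ and take expectations. By the linearity of expectation and using $\bE[X] = \bE[Y]$ by construction,
\begin{equation*}
    \bE \big[ f(X) \big] \leq \bE[X] \, f(1) + (1 - \bE[X]) \, f(0) = \bE[Y] \, f(1) + (1 - \bE[Y]) \, f(0) .
\end{equation*}
Finally, since $Y$ is a Bernoulli random variable with $\bP_Y[1] = \bE[Y]$ and $\bP_Y[0] = 1 - \bE[Y]$, the right-hand side is exactly $\bE[f(Y)]$, which closes the argument.

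There is no real obstacle here: the whole content of the lemma is the observation that the extremal distributions on $[0,1]$ with a prescribed mean (with respect to convex test functions) are the Bernoulli distributions, and this is immediate from convexity applied to the decomposition $x = x \cdot 1 + (1-x) \cdot 0$. The only thing to be mildly careful about is that the inequality $f(x) \leq x f(1) + (1-x) f(0)$ requires $x \in [0,1]$, which is guaranteed by the assumption that $X$ takes values in $[0,1]$.
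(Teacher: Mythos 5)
Your proof is correct and is essentially the standard argument (and the one in Maurer's note, which the paper cites instead of reproving): convexity puts $f$ below its chord on $[0,1]$, i.e. $f(x) \leq (1-x)f(0) + x f(1)$, and taking expectations plus $\bE[X] = \bE[Y]$ identifies the right-hand side with $\bE[f(Y)]$. Nothing is missing.
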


Consider a loss $\ell(W',Z)$ with a range in $[0,1]$ and let $L$ be the unique Bernoulli random variable with $\bP_{L}[1] = \bE[\ell(W',Z)] = \bE[L]$, where $W'$ is distributed according to $\bP_W$. Now, let $f(l;\lambda) = e^{- \lambda (l - \bE[L])}$ be a convex function on $l$ for all $\lambda \in \bR$. Then, it follows from~\Cref{lemma:convex_dominated_by_bernoulli} that the CGF of $-\ell(W',Z)$ is bounded from above by the mirror image of the CGF of $L$, that is, for all $\lambda \in \bR$
\begin{equation*}
	\Lambda_{- \ell(W',Z)}(\lambda) = \log \bE \left[ e^{- \lambda (\ell(W',Z) - \bE[\ell(W',Z)])} \right] \leq \log \bE \left[ e^{- \lambda (L - \bE[L])} \right] = \Lambda_{L}(- \lambda).
\end{equation*}

In this way, since the CGF of a Bernoulli random variable is known~\citep[page 23]{boucheron2003concentration}, it follows that $\Lambda_{- \ell(W',Z)}(\lambda) \leq \psi(\lambda) \coloneqq \lambda \bE[\poprisk(W)] + \log \big(1 - \bE[\poprisk(W)](1- e^{-\lambda}) \big)$ for all $\lambda \in \bR$, where we noted that $\bE[L] = \bE[\poprisk(W)]$. This means that the CGF of the loss is bounded in a weaker sense than in~\Cref{def:bounded_cgf}, as the random loss considered takes into account the random hypothesis $W$ and the upper bound depends on its marginal distribution $\bP_W$. However, it is this weakening of the condition which allows us to obtain a tighter guarantee following a similar procedure.
In the previous section, we mentioned that requiring that the CGF of an unbounded loss $-\ell(W',Z)$ is, for example, sub-Gaussian is complicated since we need to know the distribution of the marginal distribution of the hypothesis $\bP_W$. Hence, we resorted to the stronger requirement that the CGF of the unbounded loss $- \ell(w,Z)$ is sub-Gaussian for all $w \in \cW$. On the other hand, now we can make use of this weaker assumption since it follows from the boundedness of the loss.

Similarly to before, using the algebra of the CGFs to bound the CGF of $\gen(W',S)$ as in~\eqref{eq:cgf_algebra} and combining it with~\eqref{eq:dv_to_lambda_gen_prev} with the change of variable $\lambda = \lambda' n$ results in
\begin{equation*}
	\psi_* \Big( \bE \big[ \gen(W,S) \big] \Big) \leq \frac{\minf(W;S)}{n},
\end{equation*}
where the left hand side is obtained by optimizing $\lambda \bE \big[ \gen(W,S) \big] - \psi(\lambda)$ and by the  definition of the convex conjugate~\Cref{def:convex_conjugate}. Operating similarly to \cite[page 23]{boucheron2003concentration} we see that for every $t \in \big[0, \bE[\poprisk(W)]\big]$,
\begin{equation*}
	\psi_*(t) = \relentber \Big( \bE \big[\poprisk(W) \big]- t  \big \Vert \bE \big[\poprisk(W) \big] \Big),
\end{equation*}
\looseness=-1 where $\relentber(\hat{r} \Vert r) \coloneqq \relent \big( \mathrm{Ber}(\hat{r}) \Vert \mathrm{Ber}(r) \big)$, leading us the next intermediate result.

\begin{lemma}
\label{lemma:small_kl_mi}
Consider a loss function with a range contained in $[0,1]$. If $\bE \big[ \emprisk(W,S) \big] \leq \bE \big[ \poprisk(W) \big]$, then
\begin{equation*}
	\relentber \Big( \bE \big[\emprisk(W,S)\big] \big \Vert \bE \big [\poprisk(W) \big] \Big) \leq \frac{\minf(W;S)}{n}.
\end{equation*}
\end{lemma}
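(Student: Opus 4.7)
The plan is to follow the path laid out in the two paragraphs preceding the statement: combine the Donsker--Varadhan inequality with~\Cref{lemma:convex_dominated_by_bernoulli} to bound the cumulant generating function of the decoupled generalization error by the centred Bernoulli CGF, optimize out the free parameter, and finally identify the resulting convex conjugate with the binary relative entropy. The hypothesis $\bE[\emprisk(W,S)] \leq \bE[\poprisk(W)]$ will be used only at the last step, to place the evaluation point inside the regime on which the Cram\'er--Bernoulli identity is valid.

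Concretely, I would first specialize~\eqref{eq:dv_to_lambda_gen_prev} to $\bQ = \bP_W$, which gives
$$\lambda \bE[\gen(W,S)] - \log \bE\bigl[e^{\lambda \gen(W',S)}\bigr] \leq \minf(W;S)$$
for $W' \sim \bP_W$ independent of $S$ and $\lambda$ in the range of finite MGF. Next, I would apply~\Cref{lemma:convex_dominated_by_bernoulli} to the $[0,1]$-valued variable $\ell(W',Z)$ with the convex test function $l \mapsto e^{-\lambda(l - \bE[\poprisk(W)])}$ and the matched Bernoulli of mean $\bE[\poprisk(W)]$, producing the centred CGF bound $\Lambda_{-\ell(W',Z)}(\lambda) \leq \psi(\lambda) := \lambda \bE[\poprisk(W)] + \log\bigl(1 - \bE[\poprisk(W)](1 - e^{-\lambda})\bigr)$. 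Tensorizing over the $n$ samples as in~\eqref{eq:cgf_algebra} and rescaling $\lambda \mapsto n\lambda'$ then turns the DV estimate into
$$\lambda' \bE[\gen(W,S)] - \psi(\lambda') \leq \frac{\minf(W;S)}{n}, \qquad \forall \lambda' \in \bR.$$

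Finally, I would take the supremum in $\lambda'$ and invoke~\Cref{def:convex_conjugate} to obtain $\psi_*\bigl(\bE[\gen(W,S)]\bigr) \leq \minf(W;S)/n$. At the point $t = \bE[\gen(W,S)] = \bE[\poprisk(W)] - \bE[\emprisk(W,S)]$, the hypothesis combined with the non-negativity of the loss gives $t \in [0, \bE[\poprisk(W)]]$, and a direct Fenchel--Legendre computation of the Bernoulli rate function (cf.~\citep[page~23]{boucheron2003concentration}) shows $\psi_*(t) = \relentber\bigl(\bE[\poprisk(W)] - t \,\big\Vert\, \bE[\poprisk(W)]\bigr)$ on that interval. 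Substituting recovers the binary KL of $\bE[\emprisk(W,S)]$ against $\bE[\poprisk(W)]$, completing the argument. The main subtlety, and the place the hypothesis is consumed, is this final identification: outside $[0, \bE[\poprisk(W)]]$ the conjugate $\psi_*$ loses the clean binary-KL form, so the hypothesis is essential for expressing the bound in the stated shape.
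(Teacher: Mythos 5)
Your proposal is correct and follows essentially the same route as the paper's own proof: Donsker--Varadhan with $\bQ=\bP_W$, Maurer's \Cref{lemma:convex_dominated_by_bernoulli} applied to $\ell(W',Z)$ with the exponential test function to dominate its CGF by the matched Bernoulli CGF $\psi$, tensorization as in~\eqref{eq:cgf_algebra} with the rescaling $\lambda = n\lambda'$, optimization via the convex conjugate, and the identification $\psi_*(t)=\relentber\big(\bE[\poprisk(W)]-t \,\Vert\, \bE[\poprisk(W)]\big)$ on $[0,\bE[\poprisk(W)]]$, which is exactly where the paper also consumes the hypothesis $\bE[\emprisk(W,S)]\leq\bE[\poprisk(W)]$. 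No substantive differences to report.
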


This lemma is the ``in expectation'' analogue to the Seeger--Langford~\cite{seeger2002pac,langford2001bounds} PAC-Bayes bound. This suggests that the difference between the empirical risk estimation of the population risk decreases at a fast rate $\nicefrac{\minf(W;S)}{n}$ when measured with the metric $\relentber$. However, from this result, it is hard to obtain a bound on the population risk that we can interpret, as the generalized inverse of $\relentber$ does not have an analytical form.  This issue can be resolved by employing the variational representation of the relative entropy borrowed from $f$-divergences (\Cref{cor:variational_representation_relative_entropy}) to $\relentber (\hat{r}, r)$. Let $g_0 = g(0)$ and $g_1 = g(1)$, then for all $g_0$ and $g_1$ in $(-\infty, \infty)$,
\begin{align}
	\relentber \big( \hat{r} \Vert r \big) &\geq 1 + \hat{r} g_1 +  ( 1- \hat{r} ) g_0 - r e^{g_1} - ( 1 - r ) e^{g_0} \nonumber \\
	\label{eq:relentber_f_div_before_change_variable}
	&\geq 1 + g_0 + (g_1 - g_0) \hat{r} - e^{g_0} + r (e^{g_0} - e^{g_1})
\end{align}

For~\eqref{eq:relentber_f_div_before_change_variable} to be relevant to us, we require that $g_0 \geq g_1$, as otherwise we would obtain a lower bound instead of an upper bound. To simplify the equations, let $\gamma = \nicefrac{e^{g_0} }{(e^{g_0} - e^{g_1})} \geq 1 $, which implies that $g_0 - g_1 = \log \big(\gamma / (\gamma - 1)\big)$ and therefore
\begin{align}
\label{eq:relentber_f_div_after_change_variable}
	\relentber \big( \hat{r} \Vert r \big) \geq 1 + g_0 - e^{g_0} - \log \Big( \frac{\gamma}{\gamma - 1} \Big) \hat{r} + \gamma^{-1} e^{g_0} r.
\end{align}

To finalize the proof, note that the optimal value of the parameter $g_0$ is $\log \big( \nicefrac{\gamma}{(\gamma - r)} \big)$ and therefore since $\gamma > 1$ and $r \in [0,1]$, then $g_0 \geq 0$. Finally, letting $c \coloneqq e^{-g_0} \in (0,1]$, using the inequality from~\eqref{eq:relentber_f_div_after_change_variable} in~\Cref{lemma:small_kl_mi}, and re-arranging the terms recovers~\Cref{th:mi_bound_bounded_fast_rate}. Note that \Cref{lemma:small_kl_mi} only proves \Cref{th:mi_bound_bounded_fast_rate} when the empirical risk is smaller than the population risk, however, in the other scenario the bound holds trivially.

Before moving to the next subsection, let us recover \citet[Theorem 1.2.1] {catoni2007pac}'s stronger bound from~\Cref{lemma:small_kl_mi}. This bound is also obtained employing a variational representation of the relative entropy, although this time it follows from the Donsker and Varadhan~\Cref{lemma:dv_and_gvp}. Let $g_0 = g(0)$ and $g_1 = g(1)$, then for all $g_0$ and $g_1$ in $(-\infty,\infty)$,
\begin{align}
	\relentber \big( \hat{r} \Vert r \big) &\geq   g_1 \hat{r} + g_0 (1 - \hat{r}) - \log \left( e^{g_1} r + e^{g_0} (1 - r) \right) \nonumber \\
	&= g_0 + (g_1 - g_0) \hat{r} - \log \left( e^{g_0} + \left( e^{g_1} - e^{g_0} \right) r \right) \nonumber \\
	\label{eq:relentber_dv_before_change_variable}
	&= -(g_0 - g_1) \hat{r} - \log \left( 1 - r \big( 1 - e^{-(g_0 - g_1)} \big) \right).
\end{align}

As before, the bound resulting from~\eqref{eq:relentber_dv_before_change_variable} is only relevant to us when $g_0 \geq g_1$, as otherwise it results in a lower bound instead of an upper bound. The equations can be simplified by letting $\lambda = n(g_0 - g_1) \geq 0$. After this change of variable, using the inequality from~\eqref{eq:relentber_dv_before_change_variable} in~\Cref{lemma:small_kl_mi}, and re-arranging the terms yields the desired result.

\begin{theorem}[{\cite[Theorem 1.2.1]{catoni2007pac}}]
\label{th:catoni_mi}
Consider a loss function with a range contained in $[0,1]$. Then, for every $\lambda > 0$
\begin{equation*}
	\bE \big[ \poprisk(W) \big] \leq \frac{1}{1 - e^{- \frac{\lambda}{n}}} \left[1 - e^{- \frac{\lambda}{n} \bE \big[ \emprisk(W,S) \big] - \frac{\minf(W;S)}{n}} \right].
\end{equation*}
\end{theorem}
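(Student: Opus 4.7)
The plan is to derive Theorem~\ref{th:catoni_mi} as a direct corollary of Lemma~\ref{lemma:small_kl_mi}, which already provides the ``small-kl'' fast-rate statement $\relentber(\bE[\emprisk(W,S)] \Vert \bE[\poprisk(W)]) \leq \minf(W;S)/n$ under the assumption $\bE[\emprisk(W,S)] \leq \bE[\poprisk(W)]$ (the opposite case trivially satisfies the desired bound since the right-hand side of the theorem is non-negative and exceeds $\bE[\emprisk(W,S)]$ when we pick $\lambda$ appropriately). The goal is thus to invert $\relentber$ to produce an explicit upper bound on $\bE[\poprisk(W)]$ in terms of $\bE[\emprisk(W,S)]$ and $\minf(W;S)/n$. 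Unlike the derivation of Theorem~\ref{th:mi_bound_bounded_fast_rate}, which relied on the looser variational characterization inherited from $f$-divergences (Corollary~\ref{cor:variational_representation_relative_entropy}), here I would use the tighter Donsker--Varadhan representation (Lemma~\ref{lemma:dv_and_gvp}) so as to recover Catoni's bound exactly.

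First, I would apply Donsker--Varadhan to $\relentber(\hat r \Vert r) = \relent(\mathrm{Ber}(\hat r) \Vert \mathrm{Ber}(r))$ using test functions $g$ on $\{0,1\}$, writing $g_0 = g(0)$ and $g_1 = g(1)$. A short algebraic manipulation (expanding the expectations and the log-MGF over the two-point space) yields the inequality
\begin{equation*}
\relentber(\hat r \Vert r) \geq -(g_0 - g_1)\hat r - \log\bigl(1 - r(1 - e^{-(g_0-g_1)})\bigr),
\end{equation*}
which is precisely the content of~\eqref{eq:relentber_dv_before_change_variable}. To turn this into an upper bound on $r$, I need $1 - e^{-(g_0 - g_1)} > 0$, i.e.\ $g_0 \geq g_1$, since otherwise the $-\log$ term would bound $r$ from below rather than above.

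Next, I would introduce the change of variable $\lambda \coloneqq n(g_0 - g_1) \geq 0$. Plugging the resulting inequality into Lemma~\ref{lemma:small_kl_mi} with $\hat r = \bE[\emprisk(W,S)]$ and $r = \bE[\poprisk(W)]$ produces
\begin{equation*}
-\tfrac{\lambda}{n}\,\bE[\emprisk(W,S)] - \log\!\Bigl(1 - \bE[\poprisk(W)]\bigl(1 - e^{-\lambda/n}\bigr)\Bigr) \leq \tfrac{\minf(W;S)}{n}.
\end{equation*}
Rearranging, exponentiating, and isolating $\bE[\poprisk(W)]$ using $1 - e^{-\lambda/n} > 0$ gives exactly the claimed bound. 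The main obstacle is really bookkeeping: making sure the direction of the inequality is preserved through the $-\log$ inversion and verifying that the trivial case $\bE[\emprisk(W,S)] > \bE[\poprisk(W)]$ is covered by the bound (which it is, since the right-hand side is at least $\bE[\emprisk(W,S)]$ whenever $\minf(W;S) \geq 0$, as can be checked by taking $\lambda \to 0^+$). Everything else is routine manipulation of the Donsker--Varadhan duality applied to Bernoulli laws.
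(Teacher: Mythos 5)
Your proposal is correct and follows essentially the same route as the paper: starting from \Cref{lemma:small_kl_mi}, applying the Donsker--Varadhan representation of $\relentber$ on the two-point space to obtain~\eqref{eq:relentber_dv_before_change_variable}, restricting to $g_0 \geq g_1$, substituting $\lambda = n(g_0 - g_1)$, and re-arranging (with the case $\bE[\emprisk(W,S)] > \bE[\poprisk(W)]$ handled as trivial, just as the paper does). The only minor quibble is that your justification of the trivial case via $\lambda \to 0^+$ is looser than needed—the right-hand side dominates $\bE[\emprisk(W,S)]$ for \emph{every} $\lambda > 0$ by concavity of $x \mapsto 1 - e^{-x}$—but this does not affect the validity of the argument.
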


\Cref{th:mi_bound_bounded_fast_rate,th:catoni_mi} are equivalent in the sense that they are both obtained via a variational representation of the relative entropy from~\Cref{lemma:small_kl_mi}. That is, for every value of the empirical risk and the mutual information, there exists a value of $\lambda > 0$, $\gamma > 1$, and $c \in (0,1]$ such that the population risk upper bounds in \Cref{th:mi_bound_bounded_fast_rate,th:catoni_mi,} and \Cref{lemma:small_kl_mi} are the same. 

Starting from \Cref{th:catoni_mi}, one could have also easily obtained \Cref{th:mi_bound_bounded_fast_rate} as follows. Let $\lambda = n \gamma \log \big(\nicefrac{\gamma}{\gamma - 1} \big)$, then $\gamma > 1$ and
\begin{equation*}
	\bE \big[ \poprisk(W) \big] \leq \gamma \left[1 - e^{- \gamma \log \left( \frac{\gamma}{\gamma - 1} \right) \bE \big[ \emprisk(W,S) \big] - \frac{\minf(W;S)}{n}} \right].
\end{equation*}
Finally, one can note that the function $1 - e^{-x}$ is a non-decreasing, concave, continuous function for $x > 0$ and therefore can be upper bounded by its envelope, that is, $1 - e^{-x} = \inf_{a > 0} \{ e^{-a} x + 1 - e^{-a} (1 + a) \}$. Using the envelope in the equation above and letting $c \coloneqq e^{-a} \in (0,1]$  completes the equivalence.

\subsubsection{A Mixed-Rate Bound}

To get a better understanding of~\Cref{th:mi_bound_bounded_fast_rate}, consider the small empirical risk and small normalized complexity regime, that is, let $c = 1$ and consider~\Cref{cor:mi_bound_bounded_fast_rate_weaker}. Now, one can relax the bound and optimize the parameter $\gamma$ to derive a parameter-free bound.

When the empirical risk is smaller than the normalized complexity and $\gamma$ is small, a good upper bound for $\log \big( \nicefrac{\gamma}{(\gamma - 1)} \big)$ is $\nicefrac{1}{2}(x + \nicefrac{1}{x})$. Using this approximation in~\Cref{cor:mi_bound_bounded_fast_rate_weaker} yields that 
\begin{equation*}
	\bE \big[ \poprisk(W) \big] \leq  \frac{2\gamma - 1}{\gamma - 1} \cdot \bE \big[ \emprisk(W, S) \big] +  \gamma \cdot \frac{\minf(W;S)}{n}.
\end{equation*}
Optimizing this equation with respect to  $\gamma$ yields the following result. 

\begin{theorem}
\label{th:mi_bound_bounded_mixed_rate}
Consider a loss function with a range contained in $[0,1]$. Then, 
\begin{align*}
	\bE \big[ \poprisk(W) \big] \leq \bE \big[ \emprisk(W,S) \big] + \frac{\minf(W;S)}{n} + \sqrt{2 \bE \big[ \emprisk(W,S) \big]  \cdot \frac{\minf(W;S)}{n}}.
\end{align*}
\end{theorem}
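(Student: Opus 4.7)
The plan is to apply \Cref{cor:mi_bound_bounded_fast_rate_weaker} and optimize over the free parameter $\gamma > 1$, using a sharp upper bound on $\log(\gamma/(\gamma-1))$. Writing $R \coloneqq \bE[\emprisk(W,S)]$ and $M \coloneqq \minf(W;S)/n$ for brevity, the starting point is
\begin{equation*}
    \bE[\poprisk(W)] \leq \gamma \log\mleft(\frac{\gamma}{\gamma-1}\mright) R + \gamma M.
\end{equation*}

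First, I would sharpen the coefficient of $R$ using the elementary Padé-type inequality
\begin{equation*}
    \log(1+x) \leq \frac{x(2+x)}{2(1+x)} \quad \text{for } x > 0,
\end{equation*}
which follows since the difference of the two sides vanishes at $x=0$ and has non-negative derivative $x^2/(2(1+x)^2)$. Applied with $x = 1/(\gamma-1)$ and then multiplied by $\gamma$, this yields
\begin{equation*}
    \gamma \log\mleft(\frac{\gamma}{\gamma-1}\mright) \leq \frac{2\gamma-1}{2(\gamma-1)} = 1 + \frac{1}{2(\gamma-1)},
\end{equation*}
so that inserting into the corollary rewrites the upper bound as
\begin{equation*}
    \bE[\poprisk(W)] \leq R + M + \frac{R}{2(\gamma-1)} + (\gamma-1) M.
\end{equation*}

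Next I would optimize over $\gamma > 1$. By the AM-GM inequality, the last two summands satisfy
\begin{equation*}
    \frac{R}{2(\gamma-1)} + (\gamma-1) M \geq 2\sqrt{\frac{R}{2(\gamma-1)} \cdot (\gamma-1) M} = \sqrt{2RM},
\end{equation*}
with equality attained at $\gamma - 1 = \sqrt{R/(2M)}$ (well-defined for $R, M > 0$), producing exactly the claimed bound $\bE[\poprisk(W)] \leq R + M + \sqrt{2RM}$. The edge cases $R = 0$ or $M = 0$ reduce to $\bE[\poprisk(W)] \leq M$ or $\bE[\poprisk(W)] \leq R$, respectively, which follow from \Cref{cor:mi_bound_bounded_fast_rate_weaker} by letting $\gamma \to 1$ or $\gamma \to \infty$.

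The main obstacle is selecting the right refinement of $\log(\gamma/(\gamma-1))$: a cruder bound such as $\log(1+x) \leq x$ would turn the coefficient of $R$ into $1 + 1/(\gamma-1)$ and, after an identical optimization, only yield the looser $R + M + 2\sqrt{RM}$. The Padé-type inequality above is calibrated precisely to recover the factor $\sqrt{2}$ inside the square-root cross-term of the target statement.
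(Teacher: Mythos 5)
Your proof is correct and follows essentially the same route as the paper: relax \Cref{cor:mi_bound_bounded_fast_rate_weaker} via $\gamma \log\big(\tfrac{\gamma}{\gamma-1}\big) \leq \tfrac{2\gamma-1}{2(\gamma-1)} = 1 + \tfrac{1}{2(\gamma-1)}$ (the paper's $\tfrac{1}{2}(x - \tfrac{1}{x})$ bound on the logarithm, which is identical to your Padé-type inequality) and then optimize over $\gamma$ by AM--GM. Your coefficient $\tfrac{2\gamma-1}{2(\gamma-1)}$ matches the form the paper uses in \Cref{app:comparison_fast_rate_bounds}; the intermediate display in the main text shows $\tfrac{2\gamma-1}{\gamma-1}$ without the factor $\tfrac{1}{2}$, which appears to be a typo, since only your version yields the stated $\sqrt{2RM}$ cross-term after optimization.
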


The bound above is symmetric and gives the same weight to the empirical risk and the normalized complexity measured by $\nicefrac{\minf(W;S)}{n}$. Moreover, it has a mixed rate: on the one hand, it has the linear dependence on the normalized complexity of a fast-rate bound, but on the other hand, it also contains a squared root dependence typical of a slow-rate bound. However, the squared root dependence on the normalized complexity is weighted by the empirical risk. Hence, if the empirical risk is smaller than the normalized complexity, then the fast rate dominates.

\subsection{Interpolating Between the Slow and the Fast Rate: Losses With a Bounded Moment}
\label{subsec:interpolating_between_slow_and_fast_rate}

In the previous subsections we saw that when the CGF of the loss is bounded by some function $\psi$, then the population risk can be bounded from above by a function of the empirical risk and the ratio between the mutual information between the algorithm's output and the training set and the number of data instances. The rate of the bound, or the speed at which it vanishes with respect to the number of samples, is determined by the convex conjugate of the dominating function $\psi_*$. For example, if the loss is bounded, we can obtain a fast rate (\Cref{th:mi_bound_bounded_fast_rate}), and if it is sub-Gaussian, we can obtain a slow rate~\eqref{eq:mi_bound_subgaussian}.

However, as discussed in~\Cref{subsec:moments_and_generating_functions}, there are situations where the loss may not even have a CGF. If the loss has a bounded $p$-th moment (that is, $\bE \big[ \ell(w,Z)^p \big] \leq m_p$ for all $w \in \cW$) for some $p > 1$, but it does not have a CGF,  then  we say that the loss has a \emph{heavy tail}. A reasonable question is then: ``For losses with a heavy tail, can we find generalization bounds? If so, at which rate?''. It turns out that we can, and the bound's rate interpolates between a slow rate when $p =  2$ and a fast rate when $p \to \infty$. 

\begin{theorem}
\label{th:mi_bound_moments}
Consider a loss function $\ell(w,Z)$ with a $p$-th moment bounded by $m_p$ for all $w \in \cW$. Then, for every $c \in  (0,1]$ and every $\gamma > 1$, 
\begin{equation*}
	\bE \big[ \poprisk(W) \big] \leq c \gamma \log \left( \frac{\gamma}{\gamma - 1} \right) \bE \big[ \emprisk_{\leq t^\star} \big] + m_p^{\frac{1}{p}} \left[ \frac{p}{p-1} \right]  \left[ c \gamma \cdot \frac{\minf(W;S)}{n} + \gamma \kappa(c) \right]^{\frac{p-1}{p}},
\end{equation*}
where $\kappa(c) \coloneqq  1 - c( 1 - \log c)$ and
\begin{equation*}
	t^\star \coloneqq m_p^{\frac{1}{p}} \cdot \left[ c \gamma \cdot \frac{\minf(W;S)}{n} + \gamma \kappa(c) \right]^{- \frac{1}{p}} 
\end{equation*}
\end{theorem}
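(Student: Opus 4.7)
The plan is to reduce the heavy-tailed case to the bounded-loss case via a truncation argument, apply the fast-rate bound of \Cref{th:mi_bound_bounded_fast_rate} to the truncated loss, control the resulting tail term with the $p$-th moment hypothesis, and finally optimize the truncation threshold. For a threshold $t>0$, I would introduce the truncated loss $\ell_{\leq t}(w,z) \coloneqq \min\{\ell(w,z), t\}$, which takes values in $[0,t]$, together with its associated population and empirical risks $\poprisk_{\leq t}$ and $\emprisk_{\leq t}$. Writing $\ell(w,z) = \ell_{\leq t}(w,z) + (\ell(w,z)-t)_+$ gives the exact pointwise decomposition $\poprisk(W) = \poprisk_{\leq t}(W) + \bE^W[(\ell(W,Z)-t)_+]$, which splits the analysis into a ``bulk'' and a ``tail'' part.

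For the bulk, I would apply \Cref{th:mi_bound_bounded_fast_rate} to the rescaled loss $\ell_{\leq t}/t$, which has range in $[0,1]$. Crucially, the mutual information $\minf(W;S)$ depends only on the joint law $\bP_{W,S}$ and not on the loss, so it is the same quantity appearing in both the original and the reduced bound. Rescaling back by $t$ yields
\begin{equation*}
    \bE[\poprisk_{\leq t}(W)] \leq c\gamma \log\!\left(\tfrac{\gamma}{\gamma-1}\right) \bE[\emprisk_{\leq t}(W,S)] + t\left(c\gamma \tfrac{\minf(W;S)}{n} + \gamma\kappa(c)\right)
\end{equation*}
for every $c\in(0,1]$ and $\gamma>1$. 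For the tail, I would use Markov's inequality on $\ell(w,Z)^p$, which gives $\bP[\ell(w,Z) > s] \leq m_p/s^p$ for every $w \in \cW$, and then the layer-cake identity $\bE[(X-t)_+] = \int_t^\infty \bP[X>s]\,\rmd s$ to obtain
\begin{equation*}
    \bE[(\ell(w,Z)-t)_+] \leq \int_t^\infty \frac{m_p}{s^p}\,\rmd s = \frac{m_p}{(p-1)\,t^{p-1}},
\end{equation*}
and averaging over $W$ preserves the bound.

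Finally, I would optimize the combined inequality
\begin{equation*}
    \bE[\poprisk(W)] \leq c\gamma \log\!\left(\tfrac{\gamma}{\gamma-1}\right) \bE[\emprisk_{\leq t}(W,S)] + tA + \frac{m_p}{(p-1)\,t^{p-1}}
\end{equation*}
over $t > 0$, where $A \coloneqq c\gamma \tfrac{\minf(W;S)}{n} + \gamma\kappa(c)$. The function $t \mapsto tA + m_p/((p-1)t^{p-1})$ is convex with unique stationary point $t^\star = (m_p/A)^{1/p}$, and a direct calculation gives
\begin{equation*}
    t^\star A + \frac{m_p}{(p-1)(t^\star)^{p-1}} = m_p^{1/p}\, A^{(p-1)/p}\left(1 + \tfrac{1}{p-1}\right) = m_p^{1/p}\, A^{(p-1)/p}\cdot \tfrac{p}{p-1},
\end{equation*}
which matches both the claimed $t^\star$ and the claimed bound after substitution.

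The overall strategy is fairly clean; the main obstacle, if any, is sanity-checking that truncation does not alter the mutual information in the reduction to the bounded case, but this is immediate because $\minf(W;S)$ is a functional of $\bP_{W,S}$ alone and the truncation is a post-hoc, loss-only modification. The moment assumption $p>1$ is used both to make Markov's integrated inequality finite and to make the factor $p/(p-1)$ well-defined, and the bound meaningfully interpolates between the fast-rate regime ($p\to\infty$, exponent $(p-1)/p\to 1$) and the slow-rate regime ($p=2$, exponent $1/2$), as announced in the surrounding discussion.
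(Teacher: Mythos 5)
Your argument is correct and follows the same skeleton as the paper's proof of \Cref{th:mi_bound_moments} (truncate, apply the fast-rate bound of \Cref{th:mi_bound_bounded_fast_rate} to the bounded part, control the tail with Markov plus the layer-cake identity, optimize the threshold, exactly as in \Cref{lemma:truncation_method_intermediate}), but it uses a different truncation operator, and the difference is not purely cosmetic. You take the capped loss $\min\{\ell,t\}$ with remainder $(\ell-t)_+$, which is the \emph{original} Alquier decomposition; the paper instead uses its refinement $\ell=\ell\,\bI_{\{\ell< t\}}+\ell\,\bI_{\{\ell\geq t\}}$, so the term $\emprisk_{\leq t^\star}$ in the statement is the empirical risk in which losses exceeding $t^\star$ contribute $0$, not $t^\star$. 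Hence your bound carries a weakly larger empirical term, and you recover the theorem verbatim only under your own reading of $\emprisk_{\leq t^\star}$. What your route buys in exchange is a tail step that is airtight: $\bE[(\ell(w,Z)-t)_+]=\int_t^\infty\bP[\ell(w,Z)>s]\,\rmd s\leq \frac{m_p}{(p-1)t^{p-1}}$ is immediate, whereas the layer cake of the indicator remainder $\ell\,\bI_{\{\ell\geq t\}}$ contains an additional term $t\,\bP[\ell(w,Z)\geq t]$, so the paper's assertion around \eqref{eq:tail_bound_remainder} that it is bounded by the same quantity requires extra care (a point mass just above $t$ with $p>2$ shows it cannot hold as stated). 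Your constants and $t^\star$ match the theorem; the only mismatch is the meaning of the truncated empirical risk.
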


To simplify the interpretation, let $c = 1$, then the rate is $m_p^{\nicefrac{1}{p}} \cdot \left( \nicefrac{\minf(W;S)}{n} \right)^{\nicefrac{(p-1)}{p}}$. 
The term $m_p^{\nicefrac{1}{p}}$ controls the weight given to the complexity term and is equal to the $\cL_p$ norm of the loss. Note that by the non-decreasing nature of the $\cL_p$ norms a trivial bound would be $\bE \big[ \poprisk(W) \big] \leq m_p^{\nicefrac{1}{p}}$. Hence, the presented bound improves upon this as for increasing number of samples, the contribution of $m_p^{\nicefrac{1}{p}}$ vanishes. 
Then, the term $ \left( \nicefrac{\minf(W;S)}{n} \right)^{\nicefrac{(p-1)}{p}}$ shows how the rate is interpolating between a slow rate when $p = 2$ and a fast rate when $p \to \infty$. When $p = 2$, the tail of the loss could be as heavy as a Pareto distribution without a 3rd moment (or with a parameter $a \in [2,3)$~\citep[Chapter 20]{johnson1994continuous}), while when $p \to \infty$, the loss is essentially bounded, that is, $\esssup \ell(w,Z) \leq m_\infty$ for all $w \in \cW$. For even heavier tails, that is, when $p \in (1,2)$, the rate is slower than the standard slow rate since we are considering a scenario where the loss does not even have a finite variance. The term $\emprisk_{\leq t^\star}$ represents a truncated version of the empirical risk and comes from a refinement of~\citet{alquier2006transductive}'s truncation method, which is discussed next.

\subsubsection{Alquier's Truncation Method}

We derived~\Cref{th:mi_bound_moments} in~\citep{rodriguez2024moments} borrowing the \emph{truncation method} developed by~\citet{alquier2006transductive} in his Ph.D. thesis in the context of PAC-Bayesian bounds. This method consisted of decomposing the loss as $\ell \leq \ell^-_{\nicefrac{n}{\lambda}} +\ell^+_{\nicefrac{n}{\lambda}}$, where 
\begin{equation*}
	\ell^-_{\nicefrac{n}{\lambda}} \coloneqq \min \left\{ \ell, \frac{n}{\lambda} \right\} \textnormal{ and } \ell^+_{\nicefrac{n}{\lambda}} \coloneqq \left[ \ell - \frac{n}{\lambda} \right]_+
\end{equation*}
represent a truncated version of the loss (hence the name) and its unbounded remainder, where $[x]_+ \coloneqq \max \{ x, 0 \}$. With this decomposition, one may bound the population risk $\poprisk^-_{\nicefrac{n}{\lambda}}$ associated with the truncated loss using the techniques described above, and bound the population risk $\poprisk^+_{\nicefrac{n}{\lambda}}$ associated with the unbounded remainder using standard tail inequalities. Then, an appropriate selection of the truncation point $\nicefrac{n}{\lambda}$, that trades off the tightness of the generalization bounds for bounded losses and the likelihood (or lack thereof) of the tail event, results in a bound with the desired rate. 

To be precise, the population risk of the unbounded remainder $\poprisk^+_{\nicefrac{n}{\lambda}}$ can be bounded by the relationship between moments and tails~\citep[Lemma 4.4]{kallenberg1997foundations}. That is
\begin{align}
	\bE \big[ \poprisk^+_{\nicefrac{n}{\lambda}}(W) \big] &= \int_{0}^\infty \bP \big[ \poprisk^+_{\nicefrac{n}{\lambda}}(W) \geq t \big] \rmd t \nonumber \\ 
	&\leq \int_{0}^\infty \bP \left[ \poprisk(W) \geq t + \frac{n}{\lambda} \right] \rmd t \nonumber \\
	&= \int_{\frac{n}{\lambda}}^\infty \bP \left[ \poprisk(W) \geq t \right] \rmd t. \nonumber 
\end{align}

In~\cite{rodriguez2024moments}, we employ a refinement of this truncation method, which was already suggested by~\citet{alquier2021user} itself. Consider instead the decomposition of the loss $\ell = \ell_{< \nicefrac{n}{\lambda}} + \ell_{\geq \nicefrac{n}{\lambda}}$, where 
\begin{equation*}
	\ell_{< \nicefrac{n}{\lambda}} \coloneqq  \ell \cdot \bI_{\{ \ell < \nicefrac{n}{\lambda} \}} \textnormal{ and } \ell_{\geq \nicefrac{n}{\lambda}} \coloneqq  \ell \cdot \bI_{\{ \ell \geq \nicefrac{n}{\lambda} \}}
\end{equation*}
again represent a truncated version of the loss and its unbounded reminder. The technique follows analogously, with the main advantage that $\ell_{\leq \nicefrac{n}{\lambda}}$ has the potential to be much smaller than $\ell^{-}_{\nicefrac{n}{\lambda}}$ since all losses larger than $\nicefrac{n}{\lambda}$ contribute $\nicefrac{n}{\lambda}$ to the former and $0$ to the latter, while the population risk of the unbounded remainder $\poprisk_{\geq \nicefrac{n}{\lambda}}$ can be bounded by the exact same quantity as $\poprisk_{\nicefrac{n}{\lambda}}^+$, that is
\begin{align}
	\bE \big[ \poprisk_{\geq \nicefrac{n}{\lambda}}(W) \big] &= \int_{0}^\infty \bP \big[ \poprisk_{\geq \nicefrac{n}{\lambda}}(W) \geq t \big] \rmd t \nonumber \\ 
	\label{eq:tail_bound_remainder}
	&= \int_{\frac{n}{\lambda}}^\infty \bP \left[ \poprisk(W) \geq t \right] \rmd t. 
\end{align}
\looseness=-1 Hence, we  continue with this version of the method for the rest of the subsection.

\subsubsection{Optimal Truncation Point}

If the loss $\ell(w,Z)$ has a $p$-th moment bounded by $m_p$ for all $w \in \cW$, combining Markov's inequality with~\eqref{eq:tail_bound_remainder} yields
\begin{equation}
    \label{eq:unbounded_remainder_moment_tail_bound}
	\bE \big[ \poprisk_{\geq \nicefrac{n}{\lambda}}(W) \big] \leq \int_{\frac{n}{\lambda}}^{\infty} \frac{m_p}{t^p} \cdot \rmd t = \frac{m_p}{p-1} \left( \frac{\lambda}{n} \right)^{p-1}.
\end{equation}
Then, the above bound on the unbounded remainder together with~\Cref{th:mi_bound_bounded_fast_rate} yield the following intermediate result.

\begin{lemma}
\label{lemma:truncation_method_intermediate}
Consider a loss function $\ell(w,Z)$ with a $p$-th moment bounded by $m_p$ for all $w \in \cW$. Then, for every $c \in (0,1]$ and every $\gamma > 1$
\begin{equation*}
	\bE \big[ \poprisk(W) \big] \leq c \gamma \log \left( \frac{\gamma}{\gamma - 1} \right) \cdot \bE \big[ \emprisk_{\leq \frac{n}{\lambda}}(W, S) \big] + c \gamma \cdot \frac{\minf(W;S)}{\lambda} + \gamma \kappa(c) \cdot \frac{n}{\lambda} + \frac{m_p}{p-1} \left( \frac{\lambda}{n} \right)^{p-1},
\end{equation*}
where $\kappa(c) \coloneqq 1 - c(1 - \log c) $.
\end{lemma}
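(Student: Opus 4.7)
The plan is to combine the refined truncation decomposition $\ell = \ell_{<n/\lambda} + \ell_{\geq n/\lambda}$ with two ingredients already available: the fast-rate bound from \Cref{th:mi_bound_bounded_fast_rate} applied to the truncated (hence bounded) part, and the Markov-type tail bound~\eqref{eq:unbounded_remainder_moment_tail_bound} for the unbounded remainder. By linearity of expectation,
\begin{equation*}
    \bE[\poprisk(W)] = \bE[\poprisk_{<n/\lambda}(W)] + \bE[\poprisk_{\geq n/\lambda}(W)],
\end{equation*}
so it suffices to bound each term separately and add them.

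First I would handle the truncated part. The truncated loss $\ell_{<n/\lambda}$ is non-negative and bounded by $n/\lambda$, so the rescaled loss $\tilde\ell \coloneqq (\lambda/n)\,\ell_{<n/\lambda}$ takes values in $[0,1]$. Applying \Cref{th:mi_bound_bounded_fast_rate} to $\tilde\ell$ yields, for every $c\in(0,1]$ and $\gamma>1$,
\begin{equation*}
    \bE[\tilde\poprisk(W)] \leq c\gamma\log\!\left(\tfrac{\gamma}{\gamma-1}\right)\bE[\tilde\emprisk(W,S)] + c\gamma\cdot\tfrac{\minf(W;S)}{n} + \gamma\kappa(c).
\end{equation*}
Here it is important that the mutual information $\minf(W;S)$ depends only on the joint law $\bP_{W,S}$ and is unaffected by which loss we evaluate the risks with; only the risks get rescaled. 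Multiplying both sides by $n/\lambda$ converts $\tilde\poprisk$ and $\tilde\emprisk$ back into $\poprisk_{<n/\lambda}$ and $\emprisk_{<n/\lambda}$, and produces the first three terms of the target bound (with the correct $c\gamma\cdot\minf(W;S)/\lambda$ and $\gamma\kappa(c)\cdot n/\lambda$ scaling).

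Second, the unbounded remainder is exactly the object handled in~\eqref{eq:unbounded_remainder_moment_tail_bound}: since $\ell(w,Z)$ has a uniformly bounded $p$-th moment $m_p$, Markov's inequality combined with the tail-integration identity~\eqref{eq:tail_bound_remainder} gives
\begin{equation*}
    \bE[\poprisk_{\geq n/\lambda}(W)] \leq \tfrac{m_p}{p-1}\left(\tfrac{\lambda}{n}\right)^{p-1}.
\end{equation*}
Adding this to the bound on $\bE[\poprisk_{<n/\lambda}(W)]$ and identifying $\emprisk_{<n/\lambda}=\emprisk_{\leq n/\lambda}$ (they agree almost surely, and the boundary contributes to the bounded part under either convention) yields the claim.

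I do not foresee a substantive obstacle: the argument is essentially bookkeeping, because the hard analytic work is already encapsulated in \Cref{th:mi_bound_bounded_fast_rate} and in the Markov tail bound. The only subtle point worth double-checking is invariance of $\minf(W;S)$ under the rescaling and truncation of the loss — which is immediate because mutual information is a functional of the joint distribution of $(W,S)$, not of the loss — together with ensuring the scaling by $n/\lambda$ is applied uniformly to all risk-dependent terms so that the constants $c\gamma\log(\gamma/(\gamma-1))$ and $\gamma\kappa(c)$ remain unchanged while $\minf(W;S)/n$ becomes $\minf(W;S)/\lambda$.
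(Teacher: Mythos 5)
Your proof is correct and follows essentially the same route as the paper: decompose via the refined truncation $\ell=\ell_{<\nicefrac{n}{\lambda}}+\ell_{\geq \nicefrac{n}{\lambda}}$, apply the fast-rate bound of \Cref{th:mi_bound_bounded_fast_rate} to the (rescaled) truncated loss, and control the remainder with the Markov tail bound~\eqref{eq:unbounded_remainder_moment_tail_bound}. Your explicit rescaling by $\nicefrac{n}{\lambda}$ and the remark that $\minf(W;S)$ is unchanged by truncation simply spell out details the paper leaves implicit.
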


Finally, choosing the optimal parameter $\lambda$ trades off (i) the penalty of the loss' tail after the truncation point $\nicefrac{n}{\lambda}$ for (ii) the penalty of that range range $\nicefrac{n}{\lambda}$ while exploiting the existing sharp bounds for losses with a bounded range. This optimization results in~\Cref{th:mi_bound_moments}.

\subsubsection{A Bound for Losses With a Bounded Variance}

A particularly important case of the results above is the one for losses with a bounded second moment, as it presents the weaker assumption that still recovers a slow rate of $\cO(\nicefrac{1}{\sqrt{n}})$. However, the raw second moment can be much larger than the variance as for every random variable $X$ it holds that $\bE[X^2] = \mathrm{Var}(X) + \bE[X]^2$. It is of interest, then, to derive bounds that only assume that the loss has a bounded variance. In the following theorem, we present a bound of this type that essentially substitutes the second moment dependence by the variance. However, the bound has an extra multiplicative factor that prevents it to have exactly a slow rate. When the mutual information is small, the rate is essentially $\cO(\nicefrac{1}{\sqrt{n}})$, but this degenerates as it increases towards $n$, where the bound is essentially vacuous.

\begin{theorem}
\label{th:mi_bounded_variance}
Consider a loss function $\ell(w,Z)$ with a variance bounded by $\sigma^2$ for all $w \in \cW$. Then, for every $c \in (0,1]$ and every $\gamma > 1$
\begin{equation*}
	\bE \big[ \poprisk(W) \big] \leq \Bigg[1 - 2 \sqrt{\kappa_2 \cdot \frac{\minf(W;S)}{n} + \kappa_3} \Bigg]_+^{-1} \Bigg[ \kappa_1 \cdot \bE \big[ \emprisk(W,S) \big] + 2 \sigma \sqrt{\kappa_2 \cdot \frac{\minf(W;S)}{n} + \kappa_3} \Bigg],
\end{equation*}
where $\kappa_1 \coloneqq c \gamma \log \big( \nicefrac{\gamma}{(\gamma -1 )} \big)$, $\kappa_2 \coloneqq c \gamma$, and $\kappa_3 \coloneqq \gamma \big( 1 - c (1 - \log c) \big)$.
\end{theorem}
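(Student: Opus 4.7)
The strategy is to specialize Theorem 4.4 to $p = 2$ and then convert its dependence on a uniform second-moment bound into a variance-based bound via a self-referential inequality that we subsequently solve in closed form.

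Setting $p = 2$ in Theorem 4.4 (together with the trivial observation that $\bE[\emprisk_{\leq t^\star}(W,S)] \leq \bE[\emprisk(W,S)]$ since the truncated loss is pointwise bounded by the loss) would give a bound of the shape
\begin{equation*}
    \bE[\poprisk(W)] \leq \kappa_1 \bE[\emprisk(W,S)] + 2\sqrt{m_2\bigl(\kappa_2 \minf(W;S)/n + \kappa_3\bigr)},
\end{equation*}
where $m_2$ is a uniform upper bound on $\bE[\ell(w,Z)^2]$. The trouble is that a variance bound $\var(\ell(w,Z)) \leq \sigma^2$ alone does not control this second moment uniformly in $w$. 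To handle this, first apply Minkowski's inequality (the $\cL_2$ triangle inequality) hypothesis-wise to obtain
\begin{equation*}
    \sqrt{\bE[\ell(w,Z)^2]} \leq \sqrt{\var(\ell(w,Z))} + \poprisk(w) \leq \sigma + \poprisk(w).
\end{equation*}
Next, re-trace the truncation step in the proof of Theorem 4.4 that controls $\bE[\poprisk_{\geq n/\lambda}(W)]$ using a Markov-type bound on $\ell(W,Z)^2$ conditional on $W$, but now replace the uniform moment bound by the hypothesis-dependent $m_2(w) = \bE[\ell(w,Z)^2]$. After averaging over $W$ and optimizing the truncation parameter $\lambda$, one obtains a self-referential inequality
\begin{equation*}
    \bE[\poprisk(W)] \leq \kappa_1 \bE[\emprisk(W,S)] + 2\bigl(\sigma + \bE[\poprisk(W)]\bigr) \sqrt{\kappa_2 \minf(W;S)/n + \kappa_3}.
\end{equation*}

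The final step is purely algebraic: when $2\sqrt{\kappa_2 \minf(W;S)/n + \kappa_3} < 1$, isolating $\bE[\poprisk(W)]$ on the left-hand side yields exactly the claimed bound; otherwise, the $[\,\cdot\,]_+^{-1}$ convention makes the right-hand side infinite and the inequality holds trivially.

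The hard part will be the middle step. A naive use of $\cL_2$-Minkowski applied to the joint random variable $\ell(W,Z)$ would produce a term of the form $\sigma + \sqrt{\bE[\poprisk(W)^2]}$, which by Jensen's inequality is \emph{looser} than $\sigma + \bE[\poprisk(W)]$. Recovering the tighter $\cL_1$-type form stated in the theorem requires either a pointwise-in-$w$ optimization of the truncation parameter $\lambda$ (so that the expectation over $W$ encounters $\bE[\sqrt{m_2(W)}]$ rather than $\sqrt{\bE[m_2(W)]}$) or a careful use of Cauchy--Schwarz inside the tail integral combined with Minkowski on $\sqrt{m_2(w)}$. This bookkeeping is what distinguishes the argument from a direct substitution into Theorem 4.4.
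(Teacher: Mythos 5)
Your outer skeleton coincides with the paper's: specialize the bounded-moment bound to $p=2$, use $\bE[\emprisk_{\leq t^\star}(W,S)] \leq \bE[\emprisk(W,S)]$, arrive at the self-referential inequality $\bE[\poprisk(W)] \leq \kappa_1\,\bE[\emprisk(W,S)] + 2\big(\sigma + \bE[\poprisk(W)]\big)\sqrt{\kappa_2 \minf(W;S)/n + \kappa_3}$, and solve it, with the convention $\nicefrac{1}{0}\to\infty$ covering the degenerate case. The genuine gap is exactly the step you defer: neither of your two routes to the factor $2(\sigma+\bE[\poprisk(W)])$ goes through as sketched. A hypothesis-dependent truncation level cannot simply be pushed through the proof of \Cref{th:mi_bound_moments}: in \Cref{lemma:truncation_method_intermediate} the parameter $\lambda$ simultaneously fixes the range $[0,\nicefrac{n}{\lambda}]$ of the truncated loss to which the fast-rate bound \Cref{th:mi_bound_bounded_fast_rate} is applied and enters the complexity terms $\kappa_2\minf(W;S)/\lambda + \kappa_3 n/\lambda$, and both of these are statements about the joint law of $(W,S)$ that do not decompose per hypothesis. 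Hence optimizing $\lambda$ ``pointwise in $w$'' so that the expectation over $W$ meets $\bE[\sqrt{m_2(W)}]$ is not a re-tracing of the existing argument but would require a new derivation; the Cauchy--Schwarz variant is not developed enough to assess.

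The paper's actual proof shows that none of this bookkeeping is needed, because the obstacle you identify stems from centering at the wrong mean. It invokes the $p=2$ analogue of \Cref{th:mi_bound_moments} under the weaker hypothesis, discussed right before the theorem, that the second moment is bounded under the product law $\bP_W\otimes\bP_Z$, i.e., with the single scalar $m_2' = \bE[\ell(W',Z)^2]$ (where $W'\sim\bP_W$ is independent of $Z$) in place of $\sup_{w}\bE[\ell(w,Z)^2]$. Since the unconditional mean of $\ell(W',Z)$ equals $\bE[\poprisk(W)]$, one has $m_2' = \var\big(\ell(W',Z)\big) + \bE[\poprisk(W)]^2$; the paper bounds the variance term by $\sigma^2$ and applies $\sqrt{x+y}\leq\sqrt{x}+\sqrt{y}$ to get $\sqrt{m_2'} \leq \sigma + \bE[\poprisk(W)]$ --- precisely the $\cL_1$-type factor you believed required per-hypothesis treatment. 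Your ``naive'' computation yields $\sigma + \sqrt{\bE[\poprisk(W)^2]}$ only because you center $\ell(W',Z)$ at the conditional mean $\poprisk(W')$; centering at the scalar unconditional mean removes the Jensen loss entirely, and the rest is the rearrangement you already describe. As written, then, your proposal leaves the decisive middle step unproven, while the stated bound follows in two lines from the product-moment version of \Cref{th:mi_bound_moments}.
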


Note that there is nothing preventing us to show the equivalent of~\Cref{th:mi_bound_moments} only assuming that the $p$-th moment of the loss $\bE[\ell(W',Z)]$ is bounded with respect to the product distribution of the data and the algorithm's hypothesis $\bP_{W} \otimes \bP_Z$. Although this condition is weaker, it is harder to guarantee as it requires some knowldege of the data distribution $\bP_Z$ \emph{and} the algorithm's Markov kernel $\bP_W^S$, a knowledge that could have been used instead to directly find a bound on the population risk $\bE \big[ \poprisk(W) \big]$. However, considering a result of this type is useful to prove~\Cref{th:mi_bounded_variance}.

The proof of this theorem follows by first considering the equivalent of~\Cref{th:mi_bound_moments} for $p=2$, where instead of $m_2$ one directly has $\bE \big[ \ell(W',Z)^2 ]$. Then, we may employ the equality $\bE \big[ \ell(W',Z)^2 ] \leq \mathrm{Var}( \ell(W',Z) ) + \bE[\poprisk(W)]^2$ together with the inequality $\mathrm{Var}(\ell(W',Z')) \leq \sup_{w \in \cW} \mathrm{Var}(\ell(w,Z)) = \sigma^2$ obtain an expression that only depends on the parameter $\sigma^2$ bounding the algorithm-independent variance. Finally, using the inequality $\sqrt{x + y} \leq \sqrt{x} + \sqrt{y}$ and re-arranging the equation while accepting the convention that $1/0 \to \infty$ completes the proof.

\section{The randomized-subsample Setting}
\label{sec:bounds_using_conditional_mutual_information}

As mentioned previously, the bounds presented in~\Cref{sec:bounds_using_mutual_information} become vacuous as the mutual information diverges, that is $\minf(W;S) \to \infty$. This can happen in situations where the hypothesis is a deterministic function of the training set $\bP_W^S = \delta_{f(S)}$ and both the hypothesis and the problem instances $Z_i$ are continuous. An example of this situation would be when the hypothesis returned by the algorithm is the empirical average of the data instances and those are Gaussian. Other pathological examples where the generalization error is small but the mutual information is large, leading to vacuous bounds, are presented in~\citep{bassily2018learners}.

For this reason, \citet{steinke2020reasoning}, inspired by \citet{vapnik1971uniform}, introduced the \emph{randomized-subsample setting}\footnote{In~\citep{hellstrom2020generalization}, this is called the random subset setting. However, this may cause confusion with the random-subset bounds in this monograph and in the literature.}. In this setting, it is considered that the training set $S$ is obtained through the following mechanism. First, a super sample
\begin{equation*}
    \tilde{S} \coloneqq 
    \begin{pmatrix}
        \tilde{Z}_{1,0} & \tilde{Z}_{2,0} & \cdots & \tilde{Z}_{n,0} \\
        \tilde{Z}_{1,1} & \tilde{Z}_{2,1} & \cdots & \tilde{Z}_{n,1}
    \end{pmatrix}^{\intercal}
\end{equation*}
of $2n$ i.i.d. instances is obtained by sampling from the data distribution $\bP_Z$. Then, an independent sequence of indices $U \coloneqq (U_1, \ldots, U_n)$ is generated, where each index is distributed as a Bernoulli random variable, that is, $\bP_{U_i}[0] = \bP_{U_i}[1] = \nicefrac{1}{2}$. Finally, the training set is sub-sampled from the superset so that $Z_i = \tilde{Z}_{i,U_i}$. Unused samples, or \emph{ghost samples} $S_\textnormal{ghost} = \tilde{S} \setminus S$, are virtual and only exist for the purpose of the analysis. To disambiguate, in the following, we will often refer to the \emph{standard setting} to consider the setting where $S$ is collected by sampling $n$ samples directly from $\bP_Z$.

Imagine that we were to have access to a realization of the super sample $\tilde{s}$ and the indices $u$. Then, for a hypothesis $w$, an estimate of the generalization error is the \emph{empirical generalization error}
\begin{equation*}
	\empgen(w,\tilde{s},u) \coloneqq \emprisk(w,s_{\textnormal{ghost}}) - \emprisk(w, s),
\end{equation*}
where the empirical risk on the ghost samples $\emprisk(w,s_{\textnormal{ghost}})$ acts as a proxy to the population risk. In fact, they have the same expected value since the hypothesis returned by the algorithm is independent of the ghost samples, that is $\bE[\emprisk(W,S_{\textnormal{ghost}})] = \bE[\poprisk(W)]$. Therefore, studying the bias of the empirical risk on the training data with respect to the empirical risk on the ghost samples is equivalent to studying its bias with respect to the population risk, that is $\bE[\empgen(W,\tilde{S},U)] = \bE[\gen(W,S)]$.

\paragraph{Connection to the Rademacher Complexity.} The empirical generalization error is connected to the Rademacher complexity discussed in~\Cref{sec:rademacher_complexity}. For example, recall that the Rademacher complexity may be understood as a measure of the discrepancy between fictitious training and test sets~\citep[Section 26.1]{shalev2014understanding}. Similarly, the empirical generalization error is the discrepancy between a fictitious test and the real training set. In fact, ghost samples were used prior to \citet{steinke2020reasoning}'s work to prove generalization bounds based on the Rademacher complexity~\citep{vapnik1971uniform, devroye1996vapnik}, and these works inspired them in their formalism. As an example of this connection, let $R_i = -1^{U_i}$ and $R_i' = R_i$, and note that $R_i$ and $R_i'$ are Rademacher random variables. Then, for a fixed hypothesis $w$ and a fixed supersample $\tilde{s}$, we have that
\begin{align*}
    \bE \big[ \empgen(w,\tilde{s}, U) \big] &= \frac{1}{n} \bE \mleft[ \sum_{i=1}^n R_i \big( \ell(w,z_{i,1}) - \ell(w, z_{i,0}) \big) \mright] \\
    &= \frac{1}{n} \bE \mleft[ \sum_{i=1}^n R_i \ell(w,z_{i,1}) \mright]  + \frac{1}{n} \bE \mleft[ \sum_{i=1}^n R'_i \ell(w, z_{i,0}) \mright] \\
    &\leq 2 \rad(\ell \circ \cW, \tilde{s}),
\end{align*}
\looseness=-1 where the inequality comes from taking the supremum over all hypotheses $w \in \cW$.

Similarly to what we did in~\Cref{subsec:a_slow_rate_bound_mi}, we can bound from above this bias through a change of measure employing the  Donsker and Varadhan~\Cref{lemma:dv_and_gvp}. However, the structure introduced by the super sample and the indices, allows us to consider the dependence with respect to the indices instead of the one with respect to the training data directly. To see this, consider a fixed super sample $\tilde{s} \in \cZ^{n \times 2}$. Now, we can consider a random hypothesis $W'$ that depends on the super sample in some capacity $\bP_{W'}^{\tilde{S}=\tilde{s}} = \bQ(\tilde{s})$. In this case, in~\eqref{eq:dv}, the measurable space is $\cW \times \{ 0, 1 \}^n$, the distributions are $\bP^{\tilde{S}=\tilde{s}}_{W,U}$ and $\bQ(\tilde{s}) \otimes \bP_U$, and the function is $\lambda \empgen(w,\tilde{s},u)$ for some $\lambda \in \bR$. In this way, for a fixed super sample $\tilde{s}$, the equivalent to~\eqref{eq:dv_to_lambda_gen} is
\begin{equation}
	\label{eq:dv_to_lambda_emp_gen}
	\bE^{\tilde{S}=\tilde{s}} \big[ \empgen(W,\tilde{s}, U) \big] \leq \frac{1}{\lambda} \bigg( \relent(\bP^{\tilde{S}=\tilde{s}}_{W, U} \Vert \bQ(\tilde{s}) \otimes \bP_{U})  + \log \bE^{\tilde{S}=\tilde{s}} \Big[ e^{\lambda \empgen(W',\tilde{s},U)} \Big] \bigg).
\end{equation}
Finally, considering the super sample as a random variable and taking the expectation to both sides completes the analogy with~\eqref{eq:dv_to_lambda_gen}.

\subsection{A Slow Rate for Losses With a Bounded Range}
\label{subsec:slow_rate_cmi}

Consider that the loss has a range bounded in $[a,b]$. Now, note that for a fixed hypothesis $w \in \cW$ and a fixed super sample $\tilde{s}$, we may write the empirical generalization error as
\begin{equation*}
	\empgen(w,\tilde{s},U) = \frac{1}{n} \sum_{i=1}^n \Big \{ \ell(w,\tilde{z}_{i,1-U_i}) - \ell(w,\tilde{z}_{i,U_i}) \Big\}.
\end{equation*}
When written in this way, we may focus on the random variable $\ell(w,\tilde{z}_{i,1-U_i}) - \ell(w,{z}_{i,U_i})$ and note that it has mean zero and that it is contained in $[a-b,b-a]$.  Therefore, this random variable is $(b-a)^2$-sub-Gaussian and by the algebra of CGFs from~\Cref{subsec:moments_and_generating_functions} it follows that
\begin{equation*}
	\Lambda_{\empgen(w,\tilde{s},U)} \leq \frac{\lambda^2 (b-a)^2}{2n},
\end{equation*}
which in turn means that $\log \bE \Big[ e^{\lambda \empgen(W',\tilde{s},U)} \Big] \leq \nicefrac{\lambda^2 (b-a)^2}{2 n}$ for all $\tilde{s} \in \cZ^{n \times 2}$. Therefore, \Cref{eq:dv_to_lambda_emp_gen} can be further bounded by
\begin{equation*}
	\bE^{\tilde{S}=\tilde{s}} \big[ \empgen(W,\tilde{s},U) \big] \leq \frac{\relent(\bP^{\tilde{S}=\tilde{s}}_{W,U} \Vert \bQ(\tilde{s}) \otimes \bP_{U})}{\lambda}  + \frac{\lambda (b-a)^2}{2n}
\end{equation*}
for all $\lambda > 0$ and all $\tilde{s} \in \cZ^{n \times 2}$. As before, the right-hand side of this equation is convex with respect to $\lambda$ and hence it has a minimum that tightens the bound. The optimization of the parameter $\lambda$ results in the following extension of the theorem due to~\citet{steinke2020reasoning}.

\begin{theorem}[{\citet[Theorem 2, extended]{steinke2020reasoning}}]
\label{th:cmi_bound_bounded}
	Consider a loss function with a range contained in $[a,b]$. Then, for every index-independent Markov kernel $\bQ$ from $\cZ^{n \times 2}$ to distributions on $\cW$
	\begin{align*}
		\bE \big[ \gen(W,S) \big] &\leq (b-a) \bE \left[ \sqrt{ \frac{2\relent(\bP^{\tilde{S}}_{W,U} \Vert \bQ(\tilde{S}) \otimes \bP_{U})}{n}} \right] \\
		&\leq (b-a) \sqrt{ \frac{2\relent(\bP_{W,\tilde{S},U} \Vert \bQ(\tilde{S}) \otimes \bP_{\tilde{S}} \otimes \bP_{U})}{n}}.
	\end{align*}
	In particular, choosing $\bQ(\tilde{S})$ to be the marginal Markov kernel $\bP_{W}^{\tilde{S}} = \bP_{W}^{\tilde{S},U} \circ \bP_U$ is optimal and
	\begin{equation*}
		\bE \big[ \gen(W,S) \big] \leq (b-a) \bE \left[ \sqrt{\frac{2 \minf^{\tilde{S}}(W;U)}{n}} \right] \leq (b-a) \sqrt{ \frac{2 \minf(W;U|\tilde{S})}{n}}.
	\end{equation*}
\end{theorem}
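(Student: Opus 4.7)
The plan leverages the preparation already laid out in the text: the identity $\bE[\empgen(W,\tilde{S},U)] = \bE[\gen(W,S)]$ that follows from the randomized-subsample construction (since the ghost sample $S_{\textnormal{ghost}}$ is independent of $W$), together with the disintegrated Donsker--Varadhan inequality
\begin{equation*}
    \bE^{\tilde{S}=\tilde{s}}\big[\empgen(W,\tilde{s},U)\big] \leq \frac{\relent(\bP^{\tilde{S}=\tilde{s}}_{W,U} \Vert \bQ(\tilde{s}) \otimes \bP_U)}{\lambda} + \frac{\lambda(b-a)^2}{2n}
\end{equation*}
valid for every $\lambda > 0$ and every realization $\tilde{s}$, which combines \Cref{lemma:dv_and_gvp} with the $\frac{\lambda^2(b-a)^2}{2n}$ CGF bound already derived from Hoeffding-type sub-Gaussianity of the symmetric differences $\ell(w,\tilde{z}_{i,1-U_i})-\ell(w,\tilde{z}_{i,U_i})$.

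The first step is to minimize the right-hand side over $\lambda > 0$. It has the form $A/\lambda + B\lambda$ with $A = \relent(\bP^{\tilde{S}=\tilde{s}}_{W,U} \Vert \bQ(\tilde{s}) \otimes \bP_U)$ and $B = (b-a)^2/(2n)$; its minimum $2\sqrt{AB}$ at $\lambda^\star = \sqrt{A/B}$ produces the disintegrated inequality
\begin{equation*}
    \bE^{\tilde{S}=\tilde{s}}\big[\empgen(W,\tilde{s},U)\big] \leq (b-a)\sqrt{\frac{2\relent(\bP^{\tilde{S}=\tilde{s}}_{W,U} \Vert \bQ(\tilde{s}) \otimes \bP_U)}{n}}.
\end{equation*}
Taking the expectation over $\tilde{S} \sim \bP_{\tilde{S}}$ and invoking $\bE[\empgen(W,\tilde{S},U)] = \bE[\gen(W,S)]$ yields the first (disintegrated) inequality of the theorem.

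The second inequality will follow from Jensen's inequality applied to the concave map $x \mapsto \sqrt{x}$, which lets me push the expectation inside the square root, combined with the chain rule for the relative entropy,
\begin{equation*}
    \bE\big[\relent(\bP^{\tilde{S}}_{W,U} \Vert \bQ(\tilde{S}) \otimes \bP_U)\big] = \relent(\bP_{W,\tilde{S},U} \Vert \bQ(\tilde{S}) \otimes \bP_{\tilde{S}} \otimes \bP_U),
\end{equation*}
which identifies the averaged disintegrated divergence with the stated joint divergence. For the mutual-information form, I would then specialize to $\bQ(\tilde{S}) = \bP_W^{\tilde{S}}$; since $U \perp \tilde{S}$, we have $\bP_U^{\tilde{S}} = \bP_U$, so the divergence collapses to $\minf^{\tilde{S}}(W;U)$ per \Cref{def:conditional_minf}, whose expectation is $\minf(W;U\mid\tilde{S})$. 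Optimality of this choice follows from the pointwise decomposition
\begin{equation*}
    \relent(\bP^{\tilde{S}=\tilde{s}}_{W,U} \Vert \bQ(\tilde{s}) \otimes \bP_U) = \minf^{\tilde{S}=\tilde{s}}(W;U) + \relent(\bP_W^{\tilde{S}=\tilde{s}} \Vert \bQ(\tilde{s})),
\end{equation*}
a conditional version of the golden formula (\Cref{prop:properties_minf}), whose second term vanishes precisely when $\bQ(\tilde{s}) = \bP_W^{\tilde{S}=\tilde{s}}$; monotonicity of the square root then propagates this pointwise optimality through both the disintegrated and the joint forms.

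The main obstacle I anticipate is not analytical but organizational: keeping the two layers of conditioning on $\tilde{S}$ straight throughout and ensuring that the Markov kernel $\bQ(\tilde{s})$ is measurable so that the disintegrated Donsker--Varadhan bound integrates cleanly against $\bP_{\tilde{S}}$. Each individual step—sub-Gaussian CGF control, scalar optimization in $\lambda$, Jensen, chain rule, golden formula—is routine once the randomized-subsample scaffolding is in place, and the visible slack introduced by Jensen is exactly what makes the disintegrated form worth stating separately as the strictly tighter bound.
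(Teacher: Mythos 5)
Your proposal is correct and follows essentially the same route as the paper: the disintegrated Donsker--Varadhan decoupling conditioned on $\tilde{s}$, the $\nicefrac{\lambda^2(b-a)^2}{2n}$ sub-Gaussian CGF bound on $\empgen(W',\tilde{s},U)$, pointwise optimization over $\lambda$, then expectation over $\tilde{S}$, with Jensen plus the chain rule of the relative entropy giving the second inequality and the (disintegrated) golden-formula decomposition giving optimality of $\bQ(\tilde{S})=\bP_W^{\tilde{S}}$. No gaps.
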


As previously, the optimality of the choice of marginal distribution comes from the golden formula from~\Cref{prop:properties_minf} and the theorem also holds for all Markov kernels $\bQ(\tilde{S})$ such that $\bP^{\tilde{S}}_{W,U} \not \ll \bQ(\tilde{S}) \otimes \bP_{U}$ a.s. by the convention that in that case $\relent(\bP^{\tilde{S}}_{W,U} \Vert \bQ(\tilde{S}) \otimes \bP_{U}) \to \infty$ for every realization $\tilde{s}$ in which the absolute continuity condition does not hold. Also, the conditional mutual information arises from the \emph{chain rule} $\minf(W,\tilde{S};U) = \minf(W;U|\tilde{S}) + \minf(\tilde{S};U)$ and the fact that the indices are independent of the super sample. For this reason, we will also focus only on the result concerning the conditional mutual information. The extension with respect to \citep[Theorem 2]{steinke2020reasoning} comes from the first inequality, where the expected value is pulled outside of the square root, tightening the bound due to Jensen's inequality.

This result is very similar in form to \Cref{th:mi_bound_bounded}. It maintains the classical relationship provided by the uniform convergence property of a hypothesis class $\cW$, where the upper bound on the generalization error is of the order $\sqrt{\nicefrac{\mathfrak{C}}{n}}$ and $\mathfrak{C}$ represents the complexity of the class (\Cref{sec:pac_learning}). However, now the conditional mutual information $\minf(W;U|\tilde{S})$ captures the information that the hypothesis has about the \emph{identity} of the data used for training, instead of \emph{the whole training set}. As before, if the hypothesis returned by the algorithm is independent of the data and $\bP_W^S = \bQ$ a.s., then the conditional mutual information is 0 as expected. Since $U$ is a discrete random variable, this conditional mutual information is bounded from above by $\ent(U|\tilde{S}) \leq \log n$. This change in perspective is important, as this better captures our intuition about what makes models generalize poorly. Informally, the more the hypothesis can be used to detect which instances were employed for training, when presented with equally distributed samples, the more it depends on these particular training samples and the worse it generalizes.

Since the indices $U$ are a discrete random variable, the conditional mutual information can be directly interpreted as the average extra bits needed to encode the indices corresponding to the training data when the hypothesis and the super sample are known, that is $\minf(W;U|\tilde{S}) = \ent(U) - \ent(U|W,\tilde{S})$, where we employed that $\ent(U|\tilde{S}) = \ent(U)$ since the indices are independent of the super sample. Again, this elucidates connections between privacy and generalization: if an algorithm is private, given samples from the same distribution, the information that the algorithm contains about which of the samples where used for training should be small, and therefore $\minf(W;U|\tilde{S})$ should be bounded from above. This improves upon the intuition gained from \Cref{th:mi_bound_bounded} and, as mentioned previously, will be formalized later in~\Cref{ch:pac_bayesian_generalization}.

\subsection{Beyond Bounded Losses and the Slow Rate}
\label{subsec:fast_rate_cmi}

\citet[Theorem 2]{steinke2020reasoning} proved a fast rate bound for losses with a bounded range featuring the conditional mutual information $\minf(W;U|\tilde{S})$. This bound was later improved by~\citet[Corollary 1]{hellstrom2021fast} to obtain better constants. The bound is given below for completeness, although the proof is outside of the scope of this monograph.

\begin{theorem}
    \label{th:cmi_bound_fast_rate}
    Consider a loss function with a range contained in $[a,b]$. Then, let us consider the set of positive parameters $\cG = \{ (\gamma_1, \gamma_2) \in \bR_+^2 : \gamma_1(1-\gamma_2) + (e^{\gamma_1}-1-\gamma_1)(1+\gamma_2^2) \leq 0 \}$. Then,
    \begin{equation*}
        \bE\big[ \poprisk(W) \big] \leq (b-a) \inf_{(\gamma_1, \gamma_2) \in \cG} \mleft \{ \gamma_2 \bE \big[ \emprisk(W,S) \big] + \frac{\minf(W;U|\tilde{S})}{\gamma_1 n} \mright \}.
    \end{equation*}
\end{theorem}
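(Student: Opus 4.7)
Without loss of generality, rescale so that $\ell \in [0,1]$; the general case follows by scaling. Fix $(\gamma_1,\gamma_2) \in \cG$ and define the auxiliary function
\[
g(w,u,\tilde{s}) \coloneqq \gamma_1 \sum_{i=1}^n \bigl[ \ell(w,\tilde{z}_{i,1-u_i}) - \gamma_2\, \ell(w,\tilde{z}_{i,u_i}) \bigr].
\]
The idea is to apply the conditional Donsker--Varadhan inequality (\Cref{lemma:dv_and_gvp}) on the space $\cW \times \{0,1\}^n$ with joint $\bP_{W,U}^{\tilde{s}}$ and product $\bP_{W}^{\tilde{s}} \otimes \bP_U$, giving, for each $\tilde{s}$,
\[
\bE^{\tilde{s}}\big[ g(W,U,\tilde{s}) \big] \leq \minf^{\tilde{s}}(W;U) + \log \bE_{w \sim \bP_W^{\tilde{s}}} \bE_{u \sim \bP_U}\big[ e^{g(w,u,\tilde{s})} \big].
\]
Taking expectation over $\tilde{S}$, the left-hand side becomes $\gamma_1 n \bE[\poprisk(W) - \gamma_2\, \emprisk(W,S)]$ after using $\bE[\emprisk(W,S_{\text{ghost}})] = \bE[\poprisk(W)]$ and the symmetry of the ghost construction.

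The crux is then to show that the log-MGF on the right-hand side is non-positive whenever $(\gamma_1,\gamma_2) \in \cG$. Fix $w$ and $\tilde{s}$ and denote $T_i(u_i) \coloneqq \ell(w,\tilde{z}_{i,1-u_i}) - \gamma_2\, \ell(w,\tilde{z}_{i,u_i})$. Since the $U_i$ are independent Bernoulli(1/2), the MGF factorizes as $\prod_i \bE[e^{\gamma_1 T_i(U_i)}]$. Because $T_i \leq 1$, a Bennett-type inequality (obtained from the elementary bound $e^x \leq 1 + x + (e^{\gamma_1}-1-\gamma_1)(x/\gamma_1)^2$ for $x \leq \gamma_1$, applied to $x = \gamma_1 T_i$) yields
\[
\log \bE\big[ e^{\gamma_1 T_i(U_i)} \big] \leq \gamma_1 \bE[T_i] + (e^{\gamma_1}-1-\gamma_1)\, \bE[T_i^2].
\]
A direct computation gives $\bE[T_i] = \tfrac{1}{2}(1-\gamma_2)(a_i+b_i)$, where $a_i = \ell(w,\tilde{z}_{i,0})$ and $b_i = \ell(w,\tilde{z}_{i,1})$, while
\[
\bE[T_i^2] = \tfrac{1}{2}(1+\gamma_2^2)(a_i^2+b_i^2) - 2\gamma_2 a_i b_i \leq \tfrac{1}{2}(1+\gamma_2^2)(a_i+b_i),
\]
where the last step uses $\gamma_2 \geq 0$ together with the crucial observation that $a_i^2 \leq a_i$ and $b_i^2 \leq b_i$ since both losses lie in $[0,1]$. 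Summing over $i$ and denoting $\emprisk(w,\tilde{s}) = \tfrac{1}{2n}\sum_i(a_i+b_i)$,
\[
\log \bE_u\big[ e^{g(w,u,\tilde{s})} \big] \leq n\, \emprisk(w,\tilde{s}) \bigl[ \gamma_1(1-\gamma_2) + (e^{\gamma_1}-1-\gamma_1)(1+\gamma_2^2) \bigr] \leq 0,
\]
the final inequality being exactly the defining condition of $\cG$ (combined with $\emprisk(w,\tilde{s}) \geq 0$). By Jensen's inequality, averaging over $w$ preserves non-positivity.

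Combining these ingredients yields
\[
\gamma_1 n\, \bE\big[\poprisk(W) - \gamma_2\, \emprisk(W,S)\big] \leq \minf(W;U \mid \tilde{S}),
\]
rearranging gives the bound for the chosen $(\gamma_1,\gamma_2)$, and taking the infimum over $\cG$ finishes the proof. The main obstacle is the tight handling of the MGF: the standard Hoeffding-type control would only give the slow rate, whereas the fast rate requires recognizing that (i) the correct exponent is $\gamma_1$ (not $\gamma_1/n$) so that the $(e^{\gamma_1}-1-\gamma_1)$ term in $\cG$ appears, and (ii) the variance contribution $\bE[T_i^2]$ must be bounded by a linear expression in the losses so that the whole log-MGF becomes proportional to $\emprisk(w,\tilde{s})$ and can be absorbed.
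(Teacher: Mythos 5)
Your argument is correct, and it is worth noting that the paper itself does not prove \Cref{th:cmi_bound_fast_rate} at all: it explicitly states that the proof is outside its scope and defers to \citet{steinke2020reasoning} and \citet{hellstrom2021fast}. Your derivation — conditional Donsker--Varadhan (\Cref{lemma:dv_and_gvp}) applied per supersample to $\bP_{W,U}^{\tilde{S}}$ versus $\bP_W^{\tilde{S}} \otimes \bP_U$ with the test function $\gamma_1\sum_i[\ell(w,\tilde{z}_{i,1-u_i})-\gamma_2\ell(w,\tilde{z}_{i,u_i})]$, followed by the Bennett-type bound $e^x \leq 1 + x + (e^{\gamma_1}-1-\gamma_1)(x/\gamma_1)^2$ for $x \leq \gamma_1$ and the linearization $a_i^2 \leq a_i$, $b_i^2 \leq b_i$ of the second-moment term — is sound, and the fact that it produces exactly the constraint $\gamma_1(1-\gamma_2)+(e^{\gamma_1}-1-\gamma_1)(1+\gamma_2^2)\leq 0$ defining $\cG$ shows you have essentially reconstructed the argument behind the cited result rather than found a genuinely different route; this is a useful self-contained addition given that the monograph omits it. Two cosmetic points: the final averaging over $w$ needs only monotonicity of expectation (calling it Jensen is harmless but unnecessary), and your "rescale to $[0,1]$" step literally yields the displayed inequality only when $a=0$ (for general $[a,b]$ a shift term $a(1-\gamma_2)$ appears), though this imprecision is inherited from the theorem statement itself, which is an adaptation of the $[0,1]$ bound of \citet{hellstrom2021fast}.
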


These bounds maintain the same interpretation as the bounds from~\Cref{subsec:slow_rate_cmi}, but providing a fast rate. In particular, in the realizable case, the bound can be slightly modified to achieve that $\gamma_1 = \log 2$ and therefore
\begin{equation*}
    \bE\big[ \poprisk(W) \big] \leq \frac{\minf(W;U|\tilde{S})}{n \log 2},
\end{equation*}
which is again, similarly to the bounds from~\Cref{subsec:a_fast_rate_bound_mi}, reminiscent of the classical generalization guarantees from uniform convergence in the realizable setting for classification problems (see~\Cref{subsec:vc_and_natarajan_dimensions}). 

To be precise, the bounds from~\citet{steinke2016upper} do not necessarily require that the loss is bounded. Instead, they require that for every two instances $z$ and $z'$, the loss is Lipschitz with respect to the sample space and $|\ell(w,z) - \ell(w,z')| \leq \rho(z,z')$. Then, instead of the range $(b-a)$, in \Cref{th:cmi_bound_bounded}, one could have $\bE[ \rho(Z,Z') ]$ for two independent instances distributed according to $\bP_Z$. Another attempt at circumventing the bounded range assumption was taken by~\citet{zhou2022individually}, where they engineered a conditional version of the CGF to obtain similar bounds to those in~\Cref{th:mi_bound_general_cgf}.

\section{Single-Letter and Random-Subset Bounds}
\label{sec:random_subset_and_single_letter}

\looseness=-1 In~\Cref{sec:bounds_using_mutual_information}, we derived bounds on the expected generalization error for an algorithm returning a hypothesis $W$ after observing a training set $S$ consisting of $n$ samples. Essentially, these bounds exploited the Donsker and Varadhan~\Cref{lemma:dv_and_gvp} to \emph{change the measure} from the joint distribution $\bP_{W,S}$ that we wanted to study to a product distribution where the generalization error is zero. In other words, they \emph{decoupled} the hypothesis and the training data. After that, the extra terms appearing in the change of measure can be bounded using \emph{concentration inequalities} such as those following the Cramér--Chernoff method~\citep[Section 2.2]{boucheron2003concentration}.

\subsection{Single-Letter Bounds}
\label{subsec:single_letter_bounds}

Note that, for a fixed hypothesis $w \in \cW$ and a fixed training set $s \in \cZ^n$, we may write the generalization error as
\begin{equation}
	\gen(w,s) = \frac{1}{n} \sum_{i=1}^{n} \Big\{ \bE \big[ \poprisk(w) \big] - \ell(w,z_i) \Big \} = \frac{1}{n} \sum_{i=1}^n \gen(w,z_i),
\end{equation}
where $\gen(w,z_i) \coloneqq \bE[\poprisk(w)] - \ell(w,z_i)$ is the \emph{single-letter generalization error} on sample $z_i$. Then, due to the linearity of the expectation, the expected generalization error is equal to the average of the expected single-letter generalization errors. That is, for each random instance $Z_i$, we may consider the expected single-letter generalization error $\bE[\gen(W,Z_i)]$, where the expectation is taken with respect to the joint distribution $\bP_{W,Z_i} = \bP_{W}^{Z_i} \otimes \bP_{Z_i}$, where the Markov kernel $\bP_{W}^{Z_i}$ represents the expected kernel obtained with an instance $Z_i$, that is $\bP_{W}^{Z_i} = \bP_W^{S} \circ \bP_{S^{-i}}$, where we recall that given a dataset $s = (z_1, \ldots, z_n)$, the dataset obtained by removing the $i$-th instance from $s$ is defined as $s^{-1} = (z_1, \ldots, z_{i-1}, z_{i+1}, \ldots, z_n)$. In this way, all the results obtain in~\Cref{sec:bounds_using_mutual_information} for the expected generalization error with respect to the mutual information $I(W;S)$ can now be replicated for the single-letter expected generalization error with respect to the single-letter mutual information $I(W;Z_i)$.

For example, considering \Cref{th:mi_bound_general_cgf} for each single-letter generalization error results in the following bound from~\citet{bu2020tightening}, where we directly write the results in terms of the single-letter mutual information with the understanding that equivalent results follow with a choice of an arbitrary data-independent distribution $\bQ$ on $\cW$.

\begin{theorem}[{\citet[Theorem 2]{bu2020tightening}}]
\label{th:single_letter_mi_bound_general_cgf}
	Consider a loss function with a bounded CGF (\Cref{def:bounded_cgf}). Then, 
	\begin{equation*}
		\bE \big[ \gen(W,S) \big] \leq \frac{1}{n} \sum_{i=1}^n \psi_*^{-1} \big( \minf(W;Z_i) \big).
	\end{equation*}
\end{theorem}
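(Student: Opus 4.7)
The plan is to mimic the decoupling argument used to prove \Cref{th:mi_bound_general_cgf}, but to apply it separately to each single-letter generalization error $\gen(W,Z_i) = \bE[\poprisk(W)] - \ell(W,Z_i)$, rather than to the aggregate $\gen(W,S)$. The starting observation is that by linearity of expectation $\bE[\gen(W,S)] = \frac{1}{n} \sum_{i=1}^n \bE[\gen(W,Z_i)]$, so it suffices to prove the per-sample bound $\bE[\gen(W,Z_i)] \leq \psi_*^{-1}(\minf(W;Z_i))$ and then average.

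For each $i \in [n]$, I would fix attention on the joint distribution $\bP_{W,Z_i} = \bP_W^{Z_i} \otimes \bP_{Z_i}$ and introduce a decoupled pair $(W',Z_i') \sim \bP_W \otimes \bP_{Z_i}$. Applying the Donsker--Varadhan variational formula (\Cref{lemma:dv_and_gvp}) to the function $\lambda \gen(w,z_i)$ on the space $\cW \times \cZ$ gives, for every $\lambda \in [0,b)$,
\begin{equation*}
    \minf(W;Z_i) = \relent(\bP_{W,Z_i} \Vert \bP_W \otimes \bP_{Z_i}) \geq \lambda \bE[\gen(W,Z_i)] - \log \bE\big[e^{\lambda \gen(W',Z_i')}\big].
\end{equation*}
The crucial step is now bounding the CGF of $\gen(W',Z_i')$ under the decoupled distribution. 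Since $Z_i'$ is independent of $W'$ and identically distributed to $Z$, for any fixed $w \in \cW$ we have $\gen(w,Z_i') = \bE[\ell(w,Z)] - \ell(w,Z_i')$, which is exactly the centered version of $-\ell(w,Z)$. By the bounded CGF assumption (\Cref{def:bounded_cgf}), $\Lambda_{-\ell(w,Z)}(\lambda) \leq \psi(\lambda)$ for every $w$ and every $\lambda \in [0,b)$. Taking expectation over $W' \sim \bP_W$ and using the tower property then yields $\log \bE[e^{\lambda \gen(W',Z_i')}] \leq \psi(\lambda)$ for all $\lambda \in [0,b)$.

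Substituting this tail bound into the Donsker--Varadhan inequality and rearranging gives
\begin{equation*}
    \bE[\gen(W,Z_i)] \leq \frac{\minf(W;Z_i) + \psi(\lambda)}{\lambda}
\end{equation*}
for every $\lambda \in (0,b)$. Taking the infimum over $\lambda$ and invoking the variational characterization of $\psi_*^{-1}$ from \Cref{lemma:boucheron_convex_conjugate_inverse} converts the right-hand side into $\psi_*^{-1}(\minf(W;Z_i))$. Averaging over $i \in [n]$ then yields the claimed bound. I do not anticipate a genuine obstacle here: the argument is essentially a verbatim replay of the proof of \Cref{th:mi_bound_general_cgf}, with the key observation being that we can skip the algebra-of-CGFs step (which produced the $n \psi(\lambda/n)$ factor and ultimately the $\minf(W;S)/n$ denominator) by splitting the generalization error into single-letter pieces from the start. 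The minor subtlety worth highlighting in the write-up is that the decoupling is done per-sample with respect to $\bP_W \otimes \bP_{Z_i}$ (not $\bP_W \otimes \bP_S$), so the CGF calculation only needs to handle one sample and does not incur any tensorization.
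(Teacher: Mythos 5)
Your proposal is correct and follows essentially the same route as the paper: decompose $\bE[\gen(W,S)]$ into single-letter terms by linearity, apply the Donsker--Varadhan decoupling to each pair $(W,Z_i)$ with the product reference measure, bound the decoupled CGF by $\psi(\lambda)$ (no tensorization needed for a single sample), and optimize over $\lambda$ via \Cref{lemma:boucheron_convex_conjugate_inverse} before averaging over $i$. No gaps to report.
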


\looseness=-1 Similarly to~\citet{bu2020tightening}, we also extended our fast-rate results for bounded losses and the results for losses with a bounded moment to the single-letter setting in~\citep{rodriguez2024moments}.

\begin{theorem}
\label{th:single_letter_mi_bound_bounded_fast_rate}
Consider a loss function with a range contained in $[0,1]$. Then, for every $i \in  [n]$ and every $c_i  \in (0,1]$ and every $\gamma_i > 1$,
\begin{equation*}
	\bE \big[ \poprisk(W) \big] \leq \bar{\kappa}_1 \cdot \bE \big[ \emprisk(W, S) \big] + \frac{\bar{\kappa}_2}{n} \sum_{i=1}^n \minf(W;Z_i) + \bar{\kappa}_3
\end{equation*}
where $\bar{\kappa}_1 \coloneqq \frac{1}{n} \sum_{i=1}^n c_i \gamma_i \log \big( \nicefrac{\gamma_i}{(\gamma_i-1)} \big)$, $\bar{\kappa}_2 \coloneqq \frac{1}{n} \sum_{i=1}^n c_i \gamma_i$, and $\bar{\kappa}_3 \coloneqq \frac{1}{n} \sum_{i=1}^n \gamma_i ( 1 - c_i(1 - \log c_i))$.
In particular, for interpolating algorithms (that is, when $\bE^S [ \emprisk(W,S) ] = 0$), the optimal parameters are $\gamma_i \to 1$ and $c_i = \exp\big({-\minf(W;Z_i)}\big)$ and then
\begin{equation*}
	\bE \big[ \poprisk(W) \big] \leq \frac{1}{n} \sum_{i=1}^n \Big( 1 - e^{- \minf(W;Z_i) } \Big) \leq \frac{1}{n} \sum_{i=1}^n \minf(W;Z_i).
\end{equation*}
\end{theorem}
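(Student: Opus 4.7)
The plan is to mirror the proof of \Cref{th:mi_bound_bounded_fast_rate} at the level of each individual sample, and then to average the resulting per-sample inequalities. The starting point is the single-letter decomposition $\gen(w,s) = \frac{1}{n} \sum_{i=1}^n \gen(w, z_i)$ with $\gen(w, z_i) = \bE[\poprisk(w)] - \ell(w, z_i)$, together with the observation that for an auxiliary hypothesis $W_i' \sim \bP_W$ drawn independently of $Z_i$ we have $\bE[\ell(W_i', Z_i)] = \bE[\poprisk(W)]$ for every $i$, since $W_i'$ has the same marginal as $W$ and the instances are i.i.d.

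For each $i$, I would first apply the Donsker and Varadhan lemma (\Cref{lemma:dv_and_gvp}) to the joint distribution $\bP_{W, Z_i}$ versus the product $\bP_W \otimes \bP_{Z_i}$ with the function $\lambda\, \gen(w, z_i)$, yielding
\begin{equation*}
    \lambda\, \bE[\gen(W, Z_i)] \;\leq\; \minf(W; Z_i) + \log \bE\!\left[ e^{\lambda\, \gen(W_i', Z_i)} \right].
\end{equation*}
The second term is precisely the CGF of $\bE[\poprisk(W)] - \ell(W_i', Z_i)$ under the independent coupling. Since $\ell(W_i', Z_i) \in [0,1]$ and has mean $\bE[\poprisk(W)]$, \Cref{lemma:convex_dominated_by_bernoulli} bounds this CGF by that of a Bernoulli with parameter $\bE[\poprisk(W)]$, namely $\psi(\lambda) = \lambda \bE[\poprisk(W)] + \log\bigl(1 - \bE[\poprisk(W)](1 - e^{-\lambda})\bigr)$. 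Crucially, this dominating function is the \emph{same} for every $i$, so taking the convex conjugate of $\psi$ (as in the proof of \Cref{th:mi_bound_bounded_fast_rate}) produces the per-sample analogue of \Cref{lemma:small_kl_mi}:
\begin{equation*}
    \relentber\!\Big( \bE\big[\ell(W, Z_i)\big] \,\Big\Vert\, \bE[\poprisk(W)] \Big) \;\leq\; \minf(W; Z_i),
\end{equation*}
valid when $\bE[\ell(W, Z_i)] \leq \bE[\poprisk(W)]$, and trivially so otherwise.

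Next I would invoke the same variational representation of $\relentber$ (the relaxation of \Cref{cor:variational_representation_relative_entropy} leading from \eqref{eq:relentber_f_div_after_change_variable} to the bound in \Cref{th:mi_bound_bounded_fast_rate}), applied independently to each $i$ with its own parameters $c_i \in (0,1]$ and $\gamma_i > 1$. This yields, for every $i$,
\begin{equation*}
    \bE[\poprisk(W)] \;\leq\; c_i \gamma_i \log\!\left(\tfrac{\gamma_i}{\gamma_i - 1}\right) \bE\big[\ell(W, Z_i)\big] + c_i \gamma_i\, \minf(W; Z_i) + \gamma_i\, \kappa(c_i).
\end{equation*}
Averaging over $i \in [n]$ on both sides, and recognising that $\frac{1}{n} \sum_{i=1}^n \bE[\ell(W, Z_i)] = \bE[\emprisk(W, S)]$, collapses the right-hand side to the stated bound with the prescribed $\bar{\kappa}_1, \bar{\kappa}_2, \bar{\kappa}_3$. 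The interpolating-algorithm corollary then follows by the same optimisation over $\gamma_i \to 1$ and $c_i = \exp(-\minf(W;Z_i))$ used in the non-single-letter case, per index, together with the inequality $1 - e^{-x} \leq x$.

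The only potentially delicate step is the per-sample application of \Cref{lemma:convex_dominated_by_bernoulli}: one must verify that the Bernoulli-domination of the CGF holds with the same parameter $\bE[\poprisk(W)]$ for every $i$, which is where the i.i.d. assumption enters. Beyond this, everything is a careful bookkeeping of the existing proof of \Cref{th:mi_bound_bounded_fast_rate} replicated across indices, so I do not expect any essentially new difficulty.
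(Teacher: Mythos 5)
Your overall route is the one the paper indicates for this theorem: decompose the expected generalization error into single-letter terms, replicate the proof of \Cref{th:mi_bound_bounded_fast_rate} for each index $i$ (Donsker--Varadhan against $\bP_W \otimes \bP_{Z_i}$, \Cref{lemma:convex_dominated_by_bernoulli} with the common Bernoulli parameter $\bE[\poprisk(W)]$, the per-sample analogue of \Cref{lemma:small_kl_mi}, and the variational relaxation of $\relentber$ with parameters $(c_i,\gamma_i)$), and then average over $i$. All of those steps are sound and match the paper's intended argument.

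The genuine issue is your final sentence: averaging the per-sample inequalities does \emph{not} ``collapse'' to the displayed bound when the parameters differ across indices. What your argument yields is
\begin{equation*}
	\bE\big[\poprisk(W)\big] \leq \frac{1}{n}\sum_{i=1}^n c_i\gamma_i\log\Big(\frac{\gamma_i}{\gamma_i-1}\Big)\,\bE\big[\ell(W,Z_i)\big] + \frac{1}{n}\sum_{i=1}^n c_i\gamma_i\,\minf(W;Z_i) + \bar{\kappa}_3,
\end{equation*}
an average of products, whereas the theorem's display is a product of averages, $\bar{\kappa}_1\cdot\bE[\emprisk(W,S)] + \bar{\kappa}_2\cdot\frac{1}{n}\sum_{i}\minf(W;Z_i)+\bar{\kappa}_3$; the two coincide only when all $(c_i,\gamma_i)$ are equal. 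To land exactly on the stated display, first aggregate the per-sample small-kl inequalities via Jensen (convexity of $\relentber$ in its first argument), which gives $\relentber\big(\bE[\emprisk(W,S)]\,\Vert\,\bE[\poprisk(W)]\big)\leq\frac{1}{n}\sum_{i}\minf(W;Z_i)$ (handling the case $\bE[\emprisk(W,S)]>\bE[\poprisk(W)]$ trivially, as in the paper), then apply the $(c,\gamma)$ relaxation to this single inequality and average the resulting common-parameter bound over the $n$ choices $(c_i,\gamma_i)$. Incidentally, your average-of-products version is the one that actually delivers the interpolating-algorithm conclusion: with $\bE[\ell(W,Z_i)]=0$, $\gamma_i\to 1$, and $c_i=e^{-\minf(W;Z_i)}$, your per-sample bounds give $\bE[\poprisk(W)]\leq 1-e^{-\minf(W;Z_i)}$ for every $i$ (hence also for their average), whereas substituting those choices into the product-of-averages display leaves an extra nonnegative cross term. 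So keep your per-sample derivation, but repair the bookkeeping in the last averaging step.
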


\begin{theorem}
\label{th:single_letter_mi_bounded_moments}
Consider a loss function $\ell(w,Z)$ with a $p$-th moment bounded by $m_p$ for all $w \in \cW$. Then, for every $i \in  [n]$ and every $c_i  \in (0,1]$ and every $\gamma_i > 1$,
    \begin{equation*}
        \poprisk \leq \Bar{\kappa}_1 \cdot \emprisk_{\leq t^\star} + m_p^{\frac{1}{p}} \Big(\frac{p}{p-1}\Big) \Big( \Bar{\kappa}_2 \cdot \frac{1}{n} \sum_{i=1}^n \minf(W;Z_i) + \Bar{\kappa}_3 \Big)^{\frac{p-1}{p}},
    \end{equation*}
    where $$t^\star \coloneqq m_p^{\frac{1}{p}} \Big( \Bar{\kappa}_2 \cdot \frac{1}{n} \sum_{i=1}^n \minf(W;Z_i) + \Bar{\kappa}_3 \Big)^{-\frac{1}{p}}$$
    and where $\bar{\kappa}_1 \coloneqq \frac{1}{n} \sum_{i=1}^n c_i \gamma_i \log \big( \nicefrac{\gamma_i}{(\gamma_i-1)} \big)$, $\bar{\kappa}_2 \coloneqq \frac{1}{n} \sum_{i=1}^n c_i \gamma_i$, and $\bar{\kappa}_3 \coloneqq \frac{1}{n} \sum_{i=1}^n \gamma_i ( 1 - c_i(1 - \log c_i))$.
\end{theorem}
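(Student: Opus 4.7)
The plan is to marry the truncation technique underlying~\Cref{th:mi_bound_moments} with the single-letter fast-rate inequality of~\Cref{th:single_letter_mi_bound_bounded_fast_rate}, producing a single-letter analog of~\Cref{lemma:truncation_method_intermediate} and then optimizing the truncation level. Throughout I reuse the constants $\bar{\kappa}_1, \bar{\kappa}_2, \bar{\kappa}_3$ exactly as defined in the statement.

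First, at an arbitrary truncation level $t > 0$, I would decompose the loss in the spirit of Alquier's method as $\ell = \ell_{\leq t} + \ell_{> t}$, with $\ell_{\leq t}(w,z) \coloneqq \ell(w,z) \bI_{\{\ell(w,z) \leq t\}}$, and denote by $\poprisk_{\leq t}$, $\emprisk_{\leq t}$, $\poprisk_{> t}$ the associated population and empirical risks. Because the truncated loss $\ell_{\leq t}$ takes values in $[0,t]$, the rescaled loss $\ell_{\leq t}/t$ has range $[0,1]$ and induces the same algorithmic backward channel $\bP_W^S$. Applying~\Cref{th:single_letter_mi_bound_bounded_fast_rate} to this rescaled loss with the same per-sample parameters $(c_i,\gamma_i)$ and multiplying the resulting inequality by $t$ yields
\begin{equation*}
    \bE\big[\poprisk_{\leq t}(W)\big] \leq \bar{\kappa}_1 \bE\big[\emprisk_{\leq t}(W,S)\big] + t \Big( \bar{\kappa}_2 \cdot \frac{1}{n} \sum_{i=1}^n \minf(W;Z_i) + \bar{\kappa}_3 \Big).
\end{equation*}
Crucially, the single-letter mutual informations $\minf(W;Z_i)$ are unchanged by the rescaling, since they depend only on the joint law $\bP_{W,Z_i}$, which is a property of the algorithm and the data distribution and is independent of the choice of loss.

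Next, I would control the unbounded remainder by a moment-based Markov argument mirroring~\eqref{eq:unbounded_remainder_moment_tail_bound}, obtaining $\bE\big[\poprisk_{> t}(W)\big] \leq m_p/\big((p-1) t^{p-1}\big)$. Summing the two contributions produces the single-letter analog of~\Cref{lemma:truncation_method_intermediate}: for every $t > 0$,
\begin{equation*}
    \bE\big[\poprisk(W)\big] \leq \bar{\kappa}_1 \bE\big[\emprisk_{\leq t}(W,S)\big] + t A + \frac{m_p}{(p-1) t^{p-1}}, \qquad A \coloneqq \bar{\kappa}_2 \cdot \frac{1}{n} \sum_{i=1}^n \minf(W;Z_i) + \bar{\kappa}_3.
\end{equation*}

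Finally, I would optimize the truncation level $t$. Differentiating the two $t$-dependent summands and setting the derivative to zero yields $t^\star = m_p^{1/p} A^{-1/p}$, which is exactly the $t^\star$ appearing in the statement. Substituting $t^\star$ back collapses $tA + m_p/((p-1)t^{p-1})$ into the single term $m_p^{1/p} \cdot \big(p/(p-1)\big) \cdot A^{(p-1)/p}$, closing the proof. No step is genuinely hard; the only subtlety worth verifying carefully is that rescaling the truncated loss by $1/t$ preserves the single-letter mutual-information terms entering~\Cref{th:single_letter_mi_bound_bounded_fast_rate}, after which the remainder is the same Young-style optimization that underlies~\Cref{th:mi_bound_moments}, with the aggregate quantity $\minf(W;S)/n$ replaced by the single-letter average $(1/n)\sum_i \minf(W;Z_i)$.
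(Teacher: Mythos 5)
Your proposal is correct and follows essentially the same route the paper intends for this result: replicate Alquier's truncation method using the single-letter fast-rate bound of \Cref{th:single_letter_mi_bound_bounded_fast_rate} (applied to the truncated loss rescaled to $[0,1]$, with the mutual informations $\minf(W;Z_i)$ unaffected since they depend only on $\bP_{W,Z_i}$), bound the remainder by the Markov-type tail estimate of~\eqref{eq:unbounded_remainder_moment_tail_bound}, and optimize the truncation level, yielding exactly $t^\star$ and the stated constants $\bar{\kappa}_1,\bar{\kappa}_2,\bar{\kappa}_3$.
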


\begin{theorem}
\label{th:single_letter_mi_bounded_variance}
Consider a loss function $\ell(w,Z)$ with a variance bounded by $\sigma^2$ for all $w \in \cW$. Then, for every $i \in [n]$ and every $c_i \in (0,1]$ and every $\gamma_i > 1$
\begin{align*}
	\bE \big[ \poprisk(W) \big] \leq &\left[1 - 2 \sqrt{\bar{\kappa}_2 \cdot \frac{1}{n} \sum_{i=1}^n \minf(W;Z_i) + \bar{\kappa}_3} \right]_+^{-1} \cdot  \\ 
	&\left[ \bar{\kappa}_1 \cdot \bE \big[ \emprisk(W,S) \big] + 2 \sigma \sqrt{\bar{\kappa}_2 \cdot \frac{1}{n} \sum_{i=1}^n \minf(W;Z_i) + \bar{\kappa}_3} \right],
\end{align*}
where $\bar{\kappa}_1 \coloneqq \frac{1}{n} \sum_{i=1}^n c_i \gamma_i \log \big( \nicefrac{\gamma_i}{(\gamma_i-1)} \big)$, $\bar{\kappa}_2 \coloneqq \frac{1}{n} \sum_{i=1}^n c_i \gamma_i$, and $\bar{\kappa}_3 \coloneqq \frac{1}{n} \sum_{i=1}^n \gamma_i ( 1 - c_i(1 - \log c_i))$.
\end{theorem}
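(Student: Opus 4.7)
The proof will essentially be the single-letter analogue of the argument sketched for Theorem~\ref{th:mi_bounded_variance}, mirroring how Theorem~\ref{th:single_letter_mi_bounded_moments} is the single-letter analogue of Theorem~\ref{th:mi_bound_moments}. The plan is to start from the single-letter moment bound with $p=2$, then perform a variance decomposition, and finally solve implicitly for $\bE[\poprisk(W)]$.

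First, I would revisit the derivation of Theorem~\ref{th:single_letter_mi_bounded_moments} for the specific case $p=2$, retaining the actual expected second moment $\bE[\ell(W',Z)^2]$ (where $W' \sim \bP_W$ is decoupled from $Z \sim \bP_Z$) in place of the uniform bound $m_p$. This distributional quantity is what naturally arises in the application of Markov's inequality to the tail of the unbounded remainder (cf.\ the analogue of~\eqref{eq:unbounded_remainder_moment_tail_bound} in the single-letter setting, where the truncation is applied instance-wise and then averaged over $i \in [n]$). The result is an intermediate single-letter bound of the form
\begin{equation*}
    \bE[\poprisk(W)] \leq \bar{\kappa}_1 \bE[\emprisk(W,S)] + 2 \sqrt{\bE[\ell(W',Z)^2]} \cdot \sqrt{\bar{\kappa}_2 \cdot \tfrac{1}{n} \sum_{i=1}^n \minf(W;Z_i) + \bar{\kappa}_3}.
\end{equation*}

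Next, I would apply the variance-moment identity $\bE[\ell(W',Z)^2] = \var(\ell(W',Z)) + \bE[\ell(W',Z)]^2$, and use that $\var(\ell(W',Z)) \leq \sup_{w \in \cW} \var(\ell(w,Z)) = \sigma^2$ together with $\bE[\ell(W',Z)] = \bE[\poprisk(W)]$ (since $W'$ has the marginal distribution $\bP_W$ and $Z$ has the same distribution as any $Z_i$). This yields $\bE[\ell(W',Z)^2] \leq \sigma^2 + \bE[\poprisk(W)]^2$. Applying $\sqrt{x+y} \leq \sqrt{x} + \sqrt{y}$ cleanly splits the square root into a $\sigma$-term and a $\bE[\poprisk(W)]$-term, producing an inequality of the schematic form $\bE[\poprisk(W)] \leq A + B \cdot \bE[\poprisk(W)]$, where $B = 2\sqrt{\bar{\kappa}_2 \cdot \tfrac{1}{n}\sum_i \minf(W;Z_i) + \bar{\kappa}_3}$ and $A$ collects the empirical-risk and $\sigma$-scaled complexity terms.

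Finally, I would isolate $\bE[\poprisk(W)]$ by writing $(1-B) \bE[\poprisk(W)] \leq A$ and dividing through, taking the positive part of $(1-B)$ with the convention $1/0 \to \infty$ so that the statement remains non-vacuous (and trivially true) whenever $B \geq 1$. This recovers exactly the stated form with the $[\cdot]_+^{-1}$ factor.

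The main obstacle will be step one: carefully re-deriving the single-letter moment bound of Theorem~\ref{th:single_letter_mi_bounded_moments} while keeping $\bE[\ell(W',Z)^2]$ explicit rather than absorbing it into the constant $m_p$. The truncation argument must be executed per-instance and then averaged, and one must verify that the $p=2$ case indeed admits the distributional second moment in place of the worst-case bound $\sup_w \bE[\ell(w,Z)^2]$. Once this single-letter refinement is in place, the variance decomposition and the implicit-bound manipulation are routine algebra.
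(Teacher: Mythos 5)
Your proposal matches the paper's own route essentially step for step: the paper proves this result by taking the ($p=2$) single-letter moment bound with the distributional second moment $\bE[\ell(W',Z)^2]$ kept in place of $m_2$ (which the paper explicitly notes is permissible), then applying $\bE[\ell(W',Z)^2] \leq \var(\ell(W',Z)) + \bE[\poprisk(W)]^2 \leq \sigma^2 + \bE[\poprisk(W)]^2$, the inequality $\sqrt{x+y}\leq\sqrt{x}+\sqrt{y}$, and a rearrangement with the convention $\nicefrac{1}{0}\to\infty$, exactly as you outline. The only implicit step worth making explicit is replacing the truncated empirical risk coming out of the moment machinery by the full empirical risk via $\emprisk_{\leq t}\leq\emprisk$, which is immediate.
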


The motivation behind this kind of result is that the single-letter mutual information $I(W;Z_i)$ often does not go to infinity even when the mutual information $\minf(W;S)$ does. Consider the following example.

\begin{example}[Gaussian location model problem]
\label{ex:glm}
In this problem, we observe $n$ instances of a $d$-dimensional Gaussian, that is $Z_i \sim \cN(\mu, \sigma^2 I_d)$. Then, the algorithm returns the average of these instances $W = \frac{1}{n} \sum_{i=1}^n Z_i$ with the objective of minimizing the distance $\Vert w - z \rVert_2$ between the returned hypothesis and the instances.
In this setting, we established that $I(W;S) \to \infty$. On the other hand, the single-letter mutual information has a bounded closed form expression that decreases with the number of samples, namely $I(W;Z_i) = \frac{d}{2} \log \frac{n}{n-1} \in \Theta(\nicefrac{d}{n})$.
\end{example}
In the particular setting of~\Cref{ex:glm}, if the considered loss is the squared distance $\ell(w,z) = \lVert w - z \rVert_2^2$, then the CGF is bounded and $\psi_*^{-1}$ can be calculated leading to the expected generalization bound $\bE \big[ \gen(W,S) \big] \leq \sigma^2 d \sqrt{\nicefrac{8}{n-1}}$~\citep[Section IV.A]{bu2020tightening}. Although the bound is sub-optimal, since the expected generalization error is in $\cO(\nicefrac{\sigma^2 d}{n})$, it has the same rate as the bounds obtained from uniform stability~\citep{boucheron2005theory} and algorithmic stability~\citep{bousquet2002stability} and showcases an improvement with respect to bounds using the mutual information $I(W;S)$.

Moreover, these results also expand the intuition obtained from~\Cref{sec:bounds_using_mutual_information}. Although these results do not have the classical shapes from~\Cref{sec:pac_learning} with a complexity term divided by the number of samples, they showcase that the generalization error degrades as the returned hypothesis keeps more information about \emph{each} of the training samples. In contrast, the results from~\Cref{sec:bounds_using_mutual_information} told us that the generalization error degrades as the returned hypothesis has more information about the \emph{whole} training set. This seemingly innocuous difference is important, as the mutual information between the hypothesis and the training set also contemplates artificial, spurious dependencies between the instances generated by the hypothesis. Mathematically, this can be understood through the \emph{chain rule} of the mutual information from~\Cref{prop:properties_minf}, namely
\begin{align}
	\minf(W;S) &= \sum_{i=1}^n \minf(W;Z_i | Z^{i-1}) \nonumber \\
	&= \sum_{i=1}^n \big\{ \minf(W,Z^{i-1};Z_i) - I(Z_i ; Z^{i-1}) \big\} \nonumber \\
	\label{eq:mi_larger_sum_single_letter_mi}
	&= \sum_{i=1}^n \big\{ \minf(W; Z_i) + \minf(Z_i;Z^{i-1} | W) \big\} \geq \sum_{i=1}^n \minf(W;Z_i),
\end{align}
where the artificial dependence is captured by the conditional mutual information $\minf(Z_i;Z^{i-1} | W)$ and where $\minf(Z_i;Z^{i-1}) = 0$ as the samples are independent of each other. For example, consider the Gaussian location model from above and only two instances $Z_0$ and $Z_1$. These two instances are independent and therefore $\minf(Z_0;Z_1) = 0$. However, once the average of the two is known, they become dependent and $\minf \big(Z_0;Z_1 \big| \nicefrac{(Z_0 + Z_1)}{2} \big) > 0$.

This extension is not unique to the bounds from~\Cref{sec:bounds_using_mutual_information}, and it can also be applied considering the randomized-subsample setting from~\Cref{sec:bounds_using_conditional_mutual_information}. Similarly to before, for a fixed hypothesis $w \in \cW$, a fixed supersample $\tilde{s} \in \cZ^n$, and a fixed sequence of indices $u \in \{0,1\}^n$ we may write the empirical generalization error as
\begin{equation}
\label{eq:empgen_decomposition}
	\empgen(w,\tilde{s},u) = \frac{1}{n} \sum_{i=1}^{n} \Big\{ \ell(w,\tilde{z}_{i,1-u_i} - \ell(w,\tilde{z}_{i,u_i}) \Big \} = \frac{1}{n} \sum_{i=1}^n \empgen(w,\tilde{z}_{i,0}, \tilde{z}_{i,1},u_i),
\end{equation}
where $\empgen(w,\tilde{z}_{i,0},\tilde{z}_{i,1},,u_i) \coloneqq \ell(w,\tilde{z}_{i,1-u_i}) - \ell(w,\tilde{z}_{i,u_i})$ is the \emph{single-letter empirical generalization error} on samples $\tilde{z}_{i,0}$ and $\tilde{z}_{i,1}$. Again, due to the linearity of the average of the expectation the expected single-letter empirical generalization errors is equal to the expected generalization error. Then, for each random index $U_i$, we may consider the expected single-letter empirical generalization error $\bE [\empgen(W,\tilde{Z}_{i,0},\tilde{Z}_{i,1},U_i)]$ where the expectation is taken with respect to the joint distribution $\bP_{W,\tilde{Z}_i,U_i}$. Hence, all the results obtained in~\Cref{sec:bounds_using_conditional_mutual_information} for the expected generalization error with respect to the conditional mutual information $\minf(W;U|\tilde{S})$ can be replicated for the single-letter empirical generalization error with respect to the single-letter conditional mutual information $\minf(W;U_i|\tilde{Z}_{i,0},\tilde{Z}_{i,1})$. This is what we did in~\citep{rodriguez2020randomsubset} to obtain the following result, which was re-discovered later by \citet{zhou2022individually}.

\begin{theorem}
\label{th:single_letter_cmi_bound_bounded}
	Consider a loss function with a range contained in $[a,b]$. Then, 
	\begin{equation*}
		\bE \big[ \gen(W,S) \big] \leq \frac{b-a}{n} \sum_{i=1}^n \sqrt{2 \minf(W;U_i|\tilde{Z}_{i,0}, \tilde{Z}_{i,1})}.
	\end{equation*}
\end{theorem}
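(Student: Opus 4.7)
The plan is to mirror the proof of \Cref{th:cmi_bound_bounded} but applied term-by-term to the single-letter decomposition of the empirical generalization error, conditioning only on the $i$-th column of the super sample $(\tilde{Z}_{i,0},\tilde{Z}_{i,1})$ rather than on the full super sample $\tilde{S}$. First, recall from~\eqref{eq:empgen_decomposition} that
\begin{equation*}
    \bE\big[\gen(W,S)\big] = \bE\big[\empgen(W,\tilde{S},U)\big] = \frac{1}{n}\sum_{i=1}^n \bE\big[\empgen(W,\tilde{Z}_{i,0},\tilde{Z}_{i,1},U_i)\big],
\end{equation*}
so it suffices to bound each summand by $(b-a)\sqrt{2\,\minf(W;U_i\mid \tilde{Z}_{i,0},\tilde{Z}_{i,1})}$.

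Fix $i\in[n]$ and a realization $(\tilde{z}_{i,0},\tilde{z}_{i,1})$. The plan is to apply the Donsker--Varadhan~\Cref{lemma:dv_and_gvp} on the measurable space $\cW\times\{0,1\}$ with the function $\lambda\,\empgen(w,\tilde{z}_{i,0},\tilde{z}_{i,1},u_i)$, comparing the conditional joint $\bP_{W,U_i}^{\tilde{Z}_{i,0}=\tilde{z}_{i,0},\tilde{Z}_{i,1}=\tilde{z}_{i,1}}$ to the conditional product $\bP_{W}^{\tilde{Z}_{i,0}=\tilde{z}_{i,0},\tilde{Z}_{i,1}=\tilde{z}_{i,1}}\otimes\bP_{U_i}$. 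Under this decoupled measure, $W'$ is independent of $U_i$ given $(\tilde{z}_{i,0},\tilde{z}_{i,1})$, so the single-letter empirical generalization error $\ell(w,\tilde{z}_{i,1-U_i})-\ell(w,\tilde{z}_{i,U_i})$ has mean zero and takes values in $[-(b-a),b-a]$. Hence it is $(b-a)^2$-sub-Gaussian as a function of $U_i$, yielding $\log\bE[\exp(\lambda\,\empgen(W',\tilde{z}_{i,0},\tilde{z}_{i,1},U_i))]\leq \lambda^2(b-a)^2/2$.

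Proceeding exactly as in~\eqref{eq:dv_to_lambda_emp_gen} and optimizing over $\lambda>0$ gives, for every realization $(\tilde{z}_{i,0},\tilde{z}_{i,1})$,
\begin{equation*}
    \bE^{\tilde{Z}_{i,0}=\tilde{z}_{i,0},\tilde{Z}_{i,1}=\tilde{z}_{i,1}}\big[\empgen(W,\tilde{z}_{i,0},\tilde{z}_{i,1},U_i)\big] \leq (b-a)\sqrt{2\,\minf^{\tilde{Z}_{i,0}=\tilde{z}_{i,0},\tilde{Z}_{i,1}=\tilde{z}_{i,1}}(W;U_i)}.
\end{equation*}
Taking expectation over $(\tilde{Z}_{i,0},\tilde{Z}_{i,1})$ and applying Jensen's inequality to the concave square root to pull the expectation inside yields
\begin{equation*}
    \bE\big[\empgen(W,\tilde{Z}_{i,0},\tilde{Z}_{i,1},U_i)\big]\leq (b-a)\,\bE\big[\sqrt{2\,\minf^{\tilde{Z}_{i,0},\tilde{Z}_{i,1}}(W;U_i)}\big]\leq (b-a)\sqrt{2\,\minf(W;U_i\mid\tilde{Z}_{i,0},\tilde{Z}_{i,1})}.
\end{equation*}
Summing over $i$ and dividing by $n$ gives the claim.

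No step is a genuine obstacle; the argument is a templated re-run of \Cref{th:cmi_bound_bounded}. The only point requiring care is ensuring one conditions on the correct random object, namely the single column $(\tilde{Z}_{i,0},\tilde{Z}_{i,1})$, so that (i) the decoupled $W'$ is truly independent of $U_i$ and the CGF bound gives the single-letter conditional mutual information $\minf(W;U_i\mid\tilde{Z}_{i,0},\tilde{Z}_{i,1})$, and (ii) the resulting single-letter quantities can be lower bounded by conditioning on the full super sample if desired, paralleling~\eqref{eq:mi_larger_sum_single_letter_mi}, which gives tightness over \Cref{th:cmi_bound_bounded} whenever the hypothesis induces artificial dependencies across coordinates of $\tilde{S}$.
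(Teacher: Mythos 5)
Your proposal is correct and is essentially the paper's own argument: the paper proves this theorem precisely by decomposing $\empgen$ into the single-letter terms of~\eqref{eq:empgen_decomposition} and then replicating the Donsker--Varadhan plus sub-Gaussianity argument of \Cref{th:cmi_bound_bounded} for each term, conditioning on the single column $(\tilde{Z}_{i,0},\tilde{Z}_{i,1})$, exactly as you do (your intermediate bound with the expectation outside the square root is the same disintegrated form, relaxed by Jensen to the stated conditional mutual information).
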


\Cref{th:single_letter_cmi_bound_bounded} also expands the intuition obtained from~\Cref{sec:bounds_using_conditional_mutual_information}. This single-letter bound tells us that the generalization error degrades the more we can identify \emph{a single} training sample when presented with another sample from the same distribution on average. This is in contrast to the bounds in~\Cref{sec:bounds_using_conditional_mutual_information}, where the bounds state that the generalization error degrades the more we can identify the training samples when presented with an additional training set sampled from the same distribution. Similarly to what we discussed above, the conditional mutual information of the hypothesis and the indices given the super sample considers artificial, spurious dependencies between the instances and the indices generated by the hypothesis. Mathematically, this can also be understood using the \emph{chain rule} of the mutual information from~\Cref{prop:properties_minf}, namely
\begin{align}
	\minf(W;U|\tilde{S}) &= \sum_{i=1}^n \minf(W;U_i | \tilde{S}, U^{i-1} ) \nonumber \\
		&= \sum_{i=1}^n \big\{ \minf(W, \tilde{S}^{-i}, U^{i-1}; U_i | \tilde{Z}_{i,0}, \tilde{Z}_{i,1}) - \minf( \tilde{S}^{-i}, U^{i-1}; U_i | \tilde{Z}_{i,0}, \tilde{Z}_{i,1}) \big\} \nonumber \\
		&= \sum_{i=1}^n \big \{ \minf(W;U_i | \tilde{Z}_{i,0}, \tilde{Z}_{i,1} ) + \minf( \tilde{S}^{-i}, U^{i-1}; U_i | \tilde{Z}_{i,0}, \tilde{Z}_{i,1}, W)  \big\} \nonumber \\
		\label{eq:cmi_larger_sum_single_letter_cmi}
		&\geq \sum_{i=1}^n  \minf(W;U_i | \tilde{Z}_{i,0}, \tilde{Z}_{i,1} ),
\end{align}
where $\tilde{S} = S \setminus \{ \tilde{Z}_{i,0}, \tilde{Z}_{i,1} \}$, where the artificial dependence is captured by the conditional mutual information $\minf( \tilde{S}^{-i}, U^{i-1}; U_i | \tilde{Z}_{i,0}, \tilde{Z}_{i,1}, W)$, and where $\minf( \tilde{S}^{-i}, U^{i-1}; U_i | \tilde{Z}_{i,0}, \tilde{Z}_{i,1}) = 0$ as the instances and the indices are all independent of each other. As before, consider the Gaussian location model as an example with only two instances. In the randomized-subsample setting, we consider a super sample consisting of four instances $\tilde{Z}_{0,0}, \tilde{Z}_{0,1}, \tilde{Z}_{1,0}$, and $\tilde{Z}_{1,1}$ and two indices $U_0$ and $U_1$ that select the training instances as $Z_0 = \tilde{Z}_{0,U_0}$ and $Z_1 = \tilde{Z}_{1,U_1}$. The samples and the indices are independent and hence $\minf(\tilde{Z}_{0,0}, \tilde{Z}_{0,1}, U_0 ; U_1 | \tilde{Z}_{1,0}, \tilde{Z}_{1,1}) = 0$. However, once the average between the two training instances is known, they become dependent and 
\begin{align*}
	\minf\Big(\tilde{Z}_{0,0}, \tilde{Z}_{0,1},& U_0 ; U_1 \Big| \tilde{Z}_{1,0}, \tilde{Z}_{1,1}, \frac{\tilde{Z}_{0,U_0} - \tilde{Z}_{1,U_1}}{2} \Big) \geq \\
	& \minf\Big(\tilde{Z}_{0,U_0} ; U_1 \Big| \tilde{Z}_{1,0}, \tilde{Z}_{1,1}, \frac{\tilde{Z}_{0,U_0} - \tilde{Z}_{1,U_1}}{2} \Big) \geq 0,
\end{align*}
where the first inequality comes from the \emph{permutation invariance} and the more data more information properties of the mutual information from~\Cref{prop:properties_minf}.

\subsection{Random-Subset Bounds}
\label{subsec:random_subset_bounds}

Often, it is useful to consider conditional versions of the mutual information where some samples are known in order to derive comprehensible results for known algorithms. An example of such a situation is given later in~\Cref{sec:noisy_iterative_learning_algos}. To obtain this kind of bounds one may want to note that, for a fixed hypothesis $w \in \cW$ and a fixed training set $s \in \cZ^n$, the generalization error can be written as
\begin{equation*}
	\gen(w,s) = \poprisk(w) - \bE \big[\emprisk(w,s_J) \big],
\end{equation*}
where $J$ is a uniformly distributed random subset of $[n]$ such that $|J| = m$ and $s_J = \{z_i \}_{i \in J}$. To note this, we need to realize that since $J$ is uniformly distributed, there are $\binom{n}{m}$ possible subsets of size $m$ in $[n]$, and that each sample $z_i \in s$ belongs to $\binom{n-1}{m-1}$ of those subsets. Hence, 
\begin{align*}
	\bE \big[\emprisk(w,s_J) \big] &= \frac{1}{\binom{n}{m}} \sum_{j \in \cJ} \frac{1}{m} \sum_{i \in j} \ell(w,z_i)  \\
		&= \frac{\binom{n-1}{m-1}}{\binom{n}{m}} \cdot \frac{1}{m} \sum_{i=1}^n \ell(w,z_i) \\
		&= \frac{1}{n} \sum_{i=1}^n \ell(w,z_i) = \emprisk(w,s).
\end{align*}

\begin{remark}
\label{rem:extra_randomness}
Similarly, it is often useful to consider that a source of randomness in the algorithm is known. This can be modeled assuming that the hypothesis depends on a random variable $R$ that is known to us and that is independent of all other random variables in the system. Incorporating this known variable into the bounds on the expected generalization error is simple as we can always write $\bE[\gen(W,S)] = \bE[ \bE^R [ \gen(W,S) ] ]$ and, since $R$ only depends on the hypothesis $W$, then we may proceed to bound $\bE^R [ \gen(W,S) ]$ exactly as before in \Cref{sec:bounds_using_mutual_information,sec:bounds_using_conditional_mutual_information} substituting the mutual information $\minf$ by the disintegrated mutual information $\minf^R$ and taking the expectation with respect to $R$ afterward. However, as in this monograph we will only use the extra random variable $R$ for random-subset bounds, we chose not to include it previously to simplify the exposition.
\end{remark}

With this in mind, consider a fixed realization $r$ of the random variable $R$, a fixed subset $j \subseteq [n]$, and a fixed set of samples $s_{j^c}$. Then, the expected generalization error $\bE^{R=r, J=j, S_{j^c} = s_{j^c}} [ \gen(W,S_j)]$ can be bounded from above using the techniques described in \Cref{subsec:single_letter_bounds,sec:bounds_using_mutual_information}. To see that, if the loss has a CGF bounded by $\psi$, then
\begin{align*}
	\bE^{R=r, J=j, S_{j^c} = s_{j^c}} \big[ \gen(W,S_j) \big] &\leq \frac{1}{m} \sum_{i \in j} \psi_*^{-1} \big( \minf^{R=r, J=j, S_{j^c} = s_{j^c}}(W;Z_i) \big) \textnormal{ and }\\
	\bE^{R=r, J=j, S_{j^c} = s_{j^c}} \big[ \gen(W,S_j) \big] & \leq \psi_*^{-1} \Big( \frac{\minf^{R=r, J=j, S_{j^c} = s_{j^c}}(W;S_j)}{m} \Big).
\end{align*}
Finally, taking the expectation to both sides leads to the following result, which is a generalization of our result~\citep[Proposition 2]{rodriguez2020randomsubset} extending~\citep[Theorem 2.4]{negrea2019information}.

\begin{theorem}[{Generalization of the our extension of \citep[Theorem 2.4]{negrea2019information}}]
\label{th:random_subset_mi_bound_general_cgf}
	Consider a loss function with a bounded CGF (\Cref{def:bounded_cgf}). Also consider a random subset $J \subseteq [n]$ such that $|J| = m$, which is uniformly distributed and independent of $W$ and $S$, and a random variable $R$ that is independent of $S$. Then, 
	\begin{align*}
		\bE \big[ \gen(W,S) \big] &\leq \frac{1}{m} \bE \left[  \sum_{i \in J} \psi_*^{-1}  \big( \minf^{R, J, S_{J^c}}(W;Z_i) \big)\right] \textnormal{ and} \\
		\bE \big[ \gen(W,S) \big] &\leq \bE \bigg[ \psi_*^{-1} \Big( \frac{\minf^{R, J, S_{j^c}}(W;S_j)}{m} \Big) \bigg].
	\end{align*}
\end{theorem}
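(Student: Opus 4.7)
The plan is to reduce the bound to the settings of Theorems~\ref{th:mi_bound_general_cgf} and~\ref{th:single_letter_mi_bound_general_cgf}, applied conditionally on $(R,J,S_{J^c})$, and then take an outer expectation. The essential idea is that, after conditioning on these variables, what remains is a genuine learning problem of sample size $m$, to which the existing theorems apply verbatim.

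First, I would establish the sub-sample identity $\bE[\gen(W,S)] = \bE[\poprisk(W) - \emprisk(W,S_J)]$. This follows from the counting argument stated in the paragraph preceding the theorem: since $J$ is independent of $(W,S,R)$ and each index $i \in [n]$ belongs to $\binom{n-1}{m-1}$ of the $\binom{n}{m}$ possible $m$-subsets, one obtains $\bE[\emprisk(W,S_J) \mid W,S] = \emprisk(W,S)$ and hence equality of unconditional expectations. Notably, the population risk $\poprisk(W) = \bE_{Z \sim \bP_Z}[\ell(W,Z) \mid W]$ depends only on $W$ via a fresh sample independent of everything else, so it coincides with its own conditional version under any conditioning that leaves such a $Z$ independent of the conditioning $\sigma$-field.

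Second, I would condition on $(R,J,S_{J^c}) = (r,j,s_{j^c})$ and invoke the two prior theorems. The observations to record are: (i) by independence of $J$ and $R$ from $S$ together with the i.i.d.\ assumption on $S$, the remaining samples $S_j$ are conditionally $m$ i.i.d.\ copies from $\bP_Z$; (ii) the conditional Markov kernel $s_j \mapsto \bP_W^{R=r,J=j,S=s}$ is a bona fide learning algorithm of sample size $m$ with output $W$; and (iii) the loss CGF bound $\Lambda_{-\ell(w,Z)}(\lambda) \le \psi(\lambda)$ is a statement about the fixed marginal $\bP_Z$ and is hence preserved under this conditioning. Applying Theorem~\ref{th:mi_bound_general_cgf} to the conditional problem of size $m$ yields
\[\bE^{R,J,S_{J^c}}\bigl[\poprisk(W) - \emprisk(W,S_J)\bigr] \le \psi_*^{-1}\!\left(\frac{\minf^{R,J,S_{J^c}}(W;S_J)}{m}\right),\]
and applying Theorem~\ref{th:single_letter_mi_bound_general_cgf} yields
\[\bE^{R,J,S_{J^c}}\bigl[\poprisk(W) - \emprisk(W,S_J)\bigr] \le \frac{1}{m}\sum_{i \in J}\psi_*^{-1}\!\bigl(\minf^{R,J,S_{J^c}}(W;Z_i)\bigr).\]

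Taking the outer expectation over $(R,J,S_{J^c})$ in each display and combining with the sub-sample identity from the first step delivers the two bounds in the theorem. The main obstacle is bookkeeping rather than mathematical depth: one must carefully check that the conditional problem genuinely meets the assumptions of the previous theorems — in particular, the conditional i.i.d.\ structure of $S_J$ and the measurability of the disintegrated mutual information with respect to $(R,J,S_{J^c})$ — and handle the mild notational subtlety that the sum $\sum_{i \in J}$ in the first bound is itself a random sum whose index set depends on $J$. No new concentration or change-of-measure argument is required.
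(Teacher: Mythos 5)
Your proposal is correct and follows essentially the same route as the paper: the paper likewise uses the counting identity $\bE[\emprisk(w,s_J)]=\emprisk(w,s)$, conditions on $(R,J,S_{J^c})$ so that the conditional problem is a size-$m$ learning problem to which \Cref{th:mi_bound_general_cgf} and \Cref{th:single_letter_mi_bound_general_cgf} apply with the disintegrated mutual information, and then takes the outer expectation. No gaps to flag.
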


\begin{remark}
	As we did in~\Cref{subsec:single_letter_bounds} in \Cref{th:single_letter_mi_bound_bounded_fast_rate,th:single_letter_mi_bounded_moments}, the random-subset result from \Cref{th:random_subset_mi_bound_general_cgf} can be replicated to obtain analogue results for losses with a bounded range and with bounded moments. As these results are neither the tighter (as we will see later in \Cref{subsec:comparison_of_the_bounds}) nor are used later in the monograph, they are not explicitly written.
\end{remark}

As previously, we can consider a similar situation in the randomized-subsample setting. In this case, consider again a fixed hypothesis $w \in \cW$, a fixed super sample $\tilde{s} \in \cZ^{n \times 2}$, and a fixed sequence of indices $u \in \{0,1\}^n$. Then, the empirical generalization error can be written as 
\begin{equation*}
	\empgen(w,\tilde{s}, u) = \bE \big[ \emprisk(w,s_{\textnormal{ghost},J})\big] - \bE \big[ \emprisk(w,s_J) \big]
\end{equation*}
where again $J$ is a uniformly distributed random subset of $[n]$ such that $|J| = m$. Using the same argument as before we observe that $\bE \big[ \emprisk(w,s_{\textnormal{ghost},J})\big] =  \emprisk(w,s_{\textnormal{ghost}})$ and $\bE \big[ \emprisk(w,s_J) \big] =  \emprisk(w,s)$. Therefore, consider a fixed realization $r$ of a random variable $R$ that is only dependent to the hypothesis $W$ and independent of all other variables of the system, a fixed subset $j \subseteq [n]$, a fixed super set $\tilde{s}$, and a fixed sequence of indices $u_{j^c}$. Then, the expected empirical generalization error $\bE^{R=r, J=j, \tilde{S}=\tilde{s}, U_{j^c} = u_{j^c}} [ \empgen(W,\tilde{s}_j,U_j) ]$ can be bounded from above using the techniques described in~\Cref{sec:bounds_using_conditional_mutual_information,subsec:single_letter_bounds}. To see that, if the loss has a range bounded in $[a,b]$, then
\begin{align*}
	\bE^{R=r, J=j, \tilde{S}=\tilde{s}, U_{j^c} = u_{j^c}} \big[ &\empgen(W,\tilde{s}_j, U_j) \big] \leq \\
    &\frac{b-a}{m} \sum_{i \in j}  \sqrt{ 2 \minf^{R=r, J=j, \tilde{S}=\tilde{s}, U_{j^c} = u_{j^c}}(W;U_i)}
\end{align*}
and
\begin{align*}
	\bE^{R=r, J=j, \tilde{S}=\tilde{s}, U_{j^c} = u_{j^c}} \big[ &\empgen(W,\tilde{s}_j, U_j) \big] \leq \\
    &  (b-a)  \sqrt{ \frac{2 \minf^{R=r, J=j, \tilde{S}=\tilde{s}, U_{j^c} = u_{j^c}}(W;U_j)}{m}}
\end{align*}
Finally, taking the expectation to both sides leads to our~\citep[Proposition 4]{rodriguez2020randomsubset}, although a similar result could be directly obtained combining a slight modification of~\citep[Theorem 3.1]{haghifam2020sharpened} (where $R$ is included) and~\citep[Lemma 3.6]{haghifam2020sharpened}.

\begin{theorem}
\label{th:random_subset_cmi}
	Consider a loss function with a range contained in $[a,b]$. Also consider a random subset $J \subseteq [n]$ such that $|J| = m$, which is uniformly distributed and independent of $W$, $\tilde{S}$, and $U$, and a random variable $R$ that is indepdendent of $\tilde{S}$, and $U$. Then, 
	\begin{align*}
		\bE \big[ \gen(W,S) \big] &\leq \frac{b-a}{m} \bE\left[ \sum_{i \in J} \sqrt{ 2 \minf^{R, J, \tilde{S}, U_{j^c}}(W;U_i)} \right] \textnormal{ and} \\
		\bE \big[ \gen(W,S) \big] &\leq (b-a) \bE \left[ \sqrt{ \frac{2 \minf^{R, J, \tilde{S}, U_{j^c}}(W;U_j)}{m}} \right].
	\end{align*}
\end{theorem}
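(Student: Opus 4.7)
The plan is to lift Theorems \ref{th:cmi_bound_bounded} and \ref{th:single_letter_cmi_bound_bounded} from the full-training-set setting to the random-subset setting by conditioning on everything except the indices whose value actually enters the sub-sampled loss, then integrating against the law of $(R,J,\tilde{S},U_{J^c})$.

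First, I would rewrite the expected generalization error as an expected sub-sampled empirical generalization error. Using the decomposition from~\eqref{eq:empgen_decomposition}, namely $\empgen(w,\tilde{s},u)=\tfrac{1}{n}\sum_{i=1}^n \empgen(w,\tilde{z}_{i,0},\tilde{z}_{i,1},u_i)$, and the combinatorial fact that each $i\in[n]$ appears in $\binom{n-1}{m-1}$ of the $\binom{n}{m}$ uniformly chosen subsets of size $m$, averaging over $J$ yields $\bE_J\bigl[\tfrac{1}{m}\sum_{i\in J}\empgen(w,\tilde{z}_{i,0},\tilde{z}_{i,1},u_i)\bigr]=\empgen(w,\tilde{s},u)$. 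Combined with the identity $\bE[\gen(W,S)]=\bE[\empgen(W,\tilde{S},U)]$ from the randomized-subsample setting, this gives
\[
\bE\big[\gen(W,S)\big]=\bE\Big[\tfrac{1}{m}\textstyle\sum_{i\in J}\empgen(W,\tilde{Z}_{i,0},\tilde{Z}_{i,1},U_i)\Big].
\]

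Next, I would condition on a realization $(r,j,\tilde{s},u_{j^c})$ of $(R,J,\tilde{S},U_{J^c})$. Under this conditioning, the only randomness left driving $W$ is $U_j$, which remains a vector of $m$ independent uniform Bernoullis (here I use that $R$ is independent of $(\tilde{S},U)$ and that $J$ is independent of $U$). The conditional problem is therefore exactly a randomized-subsample problem with $m$ super-samples $\tilde{s}_j$ and $m$ Bernoulli indices $U_j$, so Theorems \ref{th:cmi_bound_bounded} and \ref{th:single_letter_cmi_bound_bounded} apply verbatim under the conditional law. They yield, respectively,
\[
\bE^{r,j,\tilde{s},u_{j^c}}\bigl[\empgen(W,\tilde{s}_j,U_j)\bigr]\leq (b-a)\sqrt{\tfrac{2\,\minf^{r,j,\tilde{s},u_{j^c}}(W;U_j)}{m}}
\]
and
\[
\bE^{r,j,\tilde{s},u_{j^c}}\bigl[\empgen(W,\tilde{s}_j,U_j)\bigr]\leq \tfrac{b-a}{m}\sum_{i\in j}\sqrt{2\,\minf^{r,j,\tilde{s},u_{j^c}}(W;U_i)}.
\]

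Finally, I would take the expectation over $(R,J,\tilde{S},U_{J^c})$ on both sides and invoke the tower property, which directly produces the two statements of the theorem. The only subtle points that will need care are bookkeeping ones rather than genuine obstacles: I must check that conditioning on $\tilde{S}$ and $U_{j^c}$ renders each pair $(\tilde{Z}_{i,0},\tilde{Z}_{i,1})$ with $i\in j$ deterministic, so that the ``$|(\tilde{Z}_{i,0},\tilde{Z}_{i,1})$'' appearing in Theorem \ref{th:single_letter_cmi_bound_bounded} collapses into the disintegrated conditioning $\minf^{r,j,\tilde{s},u_{j^c}}(W;U_i)$; and I must verify that the auxiliary variable $R$, whose independence of $(\tilde{S},U)$ was assumed precisely for this purpose (see \Cref{rem:extra_randomness}), can be absorbed into the conditioning without altering the conditional Bernoulli law of $U_j$ nor the validity of the underlying CMI bounds.
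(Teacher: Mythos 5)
Your proposal is correct and follows essentially the same route as the paper: rewrite the (empirical) generalization error as an average over the random subset $J$, condition on a realization of $(R, J, \tilde{S}, U_{J^c})$ so that the conditional problem is a randomized-subsample problem of size $m$ with i.i.d.\ Bernoulli indices $U_J$, apply the full and single-letter CMI techniques of \Cref{sec:bounds_using_conditional_mutual_information,subsec:single_letter_bounds} under the conditional law, and then take the outer expectation. The bookkeeping points you flag (the pairs $(\tilde{Z}_{i,0},\tilde{Z}_{i,1})$ becoming fixed under the conditioning, and the role of the independence assumptions on $R$ and $J$) are exactly the ones the paper relies on as well.
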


As mentioned at the start of this subsection, the appeal of the random-subset bounds is that they present \emph{disintegrated, conditional} versions of the mutual information, where some elements such as large subsets of the data (or the indices in the randomized-subsample setting) and other sources of the randomness of the algorithm are known. 
In~\citep{rodriguez2020randomsubset}, we realize that the approach presented so far is limited, in the sense that there are no more random objects that can be conditioned in the relative entropy. That is, \Cref{th:random_subset_mi_bound_general_cgf,th:random_subset_cmi} contain the maximum conditioning possible for the approach of   finding random variables whose expectation is equal to the expected generalization error, and then bounding these random variables using the results from ~\Cref{sec:bounds_using_mutual_information,sec:bounds_using_conditional_mutual_information}.
However, for bounded losses, it is possible to derive tighter bounds where even more of the random elements are conditioned. One way to do so is with a dedicated analysis of the Donsker and Varadhan~\Cref{lemma:dv_and_gvp} as shown by~\citet[Theorem 2.5]{negrea2019information} and~\citet[Theorem 3.7]{haghifam2020sharpened}. A simpler way to achieve similar bounds, without sacrificing the understanding of the generalization error random variable and the distinct random variables with equal expectation, is to first bound the generalization error with the Kantorovich--Rubenstein duality from~\Cref{lemma:kantorovich_rubinstein_duality}, realize that the Wasserstein distance is dominated by the total variation, and finally bound the total variation with Pinsker's and Bretagnolle--Huber's inequalities from~\Cref{lemma:pinsker-inequality,lemma:bretagnolle-huber-inequality}. An extra benefit from this approach is that the obtained bound ensures that heavily influential samples (from which the hypothesis has a high amount of information) do not contribute too negatively to the bound, which is ensured to be non-vacuous. The derivation of similar bounds and the discussion of the method will appear shortly in~\Cref{sec:bounds_using_wasserstein_distance}. For now, we only introduce the bounds.

\begin{theorem}[{\citet[Theorem 2.5]{negrea2019information}}]
\label{th:tighter_random_subset_mi_bounded_loss}
	Consider a loss function with a range contained in $[a,b]$. Also consider a random index $J \in [n]$, which is uniformly distributed and independent of $W$ and $S$, and a random variable $R$ that is independent of $S$. Then, for every Markov kernel $\bQ$ from $\cZ^{n-1} \otimes \cR \otimes J$ to distributions on $\cW$
	\begin{equation*}
		\bE \big[ \gen(W,S) \big] \leq \frac{b-a}{\sqrt{2}} \bE \mleft[ \sqrt{ \relent \big(\bP_W^{S, R} \Vert \bQ(S^{-J}, R, J) \big)}\mright].
	\end{equation*}
\end{theorem}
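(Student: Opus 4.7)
The plan is to follow the Wasserstein/total-variation route sketched in the paragraph just before the theorem: decompose $\bE[\gen(W,S)]$ using the uniformly random index $J$, reduce the resulting conditional expectation to a change-of-measure between two joint distributions on $(W, Z_J)$, invoke the Kantorovich--Rubinstein duality together with the domination of the Wasserstein distance by total variation, and finally convert the total variation into relative entropy via Pinsker's inequality.

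First, since $J$ is uniform on $[n]$ and independent of $(W, S, R)$, I would write
\[
\bE[\gen(W,S)] = \bE[\poprisk(W) - \ell(W, Z_J)] = \bE_{S^{-J}, R, J}\bigl[\bE[\poprisk(W) - \ell(W, Z_J) \mid S^{-J}, R, J]\bigr].
\]
For each fixed $(s^{-j}, r, j)$, the conditional joint law of $(W, Z_J)$ factors as $P := \bP_Z \otimes \bP_W^{(s^{-j}, \cdot), r}$, because $Z_J$ is independent of $(S^{-J}, R, J)$ with marginal $\bP_Z$. Choosing the product reference $Q' := \bQ(s^{-j}, r, j) \otimes \bP_Z$ makes $W$ and $Z_J$ independent under $Q'$, so $\bE_{Q'}[\poprisk(W) - \ell(W, Z_J)] = \bE_{W \sim \bQ}[\poprisk(W) - \poprisk(W)] = 0$. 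The inner expectation therefore equals the change-of-measure difference $(\bE_P - \bE_{Q'})[\poprisk(W) - \ell(W, Z_J)]$.

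Next, applying the Kantorovich--Rubinstein duality (\Cref{lemma:kantorovich_rubinstein_duality}) on $\cW \times \cZ$ equipped with the discrete metric, together with \Cref{prop:wasserstein_and_total_variation}, bounds this difference by the oscillation of $(w,z) \mapsto \poprisk(w) - \ell(w, z)$ times $\tv(P, Q')$. Pinsker's inequality (\Cref{lemma:pinsker-inequality}) then upper-bounds this total variation by $\sqrt{\relent(P \| Q')/2}$, and the chain rule of relative entropy (\Cref{prop:properties_relative_entropy}) combined with the fact that $P$ and $Q'$ share the $Z_J$-marginal $\bP_Z$ yields $\relent(P \| Q') = \bE_{Z_J \sim \bP_Z}[\relent(\bP_W^{(s^{-j}, Z_J), r} \| \bQ(s^{-j}, r, j))]$. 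Taking expectation over $(S^{-J}, R, J)$ and moving the square root through via Jensen's inequality assembles the stated bound.

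The main obstacle will be obtaining the sharp constant $\tfrac{b-a}{\sqrt{2}}$: a naive Kantorovich--Rubinstein bound applied to $(w,z) \mapsto \poprisk(w) - \ell(w, z)$, whose oscillation over $\cW \times \cZ$ is $2(b-a)$, yields a factor twice too large. To recover the sharp constant I would instead invoke the Donsker--Varadhan lemma (\Cref{lemma:dv_and_gvp}) directly on $P$ versus $Q'$ with the test function $\lambda(\poprisk(W) - \ell(W, Z_J))$, exploiting the following crucial observation: for each \emph{fixed} $w$, the map $z \mapsto \poprisk(w) - \ell(w, z)$ has mean zero under $z \sim \bP_Z$ and range only $(b-a)$ (since $\ell(w, \cdot) \in [a, b]$ and $\poprisk(w)$ is merely a constant shift), so by Hoeffding's lemma $\log \bE_{Z_J \sim \bP_Z}[e^{\lambda(\poprisk(w) - \ell(w, Z_J))}] \leq \lambda^2 (b-a)^2/8$. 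Integrating over $w \sim \bQ(s^{-j}, r, j)$ preserves this CGF estimate on $\log \bE_{Q'}[e^{\lambda(\cdot)}]$, and optimizing the dual parameter $\lambda$ via \Cref{lemma:boucheron_convex_conjugate_inverse} delivers precisely the factor $\tfrac{b-a}{\sqrt{2}}$. The Bretagnolle--Huber inequality (\Cref{lemma:bretagnolle-huber-inequality}) can be used in parallel to establish the non-vacuity of the bound mentioned in the surrounding discussion, since it controls $\tv(P, Q')$ by $\sqrt{1 - e^{-\relent(P \| Q')}} \leq 1$ uniformly, preventing heavily influential samples with large $\relent$ from trivializing the estimate.
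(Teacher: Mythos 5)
Your route — conditioning on $(S^{-J},R,J)$, decoupling the pair $(W,Z_J)$ against the product $\bQ(s^{-j},r,j)\otimes\bP_Z$ via Donsker--Varadhan, the Hoeffding estimate $\lambda^2(b-a)^2/8$ valid for each fixed $w$ and hence after integrating over $w\sim\bQ$, the chain-rule identity $\relent(P\Vert Q')=\bE_{Z_J}[\relent(\bP_W^{S,R}\Vert\bQ(S^{-J},R,J))]$, and the optimization of $\lambda$ through \Cref{lemma:boucheron_convex_conjugate_inverse} — is exactly the ``dedicated analysis of the Donsker and Varadhan lemma'' that the paper attributes to Negrea et al.; the paper itself only recovers this theorem up to a factor of two through its Wasserstein/total-variation/Pinsker route (\Cref{cor:total_variation_relent_random_subset}), and you correctly diagnosed why that route cannot give the constant $\nicefrac{(b-a)}{\sqrt{2}}$. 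The gap is in your last sentence. Because $\lambda$ may only depend on $(S^{-J},R,J)$, what your optimization yields for each such realization is $\frac{b-a}{\sqrt 2}\sqrt{\bE^{S^{-J},R,J}\big[\relent(\bP_W^{S,R}\Vert\bQ(S^{-J},R,J))\big]}$, i.e.\ the average over $Z_J$ sits \emph{inside} the square root. Jensen runs the wrong way for what you claim: $\bE[\sqrt{X}]\leq\sqrt{\bE[X]}$ means your bound is \emph{weaker} than the displayed one (expectation over $Z_J$ fully outside the root), so ``moving the square root through via Jensen'' does not assemble the stated inequality.

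Moreover, this last step cannot be repaired, because the display read literally appears to be too strong. Take any $n$, let $Z$ take values $z_0,z_1$ with probabilities $1-\delta,\delta$, take two hypotheses $h_0,h_1$ with $\ell(h_0,z_0)=\ell(h_1,z_1)=1$ and $\ell(h_0,z_1)=\ell(h_1,z_0)=0$ (so $b-a=1$), and let the algorithm depend only on $Z_1$: given $Z_1=z_1$ output $h_0$ with probability $\tfrac12+\epsilon$, given $Z_1=z_0$ flip a fair coin. Choose $\bQ(S^{-J},R,J)=\bP_W^{Z_1}$ when $J\neq 1$ (legitimate, since then $Z_1\in S^{-J}$, and it makes the relative entropy vanish) and $\bQ$ uniform on $\{h_0,h_1\}$ when $J=1$. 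Then $\bE[\gen(W,S)]=\tfrac{1}{n}\,2\delta\epsilon(1-\delta)$, while $\frac{b-a}{\sqrt2}\bE[\sqrt{\relent(\bP_W^{S,R}\Vert\bQ(S^{-J},R,J))}]=\tfrac{1}{n}\,\delta\epsilon(1+O(\epsilon^2))$, which is strictly smaller whenever $\delta<\tfrac12$ and $\epsilon$ is small. So the theorem must be read with the \emph{conditional} relative entropy $\bE^{S^{-J},R,J}[\relent(\bP_W^{S,R}\Vert\bQ(S^{-J},R,J))]$ under the root — this is also the form in which Negrea et al.\ prove and use it (it is what feeds \Cref{th:negrea_sgld}, where the incoherence is averaged conditionally on $(S^{-J},J,V^T)$) — and that is precisely what your argument, minus the final Jensen step, establishes. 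In short: your proof is the intended one and is essentially complete for the correct form of the statement; only the attempted bridge to the literal display is invalid, and no bridge exists.
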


\begin{theorem}[{Extension of \citep[Theorem 3.7]{haghifam2020sharpened}}]
\label{th:tighter_random_subset_cmi_bounded_loss}
	Consider a loss function with a range contained in $[a,b]$. Also consider a random index $J \in [n]$, which is uniformly distributed and independent of $W$, $\tilde{S}$, and $U$, and a random variable $R$ that is independent of $\tilde{S}$, and $U$. Then, for every Markov kernel $\bQ$ from $\cZ^{n \times 2} \otimes \{ 0, 1 \}^{n-1} \otimes \cR \otimes \cJ$ to distributions on $\cW$
	\begin{equation*}
		\bE \big[ \gen(W,S) \big] \leq \sqrt{2} (b-a) \bE \mleft[ \sqrt{ \relent \big(\bP_W^{\tilde{S}, U, R} \Vert \bQ(\tilde{S}, U_{J^c}, R) \big)} \mright].
	\end{equation*}
\end{theorem}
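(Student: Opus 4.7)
The plan is to follow the strategy sketched right before the theorem statement: express the expected generalization error as an expectation of a bounded, anti-symmetric function of $(W, U_J)$, observe that integrating the same function against a suitable product reference gives zero, bound the resulting difference via Kantorovich--Rubinstein duality (\Cref{lemma:kantorovich_rubinstein_duality}) with the discrete metric (which collapses to a total-variation gap by \Cref{prop:wasserstein_and_total_variation}), and close with Pinsker's inequality (\Cref{lemma:pinsker-inequality}).

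First, using $\bE[\gen(W,S)]=\bE[\empgen(W,\tilde{S},U)]$ together with the single-letter decomposition~\eqref{eq:empgen_decomposition} and the fact that $J$ is uniform on $[n]$, I would rewrite
$$\bE[\gen(W,S)] = \bE\bigl[\ell(W,\tilde{Z}_{J,1-U_J}) - \ell(W,\tilde{Z}_{J,U_J})\bigr].$$
Conditioning on $(\tilde{S}, U_{J^c}, R, J)=(\tilde{s}, u_{j^c}, r, j)$, the only remaining randomness is $(W, U_j)$, and the integrand $f(w,u_j) \coloneqq \ell(w,\tilde{z}_{j,1-u_j})-\ell(w,\tilde{z}_{j,u_j})$ is anti-symmetric in $u_j$ and takes values in $[-(b-a),b-a]$. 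Because $U_j$ is a uniform Bernoulli and, crucially, because $\bQ(\tilde{s},u_{j^c},r,j)$ does not depend on $u_j$, integrating $f$ against $\bQ(\tilde{s},u_{j^c},r,j)\otimes\bP_{U_j}$ yields zero. Hence the conditional expected empirical generalization equals the difference of expectations of the same $f$ under $\bP_{W,U_j}^{\tilde{s},u_{j^c},r,j}$ and under this product reference.

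Second, since $f$ has range $2(b-a)$, the function $f/(2(b-a))$ is $1$-Lipschitz with respect to the discrete (Hamming) metric on $\cW\times\{0,1\}$. Applying \Cref{lemma:kantorovich_rubinstein_duality} together with \Cref{prop:wasserstein_and_total_variation} gives
$$\bE^{\tilde{s},u_{j^c},r,j}\bigl[f(W,U_j)\bigr] \le 2(b-a)\,\tv\bigl(\bP_{W,U_j}^{\tilde{s},u_{j^c},r,j},\,\bQ(\tilde{s},u_{j^c},r,j)\otimes\bP_{U_j}\bigr).$$
Because $U_j$ is independent of $(\tilde{S},U_{J^c},R,J)$, the joint factorises as $\bP_W^{\tilde{s},u,r,j}\otimes\bP_{U_j}$ with $u=(u_{j^c},u_j)$, so the total variation reduces to $\bE_{U_j}\bigl[\tv(\bP_W^{\tilde{s},u,r,j},\,\bQ(\tilde{s},u_{j^c},r,j))\bigr]$. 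Pinsker's inequality (\Cref{lemma:pinsker-inequality}) then yields
$$\bE^{\tilde{s},u_{j^c},r,j}\bigl[f(W,U_j)\bigr] \le \sqrt{2}\,(b-a)\,\bE_{U_j}\mleft[\sqrt{\relent\bigl(\bP_W^{\tilde{s},u,r,j}\,\Vert\,\bQ(\tilde{s},u_{j^c},r,j)\bigr)}\mright].$$

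Finally, taking expectations over $(\tilde{S},U_{J^c},R,J)$ and absorbing the inner $\bE_{U_j}$ into the outer one by the tower property, together with the observation that $J$ is independent of $(W,\tilde{S},U,R)$ so that $\bP_W^{\tilde{S},U,R,J}=\bP_W^{\tilde{S},U,R}$, recovers the claimed bound. The subtle step is the decoupling one: checking that the product reference integrates $f$ to zero relies on both the anti-symmetry of $f$ in $u_j$ \emph{and} the hypothesis that $\bQ$ does not depend on $U_j$, which is precisely why the domain of $\bQ$ involves $U_{J^c}$ rather than the full $U$; everything after is a mechanical chain of a Lipschitz estimate, Pinsker, and a tower-property manipulation.
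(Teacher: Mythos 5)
Your proposal is correct and follows essentially the same route the paper itself uses for this theorem: Kantorovich--Rubinstein duality under the discrete metric, domination of the Wasserstein distance by the total variation, and Pinsker's inequality (Bretagnolle--Huber being unnecessary here since the conditional relative entropy is at most $\log 2$), all applied after conditioning on $(\tilde{S}, U_{J^c}, R, J)$. The only cosmetic difference is that you apply the duality once on the augmented space $\cW \times \{0,1\}$ to the anti-symmetric single-letter difference so that the reference term vanishes conditionally, whereas the paper applies it twice over $\cW$ (once for each empirical risk) and invokes the symmetry of the Wasserstein distance, with the decoupled term vanishing after averaging over the index $U_J$; both executions yield the same $\sqrt{2}\,(b-a)$ constant.
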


Notably, the two equations in~\Cref{th:tighter_random_subset_mi_bounded_loss,th:tighter_random_subset_cmi_bounded_loss} are only given for $m=1$. This case, when $m = 1$, will be of interest in~\Cref{sec:noisy_iterative_learning_algos} as it is the setting in which the two distributions in the relative entropy terms of~\Cref{th:tighter_random_subset_mi_bounded_loss,th:tighter_random_subset_cmi_bounded_loss} are the closest to each other. In fact, \citet{negrea2019information} saw that the case $m=1$ is optimal.

\subsection{Comparison of the Bounds}
\label{subsec:comparison_of_the_bounds}

In~\Cref{subsec:single_letter_bounds}, we already noted that the single-letter mutual and conditional mutual information were tighter than the full mutual and conditional mutual information counterparts as they did not account for artificial dependencies between the instances or between the instances and the indices given the mutual information, c.f.~\eqref{eq:mi_larger_sum_single_letter_mi} and~\eqref{eq:cmi_larger_sum_single_letter_cmi}. This also implies that single-letter bounds (e.g.~\Cref{th:single_letter_mi_bound_general_cgf}) are tighter than the full mutual information counterparts (e.g.~\Cref{th:mi_bound_general_cgf}). This is clear in the fast-rate bounds for bounded losses, but it is also true for more general losses as one can first employ Jensen's inequality to pull the average of single-letter mutual information terms inside of the concave functions $\psi_*^{-1}$, $\sqrt{\cdot}$, $(\cdot)^{\frac{p-1}{p}}$ for $p > 1$. Only for the more complicated bounds for losses with a bounded variance the two bounds are not comparable.

Therefore, it is of interest to compare the different mutual information terms to get a mental map of the relationships between the different bounds. That was the purpose of our~\citep[Appendix J]{rodriguez2020randomsubset} and~\citep[Appendix D]{rodriguez2021tighter}. Since the random-subset bounds are incomparable when all the expectations are outside of the concave functions, we will consider weaker versions where the expectation is inside the concave function after applying Jensen's inequality. Furthermore, as the most relevant case is the one where $m=1$, this will be the one studied. Finally, we will drop the dependence on $R$ for the comparison, although the results also hold when it is present as per~\Cref{rem:extra_randomness}. Hence, for the random-subset bounds the quantities to study are $\bE[\minf(W;Z_J|S^{-J})]$ and $\bE[\minf(W;U_J|\tilde{S},U^{-J})]$. A summary of these relationships is given in~\Cref{fig:comparison_mutual_information_bounds}.

\begin{figure}[ht]
\centering
\begin{tikzpicture}[scale=0.8, transform shape]
		\node [align=center] at (3.5,4) {\Cref{th:single_letter_mi_bound_general_cgf}~\citep{bu2020tightening}\\ $\frac{(b-a)}{n}\sum_{i=1}^n \sqrt{\frac{\minf(W;Z_i)}{2}}$};	
		
		\node [align=center, rotate=-90] at (3.5,3) {$\leq$};
			
		\node [align=center] at (3.5,0) {\Cref{th:random_subset_mi_bound_general_cgf}~\citep{negrea2019information} (after Jensen's)\\ $(b-a) \sqrt{\frac{\bE[I(W;S_J|S^{-J})]}{2}}$};	
	
		\node [align=center, rotate=-90] at (3.5,1) {$\leq$};
	
		\node [align=center] at (3.5,2) {\Cref{th:mi_bound_bounded}~\citep{xu2017information} \\ $(b-a) \sqrt{\frac{I(W;S)}{2n}}$};
		
		\node [align=center] at (-4,4) {\Cref{th:single_letter_mi_bound_general_cgf} \\ $\frac{(b-a)}{n}\sum_{i=1}^n \sqrt{2 I(W;U_i|\tilde{Z}_{i,0}, \tilde{Z}_{i,1})}$};		
		
		\node [align=center] at (-0.25,4-0.25) {$\leq$};
		\node [align=center] at (-0.25,2-0.25-0.5){\footnotesize{$\big( \textnormal{if } 3 \minf(W;U|\tilde{S}) \leq \minf(W;\tilde{S}) \big)$}};
		
		\node [align=center, rotate=-90] at (-4,3) {$\leq$};	
		
		\node [align=center] at (-4,0) {\Cref{th:random_subset_cmi}~\citep{haghifam2020sharpened} (after Jensen's)\\ $(b-a) \sqrt{2 \bE[I(W;U_J|\tilde{S},U^{-J})]}$};	
		
		\node [align=center] at (-0.25,2-0.25) {$\leq$};
		\node [align=center] at (-0.25,4-0.25-0.5) {\footnotesize{$\big( \textnormal{if } 3 \minf(W;U_i|\tilde{Z}_{i,0}, \tilde{Z}_{i,1}) \leq \minf(W;\tilde{Z}_{i,0}, \tilde{Z}_{i,1}) \big)$}};
	
		\node [align=center, rotate=-90] at (-4,1) {$\leq$};
	
		\node [align=center] at (-4,2)  {\Cref{th:cmi_bound_bounded}~\citep{steinke2020reasoning}\\ $(b-a)\sqrt{\frac{2 \minf(W;U|\tilde{S})}{n}}$};
		
		\node [align=center] at (-0.25,0-0.25) {$\leq$};
		\node [align=center] at (-0.25,0-0.25-0.5) {\footnotesize{$\big( \textnormal{if } 3 \minf(W;U_j|\tilde{S},U^{-i}) \leq I(W;\tilde{Z}_{i,0}, \tilde{Z}_{i,1}|S^{-i}) \big)$}};
	\end{tikzpicture}
    \caption{Summary of the comparison between the full, the single-letter, and the random-subset bounds (weakened using Jensen's inequality) of both the standard and the randomized-subsample setting for losses with a bounded range.}\label{fig:comparison_mutual_information_bounds}
\end{figure}

\subsubsection{Comparison in the Standard Setting}

\looseness=-1 In the standard setting, the ordering of the bounds follows the proposition below.

\begin{proposition} 
\label{prop:comparison_minf}
	Let $J$ be a uniformly distributed random index of $[n]$ that is independent of $W$, $\tilde{S}$, and $U$. Then,
	\begin{equation*}
		\frac{1}{n} \sum_{i=1}^n \minf(W;Z_i) \leq \frac{\minf(W;S)}{n} \leq \bE \big[ \minf(W;Z_J|S^{-J}) \big].
	\end{equation*}
\end{proposition}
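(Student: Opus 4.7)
For the first inequality, the plan is essentially to restate~\eqref{eq:mi_larger_sum_single_letter_mi}, which the text has already derived: I will apply the chain rule $\minf(W;S)=\sum_{i=1}^n \minf(W;Z_i|Z^{i-1})$, then rewrite each summand via a second chain-rule decomposition as $\minf(W;Z_i)+\minf(Z^{i-1};Z_i|W)$ and drop the non-negative conditional term. The i.i.d.\ assumption on $S$ is what legitimizes the rewriting, through $\minf(Z^{i-1};Z_i)=0$. Dividing by $n$ yields the left-hand inequality.

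For the second inequality, I first rewrite the right-hand side: since $J$ is uniform on $[n]$ and independent of $(W,S)$, by the definition of conditional mutual information
\[
\bE\big[\minf(W;Z_J|S^{-J})\big]=\frac{1}{n}\sum_{i=1}^n\minf(W;Z_i|S^{-i}),
\]
so the task reduces to proving $\minf(W;S)\leq \sum_{i=1}^n \minf(W;Z_i|S^{-i})$. I will expand both sides in entropies: on the right, $\minf(W;Z_i|S^{-i}) = \ent(Z_i|S^{-i}) - \ent(Z_i|W, S^{-i})$, and the independence of the instances lets me replace $\ent(Z_i|S^{-i})$ with $\ent(Z_i)$, so the sum collapses to $\ent(S) - \sum_{i=1}^n \ent(Z_i|W, S^{-i})$. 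On the left, the chain rule gives $\ent(S|W)=\sum_{i=1}^n \ent(Z_i|W, Z^{i-1})$, so $\minf(W;S)=\ent(S)-\sum_{i=1}^n \ent(Z_i|W, Z^{i-1})$. The remaining ingredient is the ``conditioning reduces entropy'' property from~\Cref{prop:entropy_properties_general}: for each $i$, $\ent(Z_i|W, S^{-i})\leq \ent(Z_i|W, Z^{i-1})$, and summing closes the inequality.

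The main obstacle I anticipate is not conceptual but measure-theoretic: for continuous instances and hypotheses, individual entropies such as $\ent(Z_i)$ or $\ent(S)$ need not be finite, so strictly speaking one should carry out the computation in terms of mutual informations only. A clean alternative that avoids entropies entirely is to use the chain-rule identity $\minf(W;Z_i|S^{-i}) = \minf(W;S) - \minf(W;S^{-i})$, which reduces the task to $\sum_{i=1}^n \minf(W;S^{-i})\leq (n-1)\,\minf(W;S)$ -- the mutual-information form of Han's inequality. This can be obtained from the standard conditional Han inequality applied to $\ent(S^{-i}|W)$ together with the identity $\sum_i \ent(S^{-i}) = (n-1)\,\ent(S)$, which follows from the independence of the $Z_i$'s. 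Either route delivers the same bound.
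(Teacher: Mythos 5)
Your proof is correct and, in substance, the same as the paper's: the first inequality simply reproduces~\eqref{eq:mi_larger_sum_single_letter_mi}, and your entropy computation for the second inequality is the paper's chain-rule argument written out in entropies --- replacing $\ent(Z_i|S^{-i})$ by $\ent(Z_i)$ corresponds to $\minf(Z_i;S^{-i})=0$, and the step $\ent(Z_i|W,S^{-i})\leq \ent(Z_i|W,Z^{i-1})$ corresponds to dropping the non-negative term $\minf(Z_{i+1}^n;Z_i|W,Z^{i-1})$ in the paper's decomposition of $\minf(W,S^{-i};Z_i)$. Your alternative route via $\minf(W;Z_i|S^{-i})=\minf(W;S)-\minf(W;S^{-i})$ and the Han-type bound $\sum_{i=1}^n \minf(W;S^{-i})\leq (n-1)\minf(W;S)$ (using independence for $\sum_i \ent(S^{-i})=(n-1)\ent(S)$) is also valid, though note it does not fully escape differential entropies either, since the conditional Han step is still proved through $\ent(S^{-i}|W)$.
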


The first inequality of this proposition was proven in~\eqref{eq:mi_larger_sum_single_letter_mi} and previously in~\citep[Proposition 2]{bu2020tightening}. For the second inequality, note that $\bE \big[ \minf(W;Z_J|S^{-J}) \big] = \frac{1}{n} \sum_{i=1}^n \minf(W;Z_i|S^{-i})$. Then, the proof proceeds similarly to~\eqref{eq:mi_larger_sum_single_letter_mi}. More precisely,
\begin{align*}
	\sum_{i=1}^n \minf(W;Z_i|S^{-i}) &= \sum_{i=1}^n \big\{ \minf(W, S^{-i} ;Z_i) - \minf(Z_i; S^{-i}) \big\} \\
		&= \sum_{i=1}^n \big\{ \minf(Z^{i-1};Z_i) + \minf(W;Z_i|Z^{i-1}) + \minf(Z_{i+1}^n; Z_i | W, Z^{i-1}) \big\} \\
		&\geq \minf(W;Z_i|Z^{i-1}),
\end{align*}
where the first and the second inequalities follow from the \emph{chain rule} of the mutual information and the independence of the instances.
Therefore, the random-subset bounds are capturing more dependencies than the full mutual information bounds and therefore more than the single-letter bounds. 

\subsubsection{Comparison in the randomized-subsample Setting}

In the randomized-subsample setting, the bounds can be similarly ordered providing us with similar insights.

\begin{proposition}
\label{prop:comparison_minf_rs}
	Let $J$ be a uniformly distributed random index of $[n]$ that is independent of $W$ and $S$. Then,
	\begin{equation*}
		\frac{1}{n} \sum_{i=1}^n \minf(W;U_i|\tilde{Z}_{i,0},\tilde{Z}_{i,1}) \leq \frac{1}{n} \sum_{i=1}^n \minf(W;U_i|\tilde{S}) \leq \frac{\minf(W;U|\tilde{S})}{n} \leq \bE \big[ \minf(W;U_J|\tilde{S},U^{-J}) \big].
	\end{equation*}
\end{proposition}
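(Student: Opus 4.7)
The strategy is to establish each of the three inequalities by the same two-way chain-rule template used already in \eqref{eq:mi_larger_sum_single_letter_mi} and \eqref{eq:cmi_larger_sum_single_letter_cmi}, exploiting the structural fact that the indices $U_1, \ldots, U_n$ are mutually independent and jointly independent of the super sample $\tilde{S}$. None of the three steps requires anything beyond the chain rule and non-negativity of mutual information (\Cref{prop:properties_minf}).

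For the \emph{first} inequality, abbreviate $A_i \coloneqq (\tilde{Z}_{i,0}, \tilde{Z}_{i,1})$ and expand $\minf(W, \tilde{S}^{-i}; U_i \mid A_i)$ in two ways. Since $U_i$ is independent of $(\tilde{S}^{-i}, A_i)$, one decomposition gives $\minf(W, \tilde{S}^{-i}; U_i \mid A_i) = \minf(W; U_i \mid A_i, \tilde{S}^{-i}) + \minf(\tilde{S}^{-i}; U_i \mid A_i) = \minf(W; U_i \mid \tilde{S})$; the other decomposition gives $\minf(W; U_i \mid A_i) + \minf(\tilde{S}^{-i}; U_i \mid W, A_i)$. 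Comparing the two and discarding the non-negative residual yields $\minf(W; U_i \mid A_i) \leq \minf(W; U_i \mid \tilde{S})$. Averaging over $i$ completes this step.

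For the \emph{second} inequality, expand $\minf(W; U \mid \tilde{S}) = \sum_{i=1}^n \minf(W; U_i \mid \tilde{S}, U^{i-1})$ by the chain rule, and lower-bound each summand by $\minf(W; U_i \mid \tilde{S})$. To this end, apply the two-way chain rule to $\minf(W, U^{i-1}; U_i \mid \tilde{S})$: because $U^{i-1}$ and $U_i$ remain independent given $\tilde{S}$, the term $\minf(U^{i-1}; U_i \mid \tilde{S})$ vanishes, leaving $\minf(W; U_i \mid \tilde{S}, U^{i-1}) = \minf(W; U_i \mid \tilde{S}) + \minf(U^{i-1}; U_i \mid W, \tilde{S}) \geq \minf(W; U_i \mid \tilde{S})$. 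Summing over $i$ and dividing by $n$ concludes. For the \emph{third} inequality, uniformity of $J$ and independence from $W$ give $\bE[\minf(W; U_J \mid \tilde{S}, U^{-J})] = \frac{1}{n} \sum_{i=1}^n \minf(W; U_i \mid \tilde{S}, U^{-i})$, so it suffices to show $\minf(W; U_i \mid \tilde{S}, U^{i-1}) \leq \minf(W; U_i \mid \tilde{S}, U^{-i})$ for each $i$. The identical trick applied to $\minf(W, U_{i+1}^n; U_i \mid \tilde{S}, U^{i-1})$, using once more the mutual independence of the $U_j$'s to kill $\minf(U_{i+1}^n; U_i \mid \tilde{S}, U^{i-1})$, delivers this inequality with a non-negative slack term.

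There is no genuine obstacle here: the entire argument is a sequence of chain-rule manipulations, and the only thing to watch is correctly tracking which conditional mutual information terms vanish by independence and which constitute the non-negative slack that fixes the direction of the inequality. I would take care, in the write-up, to state once up front that $U \perp \tilde{S}$ and that the components of $U$ are mutually independent, so that each of the three subsequent vanishing arguments can cite this fact in a single line.
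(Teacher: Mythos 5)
Your proof is correct and follows essentially the same route as the paper's: both arguments are repeated two-way chain-rule expansions in which the cross terms vanish by the mutual independence of the indices and their independence from the supersample, with the remaining non-negative residuals giving the three inequalities. The only difference is organizational — you split the chain into three separate expansions while the paper writes one continuous string of (in)equalities — so there is nothing substantive to add.
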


The relationship between the first and the third element was proven previously in~\eqref{eq:mi_larger_sum_single_letter_mi}. The second element did not appear previously in the monograph as it will not be employed and it is looser than the single-letter bound shown here. Nonetheless, this was the state-of-the art bound prior to the development of our single-letter bound in~\citep{rodriguez2020randomsubset}. For completeness, we provide here a proof of each of these inequalities in a similar fashion to before, by repeatedly using the \emph{chain rule} of the mutual information and the fact that the instances and indices are mutually independent of each other. Namely, 

\begin{align*}
	\sum_{i=1}^n \minf(W;U_i | \tilde{S}, U^{-i}) 
	&= \sum_{i=1}^n \big \{ \minf(W,U^{-i}; U_i | \tilde{S}) - \minf(U^{-i};U_i | \tilde{S}) \big\} \\
	&= \sum_{i=1}^n \big \{ \minf(U^{i-1};U_i | \tilde{S}) + \minf(W;U_i | \tilde{S}, U^{i-1}) + \minf(U_{i+1}^n; U_i | \tilde{S}, U^{i-1}, W) \big\} \\
	&\geq \sum_{i=1}^n \big \{ \minf(W;U_i | \tilde{S}, U^{i-1}) \big\} = I(W;U | \tilde{S}) \\
	&= \sum_{i=1}^n \big \{ \minf(W,U^{i-1};U_i | \tilde{S}) - \minf(U^{i-1};U_i | \tilde{S}) \big\} \\
	&= \sum_{i=1}^n \big \{ \minf(W;U_i | \tilde{S}) + \minf(U^{i-1};U_i | \tilde{S}) \big\} \\
	&\geq \sum_{i=1}^n \minf(W;U_i | \tilde{S}) \\
	&= \sum_{i=1}^n \big \{ \minf(W, \tilde{S} ; U_i) - \minf(\tilde{S}; U_i) \big\} \\
	&= \sum_{i=1}^n \big \{ \minf(W, \tilde{Z}_{i,0}, \tilde{Z}_{i,1}; U_i) + \minf(\tilde{S}^{-i} ; U_i | \tilde{Z}_{i,0}, \tilde{Z}_{i,1})\big\} \\
	&\geq \sum_{i=1}^n \big \{ \minf(W; U_i | \tilde{Z}_{i,0}, \tilde{Z}_{i,1})  +  \minf(\tilde{Z}_{i,0}, \tilde{Z}_{i,1} ; U_i) \big\} \\
	&= \sum_{i=1}^n \minf(W; U_i | \tilde{Z}_{i,0}, \tilde{Z}_{i,1})
\end{align*}

\subsubsection{Comparison between the settings}

It is easy to see that $\minf(W;U|\tilde{S}) \leq \minf(W;S)$~\citep{haghifam2020sharpened}. To see that, note that the Markov chain $(\tilde{S}, U) - S - W$ holds. Then, by the data processing inequality from~\Cref{prop:properties_minf} it follows that $\minf(W;S) \geq \minf(W; \tilde{S}, U)$. Finally, by the \emph{chain rule} of the mutual information and its non-negativity it follows that $$\minf(W;\tilde{S}, U) = \minf(W;U|\tilde{S}) + \minf(W;\tilde{S}) \geq \minf(W;U|\tilde{S}).$$ 
An analogous reasoning reveals similar insights for the single-letter and random-subset terms. Nonetheless, the additional factor of two in the bounds of the randomized-subsample setting makes the comparison between the different settings harder.

\citet{hellstrom2020generalization} note that since $S$ is a deterministic function of $\tilde{S}$ and $U$, then $\minf(W;S) = \minf(W;U|\tilde{S}) + \minf(W;\tilde{S})$ holds with equality. Then, for bounded losses, the bound from the randomized-subsample setting from~\Cref{th:cmi_bound_bounded} is tighter than the one from the standard setting from~\Cref{th:mi_bound_bounded} if $3 \minf(W;U|\tilde{S}) \leq \minf(W;\tilde{S})$.

Similarly, we can show that $$\minf(W;Z_i) = \minf(W;U_i | \tilde{Z}_{i,0}, \tilde{Z}_{i,1}) + \minf(W;\tilde{Z}_{i,0}, \tilde{Z}_{i,1}).$$ To see that, note that $\minf(W;U_i) = 0$ as the index $U_i$ is independent of the hypothesis; that $\minf(W; \tilde{Z}_{i, 1-U_i} | U_i) = 0$ as, given the index $U_i$, the hypothesis is independent of the sample not used for training $\tilde{Z}_{i, 1-U_i}$; and that $\minf(W;\tilde{Z}_{i,U_i} | U_i, \tilde{Z}_{i, 1- U_i}) = \minf(W;Z_i)$ as, given the index $U_i$, the hypothesis is independent of the sample not used for training $\tilde{Z}_{i, 1-U_i}$ and the only dependence captured is the one of the hypothesis and the sample used for training $Z_i = \tilde{Z}_{i,U_i}$. With that in mind, using the \emph{chain rule} of the mutual information we have that 
\begin{align*}
    \minf(W;Z_i) &= \minf(W;U_i) + \minf(W; \tilde{Z}_{i, 1-U_i} | U_i) + \minf(W; \tilde{Z}_{i,U_i} | U_i, \tilde{Z}_{i, 1-U_i}) \\
    &= \minf(W; U_i, \tilde{Z}_{i,0}, \tilde{Z}_{i,1}).
\end{align*}
A further application of the \emph{chain rule} completes the proof. Therefore, \Cref{th:single_letter_cmi_bound_bounded} is tighter than the particularization of \Cref{th:single_letter_mi_bound_general_cgf} to bounded losses if $3 \minf(W;U_i | \tilde{Z}_{i,0}, \tilde{Z}_{i,1}) \leq \minf(W; \tilde{Z}_{i,0}, \tilde{Z}_{i,1}))$.

For the random-subset bounds, we can show that $$\minf(W;Z_i|S^{-i}) = \minf(W;U_i|\tilde{S}, U^{-i}) + \minf(W; \tilde{Z}_{i,0}, \tilde{Z}_{i,1} | S^{-i}).$$ Using the same argument as above, we can establish that $\minf(W;Z_i | S^{-i}) = \minf(W; \tilde{Z}_{i,0}, \tilde{Z}_{i,1}, U_i | S^{-i})$ since the conditioning on $S^{-i}$ does not change the fact that, given the index $U_i$, the hypothesis depends on the sample used for training $Z_i = \tilde{Z}_{i,U_i}$ and is independent of the one that is not used for training $\tilde{Z}_{i,1-U_i}$. Then, using the \emph{chain rule} of the mutual information we may establish that $$\minf(W;Z_i|S^{-i}) = \minf(W;U_i|\tilde{Z}_{i,0}, \tilde{Z}_{i,1}, S^{-i}) + \minf(W; \tilde{Z}_{i,0}, \tilde{Z}_{i,1} | S^{-i}).$$ All that is left to prove is that $\minf(W;U_i|\tilde{Z}_{i,0}, \tilde{Z}_{i,1}, S^{-i}) = \minf(W;U_i|\tilde{S}, U^{-i}) $. This can be seen by noting that, given the samples $\tilde{Z}_{i,0}$, $\tilde{Z}_{i,1}$, and the rest of the training set $\tilde{S}^{-i}$, then the hypothesis $W$ and the index $U_i$ are independent of the rest of the ghost samples $S_\textnormal{ghost}^{-i}$ and indices $U^{-i}$. Hence, $\minf(W;U_i|\tilde{Z}_{i,0}, \tilde{Z}_{i,1}, S^{-i}) = \minf(W;U_i|\tilde{Z}_{i,0}, \tilde{Z}_{i,1}, S^{-i}, S_\textnormal{ghost}^{-i}, U^{-i})$, which completes the equality. Therefore, \Cref{th:random_subset_cmi} is tighter than the particularization of \Cref{th:random_subset_mi_bound_general_cgf} to bounded losses if $3 \minf(W;U_i | \tilde{S}, U^{-i}) \leq \minf(W;\tilde{S}_i | S^{-i})$ for all indices $i \in [n]$. 

\section{Bounds Using the Wasserstein Distance}
\label{sec:bounds_using_wasserstein_distance}

From \Cref{sec:bounds_using_mutual_information,sec:bounds_using_conditional_mutual_information,sec:random_subset_and_single_letter}, we described how to obtain different bounds on the generalization error that depend on the relative entropy between different distributions. At their core, all these bounds come from a clever application of the Donsker and Varadhan~\Cref{lemma:dv_and_gvp} to decouple the hypothesis $W$ and the training data $S$ as shown in~\eqref{eq:dv_to_lambda_gen}.

More generally, this lemma can be employed to measure the difference between expectations. In~\eqref{eq:dv}, consider the measurable space $\cX$ and the random variables $X$ and $Y$ distributed according to $\bP$ and $\bQ$ respectively. Then, fix the function $g(x) = \lambda \big( f(x) - \bE[f(Y)] \big)$ for some auxiliary function $f$ and some $\lambda \in \bR$. In this way, the equivalent of~\eqref{eq:dv_to_lambda_gen} is summarized in the following lemma.
\begin{lemma}
\label{lemma:decoupling_relent}
Let $\cX$ be a measurable space and $X$ and $Y$ be two random variables distributed according to $\bP$ and $\bQ$ respectively. Consider a measurable function $f$ on $\cX$ such that the CGF $\Lambda_{g(Y)}$ exists, where $g(x) = \lambda\big(f(x) - \bE[f(Y)]\big)$. Then, for all $\lambda > 0$
\begin{equation}
    \label{eq:dv_to_lambda_f}
    \bE \big[ f(X) \big] - \bE \big[ f(Y) \big] \leq \frac{\relent(\bP \Vert \bQ) + \Lambda_{g(Y)}(\lambda)}{\lambda}.
\end{equation}
\end{lemma}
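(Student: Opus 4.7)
The plan is to apply the Donsker and Varadhan lemma (Lemma~\ref{lemma:dv_and_gvp}) directly with a judicious choice of test function. Concretely, I would set the test function $g(x) = \lambda\bigl(f(x) - \bE[f(Y)]\bigr)$ for the fixed $\lambda > 0$ under consideration, and verify the integrability requirement: by hypothesis the CGF of $f(Y)$ (equivalently, of the centered shift) is well defined, so $\bE_{x \sim \bQ}[e^{g(x)}] = \bE[e^{\lambda(f(Y)-\bE[f(Y)])}] < \infty$, placing $g$ inside the admissible class $\cG_\bQ$ from~\eqref{eq:dv}.

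Next I would unpack both sides of the variational inequality. The linear term evaluates to
\begin{equation*}
    \bE_{x \sim \bP}[g(x)] \;=\; \lambda\bigl(\bE[f(X)] - \bE[f(Y)]\bigr),
\end{equation*}
using linearity of expectation and the fact that $\bE[f(Y)]$ is a deterministic constant. The exponential term collapses to the CGF: since $g(Y) = \lambda(f(Y) - \bE[f(Y)])$ is already centered, $\log \bE_{x \sim \bQ}[e^{g(x)}] = \Lambda_{g(Y)}(\lambda)$ in the notation of the lemma (equivalently, this equals $\log \bE[e^{\lambda(f(Y)-\bE[f(Y)])}]$ by Definition~\ref{def:cgf}).

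Substituting these two identities into the Donsker--Varadhan inequality
\begin{equation*}
    \relent(\bP \Vert \bQ) \;\geq\; \bE_{x \sim \bP}[g(x)] - \log \bE_{x \sim \bQ}[e^{g(x)}]
\end{equation*}
yields
\begin{equation*}
    \relent(\bP \Vert \bQ) \;\geq\; \lambda\bigl(\bE[f(X)] - \bE[f(Y)]\bigr) - \Lambda_{g(Y)}(\lambda).
\end{equation*}
Finally, since $\lambda > 0$, dividing through and rearranging gives the claimed bound~\eqref{eq:dv_to_lambda_f}. I do not anticipate any real obstacle: this is essentially a one-line rewriting of Donsker--Varadhan, and the entire content of the lemma is the convenient parametric form in which the decoupling is packaged for reuse throughout the chapter (e.g., to recover~\eqref{eq:dv_to_lambda_gen} and its randomized-subsample analogue~\eqref{eq:dv_to_lambda_emp_gen}, and to motivate Wasserstein analogues in the upcoming section via the Kantorovich--Rubinstein duality in place of Donsker--Varadhan).
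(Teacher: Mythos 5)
Your proposal is correct and is exactly the paper's argument: the paper obtains the lemma by instantiating the Donsker--Varadhan inequality~\eqref{eq:dv} with the test function $g(x) = \lambda\big(f(x) - \bE[f(Y)]\big)$, identifying the log-moment term with the CGF of the centered variable, and rearranging for $\lambda > 0$. Your remark that $\log \bE_{x\sim\bQ}[e^{g(x)}]$ equals the centered CGF also matches the (slightly loose) way the paper itself reads the notation $\Lambda_{g(Y)}(\lambda)$, so there is no gap.
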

For example, \Cref{eq:dv_to_lambda_emp_gen}, from which most of the theorems presented so far stemmed from can be derived from~\eqref{eq:dv_to_lambda_f} by letting $X = (W,S)$, $Y = (W',S)$, and $f(w,s) = \gen(w,s)$, and by noting that $\bE[\gen(W',S)] = 0$. 

If instead of considering~\Cref{lemma:decoupling_relent}, we consider the Kantorovich--Rubinstein duality (\Cref{lemma:kantorovich_rubinstein_duality}), we may obtain a similar result with respect to the Wasserstein distance. 
Namely, consider some fixed dataset $s$. Then, let $X = W$, $Y = W'$, and $f(w;s) = \emprisk(w,s)$ in~\eqref{eq:kantorovich_rubinstein_duality_rv}, where $W$ and $W'$ are distributed according to the algorithm's distribution $\bP_W^{S=s}$ and the prototypical marginal distribution $\bP_W$ respectively. Then, if the loss $\ell(\cdot, z)$ is $L$-Lipschitz under some metric $\rho$ for all $z \in \cZ$, so is $\emprisk(w,s)$ for all $s \in \cZ^n$. Indeed, note that
\begin{equation*}
    \mleft| \frac{1}{n} \sum_{i=1}^n \ell(w, z_i) - \frac{1}{n} \sum_{i=1}^n \ell(w', z_i) \mright| \leq \frac{1}{n} \sum_{i=1}^n \mleft| \ell(w,z_i) - \ell(w, z_i) \mright| \leq L \rho(w, w'), 
\end{equation*}
where the first inequality comes from the triangle inequality and the second one from the Lipschitzness of $\ell$. Therefore, 
\begin{equation*}
    \bE^{S=s} \big[ \emprisk(W', s) \big] - \bE^{S=s} \big[ \emprisk(W, s) \big] \leq L \bW_\rho(\bP_W^{S=s}, \bP_W).
\end{equation*}
Finally, since $\bE \big[ \emprisk(W', S) \big] = \poprisk(W')$ as discussed in~\Cref{sec:bounds_using_mutual_information}, taking the expectation with respect to the training data recovers~\citep[Theorem 2]{wang2019information}. This effectively bounds the expected generalization error from above by the expected Wasserstein distance between the algorithm's output distribution $\bP_W^S$ and the hypothesis marginal distribution $\bP_W$. 

We can extend this reasoning to include any data-independent distribution $\bQ$. To do so, fix a dataset $s$ and let $X = W$ and $Y = W'$ be defined as before in~\eqref{eq:kantorovich_rubinstein_duality_rv}. However, now let $f(w,s) = \gen(w,s)$. If the loss $\ell(\cdot, z)$ is $L$-Lipschitz under some metric $\rho$ for all $z \in \cZ$, then so is $\poprisk(\cdot)$. Namely, 
\begin{equation*}
    \mleft| \poprisk(w) - \poprisk(w') \mright| \leq \bE \mleft[ \mleft| \ell(w,Z) - \ell(w',Z) \mright| \mright] \leq L  \rho(w, w'), 
\end{equation*}
where the first inequality comes from Jensen's inequality and the second one from the Lipschitzness of $\ell$. Therefore, by the triangle inequality $\gen(\cdot, s)$ is $2L$-Lipschitz for all $s \in \cZ^n$ and
\begin{equation*}
    \bE^{S=s} [ \gen(W,s) ] - \bE^{S=s} [ \gen(W',s)] \leq 2L  \bW_\rho(\bP_W^{S=s} , \bQ).
\end{equation*}
Finally, taking the expectation with respect to the training data and noting again that $\bE[\gen(W',S)] = 0$ extends upon~\citep[Theorem 2]{wang2019information} by allowing any data-independent distribution $\bQ$. This flexibility will prove useful, for example, in~\Cref{sec:noisy_iterative_learning_algos}. These two results are summarized below.

\begin{theorem}[{Extension of~\citep[Theorem 2]{wang2019information}}]
\label{th:wasserstein_full}
Consider an $L$-Lipschitz loss function $\ell(\cdot, z)$ with respect to some metric $\rho$ for all $z \in \cZ$. Then, for every data-independent distribution $\bQ$ on $\cW$
\begin{equation*}
    \bE \big[ \gen(W,S) \big] \leq L \bE \big[ \min \mleft \{ \bW_\rho(\bP_W^S, \bP_W), 2 \bW_\rho(\bP_W^{S} , \bQ) \mright \} \big].
\end{equation*}
\end{theorem}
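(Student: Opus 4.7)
The plan is to derive two separate upper bounds on $\bE[\gen(W,S)]$ via the Kantorovich--Rubinstein duality (\Cref{lemma:kantorovich_rubinstein_duality}) and combine them through the minimum. The key preparatory step is identifying the relevant Lipschitz constants. Since $\ell(\cdot,z)\in L\textnormal{-}\mathrm{Lip}(\rho)$ for every $z\in\cZ$, the empirical risk $\emprisk(\cdot,s)=\frac{1}{n}\sum_{i=1}^n\ell(\cdot,z_i)$ is $L$-Lipschitz for every fixed $s$ (by the triangle inequality on the average), and the population risk $\poprisk(\cdot)=\bE[\ell(\cdot,Z)]$ is $L$-Lipschitz by passing the absolute value through the expectation as in Jensen's inequality. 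Hence $\gen(\cdot,s)=\poprisk(\cdot)-\emprisk(\cdot,s)$ is $2L$-Lipschitz for every fixed $s$, by one more triangle inequality.

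For the first bound, I would fix $s$ and invoke \Cref{lemma:kantorovich_rubinstein_duality} with distributions $\bP_W^{S=s}$ and $\bP_W$ and test function $\emprisk(\cdot,s)$ to obtain
\begin{equation*}
    \bE_{W'\sim\bP_W}[\emprisk(W',s)]-\bE_{W\sim\bP_W^{S=s}}[\emprisk(W,s)]\leq L\,\bW_\rho(\bP_W^{S=s},\bP_W).
\end{equation*}
Taking the expectation over $S$ and using Fubini together with the fact that $W'\sim\bP_W$ is independent of $S$ (so that $\bE_S[\emprisk(W',S)\mid W']=\poprisk(W')$ and hence $\bE_S\bE_{W'}[\emprisk(W',S)]=\bE[\poprisk(W)]$), the left-hand side collapses to $\bE[\gen(W,S)]$ and the first bound follows. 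For the second bound, I would fix $s$ and apply the same lemma to $\bP_W^{S=s}$ and the data-independent $\bQ$ with the $2L$-Lipschitz test function $\gen(\cdot,s)$, yielding $\bE^{S=s}[\gen(W,s)]-\bE_{W'\sim\bQ}[\gen(W',s)]\leq 2L\,\bW_\rho(\bP_W^{S=s},\bQ)$. Taking the expectation over $S$ and using that $\bE[\gen(W',S)]=0$ whenever $W'$ is independent of $S$ (again by Fubini) then yields the second bound.

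The two inequalities $\bE[\gen(W,S)]\leq L\,\bE[\bW_\rho(\bP_W^S,\bP_W)]$ and $\bE[\gen(W,S)]\leq 2L\,\bE[\bW_\rho(\bP_W^S,\bQ)]$ hold simultaneously, and taking their minimum delivers the statement. The subtle point I anticipate is the placement of the minimum: the natural proof outlined above gives the minimum \emph{outside} the expectation, $L\min\{\bE[\bW_\rho(\bP_W^S,\bP_W)],\,2\bE[\bW_\rho(\bP_W^S,\bQ)]\}$, whereas the statement writes it \emph{inside}, which is strictly tighter by Jensen's inequality. Promoting the bound to the min-inside form as stated would seem to require a per-realization coupling argument rather than the two global applications of Kantorovich--Rubinstein; alternatively, the statement can be read as the practical observation that either bound is available, and that the free choice of $\bQ$ already permits matching the marginal $\bP_W$ exactly.
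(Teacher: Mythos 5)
Your proof is correct and follows essentially the same route as the paper: two applications of the Kantorovich--Rubinstein duality (\Cref{lemma:kantorovich_rubinstein_duality}), first with the $L$-Lipschitz empirical risk tested against $\bP_W$ and then with the $2L$-Lipschitz generalization gap tested against an arbitrary data-independent $\bQ$, followed by taking expectations and using $\bE[\emprisk(W',S)] = \bE[\poprisk(W')]$ and $\bE[\gen(W',S)] = 0$. The subtlety you flag about the placement of the minimum is shared by the paper itself: its proof likewise only establishes the two bounds separately (i.e., the minimum outside the expectation) before summarizing them in the theorem with the minimum inside.
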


The main appeal of bounds with the form of~\Cref{th:wasserstein_full} is that they consider the geometry of the space by construction. This is reminiscent of classical approaches considering the complexity of the hypothesis class such as the Rademacher complexity from~\Cref{sec:rademacher_complexity}, with the added benefit that they also capture the dependence of the algorithm and the training data. Note that $\bE \big[ \bW_\rho(\bP_W^S, \bP_W) \big]$ still measures the difference between the distribution of the hypothesis returned by the algorithm after observing a training set, and the prototypical distribution on samples from the data distribution $\bP_W = \bP_W^S \circ \bP_S$ as discussed in~\Cref{subsec:stability_via_information_measures,sec:information_theoretic_generalization,sec:bounds_using_mutual_information}. The consideration of the space geometry is lost in the bounds featuring the relative entropy unless the distributions are absolutely continuous to each other. For example, consider~\Cref{fig:example_wasserstein}, where different algorithm distributions $\bP_W^{S=s}$ in $\bR^2$ are shown for different datasets along with the data-independent distribution $\bQ$. Since $\bP_W^{S=s}$ is not absolutely continuous with respect to $\bQ$ for some training sets $s$, the relative entropy $\relent(\bP_W^{S=s} \Vert \bQ)$ will go to infinity, even though the two distributions are clearly not far from each other. On the other hand, the Wasserstein distance with some standard metric like the $\ell_2$ norm $\rho(x,y) = \rVert x - y \rVert$ captures this similarity.

\begin{figure}[ht]
    \centering
    \includegraphics[width=0.6\textwidth]{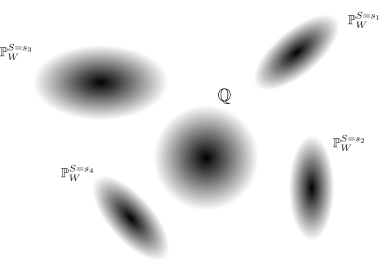}
    \caption{
    Example of four hypothesis distributions $\bP_W^{S=s_1}$, $\bP_W^{S=s_2}$, $\bP_W^{S=s_3}$, and $\bP_W^{S=s_4}$ after observing four different training sets, and a data-independent distribution $\bQ$ in $\bR^2$. The darker areas indicate regions with higher probability density and the white background indicates the region of probability zero. Even if the distributions characterizing the algorithm are not absolutely continuous with respect to the data-independent distribution $\bQ$, they are still close in $\bR^2$.}
    \label{fig:example_wasserstein}
\end{figure}

\looseness=-1 We recall from~\Cref{subsec:single_letter_bounds} that $\gen(w,s) = \frac{1}{n} \sum_{i=1}^n \gen(w,z_i)$ and that therefore $\bE[\gen(W,S)] = \frac{1}{n} \sum_{i=1}^n \bE[\gen(W,Z_i)]$ by the linearity of the expectation. Therefore, one may consider~\Cref{th:wasserstein_full} for each single-letter generalization error to obtain the following theorem, which generalizes the independently developed~\citep[Corollary 3]{zhang2021optimal} to algorithms that may consider the ordering of the samples, and extends our~\citep[Theorem 1]{rodriguez2021tighter} considering arbitrary data-independent distributions $\bQ$ instead of requiring it to be the prototypical marginal distribution $\bP_W$.

\begin{theorem}
\label{th:wasserstein_single_letter}
Consider an $L$-Lipschitz loss function $\ell(\cdot, z)$ with respect to some metric $\rho$ for all $z \in \cZ$. Then, for every data-independent distribution $\bQ$ on $\cW$
\begin{equation*}
    \bE \big[ \gen(W,S) \big] \leq \frac{L}{n} \sum_{i=1}^n \bE \mleft[ \min \mleft\{ \bW_\rho(\bP_W^{Z_i}, \bP_W), 2 \bW_\rho(\bP_W^{Z_i} , \bQ) \mright \} \mright].
\end{equation*}
\end{theorem}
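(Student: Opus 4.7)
The plan is to prove the bound by applying Kantorovich--Rubinstein duality (\Cref{lemma:kantorovich_rubinstein_duality}) to each single-letter generalization error after decomposing the full expected generalization error as discussed in~\Cref{subsec:single_letter_bounds}. Concretely, I would start by writing $\bE[\gen(W,S)] = \frac{1}{n}\sum_{i=1}^n \bE[\gen(W,Z_i)]$, where $\gen(W,Z_i) = \poprisk(W) - \ell(W,Z_i)$, and recalling that the relevant conditional distribution is $\bP_W^{Z_i} = \bP_W^S \circ \bP_{S^{-i}}$, so that the expectations are taken with respect to the joint $\bP_{W,Z_i} = \bP_W^{Z_i} \otimes \bP_{Z_i}$. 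The task then reduces to proving both single-letter inequalities separately and taking the minimum.

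For the bound involving $\bP_W$, I would fix a realization $z_i$ and apply Kantorovich--Rubinstein to the $L$-Lipschitz function $w \mapsto -\ell(w,z_i)$ with $X \sim \bP_W^{Z_i=z_i}$ and $Y \sim \bP_W$. This gives $\bE_{W'\sim\bP_W}[\ell(W',z_i)] - \bE^{Z_i=z_i}[\ell(W,z_i)] \leq L\, \bW_\rho(\bP_W^{Z_i=z_i}, \bP_W)$. Taking expectation over $Z_i$ and using the independence between $W'$ (with marginal $\bP_W$) and $Z_i$ yields $\bE_{Z_i}\bE_{W'}[\ell(W',Z_i)] = \bE_{W'\sim\bP_W}[\poprisk(W')] = \bE[\poprisk(W)]$, which produces $\bE[\gen(W,Z_i)] \leq L\,\bE[\bW_\rho(\bP_W^{Z_i},\bP_W)]$. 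For the bound involving an arbitrary $\bQ$, I would instead apply Kantorovich--Rubinstein to $w\mapsto \gen(w,z_i) = \poprisk(w) - \ell(w,z_i)$, which is $2L$-Lipschitz by the triangle inequality since both $\poprisk$ and $\ell(\cdot,z_i)$ are $L$-Lipschitz (the Lipschitzness of $\poprisk$ was already established before \Cref{th:wasserstein_full}). This yields $\bE^{Z_i=z_i}[\gen(W,z_i)] - \bE_{W'\sim\bQ}[\gen(W',z_i)] \leq 2L\,\bW_\rho(\bP_W^{Z_i=z_i},\bQ)$. Taking expectation over $Z_i$ and using that $W'\sim\bQ$ is independent of $Z_i$, so $\bE_{Z_i}\bE_{W'}[\gen(W',Z_i)] = \bE_{W'}[\poprisk(W')] - \bE_{W'}[\poprisk(W')] = 0$, delivers $\bE[\gen(W,Z_i)] \leq 2L\,\bE[\bW_\rho(\bP_W^{Z_i},\bQ)]$.

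Combining the two inequalities pointwise inside the expectation (both bounds hold simultaneously for each realization of $Z_i$) and averaging over $i$ gives the stated minimum. The only subtle point, rather than a genuine obstacle, is the asymmetry in the Lipschitz constants: the $\bP_W$ bound uses $L$ rather than $2L$ because the $\poprisk$ piece of $\gen(w,z_i)$ cancels exactly against itself once we average in $Z_i$ under the true marginal $\bP_W$, which is precisely why pushing only the $\ell$-term through Kantorovich--Rubinstein suffices. For a generic $\bQ$ no such cancellation is available, so the full $2L$-Lipschitz function $\gen(\cdot,z_i)$ must be used, which accounts for the factor of $2$ in the second term of the minimum.
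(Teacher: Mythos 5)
Your proposal is correct and follows essentially the same route as the paper: the paper obtains this theorem by applying the argument of \Cref{th:wasserstein_full} (Kantorovich--Rubinstein duality via \Cref{lemma:kantorovich_rubinstein_duality}, once with the $L$-Lipschitz loss against $\bP_W$ and once with the $2L$-Lipschitz single-letter generalization error against an arbitrary $\bQ$) to each term of the decomposition $\bE[\gen(W,S)] = \frac{1}{n}\sum_{i=1}^n \bE[\gen(W,Z_i)]$, exactly as you do. Your closing remark on the $L$ versus $2L$ asymmetry also matches the paper's reasoning for why the marginal $\bP_W$ admits the smaller constant.
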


Similarly, we recall from~\Cref{subsec:random_subset_bounds} that $\gen(w,s) = \poprisk(w) - \bE[ \emprisk(w,s_J) ]$, where $J$ is a uniformly distributed random subset of $[n]$ such that $|J| = m$ and $s_J = \{z_i \}_{i \in J}$. Therefore, one may fix a realization $r$ of the random variable $R$, a fixed subset $j \subseteq [n]$, and a fixed set of samples $s_{j^c}$ and employ~\Cref{th:wasserstein_full} and~\Cref{th:wasserstein_single_letter} to the expected generalization error $\bE^{R=r, J=j, S_{j^c}=s_{j^c}} [ \gen(W,S_j) ]$. Finally, taking the expectation with respect to $R$, $J$, and $S_{J^c}$ extends our~\citep[Theorem 2]{rodriguez2021tighter}.

\begin{theorem}
\label{th:wasserstein_random_subset}
Consider an $L$-Lipschitz loss function $\ell(\cdot, z)$ with respect to some metric $\rho$ for all $z \in \cZ$. Also consider a random subset $J \subseteq [n]$ such that $|J| = m$, which is uniformly distributed and independent of $W$ and $S$, and a random variable $R$ that is independent of $S$. Then, for every Markov kernel $\bQ$ from $\cZ^{n-m} \otimes \cR$ to distributions on $\cW$
\begin{align*}
    \bE \big[ \gen(W,S) \big] &\leq L \bE \mleft[ \min \mleft\{ \bW_\rho(\bP_W^{S,R}, \bP_W^{S_{J^c},R}), 2 \bW_\rho(\bP_W^{S,R}, \bQ(S_{J^c},R)) \mright \} \mright]  \textnormal{ and} \\
    \bE \big[ \gen(W,S) \big] &\leq \frac{L}{m} \bE \mleft[ \sum_{i \in J} \min \mleft\{ \bW_\rho(\bP_W^{S_{J^c},Z_i, R}, \bP_W^{S_{J^c}, R}), 2 \bW_\rho(\bP_W^{S_{J^c},Z_i, R} , \bQ(S_{J^c}, R)) \mright \} \mright].
\end{align*}
\end{theorem}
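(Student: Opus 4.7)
The plan is to prove both inequalities by a disintegration argument that reduces them to conditional applications of Theorem~\ref{th:wasserstein_full} (for the first bound) and Theorem~\ref{th:wasserstein_single_letter} (for the second). The starting point is the identity noted by the author just before the theorem statement: since $J$ is uniformly distributed on size-$m$ subsets of $[n]$ and independent of $(W,S)$, we have $\bE[\gen(W,S)] = \bE[\poprisk(W) - \emprisk(W, S_J)]$. Thus the full training set of size $n$ is effectively replaced by the random sub-sample $S_J$ of size $m$, which is exactly the size appearing on the right-hand side of both inequalities.

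Next, I would condition on a realization $(r, j, s_{j^c})$ of $(R, J, S_{J^c})$, so that only $S_j$ and $W$ remain random. Because $R$ is independent of $S$ and the samples are i.i.d., the conditional law of $S_j$ is still $\bP_Z^{\otimes m}$, and $\poprisk(w) = \bE[\ell(w, Z)]$ is unchanged. This makes the conditional setup a bona fide learning problem on $m$ i.i.d.\ samples in which the algorithm's kernel is the conditional distribution of $W$ given $(S, R, S_{J^c}) = (\cdot, r, s_{j^c})$, the hypothesis marginal (obtained by integrating $S_j$ against $\bP_Z^{\otimes m}$) is $\bP_W^{R=r, S_{J^c}=s_{j^c}}$, and the data-independent reference is the fixed distribution $\bQ(s_{j^c}, r) \in \cP(\cW)$. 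The Lipschitz assumption on $\ell$ is inherited trivially. Applying Theorem~\ref{th:wasserstein_full} conditionally yields an upper bound on $\bE^{R=r, J=j, S_{J^c}=s_{j^c}}[\gen(W, S_j)]$ by the minimum of $L\, \bW_\rho$ between the conditional algorithm kernel and the conditional marginal, and $2L\, \bW_\rho$ between the conditional algorithm kernel and $\bQ(s_{j^c}, r)$. Analogously, Theorem~\ref{th:wasserstein_single_letter} applied conditionally, together with the decomposition $\emprisk(w, s_j) = \tfrac{1}{m}\sum_{i \in j} \ell(w, z_i)$, yields the single-letter version summed over $i \in j$. Taking the outer expectation over $(R, J, S_{J^c})$ and using the tower property converts $\bE^{R, J, S_{J^c}}$ into $\bE$ and reproduces both inequalities as stated.

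The principal technical hurdle is the identification of the conditional marginals appearing in the Wasserstein terms: one must check that marginalizing $\bP_W^{S, R}$ over $S_J$ with $(R, S_{J^c})$ fixed indeed equals the kernel $\bP_W^{S_{J^c}, R}$ written in the statement, and similarly for the single-letter quantity $\bP_W^{S_{J^c}, Z_i, R}$ obtained by marginalizing only over $S_J \setminus \{Z_i\}$. Both identifications are elementary but rely on the two independence assumptions in the statement: $J \perp (W, S)$ (so that conditioning on $J$ does not alter $\bP_W^{S, R}$) and $R \perp S$ (so that the conditional law of $S_J$ remains $\bP_Z^{\otimes m}$). Once these identifications are in place, the full generality in the choice of $\bQ$ follows for free from the corresponding generality in Theorem~\ref{th:wasserstein_full}, since from the viewpoint of the conditional problem $\bQ(s_{j^c}, r)$ is just a fixed distribution on $\cW$ that does not depend on the effective training sample $S_J$.
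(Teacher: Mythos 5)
Your proposal is correct and follows essentially the same route as the paper: condition on a realization of $(R, J, S_{J^c})$, note that $\gen(w,s) = \poprisk(w) - \bE[\emprisk(w,S_J)]$, apply \Cref{th:wasserstein_full} and \Cref{th:wasserstein_single_letter} to the conditional problem on the $m$ i.i.d.\ samples $S_J$, and take the outer expectation. Your explicit check that the conditional marginals coincide with $\bP_W^{S_{J^c},R}$ and $\bP_W^{S_{J^c},Z_i,R}$ is a detail the paper leaves implicit, but it matches the intended argument.
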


In particular, when $m=1$, both equations in~\Cref{th:wasserstein_random_subset} coincide and 
\begin{equation*}
    \bE \big[ \gen(W,S) \big] \leq L \bE \mleft[ \min \mleft\{ \bW_\rho(\bP_W^{S,R}, \bP_W^{S^{-J},R}), 2 \bW_\rho(\bP_W^{S,R}, \bQ(S^{-J},R)) \mright \} \mright].
\end{equation*}

After~\Cref{th:wasserstein_full}, we noted how the Wasserstein-based bounds resolved the relative entropy-based bounds' problem of non-absolutely continuous algorithm distributions $\bP_W^{S=s}$ with respect to the data independent distribution $\bQ$. Shortly, in~\Cref{subsec:implications_bounds_using_mutual_information}, we will show that even when the geometry of the space is ignored and the discrete metric is considered, the Wasserstein distance-based bounds improve upon the relative entropy-based bounds when the loss is bounded. Now, we will present a simple scenario where some Wasserstein-based bounds achieve the desired generalization error rate, while previous bounds based on the relative entropy fail to do so.

Recall the Gaussian location model from~\Cref{ex:glm} and let the loss function be the Euclidean distance $\ell(w,z) = \lVert w - z \rVert_2$. In this example, the expected generalization error can be calculated exactly (see \Cref{app:glm}):
\begin{equation*}
    \bE \big[ \gen(W,S) \big] = \sqrt{\frac{2 \sigma^2}{n}} \Big( \sqrt{n+1} - \sqrt{n-1} \Big) \frac{\Gamma\big(\frac{d+1}{2}\big)}{\Gamma\big(\frac{d}{2})} \in \cO \bigg(\frac{\sqrt{\sigma^2 d}}{n}\bigg).
\end{equation*}

As discussed in~\citep{bu2020tightening}, the bound from~\citep{xu2017information} is not applicable in this setting since $\minf(W;S) \to \infty$ and since $\ell(w,Z)$ is not sub-Gaussian given that $\mathrm{Var}[\ell(w,Z)] \to \infty$ as $\lVert w \rVert_2 \to \infty$.
When $d=1$, the loss $\ell(W',Z)$ is $1$-sub-Gaussian if $W'$ is distributed according to the marginal distribution $\bP_W$ and single-letter mutual information bound from~\citet{bu2020tightening} in~\Cref{th:single_letter_mi_bound_general_cgf} produces a bound in $\mathcal{O}\big(\sqrt{\nicefrac{\sigma^2}{n}}\big)$, which decreases slower than the true generalization error, see \Cref{fig:glm}. This happens since the bound grows as the square root of $\minf(W;Z_i)$, which is in $\mathcal{O}(\nicefrac{1}{n})$. 

In this scenario, the loss is 1-Lipschitz under $\rho(w,w') = \lVert w-w' \rVert_2$, and thus the bounds based on the Wasserstein distance are applicable.
Applying the bound from~\citet{wang2019information} in~\Cref{th:wasserstein_full} yields a bound in  $\mathcal{O}\big(\sqrt{\nicefrac{\sigma^2 d}{n}}\big)$, which decreases at the same sub-optimal rate as the single-letter mutual information bound.
However, both the single-letter and random-subset Wasserstein distance bounds from \Cref{th:wasserstein_single_letter,th:wasserstein_random_subset} produce bounds in $\mathcal{O}\big(\frac{\sqrt{\sigma^2 d}}{n} \big)$,  which decrease at the same rate as the true generalization error (see \Cref{fig:glm}). 

\begin{figure}[ht]
\centering
\includegraphics[width=0.495\textwidth]{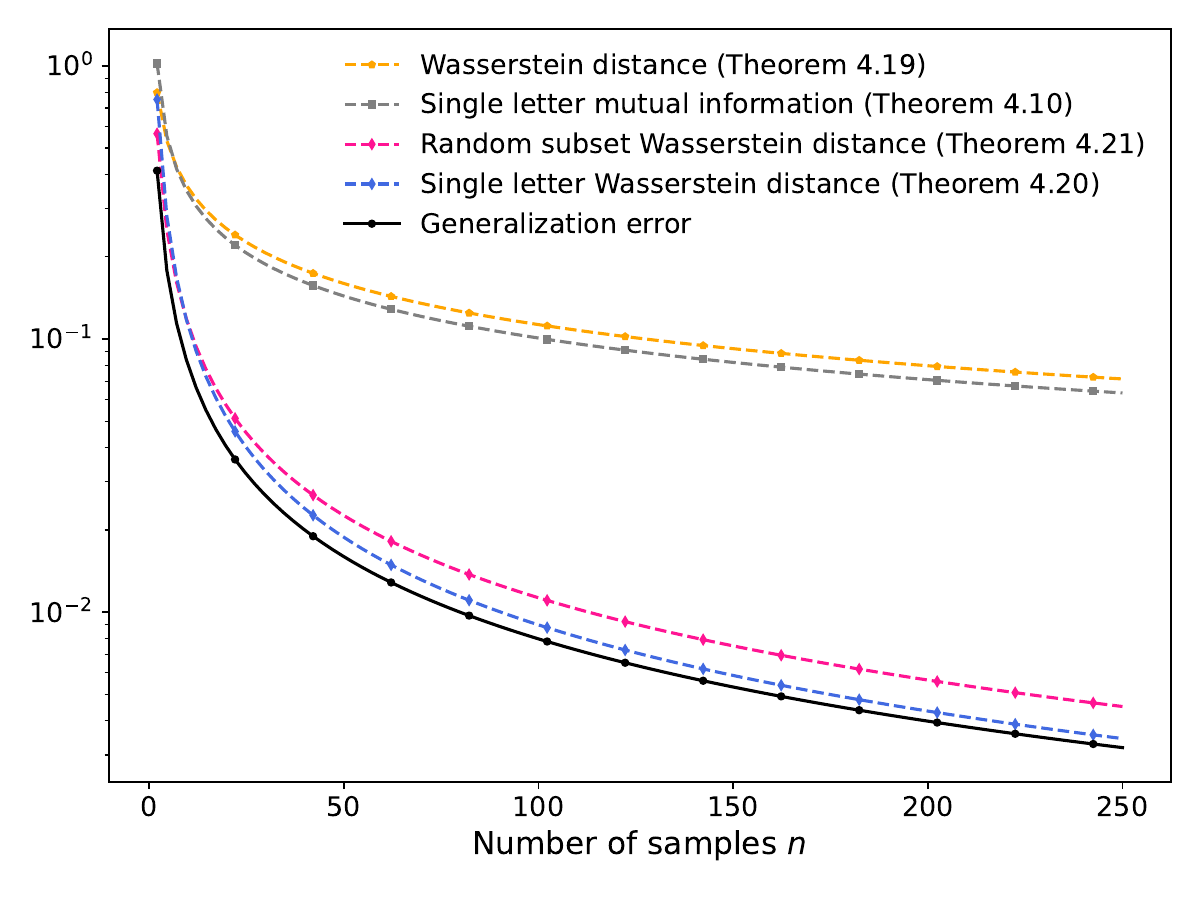}
\includegraphics[width=0.495\textwidth]{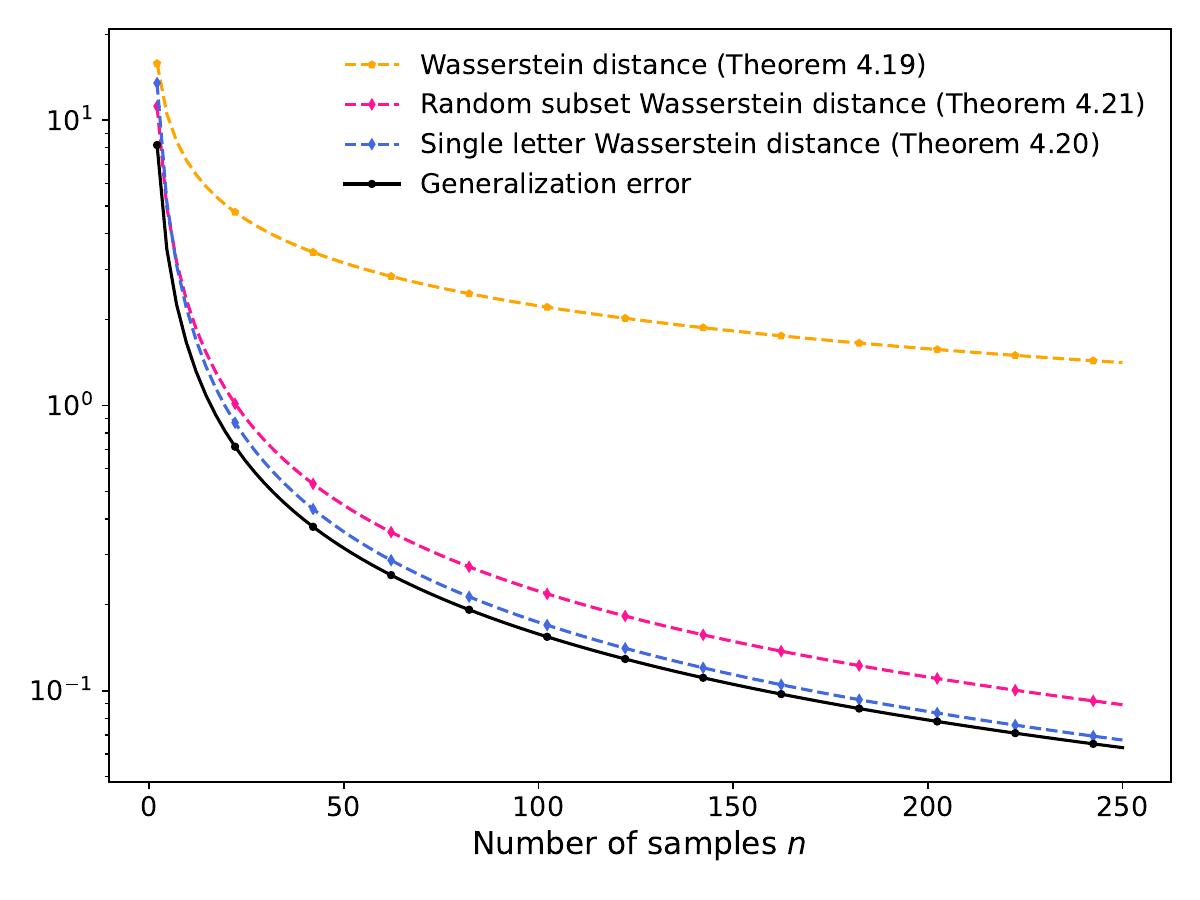}
\caption{Expected generalization error and generalization error bounds for the Gaussian location model from~\Cref{ex:glm} with $\mathcal{N}(\mu,1)$ (left) and $\mathcal{N}(\mu,I_{250})$ (right). See \Cref{app:glm} for the derivations and details.}
\label{fig:glm}
\end{figure}

The analysis that led to~\Cref{th:wasserstein_full,th:wasserstein_single_letter,th:wasserstein_random_subset} can be adapted to the randomized-subsample setting with a few modifications. Recall that the empirical generalization error is defined as $\empgen(w, \tilde{s}, u) = \emprisk(w,s_\textnormal{ghost}) - \emprisk(w,s)$. Then, even if it is true that
\begin{equation*}
    \bE[ \gen(W,S) ] = \bE[ \empgen(W, \tilde{S}, U) ] = \bE[ \emprisk(W,S_\textnormal{ghost})] - \bE[\emprisk(W,S)], 
\end{equation*}
the functions $\emprisk(\cdot, s_\textnormal{ghost})$ and $\emprisk(\cdot, s)$ are distinct and therefore the Kantorovich--Rubinstein duality from~\Cref{lemma:kantorovich_rubinstein_duality} cannot be invoked. Nonetheless, we may also note that
\begin{equation*}
    \empgen(W, \tilde{S}, U) = \emprisk(W,S_\textnormal{ghost}) - \emprisk(W,S) - \bE[ \emprisk(W', S_{\textnormal{ghost}}) - \emprisk(W', S)]
\end{equation*}
for every random variable $W'$ that is independent of the indices $U$ since the training data $S$ and the ghost samples $S_\textnormal{ghost}$ are identically distributed.

With this in mind, one may consider a fixed super sample $\tilde{s}$ and some fixed indices $u$. Then, as before, one may let $X = W$ and $Y = W'$ in~\eqref{eq:kantorovich_rubinstein_duality_rv}, where $W$ and $W'$ are distributed according to the algorithm's distribution $\bP_W^{S=s} = \bP_{W}^{\tilde{S}=\tilde{s}, U=u}$ a.s. and an indices-independent Markov kernel $\bQ(\tilde{s})$. Then, one may apply the Kantorovich--Rubinstein duality from~\Cref{lemma:kantorovich_rubinstein_duality} twice, first with $f(w;\tilde{s},u) = \emprisk(w,s_\textnormal{ghost})$ and then with $f(w;\tilde{s},u) = \emprisk(w,s)$. In this way, one has that
\begin{align*}
    \bE^{\tilde{S}=s,U=u}&[ \emprisk(W,s_\textnormal{ghost})] - \bE^{\tilde{S}=s,U=u}[ \emprisk(W',s_\textnormal{ghost})] \\
    &+ \bE^{\tilde{S}=s,U=u}[ \emprisk(W',s)] - \bE^{\tilde{S}=s,U=u}[ \emprisk(W,s)] 
    \leq 2 L \bW_\rho \big(\bP_W^{\tilde{S}=s, U=u}, \bQ(\tilde{s}) \big),
\end{align*}
\looseness=-1 where we used that the Wasserstein distance is a distance and therefore symmetric in its arguments. Finally, taking the expectation with respect to the super sample and the indices extends our~\citep[Equation (3)]{rodriguez2021tighter} by allowing any arbitrary Markov kernel $\bQ$.

\begin{theorem}
\label{th:wasserstein_full_rs}
Consider an $L$-Lipschitz loss function $\ell(\cdot, z)$ with respect to some metric $\rho$ for all $z \in \cZ$. Then, for every data-independent Markov kernel $\bQ$ from $\cZ^{n \times 2}$ to distributions on $\cW$
\begin{equation*}
    \bE \big[ \gen(W,S) \big] \leq 2L \bE \big[ \bW_\rho(\bP_W^{\tilde{S}, U} , \bQ(\tilde{S}))  \big]. 
\end{equation*}
\end{theorem}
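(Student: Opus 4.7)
The plan is to follow essentially the recipe sketched immediately before the statement, i.e., to use Kantorovich--Rubinstein twice (once for the empirical risk on the training half and once for the empirical risk on the ghost half) and then handle the ``residue'' by a symmetry argument that exploits $\bQ$ being a function of $\tilde{S}$ only. First, I fix a super sample $\tilde{s}$ and indices $u$, and introduce an auxiliary random variable $W'$ distributed according to $\bQ(\tilde{s})$, jointly independent of $U$ given $\tilde{S}$. As already observed in the derivation of~\Cref{th:wasserstein_full}, the maps $w \mapsto \emprisk(w, s)$ and $w \mapsto \emprisk(w, s_\textnormal{ghost})$ are both $L$-Lipschitz with respect to $\rho$, because they are averages of the $L$-Lipschitz functions $\ell(\cdot, z_i)$ and the triangle inequality preserves the Lipschitz constant.

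Second, I apply \Cref{lemma:kantorovich_rubinstein_duality} twice, for these two Lipschitz functions, to bound
\begin{equation*}
    \bE^{\tilde{S}=\tilde{s},U=u} \bigl[ \emprisk(W, s_\textnormal{ghost}) - \emprisk(W', s_\textnormal{ghost}) \bigr] \leq L\, \bW_\rho\bigl(\bP_W^{\tilde{S}=\tilde{s}, U=u}, \bQ(\tilde{s})\bigr)
\end{equation*}
and, using the symmetry of the Wasserstein distance,
\begin{equation*}
    \bE^{\tilde{S}=\tilde{s},U=u} \bigl[ \emprisk(W', s) - \emprisk(W, s) \bigr] \leq L\, \bW_\rho\bigl(\bP_W^{\tilde{S}=\tilde{s}, U=u}, \bQ(\tilde{s})\bigr).
\end{equation*}
Adding the two inequalities and averaging over $\tilde{S}$ and $U$ yields
\begin{equation*}
    \bE\bigl[\empgen(W,\tilde{S}, U)\bigr] - \bE\bigl[\emprisk(W', S_\textnormal{ghost}) - \emprisk(W', S)\bigr] \leq 2L\, \bE\bigl[\bW_\rho(\bP_W^{\tilde{S}, U}, \bQ(\tilde{S}))\bigr].
\end{equation*}

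Third, I argue that the second expectation on the left vanishes, which is the only non-cosmetic step. Conditional on $\tilde{S}$, the map $U \mapsto (S, S_\textnormal{ghost})$ and the map $U \mapsto (S_\textnormal{ghost}, S)$ (obtained by flipping every bit of $U$) induce the same distribution, because $U$ is uniform on $\{0,1\}^n$. Since $W'$ is sampled from $\bQ(\tilde{S})$ independently of $U$, the conditional expectations $\bE^{\tilde{S}}[\emprisk(W', S)]$ and $\bE^{\tilde{S}}[\emprisk(W', S_\textnormal{ghost})]$ coincide, so the residue term is $0$. Finally, recalling that $\bE[\empgen(W,\tilde{S},U)] = \bE[\gen(W,S)]$ gives the claim.

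The only real obstacle is the symmetrization argument in the third step, since one must be careful that $\bQ$ is a kernel from $\cZ^{n\times 2}$ (not from $\cZ^{n\times 2} \times \{0,1\}^n$), so that $W'$ is genuinely $U$-free given $\tilde{S}$; once this is in place, the rest is an essentially mechanical application of duality and linearity of expectation. I expect no technical subtleties beyond measurability bookkeeping, which is absorbed by the standard Borel framework set up in~\Cref{sec:prob_theory}.
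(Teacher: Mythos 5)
Your proof is correct and follows essentially the same route as the paper: fix $(\tilde{s},u)$, apply the Kantorovich--Rubinstein duality once to $\emprisk(\cdot,s_\textnormal{ghost})$ and once to $\emprisk(\cdot,s)$ (using symmetry of $\bW_\rho$), add, take expectations, and observe that the correction term involving $W'\sim\bQ(\tilde{S})$ vanishes because $S$ and $S_\textnormal{ghost}$ are exchangeable given that $W'$ is independent of $U$. Your explicit bit-flipping symmetrization in the third step is just a more detailed spelling-out of the paper's remark that the residue is zero for any $U$-independent $W'$, so no further changes are needed.
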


We may recall again from~\Cref{subsec:single_letter_bounds} and~\eqref{eq:empgen_decomposition} that $\empgen(w,\tilde{s}, u) = \frac{1}{n} \sum_{i=1}^n \empgen(w,\tilde{z}_{i,0}, \tilde{z}_{i,1}, u_i)$, where $\empgen(w,\tilde{z}_{i,0}, \tilde{z}_{i,1}, u_i) = \ell(w,\tilde{z}_{i,{1-u_i}}) - \ell(w, \tilde{z}_{i,u_i})$ is the single-letter empirical generalization error. Hence, we may apply \Cref{th:wasserstein_full_rs} to each term to obtain a generalization of our~\citep[Theorem 3]{rodriguez2021tighter}.

\begin{theorem}
\label{th:wasserstein_single_letter_rs}
Consider an $L$-Lipschitz loss function $\ell(\cdot, z)$ with respect to some metric $\rho$ for all $z \in \cZ$. Then, for every data-independent Markov kernel $\bQ$ from $\cZ^{2}$ to distributions on $\cW$
\begin{equation*}
    \bE \big[ \gen(W,S) \big] \leq \frac{2L}{n} \sum_{i=1}^n \bE \mleft[ \bW_\rho(\bP_W^{\tilde{Z}_{i,0},\tilde{S}_{i,1}, U_i} , \bQ(\tilde{Z}_{i,0},\tilde{Z}_{i,1}))  \mright]. 
\end{equation*}
\end{theorem}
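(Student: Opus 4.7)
The plan is to prove this single-letter bound by first decomposing the empirical generalization error into its single-letter contributions, and then applying to each one the same two-sided Kantorovich--Rubinstein argument used in the proof of~\Cref{th:wasserstein_full_rs}, but now localized to the $i$-th coordinate only. Concretely, I will start from the decomposition~\eqref{eq:empgen_decomposition}, which together with $\bE[\gen(W,S)] = \bE[\empgen(W,\tilde{S},U)]$ and linearity of expectation gives
\begin{equation*}
    \bE \big[ \gen(W,S) \big] = \frac{1}{n} \sum_{i=1}^n \bE \big[ \empgen(W,\tilde{Z}_{i,0},\tilde{Z}_{i,1},U_i) \big].
\end{equation*}
Hence it suffices to bound each term on the right hand side by $2L \,\bE[\bW_\rho(\bP_W^{\tilde{Z}_{i,0},\tilde{Z}_{i,1},U_i},\bQ(\tilde{Z}_{i,0},\tilde{Z}_{i,1}))]$.

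Next, I would fix an index $i$ together with a realization $\tilde{z}_{i,0},\tilde{z}_{i,1}$, and introduce an auxiliary random hypothesis $W'$ drawn from $\bQ(\tilde{z}_{i,0},\tilde{z}_{i,1})$ that is independent of $U_i$. The key observation is that, conditioned on $\tilde{Z}_{i,0}=\tilde{z}_{i,0},\tilde{Z}_{i,1}=\tilde{z}_{i,1}$, the quantity $\bE[\ell(W',\tilde{z}_{i,1-U_i}) - \ell(W',\tilde{z}_{i,U_i})]$ vanishes, because $U_i$ is uniform on $\{0,1\}$ and independent of $W'$, so both expectations reduce to $\tfrac{1}{2}\bE[\ell(W',\tilde{z}_{i,0})] + \tfrac{1}{2}\bE[\ell(W',\tilde{z}_{i,1})]$. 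This lets me rewrite
\begin{align*}
\bE^{\tilde{Z}_{i,0},\tilde{Z}_{i,1}} \big[ \empgen(W,\tilde{Z}_{i,0},\tilde{Z}_{i,1},U_i) \big]
= \; & \bE^{\tilde{Z}_{i,0},\tilde{Z}_{i,1}} \big[ \ell(W,\tilde{Z}_{i,1-U_i}) - \ell(W',\tilde{Z}_{i,1-U_i}) \big] \\
& + \bE^{\tilde{Z}_{i,0},\tilde{Z}_{i,1}} \big[ \ell(W',\tilde{Z}_{i,U_i}) - \ell(W,\tilde{Z}_{i,U_i}) \big].
\end{align*}

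I would then further condition on $U_i = u_i$. For each $u_i \in \{0,1\}$ the function $w \mapsto \ell(w, \tilde{z}_{i,1-u_i})$ is $L$-Lipschitz with respect to $\rho$ (and similarly for $w \mapsto \ell(w,\tilde{z}_{i,u_i})$), so the Kantorovich--Rubinstein duality of~\Cref{lemma:kantorovich_rubinstein_duality}, applied in the form of~\eqref{eq:kantorovich_rubinstein_duality_rv} to the pair $W \sim \bP_W^{\tilde{z}_{i,0},\tilde{z}_{i,1},u_i}$ and $W' \sim \bQ(\tilde{z}_{i,0},\tilde{z}_{i,1})$, bounds each of the two differences by $L\,\bW_\rho(\bP_W^{\tilde{z}_{i,0},\tilde{z}_{i,1},u_i},\bQ(\tilde{z}_{i,0},\tilde{z}_{i,1}))$. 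Note that the symmetry of the Wasserstein distance in its arguments is what lets me handle the second difference with the roles of $W$ and $W'$ swapped. Summing these two bounds, then averaging over $U_i$, and finally taking the outer expectation over $\tilde{Z}_{i,0},\tilde{Z}_{i,1}$ yields the desired per-index inequality; summing over $i$ and dividing by $n$ concludes the proof.

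The only genuinely delicate point is the independence bookkeeping: one must make sure that, when conditioning on $\tilde{Z}_{i,0},\tilde{Z}_{i,1}$, the auxiliary $W'$ drawn from the data-independent Markov kernel $\bQ(\tilde{Z}_{i,0},\tilde{Z}_{i,1})$ is indeed independent of $U_i$, which is precisely what ``data-independent Markov kernel from $\cZ^{2}$'' guarantees. Everything else is a direct transplant of the argument for~\Cref{th:wasserstein_full_rs} to the single-letter decomposition, with Jensen's inequality implicitly absent since we write the bound as a plain sum of expectations rather than pulling expectations inside concave maps.
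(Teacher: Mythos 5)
Your proposal is correct and follows essentially the same route as the paper: the paper proves \Cref{th:wasserstein_single_letter_rs} by decomposing the empirical generalization error via~\eqref{eq:empgen_decomposition} and applying the argument of \Cref{th:wasserstein_full_rs} (decoupled $W'$ whose single-letter empirical generalization error vanishes in conditional expectation, followed by two applications of the Kantorovich--Rubinstein duality and the symmetry of $\bW_\rho$, giving the factor $2L$) to each coordinate, which is exactly what you spell out with the correct conditioning on $(\tilde{Z}_{i,0},\tilde{Z}_{i,1},U_i)$ and the independence of $W' \sim \bQ(\tilde{Z}_{i,0},\tilde{Z}_{i,1})$ from $U_i$.
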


Similarly, we may recall from~\Cref{subsec:random_subset_bounds} how $\empgen(w,\tilde{s}, u) = \bE\big[ \emprisk(w, s_{\textnormal{ghost}, J}) \big] - \bE \big[ \emprisk(w,s_J) \big]$ for every hypothesis $w \in \cW$, super sample $\tilde{s} \in \cZ^{n \times 2}$, and indices $u \in \{ 0, 1 \}^n$ if $J$ is a uniformly distributed random subset of $[n]$ such that $|J| = m$. Therefore, we may apply \Cref{th:wasserstein_full_rs,th:wasserstein_single_letter_rs} to the expected generalization error $\bE^{R=r, J=j,\tilde{S}=s, U_{j^c}=u_{j^c}}[\gen(W,\tilde{s}_j, U_j)]$ and take the expectation with respect to the remaining random objects to obtain an extension of our~\citep[Theorem 4]{rodriguez2021tighter}.

\begin{theorem}
\label{th:wasserstein_random_subset_rs}
Consider an $L$-Lipschitz loss function $\ell(\cdot, z)$ with respect to some metric $\rho$ for all $z \in \cZ$. Also consider a random subset $J \subseteq [n]$ such that $|J| = m$, which is uniformly distributed and independent of $W$ and $S$, and a random variable $R$ that is independent of $S$. Then, for every Markov kernel $\bQ$ from $\cZ^{n \times 2} \otimes \{0, 1 \}^{n-m} \otimes \cR$ to distributions on $\cW$
\begin{align*}
    \bE \big[ \gen(W,S) \big] &\leq 2L \bE \mleft[ \bW_\rho(\bP_W^{\tilde{S},U,R}, \bQ(\tilde{S},U_{J^c},R))  \mright]  \textnormal{ and} \\
    \bE \big[ \gen(W,S) \big] &\leq \frac{2L}{m} \bE \mleft[ \sum_{i \in J} \bW_\rho(\bP_W^{\tilde{S}, U_{J^c}, U_i, R} , \bQ(\tilde{S}, U_{J^c}, R)) \mright].
\end{align*}
\end{theorem}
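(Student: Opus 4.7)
The plan is to follow the hint sketched just before the theorem: condition on the auxiliary random objects that are \emph{not} indexed by the random subset $J$, apply the already-established randomized-subsample Wasserstein bounds \Cref{th:wasserstein_full_rs,th:wasserstein_single_letter_rs} to the resulting $m$-sample subproblem, and then take the outer expectation. The conceptual point is that once we fix $R=r$, $J=j$, $\tilde{S}=\tilde{s}$, and $U_{J^c}=u_{j^c}$, the only remaining randomness over the training set lies in $U_j\in\{0,1\}^m$, and the problem has exactly the form of the randomized-subsample setting with supersample $\tilde{s}_j\in\cZ^{m\times 2}$ and indices $U_j$.

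First I would invoke the identity recalled in \Cref{subsec:random_subset_bounds}, namely $\empgen(w,\tilde{s},u) = \bE[\emprisk(w,s_{\textnormal{ghost},J})] - \bE[\emprisk(w,s_J)]$ for every $w,\tilde{s},u$ with $|J|=m$ uniform, and combine it with $\bE[\gen(W,S)] = \bE[\empgen(W,\tilde{S},U)]$ to write
\begin{equation*}
    \bE[\gen(W,S)] = \bE\!\left[\bE^{R,J,\tilde{S},U_{J^c}}\!\big[\empgen(W,\tilde{S}_J,U_J)\big]\right].
\end{equation*}
Thus it suffices to bound the inner conditional expectation for each realization $(r,j,\tilde{s},u_{j^c})$. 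Conditioning on these variables fixes the data-independent Markov kernel $\bQ(\tilde{s},u_{j^c},r)$ to an ordinary distribution on $\cW$, and the conditional law of the algorithm's output $W$ given the remaining randomness $U_j$ defines a valid (randomized) learner on the subproblem.

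For the first bound I would apply \Cref{th:wasserstein_full_rs} to this conditional subproblem with $\bQ(\tilde{s},u_{j^c},r)$ as the reference distribution, obtaining
\begin{equation*}
    \bE^{R=r,J=j,\tilde{S}=\tilde{s},U_{J^c}=u_{j^c}}[\empgen(W,\tilde{s}_j,U_j)] \leq 2L\,\bE^{R=r,J=j,\tilde{S}=\tilde{s},U_{J^c}=u_{j^c}}\!\left[\bW_\rho\!\left(\bP_W^{\tilde{S},U,R},\, \bQ(\tilde{s},u_{j^c},r)\right)\right],
\end{equation*}
after observing that conditioning on $(R,J,\tilde{S},U_{J^c},U_J)$ is equivalent to conditioning on $(\tilde{S},U,R)$ because $J$ only indexes a partition of $U$. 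Taking the outer expectation over $(R,J,\tilde{S},U_{J^c})$ then yields the first inequality. For the second bound I would repeat the same argument but invoke \Cref{th:wasserstein_single_letter_rs} instead; the single-letter sum is now over $i\in j$, and since $\tilde{Z}_{i,0},\tilde{Z}_{i,1}$ are already components of the conditioning variable $\tilde{S}$, the single-letter Wasserstein terms collapse to $\bW_\rho(\bP_W^{\tilde{S},U_{J^c},U_i,R},\bQ(\tilde{S},U_{J^c},R))$, and taking the outer expectation finishes the argument.

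The main obstacle, as usual with this kind of conditioning-and-lifting argument, is not conceptual but measure-theoretic: one has to verify that the regular conditional distributions $\bP_W^{\tilde{S},U_{J^c},R}$ and $\bP_W^{\tilde{S},U,R}$ are well defined on the standard Borel spaces of \Cref{subsec:polish_borel_spaces}, that the map assigning to $(\tilde{s},u_{j^c},r)$ the Wasserstein distance in the inner bound is jointly measurable, and that Fubini/tower properties apply so that the outer expectation can be exchanged with the bounds provided by \Cref{th:wasserstein_full_rs,th:wasserstein_single_letter_rs}. Once these standard regularity points are handled, the rest of the proof is a direct transcription of the two previously proved theorems into the conditional setting.
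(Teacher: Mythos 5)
Your proposal is correct and follows essentially the same route as the paper: the paper also uses the identity $\empgen(w,\tilde{s},u)=\bE[\emprisk(w,s_{\textnormal{ghost},J})]-\bE[\emprisk(w,s_J)]$ from \Cref{subsec:random_subset_bounds}, applies \Cref{th:wasserstein_full_rs,th:wasserstein_single_letter_rs} to the conditional expectation given $R=r$, $J=j$, $\tilde{S}=\tilde{s}$, $U_{j^c}=u_{j^c}$, and then takes the expectation over the remaining random objects. Your identification of the conditional laws (conditioning on $(R,J,\tilde{S},U_{J^c},U_J)$ amounting to $(\tilde{S},U,R)$) and the collapse of the single-letter terms matches the statement exactly, so no gap remains beyond the routine measurability points you already flag.
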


In particular, when $m=1$ both equations in~\Cref{th:wasserstein_random_subset_rs} coincide and 
\begin{equation*}
    \bE \big[ \gen(W,S) \big] \leq 2L \bE \mleft[ \bW_\rho(\bP_W^{\tilde{S},U,R}, \bQ(\tilde{S},U^{-J},R))  \mright] 
\end{equation*}

Now that we derived Wasserstein distance-based bounds analogous to those based in the relative entropy from~\Cref{sec:bounds_using_mutual_information,sec:bounds_using_conditional_mutual_information,sec:random_subset_and_single_letter}, it is important to understand how they relate to each other. Next, in~\Cref{subsec:comparison_of_wasserstein_bounds}, we compare the different Wasserstein distance bounds to each other; in~\Cref{subsec:implications_bounds_using_mutual_information}, we show the implications that these bounds have on the bounds based on the relative entropy (and therefore the mutual information) and their relationship with them; and in~\Cref{subsec:bounds_using_f_divergences}, we describe how these bounds spur new bounds based on other $f$-divergences.

\subsection{Comparison of the Bounds}
\label{subsec:comparison_of_wasserstein_bounds}

As with the mutual information-based bounds, we note that the single-letter Wasserstein distance-based bounds (for example~\Cref{th:wasserstein_single_letter}) are tighter than the bounds considering the full training set (for example~\Cref{th:wasserstein_full}) and the random-subset bounds (for example~\Cref{th:wasserstein_random_subset}). However, now we see that the random-subset bounds are tighter than twice the bounds considering the full training set. This discrepancy with the comparison of the mutual information bounds from~\Cref{subsec:comparison_of_the_bounds} may come from the fact that now all bounds can be compared more fairly, in the sense that now the random-subset bounds do not need to be weakened with Jensen's inequality for the comparison.
Comparing these bounds was the purpose of our~\citep[Appendix D]{rodriguez2021tighter}. A schematic of the relationships between the different bounds is given in~\Cref{fig:comparison_wasserstein_bounds}, albeit with the simplification that the Markov kernels $\bQ$ are assumed to follow from smoothing the algorithm's distribution, for example $\bQ = \bP_W^S \circ \bP_S$ or $\bQ(S_{J^c}) = \bP_W^S \circ \bP_{S_J}$.

\begin{figure}
    \centering
    \begin{tikzpicture}[scale=0.8, transform shape]
		\node [align=center] at (3.5,4) {\Cref{th:wasserstein_single_letter}\\ $\frac{L}{n}\sum_{i=1}^n \bE \big[ \bW_\rho(\bP_W^{Z_i}, \bP_W) \big]$};	
		
		\node [align=center, rotate=-90] at (3.5,3) {$\leq$};
			
		\node [align=center] at (3.5,0) {\Cref{th:wasserstein_full}~\citep{wang2019information} \\ $L \bE \big[ \bW_\rho(\bP_W^S, \bP_W) \big]$};
	
		\node [align=center, rotate=-90] at (3.5,1) {$\leq$};
        \node [align=center] at (4.1,1) {\footnotesize{$(\times 2)$}};
	
		\node [align=center] at (3.5,2) {\Cref{th:wasserstein_random_subset}\\ $L \bE \big[ \bW_\rho(\bP_W^{\tilde{S}}, \bP_W^{S_{J^c}}) \big]$};	

        \draw [densely dotted] (5.5,3.75) -- (7,3.75);
        \draw [densely dotted] (7,3.75) -- (7,2);
        \node [align=center, rotate=-90] at (7,2-0.25) {$\leq$};
        \draw [densely dotted] (7,1.5) -- (7,-0.25);
        \draw [densely dotted] (5,-0.25) -- (7,-0.25);
		
		\node [align=center] at (-4,4) {\Cref{th:wasserstein_single_letter_rs} \\ $\frac{2L}{n}\sum_{i=1}^n \bE \big[ \bW_\rho(\bP_W^{\tilde{Z}_{i,0}, \tilde{Z}_{i,1}, U_i}, \bP_W^{\tilde{Z}_{i,0}, \tilde{Z}_{i,1}}) \big]$};		
		
		\node [align=center] at (-0.25,4-0.25) {$\leq$};
		\node [align=center] at (-0.25,2-0.25-0.5){\footnotesize{$(\times 4)$}};
		
		\node [align=center, rotate=-90] at (-4,3) {$\leq$};	
		
		\node [align=center] at (-4,0) {\Cref{th:wasserstein_full_rs}\\ $2L \bE \big[ \bW_\rho(\bP_W^{\tilde{S}, U}, \bP_W^{\tilde{S}}) \big]$};
		
		\node [align=center] at (-0.25,2-0.25) {$\leq$};
		\node [align=center] at (-0.25,4-0.25-0.5) {\footnotesize{$(\times 4)$}};
	
		\node [align=center, rotate=-90] at (-4,1) {$\leq$};
        \node [align=center] at (-3.4,1) {\footnotesize{$(\times 2)$}};
	
		\node [align=center] at (-4,2)          {\Cref{th:wasserstein_random_subset_rs} \\ 
        $2L \bE \big[ \bW_\rho(\bP_W^{\tilde{S}, U}, \bP_W^{\tilde{S},U_{J^c}})\big]$};	
		
		\node [align=center] at (-0.25,0-0.25) {$\leq$};
		\node [align=center] at (-0.25,0-0.25-0.5) {\footnotesize{$(\times 4)$}};

        \draw [densely dotted] (-7,3.75) -- (-7.5,3.75);
        \draw [densely dotted] (-7.5,3.75) -- (-7.5,2);
        \node [align=center, rotate=-90] at (-7.5,2-0.25) {$\leq$};
        \draw [densely dotted] (-7.5,1.5) -- (-7.5,-0.25);
        \draw [densely dotted] (-5.75, -0.25) -- (-7.5,-0.25);
        
	\end{tikzpicture}
    \caption{Summary of the comparison between the full, the single-letter, and the random-subset bounds based on the Wasserstein distance of both the standard and the randomized-subsample setting for $L$-Lipschitz losses.}
    \label{fig:comparison_wasserstein_bounds}
\end{figure}
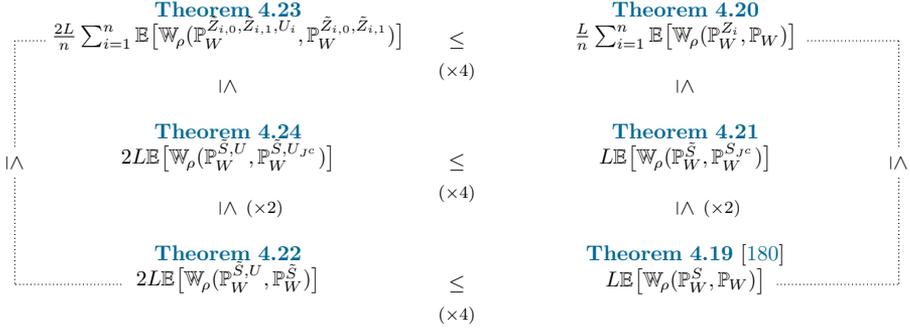

\subsubsection{Comparison in the Standard Setting}

In the standard setting, the relationship between the different bounds can be summarized with the following proposition.

\begin{proposition}
\label{prop:relationship_wasserstein}
Consider a random subset of indices $J \subseteq [n]$ such that $|J| = m$. Then, for every data independent distribution $\bQ$ on $\cW$, and every Markov kernel $\bQ(\cdot)$ from $\cZ^{n-m}$ to distributions on $\cW$ such that $\bQ = \bQ(S_{J^c}) \circ \bP_{S_{J^c}}$
\begin{align*}
    \frac{1}{n} \sum_{i=1}^n \bE \big[ \bW_\rho(\bP_W^{Z_i}, \bQ) \big] &\leq \bE \big[ \bW_\rho(\bP_W^{S}, \bQ) \big], \\
    \frac{1}{n} \sum_{i=1}^n \bE \big[ \bW_\rho(\bP_W^{Z_i}, \bQ) \big] &\leq \bE \big[ \bW_\rho(\bP_W^{S}, \bQ(S_{J^c}) \big], \textnormal{ and} \\
    \bE \big[ \bW_\rho(\bP_W^{S}, \bQ(S_{J^c}) \big] &\leq 2 \bE \big[ \bW_\rho(\bP_W^{S}, \bQ) \big] + \bE \big[ \bW_\rho(\bP_W, \bQ) \big].
\end{align*}
\end{proposition}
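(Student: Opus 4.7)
The plan is to prove each of the three inequalities by relying on two fundamental properties of the Wasserstein distance: joint convexity in its arguments (i.e.\ $\bW_\rho(\bE_X[\mu_X], \bE_X[\nu_X]) \leq \bE_X[\bW_\rho(\mu_X, \nu_X)]$) and the triangle inequality on the Wasserstein space $\cP_\rho(\cW)$. The key structural observation is that $\bP_W^{Z_i} = \bP_W^S \circ \bP_{S^{-i}}$, so the conditional distribution of $W$ given $Z_i = z_i$ is the mixture $\bE_{S^{-i}}[\bP_W^{S=(s^{-i}, z_i)}]$; similarly, $\bQ$ is a mixture of the data-dependent $\bQ(S_{J^c})$ by assumption, and $\bP_W$ is a mixture of $\bP_W^S$ over $\bP_S$.

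For the first inequality, I would fix an index $i$ and a realization $z_i$, then write $\bP_W^{Z_i = z_i}$ and the constant distribution $\bQ = \bE_{S^{-i}}[\bQ]$ as convex combinations indexed by $S^{-i}$. Joint convexity of $\bW_\rho$ immediately gives $\bW_\rho(\bP_W^{Z_i = z_i}, \bQ) \leq \bE_{S^{-i}}[\bW_\rho(\bP_W^S, \bQ)]$, and taking expectation over $Z_i$ followed by averaging over $i \in [n]$ yields (1).

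For the second inequality, I would adapt the same argument, exploiting the averaging constraint $\bQ = \bQ(S_{J^c}) \circ \bP_{S_{J^c}}$. Fix a subset $j$ of size $m$ and an index $i \in j$; then $i \notin j^c$, so $S_{j^c} \subseteq S^{-i}$ and $\bQ = \bE_{S^{-i}}[\bQ(S_{j^c})]$ since $\bQ(S_{j^c})$ does not depend on the extra coordinates in $S^{-i} \setminus S_{j^c}$. Convexity then gives $\bE[\bW_\rho(\bP_W^{Z_i}, \bQ)] \leq \bE[\bW_\rho(\bP_W^S, \bQ(S_{j^c}))]$ for every $j$ containing $i$. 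Summing over pairs $(i, j)$ with $i \in j$, swapping the order of summation, and invoking $m \binom{n}{m} = n \binom{n-1}{m-1}$ (which encodes the uniform distribution of $J$) produces the averaged statement (2).

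For the third inequality, I would apply the triangle inequality twice, first routing through $\bQ$ and then through $\bP_W$, to get
\begin{equation*}
\bW_\rho(\bP_W^S, \bQ(S_{J^c})) \leq \bW_\rho(\bP_W^S, \bQ) + \bW_\rho(\bQ, \bP_W) + \bW_\rho(\bP_W, \bQ(S_{J^c})).
\end{equation*}
Taking expectation already produces the terms $\bE[\bW_\rho(\bP_W^S, \bQ)]$ and $\bE[\bW_\rho(\bP_W, \bQ)]$. The remaining term $\bE[\bW_\rho(\bP_W, \bQ(S_{J^c}))]$ must be folded into the second copy of $\bE[\bW_\rho(\bP_W^S, \bQ)]$; the plan is to use joint convexity together with the two averaging relations $\bP_W = \bP_W^S \circ \bP_S$ and $\bQ = \bQ(S_{J^c}) \circ \bP_{S_{J^c}}$ to bound this residual by $\bE[\bW_\rho(\bP_W^S, \bQ)]$, with the factor of two on the right-hand side providing the budget needed to absorb the decoupling between the independent copies of $S$ and $S_{J^c}$ introduced by the convexity step. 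The hardest part is carrying out this final bound tightly enough that only the two stated terms remain and no residual decoupled Wasserstein term persists; this is where the specific averaging constraint on $\bQ(\cdot)$ is essential, since for unrelated kernels the inequality would generally fail.
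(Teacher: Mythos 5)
Your handling of the first two inequalities is correct and is essentially the paper's own proof in different language: the joint convexity of $\bW_\rho$ you invoke is exactly what the paper extracts from the Kantorovich--Rubinstein duality (\Cref{lemma:kantorovich_rubinstein_duality}) via $\bE[\sup_f (\cdot)] \geq \sup_f \bE[(\cdot)]$, applied to the smoothings $\bP_W^{Z_i} = \bE_{S^{-i}}\big[\bP_W^{S}\big]$ and $\bQ = \bE_{S_{j^c}}\big[\bQ(S_{j^c})\big]$; and for the second inequality both you and the paper reduce to the per-pair statement $\bE[\bW_\rho(\bP_W^{Z_i}, \bQ)] \leq \bE[\bW_\rho(\bP_W^{S}, \bQ(S_{j^c}))]$ for every $j \ni i$ and then average, your double count via $m\binom{n}{m} = n\binom{n-1}{m-1}$ being a cleaner phrasing of the paper's ordering argument.

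The gap is in the third inequality, and it sits exactly where you hesitated. After the triangle inequality through $\bQ$ and $\bP_W$, your plan needs $\bE[\bW_\rho(\bP_W, \bQ(S_{J^c}))] \leq \bE[\bW_\rho(\bP_W^{S}, \bQ)]$, but convexity cannot deliver it: writing $\bP_W = \bE_{S'}[\bP_W^{S'}]$ and pulling the expectation out only produces the decoupled quantity $\bE[\bW_\rho(\bP_W^{S'}, \bQ(S_{J^c}))]$ with $S'$ independent of $S_{J^c}$, and there is no mechanism to re-couple it. In fact the residual bound, and hence the third inequality as stated for arbitrary kernels obeying only the marginal constraint, is false: take $\cW = \bR$ with $\rho(w,w') = |w-w'|$, a data-independent randomized algorithm $\bP_W^{S=s} = \frac{1}{2}(\delta_{-1} + \delta_{+1})$ for all $s$ (so $\bP_W = \bP_W^S$), and a kernel $\bQ(s_{j^c})$ equal to $\delta_{+1}$ or $\delta_{-1}$ according to a fair-coin function of $s_{j^c}$, so that $\bQ = \frac{1}{2}(\delta_{-1} + \delta_{+1}) = \bP_W$. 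Then $\bE[\bW_\rho(\bP_W^{S}, \bQ(S_{J^c}))] = 1$ while $2\bE[\bW_\rho(\bP_W^{S}, \bQ)] + \bE[\bW_\rho(\bP_W, \bQ)] = 0$. The marginal constraint alone does not limit how strongly $\bQ(\cdot)$ itself depends on $S_{J^c}$, which is precisely what the residual term measures, so your closing remark that the constraint is "essential" is too optimistic: it is not sufficient.

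You should also know that the paper's own proof commits the very leap you were unsure about: it asserts $\bE[\bW_\rho(\bP_W, \bQ(S_{J^c}))] \leq \bE[\bW_\rho(\bP_W^{S}, \bQ)]$ "from the same arguments" as the earlier smoothing steps, which only give the decoupled term. The statement is rescued by restricting the kernel to a smoothing of the algorithm itself, e.g.\ $\bQ(S_{J^c}) = \bP_W^{S_{J^c}}$ (the choice the paper's comparison figure assumes): then, smoothing the second argument over $S_J$ conditionally on $S_{J^c}$ gives $\bE[\bW_\rho(\bP_W, \bP_W^{S_{J^c}})] \leq \bE[\bW_\rho(\bP_W, \bP_W^{S})]$, and the triangle inequality yields the claim with constant $2$. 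So the honest fix for your write-up is either to impose such a restriction on $\bQ(\cdot)$, or to state the third inequality with the residual term $\bE[\bW_\rho(\bP_W, \bQ(S_{J^c}))]$ left explicit.
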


The first inequality in~\Cref{prop:relationship_wasserstein} is easy to prove by noting that $\bP_W^{Z_i}$ is a smooth version of $\bP_W^S$ for all $i \in [n]$, namely $\bP_{W}^{Z_i} = \bP_W^S \circ \bP_W^{S^{-i}}$. Let $W'$ be the random variable independent of the training data $S$  distributed according to $\bQ$. Then $\bE^S[f(W')] = \bE[f(W')]$. Therefore, employing the Kantorovich--Rubinstein duality from~\Cref{lemma:kantorovich_rubinstein_duality} we see that for every $i \in [n]$
\begin{align*}
    \bE \mleft[ \bW_\rho(\bP_W^S, Q) \mright] &= \bE \mleft[ \sup_{f \in 1\textnormal{-}\mathrm{Lip}(\rho)} \mleft \{ \bE^S[f (W)] - \bE[f(W')] \mright\} \mright] \\
    &\geq  \bE \mleft[ \sup_{f \in 1\textnormal{-}\mathrm{Lip}(\rho)} \mleft \{ \bE^{Z_i}[f (W)] - \bE[f(W')] \mright\} \mright] = \bE \mleft[ \bW_\rho(\bP_W^{Z_i}, Q) \mright],
\end{align*}
where the inequality follows from the fact that $\bE[\sup_f f(X)] \geq \sup_f \bE[f(X)]$.

The second inequality in~\Cref{prop:relationship_wasserstein} follows similarly. First, note that the statement can be re-written as
\begin{equation*}
    \frac{1}{n} \sum_{i=1}^n \bE \big[ \bW_\rho(\bP_W^{Z_i}, \bQ) \big] \leq \frac{1}{\binom{n}{m}} \sum_{j \in \cJ} \bE \big[ \bW_\rho(\bP_W^{S}, \bQ(S_{j^c})) \big]
\end{equation*}
writing the expectation with respect to $J$ explicitly. Then, we can prove this statement by proving the stronger statement that for all $i \in [n]$ and all $j \subseteq [n]$ such that $i \in j$
\begin{equation*}
    \bE \big[ \bW_\rho(\bP_W^{Z_i}, \bQ) \big] \leq \bE \big[ \bW_\rho(\bP_W^{S}, \bQ(S_{j^c})) \big].
\end{equation*}
This statement is stronger since, without loss of generality, we could consider the instances $Z_i$ ordered so that the sequence $\{ \bW_\rho(\bP_W^{Z_i}, \bQ) \}_{i=1}^n$ is non-increasing. Then, this statement would make sure that $\bE \big[ \bW_\rho(\bP_W^{Z_1}, \bQ) \big]$ is smaller than $\big[ \bW_\rho(\bP_W^{S}, \bQ(S_{j^c})) \big]$ for the $\binom{n-1}{m-1}$ subsets $j$ in which sample $Z_1$ appears; that $\bE \big[ \bW_\rho(\bP_W^{Z_2}, \bQ) \big]$ is smaller than $\big[ \bW_\rho(\bP_W^{S}, \bQ(S_{j^c})) \big]$ for the subsets $j$ in which sample $Z_2$ appears and $Z_1$ does not; and so on therefore proving the original statement. 

Now, we may proceed as before using that $\bP_W^{Z_i}$ is a smoothed version of $\bP_W^S$ and that $\bQ$ is a smoothed version of $\bQ(S_{j^c})$, namely $\bQ = \bQ(S_{j^c}) \circ \bP_{S_{j^c}}$. Employing the Kantorovich--Rubinstein duality from~\Cref{lemma:kantorovich_rubinstein_duality}
\begin{align*}
    \bE \big[ \bW_\rho(\bP_W^{S}, \bQ(S_{j^c})) \big] &= \bE  \mleft[ \sup_{f \in 1\textnormal{-}\mathrm{Lip}(\rho)} \mleft \{ \bE^S[f(W)] - \bE^{S_{j^c}}[f(W')]\mright \} \mright] \\
    &\geq \bE  \mleft[ \sup_{f \in 1\textnormal{-}\mathrm{Lip}(\rho)} \mleft \{ \bE^{Z_i}[f(W)] - \bE[f(W')] \mright \} \mright] = \bE \big[ \bW_\rho(\bP_W^{Z_i}, \bQ) \big],
\end{align*}
where we let $W'$ be the random variable distributed according to $\bQ$ and with joint distribution with $S_{j^c}$ equal to $\bQ(S_{j^c}) \otimes \bP_{S^j}$, and where we also used that  $\bE[\sup_f f(X)] \geq \sup_f \bE[f(X)]$.

Finally, the third inequality from~\Cref{prop:relationship_wasserstein} follows from employing the triangle inequality of the Wasserstein distances~\citep[Chapter 6]{villani2009optimal}. Note that we cannot employ the smoothing argument as above since while $\bQ$ is a smoothed version of $\bQ(S_{J^c})$, including the expectation with respect to $\bP_{S_{J^c}}$ inside the supremum of $\bW_\rho(\bP_W^S, \bQ(S_{J^c})$ would also smooth the first term $\bP_W^S$, leading to the inequality $\bE \big[\bW_\rho(\bP_W^S, \bQ(S_{J^c})) \big] \geq \bE \big[ \bW_\rho(\bP_W^{S_J}, \bQ) \big]$, while at the same time, and for the same arguments, $\bE \big[ \bW_\rho(\bP_W^S, \bQ) \big] \geq \bE \big[\bW_\rho(\bP_W^{S_J}, \bQ)  \big]$. Employing the triangle inequality we have that
\begin{equation*}
    \bW_\rho(\bP_W^S, \bQ(S_{J^c})  \leq  \bW_\rho(\bP_W^S, \bQ)  +  \bW_\rho(\bQ, \bP_W)  +  \bW_\rho(\bP_W, \bQ(S_{J^c}) ).
\end{equation*}
From the same arguments outlined above, $\bP_W$ is a smoothed version of $\bP_W^S$ and $\bQ$ is a smoothed version of $\bQ(S_{J^c})$ it follows that 
\begin{equation*}
    \bE \big[ \bW_\rho(\bP_W^S, \bQ(S_{J^c}) \big] \leq 2  \bE \big[ \bW_\rho(\bP_W^S, \bQ) \big] + \bE \big[ \bW_\rho(\bP_W, \bQ)  \big].
\end{equation*}

An interesting thing to notice is that if we choose a Markov kernel $\bQ(\cdot)$ such that $\bQ = \bP_W$, then the term $\bW_\rho(\bQ, \bP_W)$ disappears. For example, choosing $\bQ(S_{J^c}) = \bP_W^{S_{J^c}}$. Moreover, in case that the distance between $\bP_W$ and $\bQ$ cannot be quantified, since $\bP_W$ is a smoothed version of $\bP_W^S$, we can re-use the previous arguments to note that
\begin{equation*}
    \bE \big[ \bW_\rho(\bP_W^S, \bQ(S_{J^c}) \big] \leq 3  \bE \big[ \bW_\rho(\bP_W^S, \bQ) \big].
\end{equation*}

\subsubsection{Comparison in the Randomized-Subsample Setting}

In the randomized-subsample setting, the relationship between the different bounds can be summarized with a proposition analogous to~\Cref{prop:relationship_wasserstein}.

\begin{proposition}
\label{prop:relationship_wasserstein_rs}
Consider a random subset of indices $J \subseteq [n]$ such that $|J| = m$. Further consider a Markov kernel $\bQ$ from $\cZ^{n \times 2} \otimes \{ 0, 1 \}^{n-m}$ to distributions on $\cW$ and let us abuse notation to also denote by $\bQ$ the Markov kernels from $\cZ^{n \times 2}$ and from $\cZ^2$ to distributions on $\cW$ defined as $\bQ(\tilde{s}) = \bQ(\tilde{s},U_{J^c}) \circ \bP_{U_{J^c}}$ and $\bQ(\tilde{z}_{i,0},\tilde{z}_{i,1}) = \bQ(\tilde{S}) \circ \bP_{\tilde{S} \setminus \{ \tilde{z}_{i,0}, \tilde{z}_{i,1} \}}$. Then, for every such Markov kernel 
\begin{align*}
    \frac{1}{n} \sum_{i=1}^n \bE \big[ \bW_\rho(\bP_W^{\tilde{Z}_{i,0}, \tilde{Z}_{i,1}, U_i}, \bQ(\tilde{Z}_{i,0}, \tilde{Z}_{i,1})) \big] &\leq \bE \big[ \bW_\rho(\bP_W^{\tilde{S}, U}, \bQ(\tilde{S})) \big], \\
    \frac{1}{n} \sum_{i=1}^n \bE \big[ \bW_\rho(\bP_W^{\tilde{Z}_{i,0}, \tilde{Z}_{i,1}, U_i}, \bQ(\tilde{Z}_{i,0}, \tilde{Z}_{i,1})) \big] &\leq \bE \big[ \bW_\rho(\bP_W^{\tilde{S}, U}, \bQ(\tilde{S}, U_{J^c}) \big], \textnormal{ and} \\
    \bE \big[ \bW_\rho(\bP_W^{\tilde{S}, U}, \bQ(\tilde{S}, U_{J^c}) \big] &\leq 2 \bE \big[ \bW_\rho(\bP_W^{\tilde{S}, U}, \bQ(\tilde{S})) \big] + \bE \big[ \bW_\rho(\bP_W^{\tilde{S}}, \bQ(\tilde{S})) \big].
\end{align*}
\end{proposition}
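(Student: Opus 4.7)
The plan is to mirror the three-step proof of \Cref{prop:relationship_wasserstein}: every smoothing relationship in the standard setting has an analogue in the randomized-subsample setting, so each inequality is obtained by the same argument, with $\tilde{S}$ playing the role of $S$ and the indices $U$ playing an additional role. The three recurring ingredients will be the Kantorovich--Rubinstein duality (\Cref{lemma:kantorovich_rubinstein_duality}), the inequality $\bE[\sup_f f(X)] \geq \sup_f \bE[f(X)]$, and the joint convexity of $\bW_\rho$ together with the triangle inequality (since $\bW_\rho$ is a genuine metric on $\cP_\rho(\cW)$). The smoothings we will repeatedly exploit are: $\bP_W^{\tilde{Z}_{i,0},\tilde{Z}_{i,1},U_i}$ is the marginal of $\bP_W^{\tilde{S},U}$ over $(\tilde{S}\setminus\{\tilde{Z}_{i,0},\tilde{Z}_{i,1}\},U^{-i})$; $\bP_W^{\tilde{S}}$ is the $U$-marginal of $\bP_W^{\tilde{S},U}$; and the corresponding identities for $\bQ$ are given explicitly in the statement.

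For the first inequality, fix $i \in [n]$, let $W\sim\bP_W^{\tilde{S},U}$ and $W'\sim\bQ(\tilde{S})$ conditionally on $(\tilde{S},U)$, and apply Kantorovich--Rubinstein:
\begin{equation*}
    \bW_\rho\big(\bP_W^{\tilde{S},U},\bQ(\tilde{S})\big) = \sup_{f\in 1\textnormal{-}\mathrm{Lip}(\rho)}\big\{\bE^{\tilde{S},U}[f(W)] - \bE^{\tilde{S}}[f(W')]\big\}.
\end{equation*}
Because $\bE^{\tilde{Z}_{i,0},\tilde{Z}_{i,1},U_i}[f(W)]$ and $\bE^{\tilde{Z}_{i,0},\tilde{Z}_{i,1}}[f(W')]$ arise by further marginalization, pulling the expectation across the supremum via $\bE[\sup_f\cdot]\geq\sup_f\bE[\cdot]$ yields the per-index bound $\bE[\bW_\rho(\bP_W^{\tilde{Z}_{i,0},\tilde{Z}_{i,1},U_i},\bQ(\tilde{Z}_{i,0},\tilde{Z}_{i,1}))]\leq\bE[\bW_\rho(\bP_W^{\tilde{S},U},\bQ(\tilde{S}))]$; averaging over $i$ finishes the first inequality.

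For the second inequality, I will rewrite the expectation over $J$ as $\frac{1}{\binom{n}{m}}\sum_j$ and prove the strengthened pointwise statement: for every $i\in[n]$ and every $j\subseteq[n]$ containing $i$,
\begin{equation*}
    \bE\big[\bW_\rho\big(\bP_W^{\tilde{Z}_{i,0},\tilde{Z}_{i,1},U_i},\bQ(\tilde{Z}_{i,0},\tilde{Z}_{i,1})\big)\big]\leq\bE\big[\bW_\rho\big(\bP_W^{\tilde{S},U},\bQ(\tilde{S},U_{j^c})\big)\big].
\end{equation*}
The argument is identical to the first step, with the one new observation that for $i\in j$ the kernel $\bQ(\tilde{Z}_{i,0},\tilde{Z}_{i,1})$ is the marginal of $\bQ(\tilde{S},U_{j^c})$ over $(\tilde{S}\setminus\{\tilde{Z}_{i,0},\tilde{Z}_{i,1}\},U_{j^c})$. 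As in \Cref{prop:relationship_wasserstein}, the pointwise bound is stronger than what is needed, because reordering the $\tilde{Z}_i$ so that the sequence of Wasserstein distances is non-increasing lets it be matched against the $\binom{n-1}{m-1}$ subsets $j$ containing each $i$; summing over $(i,j)$ recovers the average claim.

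For the third inequality, I will apply the triangle inequality through the two intermediate distributions $\bQ(\tilde{S})$ and $\bP_W^{\tilde{S}}$:
\begin{equation*}
    \bW_\rho\big(\bP_W^{\tilde{S},U},\bQ(\tilde{S},U_{J^c})\big) \leq \bW_\rho\big(\bP_W^{\tilde{S},U},\bQ(\tilde{S})\big) + \bW_\rho\big(\bQ(\tilde{S}),\bP_W^{\tilde{S}}\big) + \bW_\rho\big(\bP_W^{\tilde{S}},\bQ(\tilde{S},U_{J^c})\big).
\end{equation*}
After taking expectations, the first and second terms are exactly the two terms on the right-hand side of the proposition (the second using the symmetry of $\bW_\rho$). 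The remaining term $\bE[\bW_\rho(\bP_W^{\tilde{S}},\bQ(\tilde{S},U_{J^c}))]$ then has to be absorbed into $\bE[\bW_\rho(\bP_W^{\tilde{S},U},\bQ(\tilde{S}))]$ via the same smoothing argument used in the last paragraph of the proof of \Cref{prop:relationship_wasserstein}: $\bP_W^{\tilde{S}}$ smooths $\bP_W^{\tilde{S},U}$ by marginalizing $U$, and $\bQ(\tilde{S})$ smooths $\bQ(\tilde{S},U_{J^c})$ by marginalizing $U_{J^c}$. This accounts for the coefficient $2$ in front of the first term in the final bound.

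The main obstacle is precisely this last step: the triangle inequality leaves a cross-term whose two arguments are asymmetrically smoothed (with $U$ marginalized in the first argument but $U_{J^c}$ still fixed in the second), so the smoothing inequality has to be applied in exactly the right direction, combining convexity of $\bW_\rho$ in each argument with an $\bE[\sup_f\cdot]\geq\sup_f\bE[\cdot]$ step as in the standard-setting argument. Once this delicate book-keeping is done, the remaining two inequalities are nothing more than marginalization identities pushed through the Kantorovich--Rubinstein representation.
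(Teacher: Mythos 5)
Your handling of the first two inequalities is correct and is exactly the paper's argument: Kantorovich--Rubinstein duality (\Cref{lemma:kantorovich_rubinstein_duality}), the step $\bE[\sup_f(\cdot)] \geq \sup_f \bE[(\cdot)]$, the marginalization identities $\bP_W^{\tilde{Z}_{i,0},\tilde{Z}_{i,1},U_i} = \bP_W^{\tilde{S},U}\circ \bP_{\tilde{S}\setminus\{\tilde{Z}_{i,0},\tilde{Z}_{i,1}\},U^{-i}}$ and (for $i \in j$) $\bQ(\tilde{Z}_{i,0},\tilde{Z}_{i,1}) = \bQ(\tilde{S},U_{j^c})\circ\bP_{\tilde{S}\setminus\{\tilde{Z}_{i,0},\tilde{Z}_{i,1}\},U_{j^c}}$, and the $\binom{n-1}{m-1}$ reordering argument are precisely what the paper invokes ("almost verbose" to \Cref{prop:relationship_wasserstein}).

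The gap is in the third inequality, and it is exactly the step you defer as ``delicate book-keeping.'' After the triangle inequality you still owe the bound
\begin{equation*}
    \bE \big[ \bW_\rho(\bP_W^{\tilde{S}}, \bQ(\tilde{S},U_{J^c})) \big] \leq \bE \big[ \bW_\rho(\bP_W^{\tilde{S},U}, \bQ(\tilde{S})) \big],
\end{equation*}
which produces the factor $2$. This is not ``the same smoothing argument'': in the cross-term the smoothed law $\bP_W^{\tilde{S}}$ occupies the slot where the un-smoothed $\bP_W^{\tilde{S},U}$ sits on the right, and conversely for the $\bQ$'s. Convexity of $\bW_\rho$ and the $\bE[\sup]\geq\sup\bE$ trick only let you replace a mixture by its components in the direction that enlarges a bound; neither can reverse this pattern. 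In fact, for an arbitrary kernel satisfying only the stated consistency conditions the step (and the displayed third inequality) is false: take $\cW=\{-1,+1\}$, $\rho(w,w')=|w-w'|$, $n=2$, $m=1$, $\bP_W^{\tilde{S},U}=\frac{1}{2}\delta_{+1}+\frac{1}{2}\delta_{-1}$ independently of $(\tilde{S},U)$, and $\bQ(\tilde{s},u_{J^c})=\delta_{+1}$ or $\delta_{-1}$ according to the single bit $u_{J^c}$; then $\bQ(\tilde{s})=\bP_W^{\tilde{s}}=\frac{1}{2}\delta_{+1}+\frac{1}{2}\delta_{-1}$, the left-hand side equals $1$, while both terms on the right vanish. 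The step does go through for the canonical choice $\bQ(\tilde{S},U_{J^c})=\bP_W^{\tilde{S},U_{J^c}}$ (so $\bQ(\tilde{S})=\bP_W^{\tilde{S}}$), where KR duality plus pushing $\bE_{U_J}$ inside the supremum yields $\bE[\bW_\rho(\bP_W^{\tilde{S}},\bP_W^{\tilde{S},U_{J^c}})]\leq\bE[\bW_\rho(\bP_W^{\tilde{S}},\bP_W^{\tilde{S},U})]$. To be fair, the paper's own sketch of this third inequality rests on the same unproven assertion, so your plan faithfully mirrors it; but as written your proposal does not supply the missing derivation, and no derivation at the level of generality you promise (every admissible $\bQ$) can exist.
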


Similarly to the standard setting, the first two inequalities of~\Cref{prop:relationship_wasserstein_rs} follow by noting that $\bP_W^{\tilde{Z}_{i,0}, \tilde{Z}_{i,1}, U_i}$ is a smoothed version of $\bP_W^{\tilde{S}, U}$, that is, $\bP_W^{\tilde{Z}_{i,0}, \tilde{Z}_{i,1}, U_i} = \bP_{W}^{\tilde{S}, U} \circ \bP_{\tilde{S} \setminus \{\tilde{Z}_{i,0}, \tilde{Z}_{i,1} \}, U^{-i}}$; and that the Markov kernel $\bQ(\tilde{Z}_{i,0}, \tilde{Z}_{i,1})$ is a smooth version of $\bQ(\tilde{S})$ and therefore also of $\bQ(\tilde{S}, U_{J^c})$. The proof follows by employing the Kantorovich--Rubinstein duality from~\Cref{lemma:kantorovich_rubinstein_duality} and the fact that $\sup_f \bE[X] \leq \bE[ \sup_f f(X)]$ and it is almost verbose to those shown previously.

The third inequality from~\Cref{prop:relationship_wasserstein_rs} follows from employing the triangle inequality of the Wasserstein distances~\citep[Chapter 6]{villani2009optimal} similarly to what we did to prove~\Cref{prop:relationship_wasserstein}. In this case, we have that
\begin{equation*}
    \bW_\rho(\bP_W^{\tilde{S}, U}, \bQ(\tilde{S}, U_{J^c}))  \leq \bW_\rho(\bP_W^{\tilde{S}, U}, \bQ(\tilde{S})) + \bW_\rho(\bQ(\tilde{S}), \bP_W^{\tilde{S}} ) + \bW_\rho( \bP_W^{\tilde{S}}, \bQ(\tilde{S}, U_{J^c})). 
\end{equation*}
Using the smoothing arguments from this section, noting that $\bP_W^{\tilde{S}}$ and $\bQ(\tilde{S}, U_{J^c})$ are both smoothed versions of $\bP_{W}^{\tilde{S}, U}$ and $\bQ(\tilde{S}, U_{J^c})$ with respect to $\bP_U$ yields that
\begin{equation*}
    \bE \big[ \bW_\rho(\bP_W^{\tilde{S}, U}, \bQ(\tilde{S}, U_{J^c})) \big]  \leq 2 \bE \big[ \bW_\rho(\bP_W^{\tilde{S}, U}, \bQ(\tilde{S})) \big] + \bE \big[ \bW_\rho(\bP_W^{\tilde{S}},\bQ(\tilde{S}) ) \big].
\end{equation*}
Finally, we may note that choosing the Markov kernel $\bQ$ from $\cZ^n$ to distributions on $\cW$ such that $\bQ(\tilde{S}) = \bP_W^{\tilde{S}}$ makes the term $\bW_\rho(\bP_W^{\tilde{S}},\bQ(\tilde{S}) )$ disappear. Also, another smoothing argument with respect to $\bP_U$ reveals that
\begin{equation*}
    \bE \big[ \bW_\rho(\bP_W^{\tilde{S}, U}, \bQ(\tilde{S}, U_{J^c})) \big]  \leq 3 \bE \big[ \bW_\rho(\bP_W^{\tilde{S}, U}, \bQ(\tilde{S})) \big].
\end{equation*}

\subsubsection{Comparison Between the Settings}

Comparing the Wasserstein distance-based bounds from the standard and the randomized-subsample setting is simple if we note that $\bP_{W}^{\tilde{S}, U} = \bP_{W}^S$ a.s., that is, the distribution of the hypothesis when the supersample and the indices of the instances used for training are given is almost surely equal to the distribution of the hypothesis when the training samples are given. With this observation, the comparisons follow a combination of the triangle inequality and the smoothing technique shown in the previous two subsections. Their relationship can be summarized with the following proposition.

\begin{proposition}
\label{prop:comparison_between_setting_wasserstein}
    Consider a random subset of indices $J \subseteq [n]$ such that $J = m$. Further consider a Markov kernel $\bQ$ from $\cZ^{n \times 2} \times \{0, 1 \}^{n-m}$ to distributions on $\cW$. Furthermore, let us also denote by $\bQ$ any smoothing, or marginalization, of this Markov kernel. Then, for every such Markov kernel,
    \begin{align*}
        \bE \big[ \bW_\rho( \bP_W^{\tilde{S},U} \bQ(\tilde{S}))] &\leq 2 \bE \big[ \bW_{\rho} ( \bP_W^S, \bQ) \big] + \bE \big[ \bW_\rho( \bP_W^{\tilde{S}}, \bQ(\tilde{S}))], \\
        \bE \big[ \bW_\rho(\bP_W^{\tilde{Z}_{i,0}, \tilde{Z}_{i,1}, U_i}, \bQ(\tilde{Z}_{i,0}, \tilde{Z}_{i,1})) \big] &\leq 2 \bE \big[ \bW_\rho(\bP_W^{Z_i}, \bQ) \big] + \bE \big[ \bW_\rho(\bP_W^{\tilde{Z}_{i,0}, \tilde{Z}_{i,1}}, \bQ(\tilde{Z}_{i,0}, \tilde{Z}_{i,1})) \big], \textnormal{ and} \\
        \bE \big[ \bW_\rho(\bP_W^{\tilde{S}, U}, \bQ(\tilde{S}, U_{J^c}) \big] &\leq 2\bE \big[ \bW_\rho(\bP_W^{S}, \bQ(S_{J^c}) \big]  + \bE \big[ \bW_\rho(\bP_W^{\tilde{S}, U_{J^c}}, \bQ(\tilde{S}, U_{J^c}) \big].
    \end{align*}
\end{proposition}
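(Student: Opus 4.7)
The plan is to derive each of the three inequalities by combining the triangle inequality of the Wasserstein distance ($\bW_\rho$ is a metric on $\cP_\rho(\cW)$) with the key almost-sure identity $\bP_W^{\tilde{S},U}=\bP_W^S$, which holds because once the super sample $\tilde{S}$ and the indices $U$ are fixed, so is the training set $S$. These two ingredients are then combined with the smoothing technique already exploited in the proofs of \Cref{prop:relationship_wasserstein} and \Cref{prop:relationship_wasserstein_rs}: whenever $\bP_1$ is obtained from $\bP_2$ by marginalizing out a variable on which $\bQ$ does not depend, Kantorovich--Rubinstein duality (\Cref{lemma:kantorovich_rubinstein_duality}) together with $\sup_f \bE[f]\leq\bE[\sup_f f]$ yields $\bE[\bW_\rho(\bP_1,\bQ)]\leq\bE[\bW_\rho(\bP_2,\bQ)]$.

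For the first inequality I would insert two auxiliary measures via the triangle inequality,
\begin{equation*}
    \bW_\rho(\bP_W^{\tilde{S},U},\bQ(\tilde{S})) \leq \bW_\rho(\bP_W^{\tilde{S},U},\bQ) + \bW_\rho(\bQ,\bP_W^{\tilde{S}}) + \bW_\rho(\bP_W^{\tilde{S}},\bQ(\tilde{S})),
\end{equation*}
and take expectations. The first term equals $\bE[\bW_\rho(\bP_W^S,\bQ)]$ by the a.s.\ identity; the second term is bounded by the same quantity since $\bP_W^{\tilde{S}}$ is the smoothing of $\bP_W^{\tilde{S},U}$ over $U$ and the constant $\bQ$ does not depend on $U$; the third term is left as is. This produces the factor $2$ in front of $\bE[\bW_\rho(\bP_W^S,\bQ)]$.

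The second inequality is handled analogously at the single-coordinate level. The starting point is the single-coordinate analogue $\bP_W^{\tilde{Z}_{i,0},\tilde{Z}_{i,1},U_i}=\bP_W^{Z_i}$ a.s., which holds because conditioning on $(\tilde{Z}_{i,0},\tilde{Z}_{i,1},U_i)$ fixes $Z_i=\tilde{Z}_{i,U_i}$ and $W$ is independent of the ghost $\tilde{Z}_{i,1-U_i}$ given the training data. Inserting the intermediate measures $\bQ$ and $\bP_W^{\tilde{Z}_{i,0},\tilde{Z}_{i,1}}$ via the triangle inequality and using the smoothing technique to integrate out $U_i$ in the middle term yields the claim. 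The third inequality follows in the same way, with the intermediate measures chosen as $\bQ(S_{J^c})$ and $\bP_W^{\tilde{S},U_{J^c}}$: the latter is the smoothing of $\bP_W^{\tilde{S},U}=\bP_W^S$ a.s.\ over $U_J$, and $\bQ(S_{J^c})$ depends on $(\tilde{S},U_{J^c})$ but not on $U_J$, so the smoothing inequality gives $\bE[\bW_\rho(\bP_W^{\tilde{S},U_{J^c}},\bQ(S_{J^c}))]\leq \bE[\bW_\rho(\bP_W^S,\bQ(S_{J^c}))]$.

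Once the correct intermediate measures are identified the argument is entirely mechanical, so the main difficulty is notational rather than mathematical: the proposition overloads $\bQ$ to denote any marginalization of the base kernel, and at each step one must confirm both which marginalization is intended and that the measures flanking each auxiliary Wasserstein distance are either equal almost surely (to exploit $\bP_W^{\tilde{S},U}=\bP_W^S$) or related by a smoothing whose averaged-out variables do not appear in the other measure. No tool beyond the triangle inequality, the smoothing lemma, and the identification of conditional distributions is required.
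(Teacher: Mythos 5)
Your proposal is correct and follows essentially the same route as the paper: triangle inequality with the intermediate measures $\bQ$ and the $U$-smoothed posterior, the a.s.\ identity $\bP_W^{\tilde{S},U}=\bP_W^S$ (and its single-letter and random-subset analogues), and the Kantorovich--Rubinstein smoothing argument to absorb the middle term and produce the factor of $2$. The paper only writes out the first inequality and declares the other two analogous, so your explicit treatment of the second and third cases, including the identity $\bP_W^{\tilde{Z}_{i,0},\tilde{Z}_{i,1},U_i}=\bP_W^{Z_i}$ a.s., simply fills in details consistent with the paper's sketch.
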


Before showing these inequalities, note that they are all analogous to each other and that the second term on the right-hand side always disappears if the Markov kernel $\bQ$ is chosen to be the Markov kernel associated with smoothing the algorithm's Markov kernel $\bP_W^{\tilde{S}, U}$.

Since the proof of the three statements is analogous and almost verbose using the techniques outlined previously, we only write explicitly the proof of the first inequality. Employing the triangle inequality of the Wasserstein distances~\citep[Chapter 6]{villani2009optimal} we have that
\begin{equation}
\label{eq:triangle_ineq_wasserstein_rs}
    \bW_\rho(\bP_W^{\tilde{S},U}, \bQ(\tilde{S})) \leq \bW_\rho(\bP_W^{\tilde{S},U}, \bQ) + \bW_\rho(\bQ, \bP_W^{\tilde{S}}) + \bW_\rho(\bP_W^{\tilde{S}}, \bQ(\tilde{S})).
\end{equation}
Then, noting that $\bP_W^{\tilde{S}}$ is a smoothed version of $\bP_W^{\tilde{S},U}$ with respect to $\bP_U$ and that $\bP_W^{\tilde{S},U} = \bP_W^S$ a.s. results in the desired inequality
\begin{equation*}
    \bE \big[ \bW_\rho(\bP_W^{\tilde{S},U}, \bQ(\tilde{S})) \big] \leq \bE \big[ \bW_\rho(\bP_W^{S}, \bQ) \big] + \bE \big[ \bW_\rho(\bQ, \bP_W^{\tilde{S}}) \big].
\end{equation*}

Rather than just stating the comparison of the different bounds, we may inspect them a little further. To get a better understanding, let $\bQ(\tilde{S}) = \bP_W^{\tilde{S}}$. Instead of applying the triangle inequality twice in~\eqref{eq:triangle_ineq_wasserstein_rs}, we may just apply it once to see that, after taking expectations to both sides,
\begin{equation*}
    \bE \big[ \bW_\rho(\bP_W^{\tilde{S},U}, \bP_W^{\tilde{S}}) \big] \leq \bE \big[ \bW_\rho(\bP_W^S, \bP_W) \big] + \bE \big[ \bW_\rho(\bP_W^{\tilde{S}}, \bP_W)\big].
\end{equation*}

The term on the left-hand side tells us what the difference is between the hypothesis distribution when the training set is known and when the supersample is known but not which indices are used for training. This describes how much the algorithm will change the distribution of its output when changing the samples used for training from a given set of identically distributed instances. The first term on the right-hand side is already known to us, it tells us the difference between the hypothesis distribution after observing the training data and the prototypical hypothesis distribution from samples of the data distribution $\bP_W = \bP_W^S \circ \bP_S$. We might expect that, similarly to what happened for the mutual information-based bounds (\Cref{subsec:comparison_of_the_bounds}), the inequality $\bE \big[ \bW_\rho(\bP_W^{\tilde{S},U}, \bP_W^{\tilde{S}}) \big] \leq \bE \big[ \bW_\rho(\bP_W^S, \bP_W) \big]$ holds, since the knowledge of the super sample $\tilde{S}$ restricts the smoothing of $\bP_W^{\tilde{S},U}$ to all $2^n$ possible training set arrangements described by $U$, whereas the smoothing of $\bP_W^{S}$ considers potentially infinite (if $\cZ$ is not finite) training sets. It is possible that such an inequality holds, although we have not been able to prove it. Nonetheless, if we focus on the extra term $\bW_\rho(\bP_W^{\tilde{S}}, \bP_W)$ that appears in the inequality, we see that it captures precisely how different the smoothed distributions $\bP_W^{\tilde{S}}$ and $\bP_W$ are. A final smoothing argument tells us that
\begin{equation*}
     \bE \big[ \bW_\rho(\bP_W^{\tilde{S},U}, \bP_W^{\tilde{S}}) \big] \leq 2\bE \big[ \bW_\rho(\bP_W^S, \bP_W) \big]
\end{equation*}
holds, capturing the essence of the expected relationship between the two terms.

\subsection{Implications for Bounds Using Mutual Information}
\label{subsec:implications_bounds_using_mutual_information}

At the beginning of this section, we noted how the relative entropy or mutual information-based bounds were the result of the decoupling~\Cref{lemma:decoupling_relent} that stems from the Donsker and Varadhan~\Cref{lemma:dv_and_gvp}, and how employing instead the decoupling that stems from the Kantorovich--Rubinstein duality of~\Cref{lemma:kantorovich_rubinstein_duality} led to similar results based on the Wasserstein distance. A benefit of the latter bounds is that they take into account the geometry of the space in an explicit way, which is not often the case for the relative entropy-based bounds. In this subsection, we will first show that if we disregard the geometry of the space, we can transform the bounds based on the Wasserstein distance into bounds on the total variation. Later, we will see that this allows us to extend the set of relative entropy and mutual information-based bounds to Lipschitz losses, which, as we will show, are tight. Moreover, this implies that, for losses with a bounded range, the Wasserstein distance-based bounds are always tighter than those based on the relative entropy. Finally, we will conclude by showing how the total variation bounds also help us derive new bounds based on other $f$-divergences.

\subsubsection{Disregarding the Geometry: Bounds Using Total Variation}
\label{subsubsec:bounds_using_total_variation}

Recall from~\Cref{prop:wasserstein_and_total_variation} that the Wasserstein distance is dominated by the total variation. Applying this consideration to the first result in this section results in the following corollary.

\begin{corollary}
\label{cor:total_variation_full}
    Consider an $L$-Lipschitz loss function $\ell(\cdot, z)$ with respect to some metric $\rho$ for all $z \in \cZ$. Further consider a bounded space $\cW$ with $\diam_\rho(\cW) = B$. Then, for every data-independent distribution $\bQ$ on $\cW$
    \begin{equation*}
        \bE \big[ \gen(W,S) \big] \leq L B \bE \mleft[ \min \mleft \{ \tv(\bP_W^S, \bP_W), 2 \tv(\bP_W^S, \bQ) \mright \} \mright].
    \end{equation*}
\end{corollary}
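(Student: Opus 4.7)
The plan is to derive this corollary as a direct consequence of Theorem~\ref{th:wasserstein_full} combined with the domination of the Wasserstein distance by the total variation from Proposition~\ref{prop:wasserstein_and_total_variation}. The structural ingredients are already in place: the $L$-Lipschitz assumption activates the Wasserstein-based bound, and the assumption that $\diam_\rho(\cW) = B < \infty$ lets us convert each Wasserstein distance appearing in that bound into a total variation scaled by $B$.

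Concretely, I would proceed as follows. First, I would invoke Theorem~\ref{th:wasserstein_full} to obtain
\begin{equation*}
    \bE \big[ \gen(W,S) \big] \leq L \bE \mleft[ \min \mleft \{ \bW_\rho(\bP_W^S, \bP_W),\ 2 \bW_\rho(\bP_W^{S} , \bQ) \mright \} \mright].
\end{equation*}
Second, I would apply Proposition~\ref{prop:wasserstein_and_total_variation}, which states that $\bW_\rho(\bP, \bQ) \leq \diam_\rho(\cW) \tv(\bP, \bQ) = B \tv(\bP, \bQ)$ for any two distributions $\bP, \bQ$ on the bounded space $\cW$, to each of the two Wasserstein terms inside the minimum pointwise (for each realization of $S$). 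This yields $\bW_\rho(\bP_W^S, \bP_W) \leq B\, \tv(\bP_W^S, \bP_W)$ and $\bW_\rho(\bP_W^S, \bQ) \leq B\, \tv(\bP_W^S, \bQ)$ almost surely. Third, since the minimum is monotone in each of its arguments, these pointwise inequalities transfer into the minimum, and factoring $B$ out of the expectation gives the stated bound.

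There is essentially no technical obstacle here: the corollary is a direct substitution, and the only subtlety is ensuring that the pointwise-in-$S$ application of Proposition~\ref{prop:wasserstein_and_total_variation} is justified, which it is because the inequality $\bW_\rho \leq B \tv$ holds for \emph{any} pair of distributions on $\cW$ and does not depend on the data realization. The only modeling remark worth making is that the corollary tacitly requires $\bP_W^S, \bP_W, \bQ \in \cP_\rho(\cW)$ so that the Wasserstein distances are well defined, but this is automatic from the boundedness of $(\cW, \rho)$. Thus the proof reduces to a two-line chaining of Theorem~\ref{th:wasserstein_full} and Proposition~\ref{prop:wasserstein_and_total_variation}.
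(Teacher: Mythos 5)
Your proposal is correct and matches the paper's own derivation: the corollary is obtained precisely by applying \Cref{prop:wasserstein_and_total_variation} (with $\diam_\rho(\cW)=B$) to each Wasserstein term in \Cref{th:wasserstein_full}, pointwise in $S$, and pulling $B$ out of the expectation. Nothing further is needed.
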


By the interpretation of the total variation as the limit accuracy in binary hypothesis testing from~\Cref{subsec:f-divergences}, we may strengthen our understanding that the less the final hypothesis depends on the data, the more it generalizes. More precisely, given a dataset $S$ and upon observing a hypothesis $W$, we may wonder if this hypothesis comes from the posterior distribution $\bP_W^S$ characterizing an algorithm $\bA$ or a data-independent prior distribution $\bQ$. Let $T$ denote a binary random variable describing if $W$ is sampled from the posterior distribution $\bP_W^S$ (when $T = 1$) or from the prior distribution $\bQ$ (when $T = 0$). Further consider the set $\cF$ of decision rules $f: \cW \to \{0, 1 \}$ that try to estimate $T$ and tell if the hypothesis $W$ comes from the prior or the posterior. Then, we have that
\begin{equation}
	\bE[\textnormal{gen}(W,S)] \leq 2LB \mleft( 1 - 2 \bE \Big[ \inf_{f \in \cF} \bP^S \big[ f(W) \neq T \big] \Big] \mright).
\end{equation}
In other words, the harder it is to distinguish hypotheses sampled from the algorithm after observing the training data $S$ and from the data-independent prior $\bQ$, the better the algorithm generalizes.

Note as well that by considering the total variation, we have essentially disregarded the geometrical information provided to us by the metric $\rho$. 
Indeed, if one considers the discrete metric $\rho_\mathrm{H}(x,y) = \bI_{\{x \neq y\}}(x,y)$, then $\bW_{\rho_\mathrm{H}}(\bP, \bQ) = \tv(\bP, \bQ)$~(\Cref{prop:wasserstein_and_total_variation}). Then, for example, if the distribution $\bP_W^S$ is not absolutely continuous with respect to $\bQ$ like in~\Cref{fig:example_wasserstein}, \Cref{cor:mi_bound_bounded_fast_rate_weaker} only guarantees that the generalization error is bounded by $2LB$ even if the supports of the distributions were close and $\bW_{\lVert \cdot \rVert_2}(\bP_W^S, \bQ) = \varepsilon$ for some positive $\varepsilon \to 0$.

The diameter term $B$ can be arbitrarily large as, for example, when the hypothesis are the weights of a differentiable network and the metric $\rho$ is the $\ell_2$ norm $\lVert \cdot \rVert_2$.
Nonetheless, this term can still be small and relevant for practical settings.
For instance, consider again that the hypothesis are the weights of a differentiable network. However, consider now that the metric $\rho$ is the infinity norm $\lVert \cdot \rVert_{\infty}$ and that each weight is enforced to be smaller than some small constant $c$. Then, the diameter of the space $B = \diam_{\lVert \cdot \rVert_\infty}(\cW)$ is (at most) equal to $c$.

A particularly interesting example is the one we obtained for bounded losses. In this setting, the loss is always $(b-a)$-Lipschitz under the discrete metric and therefore~\Cref{cor:total_variation_full} reduces to
\begin{equation}
    \label{eq:total_variation_full}
    \bE \big[ \gen(W,S) \big] \leq (b-a) \bE \mleft[ \min \mleft\{ \tv(\bP_W^S, \bP_W), 2 \tv(\bP_W^S, \bQ) \mright\} \mright].
\end{equation}
Therefore, the generalization bound scales with the total variation and, if the prior $\bQ$ is chosen to be $\bP_W$, it is always confined within the range $[0, b-a]$, which ensures it is never vacuous.

It is tempting to think that considering an $L$-Lipschitz loss and a hypothesis space with a diameter bounded by $B$ is equivalent to considering a loss with a bounded range. Indeed, if a function $f : \cW \to \bR$ is both $L$-Lipschitz and $\diam_\rho(\cW) = B$, we have that $|f(w) - f(w')| \leq LB$ for all $w,w' \in \cW$ and therefore there is some constant $c \in \bR$ such that $f(w) \in [c, c+LR]$. However, this is not necessarily true for loss functions $\ell : \cW \times \cZ \to \bR$ since they take two arguments: the hypothesis $w$ and the instances $z$. As a simple example, for some dimension $d$, consider the  $1$-Lipschitz loss function $\ell(w,z) = \langle w, z \rangle$, the unit ball as the hypothesis space $\cW = \{ w \in \bR^d : \lVert w \rVert_2 = 1\}$, and the reals as the instance space $\cZ = \bR^d$. Then, the diameter of the hypothesis space is bounded $(\diam_\rho(\cW) = 2)$ but the loss can be infinite, that is $\sup_{z \in \bR^d} \langle w, z \rangle \to \infty$ for all $w \in \cW \setminus \{ 0 \}$.

This reasoning extends analogously for all the presented Wasserstein distance-based bounds from~\Cref{th:wasserstein_single_letter,th:wasserstein_random_subset,th:wasserstein_full_rs,th:wasserstein_single_letter_rs,th:wasserstein_random_subset_rs}. In what follows, we will refer to these corollaries without explicitly writing them to avoid verbosity.

\subsubsection{From Total Variation to Relative Entropy and Mutual Information}
\label{subsubsec:from_total_variation_to_relative_entropy}

Consider the analogue of~\Cref{cor:total_variation_full} for~\Cref{th:wasserstein_single_letter}. Moreover, for simplicity, let $\bQ = \bP_W$. Applying Pinsker's and the Bretagnolle-Huber inequalities from~\Cref{lemma:pinsker-inequality,lemma:bretagnolle-huber-inequality} leads to the following result.

\begin{corollary}
    \label{cor:total_variation_relent_single_letter}
    Consider an $L$-Lipschitz loss function $\ell(\cdot, z)$ with respect to some metric $\rho$ for all $z \in \cZ$. Further consider a bounded space $\cW$ with $\diam_\rho(\cW) = B$. Then,
    \begin{equation*}
        \bE \big[ \gen(W,S) \big] \leq \frac{L B}{n} \sum_{i=1}^n \bE \mleft[ \Psi\mleft( \relent(\bP_W^{Z_i} \Vert \bP_W)\mright) \mright] \leq \frac{L B}{n \sqrt{2}} \sum_{i=1}^n \sqrt{ \minf(W;Z_i) },
    \end{equation*}
    where $\Psi(x)^2 = \min \{ \nicefrac{x}{2}, 1 - \exp(-x) \}$.
\end{corollary}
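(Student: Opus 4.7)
The plan is to chain four results already established in the excerpt, all of which are well-suited to $L$-Lipschitz losses on a bounded domain. First, I would invoke the single-letter Wasserstein bound from \Cref{th:wasserstein_single_letter} with the specific choice $\bQ = \bP_W$. Since under this choice both arguments of the $\min$ reduce to (multiples of) the same Wasserstein distance $\bW_\rho(\bP_W^{Z_i}, \bP_W)$, we simply get
\begin{equation*}
    \bE \big[ \gen(W,S) \big] \leq \frac{L}{n} \sum_{i=1}^n \bE \big[ \bW_\rho(\bP_W^{Z_i}, \bP_W) \big].
\end{equation*}

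Second, to discard the geometry of $\cW$ and pass to the total variation, I would apply~\Cref{prop:wasserstein_and_total_variation}, which gives $\bW_\rho(\bP, \bQ) \leq \diam_\rho(\cW) \cdot \tv(\bP, \bQ) = B \cdot \tv(\bP, \bQ)$. Plugging this into the display above yields the intermediate single-letter total variation bound
\begin{equation*}
    \bE \big[ \gen(W,S) \big] \leq \frac{LB}{n} \sum_{i=1}^n \bE \big[ \tv(\bP_W^{Z_i}, \bP_W) \big],
\end{equation*}
which mirrors the single-letter counterpart of~\Cref{cor:total_variation_full}.

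Third, I would move from the total variation to the relative entropy by combining the two inequalities relating them. Pinsker's inequality (\Cref{lemma:pinsker-inequality}) gives $\tv(\bP, \bQ) \leq \sqrt{\relent(\bP \Vert \bQ)/2}$, while the Bretagnolle--Huber inequality (\Cref{lemma:bretagnolle-huber-inequality}) gives $\tv(\bP, \bQ) \leq \sqrt{1 - e^{-\relent(\bP \Vert \bQ)}}$. Taking the pointwise minimum of the two bounds yields $\tv(\bP, \bQ) \leq \Psi(\relent(\bP \Vert \bQ))$ with $\Psi(x)^2 = \min\{x/2, 1 - e^{-x}\}$, which, applied inside the expectation, produces the first inequality in the statement.

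Finally, for the second inequality, I would drop the Bretagnolle--Huber branch and keep only Pinsker's bound $\Psi(x) \leq \sqrt{x/2}$. An application of Jensen's inequality on the concave function $\sqrt{\cdot}$ lets the expectation pass inside the square root, and the key identity $\bE \big[ \relent(\bP_W^{Z_i} \Vert \bP_W) \big] = \minf(W;Z_i)$ (the definition of mutual information as the expected pointwise relative entropy between the conditional and marginal distributions) closes the chain. I expect no real obstacle here: the only subtlety is confirming that the $\min$ in~\Cref{th:wasserstein_single_letter} collapses cleanly under $\bQ = \bP_W$ (trivial, since one term just doubles the other), and that the map $x \mapsto \Psi(x)$ is indeed concave enough for Jensen to be applied on each individual branch if one wants to tighten intermediate steps — but for the stated bound, the direct Pinsker-then-Jensen argument suffices.
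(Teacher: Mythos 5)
Your proposal is correct and follows essentially the same route as the paper: specialize \Cref{th:wasserstein_single_letter} with $\bQ = \bP_W$ (so the minimum collapses), pass to total variation via \Cref{prop:wasserstein_and_total_variation}, apply Pinsker's and the Bretagnolle--Huber inequalities to get $\Psi$, and then obtain the mutual-information form by keeping the Pinsker branch, applying Jensen's inequality, and using $\bE\big[\relent(\bP_W^{Z_i} \Vert \bP_W)\big] = \minf(W;Z_i)$. No gaps.
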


If the loss is bounded, this corollary improves upon~\Cref{th:single_letter_mi_bound_general_cgf} in two different ways. First, it pulls the expectation with respect to $\bP_{Z_i}$ outside of the concave square root, thus strengthening the result via Jensen's inequality. Second, the addition of the Bretagnolle--Hubert inequality ensures that heavily influential samples with high $\minf(W;Z_i)$ do not contribute too negatively to the bound, which is ensured to be non-vacuous. The main difference between the two results are their assumptions: \Cref{th:single_letter_mi_bound_general_cgf} needs losses with a bounded CGF and \Cref{cor:total_variation_relent_single_letter} needs them to be Lipschitz with respect to some metric on a space with a bounded diameter. When the loss is bounded, the two coincide with the aforementioned improvements of~\Cref{cor:total_variation_relent_single_letter} with respect to \Cref{th:single_letter_mi_bound_general_cgf}. Moreover, since $\Psi$ is a concave function, a further application of Jensen's inequality and~\Cref{prop:comparison_minf} yields the following result. 
\begin{corollary}
    \label{cor:total_variation_relent_full}
    Consider an $L$-Lipschitz loss function $\ell(\cdot, z)$ with respect to some metric $\rho$ for all $z \in \cZ$. Further consider a bounded space $\cW$ with $\diam_\rho(\cW) = B$. Then,
    \begin{equation*}
    \bE \big[ \gen(W,S) \big] \leq LB \Psi \mleft( \frac{\minf(W;S)}{n} \mright) \leq LB \sqrt{\frac{\minf(W;S)}{2n}},
    \end{equation*}
    where $\Psi(x)^2 = \min \{ \nicefrac{x}{2}, 1 - \exp(-x) \}$.
\end{corollary}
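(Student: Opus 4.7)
The plan is to derive this corollary directly from the single-letter bound in \Cref{cor:total_variation_relent_single_letter} via two applications of Jensen's inequality together with \Cref{prop:comparison_minf}, exactly as the text hints. The two properties of $\Psi$ that I need are that it is non-decreasing and concave on $[0,\infty)$. Non-decreasingness is immediate from the fact that both $x \mapsto x/2$ and $x \mapsto 1 - e^{-x}$ are non-decreasing, hence so is their pointwise minimum and its square root. For concavity, I would observe that $x \mapsto \sqrt{x/2}$ is concave and $x \mapsto \sqrt{1-e^{-x}}$ is also concave on $[0,\infty)$ (a routine computation of the second derivative), and that the minimum of two concave functions is concave.

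With these properties of $\Psi$ established, I would start from
\begin{equation*}
\bE\big[\gen(W,S)\big] \leq \frac{LB}{n} \sum_{i=1}^n \bE\mleft[\Psi\big(\relent(\bP_W^{Z_i} \Vert \bP_W)\big)\mright],
\end{equation*}
and first pull the expectation over $Z_i$ inside $\Psi$ via Jensen's inequality, using the identity $\minf(W;Z_i) = \bE_{z \sim \bP_{Z_i}}\big[\relent(\bP_W^{Z_i = z} \Vert \bP_W)\big]$ (see \Cref{def:mutual_information,def:conditional_relative_entropy}), to obtain
\begin{equation*}
\bE\big[\gen(W,S)\big] \leq \frac{LB}{n} \sum_{i=1}^n \Psi\big(\minf(W;Z_i)\big).
\end{equation*}

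Next I would apply Jensen's inequality to the uniform average over $i \in [n]$, again using concavity of $\Psi$, yielding
\begin{equation*}
\bE\big[\gen(W,S)\big] \leq LB \cdot \Psi\mleft(\frac{1}{n} \sum_{i=1}^n \minf(W;Z_i)\mright).
\end{equation*}
Then \Cref{prop:comparison_minf} gives $\frac{1}{n} \sum_{i=1}^n \minf(W;Z_i) \leq \frac{\minf(W;S)}{n}$, and monotonicity of $\Psi$ delivers the first inequality $\bE\big[\gen(W,S)\big] \leq LB \cdot \Psi\big(\nicefrac{\minf(W;S)}{n}\big)$.

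The second inequality is essentially by definition: since $\Psi(x)^2 = \min\{x/2, 1-e^{-x}\} \leq x/2$, we have $\Psi(x) \leq \sqrt{x/2}$ for all $x \geq 0$, and substituting $x = \nicefrac{\minf(W;S)}{n}$ finishes the proof. The only step requiring any real verification is the concavity of $\Psi$, but this is a one-line derivative check on each of the two branches; no other obstacle is anticipated.
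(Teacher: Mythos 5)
Your proposal is correct and follows essentially the same route as the paper, which obtains \Cref{cor:total_variation_relent_full} from \Cref{cor:total_variation_relent_single_letter} precisely by exploiting the concavity and monotonicity of $\Psi$ via Jensen's inequality (pulling the expectation over $Z_i$ and the average over $i$ inside $\Psi$) and then invoking \Cref{prop:comparison_minf}, with the final relaxation $\Psi(x) \leq \sqrt{\nicefrac{x}{2}}$. Your explicit verification that $\Psi$ is concave (as the minimum of the two concave branches) simply fills in a detail the paper leaves implicit.
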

These results prove that that~\Cref{th:wasserstein_single_letter} improves upon~\Cref{th:mi_bound_bounded,th:single_letter_mi_bound_general_cgf} when the loss has a bounded range.

The same logic can be applied to the analogue of~\Cref{cor:total_variation_full} for~\Cref{th:wasserstein_single_letter_rs}, which results in the following Corollary. However, in these cases, the Bretagnolle--Huber inequality from~\Cref{lemma:bretagnolle-huber-inequality} is not considered. The reason is that the relative entropy terms in these equations are bounded by values in the region where the Pinsker's inequality from~\Cref{lemma:pinsker-inequality} is tighter than the Bretagnolle--Huber inequality. The details of this consideration are not relevant to the main text and are delegated to \Cref{app:rs_and_bh}.

\begin{corollary}
    \label{cor:total_variation_relent_single_letter_rs}
    Consider an $L$-Lipschitz loss function $\ell(\cdot, z)$ with respect to some metric $\rho$ for all $z \in \cZ$. Further consider a bounded space $\cW$ with $\diam_\rho(\cW) = B$. Then,
    \begin{align*}
        \bE \big[ \gen(W,S) \big] &\leq \frac{ L B}{n} \sum_{i=1}^n \bE \mleft[ \sqrt{2 \relent(\bP_W^{\tilde{Z}_{i,0}, \tilde{Z}_{i,1}, U_i} \Vert \bP_W^{\tilde{Z}_{i,0}, \tilde{Z}_{i,1}})} \mright] \\
        &\leq \frac{ L B}{n} \sum_{i=1}^n \sqrt{ 2 \minf(W;U_i | \tilde{Z}_{i,0}, \tilde{Z}_{i,1})}.
    \end{align*}
\end{corollary}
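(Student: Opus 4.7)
The plan is to derive this result by specializing the single-letter Wasserstein-based bound in the randomized-subsample setting from~\Cref{th:wasserstein_single_letter_rs} to a geometry-agnostic statement, and then converting the resulting total variation to relative entropy via Pinsker's inequality. This mirrors precisely the derivation used in~\Cref{cor:total_variation_relent_single_letter} for the standard setting, but one stage earlier in the Wasserstein-to-relative-entropy reduction, so that the conditioning on $(\tilde{Z}_{i,0}, \tilde{Z}_{i,1})$ is preserved.

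Concretely, I would start from~\Cref{th:wasserstein_single_letter_rs} applied with the Markov kernel $\bQ(\tilde{z}_{i,0}, \tilde{z}_{i,1}) = \bP_W^{\tilde{Z}_{i,0}=\tilde{z}_{i,0}, \tilde{Z}_{i,1}=\tilde{z}_{i,1}}$, i.e., the kernel obtained by marginalizing the algorithm's posterior $\bP_W^{\tilde{Z}_{i,0},\tilde{Z}_{i,1},U_i}$ over $U_i$. This yields
\begin{equation*}
    \bE\big[\gen(W,S)\big] \leq \frac{2L}{n}\sum_{i=1}^n \bE\mleft[\bW_\rho\mleft(\bP_W^{\tilde{Z}_{i,0},\tilde{Z}_{i,1},U_i},\, \bP_W^{\tilde{Z}_{i,0},\tilde{Z}_{i,1}}\mright)\mright].
\end{equation*}
Next, I would use~\Cref{prop:wasserstein_and_total_variation}, which gives $\bW_\rho(\bP,\bQ) \leq \diam_\rho(\cW)\cdot\tv(\bP,\bQ) = B\cdot\tv(\bP,\bQ)$, to pass to a bound in terms of the total variation. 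This is the geometry-agnostic relaxation discussed in~\Cref{subsubsec:bounds_using_total_variation}, and is the step where we trade the Lipschitz/diameter assumption in for a bounded-range-like behaviour of the loss.

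Then I would apply Pinsker's inequality (\Cref{lemma:pinsker-inequality}) pointwise to each expectation, giving $2\,\tv(\bP,\bQ) \leq \sqrt{2\,\relent(\bP\Vert\bQ)}$, which produces the first stated inequality
\begin{equation*}
    \bE\big[\gen(W,S)\big] \leq \frac{LB}{n}\sum_{i=1}^n \bE\mleft[\sqrt{2\,\relent\mleft(\bP_W^{\tilde{Z}_{i,0},\tilde{Z}_{i,1},U_i}\,\Vert\,\bP_W^{\tilde{Z}_{i,0},\tilde{Z}_{i,1}}\mright)}\mright].
\end{equation*}
For the second inequality, I would bring the outer expectation inside the square root by Jensen's inequality (concavity of $\sqrt{\cdot}$), and then identify the expected relative entropy with the disintegrated/conditional mutual information: since $\bP_W^{\tilde{Z}_{i,0},\tilde{Z}_{i,1}}$ is by construction the marginal of $\bP_W^{\tilde{Z}_{i,0},\tilde{Z}_{i,1},U_i}$ over $U_i$, by~\Cref{def:conditional_minf} we have $\bE\big[\relent(\bP_W^{\tilde{Z}_{i,0},\tilde{Z}_{i,1},U_i}\Vert\bP_W^{\tilde{Z}_{i,0},\tilde{Z}_{i,1}})\big] = \minf(W;U_i\mid \tilde{Z}_{i,0},\tilde{Z}_{i,1})$.

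I do not expect any real obstacle here: all the heavy lifting has already been done in~\Cref{th:wasserstein_single_letter_rs}, and every subsequent step is a direct application of tools already assembled in the chapter. The only subtlety worth flagging is the choice not to use the Bretagnolle--Huber inequality (as was done in the standard-setting analogue, \Cref{cor:total_variation_relent_single_letter}); as noted by the authors and deferred to~\Cref{app:rs_and_bh}, in the randomized-subsample setting the relevant relative entropies $\relent(\bP_W^{\tilde{Z}_{i,0},\tilde{Z}_{i,1},U_i}\Vert\bP_W^{\tilde{Z}_{i,0},\tilde{Z}_{i,1}})$ are bounded by $\log 2$ (the entropy of the Bernoulli selector $U_i$ conditioned on $\tilde{Z}_{i,0},\tilde{Z}_{i,1}$), which lies in the regime where Pinsker's inequality is tighter, so nothing is gained by the Bretagnolle--Huber route.
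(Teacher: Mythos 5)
Your proposal is correct and follows essentially the same route as the paper: specialize \Cref{th:wasserstein_single_letter_rs} with $\bQ(\tilde{z}_{i,0},\tilde{z}_{i,1}) = \bP_W^{\tilde{Z}_{i,0},\tilde{Z}_{i,1}}$, relax the Wasserstein distance to $B$ times the total variation via \Cref{prop:wasserstein_and_total_variation}, apply Pinsker's inequality, and obtain the second inequality by Jensen plus the identification of the expected relative entropy with $\minf(W;U_i\mid\tilde{Z}_{i,0},\tilde{Z}_{i,1})$. Your remark on omitting the Bretagnolle--Huber inequality because the relevant relative entropies are at most $\log 2$ also matches the paper's justification deferred to \Cref{app:rs_and_bh}.
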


In this case, this corollary improves upon~\Cref{th:single_letter_cmi_bound_bounded} in the ways described above and in its generality, as now it holds for all Lipschitz losses and not only those with a bounded range. Again, since $\sqrt{\cdot}$ is concave, a further application of Jensen's inequality and~\Cref{prop:comparison_minf} yields the following result, which indicates that \Cref{th:wasserstein_single_letter_rs} improves upon~\Cref{th:cmi_bound_bounded,th:single_letter_cmi_bound_bounded}.

\begin{corollary}
    \label{cor:total_variation_relent_full_rs}
    Consider an $L$-Lipschitz loss function $\ell(\cdot, z)$ with respect to some metric $\rho$ for all $z \in \cZ$. Further consider a bounded space $\cW$ with $\diam_\rho(\cW) = B$. Then,
    \begin{equation*}
    \bE \big[ \gen(W,S) \big] \leq LB \sqrt{ \frac{2 \minf(W;U|\tilde{S})}{n} } \leq LB \sqrt{\frac{2 \minf(W;U|\tilde{S})}{n}}.
    \end{equation*}
\end{corollary}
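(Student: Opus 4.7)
The plan is to derive this bound as a direct consequence of the single-letter version just proved (\Cref{cor:total_variation_relent_single_letter_rs}), using only Jensen's inequality and the mutual-information comparison of \Cref{prop:comparison_minf_rs}. Since the statement features the full conditional mutual information $\minf(W;U|\tilde{S})$ rather than the sum of single-letter terms, no new Wasserstein-to-total-variation reduction is needed here; the heavy lifting has already been done in \Cref{th:wasserstein_single_letter_rs} and \Cref{cor:total_variation_relent_single_letter_rs}.

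Concretely, I would start from the second inequality of \Cref{cor:total_variation_relent_single_letter_rs},
\begin{equation*}
\bE[\gen(W,S)] \leq \frac{LB}{n}\sum_{i=1}^n \sqrt{2\,\minf(W;U_i|\tilde{Z}_{i,0},\tilde{Z}_{i,1})},
\end{equation*}
and then apply Jensen's inequality to the concave function $\sqrt{\cdot}$ to pull the normalized sum inside the square root, obtaining
\begin{equation*}
\bE[\gen(W,S)] \leq LB\sqrt{\frac{2}{n}\sum_{i=1}^n \minf(W;U_i|\tilde{Z}_{i,0},\tilde{Z}_{i,1})}.
\end{equation*}
Finally, the chain of inequalities in \Cref{prop:comparison_minf_rs} gives that the sum of the single-letter conditional mutual informations is dominated by the full conditional mutual information $\minf(W;U|\tilde{S})$, which yields the claimed $LB\sqrt{2\minf(W;U|\tilde{S})/n}$.

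I do not expect any real obstacle: both Jensen's step and the mutual-information comparison are available as black boxes from the earlier text, and the factor bookkeeping (the $2$ inside the square root, the $1/n$ normalization, and the loss of the outer expectation when passing from the single-letter to the full bound) matches exactly. If one instead wished to prove the corollary without going through single letters, an alternative would be to start directly from \Cref{th:wasserstein_full_rs} with $\bQ(\tilde{S}) = \bP_W^{\tilde{S}}$, bound $\bW_\rho$ by $B \cdot \tv$ using \Cref{prop:wasserstein_and_total_variation}, apply Pinsker's inequality (\Cref{lemma:pinsker-inequality}), and push the expectation inside the square root via Jensen's inequality, using that $\bE[\relent(\bP_W^{\tilde{S},U}\,\Vert\,\bP_W^{\tilde{S}})] = \minf(W;U|\tilde{S})$; however, this route yields a looser constant, so the single-letter derivation is preferable and is the approach I would present.
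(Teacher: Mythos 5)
Your proposal is correct and is essentially the paper's own proof: the corollary is obtained from \Cref{cor:total_variation_relent_single_letter_rs} by pulling the normalized sum inside the concave square root via Jensen's inequality and then bounding $\frac{1}{n}\sum_{i=1}^n \minf(W;U_i|\tilde{Z}_{i,0},\tilde{Z}_{i,1})$ by $\minf(W;U|\tilde{S})/n$ using the comparison proposition (the randomized-subsample version, \Cref{prop:comparison_minf_rs}, exactly as you invoke it). One small caveat on your aside: the alternative route through \Cref{th:wasserstein_full_rs} with total variation and Pinsker does not merely lose a constant — without the single-letter decomposition there is no averaging over $n$, so that route yields $\sqrt{2}\,LB\sqrt{\minf(W;U|\tilde{S})}$ and misses the $\nicefrac{1}{n}$ inside the square root entirely; since you present the single-letter derivation as the proof, this does not affect its validity.
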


To conclude this part of the subsection, we note that we can apply the same reasoning to the analogues of \Cref{cor:total_variation_full} for the random subset~\Cref{th:wasserstein_random_subset,th:wasserstein_random_subset_rs}. However, let us do so considering an arbitrary $\bQ$. Then, after the application of Pinsker's and the Bretagnolle--Huber inequalities from~\Cref{lemma:pinsker-inequality,lemma:bretagnolle-huber-inequality} we essentially recover~\Cref{th:tighter_random_subset_mi_bounded_loss,th:tighter_random_subset_cmi_bounded_loss} as promised in~\Cref{subsec:random_subset_bounds}.For~\Cref{cor:total_variation_relent_random_subset}, we gain a factor of two compared to~\Cref{th:tighter_random_subset_mi_bounded_loss}, while \Cref{cor:total_variation_relent_random_subset_rs} maintains the constants. However, we benefit by extending the bound to general Lipschitz losses. We present the results for $m=1$ to make the relationship with the bounds in the previous section evident, although the results can be obtained for a general $m \in [n]$.

\begin{corollary}
\label{cor:total_variation_relent_random_subset}
	Consider an $L$-Lipschitz loss function $\ell(\cdot, z)$ with respect to some metric $\rho$ for all $z \in \cZ$. Further consider a bounded space $\cW$ with $\diam_\rho(\cW) = B$. Also consider a random index $J \in [n]$, which is uniformly distributed and independent of $W$ and $S$, and a random variable $R$ that is independent of $S$. Then, for every Markov kernel $\bQ$ from $\cZ^{n-1} \otimes \cR \otimes J$ to distributions on $\cW$
	\begin{equation*}
		\bE \big[ \gen(W,S) \big] \leq 2LB \bE \mleft[ \Psi\mleft( \relent \big(\bP_W^{S, R} \Vert \bQ(S^{-J}, R, J) \big) \mright)\mright],
	\end{equation*}
    where $\Psi(x)^2 = \min \{ \nicefrac{x}{2}, 1 - \exp(-x) \}$.
\end{corollary}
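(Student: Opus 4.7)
The plan is to obtain the stated bound by chaining three results already established in this section: the random-subset Wasserstein bound of~\Cref{th:wasserstein_random_subset} specialized to $m=1$, the domination of the Wasserstein distance by the total variation on a bounded space, and the two complementary estimates bounding total variation by relative entropy.

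First, I would specialize~\Cref{th:wasserstein_random_subset} to $m=1$, where both displayed inequalities collapse to
\begin{equation*}
\bE[\gen(W,S)] \leq L\,\bE\mleft[\min\mleft\{\bW_\rho(\bP_W^{S,R},\bP_W^{S^{-J},R}),\,2\bW_\rho(\bP_W^{S,R},\bQ(S^{-J},R))\mright\}\mright],
\end{equation*}
and retain only the second branch of the minimum to obtain $\bE[\gen(W,S)] \leq 2L\,\bE[\bW_\rho(\bP_W^{S,R},\bQ(S^{-J},R))]$. A mild extension lets the kernel additionally depend on the random index $J$: conditioning on $J=j$, apply~\Cref{th:wasserstein_random_subset} to $\bQ_j \coloneqq \bQ(\cdot,\cdot,j)$, which is a Markov kernel from $\cZ^{n-1}\otimes\cR$ to distributions on $\cW$, and then average over $J$. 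This matches the formulation used in~\Cref{th:tighter_random_subset_mi_bounded_loss} and justifies writing $\bQ(S^{-J},R,J)$ in the final statement.

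Next, I would use the bounded-diameter hypothesis $\diam_\rho(\cW)=B$ together with~\Cref{prop:wasserstein_and_total_variation} pointwise inside the expectation to dominate the Wasserstein distance by $B\,\tv$, giving
\begin{equation*}
\bE[\gen(W,S)] \leq 2LB\,\bE[\tv(\bP_W^{S,R},\bQ(S^{-J},R,J))].
\end{equation*}
To finish, I would invoke both Pinsker's inequality~(\Cref{lemma:pinsker-inequality}) and the Bretagnolle--Huber inequality~(\Cref{lemma:bretagnolle-huber-inequality}): for any two probability distributions these yield the complementary pointwise estimates $\tv(\bP,\bQ)^2 \leq \relent(\bP\Vert\bQ)/2$ and $\tv(\bP,\bQ)^2 \leq 1-e^{-\relent(\bP\Vert\bQ)}$, so taking the tighter of the two gives $\tv(\bP,\bQ)\leq\Psi(\relent(\bP\Vert\bQ))$ with $\Psi(x)^2=\min\{x/2,\,1-e^{-x}\}$, exactly as in the statement. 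Substituting inside the expectation yields the claimed inequality.

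I do not expect a real obstacle: the argument is a direct chain of previously established results. The only subtlety is the extension of~\Cref{th:wasserstein_random_subset} to $J$-dependent kernels, which is purely mechanical because its proof already fixes a realization of $J$ (along with $R$ and $S_{J^c}$) before applying the Kantorovich--Rubinstein duality, so allowing $\bQ$ to depend on $J$ requires no change beyond an additional outer expectation.
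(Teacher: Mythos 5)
Your proposal is correct and follows essentially the same route as the paper: specialize the random-subset Wasserstein bound of~\Cref{th:wasserstein_random_subset} to $m=1$ with an arbitrary kernel, dominate the Wasserstein distance by $B$ times the total variation via~\Cref{prop:wasserstein_and_total_variation}, and then apply Pinsker's and the Bretagnolle--Huber inequalities pointwise to obtain $\Psi$. Your remark on letting $\bQ$ depend on $J$ by conditioning on $J=j$ before applying the duality is exactly the (mechanical) step the paper's sketch implicitly relies on.
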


\begin{corollary}
\label{cor:total_variation_relent_random_subset_rs}
	Consider an $L$-Lipschitz loss function $\ell(\cdot, z)$ with respect to some metric $\rho$ for all $z \in \cZ$. Further consider a bounded space $\cW$ with $\diam_\rho(\cW) = B$. Also consider a random index $J \in [n]$, which is uniformly distributed and independent of $W$, $\tilde{S}$, and $U$, and a random variable $R$ that is independent of $\tilde{S}$, and $U$. Then, for every Markov kernel $\bQ$ from $\cZ^{n \times 2} \otimes \{ 0, 1 \}^{n-1} \otimes \cR \otimes \cJ$ to distributions on $\cW$
	\begin{equation*}
		\bE \big[ \gen(W,S) \big] \leq 2LB \bE \mleft[ \sqrt{ \relent \big(\bP_W^{\tilde{S}, U, R} \Vert \bQ(\tilde{S}, U^{-J}, R) \big)} \mright].
	\end{equation*}
\end{corollary}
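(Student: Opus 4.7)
The plan is to obtain \Cref{cor:total_variation_relent_random_subset_rs} as a direct consequence of three results already available in the excerpt: the $m=1$ instance of the random-subset Wasserstein bound in the randomized-subsample setting (\Cref{th:wasserstein_random_subset_rs}), the domination of the Wasserstein distance by the total variation when $\cW$ has bounded diameter (\Cref{prop:wasserstein_and_total_variation}), and Pinsker's inequality (\Cref{lemma:pinsker-inequality}). The strategy mirrors the one used immediately above for \Cref{cor:total_variation_relent_random_subset}, the only difference being that here we stay in the randomized-subsample setting (so the Wasserstein bound has a prefactor $2L$ rather than $L$) and that, as the authors note, we only apply Pinsker's inequality and not Bretagnolle--Huber.

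First, I would specialize \Cref{th:wasserstein_random_subset_rs} to $m=1$. In that case the two displayed bounds coincide, the subset $J^c$ equals the complementary index set $-J$, and one obtains
\begin{equation*}
\bE\big[\gen(W,S)\big] \;\leq\; 2L\,\bE\mleft[\bW_\rho\big(\bP_W^{\tilde S,U,R},\;\bQ(\tilde S,U^{-J},R)\big)\mright].
\end{equation*}
Second, since $\diam_\rho(\cW)=B$, \Cref{prop:wasserstein_and_total_variation} gives $\bW_\rho(\bP,\bQ)\leq B\cdot\tv(\bP,\bQ)$ for any two distributions on $\cW$. This is a deterministic inequality, so I can apply it pointwise for every realization of $(\tilde S,U,R,J)$ inside the expectation to obtain
\begin{equation*}
\bE\big[\gen(W,S)\big] \;\leq\; 2LB\,\bE\mleft[\tv\big(\bP_W^{\tilde S,U,R},\;\bQ(\tilde S,U^{-J},R)\big)\mright].
\end{equation*}
Third, by \Cref{lemma:pinsker-inequality} we have $\tv(\bP,\bQ)\leq\sqrt{\tfrac12\relent(\bP\Vert\bQ)}$ pointwise, which plugged inside the expectation yields a bound with prefactor $\sqrt{2}\,LB$; bounding $\sqrt{2}\leq 2$ recovers the constant in the stated corollary.

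The argument has no genuine obstacle; it is a clean composition of three pointwise inequalities under a single outer expectation. The only care needed is to justify applying \Cref{prop:wasserstein_and_total_variation} and \Cref{lemma:pinsker-inequality} \emph{inside} the expectation, which is immediate because both inequalities hold deterministically for every pair of probability measures and the data-dependent Markov kernel $\bQ(\tilde S,U^{-J},R)$ produces a probability distribution on $\cW$ for each realization of the conditioning variables. A small subtlety worth flagging is why Bretagnolle--Huber is \emph{not} invoked in parallel with Pinsker (as was done, e.g., in \Cref{cor:total_variation_relent_single_letter}): as the authors indicate, in the randomized-subsample random-subset setting the relevant $\relent$ falls in the regime where Pinsker is the tighter of the two, a comparison deferred to \Cref{app:rs_and_bh}. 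The slack between $\sqrt{2}LB$ and the stated $2LB$ is the price for the cosmetic simplification of the constant.
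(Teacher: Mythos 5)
Your proof is correct and takes essentially the same route as the paper: specialize \Cref{th:wasserstein_random_subset_rs} to $m=1$, dominate the Wasserstein distance pointwise by $B$ times the total variation via \Cref{prop:wasserstein_and_total_variation}, apply Pinsker's inequality (\Cref{lemma:pinsker-inequality}) inside the expectation, and omit Bretagnolle--Huber for exactly the reason the paper defers to \Cref{app:rs_and_bh}. Your side remark is also accurate: the chain actually yields the sharper constant $\sqrt{2}LB$ (consistent with the paper's comment that this corollary ``maintains the constants'' of \Cref{th:tighter_random_subset_cmi_bounded_loss}), and relaxing $\sqrt{2}\leq 2$ gives the stated $2LB$.
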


\subsubsection{Generality of the Mutual Information Bounds}
\label{subsubsec:generality_mutual_information_bounds}

For losses with a bounded CGF such as sub-Gaussian, we know that in the trivial case where the output of a learning algorithm is independent of the training set, \citet{xu2017information}'s result from~\Cref{th:mi_bound_bounded} is tight. However, this result is not so interesting as, in that case, the algorithm did not learn anything from the data. Our following theorem~\citep{haghifam2023limitations} states that for Lipschitz losses, the bounds are tight even when the learning algorithm depends on the training set.

\begin{theorem}
    \label{th:minf_bounds_tight}
    There exists an $L$-Lipschitz loss function $\ell(\cdot, z)$ with respect to some metric $\rho$ for all $z \in \cZ$, a bounded space $\cW$ with $\diam_\rho(\cW) = B$, a data distribution $\bP_S$, and a learning algorithm $\bA$ such that: (i) the expected generalization error satisfies that $\bE \big[ \gen(W,S) \big] \geq \frac{LB}{2\sqrt{2n}}$ and (ii) the upper bounds from~\Cref{cor:total_variation_relent_full} is $\bE \big[ \gen(W,S) \big] \leq LB \sqrt{\frac{\log 2}{2n}}$.
\end{theorem}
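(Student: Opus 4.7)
The plan is to exhibit a simple binary-valued learning algorithm whose generalization gap is of order $1/\sqrt{n}$ while its mutual information with the training set is at most $\log 2$, thereby showing that the bound of~\Cref{cor:total_variation_relent_full} is already tight up to a constant.

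Concretely, I take $\cZ = \{-1,+1\}$ with $\bP_Z$ uniform, $\cW = \{-1,+1\}$ equipped with the discrete metric $\rho(w,w') = \bI_{\{w \neq w'\}}$ (so $\diam_\rho(\cW) = B = 1$), and the 0--1 loss $\ell(w,z) = \bI_{\{w \neq z\}} = (1-wz)/2$, which is $1$-Lipschitz in $w$ under $\rho$ (so $L = 1$). The algorithm $\bA$ outputs $W = \sign\bigl(\sum_{i=1}^n Z_i\bigr)$ whenever $\sum_{i=1}^n Z_i \neq 0$, and otherwise (possible only for even $n$) sets $W$ equal to an independent fair $\pm 1$ coin $U$. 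By the symmetry $Z_i \mapsto -Z_i$ of $\bP_Z$, the resulting $W$ is uniformly distributed on $\{-1,+1\}$, hence $\ent(W) = \log 2$.

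For (ii), note that given $S$ the output $W$ is either deterministic or equal to the independent bit $U$, so $\ent(W\mid S) = \bP[\sum_i Z_i = 0]\cdot\log 2$ and therefore $\minf(W;S) = \log 2 - \bP[\sum_i Z_i = 0]\cdot\log 2 \leq \log 2$. Substituting $L = B = 1$ and this mutual-information bound into~\Cref{cor:total_variation_relent_full}, and noting that $\log 2/n \leq 1$ so $\Psi(\log 2/n)^2 = \log 2/(2n)$, immediately yields the claimed $\bE[\gen(W,S)] \leq \sqrt{\log 2/(2n)}$.

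For (i), a direct computation gives $\emprisk(W,S) = \tfrac{1}{2} - \tfrac{W}{2n}\sum_{i=1}^n Z_i = \tfrac{1}{2} - \tfrac{1}{2n}\bigl|\sum_{i=1}^n Z_i\bigr|$, where the second equality uses $W\cdot\sum_i Z_i = |\sum_i Z_i|$ away from the tie event and the tie event contributes $0$ to both sides. Since $\bE[Z]=0$ we also have $\poprisk(W) = \tfrac{1}{2}$, so $\bE[\gen(W,S)] = \bE\bigl|\sum_{i=1}^n Z_i\bigr|/(2n)$. The decisive step is then Szarek's sharp form of the Khintchine inequality, $\bE\bigl|\sum_{i=1}^n \epsilon_i\bigr| \geq \sqrt{n/2}$ for i.i.d.\ Rademacher $\epsilon_i$, which gives $\bE[\gen(W,S)] \geq 1/(2\sqrt{2n})$. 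The only non-routine ingredient is precisely this sharp lower bound on the $L^1$-norm of a Rademacher sum; Cauchy--Schwarz only yields the matching \emph{upper} bound $\sqrt{n}$, so one must invoke Szarek's constant $1/\sqrt{2}$ (or, equivalently, a direct Stirling estimate for the central binomial coefficient, which actually gives the asymptotically sharper $\sqrt{2n/\pi}$).
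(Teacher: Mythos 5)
Your proof is correct and follows essentially the same route as the paper's: the algorithm is a majority-sign rule, so the mutual information is capped at $\log 2$ (the paper via the Markov chain through $\sign\big(\sum_i R_i\big)$, you directly via $\ent(W) \le \log 2$), and the generalization gap reduces to $\frac{1}{2n}\bE\big|\sum_i R_i\big|$, lower bounded by the sharp $L^1$ Khintchine constant $1/\sqrt{2}$ (Khintchine--Kahane in the paper, Szarek in your write-up). The only difference is cosmetic: the paper takes the linear loss $-L\langle w, z\rangle$ on a Euclidean ball of diameter $B$, which handles general $L$ and $B$ directly, while you take the 0--1 loss with the discrete metric and $L = B = 1$, which covers the general case after trivially rescaling the loss and the metric.
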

The theorem immediately implies that the bounds from \Cref{th:wasserstein_single_letter,th:wasserstein_single_letter_rs} and~\Cref{cor:total_variation_relent_single_letter,cor:total_variation_relent_full_rs,cor:total_variation_relent_single_letter_rs} are also tight since the Wasserstein distance bounds led to the mutual information bounds, and due to the relationships between the mutual information bounds from~\Cref{subsec:comparison_of_the_bounds}.

\Cref{th:minf_bounds_tight} also shows that there exists a learning algorithm for which the bounds are tight. This implies that the bounds cannot be improved \emph{simultaneously} (or \emph{uniformly}) for every learning algorithm. Note, however, that there may exist a tighter bound for some \emph{specific} learning algorithms. The proof of this theorem is inspired by~\citep{orabona2019modern}. Since it does not give any additional insight into the discussion, we delegate it to~\Cref{app:minf_bounds_tight}.

Finally, we note that the results from this~\Cref{subsec:implications_bounds_using_mutual_information} can be extended to losses that are Lipschitz and have a bounded CGF, without the need for the hypothesis space $\cW$ to be bounded. Therefore, the results from~\Cref{cor:total_variation_relent_full,cor:total_variation_relent_single_letter,cor:total_variation_relent_full_rs,cor:total_variation_relent_single_letter_rs,cor:total_variation_relent_random_subset,cor:total_variation_relent_random_subset_rs} are valid in a more general setting. The result follows from our extension of the Bobkov--Götze theorem~\citep[Theorem 4.8]{van2014probability} from~\Cref{lemma:bobkov_gotze}. To realize this, consider a random variable $X$ distributed according to $\bP$. Also consider an $L$-Lipschitz function $f$ that has a CGF $\Lambda_{f(X)}(\lambda)$ bounded by $\varphi(\lambda)$, then $\nicefrac{f(X)}{L}$ is $1$-Lipschitz and has a CGF bounded by $\varphi(\nicefrac{\lambda}{L})$. Hence, if $f(X) - \bE[X] \leq L \bW_\rho(\bQ, \bP)$ for some distribution $\bQ$, it follows that $f(X) - \bE[X] \leq L \varphi_*^{-1} \big( \relent(\bQ \Vert \bP) \big)$.

As an example, assume that the loss $\ell(\cdot, z)$ is $L$-Lipschitz and that $\ell(W', z)$ has a CGF bounded by $\varphi$ for every $z \in \cZ$, where $W'$ is distributed according to $\bQ$. Then, we may obtain the following result as a corollary of \Cref{th:wasserstein_single_letter} due to our extension of the Bobkov--Götze's theorem in~\Cref{lemma:bobkov_gotze}:

\begin{equation*}
   \bE \big[ \gen(W,S) \big] \leq  \frac{2L}{n} \sum_{i=1}^n \varphi_*^{-1} \big( I(W;Z_i) \big) \leq 2L \varphi_*^{-1} \mleft( \frac{I(W;S)}{n} \mright).
\end{equation*}
This result encompasses the case where the loss is bounded in $[a,b]$ since if a random variable is bounded in $[a,b]$ it is $\nicefrac{(b-a)}{2}$-sub-Gaussian. 

Note, however, that in this case the
``bounded CGF'' condition is different from the one in~\Cref{def:bounded_cgf} and~\Cref{sec:bounds_using_conditional_mutual_information,sec:random_subset_and_single_letter}. There, the loss $\ell(w,Z)$ is supposed to have a CGF bounded by $\psi$ for all $w \in \cW$, while now we are considering that the loss $\ell(W',z)$ has a CGF bounded by $\varphi$ for all $z \in \cZ$. Therefore, instead of considering that the loss is concentrated around its expectation with respect to the data, we can engineer a prior distribution $\bQ$ such that the loss is concentrated around its expectation with respect to hypotheses sampled from the said prior.

\subsection{Bounds Using Other $f$-divergences}
\label{subsec:bounds_using_f_divergences}

In~\citep{rodriguez2021tighter}, we note how using the joint range technique from Harremoës
and Vajda~\cite{harremoes2011pairs} discussed in~\Cref{subsec:f-divergences} allows us to find expected generalization bounds based on different $f$-divergences.

In particular, as an example, starting from~\Cref{cor:total_variation_full} we may obtain the corollaries
\begin{align*}
    \bE \big[ \gen(W,S) \big] &\leq LB \bE \mleft[ \hel(\bP_W^S, \bP_W) \sqrt{1 - \frac{\hel^2 (\bP_W^S , \bP_W)}{4} }\mright]  \textnormal{ and} \\
    \bE \big[ \gen(W,S) \big] &\leq \frac{LB}{2} \bE \mleft[ \sqrt{ \chi^2 (\bP_W^S \Vert \bP_W)
    } \mright],
\end{align*}
although this extends more broadly to any $f$-divergence and it holds for the single-letter, random subset, and randomized subset versions of~\Cref{cor:total_variation_full} as well, including those with a generic distribution or Markov kernel $\bQ$. We only present these two as the Hellinger distance and the $\chi^2$ divergence are special due to their connections to hypothesis testing and the fact that most $f$-divergences locally behave like the $\chi^2$ divergence as described in~\Cref{subsec:f-divergences}.

Moreover, recall from the start of this \Cref{sec:bounds_using_wasserstein_distance}, that the results based on the relative entropy and the Wasserstein distance are obtained through decoupling lemmas, either~\Cref{lemma:decoupling_relent} powered by the Donsker and Varadhan~\Cref{lemma:dv_and_gvp} or the Kantorovich--Rubinstein duality from~\Cref{lemma:kantorovich_rubinstein_duality}. This idea can be extended to other $f$-divergences using the variational representation of $f$-divergences from~\Cref{lemma:variational_representation_f_divergences}. As an example, considering the variational representation of the $\chi^2$ divergence from~\Cref{cor:variational_representation_chi_2}, the following result stems directly by noting that if $\ell(w,Z)$ has a variance bounded by $\sigma^2$, then $\emprisk(w)$ has a variance bounded by $\nicefrac{\sigma^2}{n}$ due to the Independence of the samples.

\begin{proposition}
\label{prop:gen_chi_2}
Consider a loss function $\ell(w,Z)$ with a variance bounded by $\sigma^2$ for all $w \in \cW$. Then,
\begin{equation*}
    \bE \big[ \gen(W,S) \big] \leq \bE \mleft[ \sqrt{ \frac{\sigma^2 \chi^2(\bP_W^S \Vert \bP_W)}{n}}\mright].
\end{equation*}
\end{proposition}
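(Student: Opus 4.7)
The plan is to apply the variational representation of the $\chi^2$-divergence from \Cref{cor:variational_representation_chi_2} in a disintegrated manner, exploiting the fact that sample independence implies a very small variance of the empirical risk as a function of the training set.

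First, I would fix a hypothesis $w \in \cW$ and view $\emprisk(w, \cdot) : \cZ^n \to \bR$ as a function of the dataset alone. Under $\bP_S = \bP_Z^{\otimes n}$ this function has mean $\bE_S[\emprisk(w,S)] = \poprisk(w)$, and since the $Z_i$ are independent with $\var_Z[\ell(w,Z)] \leq \sigma^2$, the variance satisfies
\begin{equation*}
\var_{S \sim \bP_S}[\emprisk(w,S)] \;=\; \frac{1}{n^2}\sum_{i=1}^n \var_Z[\ell(w,Z)] \;\leq\; \frac{\sigma^2}{n}.
\end{equation*}
Then I would apply \Cref{cor:variational_representation_chi_2} conditionally on $W=w$, with $\bP = \bP_S^{W=w}$, $\bQ = \bP_S$, and test function $g(s) = \emprisk(w,s)$, which produces the pointwise (in $w$) inequality
\begin{equation*}
\bigl(\bE^{W=w}[\emprisk(w,S)] - \poprisk(w)\bigr)^2 \;\leq\; \chi^2(\bP_S^{W=w} \Vert \bP_S)\cdot \var_{\bP_S}[\emprisk(w,S)] \;\leq\; \frac{\sigma^2 \chi^2(\bP_S^{W=w} \Vert \bP_S)}{n}.
\end{equation*}

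Taking square roots and integrating over $W \sim \bP_W$, combined with $|\bE[X]| \leq \bE[|X|]$ and the tower property (so that $\bE_W[\bE^W[\emprisk(W,S)]] = \bE[\emprisk(W,S)]$), would give
\begin{equation*}
\bE[\gen(W,S)] \;=\; \bE[\poprisk(W)] - \bE[\emprisk(W,S)] \;\leq\; \bE\!\left[\sqrt{\frac{\sigma^2 \chi^2(\bP_S^W \Vert \bP_S)}{n}}\right].
\end{equation*}
The joint $\chi^2$-divergence disintegrates symmetrically, namely $\bE_W[\chi^2(\bP_S^W \Vert \bP_S)] = \chi^2(\bP_{W,S} \Vert \bP_W \otimes \bP_S) = \bE_S[\chi^2(\bP_W^S \Vert \bP_W)]$, which matches the statement written in the same $\bP_W^S \Vert \bP_W$ convention as the earlier Wasserstein and total variation bounds.

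The main subtlety will be keeping the square root inside the outer expectation rather than losing tightness through Jensen's inequality. A naive joint application of \Cref{cor:variational_representation_chi_2} on the product space, with test function $g(w,s) = \emprisk(w,s)-\poprisk(w)$ (centered under $\bP_W \otimes \bP_S$) and the same variance bound $\bE_{W'}[\var_S[\emprisk(W',S)]] \leq \sigma^2/n$, also goes through and would yield only the weaker $\sqrt{\sigma^2\,\bE_S[\chi^2(\bP_W^S \Vert \bP_W)]/n}$; preserving the $\bE[\sqrt{\cdot}]$ expression of the proposition requires the disintegrated route above, where the $\sigma^2/n$ variance bound on $\emprisk(w,\cdot)$ is valid pointwise in $w$ under $\bP_S$ rather than only on average under $\bP_W \otimes \bP_S$.
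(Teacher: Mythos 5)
Your core computation is sound, and it uses exactly the ingredients the paper's own one-line proof invokes: the variational representation of \Cref{cor:variational_representation_chi_2} together with the fact that, by independence, $\var_{S}\big[\emprisk(w,S)\big] \leq \nicefrac{\sigma^2}{n}$ for every fixed $w$. Applying the representation conditionally with $\bP = \bP_S^{W=w}$, $\bQ = \bP_S$, and $g(s) = \emprisk(w,s)$ is a legitimate way to exploit that variance bound, and it correctly yields $\bE\big[\gen(W,S)\big] \leq \bE_{w \sim \bP_W}\big[ \sqrt{\sigma^2 \chi^2(\bP_S^{W=w} \Vert \bP_S)/n} \big]$.

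The gap is the final identification step. The symmetry you invoke, $\bE_W\big[\chi^2(\bP_S^{W}\Vert\bP_S)\big] = \chi^2(\bP_{W,S}\Vert\bP_W\otimes\bP_S) = \bE_S\big[\chi^2(\bP_W^{S}\Vert\bP_W)\big]$, is an identity between expectations of the divergence, not of its square root; once the square root sits inside the outer expectation the two disintegrations are genuinely different quantities, and neither dominates the other in general. For instance, with $\cW = \cZ = \{0,1\}$ and joint probabilities $\bP_{W,S}(0,0)=\tfrac{1}{2}$, $\bP_{W,S}(0,1)=\bP_{W,S}(1,1)=\tfrac{1}{4}$, $\bP_{W,S}(1,0)=0$, one computes $\bE_W\big[\sqrt{\chi^2(\bP_S^{W}\Vert\bP_S)}\big]=\tfrac{1}{2}$ while $\bE_S\big[\sqrt{\chi^2(\bP_W^{S}\Vert\bP_W)}\big]=\tfrac{1}{\sqrt{3}}$ (both disintegrated $\chi^2$'s average to $\tfrac{1}{3}$, as they must), and exchanging the roles of $W$ and $S$ reverses the comparison. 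So what you have proved is a backward-channel bound, not the proposition as stated with $\chi^2(\bP_W^S\Vert\bP_W)$ under an expectation over $S$. Your own closing remark shows why this cannot be patched by switching sides: in the forward disintegration (fixed $s$, distributions over $w$) the variance produced by \Cref{cor:variational_representation_chi_2} is $\var_{w' \sim \bP_W}\big[\gen(w',s)\big]$, which the assumption does not control by $\nicefrac{\sigma^2}{n}$; the only route from your inequality to a bound featuring $\chi^2(\bP_W^S\Vert\bP_W)$ passes through the joint divergence and lands on the weaker $\sqrt{\sigma^2\,\bE_S[\chi^2(\bP_W^S\Vert\bP_W)]/n}$ with the expectation inside the root — precisely the relaxation you set out to avoid.
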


This proposition offers an alternative to~\Cref{th:mi_bounded_variance} in terms of the $\chi^2$ divergence. Note that since $\relent \leq \log(1 + \chi^2)$, although more complicated, \Cref{th:mi_bounded_variance} is often tighter than~\Cref{prop:gen_chi_2}. A similar discussion will appear later in~\Cref{subsec:losses_with_bounded_moment} in the context of PAC-Bayes bounds, where these ideas are refined by~\citet{ohnishi2021novel} and~\citet{esposito2021generalization}.

Finally, we may also mention that since the total variation is symmetric, applying Pinsker's inequality with the distributions in the opposite order to \Cref{cor:total_variation_relent_full,cor:total_variation_relent_single_letter,cor:total_variation_relent_random_subset,cor:total_variation_relent_full_rs,cor:total_variation_relent_single_letter_rs,cor:total_variation_relent_random_subset_rs} and further applying Jensen's inequality yields bounds based on the lautum information $\texttt{L}$ \citep{palomar2008lautum}.
For instance, a corollary of \Cref{th:wasserstein_single_letter} is
\begin{equation*}
    \bE \big[ \gen(W,S) \big]
    \leq \frac{LB}{n}\sum_{i=1}^n  \Psi\big( \textnormal{\texttt{L}}(W;Z_i) \big),
\end{equation*}
where the lautum information $\textnormal{\texttt{L}}(W;Z_i)$ also measures the level of dependence between the hypothesis $W$ and the samples $Z_i$, has ties to independence testing, and has similar properties to the mutual information~\citep{palomar2008lautum}.

\subsection{The Choice of the Metric and the Backward Channel}
\label{subsec:choice_of_metric_and_backward_channel}

In~\citep{lopez2018generalization}, the authors study the characterization of the expected generalization error in terms of the discrepancy between the data distribution $\bP_S$ and the backward channel distribution $\bP_{S}^{W}$ from~\Cref{fig:algorithm_as_channel_and_backward_channel} motivated by its connection to rate-distortion theory, see, for example, \citep[Chapters 24--27]{polianskyi2022} or \citep[Chapter~10]{Cover2006}.

More concretely, they proved that the generalization error is bounded from above by the discrepancy of these distributions, where the discrepancy is measured by the Wasserstein distance of order $p$ with the Minkowski distance of order $p$ as a metric, that is, $\rho(x,y) = \lVert x-y \rVert_p$. Namely, 
\begin{equation*}
    \bE \big[ \gen(W,S) \big] \leq \frac{L}{n^{1/p}} \bE \big[\bW_{p,\lVert \cdot \rVert}^p(\bP_S,\bP_{S}^{W})]^{1/p}.
\end{equation*}
Similarly, every result in this~\Cref{sec:bounds_using_wasserstein_distance} can be replicated considering the backward channel instead of the forward channel, for example, using $\bP_{S}^{W}$ instead of $\bP_W^S$ in~\Cref{th:wasserstein_full} or $\bP_{Z_i}^{W}$ instead of $\bP_{W}^{Z_i}$ in \Cref{th:wasserstein_single_letter}.
However, in this case, the loss $\ell$ would be required to be Lipschitz with respect to the instances in the sample space $\cZ$ for hypotheses $w \in \cW$ and not  for all the hypothesis in the space $\cW$, thus exploiting the geometry of the samples' space and not the hypotheses' one.

As an example, noting that $$\bE \big[ \gen(W,S) \big] = \bE \big[\emprisk(W,S') - \emprisk(W,S) \big],$$ where $S'$ is an independent copy of the training set $S$ such that $\bP_{W,S'} = \bP_W \otimes \bP_S$ produces the bound
\begin{equation*}
    \bE \big[ \gen(W,S) \big] \leq L \bE\big[\bW_\rho(\bP_{S},\bP_{S}^{W}) \big].
\end{equation*}
Compared to~\citep{lopez2018generalization}, these results (i) are valid for any metric $\rho$ as long as the loss  $\ell$ is Lipschitz under $\rho$,
 and (ii) have single-letter and random-subset versions, and (iii) have variants in both the standard and randomized-subsample settings.

The choice of the metric can be decisive for a tight analysis of the presented bounds, and there are times when a loss function can be Lipschitz under several metrics. For example, a bounded loss function represented as a norm is Lipschitz with respect to that norm and the discrete metric. Nonetheless, in many situations, the metric of choice becomes apparent based on the loss function. For example, if we consider the forward channel bounds and samples of the type $z = (x,y)$ and the following two common supervised tasks:
\begin{itemize}
    \item Regression. If a norm is used as the loss function $\ell(w,z) = \lVert w - y \rVert$, then such a norm is also a good choice for a metric since by the reverse triangle inequality the loss is 1-Lipschitz under that metric: $\big \lvert \lVert w - y \rVert - \lVert w' - y \rVert \big \rvert \leq \lVert w - w' \rVert$ for all $w, w' \in \mathcal{W}$.
    \item Classification. If the 0-1 loss is used as the loss function $\ell(w,z) = \bI_{\{f_w(w) \neq y \}}(w,z)$ for some parameterized model $f_w$, then the discrete metric is a good choice since the loss is also 1-Lipschitz under this metric: $\lvert \bI_{\{ f_w(x) \neq y \}}(w,z) - \bI_{\{ f_{w'}(x) \neq y\}}(w',y) \big \rvert \leq \bI_{\{w \neq w'\}}(w,w')$.
\end{itemize}

Similarly, for the backward channel bounds, it is known that the logistic loss, the softmax loss,\footnote{This result can be derived from~\citep[Proposition 3]{gao2018properties} and the $L_1 - L_2$ inequality.} the Hinge loss, and many distance-based losses like norms, the Huber, $\epsilon$-insensitive, and pinball losses, are Lipschitz under the $L_1$ norm metric $\rho(z,z') = \lvert z - z' \rvert$~\citep[Chapter 2]{steinwart2008support}\citep{gao2018properties}. 

\section{Application: Noisy, Iterative Learning Algorithms}
\label{sec:noisy_iterative_learning_algos}

\looseness=-1 This section discusses an application of the previously presented generalization error bounds for noisy, iterative algorithms. This application and line of work was initiated by~\citet{pensia2018generalization}, who were the first to note that a combination of the \emph{chain rule} of the mutual information (and the relative entropy) and the \emph{more data, more information} (or the \emph{monotonicity} of the relative entropy) made these kind of bounds very suitable to study iterative algorithms (see \Cref{prop:properties_minf,prop:properties_relative_entropy} to recall these properties). In their paper, \citeauthor{pensia2018generalization} studied the stochastic gradient Langevin dynamics (SGLD)~\cite{gelfand1991recursive,welling2011bayesian} and the stochastic gradient Hamiltonian Monte Carlo (SGHMC)~\citep{chen2014stochastic} algorithms. Later works, including ours, mostly focused on improving the bounds for SGLD~\citep{li2019generalization,bu2020tightening,negrea2019information, haghifam2020sharpened,rodriguez2020randomsubset, wang2021analyzing, wang2023generalization,futami2024time}  and developing new bounds for stochastic gradient descent (SGD)~\citep{neu2021information, wang2021generalization,wang2024sample}.

\subsection{Stochastic Gradient Langevin Dynamics}
\label{subsec:sgld}

The SGLD algorithm is an iterative, optimization procedure to learn the parameters of a parameterized model, often a differentiable network, from a training set $s \in \cZ^n$~\cite{gelfand1991recursive,welling2011bayesian}. As discussed previously in~\Cref{sec:parameterized_models}, since the parameters returned by the algorithm $w \in \bR^d$ completely characterize the hypothesis, we may employ the two terms without distinction.

The method originates from the Bayesian inference community and combines ideas from stochastic optimization and Langevin dynamics. This combination helps SGLD to explore the parameter space more effectively. 
The algorithm is simple:
\begin{itemize}
    \item First, it starts with a random initialization $W_0$ of the model's parameters.
    \item Then, for each iteration $t \in [T]$, it samples a random batch $s_{V_t}$ from the dataset $s$; updates the previous parameter $W_{t-1}$ with a scaled $(-\eta_t)$ version of the gradient of the empirical risk of that parameter in the random batch, that is, $\nabla_w \emprisk (W_{t-1}, s_{V_t})$; and adds a scaled $(\sigma_t)$ isotropic Gaussian random noise $\Xi_t \sim \cN(0,I_d)$. More precisely, the update rule is
    \begin{equation}
    	\label{eq:sgld}
    	W_t = W_{t-1} - \eta_t \nabla_w \emprisk(W_{t-1}, s_{V_t}) + \sigma_t \Xi_t,
    \end{equation}
    where $W_t$ are the parameters (hypothesis) at iteration $t$.
    \item Finally, it returns the final hypothesis $W_T$.
\end{itemize}

When the batch is composed of all samples, that is, there is no stochasticity in the sample selection, the algorithm is called simply 
Langevin dynamics (LD).

\subsubsection{Unconditional Bounds}

\citet{pensia2018generalization} leveraged the generalization bounds from~\Cref{th:mi_bound_bounded}~\citep{xu2017information} and analyzed the information captured by the final hypothesis returned by SGLD about the training set. As discussed in the section opening, the development of their bounds follows from a sequential application of the \emph{more data, more information} and the \emph{chain rules} of the mutual information (\Cref{prop:properties_minf}), that is
\begin{align*}
	\minf(W_T; S) &\leq \minf(W^T, W_0; S) \\
	&= \minf(W_0; S) + \sum_{t=1}^T \minf(W_t; S | W^{t-1}) \\
	&= \sum_{t=1}^T \minf(W_t; S | W_{t-1}),
\end{align*}
where in the last equation we used that the initial parameter $W_0$ is independent of the data $S$ and the Markovian nature of the problem~\eqref{eq:sgld}. Then, we may note that $\minf(W_t; S | W_{t-1}) = \ent(W_t | W_{t-1}) - \ent(W_t | S, W_{t-1})$. Now, given $S$ and $W_{t-1}$, the random variable $W_t$ follows a Gaussian distribution with covariance $\sigma_t^2 I_d$ and mean given by~\eqref{eq:sgld}, and therefore $\ent(W_t | S, W_{t-1}) = \frac{d}{2} \log ( 2 \pi e \sigma_t^2)$. On the other hand, the distribution of $W_t$ given $W_{t-1}$ is unknown. Since the differential entropy is invariant to translations~\citep[Theorem 8.6.3]{Cover2006}, we may instead study the random variable $W_t - W_{t-1}$ given $W_{t-1}$. Now, we may bound this variable's expected squared norm as
\begin{equation*}
	\bE^{W_{t-1}} \Big[ \lVert W_t - W_{t-1} \rVert_2^2 \Big] \leq \eta_t^2 \bE^{W_{t-1}} \Big [ \lVert \nabla_w \emprisk(W_{t-1}, S_{V_t}) \rVert^2 \Big] + \sigma_t^2 \bE^{W_{t-1}} \Big[ \lVert \Xi_t \rVert_2^2 \Big].
\end{equation*}

When studying gradient-based, iterative algorithms, it is common to consider Lipschitz-continuous losses (\Cref{def:lipschitz}).
The main reason for this assumption is that if a loss is $L$-Lipschitz-continuous, then it has a bounded gradient, that is $\sup_{w \in \cW, z \in \cZ} \lVert \ell(w,z) \rVert_2 \leq L$. Under this assumption, the expected squared norm is bounded by $\eta_t^2 L^2 + d \sigma^2$. Notably, the entropy of a variable with an expected squared norm bounded by $a$ is bounded by the entropy of a Gaussian random variable with an isotropic covariance with scale $\nicefrac{a}{d}$. Combining this upper bound with the prior calculation of the entropy $\ent(W_t | W_{t-1}, S)$ gives us an upper bound for the mutual information
\begin{align*}
	\minf(W_t; S | W^{t-1}) &\leq \frac{d}{2} \log \left( \frac{2 \pi e (\eta_t^2 L^2 + d \sigma_t^2)}{d} \right) - \frac{d}{2} \log \left( 2 \pi e \sigma_t^2 \right) \\
	&= \frac{d}{2} \log \left( 1 + \frac{\eta_t^2 L^2 }{d \sigma_t^2} \right).
\end{align*}

The sum of all these terms bounds the mutual information and, as per~\Cref{sec:bounds_using_mutual_information}, it also bounds the expected generalization error.

\begin{proposition}[{\cite[Theorem 1]{pensia2018generalization}}]
\label{prop:pensia_mi_bound}
Consider an $L$-Lipschitz continuous loss. Then, the mutual information $\minf(W_T;S)$ for the SGLD algorithm is bounded by
\begin{equation*}
	\minf(W_T; S) \leq \frac{d}{2} \sum_{t=1}^T \log \left( 1 + \frac{\eta_t^2 L^2 }{d \sigma_t^2} \right).
\end{equation*}
\end{proposition}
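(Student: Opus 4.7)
The proof plan follows closely the chain of reasoning already sketched in the paragraphs preceding the proposition. The idea is to break the information that the final iterate $W_T$ carries about the training set $S$ along the trajectory of the algorithm, and then bound the per-step information leakage by a Gaussian maximum-entropy argument exploiting the additive Gaussian noise in the update rule.

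First, I would invoke the \emph{more data, more information} property and the \emph{chain rule} of mutual information from \Cref{prop:properties_minf} to decompose
\begin{equation*}
    \minf(W_T;S) \leq \minf(W^T, W_0; S) = \minf(W_0;S) + \sum_{t=1}^T \minf(W_t ; S \mid W^{t-1}).
\end{equation*}
Since $W_0$ is chosen independently of $S$, the first term vanishes. The Markovian nature of the SGLD recursion~\eqref{eq:sgld} means that conditioned on $W_{t-1}$, the next iterate $W_t$ depends on $S$ only through the batch $S_{V_t}$ and the fresh noise $\Xi_t$; thus the whole history $W^{t-1}$ is redundant given $W_{t-1}$, and $\minf(W_t; S \mid W^{t-1}) = \minf(W_t; S \mid W_{t-1})$.

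Next, I would expand each term as $\minf(W_t; S \mid W_{t-1}) = \ent(W_t \mid W_{t-1}) - \ent(W_t \mid S, W_{t-1})$. For the second (subtracted) term, conditionally on $(S, W_{t-1})$ the gradient step is deterministic and only the additive noise $\sigma_t \Xi_t$ with $\Xi_t \sim \cN(0, I_d)$ remains random; hence $W_t \mid S, W_{t-1}$ is Gaussian with covariance $\sigma_t^2 I_d$, giving $\ent(W_t \mid S, W_{t-1}) = \tfrac{d}{2} \log(2\pi e \sigma_t^2)$. For the first term, translation invariance of differential entropy lets me replace $W_t$ by $W_t - W_{t-1}$. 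Using the Lipschitz assumption (so that $\lVert \nabla_w \emprisk(W_{t-1}, S_{V_t}) \rVert_2 \leq L$) together with independence of $\Xi_t$ from the gradient term, the expected squared norm is bounded by $\eta_t^2 L^2 + d\sigma_t^2$.

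Finally, the maximum entropy principle tells us that among all random vectors in $\bR^d$ with a given second-moment bound $a$, differential entropy is maximized by an isotropic Gaussian with covariance $(a/d) I_d$, whose entropy equals $\tfrac{d}{2}\log(2\pi e a / d)$. Applying this with $a = \eta_t^2 L^2 + d\sigma_t^2$, subtracting the exact Gaussian entropy from the previous step, and summing over $t$ yields
\begin{equation*}
    \minf(W_T;S) \leq \sum_{t=1}^T \mleft[\frac{d}{2}\log\mleft(\frac{2\pi e(\eta_t^2 L^2 + d\sigma_t^2)}{d}\mright) - \frac{d}{2}\log(2\pi e \sigma_t^2)\mright] = \frac{d}{2}\sum_{t=1}^T \log\mleft(1 + \frac{\eta_t^2 L^2}{d\sigma_t^2}\mright),
\end{equation*}
as desired. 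The only potentially delicate step is the entropy-maximization bound on $\ent(W_t \mid W_{t-1})$: one has to pass from a conditional entropy to the marginal second-moment bound carefully, observing that the bound on $\bE^{W_{t-1}}[\lVert W_t - W_{t-1}\rVert_2^2]$ is uniform in $W_{t-1}$ and then invoking the maximum-entropy characterization pointwise in $W_{t-1}$ before taking expectation; everything else is routine information-theoretic bookkeeping.
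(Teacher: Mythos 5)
Your proposal is correct and follows essentially the same route as the paper's argument: the chain-rule/Markov decomposition into per-step terms $\minf(W_t;S\mid W_{t-1})$, the exact Gaussian conditional entropy $\tfrac{d}{2}\log(2\pi e \sigma_t^2)$, the Lipschitz bound $\eta_t^2 L^2 + d\sigma_t^2$ on the second moment of $W_t - W_{t-1}$, and the maximum-entropy Gaussian comparison. The care you note about applying the second-moment/maximum-entropy bound conditionally on $W_{t-1}$ is precisely the step the paper also relies on, so nothing is missing.
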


Let the noise variance be $\sigma_t^2 = \eta_t$, the batch size be fixed $|v_t| = |v|$ for all $t \in [T]$, and the algorithm run for $k$ epochs so that $T \approx \nicefrac{k n}{|v|}$. If we let the step size decrease as $\eta_t = \nicefrac{c}{t}$ for some constant $c > 0$, then, a corollary of this proposition is that the mutual information is bounded by
\begin{equation*}
	\minf(W_T; S) \leq \frac{L^2 c}{2} \mleft (1 + \log \frac{nk}{|v|} \mright).
\end{equation*}
This follows by first applying the bound $\log(1+x) \leq x$ for all $x > 0$ and then the common bound on the Harmonic numbers $\sum_{t=1}^T \nicefrac{1}{t} \leq 1 + \log T$. \citet{pensia2018generalization} considered the bound from~\Cref{th:mi_bound_bounded}~\citep{xu2017information} to establish that if the loss was sub-Gaussian, then the SGLD algorithm had a generalization error with a vanishing rate in $\cO(L \sqrt{\nicefrac{\log (\frac{nk}{|v|})}{n}} )$. At the time, this result compared negatively with respect to \citep{mou2018generalization} in terms of the relationship with the data, as they showed that the generalization error was in $\cO(L \nicefrac{ \sqrt{\log n k}}{n} )$ for a single sample batch size $|v|=1$.\footnote{The logarithmic dependence in \citep{mou2018generalization} is actually better, although an exact portrait of this dependence goes beyond the scope of this exposition.} In hindsight, considering a fast-rate bound like our~\Cref{th:mi_bound_bounded_fast_rate}, \citet{pensia2018generalization}'s result implies that the population risk deviates from an estimate based on the empirical error at a rate in $\cO(L^2 \nicefrac{\log (\frac{nk}{|v|})}{n} )$, essentially matching the rate from \citep{mou2018generalization}.

\citet{bu2020tightening} refined \citet{pensia2018generalization}'s analysis by considering the single-letter mutual information. The analysis follows similarly to the one above employing the \emph{more data, more information} and the \emph{chain rules} of the mutual information properties from~\Cref{prop:properties_minf}. Although for a particular iterate $t$, the resulting mutual information did not improve substantially, as they showed that
\begin{equation*}
	\minf(W_t; Z_i | W_{t-1}) \leq \frac{d}{2} \log \left( 1 + \frac{\eta_t^2 L^2 }{d \sigma_t^2} \right);
\end{equation*}
\citet{bu2020tightening} introduced an innovation that improved the logarithmic dependence $\log(nk)$ in the final rate. Their trick was to consider a random variable $R$ that is only dependent on the parameters (or hypothesis) $W_t$ and condition on this random variable as described in~\Cref{rem:extra_randomness}. In this case, the randomness was chosen to be the trajectory of batches used for each of the steps in~\eqref{eq:sgld}, that is, $R = V^T$. In this way, the mutual information $\minf(W_t; Z_i | W_{t-1}, V^T = v^T)$ would only be included in the final summation for the iterations $t \in [T]$ in which $i \in v_T$, which are denoted as $\cT_i(v^T)$. This observation led to the following result.

\begin{proposition}[{\cite[Proposition 3, modified]{bu2020tightening}}]
\label{prop:bu_single_letter_mi_bound}
Consider an $L$-Lipschitz continuous loss. Then, the single-letter mutual information $\minf(W_T;Z_i)$ for the SGLD algorithm for a particular set of batches $v^T$ is bounded by
\begin{equation*}
	\minf(W_T;Z_i| V^T = v^T) \leq \frac{d}{2} \sum_{t \in \cT_i(v^T)} \log \left( 1 + \frac{\eta_t^2 L^2 }{d \sigma_t^2} \right).	
\end{equation*}
\end{proposition}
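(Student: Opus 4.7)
The plan is to follow the template of \Cref{prop:pensia_mi_bound} with two modifications: replace the full training set $S$ by a single instance $Z_i$, and exploit the fact that, once the batch trajectory $V^T=v^T$ is fixed as side information, $Z_i$ only enters the SGLD recursion~\eqref{eq:sgld} during those iterations $t\in\cT_i(v^T)$ for which $i\in v_t$.

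First I would apply the ``more data, more information'' inequality from \Cref{prop:properties_minf} followed by the chain rule, giving $\minf(W_T;Z_i\mid V^T=v^T)\le\sum_{t=1}^{T}\minf(W_t;Z_i\mid W^{t-1},V^T=v^T)$, with the $t=0$ contribution vanishing by independence of $W_0$ from $Z_i$. Exploiting the Markov structure of the recursion---given $W_{t-1}$, $V^T=v^T$, the batch samples $S_{v_t}$, and the Gaussian perturbation $\Xi_t$, the iterate $W_t$ is determined and the history $W^{t-2}$ does not re-enter---I would then reduce each term to $\minf(W_t;Z_i\mid W_{t-1},V^T=v^T)$.

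Next I would split the sum into the two regimes. For $t\notin\cT_i(v^T)$ the conditional law of $W_t$ given $(W_{t-1},V^T=v^T)$ is produced by the Markov kernel that reads only $W_{t-1}$, the batch samples $S_{v_t}$, and the independent noise $\Xi_t$; since $i\notin v_t$ this kernel does not read $Z_i$, and combined with the a priori independence of $Z_i$ from the batch data it yields $\minf(W_t;Z_i\mid W_{t-1},V^T=v^T)=0$. For $t\in\cT_i(v^T)$ I would rerun Pensia et al.'s Gaussian-channel argument in single-letter disguise. The upper bound $\ent(W_t\mid W_{t-1},V^T=v^T)\le\tfrac{d}{2}\log\bigl(2\pi e(\eta_t^2L^2+d\sigma_t^2)/d\bigr)$ follows from the isotropic maximum-entropy principle applied to $W_t-W_{t-1}$, using $\|\nabla_w\emprisk(W_{t-1},S_{v_t})\|_2\le L$ (by Lipschitzness) and $\bE[\|\sigma_t\Xi_t\|_2^2]=d\sigma_t^2$. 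The lower bound $\ent(W_t\mid W_{t-1},V^T=v^T,Z_i)\ge\tfrac{d}{2}\log(2\pi e\sigma_t^2)$ follows from ``conditioning reduces entropy'' after adjoining $S_{v_t}$ to the conditioning, which leaves only the isotropic Gaussian noise. Subtracting gives the per-step contribution $\tfrac{d}{2}\log\bigl(1+\eta_t^2L^2/(d\sigma_t^2)\bigr)$, and summing over $\cT_i(v^T)$ yields the claim.

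The main obstacle is justifying that the terms with $t\notin\cT_i(v^T)$ are genuinely zero. Although $Z_i$ and $S_{v_t}$ are i.i.d.\ and hence unconditionally independent, conditioning on $W_{t-1}$---which may have been produced from earlier batches containing $Z_i$---can in principle re-introduce dependence between them through the trajectory. The clean way around this is to identify the conditional kernel $(w,z)\mapsto\bP_{W_t}^{W_{t-1}=w,V^T=v^T,S_{v_t}=z}$ as a measurable map that, for $i\notin v_t$, never reads $Z_i$, and to argue vanishing of the mutual information directly from this functional representation together with the independence of $\Xi_t$ from the rest of the system. This encapsulates the single-letter refinement that produces the $|v|/n$ improvement in the logarithmic factor relative to \Cref{prop:pensia_mi_bound}.
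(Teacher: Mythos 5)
Your route is the same as the paper's: the \emph{more data, more information} property plus the chain rule from \Cref{prop:properties_minf}, conditioning on the batch trajectory $V^T=v^T$, the Pensia-style maximum-entropy/Gaussian computation for the steps with $i\in v_t$ (your upper bound on $\ent(W_t\mid W_{t-1},V^T=v^T)$ and lower bound on $\ent(W_t\mid Z_i,W^{t-1},V^T=v^T)$ by adjoining $S_{v_t}$ are exactly what is needed), and the claim that the steps with $t\notin\cT_i(v^T)$ contribute nothing. Up to that last claim the two arguments coincide.

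The gap is precisely the point you flag, and your proposed fix does not close it. That the kernel $(w,z)\mapsto\bP_{W_t}^{W_{t-1}=w,V^T=v^T,S_{v_t}=z}$ never reads $Z_i$, combined with independence of $\Xi_t$, would give $\minf(W_t;Z_i\mid W^{t-1},V^T=v^T)=0$ only if $S_{v_t}$ were conditionally independent of $Z_i$ given the past trajectory; conditioning on $W^{t-1}$ is exactly what destroys this. Concretely, take $n=2$, batch $\{1,2\}$ at step $1$ and batch $\{2\}$ at step $2$: given $W_1$, the quantity $W_0-W_1$ is a noisy observation of the averaged gradient, so knowing $Z_1$ sharpens the posterior on $Z_2$ and hence on $W_2$, giving $\minf(W_2;Z_1\mid W_1,W_0,V^2=v^2)>0$ even though $1\notin v_2$. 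The vanishing holds only under extra structure (steps before $Z_i$ is first used, or schedules for which the posterior over the samples given the trajectory factorizes so that $Z_{v_t}\perp Z_i\mid W^{t-1}$, e.g.\ single-pass disjoint batches), and in the paper this step is asserted rather than proved (following \citet{bu2020tightening}); your functional-representation argument does not supply the missing conditional independence. The rigorous way to make ``only the iterations in $\cT_i(v^T)$ contribute'' exact in general is the data-dependent-prior route of \Cref{th:tighter_random_subset_mi_bounded_loss,th:negrea_sgld}: there the posterior conditions on the full dataset, each step is exactly Gaussian, and the prior kernel can be chosen to coincide with it whenever the held-out index is not in the batch, so those relative-entropy terms are genuinely zero.
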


Under the same assumptions from before, that is, the noise variance be $\sigma_t^2 = \eta_t$, the algorithm runs for $k$ epochs so that $T \approx \nicefrac{nk}{|v|}$, and the step size decrease as $\eta_t = \nicefrac{c}{t}$ for some constant $c > 0$, it follows from~\Cref{prop:bu_single_letter_mi_bound} that
\begin{equation*}
	\minf(W_T;Z_i| V^T = v^T) \leq \frac{L^2 c}{2} \sum_{t \in \cT(v^T)} \frac{1}{t}.
\end{equation*}
Under the assumption that each sample is only used once per epoch, combining this bound on the individual mutual information with a slow-rate single-letter bound like~\Cref{th:single_letter_mi_bound_general_cgf} ensures that the generalization error rate is in $\cO(L \sqrt{\nicefrac{\log k}{n}} )$, which improves the previous analysis by a factor of $\cO(\log(\frac{n}{|v|}))$~\citep{bu2020tightening}, with the maximal benefit for a batch size of $|v|=1$. The key is to take the expectation with respect to the random indices outside of the square root for bounded or sub-Gaussian losses. Unfortunately, this technique does not offer a benefit with respect to the prior analysis when using fast-rate bounds since
\begin{align*}
	\frac{1}{n} \sum_{i=1}^n \minf(W_T; Z_i | V^T) &\leq  \frac{L^2 c}{2n} \sum_{i=1}^n \left \{ \frac{1}{i} + \sum_{t= 1}^{k-1} \frac{1}{nt} \right \} \\
	&\leq \frac{L^2 c}{2n} \big( \log(n) + 1 + \log(k-1) + 2 \big) \in \cO \left( L^2 \cdot \frac{\log nk}{n} \right).
\end{align*}

\subsubsection{Random-Subset Bounds}

\citet{negrea2019information} considered a similar analysis to the one from~\citet{pensia2018generalization} and \citet{bu2020tightening}. The main difference is that they considered a version of the random-subset bound from~\Cref{th:tighter_random_subset_mi_bounded_loss} with $m=1$. The main object to consider in this bound is the relative entropy $\relent( \bP_W^{S, R} \Vert \bQ(S^{-J}, R))$. The main appeal to considering this object instead of the mutual information or the single-letter mutual information is that it is data-dependent. To simplify the notation, let $\bQ_{W_T}^{S^{-J}, R} = \bQ(S^{-J}, R)$ be the distribution of a random variable $W_{T}'$. \citet{negrea2019information}'s analysis also begins considering the \emph{monotonicity} and the \emph{chain rules} of the relative entropy from ~\Cref{prop:properties_relative_entropy}. More precisely
\begin{align*}
	\relent( \bP_{W_T}^{S, V^T} &\Vert \bQ_{W_T}^{S^{-J}, V^T} ) \\
	&\leq \relent( \bP_{W^T}^{S, V^T} \Vert \bQ_{W^T}^{S^{-J}, V^T} ) \\
	&= \relent( \bP_{W_0}^{S, V^T} \Vert \bQ_{W_0}^{S^{-J} V^T} ) + \sum_{t = 1}^T \relent( \bP_{W_t}^{W^{t-1}, S,  V^T} \Vert \bQ_{W_t}^{W^{t-1}, S^{-J}, V^T}) \\
	&= \sum_{t = 1}^T \relent( \bP_{W_t}^{W_{t-1}, S, V^T} \Vert \bQ_{W_t}^{W_{t-1}, S^{-J}, V^T}),
\end{align*}
where we also considered the randomness $R$ to be the trajectory of batches used for each of the steps in~\eqref{eq:sgld}. The last inequality follows by choosing $\bQ_{W_0}^{S^{-J}, V^T} = \bP_{W_0}$ since $W_0$ is independent of the data, making the relative entropy zero; and also choosing $\bQ_{W_t}^{W^{t-1}, S^{-J}, V^T}$ to only depend on the past iterate. 

In this way, one may focus on bounding the relative entropy $\relent( \bP_{W_t}^{W_{t-1}, S, V^T} \Vert \bQ_{W_t}^{W_{t-1}, S^{-J}, V^T})$ by designing an appropriate data-depending distribution $\bQ_{W_t}^{W_{t-1}, S^{-J}, V^T}$. For this endeavor, consider $W_t$ and $W_t'$ be the random variables distributed according to $\bP_{W_t}^{W_{t-1}, S, V^T}$ and $\bQ_{W_t}^{W_{t-1}, S_{-J}, V^T}$ for some fixed $J = j$, $S^{-J} = s^{-j}$, and $V^T = v^T$. The random variable $W_t$ is Gaussian with mean
\begin{equation}
    \label{eq:sgld_rv_mean}
    w_{t-1} - \eta_t \nabla_w \emprisk (w_{t-1}, s_{v_t})
\end{equation}
and covariance $\sigma_t I_d$. Similarly, as in the analysis from~\citet{bu2020tightening}, when the index $j$ is not chosen in the batch $v_t$, we may choose $W_t'$ to be equal to $W_t$, thus having a zero relative entropy. When, $j$ belongs to $v_t$, a sensible choice for the design random variable $W_t'$ is another Gaussian with mean
\begin{equation}
    \label{eq:random_variable_design_sgld_standard_setting}
    W' = w_{t-1} - \eta_t \Big( \frac{|v|-1}{|v|} \nabla_w \emprisk(w_{t-1}, s_{v_t \setminus \{j\}}) + \frac{1}{|v|} \nabla_w \emprisk (w_{t-1}, s^{-j}) \Big) 
\end{equation}
and also with covariance $\sigma_t I_d$, where $|v| = |v_t|$ is the size of the batch, which is assumed to be equal for all iterations. In this way, as the relative entropy between Gaussian distributions is known, we have that
\begin{align*}
    \relent( \bP_{W_t}^{W_{t-1}, S, V^T} &\Vert \bQ_{W_t}^{W_{t-1}, S^{-J}, V^T}) = \\
    &\frac{\eta_t^2}{2 |v|^2 \sigma_t^2} \Big\lVert \nabla_w \ell(W_{t-1}, Z_J) - \nabla_w \emprisk(W_{t-1}, S^{-j}) \Big \rVert_2^2.
\end{align*}

\citet{negrea2019information} defined the random variable $\Gamma_t \coloneqq \nabla_w \ell(W_{t-1}, Z_J) - \nabla_w \emprisk(W_{t-1}, S^{-j})$ as the \emph{gradient incoherence}, which measures how different the gradient at a sample is from the average gradient on all the other samples. This quantity can be much smaller than the gradient norms or a Lipschitz constant. This is often the case, as confirmed empirically in~\citep{negrea2019information}. %
Combining the particularization of~\Cref{th:tighter_random_subset_mi_bounded_loss} for $m=1$ with the above analysis results in the following result.

\begin{theorem}[{\citet[Theorem 3.1]{negrea2019information}}]
\label{th:negrea_sgld}
    Consider a loss with a range bounded in $[a,b]$. Then, the expected generalization error of SGLD is bounded from above by
    \begin{equation*}
        \bE \big[ \gen(W_T; S) \big] \leq \frac{b-a}{\sqrt{2} |v|} \bE \left[ \sqrt{\sum_{t \in \cT_J(V^T)} \frac{\eta_t^2}{\sigma_t^2} \bE^{S^{-J}, J, V^T} \big[ \lVert \Gamma_t \rVert_2^2 \big] } \right]
    \end{equation*}
\end{theorem}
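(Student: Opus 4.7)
The plan is to apply Theorem \ref{th:tighter_random_subset_mi_bounded_loss} (the $m=1$ random-subset tightening due to \citet{negrea2019information}) with the auxiliary randomness $R$ instantiated as the trajectory of batch indices $V^T$. After this substitution, the entire creative content of the argument reduces to designing a data-dependent Markov kernel $\bQ(S^{-J}, V^T, J)$ whose relative entropy against the SGLD kernel $\bP_{W_T}^{S, V^T}$ surfaces precisely the gradient incoherence $\Gamma_t$.

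First I would invoke monotonicity of the relative entropy (Proposition \ref{prop:properties_relative_entropy}) to pass from $\relent(\bP_{W_T}^{S, V^T} \Vert \bQ_{W_T}^{S^{-J}, V^T, J})$ to the trajectory-level quantity $\relent(\bP_{W^T}^{S, V^T} \Vert \bQ_{W^T}^{S^{-J}, V^T, J})$. Because the SGLD update (\ref{eq:sgld}) is Markovian, the chain rule for the relative entropy telescopes this quantity into a sum over $t \in [T]$ of conditional relative entropies of the form $\bE^{W^{t-1}}\bigl[\relent\bigl(\bP_{W_t}^{W_{t-1}, S, V^T} \Vert \bQ_{W_t}^{W_{t-1}, S^{-J}, V^T, J}\bigr)\bigr]$, provided I initialize the auxiliary kernel at $\bQ_{W_0} = \bP_{W_0}$ so that the $t=0$ summand vanishes (since $W_0$ is independent of the data).

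Next I would specify the auxiliary kernel per iteration. For $t \notin \cT_J(V^T)$ (the datum $Z_J$ was not used at step $t$), setting $\bQ_{W_t}^{W_{t-1}, S^{-J}, V^T, J} = \bP_{W_t}^{W_{t-1}, S, V^T}$ makes the corresponding disintegrated relative entropy zero, and crucially this choice is admissible because neither side depends on $Z_J$ in that case. For $t \in \cT_J(V^T)$, I would set the auxiliary kernel to be an isotropic Gaussian with covariance $\sigma_t^2 I_d$ and the mean displayed in (\ref{eq:random_variable_design_sgld_standard_setting}), which replaces the contribution of $Z_J$ to the mini-batch gradient by the out-of-sample empirical gradient $\nabla_w \emprisk(W_{t-1}, S^{-J})$. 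The closed-form relative entropy between two Gaussians sharing the same covariance then reduces to $\frac{1}{2\sigma_t^2}\lVert \mu_1 - \mu_2 \rVert_2^2$, and a short algebraic manipulation of the two means (isolating the $Z_J$-contribution inside $\nabla_w\emprisk(W_{t-1}, S_{V_t})$) gives $\mu_1 - \mu_2 = \frac{\eta_t}{|v|}\Gamma_t$, recovering the identity asserted in the excerpt just before the theorem.

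Finally, I would plug the telescoped sum back into Theorem \ref{th:tighter_random_subset_mi_bounded_loss}, factor $\frac{1}{|v|\sqrt{2}}$ out of the square root, and invoke Jensen's inequality on the concave $\sqrt{\cdot}$ to pull the expectation over $(Z_J, W_{t-1})$ — which is independent of $(S^{-J}, V^T, J)$ — inside the radical. This converts each inner summand into $\frac{\eta_t^2}{\sigma_t^2}\bE^{S^{-J}, J, V^T}[\lVert \Gamma_t \rVert_2^2]$, as required. The main obstacle, and the only non-routine step, is the construction of the auxiliary kernel in the third paragraph: it must be measurable with respect to $(W_{t-1}, S^{-J}, V^T, J)$ so that it is a valid choice for $\bQ$ in Theorem \ref{th:tighter_random_subset_mi_bounded_loss}, yet it must be close enough to the true SGLD kernel that the Gaussian mean difference is exactly the desired quantity $\Gamma_t$. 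Every other step — monotonicity, the chain rule, the Gaussian KL formula, and Jensen — is standard bookkeeping.
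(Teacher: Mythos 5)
Your proposal is correct and follows essentially the same route as the paper: \Cref{th:tighter_random_subset_mi_bounded_loss} with $m=1$ and $R=V^T$, monotonicity plus the chain rule of the relative entropy to telescope over iterations, the true kernel as prior at steps with $t \notin \cT_J(V^T)$ and the leave-one-out Gaussian of~\eqref{eq:random_variable_design_sgld_standard_setting} otherwise, the Gaussian relative-entropy formula giving $\frac{\eta_t^2}{2|v|^2\sigma_t^2}\lVert \Gamma_t \rVert_2^2$, and a final Jensen step (left implicit in the paper's text) to move the remaining $Z_J$-average inside the radical. One minor caveat: $W_{t-1}$ is \emph{not} independent of $(S^{-J}, V^T, J)$ as your parenthetical claims, but this is harmless since the conditional Jensen inequality you invoke requires no such independence.
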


\Cref{th:negrea_sgld} has essentially the same form as the generalization bound obtained by combining \Cref{prop:bu_single_letter_mi_bound} and \Cref{th:single_letter_mi_bound_general_cgf} for a loss with a bounded range. The main difference is the appearance of the gradient incoherence instead of the norm of the gradients or the Lipschitz constant. Therefore, assuming that the loss is $L$-Lipschitz results again in a bound in $\cO(\frac{L}{|v|} \sqrt{\nicefrac{\log k}{n}} )$ with the assumption that the noise variance is $\sigma_t^2 = \eta_t$, that the algorithm runs for $k$ epochs and $T \approx \nicefrac{k n}{b}$, and that the step size decreases as $\eta_t = \nicefrac{c}{t}$ for some constant $c > 0$. Moreover, it has the extra benefit of having the $\nicefrac{1}{|v|}$ factor. However, note that for larger batch sizes $|v|$, one typically needs a larger number of epochs $k$.\footnote{Other, slightly improved rates can be obtained with different choices of the learning rate $\eta_t$~\citep[Appendix E]{negrea2019information}. This is, however, outside of the scope of this exposition.}

\citet{haghifam2020sharpened} and we~\citep{rodriguez2020randomsubset} continued this reasoning considering the randomized-subsample setting and bounds similar to~\Cref{th:tighter_random_subset_cmi_bounded_loss} with $m=1$. \citet{haghifam2020sharpened} provided a result valid for full batch SGLD, also called just Langevin dynamics (LD). We extended such a result for the general SGLD and improved it to be resilient to losses with large gradient norms or large Lipschitz constants.

The main object in~\Cref{th:tighter_random_subset_cmi_bounded_loss} is the relative entropy $\relent(\bP_W^{\tilde{S}, U, R} \Vert \bQ(\tilde{S}, U^{-J}, R))$. The main advantage of this quantity with respect to the quantity considered by \citet{negrea2019information} is that now, for our analysis, we have access to all the super samples $\tilde{S}$ and the only information missing in the design Markov kernel $\bQ(\tilde{S}, U^{-J}, R))$ is the index $J$. As we will see shortly, this extra knowledge will help us to design better random variables associated with the data-dependent distributions.

As previously, let $\bQ_{W_T}^{\tilde{S}, U^{-J}, R} = \bQ(\tilde{S}, U^{-J}, R))$ be the distribution of a random variable $W_T'$ to simplify the notation. By the \emph{monotonicity} and the \emph{chain rules} of the relative entropy from~\Cref{prop:properties_relative_entropy}
\begin{align*}
    \relent(&\bP_{W_T}^{\tilde{S}, U, V^T} \Vert \bQ_{W_T}^{\tilde{S}, U^{-J}, V^T}) \\
    &\leq \relent(\bP_{WT}^{\tilde{S}, U, V^T} \Vert \bQ_{W^T}^{\tilde{S}, U^{-J}, V^T}) \\
    &= \relent(\bP_{W_0}^{\tilde{S}, U, V^T} \Vert \bQ_{W_0}^{\tilde{S}, U^{-J}, V^T}) + \sum_{t=1}^T \relent(\bP_{W_t}^{W_{t-1},\tilde{S}, U, V^T} \Vert \bQ_{W_t}^{W^{t-1}, \tilde{S}, U^{-J}, V^T}) \\
    &= \sum_{t \in \cT_J(V^T)} \relent(\bP_{W_t}^{W_{t-1},\tilde{S}, U, V^T} \Vert \bQ_{W_t}^{W^{t-1}, \tilde{S}, U^{-J}, V^T}),
\end{align*}
\looseness=-1 where similarly to before we considered the randomness $R$ to be the trajectory of batches used for each of the steps in~\eqref{eq:sgld}, and we chose $\bQ_{W_0}^{\tilde{S}, U^{-J}, V^T} = \bP_{W_0}$ due to the independence of $W$ with the data and the indices. Moreover, we directly noted that we can choose $\bQ_{W_t}^{W^{t-1}, \tilde{S}, U^{-J}, V^T} = \bP_{W_t}^{W_{t-1}, \tilde{S}, U, V^T}$ for the iterations where $J$ is not included in the batch since, at those iterations, $W_t$ is independent of $\tilde{Z}_{J,0}$ and $\tilde{Z}_{J,1}$ given $W_{t-1}$. Finally, the main change to previous analyses is that we kept the information of all previous hypotheses available in the design Markov kernels.

Now we may focus on the terms $\relent(\bP_{W_t}^{W_{t-1},\tilde{S}, U, V^T} \Vert \bQ_{W_t}^{W^{t-1}, \tilde{S}, U^{-J}, V^T})$ separately. Consider, as before, $W_t$ and $W_t'$ be the random variables distributed according to $\bP_{W_t}^{W_{t-1},\tilde{S}, U, V^T}$ and $\bQ_{W_t}^{W^{t-1}, \tilde{S}, U^{-J}, V^T}$ for some fixed $J=j$, $\tilde{S}=s$, $U^{-J} = u^{-j}$, and $V^T = v^T$. We already established that $W_t$ is a Gaussian random variable with mean~\eqref{eq:sgld_rv_mean} and covariance $\sigma_t^2 I_d$. However, it is useful to re-write its mean using $u_j$ and $\tilde{z}_{j,0}$ and $\tilde{z}_{j,1}$, namely the mean is
\begin{align*}
    w_{t-1} - \frac{\eta_t}{|v|} \Big[(|v|-1) &\nabla_w \emprisk(w_{t-1}, s_{v_t \setminus \{j \}}) + \\
    &(1-u_j) \nabla_w \ell(w_{t-1},\tilde{z}_{j,0}) + u_j \nabla_w \ell(w_{t-1}, \tilde{z}_{j,1}) \Big],
\end{align*}
where $|v| = |v_t|$ is the size of the batch, which is assumed to be constant for all iterations.

A first approach to designing the random variable $W_t'$, proposed by~\citet{haghifam2020sharpened} in the context of LD, is to design $W_t'$ also as a Gaussian random variable with the same covariance as $W_t$ and a mean as close as possible to the mean of $W_t$. Since $U_j$ is unknown, we can substitute it by an estimate $\pi_{j,t}$ of the probability that $U_j = 1$. This estimator uses the information available at that iteration, namely all the data $\tilde{s}$, all the other indices $u^{-j}$, all the previous weights $w^{t-1}$, and all the indices of samples employed up to (and including) that iteration $v^{t}$. In the end of this subsection, we will show an example of such an estimator. For the time being, it suffices to understand that it will start with no information and $\pi_{j,0} = \nicefrac{1}{2}$, and it will become closer to $U_j$ as the iterations advance if the information of the index $U_j$ leaks into the weights $w^t$. In this way, the mean of $W_t'$ is
\begin{align*}
    w_{t-1} - \frac{\eta_t}{|v|} \Big[(|v|-1) &\nabla_w \emprisk(w_{t-1}, s_{v_t \setminus \{j \}}) + \\
    &(1-\pi_{j,t}) \nabla_w \ell(w_{t-1},\tilde{z}_{j,0}) + \pi_{j,t} \nabla_w \ell(w_{t-1}, \tilde{z}_{j,1}) \Big].
\end{align*}

Finally, since the relative entropy between Gaussian distributions is known we have that
\begin{equation*}
    \relent(\bP_{W_t}^{W_{t-1},\tilde{S}, U, V^T} \Vert \bQ_{W_t}^{W^{t-1}, \tilde{S}, U^{-J}, V^T}) = \frac{\eta^2}{2 |v|^2 \sigma^2} \bE^{\tilde{S}, U, V^T, J} \Big[ (U_J - \pi_{J,t})^2 \lVert \Gamma_{J,t} \rVert_2^2 \Big],
\end{equation*}
where $\Gamma_{J,t} \coloneqq \nabla_w \ell(w_{t-1}, \tilde{z}_{j,0}) - \nabla_w(w_{t-1}, \tilde{j,1})$ is the \emph{two-sample incoherence} at iteration $t$. Now, combining the particularization of~\Cref{th:tighter_random_subset_cmi_bounded_loss} for $m=1$ with the above analysis results in the following result.

\begin{theorem}
    \label{th:cmi_sgld_gaussian}
    Consider a loss with a range bounded in $[a,b]$. Then, the expected generalization error of SGLD is bounded from above by
    \begin{equation*}
        \bE \big[ \gen(W_T,S) ] \leq \frac{\sqrt{2}(b-a)}{|v|} \bE \left[ \sqrt{\sum_{t \in \cT_J(V^T)} \frac{\eta_t^2}{ \sigma^2_t}\bE^{\tilde{S}, U, V^T, J} \Big[ (U_J - \pi_{J,t})^2 \lVert \Gamma_{J,t} \rVert_2^2 \Big] }\right].
    \end{equation*}
\end{theorem}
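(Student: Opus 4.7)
The plan is to combine the random-subset CMI bound from \Cref{th:tighter_random_subset_cmi_bounded_loss} (specialized to $m=1$) with a careful chain-rule decomposition of the relative entropy along the SGLD trajectory and an explicit Gaussian-to-Gaussian KL computation, exactly along the lines of the exposition immediately preceding the theorem.

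First I would apply the $m=1$ specialization of \Cref{th:tighter_random_subset_cmi_bounded_loss}, choosing the auxiliary randomness $R$ in that theorem to be the batch trajectory $V^T$ (this is admissible, since $V^T$ is independent of $\tilde S$ and $U$ by construction of SGLD, cf.~\Cref{rem:extra_randomness}). This yields, for any Markov kernel $\bQ$ from $\cZ^{n\times 2}\otimes\{0,1\}^{n-1}\otimes\cV^{T}\otimes\cJ$ to distributions on $\cW$,
\begin{equation*}
    \bE[\gen(W_T,S)] \;\leq\; \sqrt{2}\,(b-a)\,\bE\!\left[\sqrt{\relent\!\left(\bP_{W_T}^{\tilde S,U,V^T}\,\Vert\,\bQ_{W_T}^{\tilde S,U^{-J},V^T}\right)}\right].
\end{equation*}
It remains to design $\bQ$ so that the inner relative entropy matches the target expression.

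Second I would unfold the relative entropy along the iterates. Since SGLD is Markov in $W_t$ given $(W_{t-1},S_{V_t})$, the \emph{monotonicity} and the \emph{chain rule} of~\Cref{prop:properties_relative_entropy} give
\begin{equation*}
    \relent\!\left(\bP_{W_T}^{\tilde S,U,V^T}\,\Vert\,\bQ_{W_T}^{\tilde S,U^{-J},V^T}\right)
    \leq \sum_{t=1}^{T} \relent\!\left(\bP_{W_t}^{W_{t-1},\tilde S,U,V^T}\,\Vert\,\bQ_{W_t}^{W_{t-1},\tilde S,U^{-J},V^T}\mid \bP_{W_{t-1}}^{\tilde S,U,V^T}\right),
\end{equation*}
after choosing $\bQ_{W_0}^{\tilde S,U^{-J},V^T}=\bP_{W_0}$ (which zeros out the $t=0$ term by independence of the initialization from the data). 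For iterations $t\notin\cT_J(V^T)$ the update~\eqref{eq:sgld} does not see sample $J$, so I set $\bQ_{W_t}^{W_{t-1},\tilde S,U^{-J},V^T}=\bP_{W_t}^{W_{t-1},\tilde S,U,V^T}$ and these summands vanish. Only $t\in\cT_J(V^T)$ remain.

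Third, for $t\in\cT_J(V^T)$ the true forward kernel is $\cN(\mu_t,\sigma_t^2 I_d)$ with $\mu_t$ given by~\eqref{eq:sgld_rv_mean}, which I would rewrite separating the contribution of $U_J$ as in the paragraph preceding the theorem. I would then design $\bQ_{W_t}^{W_{t-1},\tilde S,U^{-J},V^T}$ to be $\cN(\mu'_t,\sigma_t^2 I_d)$, where $\mu'_t$ is obtained by replacing $U_J$ by the estimator $\pi_{J,t}$ built from $(\tilde S,U^{-J},V^{t},W^{t-1})$; this is $\sigma(\tilde S,U^{-J},V^T,W^{t-1})$-measurable, hence an admissible choice. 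The closed form for the KL divergence between two Gaussians with identical covariance then yields
\begin{equation*}
    \relent\!\bigl(\cN(\mu_t,\sigma_t^2 I_d)\,\Vert\,\cN(\mu'_t,\sigma_t^2 I_d)\bigr)
    = \frac{1}{2\sigma_t^2}\lVert \mu_t-\mu'_t\rVert_2^2
    = \frac{\eta_t^2}{2|v|^2\sigma_t^2}(U_J-\pi_{J,t})^2\,\lVert\Gamma_{J,t}\rVert_2^2,
\end{equation*}
since $\mu_t-\mu'_t=\tfrac{\eta_t}{|v|}(U_J-\pi_{J,t})\,\Gamma_{J,t}$ with $\Gamma_{J,t}=\nabla_w\ell(W_{t-1},\tilde Z_{J,1})-\nabla_w\ell(W_{t-1},\tilde Z_{J,0})$.

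Finally I would plug the per-iterate bound back into the chain rule and then into the CMI bound, pulling the factor $\tfrac{\eta_t^2}{|v|^2\sigma_t^2}$ out and the disintegrated expectation $\bE^{\tilde S,U,V^T,J}[\,\cdot\,]$ inside the square root, which produces exactly the stated inequality. The main obstacle, mostly of a bookkeeping nature, is to verify that the reference kernel $\bQ$ assembled iterate-by-iterate is genuinely independent of $U_J$ so that it is admissible in \Cref{th:tighter_random_subset_cmi_bounded_loss}: this requires inductively checking that $\pi_{J,t}$ depends only on $(\tilde S,U^{-J},V^t,W^{t-1})$ and that the resulting joint reference distribution marginalizes to a single Markov kernel of $(\tilde S,U^{-J},V^T)$; once this is established, the rest reduces to routine Gaussian KL calculations and the chain-rule manipulation above.
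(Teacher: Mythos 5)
Your proposal is correct and follows essentially the same route as the paper: the $m=1$ random-subset CMI bound of \Cref{th:tighter_random_subset_cmi_bounded_loss} with $R=V^T$, the monotonicity/chain-rule decomposition along iterates with $\bQ_{W_0}=\bP_{W_0}$ and matching kernels off $\cT_J(V^T)$, and the Gaussian reference kernel with $U_J$ replaced by the $\sigma(\tilde S,U^{-J},V^t,W^{t-1})$-measurable estimator $\pi_{J,t}$, evaluated via the closed-form Gaussian relative entropy. The admissibility check you flag is exactly the point the paper handles by constructing $\pi_{J,t}$ from the information available at iteration $t$, so no gap remains.
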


This result is similar to~\Cref{th:negrea_sgld}. The incoherence terms $\Gamma_t$ and $\Gamma_{J,t}$ are similar and both represent how the gradient of a sample differs from other gradients. The main innovation is the addition of $(U_j - \pi_{J,t})^2$, which takes into account how well one can predict which sample is used in the training set at iteration $t$. In this way, samples $j$ that are very informative at early iterations, do not contribute negatively to the generalization error in later iterations as $(U_j - \pi_{j,t})^2$ will be close to zero. On the other hand, if a sample is close to indistinguishable, then $\pi \approx 0.5$ and $(U_j - \pi_{j,t})^2 \approx \nicefrac{1}{4}$, essentially recovering~\Cref{th:negrea_sgld}.

Even with these advances, in the early iterations, the estimator $\pi_{j,t}$ is still not a good estimate of the generalization error. In this case, if the gradient incoherence is large, then the bound may be affected by this. As an alternative, we proposed to design $W_t'$ as a random variable distributed as a mixture of two Gaussian distributions with means
\begin{align*}
    \mu_{j,t,0} =&w_{t-1} - \frac{\eta_t}{|v|} \Big[ (|v|-1) \nabla_w \emprisk(w_{t-1}, s_{v_t \setminus \{j\}}) + \nabla_w \ell(w_{t-1}, \tilde{z}_{j,0}) \Big] \textnormal{ and} \\
    \mu_{j,t,1} =&w_{t-1} - \frac{\eta_t}{|v|} \Big[ (|v|-1) \nabla_w \emprisk(w_{t-1}, s_{v_t \setminus \{j\}}) + \nabla_w \ell(w_{t-1}, \tilde{z}_{j,1}) \Big]
\end{align*}
and with weights, or responsibilities $(1-\pi_{j,t})$ and $\pi_{j,t}$ respectively, that is $W_t' \sim (1-\pi_{j,t}) \cN(\mu_{j,t,0}, \sigma_t^2 I_d) + \pi_{j,t} \cN(\mu_{j,t,1}, \sigma_t^2 I_d)$. Then, we may employ~\Cref{lemma:mixture_kl_ub} to see that $\relent(\bP_{W_t}^{W_{t-1},\tilde{S}, U, V^T} \Vert \bQ_{W_t}^{W^{t-1}, \tilde{S}, U^{-J}, V^T})$ is bounded from above by
\begin{align}
    - \log \Bigg[ (1-\pi_{J,t}) &e^{- \frac{\eta^2}{2 |v|^2 \sigma^2} \bE^{\tilde{S}, U, V^T, J} \Big[ U_J^2 \lVert \Gamma_{J,t} \rVert_2^2 \Big]} + \nonumber \\
    &\quad \pi_{J,t} e^{- \frac{\eta^2}{2 |v|^2 \sigma^2} \bE^{\tilde{S}, U, V^T, J} \Big[ (1-U_J)^2 \lVert \Gamma_{J,t} \rVert_2^2 \Big]} \Bigg].
    \label{eq:mini-batch-ld-mixture-intermediate}
\end{align}

We can make this equation more compact if we note that when $U_J = 0$, then it reduces to
\begin{equation*}
    - \log \left[ (1-\pi_{J,t}) + \pi_{J,t} e^{- \frac{\eta^2}{2 |v|^2 \sigma^2} \bE^{\tilde{S}, U, V^T, J} \Big[ (1-U_J)^2 \lVert \Gamma_{J,t} \rVert_2^2 \Big]} \right]
\end{equation*}
and that when $U_J = 1$, then it reduces to
\begin{equation*}
    - \log \left[ (1-\pi_{J,t}) e^{- \frac{\eta^2}{2 |v|^2 \sigma^2} \bE^{\tilde{S}, U, V^T, J} \Big[ U_J^2 \lVert \Gamma_{J,t} \rVert_2^2 \Big]} + \pi_{J,t} \right].
\end{equation*}
Hence, since $|U_J - \pi_{J,t}|$ is equal to $\pi_{J,t}$ when $U_J = 0$, and to $(1-\pi_{J,t})$ when $U_J = 1$, we may write the right-hand side of~\eqref{eq:mini-batch-ld-mixture-intermediate} as 
\begin{equation*}
    -\log \left[|U_J - \pi_{J,t}| \exp \left(- \frac{\eta_t^2 \lVert \Gamma_{J,t} \lVert ^2}{2\sigma_t^2|v|^2} \right) + |1 - U_J - \pi_{J,t}| \right].
\end{equation*}

Again, combining the particularization of~\Cref{th:tighter_random_subset_cmi_bounded_loss} for $m=1$ with the above analysis results in the following result.

\begin{theorem}
    \label{th:cmi_sgld_mixture}
    Consider a loss with a range bounded in $[a,b]$. Then, the expected generalization error of SGLD is bounded from above by
    \begin{align*}
        \bE \big[ &\gen(W_T,S) ] \leq \\
        &\sqrt{2}(b-a) \bE \left[ - \sum_{t \in \cT_J(V^T)} \log \left[|U_J - \pi_{J,t}| \exp \left(- \frac{\eta_t^2 \lVert \Gamma_{J,t} \lVert ^2}{2\sigma_t^2|v|^2} \right) + |1 - U_J - \pi_{J,t}| \right] \right].
    \end{align*}
\end{theorem}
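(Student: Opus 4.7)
The plan is to follow the same template that led to Theorem~6, but replace the single-Gaussian design of $W_t'$ with a two-component Gaussian mixture so that the bound gracefully interpolates in the early iterates where $\pi_{J,t}$ is uninformative. The starting point is the $m=1$ specialization of Theorem~14 with $R=V^T$, which gives
\begin{equation*}
    \bE[\gen(W_T,S)] \leq \sqrt{2}(b-a)\, \bE\mleft[\sqrt{\relent\big(\bP_{W_T}^{\tilde{S},U,V^T}\,\big\Vert\,\bQ_{W_T}^{\tilde{S},U^{-J},V^T}\big)}\mright],
\end{equation*}
and by Jensen's inequality it suffices to bound the inner relative entropy for almost every realization of $(\tilde{S},U^{-J},V^T,J)$.

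Next, I would replicate verbatim the chain-rule/monotonicity telescoping that precedes Theorem~6: by the data-processing and chain rule properties of $\relent$ (Proposition~\ref{prop:properties_relative_entropy}), and by choosing the prior on $W_0$ equal to $\bP_{W_0}$ and the transition kernels at iterations $t\notin\cT_J(V^T)$ equal to the true SGLD kernel, one obtains
\begin{equation*}
    \relent\big(\bP_{W_T}^{\tilde{S},U,V^T}\,\big\Vert\,\bQ_{W_T}^{\tilde{S},U^{-J},V^T}\big) \leq \sum_{t\in\cT_J(V^T)} \relent\big(\bP_{W_t}^{W_{t-1},\tilde{S},U,V^T}\,\big\Vert\,\bQ_{W_t}^{W_{t-1},\tilde{S},U^{-J},V^T}\big).
\end{equation*}
The forward kernel at such $t$ is Gaussian with covariance $\sigma_t^2 I_d$ and mean determined by the true value of $U_J$. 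I would design the prior kernel as the two-component mixture $W_t'\sim (1-\pi_{J,t})\cN(\mu_{J,t,0},\sigma_t^2 I_d) + \pi_{J,t}\cN(\mu_{J,t,1},\sigma_t^2 I_d)$, where $\mu_{J,t,0},\mu_{J,t,1}$ are the SGLD means one would obtain were $\tilde{Z}_{J,0}$ or $\tilde{Z}_{J,1}$ chosen, and where the responsibility $\pi_{J,t}$ is measurable with respect to the information available to the prior, namely $(\tilde{S},U^{-J},V^t,W^{t-1})$.

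The main technical step is Lemma~\ref{lemma:mixture_kl_ub}, which gives an upper bound of the form
\begin{equation*}
    \relent\mleft(\cN(\mu,\sigma^2 I_d)\,\Big\Vert\,\sum_k \pi_k \cN(\mu_k,\sigma^2 I_d)\mright) \leq -\log\sum_k \pi_k \exp\!\mleft(-\tfrac{1}{2\sigma^2}\lVert\mu-\mu_k\rVert_2^2\mright),
\end{equation*}
which follows from Jensen's inequality applied to the log of the mixture density. Applied to our setting with $\mu-\mu_{J,t,u}=\frac{\eta_t}{|v|}(u-U_J)\Gamma_{J,t}$, the inner exponent at component $u\in\{0,1\}$ becomes $-\frac{\eta_t^2 (u-U_J)^2\lVert\Gamma_{J,t}\rVert_2^2}{2\sigma_t^2|v|^2}$. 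A case split on $U_J\in\{0,1\}$ then collapses the two-term sum: one of $|u-U_J|$ equals $0$ (giving weight $|1-U_J-\pi_{J,t}|$) and the other equals $1$ (giving weight $|U_J-\pi_{J,t}|$), yielding the compact form
\begin{equation*}
    -\log\mleft[|U_J-\pi_{J,t}|\exp\!\mleft(-\tfrac{\eta_t^2\lVert\Gamma_{J,t}\rVert_2^2}{2\sigma_t^2|v|^2}\mright) + |1-U_J-\pi_{J,t}|\mright].
\end{equation*}

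Finally, summing over $t\in\cT_J(V^T)$ and substituting back into the $m=1$ Theorem~14 bound produces the statement of the theorem. The main obstacle I anticipate is a measurability check: $\pi_{J,t}$ must be a valid function of $(\tilde{S},U^{-J},V^t,W^{t-1})$ so that the mixture design kernel is consistent with the conditioning in Theorem~14; this is built into the construction but needs to be stated carefully. Beyond that, the computations are routine Gaussian algebra together with the one nontrivial inequality (the mixture-KL lemma) that does all the heavy lifting.
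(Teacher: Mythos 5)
Your proposal is correct and follows essentially the same route as the paper: the $m=1$ random-subset bound (with $R=V^T$), the chain-rule/monotonicity telescoping with the true kernel used at iterations not containing $J$, the two-component Gaussian-mixture design kernel with data-dependent responsibility $\pi_{J,t}$, Lemma~\ref{lemma:mixture_kl_ub} to bound the Gaussian-vs-mixture relative entropy, and the case split on $U_J$ to reach the compact expression. The only cosmetic difference is that no Jensen step is needed (the expectation already sits outside the square root in the starting bound, so you simply bound the inner relative entropy pointwise), and the measurability of $\pi_{J,t}$ with respect to $(\tilde{S},U^{-J},V^t,W^{t-1})$ that you flag is exactly how the paper constructs the estimator.
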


For example, if $\pi_{J,t} = \nicefrac{1}{2}$, which is a reasonable value during the first iterations, the bound from~\Cref{th:cmi_sgld_mixture} is tighter than the one from \Cref{th:cmi_sgld_gaussian} for $\| \Gamma_{J,t} \| \gtrsim 2.21 \nicefrac{\sigma_t |v|}{\eta_t}$. This may likely be the case when $\nicefrac{\eta_t}{\sigma_t} \in \Theta(n^{\alpha/2})$ for $\alpha \in (0,1)$~\citep[Appendix E]{negrea2019information}.
In fact, for large values of ${\frac{\eta_t^2 \|\Gamma_{J,t}\|^2}{2\sigma_t^2|v|^2}}$, for instance, when the Lipschitz condition is not met, the terms inside the square root in~\Cref{th:cmi_sgld_mixture} tend to $ {-}\log|1-U_J-\pi_{J,t}|$ from below, which is upper bounded by $\log 2$ for a random estimator $\pi_{J,t} = \nicefrac{1}{2}$.

Moreover, we note that the only terms that change between~\Cref{th:cmi_sgld_gaussian} and~\Cref{th:cmi_sgld_mixture} are the summands inside the square root.
These terms are obtained by considering different data-dependent distributions $Q_{\smash{W_t| W^{t-1}, U_{J^c}, \tilde{S}, V^t}}$, which we can choose arbitrarily at each iteration. Hence, we may choose the tightest form for each summand, as it is stated next.

\begin{corollary}
    Consider a loss with a range bounded in $[a,b]$. Furthermore, consider the random variables
    \begin{align*}
        F_{J,t} &= \frac{\eta^2}{2 |v|^2 \sigma^2} \bE^{\tilde{S}, U, V^T, J} \Big[ (U_J - \pi_{J,t})^2 \lVert \Gamma_{J,t} \rVert_2^2 \Big] \textnormal{ and } \\
        G_{J,t} &= -\log \left[|U_J - \pi_{J,t}| \exp \left(- \frac{\eta_t^2 \lVert \Gamma_{J,t} \lVert ^2}{2\sigma_t^2|v|^2} \right) + |1 - U_J - \pi_{J,t}| \right].
    \end{align*}
    Then, the expected generalization error of SGLD is bounded from above by
    \begin{equation*}
        \bE \big[ \gen(W_T, S) \big] \leq \sqrt{2} (b-a) \bE \left[\sqrt{\sum_{t \in \cT_{J}(V^T)} \min \{F_{J,t}, G_{J,t} \}} \right].
    \end{equation*}
\end{corollary}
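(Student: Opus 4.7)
The plan is to observe that the two bounds in \Cref{th:cmi_sgld_gaussian} and \Cref{th:cmi_sgld_mixture} are obtained from the \emph{same} starting inequality (the particularization of \Cref{th:tighter_random_subset_cmi_bounded_loss} with $m=1$), and that the only step at which they diverge is the choice of the per-iteration design kernel $\bQ_{W_t}^{W^{t-1},\tilde{S},U^{-J},V^T}$. Since that choice is free and independent across iterations, we can pick the tighter of the two at each $t$ separately.

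Concretely, I would first apply \Cref{th:tighter_random_subset_cmi_bounded_loss} with $m=1$ to the SGLD hypothesis $W_T$ using $R=V^T$, obtaining
\begin{equation*}
\bE\big[\gen(W_T,S)\big] \;\leq\; \sqrt{2}(b-a)\,\bE\Big[\sqrt{\relent\big(\bP_{W_T}^{\tilde{S},U,V^T}\,\Vert\,\bQ_{W_T}^{\tilde{S},U^{-J},V^T}\big)}\Big]
\end{equation*}
for \emph{any} Markov kernel $\bQ$ that does not depend on $U_J$. Next, I would repeat the manipulations common to both \Cref{th:cmi_sgld_gaussian} and \Cref{th:cmi_sgld_mixture}: use the monotonicity of the relative entropy (\Cref{prop:properties_relative_entropy}) to move from $W_T$ to the full trajectory $W^T$, then use the chain rule to decompose into a telescoping sum over iterations, choosing $\bQ_{W_0}^{\tilde{S},U^{-J},V^T}=\bP_{W_0}$ (zero contribution since $W_0$ is data-independent) and $\bQ_{W_t}^{W^{t-1},\tilde{S},U^{-J},V^T}=\bP_{W_t}^{W_{t-1},\tilde{S},U,V^T}$ for every $t\notin\cT_J(V^T)$ (zero contribution since, for those iterations, $W_t$ does not depend on $U_J$ given $W_{t-1}$). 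This reduces the relative entropy to the sum
\begin{equation*}
\sum_{t\in\cT_J(V^T)} \relent\big(\bP_{W_t}^{W_{t-1},\tilde{S},U,V^T}\,\Vert\,\bQ_{W_t}^{W^{t-1},\tilde{S},U^{-J},V^T}\big).
\end{equation*}

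The crucial observation is that each summand depends only on the choice of kernel at iteration $t$, and these choices can be made independently. The proof of \Cref{th:cmi_sgld_gaussian} shows that the Gaussian design yields the summand $F_{J,t}$, while the proof of \Cref{th:cmi_sgld_mixture} shows that the mixture-of-Gaussians design yields the summand $G_{J,t}$ (via \Cref{lemma:mixture_kl_ub} and the case analysis on $U_J\in\{0,1\}$ that collapses the expression to a single logarithm). Therefore, for each $t\in\cT_J(V^T)$, we are free to adopt whichever of the two designs gives the smaller value, obtaining per-iteration summand $\min\{F_{J,t},G_{J,t}\}$. Substituting this pointwise-minimum sum back into the starting Wasserstein/KL bound and pulling the square root and expectation outside yields exactly the claimed inequality.

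The only subtlety — and really the only thing worth checking carefully — is that the per-iteration choice of $\bQ_{W_t}^{W^{t-1},\tilde{S},U^{-J},V^T}$ is genuinely unconstrained by the choices made at other iterations; this is immediate from the fact that $\bQ$ in \Cref{th:tighter_random_subset_cmi_bounded_loss} is an \emph{arbitrary} kernel and that, after the chain-rule decomposition, its law on the $t$-th coordinate (given the past) appears only in the $t$-th summand. There is no hard calculation; the content is entirely a bookkeeping argument that the two previously proven bounds can be interleaved.
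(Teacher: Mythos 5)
Your argument is correct and is essentially the paper's own proof: the paper likewise observes that the Gaussian and mixture bounds differ only in the per-iteration summands, which arise from different choices of the design kernel $\bQ_{W_t}^{W^{t-1},\tilde{S},U^{-J},V^T}$ that may be made independently at each iteration, so one takes the tighter form per summand. Your additional bookkeeping (restating the common chain-rule decomposition and the zero-contribution iterations) matches the derivations already carried out for \Cref{th:cmi_sgld_gaussian,th:cmi_sgld_mixture}, so nothing further is needed.
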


To conclude, we may write an explicit estimator $\pi_{J,t}$. An example of how to build such an estimate, based on binary hypothesis testing, is presented in~\citep{haghifam2020sharpened} for LD. Adapted to SGLD, we let the estimate $\pi_{J,t}$ be a function $\phi: \bR \rightarrow [0,1]$ of the log-likelihood ratio between the probability that $U_J = 1$ and $U_J = 0$, based on $(W^{t-1},J,\tilde{S},U_{J^c},V^{t})$. That is, 
\begin{align*}
 \pi_{J,t} 
 \coloneqq \phi \left( \log \frac{\bP_{\smash{U_J| W^{t-1}, \tilde{S}, U_{J^c},V^{t}}}(1)}{\bP_{\smash{U_J| W^{t-1}, \tilde{S}, U_{J^c},V^{t}}}(0)} \right).
\end{align*}

We can calculate this log-likelihood exactly since density functions $\sfp_{W^{t-1}}^{\tilde{S}, U_{J^c},V^{t}, U=u}$ exist and are Gaussian. That is,
\begin{align*}
    &\sfp_{W^{t-1}}^{\tilde{S}, U_{J^c},V^{t}, U=u}(w^{t-1}) = \prod_{t'=1}^{t-1} \left[ \frac{1}{2 \pi \sigma_t^2} \right]^{\frac{d}{2}} \exp \left( - \frac{1}{2 \sigma_t^2} Y_{J,t',u} \right),
\end{align*}
where
\begin{equation*}
    Y_{J,t',u} \coloneqq \left \lVert w_{t'} - w_{t'-1} +  \frac{\eta_t(|v|-1)}{|v|} \nabla_w \emprisk(w_{t'-1}, s_{v_t \setminus \{ j \}}) + \frac{\eta_t}{|v|} \nabla_w \ell(w_{t'-1}, \tilde{z}_{J,U})\right \rVert_2^2 .
\end{equation*}

Therefore, the estimate is 
\begin{equation}
    \label{eq:u_estimate_sgld}
    \pi_{J,t} = \phi \left( \sum_{t' = 1}^{t-1} (Y_{J,t',0} - Y_{J,t',1}) \right).
\end{equation}

\looseness=-1 Note how $Y_{J,t',u}$ is close to zero when $U=u$ since the only difference between the weights $w_{t'}$ and the update comes from the inherent randomness $\Xi_{t}$ from SGLD. On the other hand $Y_{J, t', u}$ is larger when $U = 1-u$ since now the difference comes both from the SGLD randomness $\Xi_t$ as well as choosing the wrong gradient in the estimate. Hence, the value inside the function $\phi$ in~\eqref{eq:u_estimate_sgld} is expected to be positive when $U=1$ and negative when $U=0$ with a larger absolute value after each iteration. Then, a good choice for the function $\phi$ could be a sigmoid function, which will bring positive values closer to 1 and negative values closer to 0. In fact, the sigmoid function is often used to bring log-likelihood ratios to probabilities.

After these developments, new studies on the generalization error of SGLD using different assumptions and new information-theoretic bounds appeared~\citep{wang2021analyzing,wang2023generalization,futami2024time}. Those reached similar results and focused on eliminating the time dependence in their results.

\subsection{Beyond SGLD}
\label{subsec:beyond_sgld}

\citet{pensia2018generalization} considered the study of the stochastic gradient Hamiltonian Monte Carlo (SGHMC)~\citep{chen2014stochastic}. This algorithm is similar to SGLD with the addition of a ``velocity'' vector to include momentum in the parameter updates. Namely, the parameters $W_0 \in \cW$ are still initialized randomly and the velocity parameters are initialized to all zeros $v_0 \in \cW$. After that, at each iteration $t \in [T]$, a random batch $s_{V_t}$ from the dataset $s$ is collected and the parameters are updated according to
\begin{align*}
    V_t &= \alpha_t V_{t-1} + \eta_t \nabla_w \emprisk(W_{t-1}, s_{V_t}) \\
    W_t &= W_{t-1} - \alpha_t V_{t-1} - \eta_t \nabla_w \emprisk(W_{t-1}, s_{V_t}) + \sigma_t \Xi_t.
\end{align*}
Instead of directly studying this algorithm, they considered a slight variant where, at each iteration, the velocity parameters are also perturbed with some noise $\sigma_t \Xi'_t$ that is independent of all other random variables. In this way, considering the tuple $(W_t, V_t)$ as a vector in $\bR^d$, they could replicate their results for SGLD within a factor of $\sqrt{2}$.

After that, \citet{neu2021information} adapted \citet{pensia2018generalization}'s analysis to stochastic gradient descent (SGD). The challenge for studying SGD is that, at each iteration $t \in [T]$, given a fixed dataset $s$, the indices of the batch $v_t$ and a fixed value of the parameters at the previous iteration $w_{t-1}$, the parameters at that iteration are deterministic and hence their distribution is $$\bP_{W_t}^{S=s,V_t=v_t, W_{t-1}=w_{t-1}}(w) = \bI_{\{w = w_{t-1} - \eta_t \nabla_w \emprisk(w_{t-1},s_{v_ t})\} }(w).$$ Therefore, the relative entropy $\relent(\bP_{W_t}^{S=s, V_t=v_t W_{t-1}=w_{t-1}}\Vert \bP_{W_t}^{V_t=v_t,W_{t-1}=w_{t-1}})$ cannot be evaluated. \citet{neu2021information} circumvented this problem by considering an artificial perturbed trajectory of the weights and analyzing the said trajectory. 

To be precise, let $W_T$ be the final iterate of SGD and consider some noise $\Xi_T \sim \cN(0, \sigma_{1:T}^2 I_d)$. Then, one may construct a noisy surrogate $\tilde{W}_T = W_T + \Xi_T$ and write
\begin{align}
    \bE \big[ &\gen(W_T,S) \big] \nonumber  \\
    &= \bE \big[ \gen(\tilde{W}_T, S) \big] + \bE \big[ \poprisk(W_T) - \poprisk(\tilde{W}_T) \big] + \bE \big[ \emprisk(\tilde{W}_T,S) + \emprisk(W_T,S) \big] \nonumber \\
    &= \bE \big[ \gen(\tilde{W}_T, S) \big] + \bE \big[ \Delta_{\sigma_{1:T}}(W_T,S') - \Delta_{\sigma_{1:T}}(W_T, S) \big],
    \label{eq:surrogate_decomposition}
\end{align}
where $\Delta_{\sigma}(w,s) = \bE[ \emprisk(w,s) - \emprisk(w+\Xi, s)]$ with $\Xi \sim \cN(0, \sigma I_d)$ is referred to as the \emph{ local value sensitivity } of the loss around $w \in \cW$ to perturbations at a level $\sigma$. This term evaluates how stable is the loss of the final iterate to perturbations. Intuitively, this term becomes small if SGD outputs a parameter vector in a flat area of the loss surface. This connects with the widely held belief  that algorithms that find ``wide
optima'' of the loss landscape generalize well~\citep{hochreiter1997flat,keskar2016large}, a hypothesis with contradicting theories~\citep{dinh2017sharp,izmailov2018averaging,he2019asymmetric,chaudhari2019entropy}.

The decomposition from~\eqref{eq:surrogate_decomposition} permits the analysis of the generalization error of the noisy version of the last iterate (or surrogate) $\tilde{W}_T$. Due to the fact that if $X$ and $Y$ are independent Gaussian random variables with variances $\sigma_x^2$ and $\sigma_y^2$, then $X+Y$ is also Gaussian with variance $\sigma_x^2 + \sigma_y^2$, we may construct an artificial iterative procedure to generate the said surrogate. Namely, the process starts by generating $\tilde{W}_0 = W_0$ and then, at each iteration $t \in [T]$, the surrogate parameters are updated with the rule
\begin{equation*}
    \tilde{W}_t = \tilde{W}_{t-1} - \eta_t \nabla_w \emprisk(W_{t-1}, S_{V_t}) + \Upsilon_t,
\end{equation*}
where $\Upsilon_t \sim \cN(0, \sigma_t^2)$. In this case, we can verify that $\tilde{W}_T = W_T + \Xi$ as desired, where $\sigma_{1:T}^2 = \sum_{t=1}^T \sigma_t^2$. In this way, \citet{neu2021information} adapted the techniques from~\citet{pensia2018generalization} to study the generalization error of the surrogate leading to the following result. Their result depends on the \emph{gradient sensitivity} of the iterations at each step, which they define as
\begin{equation*}
    \Gamma_\sigma(w) = \bE \mleft[ \mleft \lVert \bE[\nabla_w \ell(w,Z) ] - \bE^\Xi[\nabla_w \ell(w+\Xi, Z)] \mright \rVert^2 \mright]
\end{equation*}
and which measures the sensitivity of the gradients of the expected loss function to perturbations around $w$; and the \emph{gradient variance} of the loss of the iterations at each step, which they define as
\begin{equation*}
    V_t(w) = \bE^{W_t = w} \mleft[ \mleft \lVert  \nabla_w \emprisk(w, S_{V_t}) - \bE[\nabla_w \ell(w, Z) ] \mright \rVert^2 \mright].
\end{equation*}
\begin{theorem}[{\citet[Theorem 1, adapted to bounded losses]{neu2021information}}]
    \label{th:neu_sgd_bound}
    Consider a loss with a bounded range in $[a,b]$. For every sequence of positive numbers $(\sigma_t^2)_{t=1}^T$, let $\sigma_{1:t}^2 = \sum_{t'=1}^t \sigma_t^2$. Then, the generalization of SGD is bounded by
    \begin{align*}
        \bE \big[ \gen(W_T,S) \big] \leq &\sqrt{\frac{(b-a)^2}{n} \sum_{t=1}^T \frac{\eta_t^2}{\sigma_t^2} \bE \big[ \Gamma_{\sigma_{1:t}}(W_t) + V_t(W_t) \big]} \\
        &+ \bE \big[ \Delta_{\sigma_{1:T}}(W_T,S') - \Delta_{\sigma_{1:T}}(W_T, S) \big].
    \end{align*}
\end{theorem}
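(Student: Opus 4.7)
The plan is to combine the surrogate decomposition already displayed in (\ref{eq:surrogate_decomposition}) with a Pensia--Li--Dasgupta style mutual-information analysis applied to the noisy auxiliary trajectory $\tilde W^T$. Since the second summand in the target bound is literally the value-sensitivity term appearing in (\ref{eq:surrogate_decomposition}), the entire task reduces to controlling $\bE[\gen(\tilde W_T,S)]$ by $(b-a)\sqrt{n^{-1}\sum_t(\eta_t^2/\sigma_t^2)\bE[\Gamma_{\sigma_{1:t}}(W_t)+V_t(W_t)]}$. Because the loss is bounded in $[a,b]$ and hence $(b-a)^2/4$-sub-Gaussian, Theorem~\ref{th:mi_bound_bounded} gives $\bE[\gen(\tilde W_T,S)]\leq(b-a)\sqrt{\minf(\tilde W_T;S)/(2n)}$, and so the problem reduces further to proving
\begin{equation*}
\minf(\tilde W_T;S)\;\leq\;2\sum_{t=1}^T\frac{\eta_t^2}{\sigma_t^2}\,\bE\bigl[\Gamma_{\sigma_{1:t}}(W_t)+V_t(W_t)\bigr].
\end{equation*}

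To bound this mutual information, I would condition on the algorithmic randomness $R=(W_0,V^T)$, which is independent of $S$, and apply data-processing and the chain rule (Proposition~\ref{prop:properties_minf}) to obtain $\minf(\tilde W_T;S)\leq\sum_t\minf(\tilde W_t;S\mid\tilde W^{t-1},R)$. Under the auxiliary recursion $\tilde W_t=\tilde W_{t-1}-\eta_t G_t+\Upsilon_t$ with $G_t=\nabla_w\emprisk(W_{t-1},S_{V_t})$ and $\Upsilon_t\sim\cN(0,\sigma_t^2 I_d)$, the conditional law of $\tilde W_t$ given $(S,\tilde W^{t-1},R)$ is Gaussian with mean $\tilde W_{t-1}-\eta_t G_t$ and covariance $\sigma_t^2 I_d$. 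By the Golden formula (Proposition~\ref{prop:properties_minf}) the mutual information is upper-bounded by the conditional relative entropy to \emph{any} $(\tilde W^{t-1},R)$-measurable reference kernel $\bQ_t$; choosing $\bQ_t=\cN(\tilde W_{t-1}-\eta_t\bar g_t,\sigma_t^2 I_d)$ and using the closed-form Gaussian KL yields
\begin{equation*}
\minf(\tilde W_t;S\mid\tilde W^{t-1},R)\;\leq\;\frac{\eta_t^2}{2\sigma_t^2}\,\bE\bigl[\|G_t-\bar g_t\|^2\bigr].
\end{equation*}

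The final step is to decompose $G_t-\bar g_t$ via the triangle inequality around the population gradient at $W_t$: writing $G_t-\bar g_t=(G_t-\bE[\nabla_w\ell(W_t,Z)])+(\bE[\nabla_w\ell(W_t,Z)]-\bar g_t)$ and using $\|a+b\|^2\leq 2\|a\|^2+2\|b\|^2$ splits the expected squared norm into twice the batch gradient variance $V_t(W_t)$ plus twice a smoothing-bias term. Choosing $\bar g_t=\bE^\Xi[\nabla_w\ell(W_t+\Xi,Z)]$ with $\Xi\sim\cN(0,\sigma_{1:t}^2 I_d)$ makes the second piece equal $\Gamma_{\sigma_{1:t}}(W_t)$ in expectation. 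The factor of $2$ from the triangle split cancels the $1/2$ from the Gaussian KL, producing $(\eta_t^2/\sigma_t^2)\bE[\Gamma_{\sigma_{1:t}}(W_t)+V_t(W_t)]$ per iterate; summing and substituting back into the sub-Gaussian MI bound completes the argument.

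The main obstacle is step three: the reference kernel $\bQ_t$ must be measurable with respect to $(\tilde W^{t-1},R)$ only, yet the bound is stated in terms of the \emph{future} iterate $W_t$ and of the cumulative noise level $\sigma_{1:t}^2$ (not $\sigma_{1:t-1}^2$). Reconciling this requires exploiting the Gaussian convolution identity $\tilde W_{t-1}+\Upsilon_t\overset{d}{=}W_t+\Xi'$ for an appropriate $\Xi'\sim\cN(0,\sigma_{1:t}^2 I_d)$ conditionally on $(R,S)$, so that a reference mean built from $\tilde W_{t-1}$ under noise $\sigma_t$ can be re-expressed as a smoothed gradient anchored at $W_t$ under noise $\sigma_{1:t}$; getting this bookkeeping exactly right (and checking that $\bar g_t$ is a legitimate kernel of $(\tilde W^{t-1},R)$) is where all the care lies. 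Everything else is routine algebra and standard information-theoretic identities already established in the excerpt.
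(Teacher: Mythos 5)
Your overall architecture is the one the monograph itself sketches (it defers the details to \citet{neu2021information}, who adapt \citet{pensia2018generalization} in exactly this way): the surrogate decomposition~\eqref{eq:surrogate_decomposition}, the bounded-loss mutual-information bound of \Cref{th:mi_bound_bounded} applied to $\tilde{W}_T$, the chain rule after conditioning on $R=(W_0,V^T)$, and a Gaussian reference kernel whose closed-form relative entropy produces the $\nicefrac{\eta_t^2}{\sigma_t^2}$ factors; the constants also close as you computed. You even correctly locate the delicate point, but your resolution of it does not work as stated. The reference kernel must be measurable with respect to $(\tilde{W}^{t-1},R)$ alone, and your choice $\bar g_t=\bE^\Xi[\nabla_w\ell(W_t+\Xi,Z)]$ is anchored at $W_t$, which is a deterministic function of $(S,R)$ and hence data-dependent; averaging over $\Xi$ does not remove that dependence. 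The convolution identity you invoke to repair this is false as written: since $\tilde{W}_{t-1}=W_{t-1}+\sum_{s<t}\Upsilon_s$, one has $\tilde{W}_{t-1}+\Upsilon_t=W_{t-1}+\Xi'$ with $\Xi'\sim\cN(0,\sigma_{1:t}^2 I_d)$, i.e.\ the recentering lands at $W_{t-1}$, not at $W_t$ (only after also subtracting $\eta_t\nabla_w\emprisk(W_{t-1},S_{V_t})$ does one get $\tilde{W}_t=W_t+\Xi'$), so step three of your plan has no admissible prior and the proof does not close.

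The standard fix goes in the opposite direction and is simpler: let $m(w)\coloneqq\bE[\nabla_w\ell(w,Z)]$ and take the reference mean to be $\tilde{W}_{t-1}-\eta_t\, m(\tilde{W}_{t-1})$, which is a legitimate function of $\tilde{W}_{t-1}$ because the expectation is over $Z\sim\bP_Z$ only. The per-step relative entropy is then $\frac{\eta_t^2}{2\sigma_t^2}\lVert\nabla_w\emprisk(W_{t-1},S_{V_t})-m(\tilde{W}_{t-1})\rVert^2$, and splitting around $m(W_{t-1})$ with $\lVert a+b\rVert^2\le 2\lVert a\rVert^2+2\lVert b\rVert^2$ gives twice the batch-gradient variance plus twice $\lVert m(W_{t-1})-m(W_{t-1}+\Xi)\rVert^2$, where $\Xi=\sum_{s<t}\Upsilon_s\sim\cN(0,\sigma_{1:t-1}^2 I_d)$ is independent of $(S,R)$. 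Taking the expectation over $\Xi$ at this stage --- in the bound, not inside the reference --- is what produces the gradient-sensitivity term; the smoothed gradient never needs to appear in the prior at all. This yields the variance and sensitivity anchored at $W_{t-1}$ with noise level $\sigma_{1:t-1}$ (the $(W_t,\sigma_{1:t})$ labels in the statement are the monograph's indexing convention), and the factor $2$ from the square split cancels the $\nicefrac{1}{2}$ of the Gaussian relative entropy, giving $\frac{\eta_t^2}{\sigma_t^2}\bE[\Gamma+V]$ per step as required. With this substitution the rest of your argument goes through verbatim.
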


This result was celebrated because it provided the first information-theoretic analysis of SGD, and gave us insights into some of the elements that contributed to its generalization performance. For example, consider a fixed learning rate $\eta_t= \eta$, a fixed batch size $|V_t|=|v|$, and assume that the batches are chosen so that each sample $i$ is chosen in a batch exactly once. Consider a fixed noise parameter $\sigma = \sigma_t$. Assume that the gradient variance $\bE \big[\lVert \nabla_w \ell(w,Z) - \bE\big[\nabla_w \ell(w,Z) \big] \rVert^2 \big]  $ is bounded by $\nu$ and that the loss is globally $\mu$-smooth in the sense that $\lVert \nabla_w \ell(w,z) - \nabla_w(w+u,z) \rVert \leq \mu \lVert u \rVert$ for all $w,u \in \cW$ and all $z \in \cZ$. Then, \Cref{th:neu_sgd_bound} implies that
\begin{equation}
    \label{eq:sgd_neu}
    \bE \big[ \gen(W_T, S) \big] \in \cO \mleft( \frac{(b-a)^2 \eta^2 T}{n} \mleft(\mu^2 d T + \frac{\nu}{|v| \sigma^2} \mright) + \mu \sigma^2 d T \mright) .
\end{equation}

The rate becomes better as the number of iterations $T$ or the learning rate $\eta$ are decreased, which may decrease the training performance. Also, the rate gets better as $|v|$ increases until the term $\mu^2 d T$ dominates the expression. The noise parameter $\sigma$ has a complicated trade-off in the expression: smaller values improve the terms coming from the loss sensitivity, while larger values improve those coming from the variance to noise ratio $\nicefrac{\nu}{\sigma^2}$. Starting from~\eqref{eq:sgd_neu}, \citet{neu2021information} discussed the achievable rates for two different, common settings:
\begin{itemize}
    \item \textit{Small batch SGD}. In this setting, $T \in \cO(n)$ and $|v| \in \cO(1)$. The first observation they make is that choosing $\eta \in \cO(\nicefrac{1}{\sqrt{n}})$ could not guarantee a vanishing generalization error, even though it is known it does~\citep{hardt2016train,lei2020fine}. However, choosing $\eta \in \cO(\nicefrac{1}{n})$ and the noise $\sigma \in \cO(n^{-\nicefrac{4}{3}})$ still guarantees that $\bE[\gen(W,S)] \in \cO(n^{-\nicefrac{1}{3}})$.
    \item \textit{Large batch SGD}. In this setting, $T \in \cO(\sqrt{n})$ and $b \in \Omega(\sqrt{n})$. Choosing the learning rate $\eta \in \cO(\nicefrac{1}{\sqrt{n}})$ and the noise parameter $\sigma \in \Theta(\nicefrac{1}{\sqrt{n}})$ guarantees that $\bE[\gen(W,S)] \in \cO(\nicefrac{1}{\sqrt{n}})$.
\end{itemize}

Although the rates depend on the dimension $d$ and are not tight, especially for small-batch SGD, they show the promise that an information-theoretic analysis can shed light on the problem with more refined analyses. After that, further analyses of SGD appeared~\citep{wang2021generalization,wang2024sample}, improving the analysis but still not achieving tight rates. 

\section{Further Advances in Bounds Using Information Measures}
\label{sec:futher_advances_bounds_using_information_measures}

Both the mutual information $\minf(W;S)$ in the standard setting and the conditional mutual information $\minf(W;U|\tilde{S})$ in the randomized-subsample setting depend on the joint distribution of the algorithm's output and other variables. In contrast, the generalization error depends on the algorithm's output only through the losses it incurs. Therefore, it is possible to increase both these mutual information and conditional mutual information by \emph{embedding} information about the training set in the output of a learning algorithm without affecting the algorithm's statistical properties~\citep{livni2020limitation,bassily2018learners}. As a remedy, \citet{steinke2020reasoning} propose an alternative framework, the \emph{evaluated} conditional mutual information, that considers the information about the data captured by the \emph{incurred loss} rather than the output itself.

Consider the indices $U \in \{ 0, 1 \}^n$ and the supersample $\tilde{S} \in \cZ^{n \times 2}$ from the randomized-subsample setting of~\Cref{sec:bounds_using_conditional_mutual_information}. Then, define the loss vector $\mathbf{L} \in \bR^{n \times 2}$ as the array with entries $\mathbf{L}_{i,u} = \ell(W,\tilde{Z}_{i,u})$ for all $i \in [n]$ and all $u \in \{ 0, 1 \}^n$, where $W$ is the output of the algorithm. The \emph{evaluated conditional mutual information} is defined as the conditional mutual information of the loss vector $\mathbf{L}$ and the indices $U$ given the supersample $\tilde{S}$, that is $\minf(\mathbf{L};U|\tilde{S})$. Intuitively, this describes how much information does the output of the algorithm have about the identities of the samples used for training. Moreover, since for a given supersample $\tilde{S}$, the Markov chain $U - W - \mathbf{L} | \tilde{S}$ holds, by the data processing inequality from~\Cref{prop:properties_minf} we have that $\minf(\mathbf{L};U|\tilde{S}) \leq \minf(W;U|\tilde{S})$~\citep{steinke2020reasoning}.

\citet{steinke2020reasoning} proved both slow-rate and fast-rate upper bounds on the expected generalization error for losses with a bounded range based on this information measure. Later, \citet{haghifam2021towards} showed that these bounds can provide a sharp characterization of generalization in the realizable setting with the 0--1 loss. \citet{hellstrom2020generalization} extended these results proving bounds for the evaluated conditional mutual information similar to~\Cref{lemma:small_kl_mi}, and also proved that bounds featuring could guarantee the generalization of classes with a bounded Natarajan dimension $\ndim(\cW)$, thus recovering the classical results from~\Cref{subsec:vc_and_natarajan_dimensions}. That is, if a hypothesis class has a finite Natarajan dimension, then $$\minf(\mathbf{L};U|\tilde{S}) \leq \ndim(\cW) \log \mleft( \binom{|\cY|}{2} \frac{2en}{\ndim(\cW)} \mright) $$ and therefore the generalization error bounds vanish as the number of samples increases.
After that, in~\citep{haghifam2023limitations}, we extended the slow-rate bounds from~\citep{steinke2020reasoning} to Lipschitz losses.

\begin{theorem}
    \label{th:ecmi}
    Consider an $L$-Lipschitz loss function $\ell(\cdot, z)$ with respect to some metric $\rho$ for all $z \in \cZ$. Further consider a bounded space $\cW$ with $\diam_\rho(\cW) = B$. Then,
    \begin{equation*}
        \bE \big[ \gen(W,S) \big] \leq LB \sqrt{\frac{8 \minf(\mathbf{L};U|\tilde{S})}{n}}.
    \end{equation*}
\end{theorem}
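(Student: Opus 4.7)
The plan is to reduce the Lipschitz, bounded-diameter setting to the standard bounded-range setting in which a slow-rate evaluated CMI bound is already known (the one established by Steinke and Zakynthinou and discussed just before this theorem), and then to exploit the data processing inequality on the conditional mutual information. Fix any reference point $w_0 \in \cW$ and define the shifted loss
\begin{equation*}
    \tilde{\ell}(w,z) \coloneqq \ell(w,z) - \ell(w_0,z).
\end{equation*}
Because $\ell(\cdot, z)$ is $L$-Lipschitz with respect to $\rho$ for every $z$, and $\diam_\rho(\cW)=B$, the shifted loss takes values in $[-LB,LB]$, i.e.\ it has range $2LB$. The key observation is that shifting each loss by a quantity that depends only on the data leaves the expected generalization error invariant: writing $\tilde{\gen}(w,s)$ for the generalization error under $\tilde{\ell}$, one has
\begin{equation*}
    \tilde{\gen}(W,S) = \gen(W,S) + \frac{1}{n}\sum_{i=1}^n \ell(w_0,Z_i) - \bE\big[\ell(w_0,Z)\big],
\end{equation*}
and taking expectations, since the $Z_i$ are identically distributed as $Z$, yields $\bE[\tilde{\gen}(W,S)] = \bE[\gen(W,S)]$.

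Next, I would apply the slow-rate evaluated CMI bound for bounded-range losses to $\tilde{\ell}$. Denote by $\tilde{\mathbf{L}}$ the loss vector with entries $\tilde{\mathbf{L}}_{i,u} = \tilde{\ell}(W,\tilde{Z}_{i,u})$. The standard bound (scaled to a loss of range $2LB$) produces
\begin{equation*}
    \bE\big[\gen(W,S)\big] = \bE\big[\tilde{\gen}(W,S)\big] \leq 2LB \sqrt{\frac{2\,\minf(\tilde{\mathbf{L}};U\mid \tilde{S})}{n}}.
\end{equation*}

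The last step is to recognize that $\tilde{\mathbf{L}}$ is a deterministic function of $\mathbf{L}$ and $\tilde{S}$, since $\tilde{\mathbf{L}}_{i,u} = \mathbf{L}_{i,u} - \ell(w_0,\tilde{Z}_{i,u})$ and $w_0$ is fixed. Consequently, for almost every realization of $\tilde{S}$, the Markov chain $U - \mathbf{L} - \tilde{\mathbf{L}}$ holds conditionally on $\tilde{S}$, and the data processing inequality (Proposition on properties of mutual information) yields
\begin{equation*}
    \minf(\tilde{\mathbf{L}};U\mid \tilde{S}) \leq \minf(\mathbf{L};U\mid \tilde{S}).
\end{equation*}
Combining this with the previous display gives $\bE[\gen(W,S)] \leq 2LB\sqrt{2\minf(\mathbf{L};U\mid \tilde{S})/n} = LB\sqrt{8\minf(\mathbf{L};U\mid \tilde{S})/n}$, as claimed.

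The only genuine obstacle is invoking the slow-rate evaluated CMI bound for bounded-range losses with the correct constant; everything else is an easy shift-and-DPI argument. If one wanted a self-contained proof, that step would proceed along the standard randomized-subsample lines: for each $i$, the shifted single-letter empirical generalization summand $\tilde{\ell}(W,\tilde{Z}_{i,1-U_i}) - \tilde{\ell}(W,\tilde{Z}_{i,U_i})$ is, conditionally on $\tilde{S}$ and on independent Rademacher randomness, supported in $[-2LB,2LB]$ and hence $(2LB)^2$-sub-Gaussian, so the Donsker--Varadhan decoupling argument from~\eqref{eq:dv_to_lambda_emp_gen}, applied with the evaluated loss vector $\tilde{\mathbf{L}}$ in place of the hypothesis, produces the $2LB\sqrt{2\,\minf(\tilde{\mathbf{L}};U\mid\tilde{S})/n}$ factor. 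The shift-by-$\ell(w_0,\cdot)$ trick is precisely what lets us pass from ``Lipschitz with bounded-diameter hypothesis space'' to ``bounded-range loss'' without needing any Lipschitz assumption on the samples' side.
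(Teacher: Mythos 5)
Your proof is correct, and the constants work out ($2LB\sqrt{2x}=LB\sqrt{8x}$): the shifted loss indeed has range $2LB$, the data-only shift leaves $\bE[\gen(W,S)]$ unchanged, and since $\tilde{\mathbf{L}}$ is a measurable function of $(\mathbf{L},\tilde{S})$, the conditional data-processing inequality gives $\minf(\tilde{\mathbf{L}};U\mid\tilde{S})\leq\minf(\mathbf{L};U\mid\tilde{S})$. The underlying mechanism is the same as the paper's: there, the Donsker--Varadhan decoupling of \Cref{lemma:decoupling_relent} is applied directly to the loss table $\mathbf{L}$, the chain rule gives $\minf(\mathbf{L},\tilde{S};U)=\minf(\mathbf{L};U\mid\tilde{S})$, and the decoupled differences $\mathbf{L}'_{i,1-U_i}-\mathbf{L}'_{i,U_i}$ are confined to $[-2LB,2LB]$ by exactly your centering idea---adding and subtracting $\ell(w,\cdot)$ for a fixed hypothesis $w$ and using Lipschitzness together with $\diam_\rho(\cW)=B$---before concluding via sub-Gaussianity. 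The difference is purely organizational: you center the loss up front so that the bounded-range evaluated-CMI bound (the $2LB\sqrt{2\minf(\tilde{\mathbf{L}};U\mid\tilde{S})/n}$ statement, which your self-contained sketch re-derives with the correct sub-Gaussian parameter) can be invoked as a black box, at the cost of the extra, harmless DPI step to return to $\minf(\mathbf{L};U\mid\tilde{S})$; the paper centers inside the decoupling argument and therefore obtains the bound in terms of $\mathbf{L}$ directly. Your packaging is more modular and makes explicit that ``Lipschitz plus bounded diameter'' is simply a reduction to the bounded-range case, while the paper's avoids introducing the auxiliary table $\tilde{\mathbf{L}}$ altogether.
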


\looseness=-1 The theorem follows, like most of the previous results, from the decoupling based on the Donsker and Varadhan~\Cref{lemma:dv_and_gvp} from~\Cref{lemma:decoupling_relent}. In this case, let $X = (\mathbf{L}, \tilde{S}, U)$ and $Y = (\mathbf{L}', \tilde{S}, U)$, where $\mathbf{L}'$ is a copy of $\mathbf{L}$ independent of $U$ such that $\bP_{\mathbf{L}',\tilde{S}, U} = \bP_{\mathbf{L},\tilde{S}} \otimes \bP_U$. Furthermore, let the function $f$ from the lemma be $$f(\mathbf{l}, \tilde{s}, u) = \frac{\lambda}{n} \sum_{i=1}^n \big( \mathbf{l}_{i,1-u} - \mathbf{l}_{i,u} \big).$$ Therefore, with these definitions we can see how $\bE[ f(\mathbf{L}, \tilde{S}, U) ] = \bE[ \gen(W,S)]$ and $\bE[ f(\mathbf{L}', \tilde{S}, U)] = 0$. Hence, according to~\Cref{lemma:decoupling_relent} we have that
\begin{equation*}
    \bE \big[ \gen(W,S) \big] \leq \frac{1}{\lambda} \mleft( \minf(\mathbf{L},\tilde{S};U) - \log \bE \mleft[ e^{\frac{\lambda}{n} \sum_{i=1}^n \big( \mathbf{L}'_{i,1-U_i} - \mathbf{L}'_{i,U_i}\big)}\mright]\mright),
\end{equation*}
where we noted that $\relent(\bP_{\mathbf{L},\tilde{S}, U} \lVert \bP_{\mathbf{L},\tilde{S}} \otimes \bP_U) = \minf(\mathbf{L},\tilde{S}; U)$ and where we wrote the CGF explicitly.

\looseness=-1 Then, we can see how by the independence of the supersamples and the indices and the \emph{chain rules} of the mutual information $\minf(\mathbf{L},\tilde{S}; U) = \minf(\tilde{S};U) + \minf(\mathbf{L};U|\tilde{S}) = \minf(\mathbf{L};U|\tilde{S})$. 
Finally, the theorem follows by noting how, for all $i \in [n]$, the random variables $\mathbf{L}'_{i,1-U_i} - \mathbf{L}'_{i,U_i}$ are independent of each other and have a range bounded in $[-2LB,2LB]$. Since they have a bounded range, we may note that they are also $2(LB)^2$-sub-Gaussian and then we may proceed like we did in~\Cref{th:mi_bound_bounded,th:cmi_bound_bounded}.

To prove that these random variables have a bounded range, note that we may re-write the random variables as
\begin{align*}
    \mathbf{L}'_{i,1-U_i} - \mathbf{L}'_{i,U_i} &= \ell(W', \tilde{Z}_{i,1-U_i}) - \ell(W', \tilde{Z}_{i,U_i}) \\
    &= \big( \ell(W', \tilde{Z}_{i,1-U_i}) - \ell(w, \tilde{Z}_{i,1-U_i})\big) + \big(\ell(W', \tilde{Z}_{i,U_i}) -  \ell(W', \tilde{Z}_{i,U_i})\big),
\end{align*}
where $W'$ is a random variable such that, for a fixed supersample $\tilde{S}=s$ and some fixed indices $U=u$ is distributed according to the distribution $\bP_W'^{\tilde{S}=\tilde{s}, U=u} = \bP_W^{\tilde{S}=\tilde{s}}$, and where $w$ is any fixed hypothesis. Then, since the loss is $L$-Lipschitz and the hypothesis space has diameter $B$, by the triangle inequality
\begin{equation*}
    |\mathbf{L}'_{i,1-U_i} - \mathbf{L}'_{i,U_i}| \leq 2L\rho(W',w) \leq 2 L B,
\end{equation*}
and therefore $\mathbf{L}'_{i,1-U_i} - \mathbf{L}'_{i,U_i} \in [-2LB, 2LB]$. 

Further developments came later, when~\citet{Wang2023TighterIG} combined the ideas from the evaluated conditional mutual information from~\citep{steinke2020reasoning} above, with the single-letter bounds from~\citep{bu2020tightening,rodriguez2020randomsubset} of~\Cref{subsec:single_letter_bounds}, and the Wasserstein distance bounds from~\citep{wang2019information,rodriguez2021tighter} of~\Cref{sec:bounds_using_wasserstein_distance} to find bounds based on the \emph{loss difference}. To be precise, they considered the loss vector $\mathbf{L}$ from above and defined the loss difference $\Delta \mathbf{L}_i$ as the difference of the losses on the $i$-th coordinate of the loss vector, that is, $\Delta \mathbf{L}_i \coloneqq \mathbf{L}_{i,0} - \mathbf{L}_{i,1}$. In this way, they proved generalization bounds based on the average of the mutual information between each loss difference $\Delta \mathbf{L}_i$ and the index $U_i$ given the supersamples. That is, they showed that
\begin{equation*}
    \bE \big[ \gen(W,S) \big] \leq \frac{1}{n} \sum_{i=1}^n \minf(\Delta \mathbf{L}_i; U_i | \tilde{S}),
\end{equation*}
among other similar bounds. Again, since given the supersample $\tilde{S}$, the Markov chain $U_i - (\mathbf{L}_{i,0}, \mathbf{L}_{i,1}) - \Delta \mathbf{L}_i | \tilde{S}$ holds, this information measure is smaller than the previous ones, rendering the bounds tighter. Using these bounds, \citet{Wang2023TighterIG} could also relate a notion of ``sharpness'' to generalization, finding similar results to those specific to SGD found by~\citet{neu2021information} and~\citet{wang2021generalization} that we saw in~\Cref{subsec:beyond_sgld}.

Another advance from~\citet{wang2024sample} came by combining stability notions similar to those in~\Cref{subsec:uniform_stability} with the mutual information bounds that we have seen throughout this chapter. Simplifying their results, they obtained bonds of the type $\gamma \cdot \mathrm{IM}$, where $\gamma$ represented a stability parameter similar to those in~\Cref{subsec:uniform_stability}, and $\mathrm{IM}$ represents an information measure. In this way, they unified both frameworks, when the uniform stability framework yields a valid generalization for a problem, the parameter $\gamma$ ensures it as $\mathrm{IM} \leq c$ for some $c$; while when the information-theoretic generalization framework yields a valid generalization for a problem, then $\mathrm{IM}$ ensures a vanishing rate. The only problem with this framework is that it is complicated, which considers $n+1$ hypotheses generated with different combinations of the supersample and uses them to define both the stability parameter and the information measure.

\begin{remark}
\label{rem:functional-cmi}
A similar approach was taken by~\citet{harutyunyan2021information}, where instead of the evaluated conditional mutual information, they considered the \emph{functional conditional mutual information}. Consider a supervised learning setting where $\cZ = \cX \times \cY$ and a parameterized model $f_w : \cX \to \cY$. Then, as discussed in~\Cref{sec:formal_model_learning}, the loss function may be written as $\ell(w,z) = \rho(f_w(x),z)$ for some measure of dissimilarity between the predicted $f_w(x)$ and the true $y$ target associated with the feature $x$. In this setup, for an output hypothesis $W$, \citet{harutyunyan2021information} considered the \emph{functional vector} to be the random variable $\mathbf{F} \in \cY^{n \times 2}$ with entries $\mathbf{F}_{i,u} = f_W(\tilde{X}_{i,u})$ for all $i \in [n]$ and all $u \in \{0,1\}$, where $\tilde{X}_{i,u}$ is the feature of the instance $\tilde{Z}_{i,u}$. Note that given a supersample $\tilde{S}$, the Markov chain $U - \mathbf{F} - \mathbf{L} | \tilde{S}$, and therefore the functional conditional mutual information is looser than the evaluated mutual information and every generalization error bound using the latter can also be employed with the former.
\end{remark}

Finally, a different line of work considered the chaining technique to bound the suprema of random processes~\citep[Chapter 5]{van2014probability} to bound the generalization error~\citep{asadi2018chaining,asadi2020chaining}. In this framework, the authors consider that $\{ \gen(w,S) \}_{w \in \cW}$ is a separable, sub-Gaussian process in the metric space $(\cW, \rho)$. Similarly to the separability of a space from~\Cref{subsec:polish_borel_spaces}, the separability assumption in a process requires that for every $w \in \cW$, there is a countable subset $\cW_0 \subseteq W$ such that $\gen(w,S) = \lim_{w' \in \cW_0 \to w} \gen(w'S)$ almost surely. The sub-Gaussian assumption is a statistical relaxation of the Lipschitzness assumption and requires that the generalization error does not vary much for close hypotheses, that is, that for all $\lambda > 0$ and all $w, w' \in \cW$ the following holds:
\begin{equation*}
    \log \bE \mleft[ e^{\lambda\big(\gen(w,S) - \gen(w',S)\big)} \mright] \leq \frac{\lambda^2 \rho(w,w')^2}{2}.
\end{equation*}

Before showing their results, let us recall what an $\varepsilon$-net is: A set $\cN$ is called an \emph{$\varepsilon$-net} for the metric space $(\cW,\rho)$ if, for every $w \in \cW$, there is a map $\pi(w) \in \cN$ such that $\rho(w,\pi(w)) \leq \varepsilon$.

With these assumptions, \citet{asadi2018chaining} showed that the expected generalization error is bounded from above by
\begin{equation}
    \label{eq:chaining_minf}
    \bE \big[ \gen(W,S) \big] \leq 3 \sqrt{2} \sum_{k = k_0 + 1}^\infty 2^{-k} \sqrt{\minf(W_k;S)},
\end{equation}
where $W_k$ are finer and finer quantizations of the hypothesis such that $W_k = \pi_k(W)$. To be precise, the quantizations were built by considering a series of $2^{-k}$-nets of the hypothesis space with an associated mapping $\pi_k : \cW \to \cN_k$ restricted to the form $\pi_k = \pi_k' \circ \pi_{k+1}$, where $\pi_k' : \cN_{k+1} \to \cN_k$. The initial net $\cN_{k_0}$ is constructed such that $2^{-k_0} \geq \diam_\rho(\cW)$ and therefore the mutual information for that coarse quantization is zero, that is $\minf(W_{k_0};S) = 0$.

With these developments, the intention was to simultaneously exploit the geometrical dependencies of the hypothesis and the statistical dependencies between the hypothesis and the training data, similarly to our results from~\Cref{sec:bounds_using_wasserstein_distance}. Although the bound from~\eqref{eq:chaining_minf} eludes the problems of the absolute continuity from the original mutual information bound from~\citet{xu2017information} in~\Cref{th:mi_bound_bounded}, its complicated nature makes it difficult to apply compared to the other, posterior alternatives presented above.

\section{Limitations of the Bounds Using Mutual Information}
\label{sec:limitations_bounds_using_mutual_information}

As discussed in~\Cref{sec:information_theoretic_generalization} and seen throughout this monograph, information-theoretic bounds are, by their nature, distribution- and algorithm-dependent. As seen in~\Cref{sec:noisy_iterative_learning_algos}, these key properties enable the bounds from this framework to achieve numerically non-vacuous generalization guarantees for SGLD with modern deep-learning tasks and architectures. For SGD, they can also obtain non-vacuous guarantees after employing a surrogate. Moreover, as shown in~\Cref{th:minf_bounds_tight} from~\Cref{subsec:implications_bounds_using_mutual_information}, these bounds are shown to be tight for Lipschitz losses and bounded hypothesis spaces, in the sense that they cannot be improved \emph{simultaneously} for all algorithms. It is natural to wonder, then, if these techniques can ever achieve the minimax rates. That is, if they can guarantee the generalization of whole hypothesis classes.

\citet{bassily2018learners} showed that the mutual information bound from~\citet{xu2017information} of~\Cref{th:mi_bound_bounded} provably fail to characterize the learnability of hypothesis classes with a finite VC dimension, even though they are known to generalize as seen in~\Cref{subsec:vc_and_natarajan_dimensions}. However, as discussed above, \citet{haghifam2021towards} and \citet{hellstrom2022new}, showed, respectively, that the evaluated conditional mutual information from~\Cref{sec:futher_advances_bounds_using_information_measures} (i) achieves the minimax rate for binary classification in the realizable setting, and that (ii) it recovers the generalization guarantees for hypothesis classes with a bounded Natarajan dimension from~\Cref{subsec:vc_and_natarajan_dimensions} .

In light of these exciting advancements, in \citep{haghifam2023limitations}, we studied if the bounds based on mutual information and conditional mutual information-based bounds from~\Cref{subsec:implications_bounds_using_mutual_information,sec:futher_advances_bounds_using_information_measures}, despite being tight for non trivial problems, cannot achieve the minimax rates for convex, Lipschitz losses with a bounded parameter space. To do so, we considered the gradient descent (GD) algorithm, since it is known to generalize in this setting at a rate in $\cO(\nicefrac{LB}{\sqrt{n}})$. Then, we showed that there is a convex, Lipschitz loss and a data distribution for which all of these bounds have a rate in $\Omega(1)$, proving that they cannot achieve the desired minimax rate. Furthermore, we also prove that considering a Gaussian surrogate as in~\eqref{eq:surrogate_decomposition} from~\Cref{subsec:beyond_sgld} still fails, casting doubt on this approach to enhance this framework.

After this result, \citet{livni2023information} and~\citet{attias2024information} continued this line of research proving a stronger statement. Namely, they show that there is a convex, Lipschitz function such that, for each of the mutual information and conditional mutual information bounds from~\Cref{subsec:implications_bounds_using_mutual_information}, for every algorithm that generalizes, there exists a data distribution under which the bounds are loose. In other words, for every algorithm that generalizes there exists a data distribution for which the existing bounds will fail. These results are based on the ``fingerprinting lemma''~\citep{bun2014fingerprinting,steinke2016upper,kamath2019privately} from the differential privacy community and differ from our proof techniques.

Our proofs for the failure of GD with Gaussian surrogates are quite technical and do not provide further insights, except from the result itself, which is now a corollary of the results in \citep{livni2023information,attias2024information}. Therefore, in light of these new developments, we restrict the rest of the section to our proofs of the failure of these bounds to characterize the generalization of GD. Although some of these results are also included in the theorems from~\citep{livni2023information,attias2024information}, the construction of the counterexample is still useful. First, it showcases how the information of the data can be leaked into the algorithm. This pathological construction can be useful for future privacy research. Moreover, our negative results for GD also extend to the evaluated (and therefore also to the functional) conditional mutual information bound from~\Cref{th:ecmi}, which is, at the moment, still not covered by the fingerprinting results from~\citep{livni2023information,attias2024information}. 

\begin{remark}
\label{rem:open_question_limitation}
An open question to both our results and those from~\citep{livni2020limitation,attias2024information} is if the new bounds from~\citet{Wang2023TighterIG}, which include a stability parameter $\gamma$, have counter-examples that make them vacuous in non-trivial settings. These bounds escape the presented counter-example since GD can be shown to have a vanishing stability parameter with techniques similar to those shown above in~\Cref{subsec:gd_in_clb_setting}, and therefore the bound in~\citep{wang2019information} generalize in this situation.
\end{remark}

\subsection{Gradient Descent in the Convex-Lipschitz-Bounded Setting}
\label{subsec:gd_in_clb_setting}

The field of stochastic convex optimization (SCO)~\citep{shalev2009stochastic} studies the problem of minimizing the population risk, that is 
\begin{equation*}
    \argmin_{w \in \cW} \poprisk(w),
\end{equation*}
where either the loss $\ell(\cdot, z)$ or the population risk itself $\poprisk$ are assumed to be convex. In this field, researchers try to design algorithms that return hypotheses $w \in \cW$ that achieve a small excess risk on large classes of problems. 

A common class of problems are those within the so-called convex-Lipschitz-bounded (CLB) setting. This setting considers loss functions $\ell(\cdot, z)$ that are convex and $L$-Lipschitz for all $z \in \cZ$ and that have a convex, bounded domain, that is, where the parameter space has a bounded diameter under some metric $\diam_\rho(\cW) = B$. For simplicity, we will consider the space to be a subset of the $d$-dimensional reals $\cW \subseteq \bR^d$ and the metric to be the Euclidean distance $\rho = \lVert \cdot \rVert_2$. A problem in this class is often described as the triple $\SCOprob$ and the set of all these triples is $\clb$.

In this setting, the (projected) GD is guaranteed to generalize and to have a small excess risk. This algorithm has been studied for a long time~\citep{cauchy1847methode,bubeck2015convex}. The algorithm is equivalent to the standard GD, with an additional projection operator at every step to ensure that the iterates are within the bounded parameter space $\cW$. Namely, let $\Pi_\cW(w) = \min_{w' \in \cW} \lVert w - w' \rVert_2$ be the projection operator from $\bR^d$ to $\cW$. Then, GD starts initializing the parameters $W_0$ and, at each time step $t \in [T]$, it updates the parameters following the rule
\begin{equation}
    \label{eq:gd_update}
    W_t = \Pi_\cW \mleft( W_{t-1} - \eta_t \nabla_w \emprisk(W_{t-1}, s) \mright) 
\end{equation}
for some learning rate $\eta_t$ and some training set $s \in \cZ^n$.

For simplicity, we restrict the discussion to GD with a constant learning rate $\eta_t = \eta$ for all iterations $t \in [T]$. In \citep{lastiterate}, the optimization error of the final iterate of GD in the CLB setting is shown to satisfy
\begin{equation}
\label{eq:opt-error-gd}
    \sup_{\SCOprob \in \clb} \sup_{\bP_Z \in \cP(\cZ)}\bE \big[ \emprisk(W_T, S) - \emprisk(W^\star_{S}\big) \big] \leq \frac{B^2}{2\eta T}+ \frac{\eta L^2 (2 + \log T)}{2}.
\end{equation}
(See \Cref{lem:gd-last-iterate} for a re-statement of this result in the context of this monograph). A similar result also appears in \citep[Theorem 5.3]{zhang2004solving}. Recently, \citet[Theorem 3.2 and Section 3.4]{bassily2020stability} proved a generalization bound for GD, 
\begin{equation}
    \label{eq:gen-error-gd}
    \sup_{\SCOprob \in \clb} \sup_{\bP_Z \in \cP(\cZ)} \bE \big[ \gen(W_T, S) \big] \leq  4L^2 \sqrt{T} \eta + \frac{4L^2T\eta}{n}.
\end{equation}
Recalling the excess risk decomposition from~\eqref{eq:excess_risk_decomposition}, we observe that combining \eqref{eq:opt-error-gd} and \eqref{eq:gen-error-gd} yields the following bound
\begin{align}
    \sup_{\SCOprob \in \clb} \sup_{\bP_Z \in \cP(\cZ)} \bE\big[ &\excess(W_T, \cW) \big] \leq \nonumber \\
    &\min  4L^2 \eta\left(\sqrt{T} +  \frac{T}{n}\right) + \frac{B^2}{2\eta T}+ \frac{(2 + \log T)\eta L^2}{2}.
\label{eq:excess-risk-gd}
\end{align}
For all $\alpha \geq 2$, \Cref{eq:excess-risk-gd} guarantees that GD achieves an excess risk in $\cO \mleft( \nicefrac{LB}{\sqrt{n}} \mright)$ for a number of iterations $T \in \Theta \mleft( n^\alpha \mright)$ and a step-size $\eta \in \Theta \mleft( \nicefrac{B\sqrt{n}}{Ln^\alpha} \mright)$. 
This, in fact, is the best achievable excess risk rate for the class $\clb$ in the distribution-free setting \citep{bubeck2015convex}, that is, the best rate that holds \emph{uniformly} for all distributions $\bP_Z$. In \citep{amir2021sgd,sekhari2021sgd}, it is shown that GD cannot attain this excess risk rate when the number of iterations satisfies $T \in o\mleft(n^2\mright)$. 

To establish our negative results, what matters is that choosing the number of iterations to be $T = n^2$ and the learning rate to be $\eta = \nicefrac{B}{(8L n \sqrt{n})}$ ensures that the expected generalization error is bounded as $\bE \big[ \gen(W_T, S) \big] \leq \nicefrac{LB}{\sqrt{n}}$ for all CLB problems and all data distributions.

\subsection{Failure of the Bounds Using Mutual Information}
\label{subsec:failure_bounds_using_minf}

In~\Cref{subsec:comparison_of_the_bounds}, we noted how the conditional mutual information measures from the randomized-subsample setting are smaller than the mutual information measures from the standard setting, and how the single-letter conditional mutual information bounds are tighter than the random-subset bounds or the bounds considering the full training set. Moreover, when describing the evaluated conditional mutual information in~\Cref{sec:futher_advances_bounds_using_information_measures}, we saw that it is tighter than the conditional mutual information bound from~\Cref{cor:total_variation_relent_full_rs}. Therefore, proving the failure of the single-letter conditional mutual information bound from~\Cref{cor:total_variation_relent_single_letter_rs} and the evaluated conditional mutual information from~\Cref{th:ecmi} also proves the failure of the mutual information versions of~\Cref{cor:total_variation_relent_full,cor:total_variation_relent_random_subset,cor:total_variation_relent_single_letter,cor:total_variation_relent_full_rs,cor:total_variation_relent_random_subset_rs}. 

\begin{theorem}
    \label{th:impossibility_minf}    
    
    Let $T = n^2$ and $\eta = \nicefrac{1}{n \sqrt{n}}$. There is a CLB problem $\SCOprob \in \cC_{1,2}$ and a data distribution $\bP_Z$ for which the information-theoretic bounds from~\Cref{cor:total_variation_relent_single_letter_rs} and~\Cref{th:ecmi} do not generalize. More precisely, let $W_T$ be the final iterate of gradient descent, then $\bE[\gen(W_T,S)] \in \cO(\nicefrac{1}{\sqrt{n}})$ while
    \begin{equation*}
        \frac{1}{n} \sum_{i=1}^n \minf(W_T;U|\tilde{Z}_{i,0},\tilde{Z}_{i,1}) \in \Omega(1) \ \textnormal{ and } \  \minf(\textbf{L};U|\tilde{S}) \in \Omega(1).
    \end{equation*}
\end{theorem}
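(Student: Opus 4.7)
The plan is to exhibit a single convex, $1$-Lipschitz problem on the Euclidean unit ball (so $B=2$) for which gradient descent generalizes at the uniform-stability rate $\cO(1/\sqrt{n})$ given by~\eqref{eq:gen-error-gd}, yet every iterate $W_T$ is, up to a negligible event, an injective function of the training multiset. After conditioning on the supersample $\tilde{S}$, this iterate then leaks the identity of $U$ and forces both information measures in the theorem to be $\Omega(1)$.

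Concretely, I would fix a dimension $d$ much larger than $n$ (say $d=n^{4}$), take $\cW$ to be the closed Euclidean unit ball in $\bR^{d}$, let $\cZ$ be the set of standard basis vectors $\{e_{1},\ldots,e_{d}\}$, put $\bP_{Z}$ uniform on $\cZ$, and use the loss $\ell(w,z)=-\langle w,z\rangle$. This triple lies in $\cC_{1,2}$. Since the empirical gradient $\nabla_{w}\emprisk(w,s)=-\tfrac{1}{n}\sum_{i=1}^{n}z_{i}$ does not depend on $w$, and the running unprojected iterate $\tfrac{t\eta}{n}\sum_{i=1}^{n}Z_{i}$ has norm $\tfrac{t\eta}{\sqrt{n}}\leq 1$ for all $t\leq T$ on the event $\cE$ that all $2n$ entries of $\tilde{S}$ are distinct (which has probability $1-\cO(n^{2}/d)=1-o(1)$), the projection is inactive throughout and
\begin{equation*}
W_{T}=\tfrac{1}{\sqrt{n}}\sum_{i=1}^{n}Z_{i}
\end{equation*}
on $\cE$. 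The generalization claim then follows immediately from~\eqref{eq:gen-error-gd} applied worst-case over $\cC_{1,2}$: plugging $T=n^{2}$, $\eta=1/(n\sqrt{n})$ and $L=1$ yields $\bE[\gen(W_{T},S)]\leq 8/\sqrt{n}\in\cO(1/\sqrt{n})$.

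For the information-theoretic lower bounds I would exploit the fact that on $\cE$ the map $U\mapsto W_{T}=\tfrac{1}{\sqrt{n}}\sum_{i}\tilde{Z}_{i,U_{i}}$ is injective given $\tilde{S}$: the coordinates of $W_{T}$ supported on the support of $\tilde{S}$ recover $U$ exactly by reading off which basis vector of each pair $(\tilde{Z}_{i,0},\tilde{Z}_{i,1})$ carries the weight $1/\sqrt{n}$. Hence $\minf(W_{T};U\mid\tilde{S})\geq(1-o(1))\,n\log 2$. For the evaluated CMI, on $\cE$ the loss vector satisfies $\mathbf{L}_{i,u}=-\tfrac{1}{\sqrt{n}}\bI_{\{U_{i}=u\}}$, so $\mathbf{L}$ likewise determines $U$ from $\tilde{S}$ and $\minf(\mathbf{L};U\mid\tilde{S})\geq(1-o(1))\,n\log 2\in\Omega(1)$. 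For the single-letter quantity, given $(\tilde{Z}_{i,0},\tilde{Z}_{i,1})$ the observation of $W_{T}$ allows $U_{i}$ to be recovered by checking which of the two inner products $\langle W_{T},\tilde{Z}_{i,u}\rangle$ equals $1/\sqrt{n}$; this event has probability $1-o(1)$ uniformly in $i$, and Fano's inequality then turns this into $\minf(W_{T};U_{i}\mid\tilde{Z}_{i,0},\tilde{Z}_{i,1})\geq\log 2-o(1)$, so the average over $i$ is $\Omega(1)$.

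The main obstacle is not the construction itself --- it is essentially a high-dimensional mean-estimation instance tailored so that GD ``memorizes'' the fingerprint of the training set --- but cleanly controlling the $o(1)$ slack coming from (i) basis collisions in $\tilde{S}$, and (ii) the marginalization over the rest of the supersample in the single-letter bound. Both are handled by taking $d$ polynomially large in $n$ and using Fano's inequality to promote the high-probability recovery of $U_{i}$ into an $\Omega(1)$ lower bound on the conditional mutual information; the generalization bound~\eqref{eq:gen-error-gd} is a worst-case statement over $\cC_{1,2}$ and is thus insensitive to these low-probability correction terms.
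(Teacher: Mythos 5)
Your proposal is correct and follows essentially the same route as the paper's proof: the identical construction (uniform distribution on basis vectors, loss $\ell(w,z)=-\langle w,z\rangle$ on the unit ball, so that GD's final iterate memorizes the support of the training set), a birthday-paradox bound on collisions in the supersample, Fano's inequality for the single-letter conditional mutual information, and the decomposition $\minf(\mathbf{L};U|\tilde{S})=n\log 2-\ent(U|\mathbf{L},\tilde{S})$ for the evaluated conditional mutual information. The only real difference is that you take $d=n^{4}$ so the collision event has probability $o(1)$, whereas the paper uses $d=2n^{2}$ and works on a good event of constant probability, which is why it needs a slightly more involved estimator analysis off that event; both choices yield the same $\Omega(1)$ conclusions.
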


Although the problem is stated for a problem in $\cC_{1,2}$, just scaling the loss $\ell$ and the parameter space $\cW$ directly converts it into a problem in $\clb$. Therefore, \Cref{th:impossibility_minf} maintains its full generality. Moreover, the generalization rate of $\cO(\nicefrac{1}{\sqrt{n}})$ is guaranteed by the choice of the number of iterations and the learning rate as discussed above in~\Cref{subsec:gd_in_clb_setting}.

The proof of~\Cref{th:impossibility_minf} is constructive. First, we construct the instance space as the set of all coordinate vectors in $\{ 0, 1 \}^d$ for some dimension $d \in \bN$. That is, $\cZ = \{ \coorvec{k} : k \in [d]\}$, where the coordinate vectors are defined as $$\coorvec{k} \coloneqq (\underbrace{0, \ldots, 0}_{k-1 \textnormal{ times}}, 1, \underbrace{0, \ldots, 0}_{d-k \textnormal{ times}}).$$ Then, we consider the data distribution on this space to be uniform, that is $\bP_Z[\coorvec{k}] = \nicefrac{1}{d}$ for all $k \in [d]$. As a loss function, we consider the convex, $1$-Lipschitz function $\ell(w,z) = - \langle w, z \rangle$, and as a parameter's space, the Euclidean unit ball in $\bR^d$, that is, $\cW = \{ w \in \bR^d : \lVert w \rVert_2 \}$.  Therefore, the problem $\SCOprob$ belongs to the CLB class $\cC_{1,2}$. Throughout this example, $w(k)$ denotes the $k$-th coordinate of $w \in \cW$.

Under this construction, the dynamics of GD are simple to analyze. Note that, for some fixed hypothesis $w$, the empirical risk is $\emprisk(w) = - \langle w, \bar{Z} \rangle$, where $\bar{Z} = \frac{1}{n} \sum_{i=1}^n Z_i$ is the mean of the instances of the training set. Since the initial point of GD is known, without loss of generality (as we will see shortly), assume that it is $0$. Moreover, the gradient of the empirical risk is $\nabla_w \emprisk(w) = - \bar{Z}$ for all $w \in \cW$. Considering the update rule of GD from~\eqref{eq:gd_update}, we see from induction that
\begin{equation}
    \label{eq:dynamic-gd-linear}
    W_t = 
    \begin{cases}
        \eta t \bar{Z}  & \eta t \lVert \bar{Z} \rVert_2 \leq 1 \\
        \nicefrac{\bar{Z}}{\lVert \bar{Z} \rVert_2} &\text{ otherwise}
    \end{cases}.
\end{equation}

Now, consider the $\sigma(\tilde{S})$-measurable random variable $E$ that is equal to one if and only if all the data instances in the supersample are distinct. That is
\begin{equation}
\label{eq:event-distinct}
    E = \bI_{\mleft \{\tilde{z}_{i,u} \neq \tilde{z}_{j,v} \textnormal{ for all } i,j \in [n] \textnormal{ and all } u,v \in \{ 0, 1 \} \mright \}}(\tilde{S}).
\end{equation}
As in the \emph{birthday paradox problem}~\citep[Section 5]{mitzenmacher2017probability}, we may bound the probability that $E=1$ as follows
\begin{equation}
    \bP[E=1] = \prod_{k = 0}^{2n -1} \Big(1 - \frac{k}{d} \Big) \geq \Big(1 - \frac{2n-1}{d} \Big)^{2n-1}.
    \label{eq:birthday_dim}
\end{equation}
This way, we may engineer a dimension $d$ for which $\bP[E=1] \geq c$ for all $n \geq 1$, where $c$ is a constant probability, independent of $n$. Solving for~\eqref{eq:birthday_dim} results in
\begin{equation*}
    d \geq \frac{2n - 1}{1 - c^{\nicefrac{1}{(2n-1)}}}.
\end{equation*}
For instance, for $c = 0.1$, a dimension $d = 2n^2$ suffices, and therefore $\bP[E=0] \leq 0.9$.
Now, we are ready to study what happens to both the single-letter conditional mutual information $\minf(W_T;U_i|\tilde{Z}_{i,0},\tilde{Z}_{i,1})$ and the evaluated conditional mutual information $\minf(\textbf{L},U|\tilde{S})$ for this particular example.

The idea behind the proofs is that, given the event $E=1$, we can completely determine the indices of the samples used for training either by looking (i) at the non-zero coordinates of the parameters $W_T$ returned by the GD, since if $W_T(k) \neq 0$ and $\tilde{Z}_{i,u} = \coorvec{k}$, then $U_i = u$; or by looking (ii) at the non-zero entries of the loss vector $\mathbf{L}$, since if $\mathbf{L}_{i,u} \neq 0$, then $U_i = u$.  Therefore, given the event $E = 1$, the single-letter conditional mutual information and the evaluated conditional mutual information are maximized. Finally, the fact that the event $E=1$ occurs with constant probability ensures that neither these information measures nor their respective generalization bounds decrease with respect to the number of samples.

Finally, let us show how the choice of the known initial parameter $w_0$ is not an issue for the analysis and can be assumed to be $0$ without loss of generality. After the GD update rule from~\eqref{eq:gd_update}, from induction we see that
\begin{equation*}
    W_t = 
    \begin{cases}
        w_0 + \eta t \bar{Z}  & \lVert  \eta t  \bar{Z} \rVert_2 \leq 1 \\
        \alpha w_0 + \beta \bar{Z} &\text{ otherwise}
    \end{cases},
\end{equation*}
where $\alpha =  \nicefrac{1}{\lVert w_0 + \eta t \tilde{Z} \rVert_2}$ and $\beta = \nicefrac{\eta t}{\lVert w_0 + \eta t \tilde{Z} \rVert_2}$. Then, if $\lVert W_t \rVert_2 \leq 1$, one may just subtract $w_0$ and extract the same conclusions. On the other hand, if $\lVert W_t \rVert_2 > 1$, one may find the unique vector $\bar{Z}$ such that $\Pi_\cW(w_0 + \eta t \bar{Z}) = W_t$, calculate $\alpha$, subtract $w_0$, and again extract the same conclusions.

\subsubsection{Failure of the Single-Letter Conditional Mutual Information}

Note that the single-letter conditional mutual information may be written as follows
\begin{align}
    \minf(W_T;U_i|\tilde{Z}_{i,0},\tilde{Z}_{i,1})  &= \ent( U_i \vert \tilde{Z}_{i,0},\tilde{Z}_{i,1}) - \ent (U_i \vert W_T,\tilde{Z}_{i,0},\tilde{Z}_{i,1}) \nonumber \\
    &= \ent(U_i) - \ent(U_i \vert W_T,\tilde{Z}_{i,0},\tilde{Z}_{i,1})\nonumber \\
    &= \log 2  - \ent( U_i \vert W_T,\tilde{Z}_{i,0},\tilde{Z}_{i,1}), \label{eq:birthday_icmi_bound}
\end{align}
where the second and third equations follow from the independence of the indices $U_i$ of the data instances $\tilde{Z}_{i,0}$ and $\tilde{Z}_{i,1}$ and from the fact that $\ent(U_i) = \log 2$, respectively. 
Then, we may use Fano's inequality to bound $\ent(U_i \vert W_T,\tilde{Z}_{i,0},\tilde{Z}_{i,1})$ and obtain the desired result. More precisely, Fano's inequality states that 
    \begin{align*}
        \ent(U_i \vert W_T,\tilde{Z}_{i,0},\tilde{Z}_{i,1})
        &\leq \entber \big(\bP [U_i \neq \hat{U}_i] \big),
        \label{eq:birthday_ent_icmi_bound}
    \end{align*}
    for every estimator $\hat{U}_i(W_T, \tilde{Z}_{i,0}, \tilde{Z}_{i,1})$ and where $\entber$ is the binary entropy~\citep[Theorem 3.12]{polianskyi2022}. The binary entropy is defined as $\entber(p) \coloneqq -\log p - (1-p) \log (1-p)$ for all $p \in [0,1]$ and it is maximized at $p = 0.5$ for which $\entber(0.5) = \log 2$.  Notice that $\hat{U}_i$ is a function of $W_T, \tilde{Z}_{i,0},$ and $\tilde{Z}_{i,1}$. Therefore, showing that $\bP[U_i \neq \hat{U}_i] < 0.5$ is a constant, independent of $n$ ensures that
    \begin{equation}       
        \minf(W_T;U_i|\tilde{Z}_{i,0},\tilde{Z}_{i,1})  \geq \log 2  - \entber \mleft( \bP[U_i \neq \hat{U}_i] \mright) \in \Omega(1)
        \label{eq:icmi_birthday_bounded}
    \end{equation}
    and completes the proof.

From~\eqref{eq:dynamic-gd-linear}, we can see that the non-zero coordinates of the last iterate $W_T$ are precisely the coordinates of the training samples. That is, if $\tilde{Z}_{i,U_i} = \coorvec{k}$, then $W_T(k) \neq 0$. 
Therefore, under the event $E = 1$ defined in \eqref{eq:event-distinct}, one can precisely determine if sample $\tilde{Z}_{i,0}$ or sample $\tilde{Z}_{i,1}$ was used for training after observing the parameters $W_T$ returned by GD since the samples are all distinct. 
In other words, one can completely determine the index $U_i$ from the triple $(W_T, \tilde{Z}_{i,0}, \tilde{Z}_{i,1})$. 

More precisely, consider a realization in which $\tilde{Z}_{i,0} = \coorvec{k}$ and $\tilde{Z}_{i,1} = \coorvec{l}$. Then, the estimator $\hat{U}_i(W_T, \tilde{Z}_{i,0}, \tilde{Z}_{i,1})$ is defined  as $\hat{U}_i(W_T, \tilde{Z}_{i,0}, \tilde{Z}_{i,1}) = 0$ if $W_T(k) \neq 0$ and $W_T(l) = 0$; and as $\hat{U}_i(W_T, \tilde{Z}_{i,0}, \tilde{Z}_{i,1}) = 1$ if $W_T(k) = 0$ and $W_T(l) \neq 0$. In the case that both $W_T(k) \neq 0$ and $W_T(l) \neq 0$, let  $\hat{U}_i(W_T, \tilde{Z}_{i,0}, \tilde{Z}_{i,1})$ be a Bernoulli random variable with parameter $\nicefrac{1}{2}$ independent of the supersample $\tilde{S}$ and the indices $U$. This estimator has a probability of error equal to $0$ given the event $E=1$. Therefore, by the law of total probability, the probability of error is
\begin{align*}
    \bP [U_i \neq \hat{U}_i] &= \bP[E = 0] \cdot \bP^{E=0}[U_i \neq \hat{U}_i] + \bP[E = 1] \cdot  \bP^{E=1}[U_i \neq \hat{U}_i] \\
    &= \bP[E = 0] \cdot \bP^{E=0}\bP[U_i \neq \hat{U}_i] \\
    &\leq 0.9 \cdot \bP^{E=0}[U_i \neq \hat{U}_i],
\end{align*}
where the last line follows from the construction. Next, consider the following random variables for all $i \in [n]$
\begin{equation*}
    G_i = \bI_{\mleft\{ \tilde{z}_{i,0} \neq \tilde{z}_{i,1} \textnormal{ and } \tilde{z}_{i,1-u_i} \neq \tilde{z}_{i,u_j} \textnormal{ for all } j \neq i \in [n] \mright\}} (\tilde{S}, U)
\end{equation*}
which describe the situation where the given samples $\tilde{Z}_{i,0}$ and $\tilde{Z}_{i,1}$ are distinct and the sample that is not chosen is also distinct from all other samples in the training set $S$, even when some of these samples are equal between themselves or to the chosen sample $\tilde{Z}_{i,U_i}$ (for example, when $E=0$). Therefore, given the event $E=0$ and $G_i = 1$, the estimator $\hat{U}_i$ still has a probability of error equal to zero. Hence, similarly to before we may bound the probability of error of the estimator as
\begin{align*}
    \bP[U_i \neq \hat{U}_i] &\leq 0.9 \cdot \Big( \bP^{E=0} [G_i = 0] \cdot \bP^{E = 0, G_i = 0}[U_i \neq \hat{U}_i] \\
    &\qquad \qquad \qquad + \bP^{E = 0}[G_i = 1] \cdot \bP^{E = 0, G_i = 1}[U_i \neq \hat{U}_i]\Big) \\
    &\leq 0.9 \cdot \bP^{E = 0, G_i = 0}[U_i \neq \hat{U}_i].
\end{align*}

Next, we claim that under the event where $E=0$ and $G_i=0$, the estimator $\hat{U}_i$ is a Bernoulli random variable with parameter $\nicefrac{1}{2}$. Consider a realization in which $U_i = u$, $\tilde{Z}_{i,0} = \coorvec{k}$, and $\tilde{Z}_{i,1} = \coorvec{l}$. Then, we claim that under the event $E=0$ and $G_i=0$, $W_T(k) \neq 0$ and $W_T(l) \neq 0$. The reason is that, under this event, the following cases may happen: either (i) $\tilde{Z}_{i,0} = \tilde{Z}_{i,1}$, or (ii) $\tilde{Z}_{i,0} \neq \tilde{Z}_{i,1}$ but there exists another sample in the training set which is equal to $\tilde{Z}_{i,1-u}$. It is easy to see that in these two cases $W_T(k) \neq 0$ and $W_T(l)\neq 0$.

Thus, we conclude that, given $E=0$ and $G_i=0$, we have that $\bP^{E=0,G_i=0}[U_i \neq \hat{U}_i] = \nicefrac{1}{2}$. This is true since $\hat{U}_i$ is a Bernoulli random variable with parameter $\nicefrac{1}{2}$ independent of the indices $U$ and the supersample $\tilde{S}$. Therefore, we have that $\bP[U_i \neq \hat{U}_i] \leq 0.45$, which completes the proof as per \eqref{eq:icmi_birthday_bounded}. In particular, we have that $\minf(W;U_i | \tilde{Z}_{i,0}, \tilde{Z}_{i,1}) \geq 0.005$.

\subsubsection{Failure of the Evaluated Conditional Mutual Information}

Similarly to before, note that the evaluated conditional mutual information may be written as
\begin{align}
    \minf(\mathbf{L}; U | \tilde{S})
    &= \ent( U | \tilde{S}) - \ent(U | \mathbf{L}, \tilde{S}) \\
    &= \ent(U) - \ent(U | \mathbf{L}, \tilde{S}) \\
    &= n \log 2 - \ent(U | \mathbf{L}, \tilde{S}), \label{eq:birthday_ecmi_bound}
\end{align}
where the second and third equations follow from the independence of the indices $U$ and the supersample $S$ and the fact that $\ent(U) = n \log 2$, respectively.

Then, as previously, the proof relies upon the fact that $U$ can be completely determined by the loss vector $\mathbf{L}$ given the event $E = 1$. More precisely, note that $\mathbf{L}_{i,u} = \ell(W_T, \tilde{Z}_{i,u}) = - \langle W_T, \tilde{Z}_{i,u} \rangle$. Also, remember from the above that the non-zero coordinates of $W_T$ are precisely the non-zero coordinates of the samples that are used for training. Therefore, given the event $E=1$, $\mathbf{L}_{i,u} = 0$ if and only if $\tilde{Z}_{i,u}$ was not used for training and therefore the training instance is $Z_i = \tilde{Z}_{i,1-u}$.  Hence, one can completely determine $U$ from $\mathbf{L}$ or, equivalently, $\ent^{E=1}(U| \mathbf{L}, \tilde{S}) = 0$. We may use this fact to bound $\ent(U \vert \mathbf{L}, \tilde{S})$ and obtain the desired result. Namely, if we recall that the entropy is defined as an expectation (\Cref{def:entropy}), using the law of total expectation
\begin{align}
    \ent(U | \mathbf{L}, \tilde{S}) &= \ent(U | \mathbf{L}, \tilde{S}, E) \nonumber \\
    &= \bP[E=0] \cdot \ent^{E=0}(U | \mathbf{L}, \tilde{S}) + \bP[E=1] \cdot \ent^{E=1}(U | \mathbf{L}, \tilde{S}) \nonumber \\
    &= \bP[E=0] \cdot \ent^{E=0}(U | \mathbf{L}, \tilde{S}) \nonumber \\
    &\leq n \cdot 0.9 \log 2 \label{eq:birthday_ent_ecmi_bound}
\end{align}
where the first line follows since $E$ is a function of the supersample $\tilde{S}$. The last inequality follows from the bound $\ent^{E=0}(U | \mathbf{L}, \tilde{S}) \leq n \log n$, which comes from the fact that $|\cU| = |\{ 0, 1 \}^n| = 2^n$ and that $\ent(U) \leq \log |\cU|$ (\Cref{prop:entropy_properties_discrete}), and the fact that $\bP[E=0] \leq 0.9$.

Finally, combining~\eqref{eq:birthday_ecmi_bound} and~\eqref{eq:birthday_ent_ecmi_bound} results in
\begin{equation*}
    \minf(\mathbf{L};U|\tilde{S})  \geq n \log 2  - n \cdot 0.9 \log 2 \in \Omega(n),
\end{equation*}
and completes the proof. In particular, we have that $\minf(\mathbf{L};U|\tilde{S}) \geq 0.069n$.

\begin{subappendices}

\section{Derivations for the Gaussian Location Model Example}
\label[appendix]{app:glm}

The problem considered in~\Cref{ex:glm} in~\Cref{sec:bounds_using_wasserstein_distance} is the estimation of the mean $\mu$ of a $d$-dimensional Gaussian distribution with known covariance matrix $\sigma^2 I_d$. Furthermore, there are $n$ samples $S = (Z_1, \ldots, Z_n)$ available, the loss is measured with the Euclidean distance $\ell(w,z) = \lVert w-z \rVert_2$, and the estimation is their empirical mean $W = \frac{1}{n} \sum_{i=1}^n Z_i$.

\looseness=-1 To calculate the expected generalization error and derive different bounds, it is convenient to know how the random variables are distributed.
For example, in this setting $\bP_Z = \cN(\mu,\sigma^2 I_d)$, $\bP_{W} = \cN\Big(\mu,\frac{\sigma^2}{n} I_d\Big)$, $\bP_{W}^{Z_i} = \cN\Big(\frac{(n-1)\mu + Z_i}{n},\frac{\sigma^2(n-1)}{n^2} I_d\Big)$, $\bP_{W}^{S^{-j}} = \cN\Big(\frac{\mu}{n} + \frac{1}{n} \sum_{i \neq j} Z_i,\sigma^2 I_d \Big)$, and $\bP_{W}^{S} = \delta\Big(\frac{1}{n} \sum_{i=1}^n {Z_i} \Big)$.
Another important feature of this problem is that the loss function is $1$-Lipschitz under $\rho(w,w') = \lVert w - w' \rVert_2$.

\subsection{Expected Generalization Error}
\label{subapp:glm_ege}

In order to derive an exact expression of the generalization error, it is suitable to write it in the following explicit form:
\begin{equation*}
    \bE \big[ \gen(W,S) \big] = \bE[\ell(W,Z)] - \frac{1}{n} \sum_{i=1}^n \bE[\ell(W,Z_i)],
\end{equation*}
where $Z \sim \bP_Z$ is independent of $W$. Then, the two terms can be evaluated independently.
The first term is equivalent to 
\begin{equation*}
    \bE[\ell(W,Z)] = \bE[\lVert W-Z \rVert_2] = \sqrt{2 \sigma^2 \Big(1 + \frac{1}{n}\Big)} \frac{\Gamma\big(\frac{d+1}{2}\big)}{\Gamma\big(\frac{d}{2}\big)},
\end{equation*}
where the first equality follows from the definition of the loss function. The second equality follows from noting that $(W-Z) \sim \cN\big(0,\sigma^2 \big(1+\frac{1}{n}\big)I_d\big)$ and therefore $\lVert W-Z \rVert_2 = \sqrt{ \sigma^2 \big(1+\frac{1}{n}\big)} X$, where $X$ is distributed according to the $\chi$ distribution with $d$ degrees of freedom.

Similarly, the summands of the second term are equivalent to
\begin{equation*}
    \bE[\ell(W,Z_i)] = \bE[\lVert W-Z_i \rVert_2] = \sqrt{2 \sigma^2 \Big(1 - \frac{1}{n}\Big)} \frac{\Gamma\big(\frac{d+1}{2}\big)}{\Gamma\big(\frac{d}{2}\big)},
\end{equation*}
where, as before, the first equality follows from the definition of the loss function.
The second equality follows from noting that $(W-Z_i) \sim \cN\big(0,\sigma^2 \big(1-\frac{1}{n}\big)I_d\big)$ and therefore $\lVert W-Z_i \rVert_2 = \sqrt{ \sigma^2 \big(1-\frac{1}{n}\big)} X$, where $X$ is distributed according to the $\chi$ distribution with $d$ degrees of freedom.
In this case, $W$ and $Z_i$ are not independent random variables. In fact, $(W,Z_i)$ is normally distributed with covariance matrix
\begin{equation*}
    \begin{pmatrix}
    \frac{\sigma^2}{n} I_d & \frac{\sigma^2}{n}I_d \\
    \frac{\sigma^2}{n} I_d & \sigma^2 I_d 
    \end{pmatrix},
\end{equation*}
from which the distribution of $W-Z_i$ is deduced.

Finally, subtracting both terms results in
\begin{equation*}
    \bE \big[ \gen(W,S) \big] = \sqrt{\frac{2 \sigma^2}{n}} \Big(\sqrt{n+1} - \sqrt{n-1} \Big) \frac{\Gamma\big(\frac{d+1}{2}\big)}{\Gamma\big(\frac{d}{2}\big)} \leq \frac{\sqrt{2\sigma^2 d}}{n}.
\end{equation*}
where the inequality follows from the following two bounds: (i) $\sqrt{n+1} - \sqrt{n-1} \leq \sqrt{\frac{2}{n}}$, which is obtained by multiplying and dividing by $\sqrt{n+1} + \sqrt{n-1}$ and noting that $\sqrt{n+1} + \sqrt{n-1} \geq \sqrt{2 n}$, and (ii) the upper bound on the ratio of gamma distributions by $\sqrt{\frac{d}{2}}$ using the series expansion at $d \to \infty$.

\subsection{Wasserstein Distance Bound}
\label{subapp:glm_was}

The bound from~\citep{wang2019information} can be calculated exactly since $\bP_{W}^{S}$ is a delta distribution, that is
\begin{equation*}
    \bE\big[\bW_{\lVert \cdot \rVert_2}(P_{W}^{S},P_W)\big] = \bE\bigg[ \Big\lVert \frac{1}{n} \sum_{i=1}^n Z_i - \frac{1}{n} \sum_{i=1}^n Z_i' \Big\rVert_2 \bigg] = \sqrt{\frac{4\sigma^2}{n}} \frac{\Gamma\big(\frac{d+1}{2}\big)}{\Gamma\big(\frac{d}{2}\big)} \leq \sqrt{\frac{2\sigma^2 d}{n}}
\end{equation*}
where $Z_i' \sim \bP_Z$ are independent copies of $Z_i$. Hence, the difference is distributed as a normal distribution with mean 0 and covariance $\frac{2\sigma^2}{n} I_d$, which means that the norm is $\sqrt{\frac{2\sigma^2}{n}} X$, where $X$ is a $\chi$ random variable with $d$ degrees of freedom. 

\subsection{Single-Letter Wasserstein Distance Bound}
\label{subapp:glm_iwas}

An exact calculation of the bound from \Cref{th:wasserstein_single_letter} is cumbersome. However, the Wasserstein distance of order one can be bounded from above by the Wasserstein distance of order two~\citep[Remark 6.6]{villani2009optimal}, that is $\bW_{\lVert \cdot \rVert_2} \leq \bW_{2,\lVert \cdot \rVert_2}$, which has a closed form expression for Gaussian distributions. More specifically, 
\begin{equation*}
    \bE \big[\bW_{2,\lVert \cdot \rVert_2}(\bP_{W}^{Z_i},\bP_W)\big] \leq \frac{\sqrt{2\sigma^2}}{n} \frac{\Gamma\big(\frac{d+1}{2}\big)}{\Gamma\big(\frac{d}{2}\big)} + \sqrt{\frac{\sigma^2 d}{n^3}} \leq \frac{\sqrt{\sigma^2 d}}{n} + \sqrt{\frac{\sigma^2 d}{n^3}}.
\end{equation*}

The second inequality follows from the closed-form expression for the squared Wasserstein distance of order 2, namely
\begin{equation*}
    \bW_{2,\lVert \cdot \rVert_2}(\bP_{W}^{Z_i},\bP_W)^2 = \frac{1}{n^2} \lVert \mu - Z_i \rVert^2 + \frac{\sigma^2 d}{n} \Big(1 + \frac{n-1}{n}-2\sqrt{\frac{n-1}{n}} \Big),
\end{equation*}
where the term $(1+\frac{n-1}{n} - 2\sqrt{\frac{n-1}{n}})$ is a perfect square that is bounded from above by $\frac{1}{n^2}$. Then, the expression results from employing the inequality $\sqrt{x + y} \leq \sqrt{x} + \sqrt{y}$ and noting that $\lVert \mu - Z_i \rVert_2 = \sigma X$, where $X$ is a $\chi$ distributed random variable with $d$ degrees of freedom.

\subsection{Random-Subset Wasserstein Distance Bound}
\label{subapp:glm_rswas}

As in~\Cref{subapp:glm_was}, since $\bP_{W}^{S}$ is a delta distribution, the bound from \Cref{th:wasserstein_random_subset} can be calculated exactly. In particular, the bound assuming that $|J|=1$ is
\begin{align*}
    \bE\big[\bW_{\lVert \cdot \rVert_2}(\bP_{W}^{S},\bP_{W}^{S^{-J}})\big] &= \bE\bigg[ \Big\lVert \frac{1}{n} \sum_{i=1}^n Z_i - \Big(\frac{Z_J'}{n} + \frac{1}{n} \sum_{i \neq J} Z_i\Big) \Big\rVert_2 \bigg] \\
    &= \frac{\sqrt{4\sigma^2}}{n} \frac{\Gamma\big(\frac{d+1}{2}\big)}{\Gamma\big(\frac{d}{2}\big)} \\
    &\leq \frac{\sqrt{2\sigma^2 d}}{n},
\end{align*}
where $Z_J' \sim \bP_Z$ is an independent copy of $Z_J$. Hence, the norm is $\frac{\sqrt{2\sigma^2}}{n} X$, where $X$ is a $\chi$ random variable with $d$ degrees of freedom.

\subsection{Single-Letter Mutual Information Bound}
\label{subapp:glm_ismi}

The single-letter mutual information is $\minf(W;Z_i) = \frac{d}{2} \log \big(\frac{n}{n-1}\big)$ for all $i \in [n]$~\citep{bu2020tightening}. Nonetheless, in order to employ the bound from~\citet{bu2020tightening} in~\Cref{th:single_letter_mi_bound_general_cgf}, the loss function $\ell(W,Z)$ needs to have a CGF $\Lambda_{-\ell(W,Z)}(\lambda)$ bounded from above by a convex function $\psi(\lambda)$ such that $\psi(0)=\psi'(0)=0$ for all $\lambda \in [0,b)$ for some $b \in \bR_+$, where $Z \sim \bP_Z$ is independent of $W$.

The loss function $\ell(W,Z) = \lVert W - Z \rVert_2$ is $\sqrt{\sigma^2 (1 + \frac{1}{n})}X$, where $X$ is a $\chi$ random variable with $d$ degrees of freedom. The moment generating function $M_{\ell(W,Z)}(\lambda)$ of such a random variable is
\begin{equation*}
    M_{\ell(W,Z)}(\lambda) = \Bar{M}\Big(\frac{d}{2},\frac{1}{2},\frac{\lambda^2}{2}\Big) +  \frac{\lambda \sqrt{2} \Gamma\big(\frac{d+1}{2}\big)}{\Gamma\big(\frac{d}{2}\big)} \Bar{M}\Big(\frac{k+1}{2},\frac{3}{2},\frac{\lambda^2}{2} \Big),
\end{equation*}
where $\Bar{M}$ is the Kummer's confluent hypergeometric function.

The expression of this moment generating function is too convoluted to study for $d > 1$. Nonetheless, for $d=1$ it has a closed-form expression, namely
\begin{equation*}
    M_{\ell(W,Z)}(\lambda) = e^{\frac{\lambda^2}{2}} \Big(1 + \textnormal{erf}\Big(\frac{\lambda}{\sqrt{2}}\Big) \Big).
\end{equation*}

Therefore, the CGF $    \Lambda_{ \ell(W,Z)}(\lambda) = \frac{\lambda^2}{2} + \log ( 1 + \textnormal{erf}(\frac{\lambda}{\sqrt{2}}))$, which is bounded from above by the convex function $\psi(\lambda) = \frac{\lambda^2}{2}$ for all $\lambda \in (-\infty,0]$. Noting that $\Lambda_{X}(\lambda) = \Lambda_{-X}(-\lambda)$ from the algebra of CGFs from~\Cref{subsec:moments_and_generating_functions}, we have that the bound from~\citet{bu2020tightening} in~\Cref{th:single_letter_mi_bound_general_cgf} can be applied yielding
\begin{align*}
    \bE \big[ \gen(W,S) \big] &\leq \frac{1}{n} \sum_{i=1}^n \sqrt{2 \sigma^2 \Big(1 + \frac{1}{n}\Big)  I(W;Z_i)} \\
    &\leq \sqrt{\sigma^2 \Big(1 + \frac{1}{n}\Big)  \log \Big(\frac{n}{n-1}\Big)} \\ 
    &\leq \sqrt{\frac{2 \sigma^2}{n-1}},
\end{align*}
where the last inequality stems from noting that $\frac{n}{n-1} = 1 + \frac{1}{n-1}$, the fact that $\log(1+x) \leq x$, and bounding $(1+\frac{1}{n})$ from above by $2$.

\section{Tightness of the Mutual Information-Based Bounds}
\label[appendix]{app:minf_bounds_tight}

In this section of the appendix, we develop the proof of~\Cref{th:minf_bounds_tight}.

\begin{proof}
Consider $\cW$ be a ball of radius $\nicefrac{B}{2}$ in $\bR^d$, and therefore $\diam(\cW) = B$. Further consider an arbitrary $z_0 \in \cW$ such that $\lVert z_0 \rVert_2 = \nicefrac{B}{2}$. Let the input space be $\cZ = \{ \nicefrac{2z_0}{B}, - \nicefrac{2z_0}{B} \}$, the data distribution $\bP_Z$ be $\bP_Z[\nicefrac{2z_0}{B}] = \bP_Z[-\nicefrac{2z_0}{B}] = \nicefrac{1}{2}$, and the loss function be $\ell(w,z) = - L \langle w, z \rangle$. It is easy to see that the loss function is convex and $L$-Lipschitz (see~\citep{orabona2019modern} for similar constructions).

Define a Rademacher random variable $R_i$ such that $R_i = 1$ if $Z_i = \nicefrac{2z_0}{B}$ and $R_i = -1$ if $Z_i = - \nicefrac{2z_0}{B}$. We can, therefore, represent each training sample as $Z_i = R_i \cdot \nicefrac{2z_0}{B}$. The empirical risk for a hypothesis $w \in \cW$ is $\emprisk(w,S) = - \frac{2L}{B n} \mleft \langle w, z_0 \sum_{i=1}^n R_i \mright \rangle$. In this case, it is straightforward that the empirical risk minimizer for this problem is
\begin{equation*}
    \argmin_{w\in \cW}\emprisk(w,S) = \bA_\mathrm{ERM}(S) =
    \begin{cases}
        z_0 & \text{if}~\text{sign}\mleft(\sum_{i=1}^{n} R_i \mright)=1\\
        -z_0 & \text{if}~\text{sign}\mleft(\sum_{i=1}^{n} R_i \mright)=-1
    \end{cases},
\end{equation*}
where $\sign(x) = 1$ if $x \geq 0$ and $\sign(x) = -1$ otherwise.

First, we provide a lower bound on the expected generalization error. The expected empirical risk of $\bA_\mathrm{ERM}$ is
\begin{align*}
    \bE \mleft[ \min_{w \in \cW} \emprisk(w,S) \mright] &= \bE \mleft[ \min_{w \in \cW} - \frac{2L}{B n} \mleft \langle w, z_0 \sum_{i=1}^n R_i \mright \rangle \mright] \\
    &= - \frac{2L}{B n} \bE \mleft[ \max_{w \in \{ z_0, - z_0\}} \mleft \langle w, z_0 \sum_{i=1}^n R_i \mright \rangle \mright] \\
    &= - \frac{2L}{B n} \bE \mleft[ \mleft| \mleft \langle z_0, z_0 \sum_{i=1}^n R_i \mright \rangle \mright| \mright] \\
    &=- \frac{L B}{2n} \bE \mleft[ \mleft| \sum_{i=1}^n R_i \mright| \mright],
\end{align*}
where we have used that $\max(x,y) = \frac{x+y}{2} + \frac{|x-y|}{2}$ for all $x,y \in \bR$. Observing that $\poprisk(w) = 0$ for all $w \in \cW$ we observe that the generalization error ot the hypothesis $W = \bA_\mathrm{ERM}(S)$ is lower bounded by
\begin{equation*}
    \bE \big[ \gen(W, S) \big] = \frac{L B}{2n} \bE \mleft[ \mleft| \sum_{i=1}^n R_i \mright| \mright] \geq \frac{LB}{2\sqrt{2n}},
\end{equation*}
\looseness=-1 where the inequality follows from the Khintchine--Kahane inequality \citep[Theorem D.9]{mohri2018foundations}.

Next, we analyze the upper bounds based on~\Cref{cor:total_variation_relent_full}. Observe that the Markov chain $S - \sign \mleft( \sum_{i=1}^n R_i \mright) - W$ holds. Then, by the data processing inequality~\Cref{prop:properties_minf} and the fact that the Shannon's entropy is non-negative
\begin{equation*}
    \minf(W;S) \leq \minf \mleft( W; \sign \mleft( \sum_{i=1}^n R_i \mright) \mright) \leq \ent \mleft( \sum_{i=1}^n R_i \mright) \leq \log 2,
\end{equation*}
where the last inequality holds since the $\sign(\cdot)$ can only take two values. Therefore, \Cref{cor:total_variation_relent_full} guarantees that $\bE \big[ \gen(W,S)\big] \leq LB \sqrt{\nicefrac{\log 2}{2n}}$.
\end{proof}

\section{Randomized-Subsample Setting and the Bretagnolle--Huber Inequality}
\label[appendix]{app:rs_and_bh}

In \Cref{cor:total_variation_relent_full_rs,cor:total_variation_relent_random_subset_rs,cor:total_variation_relent_single_letter_rs}, the immediate bound that stems from the use of the Bretagnolle--Huber inequality from~\Cref{lemma:bretagnolle-huber-inequality} is not included.
The reason for this is that the relative entropy terms $\relent(\bP_{W}^{\tilde{Z}_{i,0}, \tilde{Z}_{i,1}, U_i} \Vert \bP_{W}^{\tilde{Z}_{i,0}, \tilde{Z}_{i,1}})$ and $\relent(\bP_W^{\tilde{S},U, R} \Vert \bP_W^{\tilde{S}, U^{-J}, R})$, when $|J|=1$, are never greater than $\log 2$ as shown in ~\Cref{lemma:kl_smaller_than_log2} below. For simplicity, for the randomized subset bounds we will consider $\bQ(\tilde{S},U^{-J},R) = \bP_W^{\tilde{S},U^{-J},R}$. 
Hence, the range of these relative entropy terms is inside the range where Pinsker's inequality is tighter than the Bretagnolle--Huber inequality.
\begin{lemma}
\label{lemma:kl_smaller_than_log2}
Let $\bP_{X}^{A,B}$ be a Markov kernel from $\cA \to \cB$ to distributions on $\cX$, where $B$ is a Bernoulli random variable with probability $\nicefrac{1}{2}$ and $A \in \mathcal{A}$ is a random variable independent of $B$.
Let also $\bP_{X}^{A} = \bP_X^{A,B} \circ \bP_B$. Then, $\relent(\bP_X^{A,B} \Vert \bP_X^A) \leq \log 2$.
\end{lemma}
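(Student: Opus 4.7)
The plan is to exploit the explicit mixture representation of $\bP_X^A$ that follows from the independence of $A$ and $B$ together with the fact that $B$ is an equiprobable Bernoulli. First I would note that, because $A \perp B$ and $\bP_B[0]=\bP_B[1]=\tfrac{1}{2}$, for every realization $A=a$ the marginal kernel decomposes as
\begin{equation*}
    \bP_X^{A=a} \;=\; \bP_X^{A=a,B} \circ \bP_B \;=\; \tfrac{1}{2}\,\bP_X^{A=a,B=0} + \tfrac{1}{2}\,\bP_X^{A=a,B=1}.
\end{equation*}
In particular, for any measurable set $\cE \subseteq \cX$ and any $b \in \{0,1\}$, one has $\bP_X^{A=a,B=b}[\cE] \leq 2 \bP_X^{A=a}[\cE]$, so $\bP_X^{A=a,B=b} \ll \bP_X^{A=a}$ with Radon--Nikodym derivative bounded almost surely by $2$.

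Next I would plug this pointwise bound into the definition of the relative entropy. Since $\log \frac{\rmd \bP_X^{A=a,B=b}}{\rmd \bP_X^{A=a}} \leq \log 2$ \,$\bP_X^{A=a,B=b}$-almost surely, \Cref{def:relative_entropy} yields
\begin{equation*}
    \relent\!\big(\bP_X^{A=a,B=b} \,\big\Vert\, \bP_X^{A=a}\big) \;=\; \int \log \frac{\rmd \bP_X^{A=a,B=b}}{\rmd \bP_X^{A=a}} \,\rmd \bP_X^{A=a,B=b} \;\leq\; \log 2.
\end{equation*}
Since this holds for every realization $(a,b)$, the disintegrated relative entropy is almost surely bounded by $\log 2$, and taking expectations (if the reader interprets $\relent(\bP_X^{A,B} \Vert \bP_X^A)$ as the conditional relative entropy of \Cref{def:conditional_relative_entropy}) preserves the bound.

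There is no real obstacle here: the whole argument rests on the observation that a component of an equally weighted two-component mixture can overshoot the mixture by a factor of at most $2$. The only thing to be slightly careful about is the measurability/absolute continuity statement, which follows directly from the mixture identity, and the fact that the same reasoning would yield $\log k$ if $B$ were uniform on a set of size $k$.
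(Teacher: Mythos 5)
Your proof is correct, and it rests on the same key observation as the paper's: since $A\perp B$ and $B$ is an equiprobable Bernoulli, $\bP_X^{A}=\tfrac12\big(\bP_X^{A,B}+\bP_X^{A,(1-B)}\big)$. The execution differs slightly. You bound the Radon--Nikodym derivative pointwise, $\tfrac{\rmd \bP_X^{A,B}}{\rmd \bP_X^{A}}\leq 2$ almost surely (immediate from the mixture inequality $\bP_X^{A=a,B=b}[\cE]\leq 2\,\bP_X^{A=a}[\cE]$), and integrate; this is a one-line argument that also settles absolute continuity for free and avoids any delicate manipulation. The paper instead flips the derivative and writes the exact identity $\relent(\bP_X^{A,B}\Vert \bP_X^{A}) = \log 2 - \bE^{A,B}\big[\log\big(1+\tfrac{\rmd \bP_X^{A,(1-B)}}{\rmd \bP_X^{A,B}}\big)\big]$, then drops the nonnegative term; this requires a side remark about integrating only over the support of $\bP_X^{A,B}$ (the reciprocal-derivative step is where absolute continuity issues lurk), but it buys an exact expression for the slack, showing the bound is tight precisely when $\bP_X^{A,(1-B)}$ is essentially singular with respect to $\bP_X^{A,B}$. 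Your closing remark about the $\log k$ generalization for $B$ uniform on $k$ values matches the paper's subsequent extension (stated there for $k$ independent Bernoullis, giving $k\log 2$).
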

\begin{proof}
In this situation $\bP_X^A$ dominates $\bP_X^{A,B}$ and $\bP_X^{A,(1-B)}$, that is, $\bP_X^{A,B} \ll \bP_X^A$ and $\bP_X^{A,(1-B)} \ll \bP_X^A$ since $\bP_X^A = \frac{1}{2}( \bP_X^{A, B} + \bP_X^{A, (1-B)})$. 
Therefore, 
\begin{align*}
    \relent(\bP_X^{A,B} \Vert \bP_X^A) &= \bE^{A,B} \Bigg[ \log \Bigg( \frac{\mathrm{d} \bP_{X}^{A,B}}{\mathrm{d}\big(\frac{1}{2} \bP_{X}^{A,B} + \frac{1}{2} \bP_{X}^{A,(1-B)}\big)} \Bigg)  \Bigg] \\
    &= -\bE^{A,B} \Bigg[ \log \Bigg( \frac{\mathrm{d}\big(\frac{1}{2} \bP_{X}^{A,B} + \frac{1}{2} \bP_{X}^{A,(1-B)}\big)}{\mathrm{d}\bP_{X}^{A,B}} \Bigg) \Bigg] \\
    &= \log 2 - \bE^{A,B} \bigg[ \log \bigg( 1 + \frac{\mathrm{d}\bP_{X}^{A,(1-B)}}{\mathrm{d} \bP_{X}^{A,B}} \bigg) \bigg] \\
    &\leq \log 2,
\end{align*}
where the second equality stems from~\citep[Exercise 9.27]{mcdonald1999course}, the third one follows from the linearity $\bP_{X}^{A,B}$-a.e.\ of the Radon--Nikodym derivative, and last inequality is due to the fact that $\log(1+x) \geq 0$ for all $x \geq 0$ and the fact that $\nicefrac{\mathrm{d} \bP_{X}^{A,(1-B)}}{\mathrm{d} \bP_{X}^{A,B}}$ is always positive.
Also, each step is possible since the expectation integrates over the support of $\bP_{X}^{A,B}$, avoiding the problems of absolute continuity.
\end{proof}

\Cref{lemma:kl_smaller_than_log2} can be easily extended to the case where $B$ is a sequence of $k$ Bernoulli random variables $B_i$, noting that $\bP_{X}^{A} = 2^{-k} \sum_{j=1}^{2^k} \bP_{X}^{A,\mathcal{B}_j}$, where $\mathcal{B}$ are all the $2^k$ random sequences $\mathcal{B}_j$ where the $i$-th element can be either $B_i$ or $(1-B_i)$.
In that case, we have that $\relent(\bP_X^{A,B} \Vert \bP_X^A) \leq k \log 2$.

Then, note that $\relent(\bP_W^{\tilde{Z}_{i,0}, \tilde{Z}_{i,1}, U_i} \Vert \bP_W^{\tilde{Z}_{i,0}, \tilde{Z}_{i,1}}) \leq \log 2$ if $A = (\tilde{Z}_i,\tilde{Z}_{i+n})$, $B = U_i$, and $X = W$.
Similarly, for $|J|=1$, note that $\relent(\bP_W^{\tilde{S},U, R} \Vert \bP_W^{\tilde{S}, U^{-J}, R}) \leq \log(2)$ if $A = (\tilde{S},U_{J^c},R)$, $B = U_J$, and $X=W$.

If we were to replicate~\Cref{cor:total_variation_relent_full_rs} when $|J| > 2$, it is not guaranteed that the inequality obtained from Pinsker's inequality is tighter than the one obtained with the Bretagnolle-Huber inequality.
For instance, as discussed above, $\relent(\bP_W^{\tilde{S},U, R} \Vert \bP_W^{\tilde{S}, U_{J^c}, R})$ could be as large as $|J| \log 2$, which is already larger than $1.6$ for $|J| = 3$, which is the threshold where the Bretagnolle--Huber inequality starts to be tighter. Hence, for $|J| > 2$, one should also consider that inequality if one desires the tightest bound. However, the bound derived from the Bretagnolle--Huber inequality was not included since this kind of bounds are usually employed for $|J| = 1$ as we will see shortly in~\Cref{sec:noisy_iterative_learning_algos} and since they are shown to be tightest for $|J|=1$~\citep{negrea2019information}.

\section{Optimization Error Rate of Gradient Descent's Last Iterate}
\label[appendix]{app:helper_lemmata}

\begin{lemma}
\label[lemma]{lem:gd-last-iterate}
Let $\ell(\cdot, z)$ be a convex and $L$-Lipschitz loss function for every training instance $z \in \cZ$, and $\cW$ be a convex and compact parameter space with bounded diameter $B$. Let $\{w_t\}_{t\in [T]}$ denote the output of the GD algorithm with a constant step size $\eta$. Then,  for every training set $s \in \cZ^n$
\begin{align*}
    \emprisk(w_T, s) - \min_{w\in \cW}  \emprisk(w, s)  \leq \frac{B^2}{2\eta T}+ \frac{\eta L^2 (2 + \log T)}{2}.
\end{align*}
\end{lemma}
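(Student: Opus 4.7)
The plan is to adapt the classical suffix-averaging argument of Shamir--Zhang to the deterministic, full-batch setting of projected gradient descent. Throughout, I will write $f(w) \coloneqq \emprisk(w,s)$ for the fixed training set $s$, which inherits convexity and $L$-Lipschitzness from $\ell(\cdot,z)$, and let $w^\star \in \arg\min_{w\in\cW} f(w)$. The starting point is the one-step inequality that follows from the non-expansiveness of the projection $\Pi_\cW$, convexity of $f$, and the gradient bound $\lVert \nabla f(w)\rVert \leq L$:
\begin{equation*}
    f(w_t) - f(w) \;\leq\; \frac{\lVert w_t - w\rVert^2 - \lVert w_{t+1} - w\rVert^2}{2\eta} + \frac{\eta L^2}{2} \qquad \text{for every } w \in \cW.
\end{equation*}

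Setting $w = w^\star$, telescoping over $t = 1, \dots, T$, and using $\lVert w_1 - w^\star\rVert \leq B$ immediately yields the standard \emph{average-iterate} bound
\begin{equation*}
    \frac{1}{T}\sum_{t=1}^{T} \bigl(f(w_t) - f(w^\star)\bigr) \;\leq\; \frac{B^2}{2\eta T} + \frac{\eta L^2}{2},
\end{equation*}
which is the correct order but is a statement about the Cesàro mean, not about $f(w_T)$ itself. To convert it into a last-iterate bound I will introduce, for each $k \in \{1,\dots,T\}$, the suffix quantity $a_k \coloneqq \tfrac{1}{k}\sum_{t=T-k+1}^{T} f(w_t) - f(w^\star)$. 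Then $a_1 = f(w_T) - f(w^\star)$, $a_T$ is controlled by the display above, and it suffices to bound the telescoped difference $a_1 - a_T = \sum_{k=1}^{T-1}(a_k - a_{k+1})$.

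The core of the argument is the identity $a_k - a_{k+1} = \tfrac{1}{k+1}\bigl(\tfrac{1}{k}\sum_{t=T-k+1}^{T}\!f(w_t) - f(w_{T-k})\bigr)$, which reduces the problem to bounding the average suboptimality of the last $k$ iterates \emph{relative to} the earlier anchor $w_{T-k}$. Applying the one-step inequality with $w = w_{T-k}$, summed over an appropriate range, together with the trajectory bound $\lVert w_T - w_{T-k}\rVert \leq k\eta L$ (from the per-step displacement $\lVert w_{t+1}-w_t\rVert \leq \eta L$) and the Lipschitz property of $f$, will give an estimate of the form $a_k - a_{k+1} \leq C\eta L^2/(k+1)$ for a universal constant $C$. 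Summing in $k$ produces the Harmonic-number factor $\sum_{k=2}^{T}\tfrac{1}{k} \leq \log T$, so that $a_1 \leq a_T + C'\eta L^2 \log T$, and combining with the average-iterate bound on $a_T$ yields the claimed inequality up to a constant in front of $\log T$.

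The main obstacle is quantitative: the naive execution sketched above produces an extra Lipschitz term $f(w_T) - f(w_{T-k}) \leq k\eta L^2$ that, when pushed through the telescope, inflates the coefficient of $\log T$ beyond the $\tfrac{1}{2}$ stated in the lemma. To match the displayed constants $\frac{\eta L^2(2 + \log T)}{2}$ exactly, one has to choose the summation index range carefully and absorb the boundary terms inside the telescope of $\lVert w_t - w_{T-k}\rVert^2$ rather than bounding them separately by Lipschitzness; specifically, applying the one-step inequality at iterations $t = T-k, \dots, T-1$ with $w = w_{T-k}$ already sums to at most $\tfrac{k\eta L^2}{2}$ on the right because the bracket telescopes to $-\lVert w_T - w_{T-k}\rVert^2/(2\eta) \leq 0$, and a more careful accounting of the shifted index set (together with the trivial term $f(w_{T-k}) - f(w_{T-k}) = 0$) avoids double-counting. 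Once the constants are tracked with this level of care, the proof closes as outlined.
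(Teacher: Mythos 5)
Your proposal is correct and, at bottom, it is the same argument the paper uses: the paper's proof simply cites the last-iterate decomposition of \citep[Theorem 2]{lastiterate} (whose content is exactly your suffix-averaging telescope, producing the double sum $\frac{1}{2}\sum_{k=1}^{T-1}\frac{1}{k(k+1)}\sum_{t=T-k}^{T}\eta\lVert g_t\rVert_2^2$) together with the average-iterate bound of \citep[Theorem 3.2]{bubeck2015convex}, whereas you unfold both black boxes and re-derive them from the one-step inequality. Your identity $a_k-a_{k+1}=\frac{1}{k+1}\bigl(\frac{1}{k}\sum_{t=T-k+1}^{T}f(w_t)-f(w_{T-k})\bigr)$ is right, and the telescoping device you describe does yield the stated constants; the only slip in your sketch is the summation range: applying the one-step inequality with comparator $w_{T-k}$ over $t=T-k,\ldots,T-1$ misses precisely the $f(w_T)-f(w_{T-k})$ term you need, so you must sum over $t=T-k,\ldots,T$ (the $t=T-k$ term vanishes), which gives $\sum_{t=T-k+1}^{T}\bigl(f(w_t)-f(w_{T-k})\bigr)\le\frac{(k+1)\eta L^2}{2}$, hence $a_k-a_{k+1}\le\frac{\eta L^2}{2k}$, and summing the harmonic series plus the average-iterate bound on $a_T$ reproduces $\frac{B^2}{2\eta T}+\frac{\eta L^2(2+\log T)}{2}$ exactly, with no inflated constant and no need for the trajectory bound $\lVert w_T-w_{T-k}\rVert\le k\eta L$. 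With that indexing fixed, your self-contained derivation closes; it buys independence from the two citations at the cost of length, but proves nothing different.
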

\begin{proof}
Let $g_t$ be a member of the sub-differential $\partial \emprisk(w_t, s)$ of $\emprisk(w_t, s)$. Since $\ell(\cdot, z)$ is convex and $L$-Lipschitz for all $z \in \cZ$, then $\emprisk(\cdot, s)$ is convex and $L$-Lipschitz as well. Hence, \citep[Theorem 2]{lastiterate} guarantees that
\begin{align*}
    \emprisk(w_T, s) &- \min_{w \in \cW} \emprisk(w, s) \leq \\
    &\frac{1}{T} \sum_{t=1}^{T} \big(\emprisk(w_t, s) -\min_{w\in \cW} \emprisk(w, w) \big) +\frac{1}{2} \sum_{k=1}^{T-1}\frac{1}{k(k+1)}\sum_{t=T-k}^{T} \eta \lVert g_t \rVert_2^2.
\end{align*}
Since $\lVert g_t \rVert_2 \leq L$, the second term can be upper bounded by $\frac{\eta L^2}{2}\sum_{k=1}^{T-1}\frac{1}{k}$. Then, by the well-known bounds on the Harmonic numbers  $\frac{\eta L^2}{2}\sum_{k=1}^{T-1}\frac{1}{k} \leq \frac{\eta L^2}{2}(1+ \log(T-1))\leq \frac{\eta L^2}{2}(1 + \log T)$. Finally, the first term is bounded by $ \frac{1}{T}\sum_{t=1}^{T} \big(\emprisk(w_t, s)-\min_{w\in \cW} \emprisk(w,s) \big) \leq \frac{B^2}{2\eta T} + \frac{\eta L^2}{2}$~\citep[Theorem 3.2]{bubeck2015convex}. Combining these two upper bounds proves the lemma.
\end{proof}

\end{subappendices}

\chapter{PAC-Bayesian Generalization Guarantees}
\label{ch:pac_bayesian_generalization}
\renewcommand{\thefigure}{\thechapter.\arabic{figure}}
\looseness=-1 In this chapter, we discuss PAC-Bayesian and single-draw PAC-Bayesian generalization guarantees within the framework of information-theoretic generalization introduced in~\Cref{sec:information_theoretic_generalization}. These generalization guarantees are the most specific within the three levels of specificity described in~\Cref{subsec:levels_of_specificity}.
To be precise, this kind of guarantee ensures that the generalization error is bounded by a certain quantity with at least a prescribed level of probability $1-\beta$. In a sense, they are measuring how much the empirical risk concentrates around the population risk. This is the reason why they have the classical term PAC (\Cref{sec:pac_learning}) in their name.

PAC-Bayesian guarantees describe what is the smallest expected difference between the average population risk on hypotheses returned by the algorithm and the average empirical risk that holds with at least probability $1 - \beta$. As we can see, this is more specific than guarantees on expectation, as now we know that the guarantee holds with a certain probability for the \emph{observed} training set. On the other hand, single-draw PAC-Bayesian guarantees describe what is the smallest difference between the population risk and the empirical risk that holds with probability $1 - \beta$. Therefore, they are again a step above standard PAC-Bayesian guarantees in the specificity ladder as they hold with a certain probability for the \emph{observed} training set and hypothesis returned. In the case that the algorithm is deterministic, then both bounds coincide.

As we did in the introduction of~\Cref{ch:expected_generalization_error}, we may ask ourselves, ``\emph{What do we desire from PAC-Bayesian guarantees?''} There are mainly three desiderata that we want from this kind of bounds:
\begin{enumerate}
    \item Like for bounds in expectation, a first objective of this kind of bounds is to gain understanding. That is, we seek to know what drives apart the performance of an algorithm on its objective  (the population risk) with respect to the proxy it uses to return a hypothesis (the empirical risk). The probabilistic nature of these bounds sometimes makes this understanding a little harder than for bounds in expectation. For this reason, often it is convenient to prove an intuitive result in expectation, and later try to translate it into a PAC-Bayesian version.
    \item \looseness=-1 A unique feature from these bounds is that, for the observed training data $s$, if we can evaluate the posterior distribution $\bP_W^{S=s}$, we can evaluate the bound. This makes these bounds very attractive to design algorithms that have a small population risk by optimizing the PAC-Bayesian bound directly. For this reason, an important feature of these bounds is that either the expression for the optimal posterior is available, or that it can be optimized in some way. The algorithms that design the posterior under this criterion are often referred to \emph{self-bounding algorithms}~\citep{freund1998self,langford2003microchoice} or \emph{self-certified learning methods}~\citep{perez2021tighter,rivasplata2022pac}.
    \item \looseness=-1 For single-draw PAC-Bayesian guarantees, or for standard PAC-Bayesian if the algorithm is deterministic, the guarantees hold for the observed training set $s$ and the returned hypothesis $w$. Therefore, if we can evaluate the posterior density with respect to some measure $\sfp_W^{S=s}(w)$, the main objective now is to have numerically sharp bounds to evaluate the performance of the algorithm.
\end{enumerate}

To the question ``Why do we study information-theoretic bounds?'', the answer is the same as we gave for the bounds in expectation from~\Cref{ch:expected_generalization_error}. That is, we choose this framework because these bounds are specific to the learning algorithm and the data distribution.

Above we mentioned that PAC-Bayesian bounds, in a sense, measure how much the population risk concentrates around the empirical risk. In other words, PAC-Bayesian guarantees are bounds on the tail of the generalization error random variable. In fact, if the hypothesis returned by the algorithm was completely independent of the algorithm, these bounds would reduce to classical concentration inequalities. For this reason, we start the chapter in~\Cref{sec:classical_concentration_inequalities} reviewing such classical inequalities. After that, we review the origin of this kind of bounds in \Cref{sec:origin_pac_bayesian_bounds} to lay down the notation and motivate the different paths of research in this field studied in this monograph. Here we also briefly discuss the limitations of the relative entropy as the dependency measure and point to works substituting by different metrics.

In~\Cref{sec:bounds_bounded_losses}, we focus our attention on PAC-Bayesian bounds for losses with a bounded range. We start by discussing two of the most important bounds in this setting, due to Seeger and Langford~\citep{seeger2002pac,langford2001bounds} and \citet{catoni2007pac}, along with their shortcomings. Then, we prove that they are related and develop a fast-rate equivalent to the bound from Seeger and Langford. This bound is important since it is equally tight to their bound, their linear nature makes it more interpretable, and it has a closed form for the optimal posterior distribution $\bP_W^S$. This bound will be the foundation for later developments in the chapter. This section is based on our results from~\citep{rodriguez2023morepac}.

After that, in~\Cref{sec:bounds_unbounded_losses}, we shift our attention to PAC-Bayesian bounds for unbounded losses. In particular, we develop bounds for losses with a bounded CGF and with bounded moments. For losses with a bounded CGF, we developed a PAC-Bayes Chernoff analogue. For losses with a bounded raw moment or a bounded variance, we employ our fast-rate bound from~\Cref{sec:bounds_bounded_losses} to derive new bounds with a rate that interpolates between a slow rate when only the second moment is bounded to a fast rate when the loss is bounded almost surely. In order to prove all these bounds we derive a new technique to optimize parameters in probabilistic statements that can be of independent interest. This part of the section is mostly based on our results from~\citep{rodriguez2023morepac,rodriguez2024moments}. Then, \Cref{subsec:related_approaches,sec:pac_bayes_unbounded_bibliographic_remarks} discuss related bibliography on other approaches to optimize parameters in probabilistic statements~\citep{langford2001not,catoni2003pac,seldin2012pac,tolstikhin2013pac,kuzborskij2019efron,de2007pseudo,kakade2008complexity} and on other PAC-Bayesian bounds for unbounded losses~\citep{catoni2004statistical,hellstrom2020generalization,guedj2021still,esposito2021generalization,alquier2006transductive,alquier2018simpler, holland2019pac, kuzborskij2019efron, haddouche2023pacbayes, haddouche2021pac, chugg2023unified}.

Later, in~\Cref{sec:single-draw-pac-bayes} we show how mostly all the results presented can be extended to single-draw PAC-Bayesian bounds. Some of the results follow from our article~\citep{rodriguez2024moments}, but most of them are new and could have been derived from our article~\citep{rodriguez2023morepac}. Crucially, we employ an extension of \citet{rivasplata2020pac}'s probabilistic version of the Donsker and Varadhan lemma.

Finally, in \Cref{sec:anytime_validity}, we reflect on anytime-valid PAC-Bayesian bounds, which are bounds that hold \emph{simultaneously} for every time step in interactive learning algorithms. We show that for the cases presented in this monograph, extending any high-probability bound to their anytime-valid version is possible incurring only in a logarithmic increment to the bound.

The intention of this chapter, like in~\Cref{ch:expected_generalization_error}, is to be didactic and to convey the key messages and results from PAC-Bayesian information-theoretic bounds. As mentioned previously, we hope the reader can get a benefit from this chapter. Other available resources surveying and introducing this field are given by~\citet{guedj2019primer}, \citet{alquier2021user}, and~\citet{hellstrom2023generalization}.

\section{Classical Concentration Inequalities}
\label{sec:classical_concentration_inequalities}

Recall from~\Cref{ch:expected_generalization_error} that, for every fixed hypothesis $w \in \cW$, the empirical risk is an unbiased estimator of the population risk, that is, $\bE[\emprisk(w,S)] = \poprisk(w)$. The bias in the estimation comes from the dependence between the hypothesis and the training data. 

\subsection{Essential, Markov-Based Concentration Inequalities}
\label{subsec:essential_concentration_inequalities}

In this chapter, we seek to understand how much the empirical risk of the algorithm's output hypothesis $W = \bA(S)$ concentrates around the population risk. For this reason, we believe that it is relevant to first review how much it concentrates for a fixed hypothesis $w \in \cW$. In this way, we can have an idea of which rates to expect under different assumptions. Ideally, the bounds derived in later sections will replicate the classical concentration bounds with an extra term describing the dependence between the algorithm and the data.

Arguably the simplest tail inequality is \emph{Markov's inequality}~\citep[Section 2.1.1]{wainwright2019high}, which states that for every non-negative random variable $X$ and every $t > 0$
\begin{equation}
    \label{eq:markov}
    \bP \mleft[ X \geq t \mright] \leq \frac{\bE \mleft[ X \mright]}{t}.
\end{equation}
This simple inequality reflects well that the larger the value of $t$, the less likely is that a realization of $X$ surpasses it, taking into account its expected value $\bE[X]$. 

Markov's inequality is the foundation of many other concentration (or tail) inequalities. For example, consider $X = \gen(w,S)^2 = (\poprisk(w) - \emprisk(w,S))^2$. Then, since the empirical risk $\emprisk(w,S)$ is an unbiased estimator of the population risk, we have that $\bE[X] = \mathrm{Var}\big(\emprisk(w,S)\big)$. Moreover, since the instances $Z_i$ are i.i.d. it follows that $\mathrm{Var}\big(\emprisk(w,S)\big) = \frac{1}{n} \mathrm{Var}\big(\ell(w,Z) \big)$. Combining these two things we obtain \emph{Chebyshev's inequality}~\citep[Section 2.1.1]{wainwright2019high}
\begin{equation*}
    \bP \mleft[ \gen(w,S)^2 \geq t \mright] \leq \frac{\var\mleft( \ell(w,Z) \mright)}{n t}.
\end{equation*}
If we consider that the loss function has a variance bounded by $\sigma^2$ for all hypotheses $w \in \cW$, re-arranging this inequality results in the following proposition.

\begin{proposition}[{Adaptation of Chebyshev's inequality~\citep{wainwright2019high}}]
    \label{prop:chebyshev}
    Consider a loss function $\ell(w,Z)$ with a variance bounded by $\sigma^2$ for all $w \in \cW$. Then, for all $\beta \in (0,1)$ and all $w \in \cW$, with probability no smaller than $1-\beta$
    \begin{equation*}
        \gen(w,S) \leq \sqrt{\frac{\sigma^2}{n \beta}}.
    \end{equation*}
\end{proposition}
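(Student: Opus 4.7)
The proof proposal is essentially a one-line deduction from the computation already carried out immediately above the proposition statement, so the plan is quite direct. First, I would take Markov's inequality~\eqref{eq:markov} applied to the non-negative random variable $X = \gen(w,S)^2$, which gives $\bP[\gen(w,S)^2 \geq t] \leq \bE[\gen(w,S)^2]/t$ for every $t>0$ and every fixed $w \in \cW$.

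Next, I would identify $\bE[\gen(w,S)^2]$ with the variance of the empirical risk. Since $\emprisk(w,S)$ is an unbiased estimator of $\poprisk(w)$, we have $\bE[\gen(w,S)] = 0$, and therefore $\bE[\gen(w,S)^2] = \var(\emprisk(w,S))$. Using that the instances $Z_i$ are i.i.d.\ samples from $\bP_Z$, the variance of the empirical mean tensorises as $\var(\emprisk(w,S)) = \var(\ell(w,Z))/n$. Invoking the hypothesis $\var(\ell(w,Z)) \leq \sigma^2$ for all $w \in \cW$ yields $\bP[\gen(w,S)^2 \geq t] \leq \sigma^2/(nt)$, which is exactly the form of Chebyshev's inequality given just before the proposition.

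Finally, I would set $t = \sigma^2/(n\beta)$ so that the right-hand side equals $\beta$, obtaining $\bP[\gen(w,S)^2 \geq \sigma^2/(n\beta)] \leq \beta$, and then take complements and square roots to conclude that with probability at least $1-\beta$,
\begin{equation*}
    \gen(w,S) \leq |\gen(w,S)| \leq \sqrt{\frac{\sigma^2}{n\beta}},
\end{equation*}
where the first inequality is just $x \leq |x|$. There is no real obstacle here; the only subtlety worth flagging explicitly is that the bound is on $|\gen(w,S)|$ but the statement concerns the one-sided deviation $\gen(w,S)$, which is immediately weaker. Everything else is a mechanical rearrangement of the Markov/Chebyshev computation already displayed in the text.
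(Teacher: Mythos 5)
Your proposal is correct and follows exactly the route the paper takes: Markov's inequality applied to $\gen(w,S)^2$, the identification $\bE[\gen(w,S)^2] = \var(\emprisk(w,S)) = \var(\ell(w,Z))/n$ via unbiasedness and i.i.d.\ sampling, then bounding the variance by $\sigma^2$ and rearranging with $t = \sigma^2/(n\beta)$. The paper leaves the final rearrangement implicit, whereas you spell it out (including the harmless one-sided vs.\ two-sided remark), but there is no substantive difference.
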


The same exercise can be done considering $X = |\gen(w,S)|^p = | \poprisk(w) - \emprisk(w)|^p$. Then, $\bE[X]$ is the $p$-th central moment of $\emprisk(w,S)$. Consider that the $p$-th central moment of $\ell(w,Z)$ is bounded by $c_p$ for all $w \in \cW$. Then, similarly to what happened for $p=2$, the $p$-th central moment of $\emprisk(w,S)$ can be bounded as
\begin{equation*}
    \bE \mleft[ \mleft | \poprisk(w) - \frac{1}{n} \sum_{i=1}^n \ell(w,Z) \mright|^p \mright] \leq \frac{1}{n^p} \bE \mleft[ \mleft( \sum_{i=1}^n \mleft| \poprisk(w) - \ell(w,Z_i) \mright| \mright)^p \mright] \leq \frac{c_p}{n^{p-1}}, 
\end{equation*}
where the first inequality comes from the fact that $|\sum_{i=1}^n x| \leq \sum_{i=1}^n |x|$ for all $x \in \bR$, and the second comes from the fact that the samples $Z_i$ are i.i.d. Therefore, we may obtain a general version of~\Cref{prop:chebyshev}.

\begin{proposition}[{Adaptation of the moment's inequality~\citep{wainwright2019high}}]
    \label{prop:bounded_moments}
    Consider a loss function $\ell(w,Z)$ with a central $p$-th moment bounded by $c_p$ for all $w \in \cW$. Then, for all $\beta \in (0,1)$ and all $w \in \cW$, with probability no smaller than $1 - \beta$
    \begin{equation*}
        \gen(w,S) \leq \mleft( \frac{c_p}{n^{p-1} \beta} \mright)^{\frac{1}{p}}.
    \end{equation*}
\end{proposition}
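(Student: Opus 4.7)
The plan is to combine Markov's inequality~\eqref{eq:markov} with a central-moment bound on the sample mean of $n$ i.i.d.\ copies of $\ell(w,Z)$, following the outline sketched immediately before the statement and in analogy with the proof of~\Cref{prop:chebyshev}.

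First, I would apply Markov's inequality to the non-negative random variable $X\coloneqq|\gen(w,S)|^p$. This gives, for every $t>0$,
\begin{equation*}
\bP\mleft[\,|\gen(w,S)|\geq t\,\mright]=\bP\mleft[X\geq t^p\mright]\leq\frac{\bE[X]}{t^p},
\end{equation*}
so that choosing $t$ so that the right-hand side equals $\beta$ yields $\gen(w,S)\leq|\gen(w,S)|\leq\bigl(\bE[X]/\beta\bigr)^{1/p}$ with probability at least $1-\beta$. Hence the whole task reduces to showing $\bE[X]\leq c_p/n^{p-1}$.

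Second, I would bound $\bE[X]$ by exploiting independence and centering. Writing $X_i\coloneqq\ell(w,Z_i)-\poprisk(w)$, the $X_i$ are i.i.d., centered, and satisfy $\bE[|X_i|^p]\leq c_p$, and moreover $\gen(w,S)=-\frac{1}{n}\sum_{i=1}^n X_i$. The target is the centered-sum moment estimate
\begin{equation*}
\bE\mleft[\,\mleft|\sum_{i=1}^n X_i\mright|^p\,\mright]\leq n\,c_p,
\end{equation*}
after which dividing by $n^p$ recovers $\bE[X]\leq c_p/n^{p-1}$. The case $p=2$ is elementary: by orthogonality of centered independent variables, $\bE\bigl[(\sum_i X_i)^2\bigr]=\sum_i\bE[X_i^2]\leq n c_p$, which recovers~\Cref{prop:chebyshev} verbatim. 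For general $p\geq 1$, the crude triangle bound $|\sum X_i|^p\leq(\sum|X_i|)^p$ together with the power-mean inequality only yields $\bE[X]\leq c_p$ and loses the crucial factor $n^{p-1}$, so I would instead invoke a sharper inequality for centered independent sums, e.g.\ von Bahr--Esseen for $p\in[1,2]$ and Rosenthal/Marcinkiewicz--Zygmund for $p\geq 2$ (possibly at the cost of a universal constant depending on $p$ that can be absorbed into $c_p$).

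Substituting this moment bound into the Markov step gives $\gen(w,S)\leq(c_p/(n^{p-1}\beta))^{1/p}$ with probability at least $1-\beta$, as claimed. The main obstacle is precisely the centered-sum moment inequality: the triangle-inequality chain written in the discussion preceding the statement does not by itself deliver the factor $n^{p-1}$, and one genuinely has to exploit the independence and zero mean of the $X_i$ through a von Bahr--Esseen- or Rosenthal-type estimate. Once that ingredient is in place, the rest is a one-line application of Markov.
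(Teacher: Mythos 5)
Your Markov step is exactly the paper's: apply Markov's inequality to $|\gen(w,S)|^p$ and reduce everything to the moment bound $\bE\big[|\gen(w,S)|^p\big]\leq c_p/n^{p-1}$. You are also right that the triangle-inequality chain written in the paper does not by itself deliver the factor $n^{p-1}$: once the absolute values are pulled inside the sum, the best one can say for i.i.d.\ nonnegative terms is $\bE\big[(\frac{1}{n}\sum_i|X_i|)^p\big]\leq c_p$ (power-mean), and the stronger claim $\bE\big[(\sum_i|X_i|)^p\big]\leq n\,c_p$ fails already when $|X_i|$ is constant. So your diagnosis of where the real work lies is accurate, and for $p\in(1,2]$ your repair is fine: von Bahr--Esseen gives $\bE\big[|\sum_i X_i|^p\big]\leq 2n\,c_p$, hence the stated bound up to a factor $2^{1/p}$.

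The genuine gap is the case $p>2$, where your proposed fix cannot work. Rosenthal/Marcinkiewicz--Zygmund does not give $\bE\big[|\sum_i X_i|^p\big]\lesssim_p n\,c_p$; it gives $C_p\big(n\,c_p+(n\sigma^2)^{p/2}\big)$, whose dominant term is of order $n^{p/2}$. Dividing by $n^p$ this yields an $n^{-1/2}$ rate after the Markov step, not the claimed $n^{-(p-1)/p}$, and "absorbing a universal constant into $c_p$" cannot repair a wrong power of $n$. Indeed the target moment estimate is false under a purely central-moment assumption: take $\ell(w,Z)\in\{0,2\}$ with equal probability, so $c_p=1$ and $\gen(w,S)$ is a centered Rademacher average; then $\bE\big[|\gen(w,S)|^p\big]\asymp n^{-p/2}$ by the CLT, which exceeds $c_p/n^{p-1}$ for $p>2$ and $n$ large (and small explicit cases such as $p=4$, $n=2$ already violate the displayed high-probability bound). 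So your plan proves the proposition only for $p\leq 2$; for $p>2$ the step you flag as "the main obstacle" is not merely unproved by your cited inequalities, it is unattainable, and the centered-moment route cannot beat the $n^{-1/2}$ rate. (For what it is worth, the paper's own one-line justification is the same triangle-inequality chain you criticize, so it suffers from the same defect; the statement as written should be read with a raw-moment-style caveat or restricted to $p\leq 2$, which is consistent with how the paper later uses raw moments in \Cref{th:mi_bound_moments} and \Cref{th:alquier_truncation_method_refined_adaptive_lambda}.)
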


\Cref{prop:chebyshev,prop:bounded_moments} teach us that for losses with a bounded $p$-th moment, we expect that the rate, with respect to the number of samples, is $\cO(n^{-\frac{p}{p-1}})$ for all $p > 1$. This is quite revealing: for losses with a bounded second moment, we may expect a slow rate of $\cO(\nicefrac{1}{\sqrt{n}})$ while as the number of moments increases, and $p \to \infty$, we may expect a fast rate of $\cO(\nicefrac{1}{n})$. This last situation corresponds to losses that are essentially bounded, that is, for which $\esssup |\ell(w,Z) - \bE[\ell(w,Z)]| \leq c < \infty$ for all $w \in \cW$.

The main issue with~\Cref{prop:chebyshev,prop:bounded_moments} is that they are not bounds of high probability, since the dependence with the probability parameter $\beta$ is polynomial $\cO(\beta^{-\frac{1}{p}})$ and not logarithmic. As we will see in~\Cref{sec:bounds_bounded_losses}, this can be resolved through Alquier's method in a way similar to what we saw in~\Cref{subsec:interpolating_between_slow_and_fast_rate}, although at the expense of trading a bounded central moment for a bounded raw moment in the bounds.

Markov's inequality~\eqref{eq:markov} is also the source of the Cramér--Chernoff method~\citep[Section 2.3]{boucheron2003concentration}. Indeed, consider $X = e^{n \lambda \gen(w,S)} = e^{n\lambda(\poprisk(w) - \emprisk(w,S))}$ for some $\lambda \in \bR$, then $\bE[X] =\exp \big( \Lambda_{-\emprisk(w,S)}(n\lambda) \big)$. Since the instances $Z_i$ are i.i.d., by the algebra of CGFs from~\Cref{subsec:moments_and_generating_functions} this means that $\bE[X] = \exp \big( n \Lambda_{-\ell(w,Z)} (\lambda ) \big) $. Therefore, if we consider a loss with a CGF bounded by $\psi$ in the sense of~\Cref{def:bounded_cgf}, it follows that
\begin{equation*}
    \bP \mleft[ e^{n \lambda \gen(w,S) } \geq t \mright] \leq \frac{e^{n \psi(\lambda)}}{t}.
\end{equation*}
Re-arranging this equation and letting $\beta = \nicefrac{e^{n \psi(\lambda)}}{t}$, we have that for all $\beta \in (0,1)$ and all $\lambda > 0$
\begin{equation*}
    \bP \mleft[ \gen(w,S) \geq \frac{1}{\lambda} \mleft( \psi(\lambda) + \frac{\log \frac{1}{\beta}}{n} \mright) \mright] \leq \beta.
\end{equation*}
Finally, noting that the optimal value of $\lambda$ does not depend on the random object $S$, we may optimize it using~\Cref{lemma:boucheron_convex_conjugate_inverse} from~\Cref{subsec:convex_conjugate} and recover the \emph{Chernoff inequality}~\citep[Section 2.2]{boucheron2003concentration}.

\begin{proposition}[{Adaptation of Chernoff inequality~\citep[Section 2.2]{boucheron2003concentration}}]
    \label{prop:chernoff}
    Consider a loss with a bounded CGF (\Cref{def:bounded_cgf}). Then, for all $\beta \in (0,1)$ and all $w \in \cW$, with probability no smaller than $1-\beta$
    \begin{equation*}
        \gen(w,S) \leq \psi_*^{-1} \mleft( \frac{\log \frac{1}{\beta}}{n} \mright).
    \end{equation*}
\end{proposition}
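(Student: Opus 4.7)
The plan is to apply the Cramér--Chernoff method directly, following the chain of reasoning already foreshadowed in the paragraph preceding the statement. First, I would fix $w \in \cW$ and $\lambda \in (0,b)$, then apply Markov's inequality~\eqref{eq:markov} to the non-negative random variable $X = e^{n\lambda \gen(w,S)}$, which gives $\bP[X \geq t] \leq \bE[X]/t$ for every $t > 0$.

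Second, I would compute an upper bound on $\bE[X]$. Writing $\gen(w,S) = \bE[\ell(w,Z)] - \frac{1}{n}\sum_{i=1}^n \ell(w,Z_i)$ and using that the instances $Z_i$ are i.i.d., the algebra of CGFs recalled in~\Cref{subsec:moments_and_generating_functions} gives $\log \bE[X] = n \Lambda_{-\ell(w,Z)}(\lambda)$. Since the loss has a bounded CGF in the sense of~\Cref{def:bounded_cgf}, this is at most $n\psi(\lambda)$ for all $\lambda \in [0,b)$. Substituting into the Markov step and reparameterizing via $\beta = e^{n\psi(\lambda)}/t$ yields, for every $\beta \in (0,1)$ and every $\lambda \in (0,b)$,
\begin{equation*}
    \bP\mleft[\gen(w,S) \geq \frac{\psi(\lambda)}{\lambda} + \frac{\log(1/\beta)}{n\lambda}\mright] \leq \beta.
\end{equation*}

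Third, I would optimize over $\lambda$. The crucial observation, already emphasized in the narrative, is that the threshold on the right-hand side is deterministic in $S$, so one can minimize over $\lambda \in (0,b)$ \emph{before} invoking the tail statement rather than having to union-bound over a grid of values. Applying~\Cref{lemma:boucheron_convex_conjugate_inverse} with $z = \log(1/\beta)/n \geq 0$ gives
\begin{equation*}
    \inf_{\lambda \in (0,b)} \frac{z + \psi(\lambda)}{\lambda} = \psi_*^{-1}(z),
\end{equation*}
which is exactly the quantity appearing in the statement. Combining this with the probabilistic inequality above concludes the proof.

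There is no real obstacle here beyond book-keeping: the only subtlety is justifying that the optimization over $\lambda$ commutes with the probability statement (which is immediate because the threshold does not depend on $S$) and that~\Cref{lemma:boucheron_convex_conjugate_inverse} applies (which holds because $\psi$ satisfies $\psi(0)=\psi'(0)=0$ and is convex and continuously differentiable by~\Cref{def:bounded_cgf}). This same template will later serve as the blueprint for the PAC-Bayesian analogues in~\Cref{sec:bounds_unbounded_losses}, where the added difficulty will be that the role of $w$ is played by a random hypothesis whose distribution depends on $S$, forcing a change of measure before Markov can be applied.
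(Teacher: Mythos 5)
Your proposal is correct and follows essentially the same route as the paper: Markov's inequality applied to $e^{n\lambda\gen(w,S)}$, the i.i.d.\ CGF algebra to get the bound $n\psi(\lambda)$, the reparameterization $\beta = e^{n\psi(\lambda)}/t$, and finally optimization over $\lambda$ via \Cref{lemma:boucheron_convex_conjugate_inverse}, justified exactly as you note by the fact that the threshold is deterministic in $S$. Nothing is missing.
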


The function $\psi_*^{-1}$ is the inverse of the convex conjugate of the function that dominates the CGF of the loss. This function is non-decreasing and it satisfies that $\psi_*^{-1}(0) = 0$ and $\lim_{y \to \infty} \psi_*^{-1}(y) \to \infty$. If we write~\Cref{prop:chebyshev} as
\begin{equation*}
    \bP \mleft[ \gen(w,S) \geq \psi_*^{-1} \mleft( \frac{\log \frac{1}{\beta}}{n} \mright) \mright] \leq \beta,
\end{equation*}
we see how the function $\psi_*^{-1}$ gives us a good characterization of the tail of the generalization error random variable $\gen(w,S)$: for every value of $\beta \in (0,1)$, we can obtain a threshold $t = \psi_*^{-1} \mleft( \frac{1}{n} \log \frac{1}{\beta} \mright)$ for which the tail $\bP[\poprisk(w)-\emprisk(w,S)]$ has a probability smaller than $\beta$. 

As we saw in~\Cref{subsec:a_slow_rate_bound_mi}, for common behaviors like sub-gaussian, sub-exponential, or sub-gamma, the function $\psi_*^{-1}$ implies that the generalization error has essentially a slow rate $\cO(\nicefrac{1}{\sqrt{n}})$ with respect to $n$. However, the attractive property of this bound is that the dependence with the probability parameter $\beta$ is \emph{logarithmic}, and therefore it is a high-probability bound. A particularly important corollary of~\Cref{prop:chernoff} is the one for losses with a bounded range, which is known as \emph{Hoeffding's inequality}. Again, as discussed in~\Cref{subsec:a_slow_rate_bound_mi}, if the loss $\ell$ has a range in $[a,b]$, then it is $\nicefrac{(b-a)^2}{2}$-sub-Gaussian and therefore~\Cref{prop:chernoff} implies that
\begin{equation}
    \label{eq:hoeffding}
    \bP \mleft[ \gen(w,S) \geq (b-a) \sqrt{\frac{\log \frac{1}{\beta}}{2n}}\mright] \leq \beta.
\end{equation}

\subsection{Specialized Concentration Bounds for Bounded Losses}
\label{subsec:bounded_losses_classical}

More refined analyses can reveal better rates and/or better dependencies with the probability parameter $\beta$.
For example, instead of just considering that the $p$-th central moment of the loss is bounded by $c_p$, one may instead consider that the loss $\ell(w,Z)$ satisfies the \emph{Bernstein condition} with parameter $\alpha$. This condition requires that every moment is bounded by
\begin{equation*}
    \mleft| \bE \mleft[ \mleft( \ell(w,Z) - \poprisk(w) \mright)^p \mright] \mright| \leq \frac{1}{2} \cdot p! \sigma^2 \alpha^{p-2}
\end{equation*}
for all natural $p \geq 2$, where $\sigma^2$ is the variance of the loss~\citep[Section 2.1.3]{wainwright2019high}. In our case, since the losses are i.i.d, we may see that if the loss $\ell(w,Z)$ satisfies the Bernstein condition with parameter $\alpha$, so does the empirical risk $\emprisk(w,S)$ with parameter $\nicefrac{\alpha}{n}$. Namely,
\begin{align*}
    \mleft| \bE \mleft[ \mleft( \frac{1}{n} \sum_{i=1}^n \ell(w,Z_i) - \poprisk(w) \mright)^p \mright] \mright| &= \frac{1}{n^{p}} \mleft| \sum_{i=1}^n  \bE \mleft[ \mleft(\ell(w,Z) - \poprisk(w) \mright)^p \mright] \mright| \\
    &\leq \frac{1}{2n^{p-1}} p! \sigma^2 \alpha^{p-2} \\
    &= \frac{1}{2} \cdot p! \frac{\sigma^2}{n} \mleft(\frac{\alpha}{n} \mright)^{p-2},
\end{align*}
where the first equality follows from the independence of the losses and where in the last equality we are using that the variance of the empirical risk is $\nicefrac{\sigma^2}{n}$. Therefore, the \emph{Bernstein inequality}~\citep[Proposition 2.10]{wainwright2019high} states that 
\begin{equation*}
    \bP \mleft[ \mleft| \gen(w,S) \mright| \geq t \mright] \leq 2 e^{\frac{-nt^2}{2 (\sigma^2 + \alpha t)}}.
\end{equation*}

A particularly interesting case is that of losses with a range bounded in $[a,b]$, as in that case they always satisfy the Bernstein condition with $\alpha = (b-a)$~\citep[Section 2.1.3]{wainwright2019high}. In this situation, re-arranging this equation and letting $\beta = 2 e^{\frac{-nt^2}{2 (\sigma^2 + \alpha t)}}$ we obtain the following result.

\begin{proposition}[{Adaptation of Bernstein's inequality~\citep[Proposition2.10]{wainwright2019high}}]
    \label{prop:bernstein_inequality}
    Consider a loss $\ell(w,Z)$ with a range bounded in $[a,b]$ and a variance bounded by $\sigma^2$ for all $w \in \cW$. Then, for all $\beta \in (0,1)$ and all $w \in \cW$, with probability no smaller than $1 - \beta$
    \begin{equation*}
        \gen(w,S) \leq \frac{2(b-a) \log \frac{2}{\delta}}{n} + \sqrt{\frac{2 \sigma^2 \log \frac{2}{\beta}}{n}}.
    \end{equation*}
\end{proposition}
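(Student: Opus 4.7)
The plan is to derive the stated inequality directly from the two-sided Bernstein tail bound displayed immediately before the proposition, specialized to the bounded-range setting via $\alpha = b-a$. First I would observe that, since $\gen(w,S) \leq |\gen(w,S)|$, the Bernstein inequality gives the one-sided tail bound
\begin{equation*}
    \bP\bigl[\gen(w,S) \geq t\bigr] \leq 2 \exp\mleft(\frac{-nt^2}{2(\sigma^2 + (b-a) t)}\mright)
\end{equation*}
for every $t > 0$. The task then reduces to inverting this tail to produce a bound on $t$ in terms of a prescribed failure probability $\beta$.

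Setting the right-hand side equal to $\beta$ and rearranging yields the quadratic equation $nt^2 - 2(b-a) \log(\nicefrac{2}{\beta})\, t - 2\sigma^2 \log(\nicefrac{2}{\beta}) = 0$ in the unknown $t$. Applying the quadratic formula (and keeping the positive root) gives
\begin{equation*}
    t = \frac{(b-a) \log\frac{2}{\beta}}{n} + \frac{1}{n}\sqrt{(b-a)^2 \log^2\tfrac{2}{\beta} + 2 n \sigma^2 \log\tfrac{2}{\beta}}.
\end{equation*}
Next I would use the elementary inequality $\sqrt{x+y} \leq \sqrt{x} + \sqrt{y}$, valid for $x,y \geq 0$, to split the square root into the sum of $(b-a)\log(\nicefrac{2}{\beta})/n$ (from the first term under the radical, after dividing by $n$) and $\sqrt{2\sigma^2 \log(\nicefrac{2}{\beta})/n}$. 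Combining these produces exactly the upper bound in the proposition statement.

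The argument is essentially algebraic, so no genuine obstacle arises; the one subtlety worth flagging is that the proposition is stated as a one-sided bound on $\gen(w,S)$ while the cited Bernstein inequality is two-sided, so the factor of $2$ in $\log(\nicefrac{2}{\beta})$ is the harmless price paid for discarding the lower tail. (In a tighter proof one could replace this with the one-sided Bernstein bound and obtain $\log(\nicefrac{1}{\beta})$, but the stated form follows the convention in the preceding display.) A final sentence would note that the result applies pointwise to each fixed $w \in \cW$, consistent with all the other concentration inequalities in this section, and that no uniformity over $\cW$ is claimed, setting the stage for the PAC-Bayesian bounds in subsequent sections that do handle a data-dependent $W$.
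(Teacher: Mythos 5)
Your proposal is correct and follows essentially the same route as the paper, which obtains the proposition by setting the two-sided Bernstein tail bound equal to $\beta$, solving the resulting quadratic in $t$, and relaxing the positive root via $\sqrt{x+y} \leq \sqrt{x} + \sqrt{y}$ — precisely the steps you spell out, including the harmless factor of $2$ from discarding the lower tail. (The $\delta$ appearing in the first term of the stated bound is a typo for $\beta$, as your derivation implicitly assumes.)
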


There are two terms that explain the concentration of the empirical risk around the population risk in Bernstein's inequality. The first one is a fast-rate decay that depends on the range of the loss. The second one is a slow-rate decay that depends on the variance of the loss. Compared to the moment's inequality from~\Cref{prop:bounded_moments}, this bound achieves a logarithmic dependence with respect to the probability parameter, while maintaining the slow rate for a bounded variance. However, it also requires that the loss is bounded, which would have achieved a fast rate in the moment's bound. Then, compared with the Chernoff bound from~\Cref{prop:chernoff}, using the fact that bounded losses are sub-Gaussian, we would have also achieved a high-probability bound, but then the constant in the slow-rate term would have been $(b-a)^2$ instead of $\sigma^2$, and it is often the case that $\sigma^2 \ll (b-a)^2$.

Bernstein's inequality can be slightly improved. This improved version is referred to as \emph{Bennett's inequality}~\citep[Exercise 2.7]{wainwright2019high} and states that
\begin{equation}
    \label{eq:bennet}
    \bP \mleft[ \mleft| \gen(w,S) \mright| \geq t \mright] \leq 2 e^{\frac{-n\sigma^2}{(b-a)^2} \phi\mleft( \frac{(b-a)t}{\sigma^2} \mright) },
\end{equation}
where $\phi(u) = (1+u)\log(u) - u$, and essentially recovers Bernstein's inequality by noting that $\phi(u) \geq \frac{u^2}{2(1+\nicefrac{u}{3})}$~\citep[Exercise 2.8]{boucheron2003concentration}. Another improvement of this inequality is its empirical version, the \emph{empirical Bernstein inequality}~\citep[Exercise 7.6]{lattimore2020bandit}, where the variance is estimated and is not assumed to be known.

\begin{proposition}[{Adaptation of the empirical Bernstein's inequality~\citep[Exercise 7.6]{lattimore2020bandit}}]
    \label{prop:empirical_bernstein_inequality}
    Consider a loss $\ell(w,Z)$ with a range bounded in $[a,b]$ and let $\hat{\Sigma}^2 = \frac{1}{n} \sum_{i=1}^n (\ell(W,Z_i) - \emprisk(W,S))^2$ be an estimate of the loss' variance. Then, for all $\beta \in (0,1)$ and all $w \in \cW$, with probability no smaller than $1 - \beta$
    \begin{equation*}
        \gen(w,S) \leq \frac{3(b-a) \log \frac{3}{\delta}}{n} + \sqrt{\frac{2 \hat{\Sigma}^2 \log \frac{3}{\beta}}{n}}.
    \end{equation*}
\end{proposition}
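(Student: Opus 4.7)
The plan is to reduce the empirical Bernstein statement to the ordinary Bernstein inequality (\Cref{prop:bernstein_inequality}) by replacing the true variance $\sigma^2$ with its empirical counterpart $\hat{\Sigma}^2$ via a separate concentration step, and then to combine the two events through a union bound. Since the reference to Lattimore and Szepesv\'ari is essentially Maurer--Pontil, this is the route I would take.

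First, I would apply a one-sided version of \Cref{prop:bernstein_inequality} at confidence level $\beta/3$, which removes the factor of $2$ in front of the exponential and gives, with probability at least $1-\beta/3$,
\begin{equation*}
    \gen(w,S) \leq \frac{2(b-a)\log(3/\beta)}{n} + \sqrt{\frac{2\sigma^2 \log(3/\beta)}{n}}.
\end{equation*}
Here $\sigma^2$ is the unknown true variance of $\ell(w,Z)$. The aim of the remaining steps is to replace $\sigma$ by $\hat{\Sigma}$ plus a lower-order correction term.

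Second, I would establish a one-sided concentration inequality of the form
\begin{equation*}
    \sigma \leq \hat{\Sigma} + (b-a)\sqrt{\tfrac{c\log(3/\beta)}{n}}
\end{equation*}
holding with probability at least $1-2\beta/3$, for some absolute constant $c$. The standard route is to observe that $(n-1)\hat{\Sigma}^2/(b-a)^2$ has bounded differences of order $1/n$ as a function of the $Z_i$'s, so McDiarmid's inequality applied to $\hat{\Sigma}$ (which is Lipschitz in each coordinate by the reverse triangle inequality for the $\ell_2$ seminorm) yields concentration of $\hat{\Sigma}$ around $\mathbb{E}[\hat{\Sigma}]$; combined with Jensen's inequality $\mathbb{E}[\hat{\Sigma}] \leq \sigma$, this gives the one-sided bound $\sigma \leq \hat{\Sigma} + O((b-a)\sqrt{\log(1/\beta)/n})$.

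Third, on the intersection of both high-probability events (probability at least $1-\beta$ by a union bound), I would substitute the bound on $\sigma$ into the Bernstein bound. Using $\sqrt{2(\hat{\Sigma}+x)^2 t} \leq \hat{\Sigma}\sqrt{2t} + x\sqrt{2t}$ with $x = (b-a)\sqrt{c\log(3/\beta)/n}$ and $t = \log(3/\beta)/n$ converts the $\sigma$-dependent square-root term into $\sqrt{2\hat{\Sigma}^2 \log(3/\beta)/n}$ plus an extra $O((b-a)\log(3/\beta)/n)$ contribution, which merges with the existing $2(b-a)\log(3/\beta)/n$ term to give the advertised leading constant of $3$ once constants are optimized. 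The main obstacle is this last bookkeeping: the choice of the confidence split between the Bernstein and the variance-deviation events, together with the constant $c$ from McDiarmid, must be tuned so that the final coefficient on $(b-a)\log(3/\beta)/n$ is exactly $3$; getting this clean is where the technical work lies, everything else is a routine combination of inequalities already presented in \Cref{subsec:essential_concentration_inequalities,subsec:bounded_losses_classical}.
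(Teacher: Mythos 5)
Your overall architecture (Bernstein for the mean, a separate high-probability comparison between the true variance $\sigma^2$ and $\hat{\Sigma}^2$, then a union bound) is the standard route behind the cited exercise, and the paper states the proposition without proof, so the only question is whether your middle step delivers the comparison you need — and as written it does not. First, the logic runs in the wrong direction: lower-tail concentration of $\hat{\Sigma}$ gives $\bE[\hat{\Sigma}] \leq \hat{\Sigma} + t$ with high probability, while Jensen gives $\bE[\hat{\Sigma}] \leq \sqrt{\bE[\hat{\Sigma}^2]} \leq \sigma$; these are two upper bounds on $\bE[\hat{\Sigma}]$ and together they say nothing about $\sigma$ versus $\hat{\Sigma}$. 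To conclude $\sigma \leq \hat{\Sigma} + t$ you need a \emph{lower} bound on $\bE[\hat{\Sigma}]$ in terms of $\sigma$, which Jensen does not provide. Second, even ignoring the direction, plain McDiarmid does not give concentration at the scale you claim: $\hat{\Sigma}$ has bounded differences $(b-a)/\sqrt{n}$, so the squared differences sum to $(b-a)^2$ and McDiarmid yields only a deviation of order $(b-a)\sqrt{\log(1/\beta)}$, which does not decay with $n$; applying McDiarmid to $\hat{\Sigma}^2$ instead (bounded differences of order $(b-a)^2/n$) gives a deviation of order $(b-a)^2\sqrt{\log(1/\beta)/n}$ for the variance, which after taking a square root is only an $n^{-1/4}$ correction on the standard-deviation scale and would leave an extra term of order $(b-a)\,(\log(1/\beta))^{3/4}\, n^{-3/4}$ in the final bound, not the stated $(b-a)\log(3/\beta)/n$ term.

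The missing ingredient is the self-bounding (entropy-method) concentration for the sample variance, as in Maurer and Pontil's empirical Bernstein argument: because the suitably scaled sample variance is a self-bounding function of $(Z_1,\ldots,Z_n)$, its lower-tail deviation from its mean is of the multiplicative form $\sqrt{c\,\bE[\hat{\Sigma}^2]\,(b-a)^2\log(1/\beta)/n}$, and this is exactly what allows one to take square roots without losing the rate and obtain $\sqrt{\bE[\hat{\Sigma}^2]} \leq \hat{\Sigma} + (b-a)\sqrt{2\log(1/\beta)/(n-1)}$, hence $\sigma \leq \hat{\Sigma} + O\big((b-a)\sqrt{\log(1/\beta)/n}\big)$ after accounting for the $n/(n-1)$ bias of the $1/n$-normalized estimator. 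With that lemma in place of your McDiarmid-plus-Jensen step, the rest of your plan — one-sided Bernstein at level $\beta/3$ (as in \Cref{prop:bernstein_inequality}), the union bound, and absorbing the cross term into the $(b-a)\log(3/\beta)/n$ part — is indeed routine bookkeeping.
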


Finally, we conclude this section on classical concentration inequalities with the \emph{small-kl} inequality. For this bound, let us consider, without loss of generality, as we can always re-scale and center, losses with a range bounded in $[0,1]$. Then, employing Markov's inequality~\eqref{eq:markov} once again with $X = e^{n \relentber(\emprisk(w,S) \Vert \poprisk(w))}$ results in 
\begin{equation*}
    \bP \mleft[ e^{n \relentber(\emprisk(w,S) \Vert \poprisk(w)} \geq t \mright] \leq \frac{1}{t} \cdot \bE \mleft[ e^{n \relentber(\emprisk(w,S) \Vert \poprisk(w))} \mright],
\end{equation*}
where we recall that $\relentber(\hat{r} \Vert r) = \relent(\mathrm{Ber}(\hat{r}) \Vert \mathrm{Ber}(r))$ was defined in~\Cref{subsec:a_fast_rate_bound_mi}. \citet[Theorem 1]{maurer2004note} characterized the second factor in the right-hand-side of the inequality, and we noted that following an analysis from~\citet[Lemma 19]{germain2015risk} it could be confined in the range $[\sqrt{n}, \sqrt{2n+2}]$. Therefore, re-organizing the terms, letting $\xi(n)$ be the said therm, and letting $\beta = \nicefrac{\xi(n)}{t}$ results in the following result.

\begin{proposition}[{Adaptation of the 
small-kl inequality~\citep[Section 2.5]{seldinNotes}}]
    \label{prop:small_kl}
    Consider a loss with a range bounded in $[0,1]$. Then, for all $\beta \in (0,1)$ and all $w \in \cW$, with probability no smaller than $1 - \beta$
    \begin{equation*}
        \relentber \big( \emprisk(w,S) \Vert \poprisk(w) \big) \leq \frac{\log \frac{\xi(n)}{\beta}}{n}.
    \end{equation*}
\end{proposition}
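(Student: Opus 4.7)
The plan is to follow the sketch indicated directly above the statement, which reduces the proof to two ingredients: Markov's inequality and a sharp control of the MGF of $n\relentber(\emprisk(w,S) \Vert \poprisk(w))$. First, I would fix an arbitrary hypothesis $w \in \cW$ and apply Markov's inequality~\eqref{eq:markov} to the non-negative random variable $X = \exp\bigl(n \relentber(\emprisk(w,S) \Vert \poprisk(w))\bigr)$, which yields, for every $t>0$,
\begin{equation*}
    \bP\mleft[\relentber\bigl(\emprisk(w,S) \Vert \poprisk(w)\bigr) \geq \tfrac{1}{n}\log t\mright]
    \;\leq\; \frac{1}{t}\, \bE\mleft[e^{n\,\relentber(\emprisk(w,S)\Vert \poprisk(w))}\mright].
\end{equation*}

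Second, I would bound the expectation on the right-hand side by a function $\xi(n)$ depending only on $n$. Here I would invoke Maurer's inequality (\citep[Theorem 1]{maurer2004note}), which states that for an i.i.d.\ sum of $[0,1]$-valued random variables, $\bE[\exp(n\,\relentber(\emprisk(w,S)\Vert \poprisk(w)))] \leq \xi(n)$ with $\xi(n) \leq 2\sqrt{n}$. To obtain the tighter claimed range $\xi(n)\in[\sqrt{n},\sqrt{2n+2}]$, I would follow the refinement from~\citep[Lemma~19]{germain2015risk}: the lower bound $\sqrt{n}$ comes from considering the Bernoulli case and applying Stirling-type estimates to the binomial sum $\sum_{k=0}^n \binom{n}{k} p^k(1-p)^{n-k} e^{n\relentber(k/n\Vert p)}$, while the upper bound $\sqrt{2n+2}$ comes from a careful worst-case evaluation of the same sum using the fact that the $[0,1]$-range is most concentrated against Bernoulli inputs (via \Cref{lemma:convex_dominated_by_bernoulli}, already invoked in~\Cref{subsec:a_fast_rate_bound_mi}).

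Third, I would parametrize the tail level by setting $t = \xi(n)/\beta$ for $\beta \in (0,1)$. Substituting into the Markov bound yields
\begin{equation*}
    \bP\mleft[\relentber\bigl(\emprisk(w,S) \Vert \poprisk(w)\bigr) \geq \tfrac{\log(\xi(n)/\beta)}{n}\mright] \;\leq\; \beta,
\end{equation*}
which is the complementary event of the statement, and this completes the proof.

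The main obstacle is the MGF control in step two: Markov's inequality and the rearrangement are routine, but proving the sharp bound $\bE[\exp(n \relentber(\emprisk(w,S) \Vert \poprisk(w)))] \leq \sqrt{2n+2}$ for general $[0,1]$-valued losses (as opposed to the Bernoulli case) requires the convexity-domination argument of~\Cref{lemma:convex_dominated_by_bernoulli} to reduce to Bernoulli losses, followed by a careful combinatorial estimate of the resulting binomial sum. This is the technical heart of the small-kl inequality and is the step where all the sharpness of the bound is earned; fortunately, this MGF bound is already a standard result in the PAC-Bayesian literature and can be cited directly from~\citep{maurer2004note,germain2015risk}.
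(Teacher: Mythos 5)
Your proposal is correct and follows essentially the same route as the paper: Markov's inequality applied to $e^{n\relentber(\emprisk(w,S)\Vert\poprisk(w))}$, the moment bound cited from \citet[Theorem 1]{maurer2004note} together with the refinement of \citet[Lemma 19]{germain2015risk} to confine $\xi(n)$ to $[\sqrt{n},\sqrt{2n+2}]$, and the substitution $t=\nicefrac{\xi(n)}{\beta}$ to conclude. The only cosmetic difference is that you sketch the internals of Maurer's argument (the Bernoulli reduction via \Cref{lemma:convex_dominated_by_bernoulli} and the binomial-sum estimate), whereas the paper simply cites it; both treatments are equivalent.
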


Although the bound may appear complicated, it states that the empirical risk and the population risk get ``close'' at a fast rate, where the ``closeness'' is measured by $\relentber$. Applying Pinsker's inequality from~\Cref{lemma:pinsker-inequality} or different relaxation of the stronger \citet{marton1996measure}'s bound (see \citep[Corollaries 2.19 and 2.20]{seldinNotes}) can give us a better idea of the rate that this bound implies. In this case, we can use our developments from~\Cref{subsec:a_fast_rate_bound_mi} to note that the small-kl inequality is equivalent to the following result.

\begin{proposition}
    \label{prop:classical_fast_rate}
    Consider a loss with a range bounded in $[0,1]$. Then, for all $\beta \in (0,1)$ and all $w \in \cW$, with probability no smaller than $1 - \beta$
    \begin{equation*}
        \poprisk(w) \leq c \gamma \log \mleft( \frac{\gamma}{ \gamma - 1} \mright) \cdot \emprisk(w,S) + \frac{c \gamma \log \frac{\xi(n)}{\beta}}{n} + \gamma \kappa(c),
    \end{equation*}
    \emph{simultaneously} for all $c \in (0,1]$ and all $\gamma > 1$, where $\kappa(c) \coloneqq 1 - c (1 - \log c)$.
\end{proposition}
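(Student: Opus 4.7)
The plan is to derive Proposition~\ref{prop:classical_fast_rate} from the small-kl inequality (Proposition~\ref{prop:small_kl}) by replaying, at the single-hypothesis level, the same argument that was used to pass from Lemma~\ref{lemma:small_kl_mi} to Theorem~\ref{th:mi_bound_bounded_fast_rate} in Section~\ref{subsec:a_fast_rate_bound_mi}. The small-kl inequality says that, with probability at least $1-\beta$,
\[
\relentber\big( \emprisk(w,S) \,\Vert\, \poprisk(w) \big) \leq \frac{\log \frac{\xi(n)}{\beta}}{n} ,
\]
so I want to lower-bound $\relentber(\hat r \Vert r)$ by a linear function of $r$ and $\hat r$ whose coefficients are tunable, and then re-arrange.

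First, I would invoke the $f$-divergence variational representation of the relative entropy from Corollary~\ref{cor:variational_representation_relative_entropy} applied to $\mathrm{Ber}(\hat r)$ and $\mathrm{Ber}(r)$, which gives
\[
\relentber(\hat r \Vert r) \geq 1 + g_0 + (g_1 - g_0) \hat r - e^{g_0} + r (e^{g_0} - e^{g_1})
\]
for every $g_0, g_1 \in \bR$. This is exactly the inequality~\eqref{eq:relentber_f_div_before_change_variable} used in the in-expectation derivation, so the key point is that it is valid pointwise and therefore can be combined with Proposition~\ref{prop:small_kl} hypothesis-by-hypothesis. Next, to make the bound interpretable, I would re-parametrize with $\gamma = e^{g_0}/(e^{g_0}-e^{g_1}) \geq 1$ (imposing $g_0 \geq g_1$ to keep a genuine lower bound) and $c = e^{-g_0} \in (0,1]$, which transforms the inequality into the form
\[
\relentber(\hat r \Vert r) \geq -\log\!\Big(\tfrac{\gamma}{\gamma-1}\Big)\hat r + \gamma^{-1} e^{g_0} r + 1 + g_0 - e^{g_0}
\]
and after substituting $c = e^{-g_0}$, yields a linear lower bound in $\hat r$ and $r$ whose free parameters are $c \in (0,1]$ and $\gamma > 1$, with a constant term exactly equal to $-\kappa(c) = -\big(1 - c(1 - \log c)\big)$ up to the factor $\gamma$.

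Plugging $\hat r = \emprisk(w,S)$ and $r = \poprisk(w)$ into this lower bound, combining it with the small-kl upper bound, and isolating $\poprisk(w)$ produces precisely the claimed inequality. The only subtle point is the quantifier order: the statement is that the bound holds \emph{simultaneously} for all $c$ and $\gamma$ on the same high-probability event. This comes for free because the small-kl inequality controls the single random quantity $\relentber(\emprisk(w,S) \Vert \poprisk(w))$, and the lower bound on $\relentber(\hat r \Vert r)$ holds deterministically for every $(c,\gamma)$; hence on the event of probability $\geq 1-\beta$ guaranteed by Proposition~\ref{prop:small_kl}, the resulting inequality is true for all admissible parameters at once.

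The main obstacle I anticipate is purely bookkeeping rather than conceptual: ensuring that the constraint $g_0 \geq g_1$ is automatically satisfied by the reparametrization (so that $\gamma > 1$ is the only condition needed) and verifying the identity $1 + g_0 - e^{g_0} = -\kappa(c)$ with $c = e^{-g_0}$, which follows from $\kappa(c) = 1 - c(1 - \log c) = 1 - c + c \log c$ evaluated at $c = e^{-g_0}$. Everything else is a direct transcription of the in-expectation derivation, with the expectation over $\bP_{W,S}$ replaced by the high-probability event provided by the small-kl inequality.
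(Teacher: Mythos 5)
Your proposal is correct and follows essentially the same route as the paper, which obtains \Cref{prop:classical_fast_rate} precisely by replaying the developments of \Cref{subsec:a_fast_rate_bound_mi} (the variational lower bound \eqref{eq:relentber_f_div_before_change_variable}--\eqref{eq:relentber_f_div_after_change_variable} with the reparametrization $\gamma = \nicefrac{e^{g_0}}{(e^{g_0}-e^{g_1})}$, $c = e^{-g_0}$) on the pointwise small-kl inequality of \Cref{prop:small_kl}, with simultaneity over $(c,\gamma)$ coming for free exactly as you argue. One bookkeeping nit: the identity you should verify is $c\,(1 + g_0 - e^{g_0}) = -\kappa(c)$ (the factor $c$ arising when you multiply through by $\gamma e^{-g_0} = c\gamma$ to isolate $\poprisk(w)$), not $1 + g_0 - e^{g_0} = -\kappa(c)$; with that correction the bias term is $\gamma\kappa(c)$ as claimed.
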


\looseness=-1 \Cref{prop:classical_fast_rate} tells us that, when the loss is bounded, we can expect a \emph{high-probability fast-rate bound}. The caveat is that now the empirical risk is not used as the estimator of the population risk, but a scaled and re-centered version of it.

\section{The Origin of PAC-Bayesian Bounds}
\label{sec:origin_pac_bayesian_bounds}

PAC-Bayesian bounds were originally introduced by~\citet{shawe1996framework} in the context of \emph{binary classification and SRM in the realizable setting}.\footnote{To be precise, results of this type can be traced back even to the MDL principle from~\citet{rissanen1978modeling} and its connection to the population risk from~\citep{barron1991complexity,barron1991minimum,kearns1995experimental}.} Recall from~\Cref{sec:mdl_and_occams_razor}, that in SRM we decompose the hypothesis class $\cW$ into a countable union of hypothesis classes $\cW = \cup_{k=1}^\infty \cW_k$ such that $\cW_k \subseteq \cW_{k+1}$ for all $k \in \bN$ and where the complexity of each class is non-decreasing in $k$. In particular, \citet{shawe1996framework} considered that hypothesis classes with an increasing VC dimension, that is, $\vcdim(\cW_k) = k$.  Under this setting, the authors assigned a ``preference'' to each of the hypothesis classes determined by a probability distribution $\bQ$: larger values of $\bQ[k]$ meant a larger preference for the hypothesis class $\cW_k$. Then, they could show that for every hypothesis $w \in \cW_k$, with probability no less than $1 - \beta$
\begin{equation*}
    \poprisk(w,S) \leq 2 \emprisk(w,S) + \frac{ - \log \bQ[k] + 4k \log \frac{2en^2}{k}}{n}.\footnote{The original bound also considers a secondary distribution over the number of errors made by the hypothesis $w$. We simplified the equation assuming a uniform distribution.}
\end{equation*}

This allowed the learner to enjoy the generalization guarantees from learning with a class of a finite VC dimension $k$ while searching through a much larger class of hypotheses at the expense of the extra term $- \nicefrac{\log \bQ[k]}{n}$ coming from the preference given to classes with the said VC dimension. 

This result was later extended by~\citet{mcallester1998some,mcallester1999pac,mcallester2003pac} in several ways. First~\citep{mcallester1998some}, he noted that as long as the loss has a bounded range and hypothesis class is discrete, that is, $\cW = \cup_{k=1}^\infty \{ w_k \}$, one can give a preference $\bQ[w_k]$ to each of the hypotheses $w$ and combine Hoeffding's inequality~\eqref{eq:hoeffding} and the union bound to recover the results from MDL principle of~\eqref{eq:mdl_generalization} in~\Cref{sec:mdl_and_occams_razor}. Later~\citep{mcallester1999pac,mcallester2003pac}, he realized that one does not need to consider only the realization of the algorithm's output, but can consider the algorithm as a Markov kernel $\bP_W^S$ (very much like we did throughout~\Cref{ch:expected_generalization_error}, although this happened years prior to those developments). With these realizations, one can consider the expected population and empirical risks of the algorithm's output \emph{after} observing the training data, namely $\bE^S[\poprisk(W)]$ and $\bE^S[\emprisk(W,S)]$ respectively. Moreover, in this way, one does not need to consider a countable hypothesis space. It is enough to consider a ``preference'' distribution $\bQ$ on the hypothesis space $\cW$ and a randomized algorithm with an associated Markov kernel $\bP_W^S$ that returns distributions on the hypothesis space after observing the data. Then, the generalization error, instead of increasing by the inverse of the probability of the selected hypothesis as in~\eqref{eq:mdl_generalization}, increases by the discrepancy of the algorithm's output hypothesis distribution with respect to the preference distribution, that is, $\relent(\bP_W^S \Vert \bQ)$. However, for discrete hypothesis classes and deterministic algorithms, that is, $\bP_W^{S=s} = \bI_{\{ w = w_s \}}(w)$, this formulation still recovers the results from the MDL principle of~\eqref{eq:mdl_generalization} since $\relent(\bI_{\{ w = w_s \}} \Vert \bQ) = - \log \bQ[w_s]$.

\begin{theorem}[{Adaptation of~\citet[Theorem 2]{mcallester2003pac}}]
    \label{th:mcallester}
    Consider a loss with a range bounded in $[a,b]$ and let $\bQ$ be distribution on $\cW$ independent of $S$. Then, for all $\beta \in (0,1)$, with probability no smaller than $1 - \beta$
    \begin{equation*}
        \bE^S \big[ \gen(W,S) \big] \leq (b-a) \sqrt{\frac{\relent(\bP_W^S \Vert \bQ) + \log \frac{\xi(n)}{\beta}}{2n}}
    \end{equation*}
    holds \emph{simultaneously} for all posteriors $\bP_W^S$.
\end{theorem}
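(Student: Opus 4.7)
The plan is to combine the Gibbs variational principle (from \Cref{lemma:dv_and_gvp}) with the exponential moment bound on $\relentber$ underlying \Cref{prop:small_kl}, then deduce the stated form via Jensen and Pinsker. WLOG I would first reduce to the case of a loss with range in $[0,1]$ by the affine rescaling $\ell \mapsto (\ell - a)/(b-a)$, since both the empirical and population risks are linear in $\ell$ and the claimed bound has the clean scaling factor $(b-a)$ that this transformation produces.

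First, for each fixed $w \in \cW$, the classical Maurer-type identity controls the exponential moment $\bE\bigl[e^{n \relentber(\emprisk(w,S)\Vert \poprisk(w))}\bigr] \le \xi(n)$, where $\xi(n) \in [\sqrt{n}, 2+\sqrt{2n}]$ (this is the same $\xi(n)$ appearing in \Cref{prop:small_kl}). Since $\bQ$ is data-independent, Fubini gives $\bE_S\bigl[\bE_{w\sim \bQ}\bigl[e^{n \relentber(\emprisk(w,S)\Vert \poprisk(w))}\bigr]\bigr] \le \xi(n)$, so Markov's inequality in the form \eqref{eq:markov} yields, with probability at least $1-\beta$ over $S$,
\begin{equation*}
    \bE_{w\sim \bQ}\bigl[e^{n \relentber(\emprisk(w,S)\Vert \poprisk(w))}\bigr] \le \frac{\xi(n)}{\beta}.
\end{equation*}

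Next, I would apply the Gibbs variational principle \eqref{eq:gvp} to $g(w) = n \relentber(\emprisk(w,S)\Vert \poprisk(w))$, which holds \emph{simultaneously} for every posterior $\bP_W^S \ll \bQ$ because the supremum in \eqref{eq:gvp} is attained pointwise in $S$. Rearranging gives, on the same high-probability event,
\begin{equation*}
    \bE^S\bigl[\relentber(\emprisk(W,S)\Vert \poprisk(W))\bigr] \le \frac{\relent(\bP_W^S \Vert \bQ) + \log(\xi(n)/\beta)}{n}.
\end{equation*}
Jensen's inequality applied to the jointly convex map $(\hat r, r) \mapsto \relentber(\hat r \Vert r)$ then shows the left-hand side is lower bounded by $\relentber\bigl(\bE^S[\emprisk(W,S)] \,\Vert\, \bE^S[\poprisk(W)]\bigr)$, and Pinsker's inequality (\Cref{lemma:pinsker-inequality}) gives $2\,\bE^S[\gen(W,S)]^2 \le \relentber(\cdot \Vert \cdot)$. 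Chaining these and undoing the $[0,1]$ rescaling by multiplying by $(b-a)$ yields the claim.

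The only delicate step is the ``simultaneity over all $\bP_W^S$''. Because the Markov step is performed \emph{before} invoking the variational principle and the variational identity is a pointwise-in-$S$ statement about a single quantity $\log \bE_{w\sim\bQ}[e^{g(w)}]$ that does not depend on $\bP_W^S$, the same high-probability event simultaneously controls the bound for every choice of posterior, including data-dependent ones. The main obstacle is therefore not conceptual but rather ensuring that Maurer's exponential moment bound and the constant $\xi(n)$ are cited cleanly; everything else is a standard chain of change-of-measure, Jensen, and Pinsker.
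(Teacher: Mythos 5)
Your proposal is correct and follows essentially the same route the paper takes: it derives the Seeger--Langford (small-kl) bound via the Gibbs variational principle, Markov's inequality with Maurer's exponential-moment constant $\xi(n)$, and Jensen's inequality, and then recovers \Cref{th:mcallester} exactly by applying Pinsker's inequality to $\relentber$, which is precisely how the paper obtains it (as a specialization of \Cref{th:germain_convex_pac_bayes} with $f=\relentber$ followed by Pinsker). The only cosmetic difference is that you apply Markov before the variational step rather than after, which is immaterial since the variational identity holds pointwise in $S$.
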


The term $\xi(n) \in [\sqrt{2n}, \sqrt{2n} + 2]$ is the same discussed previously in~\Cref{prop:small_kl}. This version of the theorem is slightly tighter than the original formulation, and we will see how to obtain it next in~\Cref{sec:bounds_bounded_losses}.

The term \emph{PAC-Bayesian} originated from~\citet{mcallester1998some,mcallester1999pac,mcallester2003pac}. He noted that the preference distribution $\bQ$ had a resemblance to the \emph{prior} distribution in Bayesian statistics. It has to be chosen before observing the training data and running the algorithm, and based on our \emph{prior} knowledge of which might be the hypothesis achieving a small risk. Similarly, the algorithm's output distribution $\bP_W^S$ was reminiscent to the \emph{posterior} distribution in that framework since it is the algorithm's distribution \emph{after} observing the training data. However, there are important, key differences between the two frameworks, which sometimes cause confusion. As clarified by~\citet{rivasplata2022pac}: ``The PAC-PAC-Bayes prior acts as an analytical device and may or may not be used by the algorithm, and the PAC-Bayes posterior is unrestricted and so it may be different from the posterior that would be obtained through Bayesian inference''. Despite these differences, in the PAC-Bayes community, the term $\bP_W^S$ is still referred to as the \emph{posterior} and term $\bQ$ as the \emph{prior}, and we will continue to use these terms throughout the monograph. Similarly, we will often refer to the relative entropy $\relent(\bP_W^S \Vert \bQ)$ as the \emph{dependency term} and to the ratio $\frac{1}{n} \big( \relent(\bP_W^S \Vert \bQ) + \log \frac{\xi(n)}{\beta} \big)$ as the (normalized) \emph{complexity term}, paying tribute
to the resemblance of the term to the hypothesis class complexity term from the classical generalization theory (\Cref{sec:pac_learning}).\footnote{In~\citep{rodriguez2023morepac}, we refer to the term as the \emph{dependence-confidence term}, to disambiguate and clarify better its components.}

Another important thing to note from~\Cref{th:mcallester} is that it holds \emph{simultaneously} for all posteriors $\bP_W^S$. Therefore, after observing a training set $s \in \cZ^n$, we may readily optimize the algorithm (which is characterized by its posterior) to minimize the PAC-Bayesian bound. That is, we may optimize both the empirical risk and the generalization error bound simultaneously.

Note how~\Cref{th:mcallester} is essentially a PAC-Bayesian analogue of Hoeffding's inequality~\eqref{eq:hoeffding}. The main difference is the appearance of the relative entropy term $\relent(\bP_W^S \Vert \bQ)$, which measures the discrepancy between the posterior distribution and the prior. Recalling the intuition behind the relative entropy from~\Cref{subsec:relative_entropy}, for a fixed training set $s$, this term measures the inefficiency of assuming that the hypothesis $W$ will be distributed according to the prior distribution $\bQ$, when in reality it is distributed according to the posterior $\bP_W^{S=s}$. Since the prior distribution can be chosen arbitrarily, assume that the prototypical distribution on samples from the data distribution $\bQ = \bP_W^S \circ \bP_S$ is chosen. Then, this term tells us how much the algorithm's output depends on the particular realization of the training set, connecting the generalization error with \emph{overfitting}, a phenomenon occurring when the output hypothesis describes very well the training data but fails to describe the underlying distribution. 

The similarity to the generalization bounds from~\Cref{ch:expected_generalization_error}, particularly to~\Cref{th:mi_bound_bounded}, is natural. Both results can be obtained using the Donsker and Varadhan lemma or the Gibbs variational principle from~\Cref{lemma:dv_and_gvp}. However, to obtain PAC-Bayesian bounds there are a couple of subtleties that one needs to take into account, as we will see shortly in the following sections.

Since the development of this result, many works in the field have focused on two main tasks: (i) refining the bound to better characterize the population risk for bounded losses and (ii) extending this bound relaxing their assumptions or their setting. \Cref{sec:bounds_bounded_losses} will focus on the first task, while \Cref{sec:bounds_unbounded_losses,sec:anytime_validity} will focus on the second one.

\paragraph{Limitations of the Relative Entropy as the Dependency Measure.} The relative entropy is a convenient measure of the dependency of the hypothesis returned by the algorithm and the training set. It has interesting information-theoretic interpretations, it has connections to classical approaches to generalization, and it is easy to manipulate. However, it has its shortcomings. First, if the hypothesis space is continuous and the prior distribution does not have a pmf, then it cannot accommodate deterministic algorithms. A solution to this problem is to quantize the hypothesis class and reduce the framework to the MDL as done by~\citet{lotfi2022pac}. However, similarly to what happened with the mutual information, there are other situations where the generalization error is small and the relative entropy is large, leading to vacuous guarantees. Some examples of this issue, including some developed by us, are given in~\citep{bassily2018learners,livni2020limitation,haghifam2023limitations}. To deal with these problems, a relatively new research path consists of substituting the relative entropy as the dependency measure by different metrics like other $f$-divergences~\citep{esposito2021generalization,ohnishi2021novel,kuzborskij2024better}, Rényi divergences~\citep{begin2016pac,esposito2021generalization,hellstrom2020generalization}, or integral probability metrics like the Wasserstein distance~\citep{amit2022integral,haddouche2023wasserstein,viallard2024learning,viallard2024tighter}. 

\paragraph{Focus on High-Probability Bounds.} In this chapter we focus on high-probability bounds, that is, bounds whose dependence on the probability parameter $\beta$ is logarithmic. One could easily translate the bounds in expectation from~\Cref{ch:expected_generalization_error} using Markov's inequality~\eqref{eq:markov} or other techniques like the monitor technique from~\citet{bassily2016algorithmic}. For example, applying Markov's inequality using~\Cref{th:wasserstein_full} ensures that, if the loss $\ell(\cdot, z)$ is $L$-Lipschitz for every $z \in \cZ$, then, for every $\beta \in (0,1)$, with probability no smaller than $1-\beta$
\begin{equation*}
    \bE^S \big[ \gen(W,S) \big] \leq \frac{2L}{\beta} \cdot \bW_\rho (\bP_W^S , \bQ).
\end{equation*}
\looseness=-1 However, these bounds are not of high probability and thus are not considered.

\paragraph{Focus on the Standard Setting.} Like for the guarantees in expectation from~\Cref{ch:expected_generalization_error}, one can also consider the randomized-subsample setting from~\Cref{sec:bounds_using_conditional_mutual_information} to derive new PAC-Bayesian bounds. This has been studied, for example, by~\citet{hellstrom2020generalization,hellstrom2022new} and~\citet{grunwald2021pac}. However, these bounds hold for a realization of the supersample $\tilde{s} \in \cZ^{n \times 2}$ and the indices $u \in \{ 0, 1 \}^n$. This is often impractical for the desiderata 2. and 3. of PAC-Bayesian bounds discussed in the introduction of this chapter. The reason is that if $n$ test instances are available, then classical concentration inequalities will always be tighter than PAC-Bayesian bounds derived from the randomized subsample setting. This is why we only consider PAC-Bayesian bounds derived from the standard setting in this chapter. Nonetheless, it is worth mentioning that these kinds of bounds are still useful to gain some understanding: \citet{grunwald2021pac} showed these bounds guarantee the generalization of hypothesis classes with a finite VC dimension and for compression schemes and \citet{hellstrom2022new} made a similar remark for hypothesis classes with a finite Natarajan dimension.

\section{PAC-Bayesian Bounds for Bounded Losses}
\label{sec:bounds_bounded_losses}

There are many important inequalities in the PAC-Bayesian literature, especially in the case where the loss is bounded. These bounds are often presented for losses with a range in $[0,1]$, which includes the interesting 0--1 loss for classification tasks. For the rest of this section, we will keep this convention, always having in mind that one can scale and center the losses to lay in the desired range.

The Seeger--Langford~\citep{langford2001bounds, seeger2002pac} and \citet[Theorem 1.2.6]{catoni2007pac}'s bounds are known to be (two of) the tightest bounds in this setting, see, for example, the discussion in~\citep{foong2021tight}. Both of them can be derived from \citet[Theorem 2.1]{germain2009pac}'s convex function bound. Below we state the extension from \citet{begin2014pac} that lifts the double absolute continuity requirement from the original statement noted by \citet{haddouche2021pac}. The original statement from~\citet{begin2014pac} only considers bounded losses and convex functions from $[0,1]^2 \to \bR$. Since these assumptions are not needed in the proof, we will state the theorem in a more general form, which will be useful later in~\Cref{sec:bounds_unbounded_losses}. 

\begin{theorem}[{Extension of \citet[Theorem 4]{begin2016pac}}]
\label{th:germain_convex_pac_bayes}
    Let $\bQ$ be any prior independent of $S$ and let $W'$ be distributed according to $\bQ$. Then, for every convex function $f: \bR^2 \to \bR$ such that $\bE \big[ \exp \big( n f \big( \emprisk(W',s), \poprisk(W') \big) \big) \big] < \infty$ for all $s \in \cZ^n$, and every $\beta \in (0,1)$, with probability no smaller than $1-\beta$
    \begin{align*}
        f \big( \bE^S \big[\emprisk(W,S)\big] , &\bE^S \big[ \poprisk(W)\big] \big)  \\
        &\leq \frac{1}{n} \bigg( \relent \big( \bP_W^S \Vert \bQ \big) + \log \frac{1}{\beta} + \log \bE \Big[e^{n f \big(  \emprisk(W',S) , \poprisk(W') \big)} \Big]\bigg)
    \end{align*}
    holds \emph{simultaneously} for all posteriors $\bP_W^S$.
\end{theorem}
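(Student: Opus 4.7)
The plan is to combine three ingredients already available in the monograph: the Gibbs variational principle from \Cref{lemma:dv_and_gvp}, Jensen's inequality applied to the convex function $f$, and Markov's inequality~\eqref{eq:markov} to convert an expectation bound into a high-probability statement. The architecture is the same as in many earlier proofs in the chapter: a deterministic ``change-of-measure'' step that swaps the data-dependent posterior $\bP_W^S$ for the data-independent prior $\bQ$, at the price of a relative entropy penalty, followed by a tail bound on the residual expectation.

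First, for each realization $S=s$, I would apply the Gibbs variational principle on $\cW$ with reference measure $\bQ$ to the function $g(w) = n f(\emprisk(w,s), \poprisk(w))$, which satisfies $\bE_{w \sim \bQ}[e^{g(w)}] < \infty$ by the hypothesis of the theorem. This yields, for every posterior $\bP_W^{S=s}$ with $\bP_W^{S=s} \ll \bQ$, the inequality $n \bE^{S=s}[f(\emprisk(W,s),\poprisk(W))] - \relent(\bP_W^{S=s} \Vert \bQ) \leq \log \bE[e^{nf(\emprisk(W',s),\poprisk(W'))}]$, where $W' \sim \bQ$. For posteriors not absolutely continuous with respect to $\bQ$ the inequality is trivially valid under the convention $\relent(\bP_W^{S=s} \Vert \bQ) = \infty$, which is exactly what enables the extension beyond the original double-absolute-continuity statement. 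Next, by the convexity of $f$ and Jensen's inequality applied to the conditional law $\bP_W^{S=s}$, one has $f(\bE^{S=s}[\emprisk(W,s)], \bE^{S=s}[\poprisk(W)]) \leq \bE^{S=s}[f(\emprisk(W,s),\poprisk(W))]$, and concatenating these two inequalities gives a deterministic bound that holds for every $s$ and every posterior.

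The remaining step is to control the $S$-measurable random variable $M(S) \coloneqq \bE_{W' \sim \bQ}[e^{nf(\emprisk(W',S),\poprisk(W'))}]$ with high probability. Since $\bQ$ does not depend on $S$, Tonelli's theorem gives $\bE[M(S)] = \bE[e^{nf(\emprisk(W',S),\poprisk(W'))}]$ under $\bP_S \otimes \bQ$, and this is finite by the integrability hypothesis. Markov's inequality then produces $\bP[M(S) \geq \bE[M(S)]/\beta] \leq \beta$, so on the complementary event, which has $\bP_S$-probability at least $1-\beta$, we get $\log M(S) \leq \log(1/\beta) + \log \bE[e^{nf(\emprisk(W',S),\poprisk(W'))}]$. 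Substituting this into the deterministic chain from the previous paragraph and dividing by $n$ yields the stated inequality.

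The crucial feature, and the reason the bound holds simultaneously in $\bP_W^S$, is that the high-probability event $\{M(S) \leq \bE[M(S)]/\beta\}$ is defined from $S$ alone and makes no reference to any posterior. So once $S$ lands in this event, the deterministic chain (Gibbs $\to$ Jensen $\to$ pointwise bound on $\log M(S)$) holds for \emph{every} choice of $\bP_W^S$ at once, including data-dependent ones that are optimized after observing $S$. The only technical obstacle I foresee is bookkeeping: ensuring joint measurability of $(s,w) \mapsto f(\emprisk(w,s), \poprisk(w))$ so that $M(S)$ is a bona fide random variable and Tonelli applies. This is routine in the standard-Borel setup of \Cref{sec:prob_theory}, since $\emprisk$ and $\poprisk$ are measurable by construction and $f$ is Borel by convexity.
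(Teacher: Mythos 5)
Your proposal is correct and follows essentially the same route as the paper's own proof: the Gibbs variational principle from \Cref{lemma:dv_and_gvp} applied to $n f(\emprisk(\cdot,s),\poprisk(\cdot))$, Markov's inequality applied to the $S$-measurable quantity $\bE^S\big[e^{n f(\emprisk(W',S),\poprisk(W'))}\big]$, and Jensen's inequality to move the posterior expectation inside the convex $f$, with the non-absolutely-continuous case handled by the convention $\relent(\bP_W^S \Vert \bQ) \to \infty$. Your explicit remark that the bad event depends on $S$ alone, which is what delivers simultaneity over all posteriors, is a cleaner articulation of the same point the paper makes implicitly.
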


To reach this generic bound one can consider the Gibbs variational principle from~\Cref{lemma:dv_and_gvp} (or the dual version of the Donsker and Varadhan lemma) once again. 
For a fixed training set $s$, in~\eqref{eq:gvp}, let the measurable space be $\cW$, the measurable function be $n f(\emprisk(w,s), \poprisk(w))$ and $\cP_\bQ(\cW)$ be the set of all probability measures $\bP_W^{S=s}$ such that $\bP_W^{S=s} \ll \bQ$. Then, 
\begin{align}
    \log \bE^{S=s} &\mleft[ e^{n f(\emprisk(W',s), \poprisk(W'))} \mright] = \nonumber \\
    \label{eq:germain_intermediate}
    &\sup_{\bP_W^{S=s} \in \cP_\bQ(\cW)} \mleft \{ n \bE^{S=s} \mleft[ n f(\emprisk(W,s), \poprisk(w)) \mright] - \relent(\bP_W^{S=s} \Vert \bQ) \mright\},
\end{align}

This equation becomes the result from~\Cref{th:germain_convex_pac_bayes} after two considerations.
First, since the equation holds for all $s \in \cZ^n$, then it holds for a random training set $S$ almost surely. Therefore, we may apply Markov's inequality~\eqref{eq:markov} to the term inside the logarithm of the left-hand-side of~\eqref{eq:germain_intermediate}. More precisely, let $X = \bE^S [ \exp(n f(\emprisk(W',S), \poprisk(W')))]$ and $t = \nicefrac{1}{\beta} \cdot \bE [ \exp(n f(\emprisk(W',S), \poprisk(W')))]$, then
\begin{equation*}
    \bP \mleft[ \bE^S \mleft[e^{n f(\emprisk(W',S), \poprisk(W'))} \mright] \geq \frac{1}{\beta} \cdot \bE \mleft[e^{n f(\emprisk(W',S), \poprisk(W'))} \mright]\mright] \leq \beta.
\end{equation*}
In this way, we have that with probability no larger than $\beta$
\begin{align*}
    \sup_{\bP_W^{S=s} \in \cP_\bQ(\cW)} \Big \{ n \bE^{S=s} \big[ n f(\emprisk(W,s), \poprisk(w)) &\big] - \relent(\bP_W^{S=s} \Vert \bQ) \Big\} \\
    &\geq 
    \log \frac{1}{\beta} + \log \bE \Big[e^{n f \big(  \emprisk(W',S) , \poprisk(W') \big)} \Big]. 
\end{align*}
In other words, there exists a posterior $\bP_W^S \in \cP_\bQ(\cW)$ a.s. such that, with probability no larger than $\beta$
\begin{align*}
     n \bE^{S=s} \mleft[ n f(\emprisk(W,s), \poprisk(w)) \mright] &- \relent(\bP_W^{S=s} \Vert \bQ) \\
     &\geq 
    \log \frac{1}{\beta} + \log \bE \Big[e^{n f \big(  \emprisk(W',S) , \poprisk(W') \big)} \Big]. 
\end{align*}

Second, we may include the expectation with respect to $\bP_W^S$ inside of the convex function $f$ maintaining the inequality due to Jensen's inequality. Finally, we may re-arrange the equation to prove~\Cref{th:germain_convex_pac_bayes}. Moreover, by the convention that $\relent(\bP_W^S \Vert \bQ) \to \infty$ when $\bP_W^S \not \ll \bQ$, the bound holds trivially in that case.

This general-purpose bound is useful as an appropriate choice of the convex function $f$ can be used to recover different PAC-Bayesian bounds. For example, considering $f(\hat{r},r) = \frac{(2n-1)}{n} \cdot (\hat{r}-r)^2$ recovers~\citet{mcallester2003pac}'s bound from~\Cref{th:mcallester} up to constants.\footnote{It is often mentioned in the literature that one can recover~\Cref{th:mcallester} exactly by choosing $f(\hat{r},r) = 2 (\hat{r}-r)^2$. However, this does not allow us to use \citet{mcallester2003pac}'s proof technique.} Similarly, choosing $f(\hat{r},r)= \relentber(\hat{r} \Vert r)$ for $(\hat{r},r) \in [0,1]^2$ recovers the improved Seeger--Langford bound~\citep{langford2001bounds, seeger2002pac, maurer2004note} and choosing  $f(\hat{r},r) = - \log \big(1 - r(1-e^{\nicefrac{-\lambda}{n}})\big) - \nicefrac{\lambda \hat{r}}{n}$ for $(\hat{r},r) \in [0,1]^2$ recovers \citet[Theorem 1.2.6]{catoni2007pac}'s bound. 

\begin{theorem}[{Improved Seeger--Langford bound~\citep{langford2001bounds,seeger2002pac,maurer2004note}}]
\label{th:seeger_langford_pac_bayes}
    Consider a loss with a range bounded in $[0,1]$ and let $\bQ$ be any prior independent of $S$. Then, for every $\beta \in (0,1)$, with probability no smaller than $1-\beta$
    \begin{equation*}
        \label{eq:seeger_langford_pac_bayes}
        \relentber \big( \bE^S [\emprisk(W,S)] \Vert \bE^S [\poprisk(W)] \big) \leq \frac{\relent(\bP_W^S \Vert \bQ) + \log \frac{\xi(n)}{\beta}}{n}
    \end{equation*}
    holds \emph{simultaneously} for all posteriors $\bP_W^S$.
\end{theorem}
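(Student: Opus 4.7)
The plan is to instantiate the extension of Germain et al.'s convex function bound (Theorem 4.3) with the specific choice $f(\hat{r}, r) = \relentber(\hat{r} \Vert r)$ on $[0,1]^2$. Two facts need to be verified to make this substitution legitimate. First, $\relentber$ must be jointly convex on $[0,1]^2$; this is standard and follows because $\relentber(\hat{r} \Vert r) = \relent(\mathrm{Ber}(\hat{r}) \Vert \mathrm{Ber}(r))$ and the relative entropy (and more generally every $f$-divergence) is jointly convex in its arguments, as already recorded in Proposition 2.6. Joint convexity ensures that, after applying Theorem 4.3, Jensen's inequality lets us pull the posterior expectation $\bE^S[\cdot]$ through both slots of $\relentber$ to obtain the left-hand side of the desired statement.

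Second, I need to control the exponential moment
\[
\bE \Bigl[\exp \Bigl( n \relentber \bigl(\emprisk(W',S)\, \big\Vert\, \poprisk(W')\bigr) \Bigr)\Bigr],
\]
where $W' \sim \bQ$ is independent of $S$. Here I would condition on $W' = w$ first and use the classical Maurer bound (the same result already invoked in Proposition 4.6 to obtain the small-kl inequality), which states that for any hypothesis $w$ with loss in $[0,1]$,
\[
\bE \Bigl[\exp \Bigl( n \relentber \bigl(\emprisk(w,S)\, \big\Vert\, \poprisk(w)\bigr) \Bigr)\Bigr] \leq \xi(n),
\]
with $\xi(n) \in [\sqrt{n}, \sqrt{2n+2}]$ as noted earlier in the text. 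Since this bound is uniform in $w$, Tonelli's theorem gives the same bound after averaging over $W' \sim \bQ$. This is the one place where the $[0,1]$ range assumption is genuinely used, and it is the main (and really only) technical obstacle, since the rest is essentially algebraic.

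With these two ingredients in place, the proof is immediate: substituting $f = \relentber$ into Theorem 4.3 yields, with probability at least $1-\beta$ simultaneously over all posteriors $\bP_W^S$,
\[
\relentber \bigl( \bE^S[\emprisk(W,S)] \,\big\Vert\, \bE^S[\poprisk(W)] \bigr) \leq \frac{1}{n}\Bigl( \relent(\bP_W^S \Vert \bQ) + \log \tfrac{1}{\beta} + \log \xi(n) \Bigr),
\]
where joint convexity of $\relentber$ justifies moving the $\bE^S$ inside on the left, and Maurer's bound justifies replacing $\log \bE[e^{n \relentber(\emprisk(W',S) \Vert \poprisk(W'))}]$ by $\log \xi(n)$ on the right. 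Combining the two logarithmic terms into $\log(\xi(n)/\beta)$ gives the claimed inequality. The case $\bP_W^S \not\ll \bQ$ is handled by the convention $\relent(\bP_W^S \Vert \bQ) \to \infty$, which makes the inequality trivial, and the ``simultaneously over all posteriors'' quantification is inherited directly from Theorem 4.3.
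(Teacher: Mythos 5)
Your proposal is correct and follows essentially the same route as the paper: the theorem is obtained as a direct instantiation of the extended Germain-style convex-function bound (\Cref{th:germain_convex_pac_bayes}) with $f(\hat{r},r)=\relentber(\hat{r}\Vert r)$, with the exponential moment controlled uniformly in $w$ by Maurer's bound $\bE\big[e^{n\relentber(\emprisk(w,S)\Vert\poprisk(w))}\big]\leq\xi(n)$ and the non-absolutely-continuous case handled by the convention $\relent(\bP_W^S\Vert\bQ)\to\infty$. Your additional remarks on joint convexity of $\relentber$ and on Tonelli's theorem for averaging over $W'\sim\bQ$ are exactly the (routine) checks the paper leaves implicit.
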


\begin{theorem}[{\citet[Theorem 1.2.6]{catoni2007pac}}]
\label{th:catoni_pac_bayes}
    Consider a loss with a range bounded in $[0,1]$ and let $\bQ$ be any prior independent of $S$. Then, for every $\lambda > 0$ and every $\beta \in (0,1)$, with probability no smaller than $1-\beta$
    \begin{equation*}
        \label{eq:catoni_pac_bayes}
        \bE^S \big[ \poprisk(W) \big] \leq \frac{1}{1 - e^{- \frac{\lambda}{n}}} \Bigg( 1 - e^{- \frac{\lambda \bE^S \big[ \emprisk(W,S) \big]}{n}   - \frac{\relent(\bP_W^S \Vert \bQ) + \log \frac{1}{\beta}}{n}} \Bigg).
    \end{equation*}
    holds \emph{simultaneously} for all posteriors $\bP_W^S$.
\end{theorem}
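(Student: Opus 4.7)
The plan is to instantiate the convex-function PAC-Bayes bound of \Cref{th:germain_convex_pac_bayes} with the specific choice
\begin{equation*}
f(\hat r, r) \coloneqq -\log\bigl(1 - r(1-e^{-\lambda/n})\bigr) - \frac{\lambda \hat r}{n}, \qquad (\hat r, r) \in [0,1]^2.
\end{equation*}
First I would verify that $f$ is convex on $[0,1]^2$: it splits as a linear term in $\hat r$ plus a function of $r$ alone; writing $c = 1 - e^{-\lambda/n} \in (0,1)$, the map $r \mapsto -\log(1 - cr)$ has second derivative $c^2/(1-cr)^2 > 0$ on $[0,1]$, so the sum is convex.

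The heart of the proof is evaluating the exponential moment that appears in \Cref{th:germain_convex_pac_bayes}. With the above $f$,
\begin{equation*}
e^{n f(\hat r, r)} = \bigl(1 - r(1-e^{-\lambda/n})\bigr)^{-n}\, e^{-\lambda \hat r}.
\end{equation*}
Let $W' \sim \bQ$ be independent of $S$. For fixed $W' = w$, since the $Z_i$ are i.i.d.\ and $\emprisk(w,S) = \frac{1}{n}\sum_{i=1}^n \ell(w,Z_i)$, one has $\bE_S\bigl[e^{-\lambda \emprisk(w,S)}\bigr] = \bigl(\bE\bigl[e^{-(\lambda/n)\ell(w,Z)}\bigr]\bigr)^n$. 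Now I would apply \Cref{lemma:convex_dominated_by_bernoulli}: the function $x \mapsto e^{-(\lambda/n)x}$ is convex, and $\ell(w,Z) \in [0,1]$, so its expectation is dominated by that of a Bernoulli of matching mean $\poprisk(w)$, yielding
\begin{equation*}
\bE\bigl[e^{-(\lambda/n)\ell(w,Z)}\bigr] \leq 1 - \poprisk(w)\bigl(1-e^{-\lambda/n}\bigr).
\end{equation*}
This upper bound exactly cancels the divergent factor $\bigl(1 - \poprisk(w)(1-e^{-\lambda/n})\bigr)^{-n}$ in $e^{n f(\emprisk(w,S),\poprisk(w))}$, so after taking the outer expectation over $\bQ$ I obtain $\bE\bigl[e^{n f(\emprisk(W',S),\poprisk(W'))}\bigr] \leq 1$, making the log-moment term in \Cref{th:germain_convex_pac_bayes} nonpositive.

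Plugging this into \Cref{th:germain_convex_pac_bayes}, with probability at least $1-\beta$, simultaneously for all posteriors $\bP_W^S$,
\begin{equation*}
-\log\!\Bigl(1 - \bE^S[\poprisk(W)]\bigl(1-e^{-\lambda/n}\bigr)\Bigr) - \frac{\lambda\, \bE^S[\emprisk(W,S)]}{n} \leq \frac{\relent(\bP_W^S \Vert \bQ) + \log\frac{1}{\beta}}{n},
\end{equation*}
where I used Jensen's inequality (justified by convexity of $f$) to push the expectation over $\bP_W^S$ inside $f$. Rearranging, negating and exponentiating, and isolating $\bE^S[\poprisk(W)]$ (using that $1 - e^{-\lambda/n} > 0$) yields the stated bound. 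The only nontrivial step in the plan is the exponential-moment calculation; the rest is the convexity check, Jensen, and bookkeeping. The main obstacle is recognizing the right $f$ to cancel the Bernoulli-MGF upper bound exactly, which is precisely what makes this choice of $f$ (and hence Catoni's bound) special.
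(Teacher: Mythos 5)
Your proposal is correct and follows essentially the same route as the paper: the paper obtains \Cref{th:catoni_pac_bayes} precisely by instantiating \Cref{th:germain_convex_pac_bayes} with $f(\hat{r},r) = -\log\big(1 - r(1-e^{-\nicefrac{\lambda}{n}})\big) - \nicefrac{\lambda \hat{r}}{n}$, exactly your choice. The only detail the paper leaves implicit — bounding the exponential moment by $1$ via i.i.d.\ factorization and the Bernoulli domination of \Cref{lemma:convex_dominated_by_bernoulli} — is exactly what you supply, so your write-up is a faithful completion of the intended argument.
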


Recall~\Cref{prop:small_kl}. The Seeger--Langford bound~\citep{seeger2002pac,langford2001bounds} is a PAC-Bayes analogue of the \emph{small-kl} concentration inequality. However, it is hindered by its lack of interpretability. Moreover, it is difficult to optimize the posterior to minimize the bound and find an appropriate posterior $\bP_{W}^{S}$. This is due to the non-convexity of the bound with respect to the posterior $\bP_W^S$~\citep{thiemann2017strongly}, as well as the fact that it cannot be expressed explicitly as a function of the empirical risk $\bE^S \big[ \emprisk(W,S) \big]$ and the
dependency term $\relent(\bP_W^S \Vert \bQ)$~\citep{germain2009pac}. On the other hand, while \citet{catoni2007pac}'s bound is minimized by the Gibbs posterior  $\sfp_W^S(w) \propto \sfq(w) e^{- \lambda \bE^S \big[ \emprisk(w,S) \big]}$, it still lacks interpretability and depends on an arbitrary parameter $\lambda$ that has to be chosen \emph{before} the draw of the data.

To remedy these issues, several works relax the Seeger--Langford bound~\citep{langford2001bounds,seeger2002pac} using lower bounds on the relative entropy~\citep{tolstikhin2013pac,thiemann2017strongly,rivasplata2019pac}. For example, one may recover~\citet{mcallester2003pac}'s bound from~\Cref{th:mcallester} exactly employing Pinsker's inequality (\Cref{lemma:pinsker-inequality}). Then, as discussed in~\Cref{subsec:bounded_losses_classical}, one may instead use relaxations of the stronger \citet{marton1996measure}'s bound like \citep[Corollaries 2.19 and 2.20]{seldinNotes}). \citet{tolstikhin2013pac} use \citep[Corollary 2.20]{seldinNotes} and \citet{thiemann2017strongly} and \citet{rivasplata2019pac} use \citep[Corollary 2.19]{seldinNotes}. This relaxation results in an intractable PAC-Bayes bound and, for this reason, \citet{thiemann2017strongly} relax it further using the inequality $\sqrt{xy} \leq \frac{1}{2}(\lambda x + \nicefrac{y}{\lambda})$ for all $\lambda > 0$ to obtain a \emph{fast-rate} bound, and \citet{rivasplata2019pac} solve the resulting quadratic inequality for $\sqrt{\bE^S \big[ \poprisk(W) \big]}$ to obtain a \emph{mixed-rate} bound.

\begin{theorem}[{\citet[Theorem 3]{thiemann2017strongly}'s fast-rate bound}]
\label{th:thiemann_pac_bayes}
    Consider a loss function with a range bounded in $[0,1]$ and let $\bQ$ be any prior independent of $S$. Then, for every $\beta \in (0,1)$, with probability no smaller than $1-\beta$
    \begin{equation*}
        \label{eq:thiemann_pac_bayes}
        \bE^S \big[ \poprisk(W) \big] \leq \inf_{\lambda \in (0,2)} \Bigg \{ \frac{\bE^S \big[ \emprisk(W,S) \big]}{1 - \frac{\lambda}{2}} + \frac{\relent(\bP_W^S \Vert \bQ) + \log \frac{\xi(n)}{\beta}}{n \lambda (1- \frac{\lambda}{2})} \Bigg \}.
    \end{equation*}
    holds \emph{simultaneously} for all posteriors $\bP_W^S$.
\end{theorem}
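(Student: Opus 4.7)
The plan is to derive the bound as a two-step relaxation of the Seeger--Langford bound (Theorem~\ref{th:seeger_langford_pac_bayes}), exactly along the route sketched in the paragraph preceding the statement. For brevity I would write $\hat{R} \coloneqq \bE^S[\emprisk(W,S)]$, $R \coloneqq \bE^S[\poprisk(W)]$, and $K \coloneqq \relent(\bP_W^S \Vert \bQ) + \log(\xi(n)/\beta)$. Seeger--Langford then says: with probability at least $1 - \beta$, $\relentber(\hat{R} \Vert R) \leq K/n$ holds simultaneously for every posterior $\bP_W^S$. I would then work on the event where that inequality holds and derive the target bound deterministically from it, so that both the probability statement and the ``simultaneously for all posteriors'' qualifier are inherited for free, with no additional union bound required.

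The first step is to apply the refined Pinsker-type lower bound on $\relentber$ from \citep[Corollary~2.19]{seldinNotes} (a relaxation of Marton's inequality), which in the regime $\hat{R} \leq R$ gives $\relentber(\hat{R} \Vert R) \geq (R - \hat{R})^2/(2R)$. Combined with Seeger--Langford this yields $(R - \hat{R})^2 \leq 2RK/n$, hence
\begin{equation*}
R \leq \hat{R} + \sqrt{2RK/n}.
\end{equation*}
The complementary case $\hat{R} > R$ makes the claim trivial, since for every $\lambda \in (0,2)$ the right-hand side of the target inequality already dominates $\hat{R}/(1 - \lambda/2) \geq \hat{R} > R$.

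The second step is to linearize the square root via the AM--GM-type inequality $\sqrt{xy} \leq \tfrac{1}{2}(\lambda x + y/\lambda)$, applied with $x = R$ and $y = 2K/n$, giving $\sqrt{2RK/n} \leq \lambda R/2 + K/(n\lambda)$ for every $\lambda > 0$. Substituting back produces $(1 - \lambda/2) R \leq \hat{R} + K/(n\lambda)$, and restricting to $\lambda \in (0,2)$ so that the coefficient on the left is strictly positive, a rearrangement delivers the claimed inequality for each fixed $\lambda$. Because this derivation is deterministic in $\lambda$, the infimum over $\lambda \in (0,2)$ can then be moved inside the right-hand side without any further probabilistic cost.

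The main obstacle I anticipate is nothing more than correctly invoking the refined Marton-type lower bound on $\relentber$ in the form stated above; once that is in hand the remaining manipulations are elementary algebra. A subtler bookkeeping point, which I would flag explicitly, is that the ``simultaneously for all $\bP_W^S$ and all $\lambda \in (0,2)$'' claim is legitimate precisely because $\lambda$ is a free deterministic parameter chosen \emph{after} conditioning on the single probabilistic event supplied by Theorem~\ref{th:seeger_langford_pac_bayes}, so no extra union bound over $\lambda$ or over posteriors is incurred.
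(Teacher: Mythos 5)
Your proposal is correct and follows essentially the same route the paper sketches for this result: relax the Seeger--Langford bound via the refined Pinsker/Marton-type lower bound $\relentber(\hat{R} \Vert R) \geq (R-\hat{R})^2/(2R)$ (i.e.\ \citep[Corollary~2.19]{seldinNotes}), then linearize with $\sqrt{xy} \leq \tfrac{1}{2}(\lambda x + y/\lambda)$ and rearrange for each $\lambda \in (0,2)$. Your handling of the trivial case $\hat{R} > R$ and the observation that no extra union bound over $\lambda$ or posteriors is needed are both correct.
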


\begin{theorem}[{\citet[Theorem 1]{rivasplata2019pac}'s mixed-rate bound}]
\label{th:rivasplata_pac_bayes}
    Consider a loss function with a range bounded in $[0,1]$ and let $\bQ$ be any prior independent of $S$. Then, for every  $\beta \in (0,1)$, with probability no smaller than $1-\beta$
    \begin{align*}
        \label{eq:rivasplata_pac_bayes}
        \bE^S \big[ &\poprisk(W) \big] \leq \bE^S \big[ \emprisk(W,S) \big] + \frac{\relent(\bP_W^S \Vert \bQ) + \log \frac{\xi(n)}{\beta}}{n} \nonumber \\ &+ \sqrt{2 \bE^S \big[ \emprisk(W,S) \big] \cdot \frac{\relent(\bP_W^S \Vert \bQ) + \log \frac{\xi(n)}{\beta}}{n } + \Bigg(\frac{\relent(\bP_W^S \Vert \bQ) + \log \frac{\xi(n)}{\beta}}{n } \Bigg)^2}.
    \end{align*}
    holds \emph{simultaneously} for all posteriors $\bP_W^S$.
\end{theorem}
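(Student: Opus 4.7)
My plan is to derive this bound by combining the Seeger--Langford bound (\Cref{th:seeger_langford_pac_bayes}) with a refined lower bound on the binary relative entropy, and then solving the resulting quadratic inequality, following the hint in the text that says ``\citet{rivasplata2019pac} solve the resulting quadratic inequality for $\sqrt{\bE^S[\poprisk(W)]}$.''

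To set notation for the proof, let $\hat{r} := \bE^S[\emprisk(W,S)]$, $r := \bE^S[\poprisk(W)]$, and $c := \nicefrac{(\relent(\bP_W^S \Vert \bQ) + \log(\xi(n)/\beta))}{n}$. The Seeger--Langford bound (\Cref{th:seeger_langford_pac_bayes}) provides that, with probability at least $1-\beta$, the inequality $\relentber(\hat{r} \Vert r) \leq c$ holds simultaneously for all posteriors $\bP_W^S$. Since the claimed bound is vacuous when $r \leq \hat{r}$ (the right-hand side is then at least $\hat{r}$, and in fact the extra additive $c$ plus the square root are nonneg), I can restrict attention to the case $\hat{r} \leq r$, where the bound is actually informative.

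The key analytical step is to apply the refined Pinsker-type inequality (\citep[Corollary 2.19]{seldinNotes}) in the form
\begin{equation*}
    \relentber(\hat{r} \Vert r) \geq \frac{(r-\hat{r})^2}{2\max(\hat{r},r)} = \frac{(r-\hat{r})^2}{2r},
\end{equation*}
which is valid precisely in the regime $\hat{r} \leq r$. Combining this with the Seeger--Langford inequality yields $(r-\hat{r})^2 \leq 2rc$, which, after expansion, becomes the quadratic inequality in $r$:
\begin{equation*}
    r^2 - 2(\hat{r}+c)\,r + \hat{r}^2 \leq 0.
\end{equation*}

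Finally, I would solve this quadratic in $r$ using the standard formula. The discriminant equals $4[(\hat{r}+c)^2 - \hat{r}^2] = 4(2\hat{r}c + c^2)$, so the inequality forces $r$ to lie between the two roots, and in particular
\begin{equation*}
    r \leq (\hat{r}+c) + \sqrt{2\hat{r}c + c^2},
\end{equation*}
which is exactly the statement of the theorem after substituting back the definitions of $\hat{r}$, $r$, and $c$.

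The main obstacle is not computational but rather conceptual: one must pick the correct refined lower bound on $\relentber$ (among Pinsker, Bretagnolle--Huber, the full Marton inequality, and its weakened corollaries) so that the resulting inequality is both tractable and tight enough to give a \emph{mixed} rate that interpolates between slow and fast regimes. Pinsker alone would give only the slow rate of \Cref{th:mcallester}, while the Thiemann variant (\Cref{th:thiemann_pac_bayes}) comes from relaxing $\sqrt{xy}$ via the AM-GM trick, losing the interpolation property. The form $\nicefrac{(r-\hat{r})^2}{(2r)}$ is what yields a genuine quadratic in $r$ whose solution naturally produces both the additive $c$ term (responsible for the fast rate when $\hat{r}$ is small) and the $\sqrt{2\hat{r}c}$ term (responsible for the slow rate when $\hat{r}$ is large), giving the interpolating behavior that defines this bound.
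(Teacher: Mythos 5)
Your proposal is correct and follows essentially the same route the paper attributes to \citet{rivasplata2019pac}: combine the Seeger--Langford bound with the relaxed Marton inequality \citep[Corollary 2.19]{seldinNotes} and solve the resulting quadratic. Solving the quadratic in $\bE^S[\poprisk(W)]$ rather than in $\sqrt{\bE^S[\poprisk(W)]}$ is only a cosmetic difference, as both yield the identical closed form $\hat{r} + c + \sqrt{2\hat{r}c + c^2}$.
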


Originally, \citet{rivasplata2019pac} present their bound in a different form, but this \emph{mixed-rate} form shows explicitly the combination of a \emph{fast-rate} term and an \emph{amortized slow-rate} term. Moreover, this form makes it easy to see that the bound is tighter than \citep[Equation (3)]{tolstikhin2013pac}, as their bound can be recovered using the inequality $\sqrt{x + y} \leq \sqrt{x} + \sqrt{y}$ on~\Cref{th:rivasplata_pac_bayes}.

In \citep{rodriguez2023morepac}, we noted that a strengthened version of \citet{catoni2007pac}'s bound that holds \emph{simultaneously} for all $\lambda > 0$ can be obtained from the Seeger--Langford~\citep{langford2001bounds,seeger2002pac} bound at the small cost of $\log \xi(n)$ in the complexity term. The proof follows applying the Donsker and Varadhan variational representation of the relative entropy from~\Cref{lemma:dv_and_gvp} to the binary relative entropy $\relentber$ exactly like we did to prove~\Cref{th:catoni_mi} in~\eqref{eq:relentber_dv_before_change_variable} of~\Cref{subsec:a_fast_rate_bound_mi}. This was also observed by~\citet[Proposition 2.1]{germain2009pac} and proved with different techniques to ours in~\citep[Chapter 20]{catoni2015pac} and~\citep[Lemmata E1 and E2]{foong2021tight}, although it was not stated explicitly as a PAC-Bayesian bound.

\begin{theorem}
\label{th:catoni_pac_bayes_uniform}
    Consider a loss function with a range bounded in $[0,1]$ and let $\bQ$ be any prior independent of $S$. Then, for every $\beta \in (0,1)$, with probability no smaller than $1-\beta$
    \begin{equation*}
        \bE^S \big[ \poprisk(W) \big] \leq \frac{1}{1 - e^{- \frac{\lambda}{n}}} \Bigg[1 - e^{- \frac{\lambda \bE^S \big[ \emprisk(W,S) \big]}{n}   - \frac{\relent(\bP_W^S \Vert \bQ) + \log \frac{\xi(n)}{\beta}}{n}} \Bigg] 
    \end{equation*}
    holds \emph{simultaneously} for all posteriors $\bP_W^S$ and all $\lambda > 0$.
\end{theorem}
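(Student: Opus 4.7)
The plan is to derive this bound directly from the improved Seeger--Langford bound (Theorem 4.3) by applying the Donsker--Varadhan variational representation of the relative entropy to the binary relative entropy $\relentber$, following the same algebraic route already used to establish Theorem 4.5 from Lemma 4.1 in the mutual information chapter. The crucial observation is that the Seeger--Langford bound is a single probabilistic statement holding simultaneously for all posteriors with probability at least $1-\beta$. Since any deterministic manipulation of its right-hand side preserves both the probability and the simultaneity, any parameter introduced \emph{after} invoking this probabilistic statement will be free to range over its entire domain.

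Concretely, I would start by recalling that for $\hat r, r \in [0,1]$, treating $\mathrm{Ber}(\hat r)$ and $\mathrm{Ber}(r)$ as distributions on $\{0,1\}$ and applying the Donsker--Varadhan formula (Lemma 2.6) with a two-value test function $g_0 = g(0)$, $g_1 = g(1)$ yields, as computed in equation (2.30) of Section 3.1.2,
\begin{equation*}
    \relentber(\hat r \Vert r) \geq -(g_0-g_1)\hat r - \log\bigl(1 - r(1 - e^{-(g_0-g_1)})\bigr).
\end{equation*}
For this to be informative we need $g_0 \geq g_1$, so I would reparametrize with $\lambda \coloneqq n(g_0-g_1) \geq 0$. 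Substituting $\hat r = \bE^S[\emprisk(W,S)]$ and $r = \bE^S[\poprisk(W)]$ and chaining with Theorem 4.3 gives
\begin{equation*}
    -\tfrac{\lambda}{n}\bE^S[\emprisk(W,S)] - \log\Bigl(1 - \bE^S[\poprisk(W)]\bigl(1 - e^{-\lambda/n}\bigr)\Bigr) \leq \tfrac{\relent(\bP_W^S \Vert \bQ) + \log\tfrac{\xi(n)}{\beta}}{n}.
\end{equation*}
Solving this inequality for $\bE^S[\poprisk(W)]$ --- moving the logarithm to the right, exponentiating, and isolating the population risk --- produces exactly the desired expression.

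The key point justifying the ``simultaneously for all $\lambda > 0$'' conclusion is that the parameter $\lambda$ (equivalently, the pair $(g_0,g_1)$) is introduced only through the \emph{deterministic} variational lower bound on $\relentber$; it does not enter into the Markov-inequality step that produced the probabilistic $1-\beta$ event in Theorem 4.3. Consequently, the inequality above holds on the same single event, for every choice of $\lambda > 0$ and every posterior $\bP_W^S$. I would not expect any genuine obstacle in this derivation --- it is essentially a transcription of the Theorem 4.5 argument into the PAC-Bayesian language, with $\minf(W;S)$ replaced by $\relent(\bP_W^S \Vert \bQ) + \log \xi(n)/\beta$ and expectations taken with respect to the posterior rather than the marginal. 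The only care needed is in the algebraic rearrangement, checking that the denominator $1 - e^{-\lambda/n}$ is positive for $\lambda > 0$ so that the final inequality is well-posed and preserves direction.
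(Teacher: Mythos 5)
Your proposal is correct and follows essentially the same route as the paper: the paper's proof likewise starts from the Seeger--Langford bound and applies the Donsker--Varadhan variational representation to $\relentber$ with the reparametrization $\lambda = n(g_0-g_1)$, exactly mirroring the derivation of \Cref{th:catoni_mi} from \Cref{lemma:small_kl_mi}. Your justification that $\lambda$ enters only through a deterministic lower bound on $\relentber$ after the $1-\beta$ event is fixed is precisely the reason the bound holds simultaneously for all $\lambda>0$ and all posteriors.
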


Note that the fact that the bound holds \emph{simultaneously} for all $\lambda > 0$ implies that the bound can be equivalently written as: for all $\beta \in (0,1)$, with probability no smaller than $1 - \beta$
\begin{equation*}
    \bE^S \big[ \poprisk(W) \big] \leq  \inf_{\lambda > 0} \mleft \{ \frac{1}{1 - e^{- \frac{\lambda}{n}}} \Bigg[1 - e^{- \frac{\lambda \bE^S \big[ \emprisk(W,S) \big]}{n}   - \frac{\relent(\bP_W^S \Vert \bQ) + \log \frac{\xi(n)}{\beta}}{n}} \Bigg] \mright\}
\end{equation*}
holds \emph{simultaneously} for all posteriors $\bP_W^S$.
This bound is an explicit expression of the Seeger--Langford bound~\citep{langford2001bounds,seeger2002pac} in terms of $\bE^S \big[ \emprisk(W,S) \big]$ and $\relent(\bP_W^S \Vert \bQ)$. Compared to \citet{catoni2007pac}'s \Cref{th:catoni_pac_bayes}, this bound holds \emph{simultaneously} for all $\lambda>0$, making it useful for finding numerical population risk certificates without the need to pay an extra price for the parameter search. It also allows for an iterative procedure for obtaining a good posterior by updating the posterior $\bP_W^S$ and parameter $\lambda$ alternately. We note that, contrary to the statement from the Seeger--Langford bound in \Cref{th:seeger_langford_pac_bayes}, this statement tells us that the optimal posterior is given by the Gibbs distribution $\sfp_W^S(w) \propto \sfq(w) \cdot e^{- \lambda \bE^S \big[ \emprisk(w,S) \big]}$. However, finding the global optimum for the parameter $\lambda$ is tedious, and the function is not convex in that parameter.

Again, as we did to prove the fast-rate bound from~\Cref{th:mi_bound_bounded_fast_rate} of~\Cref{subsec:a_fast_rate_bound_mi}, we may apply the variational representation of the relative entropy borrowed from $f$-divergences (\Cref{cor:variational_representation_relative_entropy}) using~\eqref{eq:relentber_f_div_after_change_variable} to prove a PAC-Bayes analogue to the fast-rate bound we presented in~\Cref{prop:classical_fast_rate}. Similarly, we may follow the steps to show that~\Cref{th:catoni_mi} and~\Cref{th:mi_bound_bounded_fast_rate} are equivalent. Namely, let  $\lambda = n \log (\nicefrac{\gamma}{\gamma - 1})$ in~\Cref{th:catoni_pac_bayes_uniform}, which implies that $\gamma > 1$. Then, note that the function $1 - e^{-x}$ is a non-decreasing, concave, continuous function for $x > 0$ and therefore it can be upper bounded by its envelope, that is $1 - e^{-x} = \inf_{a > 0} \{ e^{-a} x + 1 - e^a (1+a) \}$. Using this envelope in the equation of~\Cref{th:catoni_pac_bayes_uniform} and letting $c = e^{-a} \in (0,1]$ results in the following theorem.

\begin{theorem}[{Fast-rate bound}]
\label{th:fast_rate_bound_strong}
    Consider a loss function with a range bounded in  $[0,1]$ and let $\bQ$ be any prior independent of $S$. Then, for every $\beta \in (0,1)$, with probability no smaller than $1-\beta$
    \begin{equation*}
        \bE^S \big[ \poprisk(W) \big] \leq c \gamma \log \Big(\frac{\gamma}{\gamma - 1} \Big) \cdot \bE^S \big[ \emprisk(W,S) \big] + c \gamma \cdot \frac{\relent(\bP_W^S \Vert \bQ) + \log \frac{\xi(n)}{\beta}}{n} + \kappa(c) \gamma,
    \end{equation*}
    holds \emph{simultaneously} for all posteriors $\bP_W^S$, all $\gamma > 1$, and all $c \in (0,1]$, where $\kappa(c) \coloneqq 1 - c(1 - \log c) $.
\end{theorem}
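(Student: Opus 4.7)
The plan is to derive this fast-rate bound directly from the strengthened Seeger--Langford form in~\Cref{th:catoni_pac_bayes_uniform}, exactly mirroring the passage from~\Cref{th:catoni_mi} to~\Cref{th:mi_bound_bounded_fast_rate} in the in-expectation analysis of~\Cref{subsec:a_fast_rate_bound_mi}. All the PAC-Bayesian content --- the Markov inequality on the Gibbs-type moment and the Donsker--Varadhan identity applied to $\relentber$ --- is already packaged into~\Cref{th:catoni_pac_bayes_uniform}; what remains is a deterministic, purely algebraic rewriting that preserves the ``for all posteriors'' quantifier and introduces two new parameters.

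First, I would reparametrize the free multiplier $\lambda>0$ in~\Cref{th:catoni_pac_bayes_uniform} via $\lambda = n\log(\gamma/(\gamma-1))$. This is a bijection between $\lambda\in(0,\infty)$ and $\gamma\in(1,\infty)$, and a one-line computation gives $1-e^{-\lambda/n} = 1/\gamma$, so the prefactor in~\Cref{th:catoni_pac_bayes_uniform} collapses to $\gamma$. Setting
\begin{equation*}
  X \coloneqq \log\!\Bigl(\tfrac{\gamma}{\gamma-1}\Bigr)\,\bE^{S}[\emprisk(W,S)] + \tfrac{1}{n}\bigl(\relent(\bP_W^S\Vert\bQ) + \log(\xi(n)/\beta)\bigr) \ge 0,
\end{equation*}
\Cref{th:catoni_pac_bayes_uniform} reduces on the high-probability event to $\bE^{S}[\poprisk(W)] \le \gamma(1 - e^{-X})$, simultaneously in $\bP_W^S$ and $\gamma>1$.

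Next, I would linearize $1-e^{-X}$ using its concave envelope. Since $x\mapsto 1-e^{-x}$ is concave on $[0,\infty)$, it sits below each of its tangents: $1-e^{-x} \le e^{-a}x + 1 - e^{-a}(1+a)$ for every $a\ge 0$. The substitution $c=e^{-a}\in(0,1]$ turns the intercept into $1 - c + c\log c = 1 - c(1-\log c) = \kappa(c)$, yielding the deterministic inequality $1-e^{-X} \le cX + \kappa(c)$ for all $c\in(0,1]$. Multiplying by $\gamma$ and reinserting the definition of $X$ produces exactly the claimed bound.

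I do not anticipate a genuine obstacle: the tangent inequality is pointwise in $a$ (equivalently $c$) and so automatically preserves the ``simultaneously for all $\bP_W^S$, $\gamma$, $c$'' quantification on the single event furnished by~\Cref{th:catoni_pac_bayes_uniform}. The only verification worth spelling out is the identification $1-e^{-a}(1+a)=\kappa(c)$ under $c=e^{-a}$, which follows immediately from $a=-\log c$; the only care point is the endpoint $c=1$, which is recovered by letting $a\to 0^+$ in the envelope (giving the trivial relaxation $1-e^{-X}\le X$).
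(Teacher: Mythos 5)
Your proposal is correct and follows essentially the same route as the paper: the paper also derives~\Cref{th:fast_rate_bound_strong} from~\Cref{th:catoni_pac_bayes_uniform} by substituting $\lambda = n\log\big(\nicefrac{\gamma}{\gamma-1}\big)$ and then upper bounding $1-e^{-x}$ by its tangent envelope $e^{-a}x + 1 - e^{-a}(1+a)$ with $c = e^{-a}\in(0,1]$, which gives the intercept $\kappa(c)$. Your checks of the bijection $\lambda\leftrightarrow\gamma$, the identification $1-e^{-a}(1+a)=\kappa(c)$, and the preservation of the simultaneous quantification over $\bP_W^S$, $\gamma$, and $c$ are exactly the points the paper relies on.
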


\begin{remark}
    \label{rem:fast_rate_strong_bounded_b}
    Note how if the loss $\ell$ has a range bounded in $[0,b]$, as will be the case in~\Cref{subsec:losses_with_bounded_moment}, then $\tilde{\ell} = \nicefrac{\ell}{b}$ has a range bounded in $[0,1]$. Then, the bound from~\Cref{th:fast_rate_bound_strong} can be trivially extended to this case first bounding the population risk of the scaled loss and then multiplying both sides of the inequality with $b$. To be precise, the right-hand side of~\Cref{th:fast_rate_bound_strong} equation would be
    \begin{equation*}
        c \gamma \log \Big(\frac{\gamma}{\gamma - 1} \Big) \cdot \bE^S \big[ \emprisk(W,S) \big] + b c \gamma \cdot \frac{\relent(\bP_W^S \Vert \bQ) + \log \frac{\xi(n)}{\beta}}{n} + b \kappa(c) \gamma.
    \end{equation*}
\end{remark}

Therefore, the Seeger--Langford bound (\Cref{th:seeger_langford_pac_bayes}), the strengthened Catoni's bound (\Cref{th:catoni_pac_bayes_uniform}), and this fast-rate bound (\Cref{th:fast_rate_bound_strong}) are equally tight. This is important since it means that the Seeger--Langford bound~\citep{langford2001bounds, seeger2002pac} can be exactly described with a linear combination of the empirical risk and the complexity term, where the coefficients of this combination and the bias vary depending on the data realization. This could have been hypothesized by observing the derivatives of the Seeger--Langford bound~\citep{langford2001bounds,seeger2002pac} from \citet[Appendix A]{reeb2018learning}, and a proof is now available. Furthermore, the optimal posterior of this bound is given by the Gibbs distribution $\sfp_W^S(w) \propto \sfq(w) \cdot e^{-n \log \big(\frac{\gamma}{\gamma -1}\big) \emprisk(w,S)}$, where the value of $\gamma$ depends on the dataset realization $s$.

We recall the influence of the parameters $\gamma$ and $c$ in the fast-rate bound from~\Cref{th:mi_bound_bounded_fast_rate}, as the discussion applies analogously here. The parameter $\gamma$ controls the influence of the empirical risk compared to the normalized complexity: if the empirical risk is large relative to the normalized complexity, then $\gamma$ is larger and the normalized complexity coefficient increases; if instead, the empirical risk is small or even close to interpolation, then $\gamma$ is close to $1$ and the empirical risk coefficient increases. In particular, for a fixed value of $c$, the optimal value of $\gamma$ is
\begin{align*}
	\gamma &= 1 + \left[ -1 - \mathtt{W} \left( - \exp \left(-1 - \frac{c\cdot \frac{\relent(\bP_W^S \Vert \bQ) + \log \frac{\xi(n)}{\beta}}{n} + \kappa(c)}{c \cdot \bE \big[ \emprisk(W,S) \big]} \right) \right) \right]^{-1} \\
	&\approx 1 + \left[ \sqrt{2 \cdot \frac{c\cdot \frac{\relent(\bP_W^S \Vert \bQ) + \log \frac{\xi(n)}{\beta}}{n} + \kappa(c)}{c \cdot \bE \big[ \emprisk(W,S) \big]}} +  \frac{5}{6} \cdot \frac{c\cdot \frac{\relent(\bP_W^S \Vert \bQ) + \log \frac{\xi(n)}{\beta}}{n} + \kappa(c)}{c \cdot \bE \big[ \emprisk(W,S) \big]}\right]^{-1},
\end{align*}
\looseness=-1 where $\mathtt{W}$ is the Lambert W function and the $-1$ branch is approximated following~\citep{chatzigeorgiou2013bounds}.

The parameter $c \in (0,1]$ controls how much weight is given to the empirical risk and normalized complexity terms compared to a bias. For larger values of the empirical risk and the normalized complexity term, the value of $c$ is small, decreasing their contribution to the bound and increasing the contribution of the bias $\kappa(c) \in [0,1)$. If the empirical risk and the normalized complexity term are smaller, then the value of $c$ approaches $1$, where the contribution of these two terms is only controlled by $\gamma$ and the bias is $0$. In fact, a weaker version of~\Cref{th:fast_rate_bound_strong} can be obtained considering this small empirical risk and small normalized complexity regime by letting $c = 1$.

\begin{corollary}
\label{cor:fast_rate_bound}
    Consider a loss function with a range bounded in $[0,1]$ and let $\bQ$ be any prior independent of $S$. Then, for every $\beta \in (0,1)$, with probability no smaller than $1-\beta$
    \begin{equation*}
        \label{eq:fast_rate_bound}
        \bE^S \big[ \poprisk(W) \big] \leq \gamma \log \Big(\frac{\gamma}{\gamma - 1} \Big) \cdot \bE^S \big[ \poprisk(W,S) \big] + \gamma \cdot \frac{\relent(\bP_W^S \Vert \bQ) + \log \frac{\xi(n)}{\beta}}{n} 
    \end{equation*}
    holds \emph{simultaneously} for all posteriors $\bP_W^S$ and all $\gamma > 1$.
\end{corollary}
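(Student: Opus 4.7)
The corollary is a direct specialization of Theorem~\ref{th:fast_rate_bound_strong}, so the plan is to simply invoke that result and set $c=1$. Specifically, Theorem~\ref{th:fast_rate_bound_strong} states that, with probability at least $1-\beta$, the inequality
\begin{equation*}
    \bE^S \big[ \poprisk(W) \big] \leq c \gamma \log \Big(\tfrac{\gamma}{\gamma - 1} \Big) \cdot \bE^S \big[ \emprisk(W,S) \big] + c \gamma \cdot \tfrac{\relent(\bP_W^S \Vert \bQ) + \log \frac{\xi(n)}{\beta}}{n} + \kappa(c) \gamma
\end{equation*}
holds simultaneously for all posteriors $\bP_W^S$, all $\gamma > 1$, and all $c \in (0,1]$. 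Since the statement holds simultaneously over $c$, we are free to choose $c=1$ after the draw of $S$ without losing the uniform-in-$\gamma$ validity. The key observation is that $\kappa(1) = 1 - 1 \cdot (1 - \log 1) = 1 - 1 = 0$, so the bias term $\kappa(c)\gamma$ vanishes completely, leaving precisely the right-hand side of the corollary.

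Alternatively, one could re-derive the corollary directly without routing through Theorem~\ref{th:fast_rate_bound_strong}: start from the strengthened Catoni bound in Theorem~\ref{th:catoni_pac_bayes_uniform} and perform the change of variables $\lambda = n \log(\gamma/(\gamma-1))$, which ensures $\gamma > 1$. Then use the tangent-line upper bound $1 - e^{-x} \leq x$ at $x=0$ (that is, the envelope $1-e^{-x} = \inf_{a>0}\{ e^{-a} x + 1 - e^{-a}(1+a)\}$ specialized at $a \to 0$, giving $c=1$ and $\kappa(1)=0$) applied to the outer exponential. This immediately yields the stated linear combination of the empirical risk and the normalized complexity term, multiplied by $\gamma$ and $\gamma \log(\gamma/(\gamma-1))$ respectively.

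There is no real obstacle in this proof: all the analytic work has already been done in establishing Theorems~\ref{th:catoni_pac_bayes_uniform} and~\ref{th:fast_rate_bound_strong}. The only small point worth verifying explicitly is that setting $c=1$ is permissible, since $c \in (0,1]$ in the parent theorem, and that $\kappa$ is continuous at $1$ with $\kappa(1) = 0$. Both are immediate from the definition of $\kappa$ given in the statement of Theorem~\ref{th:fast_rate_bound_strong}.
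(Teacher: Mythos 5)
Your proposal is correct and matches the paper's own argument: the corollary is obtained from Theorem~\ref{th:fast_rate_bound_strong} by taking $c=1$ (which is allowed since $c\in(0,1]$), whereupon $\kappa(1)=0$ kills the bias term; the alternative route through Theorem~\ref{th:catoni_pac_bayes_uniform} with $\lambda = n\log(\gamma/(\gamma-1))$ and the bound $1-e^{-x}\leq x$ is just the same derivation chain unrolled. No gaps.
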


This bound improves upon \citet{thiemann2017strongly}'s \Cref{th:thiemann_pac_bayes} as it is tighter for all values of the empirical risk and the dependency measure (see \Cref{app:comparison_fast_rate_bounds}). For instance, the value $\lambda = 1$ minimizes the multiplicative factor in the complexity term in \Cref{th:thiemann_pac_bayes}. Letting $\gamma = 2$ in \Cref{cor:fast_rate_bound} matches this factor and improves the multiplicative factor of the empirical risk from 2 to $2 \log 2 \approx 1.38$. Moreover, if we are in the \emph{realizable setting} and $\bE^S \big[ \emprisk(W,S) \big] = 0$ (that is, we are using an empirical risk minimizer), then letting $\gamma \to 1^+$ in this bound reveals that the fast rate can be achieved with multiplicative factor $1$, clarifying that the complexity term completely characterizes the population risk in this regime. Note that this is neither clear in \citet{thiemann2017strongly}'s nor \citet{rivasplata2019pac}'s bounds, where the multiplicative factor is 2.

However, substituting the value of the optimal $\gamma$ into~\Cref{th:fast_rate_bound_strong} or~\Cref{cor:fast_rate_bound} does not produce an interpretable bound. Nonetheless, following the same steps that we used to obtain~\Cref{th:mi_bound_bounded_mixed_rate} from~\Cref{cor:mi_bound_bounded_fast_rate_weaker}, the bound in \Cref{cor:fast_rate_bound} can be further relaxed to obtain a parameter-free mixed-rate bound that is tighter than \citet{rivasplata2019pac}'s mixed-rate and \citet{thiemann2017strongly}'s fast-rate bounds (see \Cref{app:comparison_fast_rate_bounds}).

\begin{theorem}[{Mixed-rate bound}]
\label{th:mixed_rate_bound}
    Consider a loss function with a range bounded in $[0,1]$ and let $\bQ$ be any prior independent of $S$. Then, for every $\beta \in (0,1)$, with probability no smaller than $1-\beta$
    \begin{align*}
        \label{eq:mixed_rate_bound}
        \bE^S \big[ \poprisk(W) \big] &\leq  \bE^S \big[ \emprisk(W,S) \big] +  \frac{\relent(\bP_W^S \Vert \bQ) + \log \frac{\xi(n)}{\beta}}{n} \\
        &+ \sqrt{2 \bE^S \big[ \emprisk(W,S) \big] \cdot \frac{\relent(\bP_W^S \Vert \bQ) + \log \frac{\xi(n)}{\beta}}{n}}
    \end{align*}
    holds \emph{simultaneously} for all posteriors $\bP_W^S$.
\end{theorem}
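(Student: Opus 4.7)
The plan is to derive Theorem 3.9 from the fast-rate Corollary 3.8 by relaxing the $\gamma \log(\gamma/(\gamma-1))$ coefficient with an algebraic bound, and then optimizing the parameter $\gamma$ explicitly. This parallels exactly the derivation of Theorem 2.16 from Corollary 2.15 carried out in Section 2.1.2, with the mutual information $\minf(W;S)/n$ now replaced by the complexity term $\comp \coloneqq (\relent(\bP_W^S \Vert \bQ) + \log(\xi(n)/\beta))/n$. The key fact that makes the optimization rigorous is that Corollary 3.8 states that its bound holds \emph{simultaneously} for all posteriors $\bP_W^S$ and all $\gamma > 1$, which is what permits the substitution of a data-dependent optimizer $\gamma^\star$.

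First, I would invoke the elementary inequality $\log x \leq \tfrac{1}{2}(x - 1/x)$ valid for all $x \geq 1$ (which can be verified by noting both sides vanish at $x=1$, their derivatives agree there, and the difference is convex on $x \geq 1$). Setting $x = \gamma/(\gamma-1)$ for $\gamma > 1$ and multiplying by $\gamma$ gives the algebraic identity
\begin{equation*}
\gamma \log\!\Big(\tfrac{\gamma}{\gamma-1}\Big) \;\leq\; \frac{2\gamma - 1}{2(\gamma - 1)} \;=\; 1 + \frac{1}{2(\gamma-1)}.
\end{equation*}
Plugging this into Corollary 3.8 yields, with probability at least $1-\beta$ and simultaneously for all posteriors $\bP_W^S$ and all $\gamma > 1$,
\begin{equation*}
\bE^S\big[\poprisk(W)\big] \;\leq\; \bE^S\big[\emprisk(W,S)\big] + \frac{\bE^S\big[\emprisk(W,S)\big]}{2(\gamma-1)} + \gamma\, \comp.
\end{equation*}

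Now I would optimize the right-hand side over $\gamma > 1$ for a fixed realization of the data. Abbreviating $R \coloneqq \bE^S[\emprisk(W,S)]$ and $C \coloneqq \comp$, the function $\gamma \mapsto R/(2(\gamma-1)) + \gamma C$ is convex on $(1,\infty)$ and minimized at $\gamma^\star = 1 + \sqrt{R/(2C)}$, yielding value $C + \sqrt{2RC}$. Since the previous display holds simultaneously for every $\gamma > 1$, one may substitute $\gamma = \gamma^\star$ (which is measurable with respect to $S$, and hence admissible), producing
\begin{equation*}
\bE^S\big[\poprisk(W)\big] \;\leq\; R + C + \sqrt{2\,R\,C},
\end{equation*}
which is exactly the claimed bound after expanding $R$ and $C$.

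The one subtle point (and the main thing that requires care rather than calculation) is the boundary case $R = 0$, in which $\gamma^\star \to \infty$; here one instead takes $\gamma \to \infty$ in the pre-optimized display and obtains $\bE^S[\poprisk(W)] \leq C$, consistent with Corollary 3.8 in the realizable limit $\gamma \to 1^+$. The case $C = 0$ is handled by letting $\gamma \to \infty$ from the other direction or by observing the inequality holds trivially. Everything else is straightforward algebra; no new concentration argument or variational step is required, because the heavy lifting has already been done in passing from the Seeger–Langford bound (Theorem 3.3) to Theorem 3.6 and then to Corollary 3.8.
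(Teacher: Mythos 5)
Your proposal is correct and takes essentially the same route as the paper: the paper also derives \Cref{th:mixed_rate_bound} by relaxing the coefficient $\gamma \log\big(\tfrac{\gamma}{\gamma-1}\big)$ in \Cref{cor:fast_rate_bound} via the elementary bound $\log x \le \tfrac{1}{2}\big(x - \tfrac{1}{x}\big)$ and then optimizing the parameter $\gamma$, which is admissible exactly because that corollary holds \emph{simultaneously} for all $\gamma > 1$ (mirroring the passage from \Cref{cor:mi_bound_bounded_fast_rate_weaker} to \Cref{th:mi_bound_bounded_mixed_rate}). The only slip is in your edge-case remark: when $\bE^S\big[\emprisk(W,S)\big]=0$ the optimizer degenerates to $\gamma^\star=1$, so the correct limit is $\gamma\to 1^+$ (yielding the bound $\tfrac{1}{n}\big(\relent(\bP_W^S \Vert \bQ)+\log\tfrac{\xi(n)}{\beta}\big)$, as you indeed conclude), not $\gamma\to\infty$, whereas the complexity term is always strictly positive since $\log\tfrac{\xi(n)}{\beta}>0$, so the case $C=0$ never arises.
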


The mixed-rate bound presented in \Cref{th:mixed_rate_bound} provides a deeper insight into the relationship between the population risk, the empirical risk, and the complexity term. The bound grows linearly with both the empirical risk and the complexity term, with a correction term that reflects their interaction. Importantly, the bound is symmetric in these two terms, giving them equal importance. This may be beneficial for methods using PAC-Bayes bounds to optimize the posterior, such as PAC-Bayes with backprop~\citep{rivasplata2019pac, perez2021tighter}, where using the bound from \citet{thiemann2017strongly} or \citet{rivasplata2019pac} alone may cause the algorithm to disregard posteriors farther from the prior but that achieve lower population risk (see \Cref{app:pbb}).

\paragraph{Further Developments}

\citet{wu2022split} derived a ``split-kl'' inequality that competes with the Seeger--Langford bound~\citep{langford2001bounds, seeger2002pac} for ternary losses and  \citet{jang2023tighter} proved an even tighter bound via ``coin-betting''. However, their bounds are still neither easily interpretable nor directly aid in the selection of an appropriate posterior. Moreover, there are other advances on this front when further quantities are considered.  If the variance is known, \citet[Theorem 8]{seldin2012pac} and \citet[Theorem 9]{wu2021chebyshev} introduced, respectively, PAC-Bayes analogues to Bernstein and Bennett inequalities from~\Cref{prop:bernstein_inequality} and~\eqref{eq:bennet}. The PAC-Bayes Bernstein inequality was later improved by further bounding the variance using an empirical estimate of that quantity~\citep[Theorems 3 and 4]{tolstikhin2013pac} similarly to~\Cref{prop:empirical_bernstein_inequality}. Finally, \citet{mhammedi2019pac} derived a PAC-Bayes analogue to the unexpected Bernstein inequality where they use an empirical estimate of the second moment.

\section{PAC-Bayesian Bounds for Unbounded Losses}
\label{sec:bounds_unbounded_losses}

The goal of this section is to obtain PAC-Bayes analogues to classical concentration bounds or, at least, to find bounds with the same rates. First, in~\Cref{subsec:losses_with_bounded_CGF}, we will show how to obtain a PAC-Bayes analogue of the Chernoff inequality from~\Cref{prop:chernoff}. The proofs will follow what we did in~\Cref{subsec:a_slow_rate_bound_mi} with two main differences: first, we will adapt the proof to the PAC-Bayesian setting by employing Markov's inequality~\Cref{eq:markov} in order to have probabilistic statements with respect to the CGF, similarly to what we did above for the proof of~\Cref{th:germain_convex_pac_bayes}; and second, we will devise a technique to optimize data-dependent parameters in PAC-Bayesian bounds. This technique is of independent interest to optimize parameters that depend on random variables on any general probabilistic statement. Then, we will move on to losses with a bounded moment, where we can follow what we did in~\Cref{subsec:interpolating_between_slow_and_fast_rate} taking into account the same two considerations.

\subsection{Losses With a Bounded CGF}
\label{subsec:losses_with_bounded_CGF}

Recall from~\Cref{prop:chernoff}, that if the loss has CGF bounded by $\psi$ in the sense of \Cref{def:bounded_cgf} and the hypothesis $w$ is fixed, the empirical risk concentrates around the population risk at a rate $\psi_*^{-1} \big(\frac{1}{n} \log \nicefrac{1}{\beta}\big)$. More generally, this is true as long as the hypothesis $W'$ is distributed according to some distribution $\bQ$ and it is independent of the training data $S$. The reason is that, as we say in~\Cref{sec:bounds_using_mutual_information}, in this case, the empirical risk is an unbiased estimator of the population risk $\bE[\emprisk(W',S)] = \bE[\poprisk(W')]$ and we can always evaluate the CGF of the empirical risk as
\begin{equation*}
    \Lambda_{-\emprisk(W',S)}(n\lambda) = \bE \mleft[ e^{n \lambda \gen(W',S) )} \mright] = \bE_{w \sim \bQ} \mleft[ \bE  \mleft[ e^{n \lambda \gen(w,S) )} \mright] \mright],
\end{equation*}
and then $\Lambda_{-\emprisk(W',S)}(n\lambda) \leq n \psi(\lambda)$.

Therefore, it seems reasonable to decouple again the returned hypothesis $W$ from the training data $S$. To do so, we may either use the Donsker and Varadhan lemma or the Gibbs variational principle from~\Cref{lemma:dv_and_gvp} again, or just take advantage of the general-purpose~\Cref{th:germain_convex_pac_bayes}. In this case, consider the convex function $f(\hat{r},r) = \lambda(r - \hat{r})$ for some $\lambda > 0$. Then, for every $\beta \in (0,1)$ and every $\lambda > 0$, with probability no smaller than $1 - \beta$
\begin{equation*}
    \lambda \mleft( \bE^S \big[ \gen(w,S) \big] \mright) \leq \frac{1}{n} \mleft( \relent(\bP_W^S \Vert \bQ) + \log \frac{1}{\beta} + \log \bE \mleft[ e^{n \lambda \gen(W',S)} \mright] \mright)
\end{equation*}
holds \emph{simultaneously} for all posteriors $\bP_W^S$.

Finally, noting that if the loss has a CGF bounded by $\psi$, then $\Lambda_{-\emprisk(W',S)}(n\lambda) \leq n \psi(\lambda)$ and re-arranging the terms in the equation recovers the intermediate result from~\citet[Theorem 6]{banerjee2021information}.

\begin{lemma}[{\citet[Theorem 6]{banerjee2021information}}]
    \label{lemma:extension_banerjee}
    Consider a loss function $\ell$ with a bounded CGF (\Cref{def:bounded_cgf}). Let $\bQ$ be any prior independent of $S$. Then, for every $\beta \in (0,1)$ and every $\lambda \in (0,b)$, with probability no smaller than $1-\beta$
    \begin{equation*}
        \bE^S \big[ \gen(W,S) \big] \leq \frac{1}{\lambda} \Bigg( \frac{\relent(\bP_{W}^{S} \Vert \bQ) + \log \frac{1}{\beta}}{n} + \psi(\lambda) \Bigg)
    \end{equation*}
    holds \emph{simultaneously} for all posteriors $\bP_W^S$.
\end{lemma}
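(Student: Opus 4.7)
The plan is to invoke Theorem 3.3 (the general convex-function PAC-Bayesian bound) with a linear choice of $f$ and then control the resulting exponential moment term using the bounded-CGF assumption. Specifically, I would take
\[
f(\hat{r}, r) = \lambda (r - \hat{r}),
\]
which is convex (in fact linear) jointly in $(\hat{r}, r)$ for any fixed $\lambda > 0$. With this choice,
\[
n\, f\bigl(\emprisk(W',S), \poprisk(W')\bigr) = n\lambda \, \gen(W',S),
\]
so Theorem 3.3 immediately yields, with probability at least $1-\beta$,
\[
\lambda \,\bE^S\bigl[\gen(W,S)\bigr] \leq \frac{1}{n}\Bigl(\relent(\bP_W^S \Vert \bQ) + \log\tfrac{1}{\beta} + \log \bE\bigl[e^{n\lambda \,\gen(W',S)}\bigr]\Bigr),
\]
simultaneously for all posteriors $\bP_W^S$.

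Next I would bound the exponential moment. Because $W' \sim \bQ$ is independent of $S$, Fubini lets me split the expectation and work with a fixed $w$ first:
\[
\bE\bigl[e^{n\lambda\,\gen(W',S)}\bigr] = \bE_{w \sim \bQ}\bE\bigl[e^{n\lambda\,\gen(w,S)}\bigr].
\]
For each fixed $w$, writing $n\lambda\,\gen(w,S) = \sum_{i=1}^n \lambda\bigl(\poprisk(w) - \ell(w,Z_i)\bigr)$ and invoking the i.i.d.\ assumption together with the CGF algebra from Section 2.1.6 gives
\[
\log \bE\bigl[e^{n\lambda\,\gen(w,S)}\bigr] = n\, \Lambda_{-\ell(w,Z)}(\lambda) \leq n\,\psi(\lambda)
\]
for all $\lambda \in [0,b)$, where the inequality is exactly the bounded-CGF hypothesis (Definition 2.10). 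The bound is uniform in $w$, so averaging over $w \sim \bQ$ preserves it: $\log \bE[e^{n\lambda\,\gen(W',S)}] \leq n\,\psi(\lambda)$.

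Plugging this estimate into the probabilistic inequality above and dividing by $\lambda > 0$ yields the claim. The only technical point worth checking is the integrability condition in Theorem 3.3, namely $\bE_{W' \sim \bQ}[\exp(n f(\emprisk(W',s), \poprisk(W')))] < \infty$ for every $s \in \cZ^n$; this will be the main obstacle, but it follows from the same CGF estimate above applied per $w$ and a standard Fubini/monotone-convergence argument, provided $\poprisk(w)$ is $\bQ$-integrable (which is implicit since $\ell(w,Z)$ has a well-defined CGF near zero). No additional parameter optimization is needed since the statement is given for each fixed $\lambda \in (0,b)$.
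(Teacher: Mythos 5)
Your proposal is correct and follows essentially the same route as the paper: the paper likewise instantiates the general convex-function bound (\Cref{th:germain_convex_pac_bayes}) with $f(\hat{r},r)=\lambda(r-\hat{r})$ and then bounds $\log \bE\big[e^{n\lambda\,\gen(W',S)}\big]\leq n\psi(\lambda)$ via the i.i.d.\ decomposition and the CGF algebra, uniformly in $w$ before averaging over $\bQ$. Your extra remark on the integrability condition is a harmless addition that the paper leaves implicit.
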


If we could optimize the parameter $\lambda$ in \Cref{lemma:extension_banerjee}, we would obtain a PAC-Bayes analogue to Chernoff's inequality (\Cref{prop:chernoff}). However, this is not possible since the optimal parameter depends on the data realization but needs to be selected \emph{before} the draw of this data~\citep[Remark 14]{banerjee2021information}.

To make this point clearer, fix a dataset $s$ and assume that the infimum
\begin{equation*}
    \inf_{\lambda \in (0,b) } \mleft\{ \frac{1}{\lambda} \Bigg( \frac{\relent(\bP_{W}^{S=s} \Vert \bQ) + \log \frac{1}{\beta}}{n} + \psi(\lambda) \Bigg) \mright\}
\end{equation*}
is achieved by $\lambda_s$. Then, \Cref{lemma:extension_banerjee} only guarantees that, with probability $1 - \beta$
\begin{equation*}
    \bE^S \big[ \gen(W,S) \big] \leq \frac{1}{\lambda_s} \Bigg( \frac{\relent(\bP_{W}^{S} \Vert \bQ) + \log \frac{1}{\beta}}{n} + \psi(\lambda_s) \Bigg)
\end{equation*}
for that \emph{fixed} parameter $\lambda_s$ and \emph{simultaneously} for all posteriors $\bP_W^{S=s}$. If the data realization is $s' \neq s$, this may result in a looser bound. In order to make sure that we have a bound that holds \emph{simultaneously} for all parameters $\lambda \in \{\lambda_s : s \in \cZ^n \}$ we need to take the union bound, which effectively renders the bound vacuous as the cardinality of the set $\cZ^n$ grows. 

Next, we will present a technique that allows us to bypass this subtlety for a small penalty. The idea is simple: separate the event space into a set of events where the optimization can be performed, and then pay the union bound price.  This can also be seen as optimizing over the set of parameters $\lambda$ that will yield \emph{almost optimal} bounds and paying the union bound price for considering that set. In this case, the event space is separated using a quantization based on the relative entropy $\relent(\bP_W^S \lVert \bQ)$.

There are other techniques to deal with these kinds of optimization problems, but we believe that they are not general enough or appropriate for this situation and are discussed in~\Cref{subsec:related_approaches}.

\subsubsection{Optimizing Parameters in Probabilistic Statements}
\label{subsubsec:optimizing_parameters_in_probabilistic_statements}

To start, note that the event $\{ \relent(\bP_W^S \Vert \bQ) > n \}$ is not interesting, given this event, the resulting bound from~\Cref{lemma:extension_banerjee} is non-decreasing with respect to the number of samples $n$. With this in mind, we may quantize the event $\cE = \{ \relent(\bP_W^S \Vert \bQ) \leq n \} $ into $n$ disjoint sub-events, find an almost optimal parameter per each sub-event, and then combine the solutions to obtain the following result.

\begin{theorem}[PAC-Bayes Chernoff analogue I]
\label{th:pac_bayes_chernoff_analogue}
Consider a loss function $\ell$ with a bounded CGF (\Cref{def:bounded_cgf}). Let $\bQ$ be any prior independent of $S$ and define the event $\cE \coloneqq \{ \relent(\bP_{W}^{S} \Vert \bQ) \leq n \}$. Then, for every $\beta \in (0,1)$, with probability no smaller than $1-\beta$
\begin{align}
    \bE^S \big[ \poprisk(W) \big] \leq &\bI_\cE \Bigg( \bE^S \big[ \emprisk(W,S) \big] + \psi_{*}^{-1} \bigg( \frac{\relent(\bP_{W}^{S} \Vert \bQ) + \log \frac{en}{\beta}}{n} \bigg) \Bigg) 
    \nonumber \\
    &+ \bI_{\cE^c} \cdot \esssup \bE^S \big[ \poprisk(W) \big]
\end{align}
holds \emph{simultaneously} for all posteriors $\bP_W^S$.
\end{theorem}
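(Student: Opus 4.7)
The plan is to start from \Cref{lemma:extension_banerjee}, which, for any \emph{fixed} $\lambda\in(0,b)$ and any $\beta'\in(0,1)$, gives with probability at least $1-\beta'$ the bound
\begin{equation*}
    \bE^S\bigl[\gen(W,S)\bigr]\leq \frac{1}{\lambda}\Bigl(\frac{\relent(\bP_W^S\Vert\bQ)+\log(1/\beta')}{n}+\psi(\lambda)\Bigr)
\end{equation*}
holding simultaneously over all posteriors. Were I allowed to set $\lambda$ after seeing $S$, \Cref{lemma:boucheron_convex_conjugate_inverse} would collapse the right-hand side into $\psi_*^{-1}\bigl((\relent(\bP_W^S\Vert\bQ)+\log(1/\beta'))/n\bigr)$ and the theorem would be immediate. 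As noted in the paragraph preceding the statement, this is forbidden because the minimizer depends on the random quantity $\relent(\bP_W^S\Vert\bQ)$, so I need a discretization-plus-union-bound strategy.

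The tail event $\cE^c=\{\relent(\bP_W^S\Vert\bQ)>n\}$ is handled trivially, since on $\cE^c$ the right-hand side of the claimed inequality reduces to $\esssup\bE^S[\poprisk(W)]$, which almost surely upper-bounds $\bE^S[\poprisk(W)]$. For the interesting case I partition $\cE$ into $n$ disjoint slices $\cE_k\coloneqq\{k-1<\relent(\bP_W^S\Vert\bQ)\leq k\}$, $k\in[n]$, covering $\cE$. For each slice I pre-select a \emph{deterministic} parameter $\lambda_k\in(0,b)$ attaining
\begin{equation*}
    \inf_{\lambda\in(0,b)}\frac{1}{\lambda}\Bigl(\frac{k+\log(n/\beta)}{n}+\psi(\lambda)\Bigr),
\end{equation*}
which by \Cref{lemma:boucheron_convex_conjugate_inverse} equals $\psi_*^{-1}\bigl((k+\log(n/\beta))/n\bigr)$. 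Applying \Cref{lemma:extension_banerjee} with $\lambda=\lambda_k$ and $\beta'=\beta/n$ for each $k\in[n]$, and union-bounding over the $n$ choices, produces a single event of probability at least $1-\beta$ on which all $n$ bounds hold simultaneously across all posteriors.

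On this favourable event, pick any posterior $\bP_W^S$; if $\cE$ fails the claim is trivial, otherwise exactly one $k\in[n]$ satisfies $\cE_k$ and I invoke the corresponding bound. Upper-bounding the random $\relent(\bP_W^S\Vert\bQ)$ by $k$ on $\cE_k$ (permitted since we are on $\cE_k$) turns the right-hand side into $\psi_*^{-1}\bigl((k+\log(n/\beta))/n\bigr)$, and the reverse quantization slack $k\leq\relent(\bP_W^S\Vert\bQ)+1$ together with monotonicity of $\psi_*^{-1}$ yields
\begin{equation*}
    \psi_*^{-1}\Bigl(\tfrac{k+\log(n/\beta)}{n}\Bigr)\leq\psi_*^{-1}\Bigl(\tfrac{\relent(\bP_W^S\Vert\bQ)+1+\log(n/\beta)}{n}\Bigr)=\psi_*^{-1}\Bigl(\tfrac{\relent(\bP_W^S\Vert\bQ)+\log(en/\beta)}{n}\Bigr),
\end{equation*}
which, added to $\bE^S[\emprisk(W,S)]$, is the claimed expression.

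The main obstacle is calibrating the quantization so that the two error sources balance cleanly: slicing the interval $[0,n]$ into $n$ bins of unit width pays a $\log n$ factor through the union bound and an additive $1$ from the slice slack $k\leq\relent+1$, and these two contributions combine exactly into the $\log(en/\beta)$ in the theorem; a coarser grid would cheapen the union bound but worsen the slack, and a finer grid the opposite, so width $1$ is essentially the sweet spot. I also need to verify that cutting the grid off at $\relent=n$ suffices, which is precisely why the truncation event $\cE$ is defined this way and why its complement contributes only the trivial almost-sure bound $\esssup\bE^S[\poprisk(W)]$.
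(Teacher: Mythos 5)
Your proposal is correct and follows essentially the same route as the paper's proof: unit-width quantization of the event $\cE$ by the value of $\relent(\bP_W^S \Vert \bQ)$, a data-independent optimization of $\lambda$ on each slice via \Cref{lemma:boucheron_convex_conjugate_inverse}, the slack $k \leq \relent(\bP_W^S \Vert \bQ) + 1$, and a union-bound cost of $n$; the paper merely organizes this through the law of total probability over the slices and then rescales $\beta \leftarrow \nicefrac{\beta}{n}$, which is the same bookkeeping as your direct intersection of the $n$ good events at confidence $\nicefrac{\beta}{n}$ each. Only two cosmetic points differ: the paper's first slice is $\{\relent(\bP_W^S \Vert \bQ) \leq 1\}$ so that $\relent(\bP_W^S \Vert \bQ) = 0$ is covered, and it notes in \Cref{rem:infimum_psi} that the infimum over $\lambda$ may only be approached as $\lambda \to b$, handled by continuity rather than by an exact minimizer $\lambda_k \in (0,b)$.
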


To prove this statement, let us first define $\cB_\lambda$ as the complement of the event in~\Cref{lemma:extension_banerjee}, that is,
\begin{equation*}
    \cB_\lambda \coloneqq \mleft\{\exists \ \bP_W^S : \bE^S \big[ \gen(W,S) \big] >  \frac{1}{\lambda} \Bigg( \frac{\relent(\bP_{W}^{S} \Vert \bQ) + \log \frac{1}{\beta}}{n} + \psi(\lambda) \Bigg) \mright\}.
\end{equation*}
Therefore, from~\Cref{lemma:extension_banerjee} we know that $\bP[\cB_\lambda] < \beta$ for all $\lambda \in (0,b)$. Now, let us quantize the event $\cE$ with the disjoint sub-events $\cE_1 \coloneqq \{ \relent(\bP_W^S \Vert \bQ) \leq 1 \}$ and $\cE_k \coloneqq \{ \lceil \relent(\bP_W^S \Vert \bQ) \rceil = k \}$ for all $k=2, \ldots, n$, thus forming a covering of the event $\cE$. In order to avoid measurability issues and conditioning to events with probability zero, we can identify these events defining  $\cK \coloneqq \{k \in \bN : 1 \leq k \leq n \textnormal{ and } \bP[\cE_k] > 0 \}$. 

In this way, for all $k \in \cK$, given the event $\cE_k$, with probability no more than $\bP[\cB_\lambda | \cE_k]$, there exists a posterior $\bP_W^S$ such that
\begin{equation}
    \label{eq:pac_bayes_cgf_with_lambda_and_k}
    \bE^S \big[ \gen(W,S) \big] > \frac{1}{\lambda} \bigg( \frac{k + \log \frac{1}{\beta}}{n} + \psi(\lambda) \bigg),
\end{equation}
for all $\lambda \in (0, b)$. The right-hand side of \eqref{eq:pac_bayes_cgf_with_lambda_and_k} can be minimized with respect to $\lambda$ \emph{independently of the training set $S$}. Let $\cB_{\lambda_k}$ be the event resulting from this minimization and note that $\bP[\cB_{\lambda_k}] \leq \beta$. According to~\Cref{lemma:boucheron_convex_conjugate_inverse}, this ensures that with probability no more than $\bP[\cB_{\lambda_k} | \cE_k]$, there exists a posterior $\bP_W^S$ such that
\begin{equation}
    \label{eq:pac_bayes_cgf_with_k}
    \bE^S \big[ \gen(W,S) \big] >\psi_{*}^{-1} \bigg( \frac{k + \log \frac{1}{\beta}}{n} \bigg),
\end{equation}
where $\psi_*$ is the convex conjugate of $\psi$ and where $\psi_*^{-1}$ is a non-decreasing concave function. Given $\cE_k$, since $k < \relent(\bP_W^S \Vert \bQ) + 1$, with probability no larger than $\bP[\cB_{\lambda_k} | \cE_k]$, there exists a posterior $\bP_W^S$ such that
\begin{equation*}
    \bE^S \big[ \gen(W,S) \big] >\psi_{*}^{-1} \bigg( \frac{\relent(\bP_W^S \Vert \bQ) + 1 + \log \frac{1}{\beta}}{n} \bigg).
\end{equation*}
Now, define $\cB'$ as the event stating that there exists a posterior $\bP_W^S$ such that
\begin{align*}
    \bE^S \big[ \poprisk(W) ] > &\bI_{\cE} \cdot \Bigg( \bE^S \big[ \emprisk(W,S) ] + \psi_{*}^{-1} \bigg( \frac{\relent(\bP_{W}^{S} \Vert \bQ) + \log \frac{e}{\beta}}{n} \bigg) \Bigg) \\
    &+ \bI_{\cE^c} \cdot \esssup \bE^S \big[ \poprisk(W) ]
\end{align*}
where $\bP[\cB' | \cE_k] \bP[\cE_k] \leq \bP[\cB_{\lambda_k} | \cE_k] \bP[\cE_k] \leq \bP[\cB_{\lambda_k}] \leq \beta$ for all $k \in \cK$ and where $\bP[\cB' \cap \cE^c] = 0$ by the definition of the essential supremum (see \Cref{sec:prob_theory}). Therefore, by the law of total probability, the probability of $\cB'$ is bounded as
\begin{equation*}
    \bP[\cB'] = \sum_{k \in \cK} \bP[\cB' | \cE_k] \bP[\cE_k] + \bP[\cB' \cap \cE^c] < n\beta.
\end{equation*}
Finally, the substitution $\beta \leftarrow \nicefrac{\beta}{n}$ recovers the statement in~\Cref{th:pac_bayes_chernoff_analogue}.

\begin{remark}
    \label{rem:infimum_psi}
    Note that while the optimization of~\eqref{eq:pac_bayes_cgf_with_lambda_and_k} always results in~\eqref{eq:pac_bayes_cgf_with_k}, the infimum is not always attained by a $\lambda_k \in (0,b)$. It is possible that it is attained by letting $\lambda \to b$, although never by letting $\lambda \to 0$ as $\psi(0) = 0$ and the term inside the infimum goes to $\infty$. In the case where the infimum is attained by letting $\lambda \to b$, by continuity, the desired inequality~\eqref{eq:pac_bayes_cgf_with_k} still holds and the event described by $\lim_{\lambda \to b} \cB_\lambda$ is still such that $\bP[\lim_{\lambda \to b} \cB_\lambda] \leq \beta$. 
\end{remark}

For sub-Gaussian losses (and therefore for bounded ones), this recovers \citet{mcallester2003pac}'s \Cref{th:mcallester} and \citet{hellstrom2021corrections}'s bound rates. For loss functions with heavier tails like \emph{sub-gamma} and \emph{sub-exponential}, the rates become a mixture of slow and fast rates with the same form as the mutual information bounds from~\Cref{subsec:a_slow_rate_bound_mi} that derive form~\Cref{th:mi_bound_general_cgf}, where instead of the mutual information we have the complexity PAC-Bayesian complexity $\relent(\bP_W^S \Vert \bQ) + \log \frac{en}{\beta}$ for the event $\cE$, and the essential supremum of the population risk $\bE^S [ \poprisk(W)]$ for the event $\cE^c$.

\paragraph{Smaller Union Bound Cost.} To obtain~\Cref{th:pac_bayes_chernoff_analogue}, we considered a covering of the event $\cE$ by considering uniform buckets on the value of the relative entropy. Similarly to \citep{langford2001bounds, catoni2003pac}, we can pay a multiplicative cost of $e$ to the relative entropy to reduce the union bound cost to $\log  \nicefrac{ (2 + \log n)}{\beta}$ by considering a geometric grid. As mentioned by \citet{maurer2004note}, however, these bounds are only useful when the dependency measure $\relent(\bP_W^S \Vert \bQ )$ grows slower than logarithmically. This procedure is almost equivalent to the one we just outlined, and therefore, it is delegated to \Cref{app:pac_bayes_smaller_union_bound_cost}.

\paragraph{Different Uninteresting Events.}
\Cref{th:pac_bayes_chernoff_analogue} considers the event $\{\relent(\bP_W^S \Vert \bQ) \leq n\}$ since the complementary event is uninteresting as, given this event, the bound's rate becomes $\Omega(1)$. However, if one is interested in a different event such as $\{\relent(\bP_W^S \Vert \bQ) \leq k_{\textnormal{max}}\}$, then the proofs may be replicated. The resulting bounds are equal to~\Cref{th:pac_bayes_chernoff_analogue}, where the factor inside the logarithm will be $\nicefrac{e k_\textnormal{max}}{\beta}$. Some examples would be to choose $k_{\textnormal{max}} = \lceil \log (d n) \rceil$ for parametric models such as the sparse single-index  \citep{alquier2013sparse} and sparse additive \citep{guedj2013pac} models, where $d$ is the dimension of the input data, or to choose $k_{\textnormal{max}} = \lceil \log (d p n) \rceil$ for the noisy $d \times p$ matrix completion problem \citep{mai2015bayesian}. 

Imagine that one is interested in a bound like those presented in \Cref{th:pac_bayes_chernoff_analogue} and does not consider any event to be uninteresting. This could happen in some regression applications where, even if $\relent(\bP_W^S \Vert \bQ) \geq n$ and the bound is in $\Omega(1)$, the particular value of the bound is necessary. In this case, working in the events' space is still beneficial. The idea is almost the same as before: separate the events' space into a countable set of events where the optimization can be performed and pay the union bound price. The main difference is that each of these events $\cE_k$ will be defined with a different value of $\beta_k$ so that price of the union bound is still finite $\sum_{k=1}^\infty \beta_k < \infty$. For instance, applying this approach to \Cref{lemma:extension_banerjee} results in the following theorem.

\begin{theorem}[PAC-Bayesian Chernoff analogue II]
\label{th:pac_bayes_chernoff_analogue_no_cutoff}
Consider a loss function $\ell$ with a bounded CGF (\Cref{def:bounded_cgf}). Let $\bQ$ be any prior independent of $S$. Then, for every $\beta \in (0,1)$, with probability no smaller than $1-\beta$
\begin{equation*}
    \bE^S \big[ \gen(W,S) \big] \leq  \psi_{*}^{-1} \Bigg( \frac{\relent(\bP_{W}^{S} \Vert \bQ) + \log \frac{e\pi^2\big(\relent(\bP_W^S \Vert \bQ) + 1\big)^2}{6\beta}}{n} \Bigg)
\end{equation*}
holds \emph{simultaneously} for all posteriors $\bP_W^S$.
\end{theorem}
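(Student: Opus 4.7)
The plan is to adapt the argument used for the previous PAC-Bayes Chernoff analogue (Theorem \ref{th:pac_bayes_chernoff_analogue}), but instead of discarding the event $\{\relent(\bP_W^S \Vert \bQ) > n\}$ and paying a flat union-bound cost of $n$ over a uniform covering, I would cover the \emph{entire} real line with a countable partition indexed by $k \in \bN$, and assign each bucket its own confidence level $\beta_k$ such that $\sum_k \beta_k = \beta$. The natural choice, which produces the $\pi^2/6$ factor in the statement, is $\beta_k = \tfrac{6\beta}{\pi^2 k^2}$, exploiting the Basel identity $\sum_{k=1}^\infty k^{-2} = \pi^2/6$.

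Concretely, I would start from Lemma~\ref{lemma:extension_banerjee}: for every fixed $\lambda \in (0,b)$ and $\beta' \in (0,1)$, with probability $\geq 1-\beta'$ the inequality
\begin{equation*}
    \bE^S \big[ \gen(W,S) \big] \leq \frac{1}{\lambda}\Bigg(\frac{\relent(\bP_W^S \Vert \bQ)+\log\tfrac{1}{\beta'}}{n} + \psi(\lambda)\Bigg)
\end{equation*}
holds simultaneously for all posteriors. Define the disjoint events $\cE_1 \coloneqq \{\relent(\bP_W^S \Vert \bQ) \leq 1\}$ and $\cE_k \coloneqq \{\lceil \relent(\bP_W^S \Vert \bQ)\rceil = k\}$ for $k \geq 2$, which partition the sample space. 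As in the proof of Theorem~\ref{th:pac_bayes_chernoff_analogue}, restrict to $\cK \coloneqq \{k \in \bN : \bP[\cE_k]>0\}$ to avoid conditioning on null events. Conditionally on $\cE_k$, the relative entropy is upper bounded by $k$, so with probability (over everything) at most $\beta_k$, there exists a posterior with
\begin{equation*}
    \bE^S \big[ \gen(W,S) \big] > \frac{1}{\lambda}\bigg(\frac{k+\log\tfrac{1}{\beta_k}}{n}+\psi(\lambda)\bigg)
\end{equation*}
for every $\lambda \in (0,b)$. Since the right-hand side no longer depends on $S$, I can minimize over $\lambda$ \emph{deterministically} per bucket using Lemma~\ref{lemma:boucheron_convex_conjugate_inverse} (with the continuity caveat of Remark~\ref{rem:infimum_psi}), yielding
\begin{equation*}
    \bE^S \big[ \gen(W,S) \big] > \psi_*^{-1}\bigg(\frac{k+\log\tfrac{1}{\beta_k}}{n}\bigg)
\end{equation*}
with conditional-times-marginal probability at most $\beta_k$.

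Next, I would substitute $\beta_k = \tfrac{6\beta}{\pi^2 k^2}$ and use the bound $k \leq \relent(\bP_W^S \Vert \bQ)+1$ available on $\cE_k$, together with the fact that $\psi_*^{-1}$ is non-decreasing, to replace $k$ everywhere it appears (including inside the logarithm) by $\relent(\bP_W^S \Vert \bQ)+1$. Absorbing the additive $1$ inside the logarithm as $\log e$ gives exactly
\begin{equation*}
    \psi_*^{-1}\Bigg(\frac{\relent(\bP_W^S \Vert \bQ)+\log\tfrac{e\pi^2(\relent(\bP_W^S \Vert \bQ)+1)^2}{6\beta}}{n}\Bigg).
\end{equation*}
Finally, a union bound across $k \in \cK$ shows that the total failure probability is at most $\sum_{k=1}^\infty \beta_k = \beta$, establishing the theorem.

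The only mildly delicate step is confirming that, on each bucket $\cE_k$, the optimized $\lambda_k$ is indeed a deterministic function of $k$ and $\beta_k$ (not of $S$), so that Lemma~\ref{lemma:extension_banerjee} is applied with a genuine fixed parameter rather than a data-dependent one. Apart from that, the proof is bookkeeping: choosing the summable weights $\beta_k$, quantizing on $\relent$ so that the right-hand side depends on $S$ only through an upper bound in terms of the bucket index, and folding the $+1$ into $\log e$ to obtain the precise constant in the statement.
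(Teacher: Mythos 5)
Your proposal is correct and follows essentially the same route as the paper's own proof: the same quantization of the event space by $\lceil \relent(\bP_W^S \Vert \bQ) \rceil = k$, the same per-bucket deterministic optimization of $\lambda$ via \Cref{lemma:boucheron_convex_conjugate_inverse} applied to \Cref{lemma:extension_banerjee}, and the same summable weights (the paper sets $\beta_k = \nicefrac{\beta}{k^2}$ and rescales $\beta \leftarrow \nicefrac{6\beta}{\pi^2}$ at the end, which is identical in effect to your choice $\beta_k = \nicefrac{6\beta}{\pi^2 k^2}$ up front). The delicate point you flag — that each optimized $\lambda_k$ depends only on $k$ and $\beta_k$, not on $S$ — is exactly the point the paper's construction is built around, so there is no gap.
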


Since $x + \log \nicefrac{e \pi^2 (x+1)^2}{6 \beta}$ is a non-decreasing, concave, continuous function for all $x > 0$, it can be upper bounded by its envelope. That is, $$x + \log \frac{e \pi^2 (x+1)^2}{6 \beta} \leq \inf_{a > 0} \mleft \{ \Big(\frac{a+3}{a+1}\Big) x + \log \frac{e\pi^2 (a+1)^2}{6 \beta} - \frac{2a}{a+1} \mright\}.$$ Taking $a=19$ leads to the following corollary, which effectively recovers the Chernoff inequality from~\Cref{prop:chernoff}.

\begin{corollary}
\label{cor:pac_bayes_chernoff_analogue_no_cutoff_linearized}
Consider a loss function $\ell$ with a bounded CGF (\Cref{def:bounded_cgf}). Let $\bQ$ be any prior independent of $S$. Then, for every $\beta \in (0,1)$, with probability no smaller than $1-\beta$
\begin{equation*}
    \bE^S \big[ \gen(W,S) \big] \leq  \psi_{*}^{-1} \Bigg( \frac{1.1 \relent(\bP_{W}^{S} \Vert \bQ) + \log \frac{10 e\pi^2}{\beta}}{n} \Bigg)
\end{equation*}
holds \emph{simultaneously} for all posteriors $\bP_W^S$.
\end{corollary}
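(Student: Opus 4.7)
The plan is to use Theorem~\ref{th:pac_bayes_chernoff_analogue_no_cutoff} as the starting point and then invoke the monotonicity of $\psi_*^{-1}$ (from Lemma~\ref{lemma:boucheron_convex_conjugate_inverse}) together with the envelope inequality stated just before the corollary to linearize the argument of $\psi_*^{-1}$. Concretely, with $x = \relent(\bP_W^S \Vert \bQ)$, it suffices to upper-bound the random quantity $h(x) \coloneqq x + \log\frac{e\pi^2(x+1)^2}{6\beta}$ by an affine function of $x$ with a small slope, and then plug the specific value $a=19$ to match the constants appearing in the corollary.

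First I would verify the envelope bound. Writing out $h(x)$ and the right-hand side of the envelope and cancelling common terms, the inequality reduces to
\[
\log\frac{x+1}{a+1} \;\leq\; \frac{x-a}{a+1}.
\]
After the substitution $u = (x+1)/(a+1)$ this is exactly $\log u \leq u - 1$, the standard logarithm inequality, which holds for all $u > 0$ with equality at $u=1$ (equivalently $x = a$). So the envelope inequality is valid for every $a>0$ and every $x\geq 0$, and taking the infimum over $a$ gives the claimed representation.

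Next I would specialize the envelope at $a=19$: this yields the slope $\tfrac{a+3}{a+1} = \tfrac{22}{20} = 1.1$ on the $x$-term, and the constant piece collapses to
\[
\log\frac{400\,e\pi^2}{6\beta} - \frac{38}{20} \;=\; \log\frac{400\,e\pi^2}{6\,e^{1.9}\,\beta}.
\]
A short numerical check shows $\tfrac{400}{6\,e^{1.9}} \approx 9.97 \leq 10$, so the constant is bounded by $\log\frac{10\,e\pi^2}{\beta}$. Combining this with the first step gives
\[
h(\relent(\bP_W^S \Vert \bQ)) \;\leq\; 1.1\,\relent(\bP_W^S \Vert \bQ) + \log\frac{10\,e\pi^2}{\beta},
\]
and feeding this into $\psi_*^{-1}(\,\cdot\,/n)$, which is non-decreasing, yields the statement. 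Because the underlying bound from Theorem~\ref{th:pac_bayes_chernoff_analogue_no_cutoff} holds \emph{simultaneously} for all posteriors $\bP_W^S$, this property is preserved by the deterministic, pointwise relaxation just performed.

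There is no real obstacle in this argument beyond bookkeeping: the only nontrivial ingredients are recognizing that $h$ can be majorized by its affine envelope via $\log u \leq u - 1$, and checking that the numerical choice $a=19$ is the one that simultaneously produces the clean slope $1.1$ and a constant bounded by $\log(10\,e\pi^2/\beta)$. Verifying the inequality $\tfrac{400}{6\,e^{1.9}} \leq 10$ (equivalently $e^{1.9} \geq \tfrac{20}{3}$, i.e.\ $1.9 \geq \log(20/3) \approx 1.897$) is the single quantitative step that needs to be done carefully.
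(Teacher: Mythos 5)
Your proposal is correct and follows essentially the same route as the paper: start from \Cref{th:pac_bayes_chernoff_analogue_no_cutoff}, majorize $x + \log\frac{e\pi^2(x+1)^2}{6\beta}$ by its affine (tangent-line) envelope — which the paper states and you verify via $\log u \leq u-1$ — and take $a=19$, checking $\log(20/3)\leq 1.9$ to absorb the constant into $\log\frac{10e\pi^2}{\beta}$ before applying the non-decreasing $\psi_*^{-1}$. Your write-up merely makes explicit the verification steps the paper leaves implicit.
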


As discussed above the theorem statement, the proof of \Cref{th:pac_bayes_chernoff_analogue_no_cutoff} follows similarly to the proof of~\Cref{th:pac_bayes_chernoff_analogue}. Let $\cE_1$ and $\cE_k$ be defined as above, but this time let $\cE_k$ be defined for all $k \in \bN$ such that $k > 2$. Similarly, the identifier of the events with zero probability is $\cK \coloneqq \{ k \in \bN : \bP[\cE_k] > 0 \}$. Now, instead of considering the event $\cB_{\lambda}$, we will consider the events $\cB_{\lambda,k}$ as the complements of the event in~\Cref{lemma:extension_banerjee} with probability no larger than $\beta_k$. That is, we consider the events
\begin{equation*}
    \cB_{\lambda,k} \coloneqq \mleft\{ \exists \ \bP_W^S : \bE^S \big[ \gen(W,S) \big] >  \frac{1}{\lambda} \Bigg( \frac{\relent(\bP_{W}^{S} \Vert \bQ) + \log \frac{1}{\beta_k}}{n} + \psi(\lambda) \Bigg) \mright\}.
\end{equation*}
Considering these events with different requirements for $\beta_k$ will be crucial for the proof.
Then, given the event $\cE_k$, with probability no more than $\bP[\cB_{\lambda,k} | \cE_k]$, there exists a posterior $\bP_W^S$ such that
\begin{equation}
    \label{eq:pac_bayes_cgf_with_lambda_and_k_no_cutoff}
    \bE^S \big[ \gen(W,S) \big] > \frac{1}{\lambda_k} \bigg[ \frac{k + \log \frac{1}{\beta_k}}{n} + \psi(\lambda_k) \bigg],
\end{equation}
for all $\lambda_k \in (0, b)$. As above, the right-hand side of \eqref{eq:pac_bayes_cgf_with_lambda_and_k_no_cutoff} can be minimized with respect to $\lambda$ \emph{independently of the training set $S$}. Let $\cB_{\lambda_k,k}$ be the event resulting from this minimization and note that $\bP[\cB_{\lambda_k}] \leq \beta_k$. Hence, with probability no larger than $\bP[\cB_{\lambda_k} | \cE_k]$, there exists a posterior $\bP_W^S$ such that
\begin{equation*}
    \bE^S \big[ \gen(W,S) \big] > \psi_{*}^{-1} \bigg( \frac{k + \log \frac{1}{\beta_k}}{n} \bigg),
\end{equation*}
where $\psi_*$ is the convex conjugate of $\psi$ and where $\psi_*^{-1}$ is a non-decreasing concave function. Now, let $\beta_k = \nicefrac{\beta}{k^2}$. Given $\cE_k$, since $k < \relent(\bP_W^S \Vert \bQ) + 1$, with probability no larger than $\bP[\cB_{\lambda_k} | \cE_k]$, there exists a posterior $\bP_W^S$ such that
\begin{equation}
    \label{eq:event_to_bound_no_cutoff}
    \bE^S \big[ \gen(W,S) \big] > \psi_{*}^{-1} \Bigg( \frac{\relent(\bP_W^S \Vert \bQ) + 1 + \log \frac{\big( \relent(\bP_W^S \Vert \bQ) + 1  \big)^2}{\beta}}{n} \Bigg).
\end{equation}
Now, define $\cB'$ as the event described in~\eqref{eq:event_to_bound_no_cutoff}, where  $\bP[\cB' | \cE_k] \bP[\cE_k] \leq \bP[\cB_{\lambda_k} | \cE_k] \bP[\cE_k] \leq \bP[\cB_{\lambda_k}] \leq \beta_k = \nicefrac{\beta}{k^2}$ for all $k \in \cK$. Therefore, the probability of $\cB'$ is bounded as
\begin{equation*}
    \bP[\cB'] = \sum_{k \in \cK} \bP[\cB' | \cE_k] \bP[\cE_k] < \sum_{k=1}^\infty \frac{\beta}{k^2} = \frac{\pi^2}{6} \cdot \beta.
\end{equation*}
Finally, the substitution $\beta \leftarrow \nicefrac{6\beta}{\pi^2}$ recovers the statement from~\Cref{th:pac_bayes_chernoff_analogue_no_cutoff}.

\begin{remark}
    \label{rem:choice_beta_k}
    The choice of $\beta_k = \nicefrac{\beta}{k^2}$ was arbitrary. In fact, any choice such that $\sum_{k=1}^\infty \beta_k < \infty$ would work. For example, there are better choices such as $\beta_k = \nicefrac{\beta}{k \log^2(6k)}$, which is employed in another context by \citet{kaufmann2016complexity}. However, this choice ultimately only changes a sub-logarithmic factor on the dependency measure, that is,  only incurs a penalty of $\cO( \frac{1}{n} \cdot \relent(\bP_W^S \Vert \bQ))$ and hence we decide to ignore this type of optimizations.
\end{remark}

\subsubsection{Implications to the Design of Posterior Distributions}
\label{subsubsec:implications_learning_posteriors}

We will focus on the discussion on the implications of having a parameter-free bound with more general assumptions with respect to the design of posterior distributions to \Cref{th:pac_bayes_chernoff_analogue_no_cutoff}. The discussion extends to~\Cref{th:pac_bayes_chernoff_analogue}, the theorems presented in~\Cref{subsec:losses_with_bounded_moment}, and other situations analogously.

The first consideration is that the parameter-free bound in~\Cref{th:pac_bayes_chernoff_analogue_no_cutoff} can always be transformed back into a parametric bound that holds \emph{simultaneously} for all parameters. In the case of \Cref{th:pac_bayes_chernoff_analogue_no_cutoff}, employing \Cref{lemma:boucheron_convex_conjugate_inverse}, we have that with probability no smaller than $1 - \beta$
\begin{equation*}
    \bE^S \big[ \gen(W,S) \big] \leq \inf_{\lambda \in (0,b)} \bigg \{ \frac{\relent(\bP_{W}^{S} \Vert \bQ) + \log \frac{en}{\beta}}{\lambda n} + \frac{\psi(\lambda)} {\lambda} \bigg \}
\end{equation*}
holds \emph{simultaneously} for all posteriors $\bP_W^S$.
This relaxation to a familiar structure tells us that the optimal posterior is the Gibbs distribution $\sfp_W^S(w) \propto \sfq(w) \cdot e^{\lambda n \emprisk(w,S)}$, where the value of $\lambda$ can now be chosen \emph{adaptively} for each dataset realization $s$.

The second consideration is if we are using some numerical estimation of the posterior using neural networks as with the PAC-Bayes with backprop~\citep{rivasplata2019pac,perez2021tighter} or other similar frameworks~\citep{dziugaite2017computing,lotfi2022pac}. Then, the posterior can be readily estimated as long as the inverse of the convex conjugate $\psi_*^{-1}$ is a differentiable function.

\subsection{Losses With a Bounded Moment}
\label{subsec:losses_with_bounded_moment}

In this section, we focus on losses with \emph{heavy tails}, which we recall from~\Cref{subsec:interpolating_between_slow_and_fast_rate} that are defined as losses with a finite $p$-th raw moment $\bE[\ell(w,Z)^p]$ for all $w \in \cW$ and some $p > 1$. However, by the end of the section, we will also find some results that hold for losses with a bounded variance or second central moment. Our results in this section will be the PAC-Bayesian equivalents to the ``in expectation'' results from~\Cref{subsec:interpolating_between_slow_and_fast_rate}, with the exception of an additional result for losses with a bounded variance.

Similarly to~\Cref{subsec:interpolating_between_slow_and_fast_rate}, this section employs our refinement of~\citet{alquier2006transductive}'s method. In this case, this method is combined with our fast-rate PAC-Bayesian bound from~\Cref{th:fast_rate_bound_strong}. This is in contrast to \citet{alquier2006transductive}, who developed the method using a bound similar to \citet{catoni2007pac}'s~\Cref{th:catoni_pac_bayes}. The reason for this choice is threefold. First, as shown in~\Cref{sec:bounds_bounded_losses}, our bound is as tight as the Seeger--Langford bound~\citep{seeger2002pac,langford2001bounds} from~\Cref{th:seeger_langford_pac_bayes}; second, our bound is more interpretable than bounds \emph{à la} Catoni; and third, our bound will allow us to optimize the parameters appearing in the bound using the event space optimization technique from~\Cref{subsec:losses_with_bounded_CGF} and the resulting bound will have a closed form for the optimal posterior.

\begin{remark}
    \label{rem:additonal_kappas}
    To alleviate the notation, throughout the section we will define $\kappa_1 \coloneqq c \gamma \log \big( \nicefrac{\gamma}{(\gamma -1)} \big)$, $\kappa_2 \coloneqq c \gamma$, and $\kappa_3 \coloneqq \gamma \big( 1 - c(1 - \log c)\big)$, with the understanding that they are functions of the parameters $c \in (0,1]$ and $\gamma > 1$ from~\Cref{th:fast_rate_bound_strong}.
\end{remark}

\subsubsection{Losses With a Bounded $p$-th Moment}

For losses with a $p$-th moment bounded by $m_p$, the main result of this section is the PAC-Bayes equivalent of~\Cref{th:mi_bound_moments}.

\begin{theorem}
    \label{th:alquier_truncation_method_refined_adaptive_lambda}
    Consider a loss function $\ell(w,Z)$ with a $p$-th moment bounded by $m_p$ for all $w \in \cW$. Then, for every $\beta \in (0,1)$, with probability no smaller than $1 - \beta$
    \begin{align*}
        &\bE^S \big[\poprisk(W) \big] \leq \\
        &\ \kappa_1 \cdot \bE^S \big[\emprisk_{\leq t^\star}(W,S)\big] + m_p^{\frac{1}{p}} \Big(\frac{p}{p-1}\Big) \Big( \kappa_2 \cdot \frac{1.1 \relent(\bP_W^S \Vert \bQ) + \log \frac{10 e \pi^2 \xi(n)}{\beta}}{n} + \kappa_3 \Big)^{\frac{p-1}{p}}
    \end{align*}
    holds \emph{simultaneously} for all posteriors $\bP_W^S$, all $c \in (0,1]$, and all $\gamma > 1$, where $$t^\star \coloneqq m_p^{\frac{1}{p}} \Big( \kappa_2 \cdot \frac{1.1 \relent(\bP_W^S \Vert \bQ) + \log \frac{10 e \pi^2 \xi(n)}{\beta}}{n} + \kappa_3 \Big)^{-\frac{1}{p}}.$$
\end{theorem}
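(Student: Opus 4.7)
The plan is to combine our refinement of Alquier's truncation method with the fast-rate PAC-Bayesian bound of~\Cref{th:fast_rate_bound_strong} and the event-space optimization technique used to prove~\Cref{th:pac_bayes_chernoff_analogue_no_cutoff}, mirroring the route that led to the ``in expectation'' result of~\Cref{th:mi_bound_moments} but adapted to the high-probability setting. First, I would decompose the loss as $\ell = \ell_{<n/\lambda} + \ell_{\geq n/\lambda}$ with $\ell_{<n/\lambda} = \ell \cdot \bI_{\{\ell < n/\lambda\}}$ taking values in $[0, n/\lambda)$, and apply~\Cref{th:fast_rate_bound_strong} in its $[0,b]$ form (\Cref{rem:fast_rate_strong_bounded_b}, with $b=n/\lambda$) to the truncated population risk. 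For the tail, the Markov-type argument from~\eqref{eq:unbounded_remainder_moment_tail_bound} yields $\poprisk_{\geq n/\lambda}(w) \leq m_p(\lambda/n)^{p-1}/(p-1)$ pointwise in $w$ and deterministically. Summing the two produces a PAC-Bayesian analogue of~\Cref{lemma:truncation_method_intermediate}: for fixed $\lambda > 0$, with probability at least $1-\beta$, simultaneously for all $\bP_W^S$, $c$, $\gamma$,
\begin{equation*}
    \bE^S[\poprisk(W)] \leq \kappa_1 \bE^S[\emprisk_{<n/\lambda}(W,S)] + \frac{\kappa_2(A + \log(\xi(n)/\beta)) + \kappa_3 n}{\lambda} + \frac{m_p}{p-1}\left(\frac{\lambda}{n}\right)^{p-1},
\end{equation*}
where $A := \relent(\bP_W^S \Vert \bQ)$.

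Second, to promote $\lambda$ from a pre-data parameter to a posterior-adaptive one, I would apply the event-space partition from the proof of~\Cref{th:pac_bayes_chernoff_analogue_no_cutoff}: $\cE_1 = \{A \leq 1\}$ and $\cE_k = \{\lceil A\rceil = k\}$ for $k \geq 2$, with per-event confidences $\beta_k = 6\beta/(\pi^2 k^2)$ so that $\sum_k \beta_k = \beta$. On each $\cE_k$, I would choose a deterministic $\lambda_k$ (depending only on $k$ and $\beta_k$) matching the unconstrained minimizer $\lambda_k^\star = (n^{p-1}[\kappa_2(k + \log(\xi(n)/\beta_k)) + \kappa_3 n]/m_p)^{1/p}$ of the $\lambda$-dependence when $A$ is replaced by its upper bound $k$. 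Explicit minimization of $C/\lambda + m_p(\lambda/n)^{p-1}/(p-1)$ collapses to the clean form $m_p^{1/p}(p/(p-1))(C/n)^{(p-1)/p}$. Using $k \leq A + 1$ on $\cE_k$ to replace $k$ by $A+1$ inside $C$ and taking the union bound over $k$ produces a bound valid on $\bigcup_k \cE_k$, with intermediate complexity $\kappa_2(A + \log(e\pi^2(A+1)^2 \xi(n)/(6\beta)))/n + \kappa_3$. Finally, linearizing the concave map $x \mapsto x + \log(e\pi^2(x+1)^2/(6\beta))$ at its tangent point $a = 19$, exactly as in~\Cref{cor:pac_bayes_chernoff_analogue_no_cutoff_linearized}, yields the stated $1.1 A + \log(10 e\pi^2 \xi(n)/\beta)$ contribution, since $x \mapsto x^{(p-1)/p}$ is monotone and propagates the upper bound through the complexity factor.

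The main obstacle is reconciling the event-space argument with the empirical-risk term $\emprisk_{<n/\lambda_k}$, whose truncation threshold $n/\lambda_k$ depends only on $k$, while the target theorem states its bound in terms of a single threshold $t^\star$ that depends on the posterior-dependent quantity $A$. The resolution is that $\lambda_k$ can be taken large enough---specifically calibrated so that $n/\lambda_k \leq t^\star$ for every posterior satisfying $\lceil A\rceil = k$---whence $\{\ell < n/\lambda_k\} \subseteq \{\ell \leq t^\star\}$ gives $\emprisk_{<n/\lambda_k} \leq \emprisk_{\leq t^\star}$ pointwise, and the small discretization slack is absorbed into the already-linearized complexity term. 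Secondary technical points, analogous to \Cref{rem:infimum_psi,rem:choice_beta_k}, are verifying that the infimum in $\lambda$ is attained inside the admissible range on each event and that the argument is robust to the exact choice of sequence $\{\beta_k\}$ summing to $\beta$; neither should alter the rate or the displayed constants.
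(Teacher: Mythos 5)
Your proposal follows essentially the same route as the paper: the refined truncation decomposition with the fast-rate bound of \Cref{th:fast_rate_bound_strong} (in its $[0,b]$ form with $b=\nicefrac{n}{\lambda}$) on the truncated part and Markov's inequality on the tail, i.e.\ \Cref{lemma:alquier_truncation_method_refined_bouned_moment}, followed by the events-space quantization over $\lceil \relent(\bP_W^S \Vert \bQ) \rceil$ with $\beta_k \propto \nicefrac{\beta}{k^2}$ and the $a=19$ linearization, which is exactly how the paper passes from the fixed-$\lambda$ lemma to the adaptive statement. You are in fact more explicit than the paper about the one delicate point (the truncation threshold $\nicefrac{n}{\lambda_k}$ must be deterministic on each event while $t^\star$ is posterior-dependent), and your resolution—calibrating $\lambda_k$ so that $\nicefrac{n}{\lambda_k} \leq t^\star$ on the event—is the right one, provided the bookkeeping keeps the actual relative entropy (rather than its ceiling $k$) in the linear part of the complexity so that the linearization slack indeed covers the slightly suboptimal choice of $\lambda_k$.
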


Let us recall the discussion below~\Cref{th:mi_bound_moments}. Let $c = 1$, then the rate is $$m_p^{\nicefrac{1}{p}} \cdot \left( \frac{1.1 \relent(\bP_W^S \Vert \bQ) + \log \frac{10 e \pi^2 \xi(n)}{\beta}}{n} \right)^{\frac{(p-1)}{p}}.$$ 
The term $m_p^{\nicefrac{1}{p}}$ controls the weight given to the complexity term and is equal to the $\cL_p$ norm of the loss. By the non-decreasing nature of the $\cL_p$ norms a trivial bound would be $\bE \big[ \poprisk(W) \big] \leq m_p^{\nicefrac{1}{p}}$. Hence, the presented bound improves upon this as for increasing number of samples, the contribution of $m_p^{\nicefrac{1}{p}}$ vanishes. 
Then, the term $$ \left( \frac{1.1 \relent(\bP_W^S \Vert \bQ) + \log \frac{10 e \pi^2 \xi(n)}{\beta}}{n} \right)^{\frac{(p-1)}{p}}$$ shows how the rate is interpolating between a slow rate when $p = 2$ and a fast rate when $p \to \infty$. 

This interpolating behavior is exactly the one we expected from the classical moment's inequality from~\Cref{prop:bounded_moments}. To compare the two bounds, we may assume that the algorithm is independent of the data and that we choose $\bQ = \bP_W$, then the rate is $$m_p^{\nicefrac{1}{p}} \cdot \left( \frac{\log \frac{10 e \pi^2 \xi(n)}{\beta}}{n} \right)^{\frac{(p-1)}{p}}.$$ There are three main differences between the proposed bound and the classical one. First, the classical bound scales with the \emph{central} $p$-th moment, while the proposed bound scales with the \emph{raw} $p$-th moment, which can be much larger. Second, the presented bound is of \emph{high probability}, that is, it has a logarithmic dependence with the probability parameter $\beta$, while the classical bound has the inferior linear dependence. Third, the proposed bound estimates the population error with a \emph{truncated} version of the empirical risk, where the truncation point is selected to minimize the bound.

In order to prove~\Cref{th:alquier_truncation_method_refined_adaptive_lambda}, consider first our refinement of \citet{alquier2006transductive}'s truncation method. That is, let $\ell = \ell_{< \nicefrac{n}{\lambda}} + \ell_{\geq \nicefrac{n}{\lambda}}$ represent a decomposition of the loss into a truncated version of the loss and an unbounded reminder, where 
\begin{equation*}
    \ell_{< \nicefrac{n}{\lambda}} = \ell \cdot \bI_{\ell < \nicefrac{n}{\lambda}} \textnormal{ and } \ell_{\geq \nicefrac{n}{\lambda}} = \ell \cdot \bI_{\ell \geq \nicefrac{n}{\lambda}}.
\end{equation*}
Furthermore, let $\poprisk_{< \nicefrac{n}{\lambda}}$, $\poprisk_{\geq \nicefrac{n}{\lambda}}$, $\emprisk_{< \nicefrac{n}{\lambda}}$, and $\emprisk_{\geq \nicefrac{n}{\lambda}}$ represent, respectively, the population and empirical risks associated to the truncated loss and the unbounded reminder. Then, we may bound the population risk of the truncated version using our fast-rate bound from~\Cref{th:fast_rate_bound_strong} and the population risk of the unbounded reminder using standard tail inequalities. 

\begin{lemma}
    \label{lemma:alquier_truncation_method_refined}
    For all $\beta \in (0,1)$ and all $\lambda > 0$, with probability no smaller than $1 - \beta$
    \begin{align*}
       \bE^S \big[\poprisk(W)\big] \leq \kappa_1 &\cdot \bE ^S \big[\emprisk_{\leq \nicefrac{n}{\lambda}} (W,S) \big] \\
       &+ \kappa_2 \cdot \frac{\relent(\bP_W^S \Vert \bQ) + \log \frac{\xi(n)}{\beta}}{\lambda} + \kappa_3 \cdot \frac{n}{\lambda}+ \int_{\nicefrac{n}{\lambda}}^\infty \bP^S \big[ \ell(W,Z) > t \big] \rmd t
    \end{align*}
    holds \emph{simultaneously} for all posteriors $\bP_W^S$, all $c \in (0,1]$, and all $\gamma > 1$.
\end{lemma}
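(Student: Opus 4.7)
The plan is to apply the refined Alquier truncation to reduce the unbounded loss to a bounded one, then invoke our fast-rate PAC-Bayes bound on the truncated part and use a deterministic layer-cake argument on the remainder. Concretely, split the loss as $\ell = \ell^{-} + \ell^{+}$ with $\ell^{-}(w,z) \coloneqq \min\{\ell(w,z),\nicefrac{n}{\lambda}\}$ and $\ell^{+}(w,z) \coloneqq [\ell(w,z)-\nicefrac{n}{\lambda}]_{+}$ (equivalently, the paper's $\ell_{\leq \nicefrac{n}{\lambda}}$ and $\ell_{> \nicefrac{n}{\lambda}}$, up to measure-zero events). This induces corresponding decompositions $\poprisk = \poprisk^{-} + \poprisk^{+}$ and $\emprisk = \emprisk^{-} + \emprisk^{+}$.

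The truncated loss $\ell^{-}$ is non-negative with range contained in $[0,\nicefrac{n}{\lambda}]$, and it is still an i.i.d.\ functional of the training data, so the fast-rate bound of \Cref{th:fast_rate_bound_strong} applies to it verbatim. Using the rescaling in \Cref{rem:fast_rate_strong_bounded_b} with $b = \nicefrac{n}{\lambda}$, for every $\beta \in (0,1)$, with probability at least $1-\beta$,
\begin{equation*}
    \bE^{S}\big[\poprisk^{-}(W)\big] \leq \kappa_{1}\bE^{S}\big[\emprisk^{-}(W,S)\big] + \kappa_{2}\cdot\frac{\relent(\bP_{W}^{S}\Vert\bQ)+\log\frac{\xi(n)}{\beta}}{\lambda} + \kappa_{3}\cdot\frac{n}{\lambda}
\end{equation*}
holds simultaneously for every posterior $\bP_{W}^{S}$, every $c \in (0,1]$, and every $\gamma > 1$, which already accounts for the first three terms of the claim.

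The remainder $\poprisk^{+}$ is handled deterministically. Since $\bP_{W}^{S=s}$ is a probability measure on $\cW$ for every realization $s$, Tonelli's theorem and the layer-cake representation give, for every $w \in \cW$,
\begin{equation*}
    \poprisk^{+}(w) = \bE_{Z}\big[[\ell(w,Z)-\nicefrac{n}{\lambda}]_{+}\big] = \int_{\nicefrac{n}{\lambda}}^{\infty}\bP_{Z}\big[\ell(w,Z)>t\big]\,\rmd t,
\end{equation*}
and integrating against $\bP_{W}^{S}$ yields $\bE^{S}[\poprisk^{+}(W)] = \int_{\nicefrac{n}{\lambda}}^{\infty}\bP^{S}[\ell(W,Z)>t]\,\rmd t$ almost surely. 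The proof then finishes by adding $\bE^{S}[\poprisk(W)] = \bE^{S}[\poprisk^{-}(W)] + \bE^{S}[\poprisk^{+}(W)]$, combining the high-probability bound on the first summand with the deterministic identity on the second.

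The only subtle point, which I would flag as the main obstacle, is making sure that none of the simultaneous quantifiers are broken by the decomposition: the probability-$(1-\beta)$ event is produced by one single invocation of \Cref{th:fast_rate_bound_strong} on $\ell^{-}$ (with $\lambda$ treated as a fixed parameter, not union-bounded over), while the layer-cake step is pathwise and introduces no extra randomness; this is what allows the final bound to hold simultaneously for all posteriors $\bP_{W}^{S}$ and for all $c \in (0,1]$ and $\gamma > 1$. A minor bookkeeping issue is the distinction between $\ell_{<\nicefrac{n}{\lambda}}$ (the paper's decomposition) and $\ell^{-}_{\nicefrac{n}{\lambda}}$: they differ on the event $\{\ell = \nicefrac{n}{\lambda}\}$, and the cleaner equality above uses $\ell^{-}$; this mismatch is absorbed into the bounded-range constant and does not affect the final inequality.
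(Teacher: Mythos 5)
Your two main steps are sound as far as they go: applying \Cref{th:fast_rate_bound_strong} through \Cref{rem:fast_rate_strong_bounded_b} with $b=\nicefrac{n}{\lambda}$ to the clipped loss $\ell^{-}_{\nicefrac{n}{\lambda}}=\min\{\ell,\nicefrac{n}{\lambda}\}$, combined with the pointwise identity $\poprisk(w)=\poprisk^{-}_{\nicefrac{n}{\lambda}}(w)+\int_{\nicefrac{n}{\lambda}}^{\infty}\bP_Z\big[\ell(w,Z)>t\big]\,\rmd t$ and Tonelli, does give, on a single probability-$(1-\beta)$ event and simultaneously in $(\bP_W^S,c,\gamma)$, the inequality with $\kappa_1\,\bE^S\big[\emprisk^{-}_{\nicefrac{n}{\lambda}}(W,S)\big]$ on the right; your handling of the quantifiers is also correct. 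The genuine gap is your closing ``bookkeeping'' claim. The lemma's $\emprisk_{\le\nicefrac{n}{\lambda}}$ is the empirical risk of $\ell\cdot\bI_{\{\ell\le\nicefrac{n}{\lambda}\}}$ (the paper's refinement, which \emph{zeroes out} losses above the threshold), and this differs from $\ell^{-}_{\nicefrac{n}{\lambda}}$ by $\nicefrac{n}{\lambda}\cdot\bI_{\{\ell>\nicefrac{n}{\lambda}\}}$ on the whole event $\{\ell>\nicefrac{n}{\lambda}\}$, not merely on $\{\ell=\nicefrac{n}{\lambda}\}$. Since $\emprisk_{\le\nicefrac{n}{\lambda}}\le\emprisk^{-}_{\nicefrac{n}{\lambda}}$, swapping them \emph{decreases} the right-hand side, so your bound does not imply the stated one, and the discrepancy cannot be ``absorbed into the bounded-range constant''.

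The paper's own route is different from yours: it uses the indicator decomposition $\ell=\ell_{<\nicefrac{n}{\lambda}}+\ell_{\ge\nicefrac{n}{\lambda}}$ and applies the fast-rate bound to $\ell_{<\nicefrac{n}{\lambda}}\in[0,\nicefrac{n}{\lambda}]$, which is what produces $\kappa_1\,\bE^S\big[\emprisk_{\le\nicefrac{n}{\lambda}}\big]$; the price then appears in the remainder, because the layer-cake evaluation of $\bE^S\big[\poprisk_{\ge\nicefrac{n}{\lambda}}(W)\big]=\bE^S\big[\ell(W,Z)\,\bI_{\{\ell(W,Z)\ge\nicefrac{n}{\lambda}\}}\big]$ equals $\nicefrac{n}{\lambda}\cdot\bP^S\big[\ell(W,Z)\ge\nicefrac{n}{\lambda}\big]+\int_{\nicefrac{n}{\lambda}}^{\infty}\bP^S\big[\ell(W,Z)>t\big]\,\rmd t$, i.e.\ it carries an extra mass term beyond the tail integral displayed in the lemma (the identity claimed in~\eqref{eq:tail_bound_remainder} silently drops it). Hence, to reach the statement one must either keep the clipped empirical risk, as your derivation does, or keep the extra $\nicefrac{n}{\lambda}\,\bP^S[\ell\ge\nicefrac{n}{\lambda}]$ term; that this term is not optional for the zeroed-out reading can be seen from the constant loss $\ell\equiv 2n/\lambda$ with $\bP_W^S=\bQ$, $c=1$, $\gamma=2$ and $n$ large, where the displayed inequality fails. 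Note the omission is quantitatively mild downstream: bounding the extra term by Markov's inequality only changes the tail constant $\nicefrac{1}{(p-1)}$ into $\nicefrac{p}{(p-1)}$ in \Cref{lemma:alquier_truncation_method_refined_bouned_moment}, but as the lemma is written, the truncation swap is exactly the step that fails in your proposal.
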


\looseness=-1 \Cref{lemma:alquier_truncation_method_refined} has two attractive properties. The first one is that it is a general-purpose lemma that holds for any type of loss. Letting $\lambda \in \Theta(\sqrt{n})$ guarantees that the bias of the truncated empirical risk at a truncation point in $\Theta(\sqrt{n})$ has a rate in $\cO(\nicefrac{1}{\sqrt{n}})$ plus the probability of the loss' tail from a point in $\Theta(\sqrt{n})$ onward. The second attractive property is that, if the tail of the loss is bounded by some function $f(n,\lambda)$, then the posterior optimizing the bound is the Gibbs posterior $\sfp_W^{S=s}(w) \propto \sfq(w) e^{- \lambda \cdot \frac{\kappa_1}{\kappa_2} \cdot \emprisk_{\leq \nicefrac{n}{\lambda}}(w,s)}$ and it is independent of the tail bound.

In particular, when the loss has a $p$-th moment bounded by $m_p$, the tail is bounded by~\eqref{eq:unbounded_remainder_moment_tail_bound} and 
\Cref{lemma:alquier_truncation_method_refined} can be written as follows.

\begin{lemma}
    \label{lemma:alquier_truncation_method_refined_bouned_moment}
    Consider a loss $\ell(w,Z)$ with a $p$-th moment bounded by $m_p$ for all $w \in \cW$. Then, for every $\beta \in (0,1)$ and all $\lambda > 0$, with probability no smaller than $1 - \beta$
    \begin{align*}
       \bE^S &\big[\poprisk(W)\big] \leq \\
       &\kappa_1 \cdot \bE^S \big[\emprisk_{\leq \nicefrac{n}{\lambda}} (W,S) \big] + \kappa_2 \cdot \frac{\relent(\bP_W^S \Vert \bQ) + \log \frac{\xi(n)}{\beta}}{\lambda} + \kappa_3 \cdot \frac{n}{\lambda}+ \frac{m_p}{p-1} \Big( \frac{\lambda}{n} \Big)^{p - 1}
    \end{align*}
    holds \emph{simultaneously} for all posteriors $\bP_W^S$, all $c \in (0,1]$ and all $\gamma > 1$.
\end{lemma}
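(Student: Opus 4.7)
The plan is to substitute a deterministic upper bound on the tail integral $\int_{n/\lambda}^{\infty} \bP^S[\ell(W,Z) > t]\,\rmd t$ appearing in Lemma~\ref{lemma:alquier_truncation_method_refined} using the $p$-th moment assumption. The remaining three terms in that lemma already match the first three terms in the target statement, so the entire task reduces to controlling this tail integral uniformly in $w$ and in the training set $S$.

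First, I would fix any $w \in \cW$ and apply Markov's inequality to the non-negative random variable $\ell(w,Z)^p$: for every $t > 0$,
\begin{equation*}
    \bP\big[\ell(w,Z) > t\big] = \bP\big[\ell(w,Z)^p > t^p\big] \leq \frac{\bE\big[\ell(w,Z)^p\big]}{t^p} \leq \frac{m_p}{t^p},
\end{equation*}
where the last inequality uses the assumption that the $p$-th moment of $\ell(w,Z)$ is bounded by $m_p$ uniformly over $w \in \cW$. Integrating over $t \in [n/\lambda, \infty)$, exactly as in equation~\eqref{eq:unbounded_remainder_moment_tail_bound}, yields
\begin{equation*}
    \int_{n/\lambda}^{\infty} \bP\big[\ell(w,Z) > t\big]\,\rmd t \leq \int_{n/\lambda}^{\infty} \frac{m_p}{t^p}\,\rmd t = \frac{m_p}{p-1}\Big(\frac{\lambda}{n}\Big)^{p-1}.
\end{equation*}
Crucially, the right-hand side depends neither on $w$ nor on the training set $S$, so the bound is deterministic.

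Second, I would lift the uniform bound above to the posterior-averaged quantity appearing in Lemma~\ref{lemma:alquier_truncation_method_refined}. By Tonelli's theorem, $\int_{n/\lambda}^{\infty} \bP^S[\ell(W,Z) > t]\,\rmd t = \bE^S\big[\int_{n/\lambda}^{\infty} \bP[\ell(w,Z) > t]\,\rmd t\big|_{w=W}\big]$, and since the integrand is bounded above by $\frac{m_p}{p-1}(\lambda/n)^{p-1}$ uniformly in $w$, the same bound survives the expectation with respect to $\bP_W^S$ for every realization of $S$.

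Finally, I would substitute this deterministic upper bound into Lemma~\ref{lemma:alquier_truncation_method_refined}. Because the substitution replaces one term by a quantity that is larger almost surely, the event on which the resulting inequality fails is contained in the failure event of the parent lemma, whose probability is at most $\beta$. The simultaneity over all posteriors $\bP_W^S$, all $c \in (0,1]$, and all $\gamma > 1$ is inherited directly from Lemma~\ref{lemma:alquier_truncation_method_refined}, completing the argument. The proof is essentially a plug-in, so there is no real obstacle; the only point to verify is that the tail bound can be kept deterministic (hence consumes no probability budget), which is precisely what the uniform $p$-th moment assumption buys.
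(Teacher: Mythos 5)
Your proposal is correct and matches the paper's own argument: the lemma is obtained by bounding the tail integral in Lemma~\ref{lemma:alquier_truncation_method_refined} via Markov's inequality applied to $\ell(w,Z)^p$, integrating to get the deterministic term $\frac{m_p}{p-1}(\nicefrac{\lambda}{n})^{p-1}$ exactly as in~\eqref{eq:unbounded_remainder_moment_tail_bound}, and substituting it in. Your added remarks on Tonelli and on the bound consuming no probability budget are exactly the (implicit) justifications the paper relies on.
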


\begin{remark}
    \label{rem:alternative_choice_of_loss}
    \citet{alquier2006transductive} discussed the possibility of modifying the truncation method specifically for losses with a bounded moment. When combined with our fast-rate bound from~\Cref{th:fast_rate_bound_strong}, this modification results in better constants while maintaining the rate. In the interest of the simplicity of the discussion, we defer this to~\Cref{app:alternative_choice_of_loss}, while noting that all the developments in this section can be applied in this situation almost verbatim. 
\end{remark}

Before studying the effect of the optimization of the parameter $\lambda$, it is important to reiterate that the term $\nicefrac{\kappa_3 n}{\lambda}$ does not affect the bound's rate as choosing $c = 1$ implies $\kappa_3 = 0$, and the bounds hold \emph{simultaneously} for all values of $c \in (0,1]$. Therefore, in all rate discussions henceforth, we may always assume that $\kappa_3 = 0$ for simplicity.

\citet{alquier2006transductive, alquier2021user} considered the \emph{data-independent} $\lambda = \sqrt{n}$. This gives a bound with a rate of $\nicefrac{1}{\sqrt{n}}$ for any loss with a bounded $p$-th moment where $p > 2$. A better choice is $\lambda = \big( \nicefrac{n^{p-1}}{m_p} \big)^{\nicefrac{1}{p}}$. This results in a bound with a rate of $n^{- \frac{p - 1}{p}}$.

\begin{theorem}
    \label{th:alquier_truncation_method_refined_fixed_lambda}
    Consider a loss $\ell(w,Z)$ with a  $p$-th moment bounded by $m_p$ for all $w \in \cW$. Then, for every $\beta \in (0,1)$, with probability no smaller than $1 - \beta$
    \begin{align*}
       \bE^S \big[\poprisk(W) \big] \leq \kappa_1 \cdot &\bE\big[ \emprisk_{\leq (m_p n)^{\frac{1}{p}}}(W,S)\big] \\
       &+ \Big( \frac{m_p}{n^{p-1}} \Big)^{\frac{1}{p}} \Big( \kappa_2 \cdot \Big( \relent(\bP_W^S \Vert \bQ) + \log \frac{\xi(n)}{\beta} \Big) + \kappa_3 \cdot n + \frac{1}{p-1} \Big)
    \end{align*}
    holds \emph{simultaneously} for all posteriors $\bP_W^S$, all $c \in (0,1]$ and all $\gamma > 1$.
\end{theorem}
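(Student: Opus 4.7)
The plan is essentially to apply Lemma \ref{lemma:alquier_truncation_method_refined_bouned_moment} with a carefully chosen, \emph{data-independent} value of $\lambda$, and then verify that the resulting expression matches the claimed bound. Since the lemma holds simultaneously for all posteriors $\bP_W^S$, all $c \in (0,1]$, and all $\gamma > 1$, for each fixed $\lambda > 0$, fixing one specific deterministic choice of $\lambda$ preserves the probabilistic statement without any union-bound cost. The optimal $\lambda$ should trade off the complexity term $\kappa_2 (\relent + \log \tfrac{\xi(n)}{\beta})/\lambda$ (which decreases in $\lambda$) against the tail contribution $\tfrac{m_p}{p-1}(\lambda/n)^{p-1}$ (which increases in $\lambda$). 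Balancing the first term's scaling $\lambda^{-1}$ with the tail's scaling $\lambda^{p-1} n^{-(p-1)}$ gives the guess $\lambda \propto (n^{p-1}/m_p)^{1/p}$.

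The main step is to substitute $\lambda = (n^{p-1}/m_p)^{1/p}$ into Lemma \ref{lemma:alquier_truncation_method_refined_bouned_moment} and simplify each term. First, the truncation point becomes
\begin{equation*}
    \frac{n}{\lambda} \;=\; n \cdot \mleft( \frac{m_p}{n^{p-1}} \mright)^{1/p} \;=\; (m_p n)^{1/p},
\end{equation*}
so $\emprisk_{\leq n/\lambda} = \emprisk_{\leq (m_p n)^{1/p}}$, matching the theorem's first summand. For the complexity term, $1/\lambda = (m_p / n^{p-1})^{1/p}$, so it becomes $(m_p/n^{p-1})^{1/p} \cdot \kappa_2 \cdot (\relent(\bP_W^S \Vert \bQ) + \log \tfrac{\xi(n)}{\beta})$, as desired. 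For the $\kappa_3$ term, $\kappa_3 \cdot n/\lambda = \kappa_3 (m_p n)^{1/p} = (m_p/n^{p-1})^{1/p} \cdot \kappa_3 \cdot n$. Finally, for the tail term,
\begin{equation*}
    \frac{\lambda}{n} \;=\; (m_p n)^{-1/p}, \qquad \text{so} \qquad \frac{m_p}{p-1} \mleft( \frac{\lambda}{n} \mright)^{p-1} \;=\; \frac{m_p^{1/p}}{(p-1) n^{(p-1)/p}} \;=\; \mleft( \frac{m_p}{n^{p-1}} \mright)^{1/p} \cdot \frac{1}{p-1}.
\end{equation*}
Factoring out the common $(m_p/n^{p-1})^{1/p}$ from the last three terms yields exactly the stated bound.

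There is essentially no obstacle, since Lemma \ref{lemma:alquier_truncation_method_refined_bouned_moment} already packages the truncation argument and the fast-rate bound from Theorem \ref{th:fast_rate_bound_strong}, and the optimization is over a deterministic constant rather than a random quantity (so the event-space quantization technique from Section \ref{subsubsec:optimizing_parameters_in_probabilistic_statements} is not needed here). The only care required is to confirm that choosing this specific $\lambda$ is admissible: it does not depend on $S$, $W$, or the posterior, so the simultaneous-over-posteriors guarantee of the lemma transfers verbatim. The statement that the bound holds simultaneously for all $c \in (0,1]$ and all $\gamma > 1$ is inherited directly from the lemma, since the $\kappa_i$ are functions only of those two parameters and do not interact with the choice of $\lambda$.
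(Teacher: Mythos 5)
Your proposal is correct and matches the paper's own argument: the paper obtains this theorem precisely by substituting the data-independent choice $\lambda = (n^{p-1}/m_p)^{1/p}$ into \Cref{lemma:alquier_truncation_method_refined_bouned_moment} and factoring out $(m_p/n^{p-1})^{1/p}$, exactly as you do. Your algebra for the truncation point, complexity, $\kappa_3$, and tail terms checks out, and your remark that no event-space quantization is needed because $\lambda$ is deterministic is the same observation the paper makes before reserving that technique for the adaptive-$\lambda$ version.
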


In this way, the rate for $p = 2$ is exactly the same, a slow rate of $\nicefrac{1}{\sqrt{n}}$. However, as the order of the known bounded moment increases, that is $p \to \infty$, the rate becomes a fast rate of $\nicefrac{1}{n}$. Hence, this choice of $\lambda$ allows us to interpolate between a slow and a fast rate depending on how much knowledge about the tails is available to us. Furthermore, as we gain knowledge of the tails, the truncation of the loss $\ell_{\leq (m_p n)^{\nicefrac{1}{p}}}$ becomes less dependent on the number of training data $n$ and in the limit $p \to \infty$ only depends of the $\bP_Z$-a.s. boundedness of the loss, namely $\lim_{p \to \infty} (m_p n)^{\nicefrac{1}{p}} = \sup_{w \in \cW} \esssup \ell(w,Z)$. Unfortunately, in this way, the bound always depends on the dependency term $\relent(\bP_W^S \Vert \bQ)$ linearly.

 Instead of choosing a data-independent parameter $\lambda$, we can use the event space quantization technique from~\Cref{subsec:losses_with_bounded_CGF} to get a better dependence on the relative entropy. In particular, \Cref{th:pac_bayes_chernoff_analogue_no_cutoff} follows by not considering any ``uninteresting event'' and following the technique as outlined in~\Cref{th:pac_bayes_chernoff_analogue_no_cutoff}. Optimizing the parameter in this way readily results in the desired result from~\Cref{th:alquier_truncation_method_refined_adaptive_lambda}. In this way, the rate is maintained, while the dependence on the relative entropy changed from linear to polynomial of order $\nicefrac{(p-1)}{p}$. For order $p = 2$, this corresponds to the square root and only goes to the linear case when $p \to \infty$, when we also achieve a fast rate of $\nicefrac{1}{n}$.

\looseness=-1  Moreover, following the insights about the implications to the design of posterior distributions from~\Cref{subsec:losses_with_bounded_CGF}, we may use~\Cref{th:alquier_truncation_method_refined_adaptive_lambda} to obtain an equivalent result, but in the form of~\Cref{lemma:alquier_truncation_method_refined_bouned_moment} that holds \emph{simultaneously} for all $\lambda$.

 \begin{theorem}
    \label{th:alquier_truncation_method_refined_simultaneously_all_lambda}
    Consider a loss $\ell(w,Z)$ with a $p$-th moment bounded by $m_p$ for all $w \in \cW$. Then, for every $\beta \in (0,1)$, with probability no smaller than $1 - \beta$
    \begin{align*}
       \bE^S &\big[\poprisk(W)\big] \leq \\
       &\kappa_1 \cdot \bE \big[\emprisk_{\leq \nicefrac{n}{\lambda}}(W,S) \big] + \kappa_2 \cdot \frac{\relent(\bP_W^S \Vert \bQ) + \log \frac{10 e \pi^2 \xi(n)}{\beta}}{\lambda} + \kappa_3 \cdot \frac{n}{\lambda}+ \frac{m_p}{p-1} \Big( \frac{\lambda}{n} \Big)^{p - 1}
    \end{align*}
    holds \emph{simultaneously} for all posteriors $\bP_W^S$, all $\lambda > 0$, all $c \in (0,1]$, and all $\gamma > 1$.
\end{theorem}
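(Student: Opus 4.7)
The strategy is to derive Theorem~\ref{th:alquier_truncation_method_refined_simultaneously_all_lambda} by mirroring the proof of Theorem~\ref{th:pac_bayes_chernoff_analogue_no_cutoff}, applied to Lemma~\ref{lemma:alquier_truncation_method_refined_bouned_moment} instead of Lemma~\ref{lemma:extension_banerjee}, and invoking the infimum--simultaneous equivalence spelled out in Section~\ref{subsubsec:implications_learning_posteriors}: a probabilistic bound of the form $A \leq \inf_\lambda B(\lambda)$ is tautologically the same statement as $A \leq B(\lambda)$ holding simultaneously for all $\lambda$, provided the minimizer is data-independent. Since the $\lambda$-optimization that produced the parameter-free Theorem~\ref{th:alquier_truncation_method_refined_adaptive_lambda} is data-independent within each bucket of the event-space quantization, the parametric form follows once one retains the quantization but stops short of the final $\lambda$-minimization.

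Concretely, I would partition the probability space into $\cE_1 = \{\relent(\bP_W^S \Vert \bQ) \leq 1\}$ and $\cE_k = \{\lceil \relent(\bP_W^S \Vert \bQ) \rceil = k\}$ for $k \geq 2$, with $\cK = \{k \in \bN : \bP[\cE_k] > 0\}$, and within each bucket $\cE_k$ invoke Lemma~\ref{lemma:alquier_truncation_method_refined_bouned_moment} with failure probability $\beta_k = 6\beta/(\pi^2 k^2)$. Within $\cE_k$, upper-bound $\relent \leq k$ so that the RHS becomes data-independent, pick the minimizer $\lambda_k^\star$ (now a function of $k$ alone), and apply the lemma at this $\lambda_k^\star$; by the infimum--simultaneous equivalence, this is the same as asserting that the $k$-replaced RHS bounds the population risk for every $\lambda > 0$ simultaneously. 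A union bound over $k$ costs $\sum_k \beta_k = \beta$ and yields a single probabilistic statement of confidence $1-\beta$. Using $k \leq \relent + 1$ within $\cE_k$ then converts the $k$-replaced RHS back into a data-dependent bound whose logarithmic factor is $\log(\pi^2 (\relent + 1)^2 \xi(n) /(6\beta))$.

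The final step is cosmetic: linearize the quadratic-in-$\relent$ term $\log((\relent+1)^2)$ using an envelope of the form $(\relent+1)^2 \leq C\, e^{a\relent}$ (which holds for appropriate constants since $(x+1)^2 e^{-ax}$ is bounded on $\bR_+$ for any $a > 0$), and absorb the resulting affine expression into the $\relent$ already present in the complexity term and into the free numerical constant inside the logarithm, landing on the clean factor $\log(10 e \pi^2 \xi(n)/\beta)$ stated in the theorem. The main obstacle is this bookkeeping of constants through the linearization step, exactly analogous to the passage from Theorem~\ref{th:pac_bayes_chernoff_analogue_no_cutoff} to Corollary~\ref{cor:pac_bayes_chernoff_analogue_no_cutoff_linearized}: the probabilistic content is entirely carried by the event-space quantization, while the infimum--simultaneous equivalence handles the ``all $\lambda > 0$'' quantifier for free, so what remains is to choose the envelope's tangent point so that the resulting coefficients match those appearing in the statement.
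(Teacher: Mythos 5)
Your route is, in substance, the paper's own: the paper proves the parameter-free \Cref{th:alquier_truncation_method_refined_adaptive_lambda} by exactly the $\relent$-bucketing with $\beta_k=\nicefrac{6\beta}{\pi^2 k^2}$ that you describe (mirroring \Cref{th:pac_bayes_chernoff_analogue_no_cutoff} and the linearization of \Cref{cor:pac_bayes_chernoff_analogue_no_cutoff_linearized}), and then presents the present theorem as the ``un-optimized'' re-parameterization of that result via the infimum--simultaneous equivalence. Note also that your bookkeeping necessarily lands on a complexity of the form $1.1\,\relent(\bP_W^S\Vert\bQ)+\log\frac{10e\pi^2\xi(n)}{\beta}$, i.e.\ the one of \Cref{th:alquier_truncation_method_refined_adaptive_lambda}; the coefficient $1$ on $\relent$ in the displayed statement cannot be produced by the envelope step (an excess $0.1\relent$ cannot be absorbed into a constant), so do not try to match it exactly.

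There is, however, one step in your write-up that does not hold as stated, and it is the step carrying the entire ``simultaneously for all $\lambda>0$'' content. Bounding $\relent\leq k$ on $\cE_k$ does \emph{not} make the right-hand side data-independent: the term $\kappa_1\,\bE^S\big[\emprisk_{\leq \nicefrac{n}{\lambda}}(W,S)\big]$ depends on the data and the posterior and varies with $\lambda$. The only quantity whose minimizer is a function of $k$ alone is the complexity-plus-tail part $\kappa_2\frac{k+\log(\cdot)}{\lambda}+\kappa_3\frac{n}{\lambda}+\frac{m_p}{p-1}(\nicefrac{\lambda}{n})^{p-1}$ (this is precisely how $t^\star$ in \Cref{th:alquier_truncation_method_refined_adaptive_lambda} arises), and \Cref{lemma:alquier_truncation_method_refined_bouned_moment} can only be invoked at that single, data-free $\lambda_k^\star$. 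What you then have is ``population risk $\leq B_k(\lambda_k^\star)$'', which is strictly weaker than ``population risk $\leq \inf_{\lambda>0}B_k(\lambda)$'' because $\lambda_k^\star$ does not minimize the full $B_k$: for $\lambda>\lambda_k^\star$ the truncation point $\nicefrac{n}{\lambda}$ shrinks and $\emprisk_{\leq \nicefrac{n}{\lambda}}$ can fall well below $\emprisk_{\leq \nicefrac{n}{\lambda_k^\star}}$, so the bound you established need not dominate $B_k(\lambda)$ there. Hence the infimum--simultaneous equivalence cannot be invoked at that point of the argument; it only converts a bound by the true infimum of the full right-hand side into a simultaneous statement. (The paper makes the same leap when declaring the two theorems ``equivalent'', but since you spell the step out you should repair it.) Monotonicity of the truncated risk does give you all $\lambda\leq\lambda_k^\star$ for free; to get genuine uniformity over all $\lambda>0$ you would need, e.g., to additionally apply the lemma along a geometric grid of $\lambda$ with failure probabilities summing to $\beta$ and stitch between grid points, which costs a factor of roughly $2^{p-1}$ on the tail term or an extra logarithmic term in the complexity.
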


 From \Cref{th:alquier_truncation_method_refined_simultaneously_all_lambda}, we understand that the posterior that optimizes both \Cref{th:alquier_truncation_method_refined_adaptive_lambda,th:alquier_truncation_method_refined_simultaneously_all_lambda} is the Gibbs posterior $\sfp_W^{S=s}(w) \propto \sfq(w) e^{- \lambda \cdot \frac{\kappa_1}{\kappa_2} \emprisk_{\leq \nicefrac{n}{\lambda}}(w,s)}$, where now $c$, $\lambda$, and $\gamma$ can be chosen \emph{adaptively after} observing the realization of the data $s$. This way, the choice of the parameter $\lambda$ can be made to optimize the bound emerging from that data realization. On the other hand, the Gibbs distribution emerging from the optimization of \Cref{lemma:alquier_truncation_method_refined} needs to commit to a \emph{fixed} parameter $\lambda$ \emph{before} observing the training data and is data-independent.

\subsubsection{The Case $p \to \infty$ and Essentially Bounded Losses}
\label{subsubsec:case_p_infinity}

So far, in this section, we only considered the algorithm-independent condition of losses with a bounded $p$-th moment $\bE \ell(w,Z)^p$ for all $w \in \cW$. This condition only depends on the loss and the problem distribution $\bP_Z$. Nonetheless, all the previous results can be replicated under the weaker condition that the loss has a bounded $p$-th moment with respect to the algorithm's output, that is, that $m_p' \coloneqq \bE^S \ell(W,Z)^p$ is bounded $\bP_S$-a.s.

As discussed in~\Cref{subsec:interpolating_between_slow_and_fast_rate}, although this condition is weaker, it is harder to guarantee as it requires some knowledge of the data distribution $\bP_Z$ \emph{and} the algorithm's Markov kernel $\bP_W^S$. This knowledge could instead be used to directly find a bound on $\poprisk = \bE^S \ell(W,Z)$. 
However, results under this condition can be useful in some situations. For example, they can be used to derive new results for losses with a bounded variance (as will be seen shortly and as shown in~\Cref{subsec:interpolating_between_slow_and_fast_rate}) and to obtain more meaningful findings when $p \to \infty$.

\Cref{th:alquier_truncation_method_refined_adaptive_lambda}, when specialized to $p \to \infty$, gives us a fast-rate result when the loss is $\bP_Z$-a.s. bounded, that is, when $\esssup \ell(w,Z) < \infty$ for all $w \in \cW$. This condition of the loss being $\bP_Z$-\emph{essentially bounded} can be a strong requirement, similar to the one of bounded losses.
However, when we have more information about the algorithm, then we can obtain a fast-rate result when the loss is $\bP_{W,S} \otimes \bP_Z$-a.s. bounded, that is, when $\esssup \ell(W,Z) < v$. 
This condition is much weaker than the previous essential boundedness or just boundedness of the loss. Namely, one needs to know that the algorithm is such that $\bP [ \ell(W,Z) < v ] = 1$. As an example, consider the squared loss $\ell(w,z) = (w - z)^2$ and some data that belongs to some interval of length 1 with probability 1, that is $\bP [ Z \in [c, c+1] ] =1$, but where we ignore the offset $c$. Consider $w \in \bR$, the simple algorithm that returns the average of the training instances $\bA(s) = \sum_{i=1}^n \nicefrac{z_i}{n}$ ensures that $\esssup \ell(W,Z) < 1$, while $\sup_{w \in \bR} \esssup \ell(w,Z) \to \infty$.

\subsubsection{Losses With a Bounded Variance}

A particularly important case is the one of losses with a bounded second moment. \Cref{th:alquier_truncation_method_refined_adaptive_lambda} obtains the expected slow rate of $\sqrt{\nicefrac{m_2\big(\relent(\bP_W^S \Vert \bQ) + \log \frac{\xi(n)}{\beta}\big)}{n}}$. Similarly, in~\citep{rodriguez2023morepac}, we employ again the events' space quantization technique to obtain a parameter-free PAC-Bayesian bound for losses with a bounded second moment optimizing the parameter in \citep[Theorem 2.4]{wang2015pac} or \citep[Theorem 2.1]{haddouche2023pacbayes}. The resulting bound is similar to the one of \citet[Corollary 1]{kuzborskij2019efron}. The proof and further details, including bounds for martingale sequences and non-i.i.d. data, are in \Cref{app:closed_form_parameter_free_wang}. 

\begin{theorem}
    \label{th:parameter_free_anytime_valid_bounded_2nd_moment}
    Let $\bQ$ be any prior independent of $S$ and define $\xi'(n) \coloneqq 2 e n (n+1)^2 \log(en)$ and the events $\cE_n \coloneqq \{\sigma^2_n \ \relent(\bP_W^S \Vert \bQ) \leq n \}$, where $\Sigma_n^2 \coloneqq \frac{1}{n} \sum_{i=1}^n \bE^S [ \ell(W,Z_i)^2 + 2 \ell(W,Z')^2 + 1]$ for all $n \in \bN$. Then, for every $\beta \in (0,1)$, with probability no smaller than $1-\beta$
    \begin{align*}
        \bE^S \big[ \poprisk(W) \big] \leq &\bI_{\cE_n} \cdot \left[ \bE^S \big[ \emprisk(W,S) \big]] + \frac{2}{\sqrt{6}} \cdot \sqrt{  \Sigma^2_n \Bigg( \frac{\relent(\bP_W^{S} \Vert \bQ) + \log \frac{\xi'(n)}{\beta} }{n}\Bigg)} \right] \\
        &+ \bI_{\cE_n^c} \cdot \esssup  \bE^S \big[ \poprisk(W) \big]
    \end{align*}
    holds \emph{simultaneously} for all posteriors $\bP_W^S$.
\end{theorem}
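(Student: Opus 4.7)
The plan is to first invoke a parametric second--moment PAC-Bayesian bound---either \citet[Theorem 2.4]{wang2015pac} or \citet[Theorem 2.1]{haddouche2023pacbayes}---which, for every fixed $\lambda$ in a prescribed range and every $\beta\in(0,1)$, yields with probability at least $1-\beta$ an inequality of the schematic form
\begin{equation*}
\bE^S\big[\poprisk(W)\big] \leq \bE^S\big[\emprisk(W,S)\big] + \frac{\lambda}{2}\,\Sigma_n^2 + \frac{\relent(\bP_W^S\Vert\bQ)+\log(1/\beta)}{\lambda n}
\end{equation*}
holding \emph{simultaneously} for every posterior $\bP_W^S$ (the peculiar definition of $\Sigma_n^2$, mixing empirical and independent-copy second moments with a $+1$ offset, is exactly what is needed so that the exponential moment term of the underlying change-of-measure step is controlled without prior knowledge of the variance). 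Balancing the two $\lambda$-dependent terms gives the oracle choice $\lambda^\star\propto\sqrt{(\relent(\bP_W^S\Vert\bQ)+\log(1/\beta))/(n\Sigma_n^2)}$, which depends on the dataset through both the posterior and $\Sigma_n^2$, so it cannot be committed to before observing~$S$.

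Next, I would apply the events' space quantization technique developed in the proofs of \Cref{th:pac_bayes_chernoff_analogue,th:pac_bayes_chernoff_analogue_no_cutoff}, adapted to the two-dimensional product $\Sigma_n^2\cdot\relent(\bP_W^S\Vert\bQ)$ rather than $\relent(\bP_W^S\Vert\bQ)$ alone. Restricted to the cutoff event $\cE_n=\{\Sigma_n^2\cdot\relent(\bP_W^S\Vert\bQ)\leq n\}$, I would build a grid of disjoint sub-events indexed by a pair $(j,k)$ that traps $\Sigma_n^2$ and $\relent(\bP_W^S\Vert\bQ)$ separately into buckets whose widths are compatible with the square-root shape of the target bound (either unit-width, as in \Cref{th:pac_bayes_chernoff_analogue}, or geometric, as in \Cref{app:pac_bayes_smaller_union_bound_cost}). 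On each bucket, the almost-optimal $\lambda$ is data-independent, so the parametric bound is applied with a budget $\beta_{j,k}$ chosen so that $\sum_{j,k}\beta_{j,k}\leq\beta$; the $\beta_{j,k}=\beta/(jk)^2$ choice, or the sharper $\beta/(k\log^2(6k))$ refinement of \Cref{rem:choice_beta_k}, is consistent with the $(n+1)^2\log(en)$ factor appearing in $\xi'(n)$. Back-substituting the bucket bounds on each piece produces the square-root expression with the constant $2/\sqrt{6}$, where the residual $\sqrt{2/3}$ factor is the quantization overhead over the exact AM--GM optimum. Outside $\cE_n$ the bound holds trivially by the definition of the essential supremum, so the two indicator pieces of the statement combine via the law of total probability.

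The main obstacle I expect is precisely the two-dimensional nature of the quantization: unlike \Cref{th:pac_bayes_chernoff_analogue_no_cutoff}, where only $\relent(\bP_W^S\Vert\bQ)$ needs to be bucketed, here $\Sigma_n^2$ is a random, posterior-dependent quantity that must be grid-ed without sacrificing the simultaneous-in-$\bP_W^S$ character of the base inequality, and one must be careful that within each cell both the almost-optimal $\lambda$ and the bounds substituted back are valid \emph{for every} posterior compatible with that cell---not merely for a single posterior. Verifying that the grid covers $\cE_n$ for all posteriors, that the measurability of the ``there exists $\bP_W^S$'' events goes through, and that the resulting union-bound cost telescopes inside $\xi'(n)=2en(n+1)^2\log(en)$ (rather than blowing up to a factor polynomial in $n$ times $\Sigma_n^2$) is the delicate bookkeeping I would have to carry out in the appendix. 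A secondary subtlety is that the factor $2/\sqrt{6}$ should be tracked through the quantization; I would likely have to fine-tune the geometric ratio of the buckets to land exactly on this constant, which is the signature that a two-step peeling (one for $\Sigma_n^2$, one for $\relent$) rather than a single coarser covering is being used.
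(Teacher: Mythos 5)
Your proposal is correct and follows essentially the same route as the paper's proof in \Cref{app:closed_form_parameter_free_wang}: start from the parametric second-moment bound of \citet{wang2015pac} (\Cref{th:wang_main_th}), discretize the events' space \emph{jointly} in the relative entropy and the second-moment proxy with unit-width ceiling buckets covering the cutoff event, optimize $\lambda$ data-independently on each cell, apply the union bound over the cells, and dispose of $\cE_n^c$ trivially via the essential supremum. The only inaccuracies are cosmetic: the constant $\nicefrac{2}{\sqrt{6}}$ is the \emph{exact} AM--GM optimum coming from the $\nicefrac{\lambda}{6}$ coefficient in Wang's bound (not a quantization overhead), and the paper uses a uniform per-cell budget $\beta$ with at most $n\log(en)$ cells inside the cutoff event, the $2e(n+1)^2$ factor in $\xi'(n)$ being inherited from Wang's anytime-valid bound together with the $+1$ bucket offsets rather than from a $\beta/(jk)^2$-type allocation.
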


Note that in the bound from \Cref{th:parameter_free_anytime_valid_bounded_2nd_moment}, the terms $\bE^S \big[ \ell(W,Z_i)^2 \big]$ are fully empirical and the term $\bE^S \big[ \ell(W,Z')^2 \big]$ accounts for the assumption that the second moment of the loss is bounded. \Cref{th:parameter_free_anytime_valid_bounded_2nd_moment} is more general than \Cref{th:pac_bayes_chernoff_analogue}, as only the knowledge of one moment is required instead of the knowledge of a function dominating the CGF, which signifies information of all the moments.

As already mentioned previously, the raw second moment can be much larger than the \emph{variance}, or central second moment. Therefore, we will focus the rest of this section on the scenario when the variance is bounded, that is $\var(w,Z) \leq \sigma^2 < \infty$ for all $w \in \cW$.

In this scenario, \citet{alquier2018simpler} and~\citet{ohnishi2021novel} derived PAC-Bayesian bounds depending on the $\chi^2$ divergence. The result from~\citet{alquier2018simpler} is originally given considering the algorithm-dependent variance $\mathrm{Var}^S(\ell(W,Z))$, which, as discussed previously, usually requires too much knowledge on the algorithm and data distributions. We therefore presented it with the algorithm-independent variance $\sigma^2 \geq \mathrm{Var}^S(\ell(W,Z))$.

\begin{theorem}[{\citet[Theorem 1]{alquier2018simpler} and \citet[Corollary 2]{ohnishi2021novel}}]
    \label{th:bounded_variance_not_high_probability}
    Consider a loss $\ell(w,Z)$ with a variance bounded by $\sigma^2$. Then, for every $\beta \in (0,1)$, with probability no smaller than $1 - \beta$, each of the inequalities
    \begin{align}
        \label{eq:bounded_variance_alquier}
        \bE^S \big[ \poprisk(W) \big] &\leq \bE^S \big[ \emprisk(W,S) \big] + \sqrt{\frac{\sigma^2 (\chi^2(\bP_W^S \Vert \bQ) + 1)}{n \beta}}, \\
        \label{eq:bounded_variance_honorio_1}
        \bE^S \big[ \poprisk(W) \big] &\leq \bE^S \big[ \emprisk(W,S) \big] + \sqrt{\frac{\sigma^2 \sqrt{\chi^2(\bP_W^S \Vert \bQ) + 1}}{n \beta}}, \textnormal{ and } \\
        \label{eq:bounded_variance_honorio_2}
        \bE^S \big[ \poprisk(W) \big] &\leq \bE^S \big[ \emprisk(W,S) \big] + \sqrt{\frac{\chi^2(\bP_W^S \Vert \bQ) + \big(\frac{\sigma^2}{\beta}\big)^2}{2n}}
    \end{align}
    hold \emph{simultaneously} for all posteriors $\bP_W^S$
\end{theorem}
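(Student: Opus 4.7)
The proof will exploit two simple observations that are common to all three inequalities. First, for $W' \sim \bQ$ independent of $S$, the empirical risk is an unbiased estimator of the population risk, so $\bE[\gen(W',S)^2]$ equals the variance of $\emprisk(W',S)$. Since the instances are i.i.d., this variance factors as $\mathrm{Var}(\ell(W',Z))/n$ and is therefore bounded by $\sigma^2/n$ by hypothesis. Second, the polynomial dependence on $\beta$ (which is why these bounds are not of high probability) will come from applying Markov's inequality~\eqref{eq:markov} to a data-dependent quantity whose mean is controlled by $\sigma^2/n$, rather than from controlling a cumulant-generating function as in~\Cref{subsec:losses_with_bounded_CGF}.

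For~\eqref{eq:bounded_variance_alquier}, I would start from a change of measure from $\bP_W^S$ to $\bQ$, namely
\begin{equation*}
    \bE^S\big[\gen(W,S)\big] = \bE_{w \sim \bQ}\mleft[\gen(w,S) \cdot \frac{\rmd \bP_W^S}{\rmd \bQ}(w)\mright],
\end{equation*}
and apply the Cauchy--Schwarz inequality to factor the right-hand side as the product of $\sqrt{\bE_{w \sim \bQ}[\gen(w,S)^2]}$ and $\sqrt{\bE_{w \sim \bQ}[(\rmd \bP_W^S/\rmd \bQ)^2]}$. The second factor is exactly $\sqrt{\chi^2(\bP_W^S \Vert \bQ) + 1}$. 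For the first factor, Fubini's theorem together with the observation above gives $\bE\big[\bE_{w \sim \bQ}[\gen(w,S)^2]\big] \leq \sigma^2/n$, so Markov's inequality yields $\bE_{w \sim \bQ}[\gen(w,S)^2] \leq \sigma^2/(n\beta)$ with probability at least $1-\beta$. Combining the two factors gives~\eqref{eq:bounded_variance_alquier}. Crucially, since the concentration step is applied to the $\bQ$-average (which does not depend on $\bP_W^S$), the resulting bound holds \emph{simultaneously} for all posteriors, exactly as required.

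For~\eqref{eq:bounded_variance_honorio_1}, the plan is to use the variational representation of the $\chi^2$ divergence from~\Cref{cor:variational_representation_chi_2} instead of a direct Cauchy--Schwarz, which gives a tighter decomposition in terms of the variance under $\bQ$ rather than the second moment under $\bQ$. Specifically, taking $g(w) = \gen(w,S)$ in~\Cref{cor:variational_representation_chi_2} and re-arranging produces
\begin{equation*}
    \bE^S[\gen(W,S)] \leq \bE_{w \sim \bQ}[\gen(w,S)] + \sqrt{\var_{w \sim \bQ}(\gen(w,S)) \cdot \chi^2(\bP_W^S \Vert \bQ)},
\end{equation*}
and I would then bound both terms by applying Markov's inequality (once to the mean and once to the variance under $\bQ$, both of whose expectations in $S$ are controlled by $\sigma^2/n$). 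The inner $\sqrt{\chi^2+1}$ shape then appears by additionally bounding $\chi^2 \leq \sqrt{\chi^2+1} \cdot \sqrt{\chi^2+1}$ and absorbing one factor into the variance estimate via a second Cauchy--Schwarz. Finally,~\eqref{eq:bounded_variance_honorio_2} follows from~\eqref{eq:bounded_variance_alquier} by Young's inequality: $\sqrt{ab} \leq \sqrt{(a+b^2)/2}$ for suitable $a,b \geq 0$, which converts the product $\sigma^2(\chi^2+1)/\beta$ inside the square root into a sum $\chi^2 + (\sigma^2/\beta)^2$ up to constants.

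The main obstacle I expect is to get the constants in~\eqref{eq:bounded_variance_honorio_1} exactly right: the variational representation of the $\chi^2$ divergence is tight only for affine witnesses, and making sure that the additive term $\bE_{w \sim \bQ}[\gen(w,S)]$ can be absorbed into the main square root without spoiling the shape of the bound requires a careful application of Markov's inequality with a probability split (e.g., $\beta/2$ for each term). Matching~\eqref{eq:bounded_variance_honorio_2} to the precise constant $1/(2n)$ via Young's inequality with the exponent $p=q=2$ is routine but must be done after deciding how to distribute $\beta$ among the Markov steps.
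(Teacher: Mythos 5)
First, note that the monograph does not prove this statement: it is imported verbatim (up to replacing the algorithm-dependent variance by $\sigma^2$) from \citet{alquier2018simpler} and \citet{ohnishi2021novel}, so there is no in-paper proof to compare against. On its own merits, your derivation of \eqref{eq:bounded_variance_alquier} is correct and is essentially the original Alquier--Guedj argument: change of measure to $\bQ$, Cauchy--Schwarz (whose second factor is $\sqrt{\chi^2(\bP_W^S \Vert \bQ)+1}$), Fubini giving $\bE\big[\bE_{w \sim \bQ}[\gen(w,S)^2]\big] \leq \nicefrac{\sigma^2}{n}$, and Markov applied to the posterior-independent quantity $\bE_{w \sim \bQ}[\gen(w,S)^2]$, which is exactly what makes the resulting event uniform over posteriors.

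The gaps are in the other two inequalities. For \eqref{eq:bounded_variance_honorio_1}, the route through \Cref{cor:variational_representation_chi_2} yields $\bE^S[\gen(W,S)] \leq \bE_{w\sim\bQ}[\gen(w,S)] + \sqrt{\var_{w \sim \bQ}(\gen(w,S))\,\chi^2(\bP_W^S \Vert \bQ)}$, and after any treatment of the data-only terms (note that Markov cannot be applied to $\bE_{w\sim\bQ}[\gen(w,S)]$ itself, which is signed; you need Jensen to pass to $\sqrt{\bE_{w\sim\bQ}[\gen(w,S)^2]}$ or Chebyshev) you land on a bound scaling like $\sqrt{\nicefrac{\sigma^2 \chi^2}{n\beta}}$ plus an additive term, i.e., the same shape as \eqref{eq:bounded_variance_alquier}. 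No manipulation of the form ``$\chi^2 \leq \sqrt{\chi^2+1}\cdot\sqrt{\chi^2+1}$ absorbed by a second Cauchy--Schwarz'' can produce the fourth-root dependence $(\chi^2(\bP_W^S \Vert \bQ)+1)^{\nicefrac{1}{4}}$: that step only replaces the divergence by something larger, whereas \eqref{eq:bounded_variance_honorio_1} is pointwise at least as tight as \eqref{eq:bounded_variance_alquier} (since $\sqrt{x+1} \leq x+1$), so it cannot be reached by weakening the argument that gives \eqref{eq:bounded_variance_alquier}; for large $\chi^2$ your route is off by a factor of order $(\chi^2)^{\nicefrac{1}{4}}$. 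It rests on a different change-of-measure inequality in \citet{ohnishi2021novel}. For \eqref{eq:bounded_variance_honorio_2}, the claimed reduction from \eqref{eq:bounded_variance_alquier} via Young's inequality is false: it would require $\nicefrac{\sigma^2(\chi^2+1)}{\beta} \leq \nicefrac{(\chi^2+(\sigma^2/\beta)^2)}{2}$, which fails, e.g., when $\sigma^2 = \beta$ (there the right-hand side of \eqref{eq:bounded_variance_honorio_2} is strictly smaller than that of \eqref{eq:bounded_variance_alquier}). Since \eqref{eq:bounded_variance_honorio_2} is sometimes strictly tighter than \eqref{eq:bounded_variance_alquier}, it cannot be a corollary of it and needs its own change-of-measure argument as well.
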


Although this bound still achieves the expected slow rate of $\nicefrac{\sigma^2}{\sqrt{n}}$ from the classical Chebyshev's inequality~\Cref{prop:chebyshev}, there are two main differences between this theorem and~\Cref{th:alquier_truncation_method_refined_adaptive_lambda} for $p = 2$. First, and most notable, the dependence with the confidence penalty $\nicefrac{1}{\beta}$ is not logarithmic, but polynomial. This can result in a loose bound when a high confidence is demanded: for example, for $\beta = 0.05$ we have that $\log \nicefrac{1}{\beta} \approx 3$ while $\nicefrac{1}{\beta} = 20$. Second, the dependency measure changed from the relative entropy $\relent$ to the chi-squared divergence $\chi^2$. The chi-squared divergence also measures the dissimilarity between the posterior $\bP_W^S$ and the prior $\bQ_W$, but it can be much larger. More precisely, recall from~\eqref{eq:relation_relent_chi} in~\Cref{subsec:f-divergences} that
\begin{equation*}
    \label{eq:relent_chi_squared_bound}
    0 \leq \relent \leq \log(1 + \chi^2) \leq \chi^2.
\end{equation*}
Moreover, no lower bound of the relative entropy $\relent$ is possible in terms of the chi-squared divergence $\chi^2$~\citep[Section 7.7]{polianskyi2022}.

Studying \Cref{th:alquier_truncation_method_refined_adaptive_lambda} with the weaker condition that $\bE^S \big[\ell(W,Z)^2 \big] \leq m_2'$ as discussed previously for the case where $p \to \infty$, we can obtain a high-probability PAC-Bayesian bound for losses with a bounded variance that has the relative entropy as the dependency measure. As discussed previously, the method and proof technique also extends to an analysis starting from \Cref{lemma:alquier_truncation_method_modified_bounded_moment} resulting in slightly different constants and using $\emprisk_{p, \nicefrac{n}{\lambda}}$ as an estimator instead of $\emprisk_{\leq \nicefrac{n}{\lambda}}$. Similarly, the method also extends to the semi-empirical bound from~\Cref{th:parameter_free_anytime_valid_bounded_2nd_moment}.

\begin{figure*}[ht]
    \centering
    \includegraphics[width=0.4\textwidth]{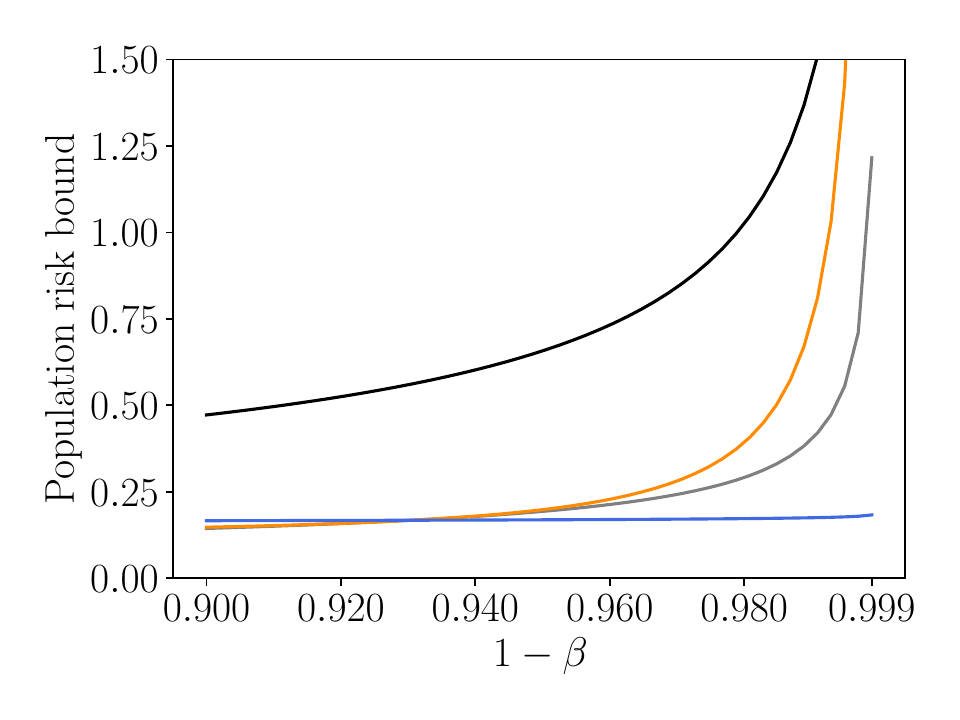}
    \includegraphics[width=0.4\textwidth]{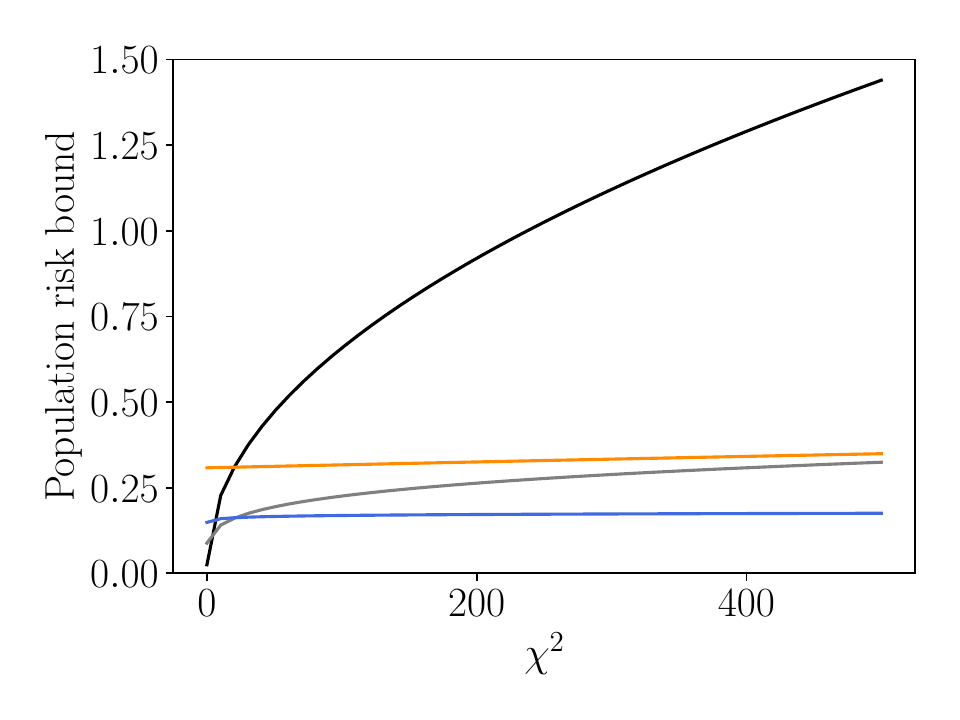}
    \includegraphics[width=0.4\textwidth]{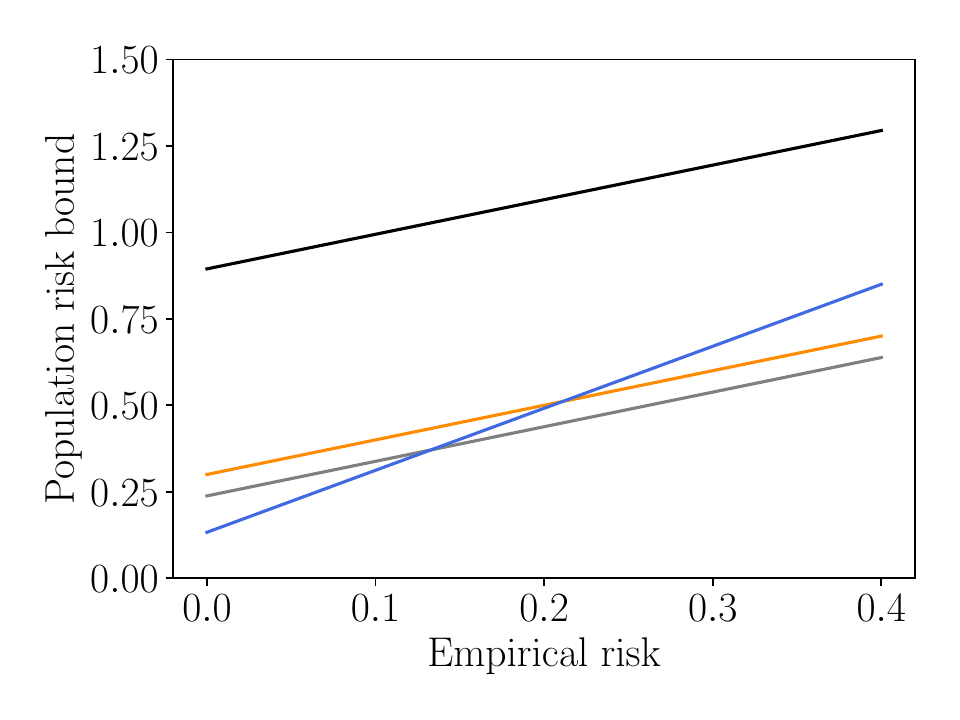}
    \includegraphics[width=0.4\textwidth]{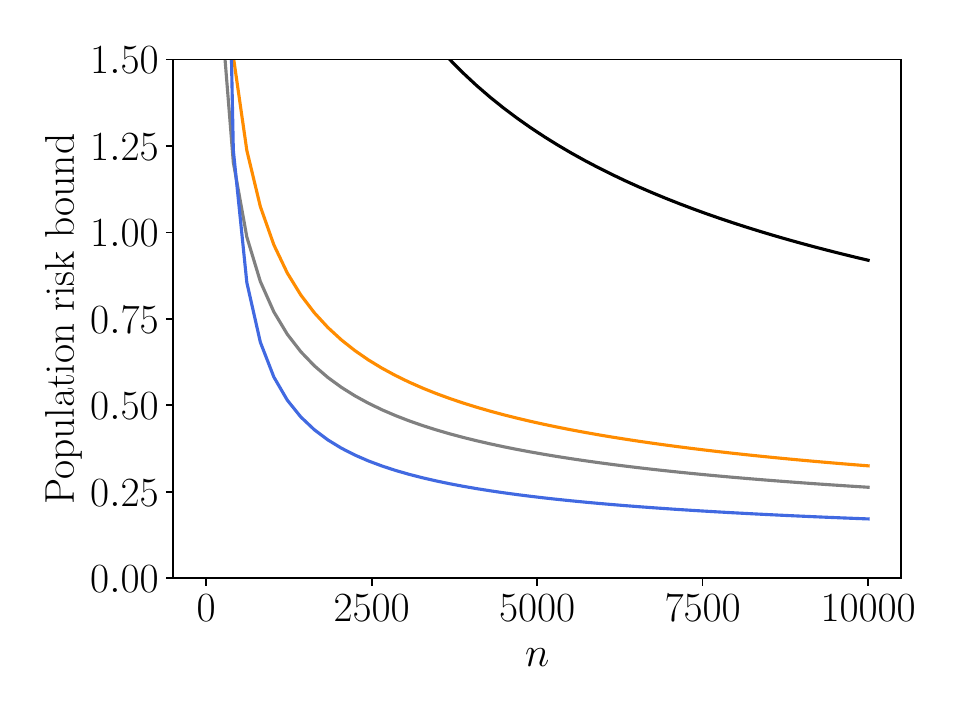}
    \caption{Illustration comparing \citep{alquier2018simpler, ohnishi2021novel} (\eqref{eq:bounded_variance_alquier} in black, \eqref{eq:bounded_variance_honorio_1} in gray, and \eqref{eq:bounded_variance_honorio_2} in orange) and our \Cref{th:bounded_variance_high_probability} (in blue) for varying values of the parameters $\beta$, $\chi^2$, $\emprisk$, and $n$, where here $\emprisk \coloneqq \bE^S[\emprisk(W,S)]$ and $\chi^2 \coloneqq \chi^2(\bP_W^S \Vert \bQ)$. To help the comparison, we actually use the upper bound relaxation~\eqref{eq:relaxation_of_bounded_variance_high_probability} of \Cref{th:bounded_variance_high_probability}. When they are not varying, the values of the parameters are fixed to $\beta = 0.025$, $\chi^2 = 200$, $\emprisk(W,S) = 0.025$, $n=10,000$, and $\sigma^2 = 1$.}
    \label{fig:comparison_bounds_bounded_variance}
\end{figure*}

\begin{theorem}
    \label{th:bounded_variance_high_probability}
    Consider a loss $\ell(w,Z)$ with a variance bounded by $\sigma^2$ for all $w \in \cW$. Then, for every $\beta \in (0,1)$, with probability no smaller than $1 - \beta$
    \begin{align*}
        \bE^S \big[\poprisk(W) \big] \leq \mleft( 1 - 2 \sqrt{\comp'} \mright)_+^{-1}  \mleft( \kappa_1 \cdot \bE^S \big[\emprisk(W,S)\big]  + 2 \sqrt{\sigma^2 \comp'} \mright)
    \end{align*}
    holds \emph{simultaneously} for all posteriors $\bP_W^S$, all $c \in (0,1]$, and all $\gamma > 1$, where
    \begin{equation*}
        \comp' \coloneqq \frac{1.1 \relent(\bP_W^S \Vert \bQ) + \log \frac{10 e \pi^2 \xi(n)}{\beta}}{n}.
    \end{equation*}
\end{theorem}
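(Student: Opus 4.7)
The plan is to follow, in the PAC-Bayesian setting, the proof strategy sketched for the in-expectation counterpart \Cref{th:mi_bounded_variance}, using \Cref{th:alquier_truncation_method_refined_adaptive_lambda} as the high-probability analogue of \Cref{th:mi_bound_moments} specialised to $p=2$.

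First, I would invoke \Cref{th:alquier_truncation_method_refined_adaptive_lambda} with $p=2$. As noted in \Cref{subsubsec:case_p_infinity}, the truncation-method proof of that theorem carries over verbatim under the weaker, algorithm-dependent assumption that $\bE^S[\ell(W,Z)^2]$ is almost surely finite, with $m_2$ replaced by $\bE^S[\ell(W,Z)^2]$ throughout. Dropping the truncation (since $\emprisk_{\leq t^\star}\leq \emprisk$) yields, with probability at least $1-\beta$, simultaneously over all posteriors $\bP_W^S$, all $c\in(0,1]$, and all $\gamma>1$,
\begin{equation*}
\bE^S\bigl[\poprisk(W)\bigr] \leq \kappa_1\,\bE^S\bigl[\emprisk(W,S)\bigr] + 2\sqrt{\bE^S[\ell(W,Z)^2]}\cdot\sqrt{\kappa_2\,\comp'+\kappa_3}.
\end{equation*}

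Second, I would trade the algorithm-dependent raw second moment for the uniform variance bound $\sigma^2$ exactly as in the expected counterpart. For each fixed $w$, $\bE[\ell(w,Z)^2]=\var(\ell(w,Z))+\poprisk(w)^2\leq \sigma^2+\poprisk(w)^2$; averaging over $W\mid S$ and identifying $\bE^S[\poprisk(W)^2]$ with $\bE^S[\poprisk(W)]^2$ in the same way as in the proof of \Cref{th:mi_bounded_variance} gives $\bE^S[\ell(W,Z)^2]\leq \sigma^2+\bE^S[\poprisk(W)]^2$. Then I would apply the subadditivity $\sqrt{x+y}\leq \sqrt{x}+\sqrt{y}$ to separate the two contributions:
\begin{equation*}
\bE^S\bigl[\poprisk(W)\bigr]\leq \kappa_1\,\bE^S\bigl[\emprisk(W,S)\bigr]+2\sqrt{\sigma^2(\kappa_2\,\comp'+\kappa_3)}+2\,\bE^S\bigl[\poprisk(W)\bigr]\sqrt{\kappa_2\,\comp'+\kappa_3}.
\end{equation*}
Finally, I would move the $\bE^S[\poprisk(W)]$-term to the left and solve, using the convention $1/0\to\infty$ to cover the regime where the coefficient $1-2\sqrt{\kappa_2\comp'+\kappa_3}$ is non-positive; this matches the $(\cdot)_+^{-1}$ notation of the statement, with the dependence on $\kappa_2,\kappa_3$ absorbed into the stated normalisation of $\comp'$.

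The main obstacle I expect is the identification of $\bE^S[\poprisk(W)^2]$ with $\bE^S[\poprisk(W)]^2$, because Jensen's inequality points the wrong way. Making this rigorous requires either exploiting that the residual $\var^S(\poprisk(W))$ can be folded into the ambient variance $\sigma^2$ under the uniform assumption $\var(\ell(w,Z))\leq\sigma^2$, or mirroring the analogous hand-off in the proof of \Cref{th:mi_bounded_variance}. Once that bridge is crossed, the remainder is a routine combination of the Alquier truncation bound, the subadditivity of the square root, and algebraic rearrangement.
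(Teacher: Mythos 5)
Your route is the paper's own, almost step for step: specialise \Cref{th:alquier_truncation_method_refined_adaptive_lambda} to $p=2$ under the weaker, algorithm-dependent condition $\bE^S[\ell(W,Z)^2]\leq m_2'$, discard the truncation via $\emprisk_{\leq t}\leq\emprisk$, split the square root with $\sqrt{x+y}\leq\sqrt{x}+\sqrt{y}$, bound the variance term by the uniform $\sigma^2$, and rearrange using the convention $\nicefrac{1}{0}\to\infty$ to justify the $(\cdot)_+^{-1}$ factor. The one step that, as written, would fail is exactly the one you flag: after decomposing pointwise in $w$ you are left with $\sigma^2+\bE^S[\poprisk(W)^2]$, and Jensen gives $\bE^S[\poprisk(W)^2]\geq\bE^S[\poprisk(W)]^2$, so the "identification" you need goes in the forbidden direction and cannot be made in general.

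The paper closes this bridge by never introducing $\bE^S[\poprisk(W)^2]$ at all. Instead of conditioning on $W$ first, it treats $\ell(W,Z)$ as a single random variable under the joint conditional law $\bP_W^S\otimes\bP_Z$ and uses the exact identity
\begin{equation*}
    m_2' = \bE^S\big[\ell(W,Z)^2\big] = \var^S\big(\ell(W,Z)\big) + \big(\bE^S[\ell(W,Z)]\big)^2 = \var^S\big(\ell(W,Z)\big) + \bE^S\big[\poprisk(W)\big]^2,
\end{equation*}
so the square of the \emph{posterior-averaged} population risk appears exactly, with no appeal to Jensen; the only inequality then needed is $\var^S(\ell(W,Z))\leq\sup_{w\in\cW}\var(\ell(w,Z))=\sigma^2$, applied after splitting the root. (For your own bookkeeping: by the law of total variance, $\var^S(\ell(W,Z))=\bE^S[\var(\ell(W,Z)\mid W)]+\var^S(\poprisk(W))$, so the residual you were worried about --- the posterior variance of $\poprisk(W)$ --- is precisely what this last stated inequality dispenses with; this is the ``hand-off'' you guessed must exist, and it mirrors the corresponding step in \Cref{th:mi_bounded_variance}.) With that substitution, your remaining manipulations --- subadditivity of the square root, moving $2\,\bE^S[\poprisk(W)]\sqrt{\comp'}$ to the left-hand side, and the positive-part inverse --- coincide with the paper's, including its (equally loose) absorption of $\kappa_2,\kappa_3$ into the displayed $\comp'$.
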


This theorem effectively captures the dependence on the variance and the rate from Chebyshev's inequality (\Cref{prop:chebyshev}) while additionally being a high-probability bound. The main disadvantage of this bound is the appearance of the multiplicative term $\mleft(1 - 2 \sqrt{\comp'}\mright)_+^{-1}$. 

To prove~\Cref{th:bounded_variance_high_probability}, consider as suggested above a relaxed version of \Cref{th:alquier_truncation_method_refined_adaptive_lambda} for $p=2$ with the weaker condition that $\bE^S \big[ \ell(W,Z)^2 \big] \leq m_2'$ and note that $m_2' = \var^S(\ell(W,Z)) + \bE^S \big[ \poprisk(W) \big] ^2$. Then, for every $\beta \in (0,1)$, with probability no smaller than $1 - \beta$
\begin{align*}
    \bE^S \big[ \poprisk(W) \big] \leq \kappa_1 \cdot \bE^S \big[\emprisk(W,S)\big] + 2 \sqrt{\big(\var^S(\ell(W,Z)) + \bE^S \big[\poprisk(W)\big]^2 \big) \cdot \comp'}
\end{align*}
holds \emph{simultaneously} for all posteriors $\bP_W^S$, all $c \in (0,1]$, and all $\gamma > 1$, where we also used that $\emprisk_{\leq t} \leq \emprisk$ for all $t \in \bR$.

Then, we may employ the inequality $\sqrt{x + y} \leq \sqrt{x} + \sqrt{y}$ to separate the square root and the inequality $\mathrm{Var}^S(\ell(W,Z)) \leq \sup_{w \in \cW} \mathrm{Var}(\ell(w,Z)) = \sigma^2$ to obtain our algorithm-independent variance. In this way, for all $\beta \in (0,1)$, with probability no smaller than $1 - \beta$
\begin{align*}
    \bE^S \big[ \poprisk(W) \big] \leq &\kappa_1 \cdot \bE^S \big[\emprisk(W,S)\big] + 2 \sqrt{\sigma^2 \comp'} + 2 \bE^S \big[\poprisk(W)\big] \sqrt{\comp'}
\end{align*}
holds \emph{simultaneously} for all posteriors $\bP_W^S$, all $c \in (0,1]$, and all $\gamma > 1$.

Re-arranging the equation proves \Cref{th:bounded_variance_high_probability}'s statement: when $1 \geq 2 \sqrt{\comp'}$, the theorem holds by the reasoning above, and when $1 \leq 2 \sqrt{\comp'},$ the theorem holds trivially by the convention that $\nicefrac{1}{0} \to \infty$.

Although the \Cref{th:bounded_variance_high_probability} is of high probability and considers the relative entropy, it is hard to compare \Cref{th:bounded_variance_not_high_probability} due to the first factor $( 1 -  2 \sqrt{\comp'})_+^{-1}$. This factor ensures the bound is only useful when $2 \sqrt{\comp'} < 1$, which is the range where the bound would be effective without the said factor anyway. To effectively compare the two bounds, we bound \Cref{th:bounded_variance_high_probability} from above using the relative entropy upper bound~\eqref{eq:relation_relent_chi}, that is, with probability no smaller than $1 - \beta$
\begin{equation}
\label{eq:relaxation_of_bounded_variance_high_probability}
     \bE^S \big[\poprisk(W) \big] \leq \mleft( 1 - 2 \sqrt{\mathfrak{C}_{n,\beta,S,\chi^2}'} \mright)_+^{-1}  \mleft( \kappa_1 \cdot \bE^S \big[\emprisk(W,S)\big]  + 2 \sqrt{\sigma^2 \mathfrak{C}_{n,\beta,S,\chi^2}'} \mright)
\end{equation}
holds \emph{simultaneously} for all posteriors $\bP_W^S$, all $c \in (0,1]$, and all $\gamma > 1$, where 
\begin{equation*}
    \mathfrak{C}_{n,\beta,S,\chi^2}' \coloneqq \kappa_2 \cdot \frac{1.1 \log (1 + \chi^2(\bP_W^S \Vert \bQ)) + \log \frac{10 e \pi^2 \xi(n)}{\beta}}{n} + \kappa_3.
\end{equation*}
Also, to simplify the comparison, we also fix $c = 1$ and $\gamma = \nicefrac{e}{(e - 1)}$. 
Even with this relaxation, the presented high-probability bound is tighter than \Cref{th:bounded_variance_not_high_probability} in many regimes (see \Cref{fig:comparison_bounds_bounded_variance}).

\subsection{Related Approaches to Optimize Parameters in Probabilistic Statements}
\label{subsec:related_approaches}

A related, but different technique to deal with these optimization problems is given by \citet{langford2001not} and \citet{catoni2003pac} to solve the bounded losses analogue of~\Cref{lemma:extension_banerjee}. They consider the optimization of $\lambda$ over a geometric grid $\cA = \{e^k : k \in \bN\} \cap [1,n]$ at the smaller union bound cost of $\log \nicefrac{(1+ \log n)} \beta $ at the price of a multiplicative constant of $e$. Using
rounding arguments similar to those in the proof of \Cref{th:pac_bayes_chernoff_analogue}, this translates into being able to optimize the parameter $\lambda$ in the region $[1,n]$. This technique generalizes to other countable families $\cA$ with a union bound cost of $\log |\cA|$~\citep[Section 2.1.4]{alquier2021user}. The downfall of this approach compared to the one presented here is that the optimal parameter $\lambda^\star$ is still dependent on the data drawn $S$, the probability parameter $\beta$, and the tail behavior captured by $\psi_*^{-1}$. It is, hence, uncertain if the optimal parameter will lie within the set $\cA$ in general, making a parameter-free expression for the bound impossible.

An extension of this technique is given by~\citet{seldin2012pac}. The idea is to construct a countably infinite grid $\cA$ over the parameters' space and then choose a parameter $\lambda$ from that grid. Then, they can give a closed-form solution by studying how far is the bound resulting from plugging the selected parameter from the grid and the optimal parameter. Their technique has been used for a bounded range and bounded variance setting in~\citep{seldin2012pac} and for a bounded empirical variance in~\citep{tolstikhin2013pac}. 

The main difference between these approaches and ours is that they design a grid $\cA$ over the parameters' space and optimize the parameter $\lambda$ in that grid. Then, the tightness of the resulting bound depends on how well that grid was crafted. This grid $\cA$ needs to be designed on a case-to-case basis and it can be cumbersome (see, for example, \citep[Appendix A]{tolstikhin2013pac}). Moreover, to design the said grid one requires an explicit expression for the optimal parameter. This may not be available in cases such as in \Cref{th:pac_bayes_chernoff_analogue}, where we only know that~\eqref{eq:pac_bayes_cgf_with_k} is the result of the optimization in~\eqref{eq:pac_bayes_cgf_with_lambda_and_k}. On the other hand, we consider a grid over the events' space and find the \emph{best} parameter for each cell (sub-event) in that grid. This gives three main advantages with respect to the previous techniques. First, the grid is the same for any situation, making the technique easier to employ (see, for example, \Cref{th:pac_bayes_chernoff_analogue,th:parameter_free_anytime_valid_bounded_2nd_moment,th:pac_bayes_chernoff_analogue_no_cutoff,th:paramter_free_anytime_valid_martingales,th:pac_bayes_chernoff_analogue_loglog}). For instance, it would be trivial to recover a result similar to the PAC-Bayes Bernstein analog of \citet[Theorem 8]{seldin2012pac} optimizing the parameter in \citep[Theorem 7]{seldin2012pac} with our approach. Second, to apply the technique, we do not need to know the explicit form of the optimal parameter, which may not exist like in \Cref{th:pac_bayes_chernoff_analogue}, we only need that the optimization is possible. Third, if the grid is made with respect to a random variable $X$, the resulting bound will be tight except from a logarithmic term and an offset changing $X$ by $X + 1$. Therefore, discretizing the events' space is essentially equivalent to crafting a subset $\cA'$ of the parameters' space (not necessarily with a grid structure) with the \emph{optimal} parameters for each region without the need to design this subset $\cA'$ in a case-to-case basis. 

Another possibility to deal with $\lambda$ is to integrate it with respect to an analytically integrable probability density with mass concentrated in its maximum. This is the method employed by~\citet{kuzborskij2019efron} and is known as \emph{the method of mixtures}~\citep[Section 2.3]{de2007pseudo}. Unfortunately, this method requires the existence of a canonical pair: two random variables $X$ and $Y$ satisfying that $\bE \exp \big( \lambda X + \nicefrac{\lambda^2 Y^2}{2} \big) \leq 1$ for all $\lambda$ in the domain of optimization~\cite[Equation (2.2)]{de2007pseudo}. This requirement may not necessarily hold in general settings like \Cref{th:pac_bayes_chernoff_analogue}. Moreover, often this method results in the introduction of a new parameter associated with the density used for integration, for example, the variance of a Gaussian as in \citep{kuzborskij2019efron}. Therefore, our proposed approach is still more general while resulting in essentially the same bound when restricted to the case where the method of mixtures can be employed.\footnote{The final bounds are not directly comparable due to differences in the logarithmic terms, but both are of the same order.}

Finally, \citet[Corollary 8]{kakade2008complexity} employed a similar technique to ours to prove a PAC-Bayes bound for bounded losses similar to \citet{mcallester1998some,mcallester1999pac,mcallester2003pac}'s \Cref{th:mcallester}. However, they did not employ the technique to optimize a parameter. Instead, they found a bound in terms of a threshold $a$ that held for every posterior $\bP_{W}^S$ such that $\relent(\bP_W^S \Vert \bQ) \leq a$. Then, they discretized the set of all posteriors into the sub-classes $\cP_{k} \coloneqq \{ \bP_W^S : 2^{k+1} < \relent(\bP_W^S \Vert \bQ) \leq 2^{k+2} \}$ and applied the union bound to find a uniform result. This technique is usually known as the \emph{peeling device}, \emph{stratification}, or \emph{slicing} in the probability theory and bandits communities~\citep[Section 13.7]{boucheron2003concentration}~\citep[Section 9.1]{lattimore2020bandit}. The similarity with our proof of~\Cref{th:pac_bayes_chernoff_analogue,th:pac_bayes_chernoff_analogue_no_cutoff,th:pac_bayes_chernoff_analogue_loglog} is clear by looking at our design of the events' discretization and their posterior's sub-classes. However, the nature of the two approaches is different: they have a natural constraint, and they discretize the posterior class space and apply the union bound to circumvent that; while we have a parameter whose optimal value is data-dependent, we discretize the events' space to find the optimal parameter in a data-independent way, and then we apply the union bound. Moreover, this technique is more general, as one can design the sub-events to include basically any random object that depends on the data as showcased in~\Cref{th:parameter_free_anytime_valid_bounded_2nd_moment,th:paramter_free_anytime_valid_martingales}. Nonetheless, one could consider our technique to be essentially equivalent to the peeling device since both techniques have the same idea and intention behind them.

\subsection{Further Bibliography About PAC-Bayesian Bounds for Unbounded Losses}
\label{sec:pac_bayes_unbounded_bibliographic_remarks}

\paragraph{Losses With a Bounded CGF.}
Prior to our development of the general PAC-Bayesian Chernoff analogue from~\Cref{th:pac_bayes_chernoff_analogue,th:pac_bayes_chernoff_analogue_loglog,th:pac_bayes_chernoff_analogue_no_cutoff} in~\citep{rodriguez2023morepac}, there were other attempts at developing PAC-Bayesian bounds for specific tail behaviors that are now a particularization of the presented theorems. More precisely,
\begin{itemize}
    \item \citet{catoni2004statistical} derived PAC-Bayes bounds for sub-exponential losses. However, these bounds are limited to the squared error loss in regression scenarios, where $\cZ = \cX \times \bR$  and the hypothesis $w$ represents the parameters of a regressor $\phi_w: \cX \to \bR$. The analysis assumes that the regressor is finite, i.e. $\lVert \phi_w \rVert_\infty < \infty$ for all $w \in \cW$. Additionally, the derived bounds also rely on a parameter that must be chosen \emph{before} the draw of the data.
    \item \citet{hellstrom2020generalization} and \citet{guedj2021still} obtained parameterized bounds for sub-Gaussian losses. They also provided a parameter-free version of the bound optimizing the parameter. However, the optimization contained a small mistake, as the parameter needs to be selected \emph{before} the draw of the data, and the value they chose depended on the data realization~\citep[Remark 14]{banerjee2021information}. \citet{hellstrom2021corrections} later resolved this issue to obtain analogous PAC-Bayes bounds employing properties unique to sub-Gaussian random variables~\citep[Theorem 2.6]{wainwright2019high}. \citet{esposito2021generalization} also derived PAC-Bayes bounds for this setting considering different dependency measures.
\end{itemize}

\paragraph{Losses With Heavy Tails.}
Similarly, before the development of our bounds for losses with heavy tails from~\Cref{subsec:losses_with_bounded_moment}, inspired by~\citet{alquier2006transductive}'s method, there were other people studying this kind of inequalities. 
\citet{alquier2018simpler} also developed PAC-Bayes bounds for losses with heavier tails that sometimes work for non i.i.d. data, although they are not of high probability and consider $f$-divergences as the dependency measure. \citet{holland2019pac} found PAC-Bayes bounds for losses with bounded second and third moments, but consider a different estimate than the empirical risk, and their bounds contain a term that may increase with the number of samples $n$. Finally, \citet{kuzborskij2019efron} and \citet{haddouche2023pacbayes} developed bounds for losses with a bounded second moment. The bound in~\citep{haddouche2023pacbayes} is anytime valid but depends on a parameter that needs to be chosen \emph{before} the draw of the training data.

\paragraph{Other Types of Unbounded Losses.}
\citet{haddouche2021pac} developed PAC-Bayes bounds under a different generalization, namely the hypothesis-dependent range (HYPE) condition, i.e., that there is a function $\kappa$ with positive range such that $\sup_{z \in \cZ} \ell(w,z) \leq \kappa(w)$ for all hypotheses $w \in \cW$, but their bounds decrease at a slower rate %
than the classical \citet{mcallester2003pac}'s \Cref{th:mcallester}
when they are restricted to the bounded case. Finally, \citet{chugg2023unified} also proved anytime-valid bounds for bounded CGFs and bounded moments, although their bounds contain parameters that need to be chosen \emph{before} the draw of the training data with other technical conditions.

\section{Single-Draw PAC-Bayesian Bounds}
\label{sec:single-draw-pac-bayes}

Single-draw PAC-Bayesian bounds try to bound the population risk of the hypothesis returned by the algorithm with high probability. They are one step above the standard PAC-Bayesian bounds in the specificity ladder from~\Cref{subsec:levels_of_specificity}. More precisely, instead of bounding the average population risk of hypotheses returned by the algorithm after the draw of the data, they bound the population risk of a hypothesis sampled from the algorithm's distribution after the draw of the data. The main disadvantage is that the dissimilarity between the posterior $\bP_W^S$ and the prior $\bQ$ is now a function that depends on both the drawn training set $s$ and the returned hypothesis $w$, making the bounds harder to optimize explicitly. As far as we know, these bounds were first introduced by~\citet{catoni2003pac,catoni2007pac}.

Almost every PAC-Bayesian bound in this chapter can be replicated into a single-draw PAC-Bayesian bound. Instead of starting from~\Cref{th:germain_convex_pac_bayes}, we may start with \citet[Theorem 1 (i)]{rivasplata2020pac}. This result is essentially a single-draw PAC-Bayes version of~\Cref{th:germain_convex_pac_bayes} without the requirement for the loss to be convex or to depend on the population and empirical risks. 

Similarly to~\citet{germain2009pac}'s result, \citet{rivasplata2020pac}'s result requires simultaneously that $\bP_W^S \ll \bQ$ and that $\bQ \ll \bP_W^S$ a.s. since at some point in their proof they use the equality $\nicefrac{\rmd \bP_W^S}{\rmd \bQ} = \big(\nicefrac{\rmd \bQ}{\rmd \bP_W^S}\big)^{-1}$, which only holds when this happens. Similarly to~\citet{begin2014pac}, who lifted the requirement that $\bQ \ll \bP_W^S$ a.s., we present below \citet{rivasplata2020pac}'s result without that extra requirement as well as a simple proof to avoid that requirement.

\begin{theorem}[{Extension of~\citet[Theorem 1 (i)]{rivasplata2020pac}}]
    \label{th:single_draw_general_theorem}
    Consider a measurable function $f: \cW \times \cS \to \bR$. Let $\bQ$ be a distribution on $\cW$ independent of $S$ such that $\bP_W^S \ll \bQ$ a.s. 
    and $W'$ be a random variable distributed according to $\bQ$. Then, for every $\beta \in (0,1)$, with probability no smaller than $1 - \beta$
    \begin{equation*}
        f(W,S) \leq \frac{1}{n} \mleft( \log \frac{\rmd \bP_W^S}{\rmd \bQ}(W) + \log \frac{1}{\beta} + \log \bE \mleft[ e^{n f(W',S)} \mright] \mright).
    \end{equation*}
\end{theorem}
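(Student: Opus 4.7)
The plan is to follow the standard Markov-based route used throughout the chapter for the PAC-Bayesian statements, but carry out the change of measure directly under $\bP_W^S\ll\bQ$ so that we never need the reverse absolute continuity assumption that Rivasplata et al.\ used. Concretely, I would introduce the random variable
\[
Z \coloneqq n\,f(W,S) - \log\tfrac{\rmd\bP_W^S}{\rmd\bQ}(W)
\]
and aim to show $\bE[e^{Z}]\leq\bE[e^{nf(W',S)}]$; once this is established, Markov's inequality \eqref{eq:markov} applied to $e^{Z}$ with threshold $t=\beta^{-1}\bE[e^{nf(W',S)}]$ gives $\bP[Z\geq \log\bE[e^{nf(W',S)}]+\log(1/\beta)]\leq\beta$, and rearranging and dividing by $n$ is exactly the claimed inequality.

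The core step is therefore the moment bound $\bE[e^{Z}]\leq\bE[e^{nf(W',S)}]$. I would handle it by conditioning on $S=s$ and applying the Radon--Nikodym theorem (\Cref{th:radon-nikodym}) to switch from $\bP_W^{S=s}$ to $\bQ$. Writing $\sfp_s\coloneqq \rmd\bP_W^{S=s}/\rmd\bQ$, the change of measure~\eqref{eq:change_of_measure} yields
\[
\bE^{S=s}\!\left[e^{nf(W,s)}\sfp_s(W)^{-1}\right]
=\int_{\{\sfp_s>0\}} e^{nf(w,s)}\,\rmd\bQ(w)
\leq \bE_{w\sim\bQ}\!\left[e^{nf(w,s)}\right],
\]
where the equality uses $\rmd\bP_W^{S=s}=\sfp_s\,\rmd\bQ$ so the two factors $\sfp_s$ cancel, and the restriction to $\{\sfp_s>0\}$ is harmless since $\bP_W^{S=s}\ll\bQ$ forces $\bP_W^{S=s}[\sfp_s=0]=0$. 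Taking expectation in $S$ and using the independence of $W'$ and $S$ gives the required bound.

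The main (and essentially only) technical obstacle is a measurability/well-definedness issue: the map $s\mapsto\sfp_s(w)$ must be measurable so that the conditional computation above is legitimate, and $\sfp_s(W)^{-1}$ must be understood as being defined only $\bP_W^{S=s}$-almost surely. Both are standard consequences of the regular conditional distribution setup on standard Borel spaces adopted in \Cref{subsec:polish_borel_spaces,subsec:probability_spaces_and_random_objects}, so they can be dispatched with a sentence appealing to \Cref{def:conditional_distribution}. Once this is cleared, the argument is a one-line change of measure plus Markov's inequality, and the extension beyond the original statement of \cite{rivasplata2020pac} comes for free because we never invert $\sfp_s$ outside of an integral against $\bP_W^{S=s}$.
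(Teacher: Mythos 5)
Your proposal is correct and follows essentially the same route as the paper: apply Markov's inequality to $e^{nf(W,S)-\log\frac{\rmd\bP_W^S}{\rmd\bQ}(W)}$ with threshold $\beta^{-1}\bE[e^{nf(W',S)}]$, after controlling its expectation by a change of measure from $\bP_W^S$ to $\bQ$. Your restriction of the integral to $\{\sfp_s>0\}$ (yielding an inequality rather than the equality the paper displays) is in fact the slightly more careful way to handle the one-sided absolute continuity $\bP_W^S\ll\bQ$, and is exactly what makes the extension beyond \citet{rivasplata2020pac} go through.
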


To prove the theorem, consider the non-negative random variable
\begin{equation*}
    X = e^{n f(W,S) - \log \frac{\rmd \bP_W^S}{\rmd \bQ}(W)}.
\end{equation*}
Furthermore, note that by the change of measure~\eqref{eq:change_of_measure} from~\Cref{subsec:polish_borel_spaces} we have that
\begin{align*}
    \bE \mleft[ \bE^S \mleft[ e^{n f(W,S) - \log \frac{\rmd \bP_W^S}{\rmd \bQ}(W)} \mright] \mright] &= \bE \mleft[ \bE^S \mleft[ e^{n f(W',S) - \log \frac{\rmd \bP_W^S}{\rmd \bQ}(W')} \cdot \frac{\rmd \bP_W^S}{\rmd \bQ}(W') \mright] \mright] \\
    &= \bE \mleft[ e^{n f(W',S)} \mright].
\end{align*}
Then, applying Markov's inequality~\eqref{eq:markov} to the random variable $X$ and letting $t = \frac{1}{\beta} \cdot \bE \mleft[ e^{n f(W',S)} \mright]$ we have that
\begin{align*}
    \bP \mleft[ e^{n f(W,S) - \log \frac{\rmd \bP_W^S}{\rmd \bQ}(W)} \geq \frac{1}{\beta} \cdot  \bE \mleft[ e^{n f(W',S)} \mright]\mright] \leq \beta.
\end{align*}

Finally, since the logarithm is a non-decreasing, monotonic function we can take the logarithm to both sides of the inequality and re-arrange the terms to obtain the desired result.

In the following sub-sections, we will show how to recover all the presented results as a single-draw PAC-Bayesian bound.

\subsection{Losses With a Bounded Range} 

Let $f(w,s) = \relentber(\emprisk(w,s) \Vert \poprisk(w))$. In this way, we effectively recover the Seeger--Langford bound~\citep{seeger2002pac,langford2001bounds} with~\citet{maurer2004note}'s constant on the logarithm. 

\begin{theorem}
    \label{th:single-draw-seeger-langford}
    Consider a loss with a range bounded in $[0,1]$ and let $\bQ$ be any prior independent of $S$ such that $\bP_W^S \ll \bQ$ almost surely. Then, for every $\beta \in (0,1)$, with probability no smaller than $1 - \beta$
    \begin{equation*}
        \relentber(\emprisk(W,S) \Vert \poprisk(W)) \leq \frac{\log \frac{\rmd \bP_W^S}{\rmd \bQ}(W) + \log \frac{\xi(n)}{\beta}}{n}.
    \end{equation*}
\end{theorem}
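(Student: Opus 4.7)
The plan is to apply the extended single-draw master inequality from Theorem~5.19 with the specific choice $f(w,s) = \relentber(\emprisk(w,s) \Vert \poprisk(w))$. Under the hypothesis that $\bP_W^S \ll \bQ$ almost surely, Theorem~5.19 directly yields, with probability at least $1-\beta$,
\[
\relentber(\emprisk(W,S) \Vert \poprisk(W)) \leq \frac{1}{n}\Big(\log \tfrac{\rmd \bP_W^S}{\rmd \bQ}(W) + \log \tfrac{1}{\beta} + \log \bE\big[e^{n \relentber(\emprisk(W',S) \Vert \poprisk(W'))}\big]\Big),
\]
where $W' \sim \bQ$ is independent of $S$. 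This reduces the theorem to a single auxiliary estimate: controlling the exponential moment appearing on the right-hand side. Before invoking it, I would quickly check that the hypothesis of Theorem~5.19 is satisfied, in particular that the mapping $(w,s) \mapsto \relentber(\emprisk(w,s) \Vert \poprisk(w))$ is measurable (which follows from the joint measurability of $\emprisk$ and $\poprisk$ and the continuity of $\relentber$ on $[0,1]^2$ with the usual $+\infty$ extension).

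The remaining task is then to bound $\bE\big[e^{n \relentber(\emprisk(W',S) \Vert \poprisk(W'))}\big]$ by $\xi(n)$. This is exactly the ingredient used in the proof of the standard PAC-Bayesian Seeger--Langford bound (Theorem~5.7) via Germain et al.'s general convex bound (Theorem~5.6) with the choice $f(\hat r, r) = \relentber(\hat r \Vert r)$. More concretely, conditionally on $W' = w$, the empirical risk $\emprisk(w,S)$ is an average of $n$ i.i.d.\ random variables bounded in $[0,1]$ with mean $\poprisk(w)$, so the classical estimate of Maurer~\cite{maurer2004note}, refined via Germain et al.~\cite{germain2015risk}, gives
\[
\bE\big[e^{n \relentber(\emprisk(w,S)\Vert \poprisk(w))}\big] \leq \xi(n) \in [\sqrt{n},\sqrt{2n}+2]
\]
uniformly in $w$. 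Applying Fubini to take the outer expectation over $W' \sim \bQ$ preserves the bound, since the right-hand side is independent of $w$.

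Plugging $\log \bE[e^{n f(W',S)}] \leq \log \xi(n)$ into the displayed inequality and combining the two logarithmic terms produces
\[
\relentber(\emprisk(W,S) \Vert \poprisk(W)) \leq \frac{\log \tfrac{\rmd \bP_W^S}{\rmd \bQ}(W) + \log \tfrac{\xi(n)}{\beta}}{n},
\]
which is precisely the claimed single-draw bound. The main (and only) nontrivial obstacle is the Maurer-type exponential moment estimate, but this is a well-established result already used earlier in the chapter, so the proof amounts to correctly instantiating Theorem~5.19 and importing that bound; no new technical machinery is required.
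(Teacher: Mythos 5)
Your proposal is correct and follows essentially the same route as the paper: the paper obtains this result by instantiating the extended single-draw master inequality (the extension of Rivasplata et al.'s Theorem 1(i)) with $f(w,s) = \relentber(\emprisk(w,s) \Vert \poprisk(w))$ and absorbing Maurer's uniform exponential-moment estimate $\bE\big[e^{n \relentber(\emprisk(w,S)\Vert\poprisk(w))}\big] \leq \xi(n)$ into the confidence term. Your additional remarks on measurability and Fubini are fine but not points of divergence.
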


\looseness=-1 Similarly, we obtain a single-draw PAC-Bayes analogue to~\citet[Theorem 1.2.6]{catoni2007pac}'s bound by letting $f(w,s) = - \log\big(1 - \poprisk(w)(1-e^{-\nicefrac{\lambda}{n}})\big) - \frac{\lambda}{n} \cdot \emprisk(w,s)$.

\looseness=-1 However, the existence of~\Cref{th:single-draw-seeger-langford} is more important since from there one can obtain the single-draw PAC-Bayesian equivalents of~\Cref{th:catoni_pac_bayes_uniform}, \citet{thiemann2017strongly}'s \Cref{th:thiemann_pac_bayes}, \citet{rivasplata2019pac}'s \Cref{th:rivasplata_pac_bayes}, and our fast-rate and mixed-rate bounds from~\Cref{th:fast_rate_bound_strong,th:mixed_rate_bound} and~\Cref{cor:fast_rate_bound}.

Since~\Cref{th:fast_rate_bound_strong} is the cornerstone of many of the bounds for unbounded losses from the next sub-section, we also write it explicitly below. However, this time, we already present it for losses that are bounded in $[0,b]$ from~\Cref{rem:fast_rate_strong_bounded_b}, as this is the version required to build the bounds for losses with a bounded moment.

\begin{theorem}
    \label{th:single-draw-fast-rate}
    Consider a loss with a range bounded in $[0,b]$ and let $\bQ$ be any prior independent of $S$ such that $\bP_W^S \ll \bQ$ almost surely. Then, for every $\beta \in (0,1)$, with probability no smaller than $1 - \beta$
    \begin{equation*}
        \poprisk(W) \leq c \gamma \log\mleft( \frac{\gamma}{\gamma - 1} \mright) \cdot \emprisk(W,S) + b c \gamma \cdot \frac{\log \frac{\rmd \bP_W^S}{\rmd \bQ}(W) + \log \frac{\xi(n)}{\beta}}{n} + b \kappa(c) \gamma
    \end{equation*}
    holds \emph{simultaneously} for all $\gamma > 1$ and all $c \in (0,1]$, where $\kappa(c) \coloneqq 1 - c (1 - \log c)$.
\end{theorem}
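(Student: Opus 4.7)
The plan is to mirror the derivation of the PAC-Bayesian fast-rate bound (Theorem~\ref{th:fast_rate_bound_strong}) from the Seeger--Langford bound, but in the single-draw regime. Indeed, the structural parallel between Theorem~\ref{th:seeger_langford_pac_bayes} and Theorem~\ref{th:single-draw-seeger-langford}---the only differences being that $\bE^S[\emprisk(W,S)]$, $\bE^S[\poprisk(W)]$, and $\relent(\bP_W^S \Vert \bQ)$ are replaced by $\emprisk(W,S)$, $\poprisk(W)$, and $\log \frac{\rmd \bP_W^S}{\rmd \bQ}(W)$, respectively---suggests that the same algebraic manipulation will work here.

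First, I would reduce to the case of a loss with range in $[0,1]$. As noted in Remark~\ref{rem:fast_rate_strong_bounded_b}, if $\ell$ has range in $[0,b]$, then $\tilde{\ell} = \ell/b$ has range in $[0,1]$; proving the bound for $\tilde{\ell}$ and multiplying both sides by $b$ gives the factor of $b$ appearing on the complexity and bias terms. I would then invoke Theorem~\ref{th:single-draw-seeger-langford} to obtain, with probability at least $1-\beta$,
\begin{equation*}
    \relentber\big(\emprisk(W,S) \,\Vert\, \poprisk(W)\big) \leq K, \qquad K \coloneqq \frac{\log \frac{\rmd \bP_W^S}{\rmd \bQ}(W) + \log \frac{\xi(n)}{\beta}}{n}.
\end{equation*}

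Next, I would apply the same f-divergence-based lower bound on $\relentber$ that was used in the derivation of Theorem~\ref{th:fast_rate_bound_strong}, namely inequality~\eqref{eq:relentber_f_div_after_change_variable}: for every $\gamma > 1$ and every $g_0 \geq 0$,
\begin{equation*}
    \relentber(\hat{r} \Vert r) \geq 1 + g_0 - e^{g_0} - \log\!\Big(\tfrac{\gamma}{\gamma - 1}\Big)\hat{r} + \gamma^{-1} e^{g_0} r.
\end{equation*}
Substituting $\hat{r} = \emprisk(W,S)$ and $r = \poprisk(W)$, combining with $\relentber \leq K$, performing the change of variable $c = e^{-g_0} \in (0,1]$, and solving for $r$ yields the desired bound (with $b = 1$). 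Rescaling by $b$ then gives the statement. Because the f-divergence inequality holds simultaneously for all admissible $g_0$ and all $\gamma > 1$, and since the probability event of Theorem~\ref{th:single-draw-seeger-langford} is itself independent of $(c,\gamma)$, the resulting bound automatically holds simultaneously for all $c \in (0,1]$ and all $\gamma > 1$.

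There is no genuine obstacle: the entire proof is a routine replay of the PAC-Bayesian calculation, and the only subtlety is bookkeeping---keeping track of the factor $b$ from the rescaling, and verifying that the variational inequality used to lower-bound $\relentber$ is applied pointwise in $(W,S)$, so that the simultaneous-in-$(c,\gamma)$ statement is preserved under the probabilistic statement supplied by Theorem~\ref{th:single-draw-seeger-langford}.
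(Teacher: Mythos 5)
Your proposal is correct and follows essentially the same route as the paper: the paper obtains Theorem~\ref{th:single-draw-fast-rate} exactly by replaying the bounded-loss manipulation of $\relentber$ (the variational lower bound~\eqref{eq:relentber_f_div_after_change_variable} with the change of variable $c = e^{-g_0}$, plus the $[0,b]$ rescaling of Remark~\ref{rem:fast_rate_strong_bounded_b}) on top of the single-draw Seeger--Langford bound of Theorem~\ref{th:single-draw-seeger-langford}. Your bookkeeping of the factor $b$ and of the simultaneity in $(c,\gamma)$ is also as in the paper.
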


\subsection{Losses With a Bounded CGF}
\label{subsec:single_draw_bounded_cgf}

In order to obtain a single-draw PAC-Bayesian analogue to Chernoff's inequality, we may just let $f(w,s) = \lambda (\poprisk(w) - \emprisk(w,s)) = \lambda \gen(w,s)$ similarly to what we did to obtain \citet{banerjee2021information}'s~\Cref{lemma:extension_banerjee}. In this way, if the loss has a bounded CGF in the sense of~\Cref{def:bounded_cgf}, we have that for every $\beta \in (0,1)$ and every $\lambda \in (0,b)$, with probability no smaller than $1 - \beta$
\begin{equation}
    \label{eq:single-draw-chernoff-intermediate}
    \gen(W,S) \leq \frac{1}{\lambda} \mleft(\frac{\log \frac{\rmd \bP_W^S}{\rmd \bQ}(W) + \log \frac{1}{\beta}}{n} + \psi(\lambda) \mright).
\end{equation}

In this case, the optimal value of the parameter $\lambda$ from~\eqref{eq:single-draw-chernoff-intermediate} depends both on the draw of the training data $S$ and the hypothesis returned from the algorithm $W$. This is not a problem from the event's space optimization technique developed in~\Cref{subsec:losses_with_bounded_CGF}. We may consider the sub-events $\cE_1 = \{\log \nicefrac{\rmd \bP_W^S}{\rmd \bQ}(W) \leq 1 \}$ and $\cE_k = \{ k-1 < \log \nicefrac{\rmd \bP_W^S}{\rmd \bQ}(W) \leq k \}$ for all $k \in \bN$ such that $k \geq 2$. After that, we may proceed exactly as we did for the proof of~\Cref{th:pac_bayes_chernoff_analogue_no_cutoff} bounding the logarithm of the Radon--Nikodym derivative instead of the relative entropy. Following this procedure results in the desired single-draw PAC-Bayesian Chernoff analogue. 

\begin{theorem}[Single-draw PAC-Bayesian Chernoff analogue]
    \label{eq:single-draw-chernoff-analogue}
    Consider a loss function $\ell$ with a bounded CGF (\Cref{def:bounded_cgf}). Let $\bQ$ be any prior independent of $S$ such that $\bP_W^S \ll \bQ$ almost surely. Then, for every $\beta \in (0,1)$, with probability no smaller than $1 - \beta$
    \begin{equation*}
        \gen(W,S) \leq \psi_*^{-1} \mleft( \frac{1.1 \log \frac{\rmd \bP_W^S}{\rmd \bQ}(W) + \log \frac{10 e \pi^2}{\beta}}{n} \mright).
    \end{equation*}
\end{theorem}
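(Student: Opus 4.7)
The plan is to port the proof of Theorem 4.12 to the single-draw setting almost verbatim, with two substitutions: replace the convex-function PAC-Bayes starting point (Theorem 4.5) by the single-draw decoupling lemma (Theorem 4.18), and replace the relative entropy $\relent(\bP_W^S \Vert \bQ)$ everywhere it appears by the pointwise log-density $\log \frac{\rmd \bP_W^S}{\rmd \bQ}(W)$. The text already supplies the reduction; the proposal is essentially to execute it.

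First, I would apply Theorem 4.18 with $f(w,s) = \lambda \gen(w,s)$ for a fixed $\lambda \in (0,b)$. Taking $W' \sim \bQ$ independent of $S$, the empirical risk is unbiased and the i.i.d.\ structure together with the bounded-CGF hypothesis (Definition 2.17) gives, via the same CGF algebra used for Lemma 4.11, the single-variable bound $\log \bE\big[e^{n \lambda \gen(W',S)}\big] \le n\psi(\lambda)$. Substituting into Theorem 4.18 and dividing by $\lambda$ produces exactly the pointwise intermediate inequality already displayed in the text,
\begin{equation*}
    \gen(W,S) \le \tfrac{1}{\lambda}\mleft(\tfrac{\log \frac{\rmd \bP_W^S}{\rmd \bQ}(W) + \log \frac{1}{\beta}}{n} + \psi(\lambda)\mright),
\end{equation*}
which is the single-draw analogue of Lemma 4.11.

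Second, the central obstacle is that the $\lambda$ minimizing this display depends on \emph{both} $S$ and $W$, so a direct infimum is not a legitimate probabilistic statement. I would resolve this with the event-space quantization technique of Theorem 4.12, but bucketed on $L \coloneqq \log \frac{\rmd \bP_W^S}{\rmd \bQ}(W)$ instead of on the relative entropy: define $\cE_1 = \{L \le 1\}$ and $\cE_k = \{k - 1 < L \le k\}$ for $k \ge 2$, which cover the sample space with no cutoff. Conditionally on each $\cE_k$, I upper bound $L$ by $k$ inside the intermediate inequality; the right-hand side then depends only on the integer $k$ and on $\lambda$, so $\lambda$ can be minimized in an $(S,W)$-independent manner via Lemma 2.26, yielding $\psi_*^{-1}\big((k + \log(1/\beta_k))/n\big)$ on $\cE_k$ with an exceptional-set probability $\beta_k$. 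Choosing $\beta_k = \beta/k^2$ as in Remark 4.2 and union-bounding over $k \in \bN$ pays a factor $\pi^2/6$; after the substitution $\beta \leftarrow 6\beta/\pi^2$ and using $k \le L + 1$, I obtain, with probability at least $1-\beta$,
\begin{equation*}
    \gen(W,S) \le \psi_*^{-1}\mleft(\tfrac{L + 1 + \log\frac{e\pi^2(L+1)^2}{6\beta}}{n}\mright),
\end{equation*}
which is the single-draw analogue of Theorem 4.13.

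Third, to reach the stated clean form, I would apply the same envelope step used to obtain Corollary 4.14 from Theorem 4.13: the map $x \mapsto x + \log\big(e\pi^2(x+1)^2/(6\beta)\big)$ is concave and non-decreasing on $[0,\infty)$, hence dominated by $\inf_{a>0}\big\{\tfrac{a+3}{a+1}x + \log\tfrac{e\pi^2(a+1)^2}{6\beta} - \tfrac{2a}{a+1}\big\}$, and the choice $a = 19$ produces the constants $1.1$ and $\log(10 e \pi^2/\beta)$ announced in the theorem. The monotonicity of $\psi_*^{-1}$ (Lemma 2.26) preserves the inequality through the outer transform, finishing the argument. I expect no conceptual hurdles: the real work is bookkeeping, ensuring (i) the per-bucket $\lambda$ is chosen from $k$ alone so that the minimization is genuinely data- and hypothesis-independent, and (ii) the edge case where the infimum in Lemma 2.26 is realized only as $\lambda \to b$, which is handled by the continuity argument flagged in Remark 4.1.
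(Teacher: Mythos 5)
Your proposal matches the paper's own proof essentially step for step: apply the extension of Rivasplata's single-draw decoupling result with $f(w,s)=\lambda\gen(w,s)$ and the CGF algebra to get the pointwise intermediate inequality, then run the events'-space quantization on $\log \frac{\rmd \bP_W^S}{\rmd \bQ}(W)$ exactly as in the proof of the no-cutoff PAC-Bayes Chernoff analogue (with $\beta_k = \nicefrac{\beta}{k^2}$ and the $\nicefrac{\pi^2}{6}$ union-bound cost), and finish with the same envelope relaxation at $a=19$ that yields the constants $1.1$ and $\log\frac{10e\pi^2}{\beta}$. No gaps; the edge cases you flag (data-independent per-bucket $\lambda$, infimum attained only as $\lambda \to b$) are handled just as in the paper.
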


\subsection{Losses With a Bounded $p$-th Moment}
\label{subsec:single_draw_bounded_moment}

Finally, after proving the single-draw PAC-Bayesian fast rate bound from~\Cref{th:single-draw-fast-rate} and describing how to use the events' space quantization technique to optimize parameters in the single-draw PAC-Bayesian setting, extending the results for losses with a bounded moment and a bounded variance from~\Cref{subsec:losses_with_bounded_moment} to this setting is routine.

The only result that does not extend directly to this setting is~\Cref{th:parameter_free_anytime_valid_bounded_2nd_moment}, as we did not derive the starting inequality which comes from a treatment of a certain martingale sequence (see~\Cref{app:closed_form_parameter_free_wang}).

As we did in~\Cref{subsec:losses_with_bounded_moment}, to alleviate the notation, we present the results defining $\kappa_1 \coloneqq c \gamma \log \big( \nicefrac{\gamma}{(\gamma -1)} \big)$, $\kappa_2 \coloneqq c \gamma$, and $\kappa_3 \coloneqq \gamma \big( 1 - c(1 - \log c)\big)$, with the understanding that they are functions of the parameters $c \in (0,1]$ and $\gamma > 1$ from~\Cref{th:single-draw-fast-rate}.

\begin{theorem}
    \label{sec:single-draw-bounded-moment}
    Consider a loss function $\ell(w,Z)$ with a $p$-th moment bounded by $m_p$ for all $w \in \cW$. Let $\bQ$ be any prior independent of $S$ such that $\bP_W^S \ll \bQ$ almost surely. Then, for every $\beta \in (0,1)$, with probability no smaller than $1 - \beta$
    \begin{equation*}
        \poprisk(W) \leq \kappa_1 \cdot \emprisk_{\leq t^\star} + m_p^{\frac{1}{p}} \Big(\frac{p}{p-1}\Big) \Big( \kappa_2 \cdot \frac{1.1 \log \frac{\rmd \bP_W^S}{\rmd \bQ}(W) + \log \frac{10 e \pi^2 \xi(n)}{\beta}}{n} + \kappa_3 \Big)^{\frac{p-1}{p}}
    \end{equation*}
    holds \emph{simultaneously} for all $c \in (0,1]$ and all $\gamma > 1$, where $$t^\star \coloneqq m_p^{\frac{1}{p}} \Big( \kappa_2 \cdot \frac{1.1 \log \frac{\rmd \bP_W^S}{\rmd \bQ}(W) + \log \frac{10 e \pi^2 \xi(n)}{\beta}}{n} + \kappa_3 \Big)^{-\frac{1}{p}}.$$
\end{theorem}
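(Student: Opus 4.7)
The plan is to mirror the proof of~\Cref{th:alquier_truncation_method_refined_adaptive_lambda} step by step, replacing each in-expectation tool by its single-draw counterpart. The three ingredients I would combine are Alquier's refined truncation method, the single-draw fast-rate bound from~\Cref{th:single-draw-fast-rate}, and the events'-space quantization technique introduced in~\Cref{subsec:single_draw_bounded_cgf}.

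First, I would decompose $\ell = \ell_{<n/\lambda} + \ell_{\geq n/\lambda}$ exactly as in~\Cref{subsec:losses_with_bounded_moment}. Since $\ell_{<n/\lambda}$ has range $[0,n/\lambda]$, applying the bounded-$[0,b]$ form of~\Cref{th:single-draw-fast-rate} with $b=n/\lambda$ yields, with probability at least $1-\beta$ and simultaneously for all $c\in(0,1]$, $\gamma>1$,
\begin{equation*}
    \poprisk_{<n/\lambda}(W) \leq \kappa_1\,\emprisk_{<n/\lambda}(W,S) + \kappa_2\cdot\frac{\log\tfrac{\rmd\bP_W^S}{\rmd\bQ}(W)+\log\tfrac{\xi(n)}{\beta}}{\lambda} + \kappa_3\cdot\frac{n}{\lambda}.
\end{equation*}
The tail bound leading to~\eqref{eq:unbounded_remainder_moment_tail_bound} is already \emph{deterministic in} $w$: for every $w\in\cW$, Markov's inequality with $\bE[\ell(w,Z)^p]\leq m_p$ gives $\poprisk_{\geq n/\lambda}(w)\leq \frac{m_p}{p-1}(\lambda/n)^{p-1}$. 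Adding the two pieces produces the single-draw analogue of~\Cref{lemma:alquier_truncation_method_refined_bouned_moment}, valid for each \emph{fixed} $\lambda>0$ at no additional probabilistic cost.

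Next, to let $\lambda$ adapt to the observed $(W,S)$, I would apply the events'-space quantization exactly as in the proof of~\Cref{th:pac_bayes_chernoff_analogue_no_cutoff}, only with the logarithm of the Radon--Nikodym derivative in place of the relative entropy. Set $\cE_1 = \{\log\tfrac{\rmd\bP_W^S}{\rmd\bQ}(W)\leq 1\}$ and $\cE_k = \{k-1 < \log\tfrac{\rmd\bP_W^S}{\rmd\bQ}(W)\leq k\}$ for $k\geq 2$. On each $\cE_k$ apply the previous step with confidence $\beta_k = \beta/k^2$ and deterministic minimizer $\lambda_k$, and union-bound over $k\in\bN$ at cost $\sum_k \beta_k = \pi^2\beta/6$. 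The substitution $\beta\leftarrow 6\beta/\pi^2$ together with the concave linearization used to pass from~\Cref{th:pac_bayes_chernoff_analogue_no_cutoff} to~\Cref{cor:pac_bayes_chernoff_analogue_no_cutoff_linearized} produces the factor $1.1$ and the logarithm $\log\tfrac{10e\pi^2\xi(n)}{\beta}$ inside the bracket.

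The only computation that requires care is the minimization over $\lambda$; this is where I expect the bookkeeping to be the main obstacle. Writing $\tilde{\mathfrak{C}}\coloneqq 1.1\log\tfrac{\rmd\bP_W^S}{\rmd\bQ}(W)+\log\tfrac{10e\pi^2\xi(n)}{\beta}$, the minimizer $\lambda^\star$ balances $(\kappa_2\tilde{\mathfrak{C}} + \kappa_3 n)/\lambda$ against $\frac{m_p}{p-1}(\lambda/n)^{p-1}$, which after routine algebra gives $n/\lambda^\star = t^\star$ with the form stated in the theorem. Substituting back collapses the three contributions $\kappa_2\tilde{\mathfrak{C}}\cdot t^\star/n$, $\kappa_3 t^\star$, and $\frac{m_p}{p-1} t^{\star\,-(p-1)}$ into $m_p^{1/p}\cdot\frac{p}{p-1}(\kappa_2\tilde{\mathfrak{C}}/n + \kappa_3)^{(p-1)/p}$, matching the statement verbatim. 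Simultaneity in $c$ and $\gamma$ is preserved throughout because~\Cref{th:single-draw-fast-rate} already holds simultaneously in these two parameters, and neither the truncation nor the quantization interacts with them.
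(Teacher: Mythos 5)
Your proposal is correct and follows essentially the same route the paper intends: the paper treats this theorem as a routine combination of the single-draw fast-rate bound (\Cref{th:single-draw-fast-rate}), the refined truncation decomposition with the pointwise Markov tail bound, and the events'-space quantization on $\log \frac{\rmd \bP_W^S}{\rmd \bQ}(W)$ with the same $\beta_k=\beta/k^2$ union bound and linearization yielding the $1.1$ factor and $\log\frac{10e\pi^2\xi(n)}{\beta}$. Your closing optimization over $\lambda$ also reproduces $t^\star$ and the $\frac{p}{p-1}$ constant exactly as in the PAC-Bayesian analogue, so nothing further is needed.
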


\begin{theorem}
    \label{sec:single-draw-bounded-variance}
    Consider a loss $\ell(w,Z)$ with a variance bounded by $\sigma^2$ for all $w \in \cW$. Let $\bQ$ be any prior independent of $S$ such that $\bP_W^S \ll \bQ$ almost surely. Then, for every $\beta \in (0,1)$, with probability no smaller than $1 - \beta$
    \begin{align*}
        \poprisk(W) \leq \mleft( 1 - 2 \sqrt{\mathfrak{C}_{n,\beta,S,W}'} \mright)_+^{-1}  \mleft( \kappa_1 \cdot \bE^S \big[\emprisk(W,S)\big]  + 2 \sqrt{\sigma^2 \mathfrak{C}_{n,\beta,S,W}'} \mright)
    \end{align*}
    holds \emph{simultaneously} for all $c \in (0,1]$ and all $\gamma > 1$, where
    \begin{equation*}
        \mathfrak{C}_{n,\beta,S,W}' \coloneqq \frac{1.1 \log \frac{\rmd \bP_W^S}{\rmd \bQ}(W) + \log \frac{10 e \pi^2 \xi(n)}{\beta}}{n}.
    \end{equation*}    
\end{theorem}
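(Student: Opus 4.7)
The proof will mirror, step by step, the derivation of its PAC-Bayesian analogue (Theorem~\ref{th:bounded_variance_high_probability}) in~\Cref{subsec:losses_with_bounded_moment}, with the only substantive change being that every appearance of $\bE^S[\,\cdot\,]$ on the algorithm's output is replaced by the single-draw quantity, and every appearance of $\relent(\bP_W^S\Vert\bQ)$ is replaced by the log-Radon--Nikodym derivative $\log\frac{\rmd\bP_W^S}{\rmd\bQ}(W)$. The starting point will be the single-draw counterpart of~\Cref{th:alquier_truncation_method_refined_adaptive_lambda} for $p=2$ (i.e.~\Cref{sec:single-draw-bounded-moment} specialized to $p=2$), used under the weaker, hypothesis-dependent second-moment condition discussed in~\Cref{subsubsec:case_p_infinity}, namely that $\bE[\ell(w,Z)^2]\le m_2(w)$ is allowed to depend on $w$. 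The key observation that makes this weaker variant applicable here is the identity
\begin{equation*}
\bE[\ell(W,Z)^2] \;=\; \var(\ell(W,Z)) + \poprisk(W)^2 \;\le\; \sigma^2 + \poprisk(W)^2,
\end{equation*}
which introduces the self-referential term $\poprisk(W)^2$ that ultimately produces the multiplicative factor $(1-2\sqrt{\comp'})_+^{-1}$.

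Concretely, the first step is to invoke the single-draw bound of~\Cref{sec:single-draw-bounded-moment} at $p=2$, after relaxing $\emprisk_{\le t^\star}(W,S)\le \emprisk(W,S)$ (which is valid since the loss is non-negative) and replacing $m_2^{1/2}$ by $\sqrt{\sigma^2+\poprisk(W)^2}$. This yields, with probability at least $1-\beta$, \emph{simultaneously} for all $c\in(0,1]$ and all $\gamma>1$,
\begin{equation*}
\poprisk(W) \;\le\; \kappa_1\cdot\emprisk(W,S) + 2\sqrt{\bigl(\sigma^2 + \poprisk(W)^2\bigr)\comp'}.
\end{equation*}
The second step is to apply the subadditivity of $\sqrt{\cdot}$, i.e.\ $\sqrt{x+y}\le\sqrt{x}+\sqrt{y}$, to split the square root into a $\sigma$-dependent piece and a $\poprisk(W)$-dependent piece, yielding
\begin{equation*}
\poprisk(W) \;\le\; \kappa_1\cdot\emprisk(W,S) + 2\sqrt{\sigma^2\comp'} + 2\poprisk(W)\sqrt{\comp'}.
\end{equation*}

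The third and final step is the algebraic manipulation: collect $\poprisk(W)$ on the left-hand side to obtain $\poprisk(W)\bigl(1-2\sqrt{\comp'}\bigr)\le \kappa_1\emprisk(W,S)+2\sigma\sqrt{\comp'}$. When $1-2\sqrt{\comp'}>0$ one divides to get the claimed bound; when $1-2\sqrt{\comp'}\le 0$ the statement holds trivially under the convention $1/0\to\infty$ (and the $(\cdot)_+^{-1}$ notation handles this cleanly). I do not anticipate a real obstacle here: the only place where care is needed is justifying that the single-draw bound of~\Cref{sec:single-draw-bounded-moment} remains valid under the hypothesis-dependent moment bound $m_2(w)=\sigma^2+\poprisk(w)^2$ rather than a uniform $m_2$. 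This is precisely the point flagged in~\Cref{subsubsec:case_p_infinity} for the expectation version, and it carries over verbatim because the only use of $m_p$ in the derivation is inside Markov's bound on the tail of the truncated remainder (cf.~\eqref{eq:unbounded_remainder_moment_tail_bound}), which applies pointwise in the hypothesis.
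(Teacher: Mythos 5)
Your proposal is correct and follows essentially the same route as the paper: the paper treats this theorem as the routine single-draw transposition of the proof of \Cref{th:bounded_variance_high_probability}, i.e.\ the $p=2$ bounded-moment bound with the self-referential moment $\var(\ell(W,Z))+\poprisk(W)^2$, followed by $\sqrt{x+y}\le\sqrt{x}+\sqrt{y}$, the bound $\var\le\sigma^2$, and the rearrangement that produces the $(1-2\sqrt{\comp'})_+^{-1}$ factor. Your remark that the hypothesis-dependent moment bound is admissible because $m_p$ only enters through the pointwise Markov tail estimate is exactly the justification the paper relies on in~\Cref{subsubsec:case_p_infinity}.
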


\section{Anytime Validity}
\label{sec:anytime_validity}

Interactive learning algorithms are mechanisms that take some data from the environment, produce a hypothesis that may or may not interact with the environment, then take more data from the environment, produce a new hypothesis, and so on. An example of these kinds of algorithms are simple iterative algorithms like the ones presented in~\Cref{sec:noisy_iterative_learning_algos} or bandits, online, or reinforcement learning algorithms. 

For these algorithms, it is often interesting to be able to assess the generalization performance of the algorithm throughout their training. For example, checking the generalization error of SGLD at the first epoch, and if we see that it is generalizing poorly, stop the training to avoid wasting resources. This situation is often referred to as ``\emph{peeking}''. However, the bounds presented so far only for a fixed time step, or a fixed number of samples. Therefore, it is of interest to consider bounds that are \emph{anytime valid}, that is, that hold \emph{simultaneously} for each time step.

There have been multiple works deriving anytime-valid bounds~\citep{jang2023tighter,wang2015pac,haddouche2023pacbayes,chugg2023unified}. Most of these works are rooted in the usage of \citet{ville1939etude}'s extension of Markov's inequality~\eqref{eq:markov} on a martingale sequence. \citet{chugg2023unified} derived a recipe to recover most of the usual parameterized PAC-Bayes bounds (including the Seeger--Langford~\citep{seeger2002pac,langford2001bounds} from~\Cref{th:seeger_langford_pac_bayes}, \citet{mcallester1998some,mcallester1999pac,mcallester2003pac}'s \Cref{th:mcallester}, and \citet{banerjee2021information}'s \Cref{lemma:extension_banerjee}) with an extension of Ville's inequality for both forward supermartingales and reverse submartingales. 

However, every standard PAC-Bayesian and single-draw PAC-Bayesian bound can be extended to an anytime-valid bound at a union bound cost, even if it does not have a suitable supermartingale or reverse submartingale structure. For high-probability PAC-Bayes bounds like the ones presented throughout this chapter, this extension comes at the small cost of adding $2 \log \frac{\pi t}{\sqrt{6}}$ to the complexity terms. This ``folklore'' result is formalized below for general probabilistic bounds. %

\begin{theorem}[{From standard to anytime-valid bounds}]
\label{th:standard_to_anytime_valid}
    Consider the probability space $(\cA, \ccA, \bP)$ and let $(\cE_t)_{t=1}^\infty$ be a sequence of event functions such that $\cE_t: (0,1) \to \ccA$. If $\bP[ \cE_t(\beta) ] \geq 1 - \beta$ for all $\beta \in (0,1)$ and all $t \geq 1$, then $\bP[ \cap_{t=1}^\infty \cE_t(\nicefrac{6 \beta}{\pi^2 t^2}) ] \geq 1 - \beta$ for all $\beta \in (0,1)$.
\end{theorem}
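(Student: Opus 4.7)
The plan is to use a straightforward union bound over the countable family of events, combined with the Basel identity $\sum_{t=1}^\infty 1/t^2 = \pi^2/6$ to make the total failure probability sum to $\beta$. This is essentially the same trick already used in the proof of \Cref{th:pac_bayes_chernoff_analogue_no_cutoff} (see \Cref{rem:choice_beta_k}), so all the machinery is at hand.

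First I would fix $\beta \in (0,1)$ and define, for every $t \geq 1$, the sequence of ``budgets'' $\beta_t \coloneqq 6\beta/(\pi^2 t^2)$. By construction these are positive and satisfy $\sum_{t=1}^\infty \beta_t = \beta \cdot (6/\pi^2) \sum_{t=1}^\infty 1/t^2 = \beta$. Since the hypothesis of the theorem ensures $\bP[\cE_t(\beta_t)] \geq 1 - \beta_t$, passing to complements gives $\bP[\cE_t(\beta_t)^c] \leq \beta_t$ for every $t \geq 1$. Here I should briefly note that each $\beta_t$ lies in $(0,1)$ (which is needed for the hypothesis to apply), which is immediate since $6/(\pi^2 t^2) \leq 6/\pi^2 < 1$ for all $t \geq 1$.

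Next I would apply countable subadditivity of the probability measure $\bP$ to the union of these ``bad'' events:
\begin{equation*}
    \bP\mleft[ \bigcup_{t=1}^\infty \cE_t(\beta_t)^c \mright] \leq \sum_{t=1}^\infty \bP\big[\cE_t(\beta_t)^c\big] \leq \sum_{t=1}^\infty \beta_t = \beta.
\end{equation*}
Taking complements once more and using De Morgan's law,
\begin{equation*}
    \bP\mleft[ \bigcap_{t=1}^\infty \cE_t(\beta_t) \mright] = 1 - \bP\mleft[ \bigcup_{t=1}^\infty \cE_t(\beta_t)^c \mright] \geq 1 - \beta,
\end{equation*}
which is exactly the claimed statement with $\beta_t = 6\beta/(\pi^2 t^2)$.

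There is no real obstacle here; the only subtlety is that one must implicitly assume each $\cE_t$ is $\ccA$-measurable (so the countable intersection is also in $\ccA$), which is guaranteed by the hypothesis $\cE_t:(0,1) \to \ccA$. The choice $\beta_t \propto 1/t^2$ is of course not canonical: any summable positive sequence $(a_t)$ would work, yielding the bound $\bP[\cap_t \cE_t(a_t \beta / \sum_s a_s)] \geq 1 - \beta$; the $1/t^2$ weighting just gives the clean constant $\pi^2/6$. As noted in \Cref{rem:choice_beta_k}, sharper sub-logarithmic choices such as $\beta_t \propto \beta/(t \log^2(6t))$ are possible and would only affect constants, so I would mention this as a final remark rather than pursue it.
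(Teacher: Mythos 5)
Your proof is correct and follows essentially the same route as the paper: a union bound over the complements with failure budgets $\beta_t \propto \beta/t^2$, the Basel identity $\sum_{t\ge 1} 1/t^2 = \pi^2/6$, and De Morgan (the paper merely writes it with $\beta_t = \beta/t^2$ and rescales $\beta \leftarrow 6\beta/\pi^2$ at the end instead of normalizing up front). Nothing further is needed.
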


The proof of this statement is simple. Consider the equivalent statement: ``for every $\beta \in (0,1)$, if $\bP[ \cE_t^c (\beta) ] < \beta$ for all $t \geq 1$, then $\bP[ \cup_{t=1}^\infty \cE_t^c (\nicefrac{6\beta}{\pi^2 t^2})] < \beta$''. By the union bound, it follows that $\bP[ \cup_{t=1}^\infty \cE_t^c (\beta_t) ] < \sum_{t=1}^\infty \beta_t$. Let $\beta_t = \nicefrac{\beta}{t^2}$, then $\bP[ \cup_{t=1}^\infty \cE_t^c (\nicefrac{\beta}{t^2}) ] < \nicefrac{\pi^2 \beta}{6}$. Then, the substitution $\beta \leftarrow \nicefrac{6 \beta}{\pi^2}$ completes the proof.

There are better choices of $\beta_t$ such as $\beta_t = \nicefrac{\beta}{t \log^2(6t)}$~\citep{kaufmann2016complexity}, but all result in essentially the same cost $\cO(\log t)$ for high-probability PAC-Bayes bounds. In fact, 
most of these bounds are applied to bandits, online, or reinforcement learning algorithms where they only take one sample at each time step, and therefore the penalty $\cO(\log t)$ is of $\cO(\log n)$.
The main takeaway from this result is that the anytime-valid bounds obtained via martingales and \citet{ville1939etude}'s inequality only contribute to shaving a log factor for the PAC-Bayes high-probability bounds presented in this monograph. Hence, their main advantage is in describing online learning situations where the subsequent samples are dependent on each other, which is not inherently captured by statements like \Cref{th:standard_to_anytime_valid}.

\begin{remark}
\label{rem:specialized_bounds_anytime_valid}
\looseness=-1 \Cref{th:catoni_pac_bayes_uniform,th:fast_rate_bound_strong,th:mixed_rate_bound} and \Cref{cor:fast_rate_bound} follow verbatim as an anytime-valid bound substituting $\log \nicefrac{\xi(n)}{\beta}$ by $\log \nicefrac{\sqrt{\pi(n+1)}}{\beta}$ without needing \Cref{th:standard_to_anytime_valid}. The reason is that these results are derived from the Seeger--Langford bound~\citep{langford2001bounds, seeger2002pac, maurer2004note}, which is extended to an anytime-valid bound at this cost in \citep{jang2023tighter}.
\end{remark}

\begin{subappendices}
\section{Comparison Between the Fast- and Mixed-Rate Bounds}
\label[appendix]{app:comparison_fast_rate_bounds}

Just by inspecting their equations, it is apparent that the proposed mixed-rate bound of  \Cref{th:mixed_rate_bound} is tighter than those in~\citep{tolstikhin2013pac,rivasplata2019pac}. However, it is not directly obvious that the presented fast-rate bound of \Cref{th:fast_rate_bound_strong} is tighter than \citet{thiemann2017strongly}'s \Cref{th:thiemann_pac_bayes}. In fact, even~\Cref{cor:fast_rate_bound} is tighter than this result.

To show this, we will show the stronger statement that $f_{\textnormal{fr}}(r,c) \leq f_{\textnormal{th}}(r,c)$ for all $r, c \geq 0$, where 
\begin{align*}
    f_{\textnormal{th}}(r,c) &= \inf_{\lambda \in  (0,2)} \Bigg \{ \frac{r}{1 - \frac{\lambda}{2}} + \frac{c}{\lambda \big(1 - \frac{\lambda}{2} \big)} \Bigg \} \textnormal{ and } \\ f_{\textnormal{fr}}(r,c) &= \inf_{\gamma > 2} \Bigg \{ \gamma \log \Big(\frac{\gamma}{\gamma - 1} \Big) r +  \gamma \ c\Bigg \}.
\end{align*}
If this holds, then \Cref{cor:fast_rate_bound} is tighter than \Cref{th:thiemann_pac_bayes} as enlarging the optimization set in $f_{\textnormal{fr}}(r,c)$ from $\{\gamma>2\}$ to $\{\gamma>1\}$ will only improve the bound.

Note that with the change of variable $\gamma = \big( \lambda (1 - \nicefrac{\lambda}{2}) \big)^{-1}$, if $\lambda \in (0,2)$, then $\gamma > 2$. This way, we may re-write $f_{\textnormal{fr}}$ in terms of a minimization over $\lambda \in (0,2)$
\begin{align*}
    f_{\textnormal{fr}}(r,c) = \inf_{\lambda \in (0,2)} \Bigg \{ \frac{r}{\lambda \big(1 - \frac{\lambda}{2} \big)} \log \frac{2}{(\lambda - 2) \lambda + 2} + \frac{c}{\lambda \big(1 - \frac{\lambda}{2} \big)} \Bigg \}.
\end{align*}

Finally, noting that 
\begin{equation*}
    \frac{1}{\lambda} \log \frac{2}{(\lambda - 2) \lambda + 2} \leq 1
\end{equation*}
for all $\lambda \in (0,2)$ completes the proof.

Similarly, it can also be shown that the mixed-rate bound from \Cref{th:mixed_rate_bound}, which is itself a relaxation of the fast-rate bound of \Cref{th:fast_rate_bound_strong}, is also tighter than the fast-rate bound from \citep{thiemann2017strongly}. In this case, we will show the stronger statement that $f_{\textnormal{mr}}(r,c) \leq f_{\textnormal{th}}(r,c)$ for all $r, c \geq 0$, where
\begin{equation}
\label{eq:mixed_rate_relaxed}
        f_{\textnormal{mr}}(r,c) = \inf_{\gamma > 2} \Bigg \{ \frac{1}{2} \cdot \frac{2 \gamma - 1}{\gamma-1} r +  \gamma \ c\Bigg \}.
\end{equation}
As above, as \Cref{th:mixed_rate_bound} is the closed-form expression obtained optimizing the equivalent of~\eqref{eq:mixed_rate_relaxed} on the larger set $\{ \gamma > 1 \}$,  showing this statement suffices.

Again, letting $\gamma = \big( \lambda (1 - \nicefrac{\lambda}{2}) \big)^{-1}$, if $\lambda \in (0,2)$ allows us to write $f_{\textnormal{mr}}$ in terms of a minimization over $\lambda \in (0,2)$
\begin{align*}
    f_{\textnormal{mr}}(r,c) = \inf_{\lambda \in (0,2)} \Bigg \{ \frac{1}{2} \cdot \frac{\lambda^2 - 2 \lambda + 4}{\lambda^2 - 2 \lambda + 2} r + \frac{c}{\lambda \big(1 - \frac{\lambda}{2} \big)} \Bigg \}.
\end{align*}

Finally, noting that
\begin{equation*}
    \frac{1}{2} \cdot \frac{\lambda^2 - 2 \lambda + 4}{\lambda^2 - 2 \lambda + 2} \leq \frac{1}{1 - \frac{\lambda}{2}}
\end{equation*}
for all $\lambda \in (0,2)$ completes the proof.

\section{Example: PAC-Bayes With Backprop}
\label[appendix]{app:pbb}

In \Cref{sec:bounds_bounded_losses}, we mention that methods that use PAC-Bayes bounds to optimize the posterior, such as PAC-Bayes with backprop~\citep{rivasplata2019pac,perez2021tighter} could benefit from using the bounds from \Cref{th:fast_rate_bound_strong,th:mixed_rate_bound}. In this subsection of the appendix, we provide an example showcasing that this is the case. 

The PAC-Bayes with backprop method~\citep{rivasplata2019pac, perez2021tighter} considers a model parameterized by $w \in \bR^d$ and a prior distribution $\bQ$ over the parameters, for example $\bQ = w_0 + \sigma_0 \cN(0,I_d)$. Then, the parameters are updated using stochastic gradient descent on the objective
\begin{equation*}
    \emprisk(w,s) + f_{\textnormal{bound}}(w;\bQ),
\end{equation*}
where $\emprisk(w,s)$ is the empirical risk on the training data realization and $f_{\textnormal{bound}}(w;\bQ)$ is extracted from a PAC-Bayes bound evaluated on the parameters $w \in \bR^d$ with prior $\bQ$. With an appropriate choice of the posterior, the bound $f_{\textnormal{bound}}$ function is calculable and the said posterior can be constructed, e.g.  $\bP_{W}^S(w) = w + \sigma_0 \cN(0, I_d)$. After the iterative procedure is completed, the empirical risk is bounded using the Seeger--Langford bound~\citep{langford2001bounds,seeger2002pac} with a Monte Carlo estimate of the posterior parameters of $m$ samples with confidence $1-\beta'$, and the population risk is bounded also using the Seeger--Langford bound~\citep{langford2001bounds,seeger2002pac} with the number of training samples $n$ and a confidence $1- \beta$, amounting for a total confidence of $1 - (\beta' + \beta)$. For more details, please check~\citep{rivasplata2019pac, perez2021tighter}.

Using \citet{thiemann2017strongly}'s \Cref{th:thiemann_pac_bayes}, \citet{rivasplata2019pac}'s \Cref{th:rivasplata_pac_bayes}, or the classical \citet{mcallester2003pac}' bound from~\Cref{th:mcallester} as an objective can be harmful since they penalize too harshly the complexity term dominated by the normalized dependency $\nicefrac{\relent(\bP_W^S \Vert \bQ)}{n}$. Hence, SGD steers the parameters towards places too close to the prior, potentially avoiding other posteriors that achieve lower empirical error and have an overall better population risk. In this sense, it makes sense that bounds such as the proposed fast- and mixed-rate bounds from \Cref{th:fast_rate_bound_strong,th:mixed_rate_bound} or the Seeger--Langford bound~\citep{langford2001bounds,seeger2002pac} (with \citet{reeb2018learning}'s gradients), would lead to said posteriors. This is verified in \Cref{table:results} for a convolutional network and the MNIST dataset. For the fast-rate bound from \Cref{th:fast_rate_bound_strong} and \Cref{cor:fast_rate_bound}, at each iteration the approximately optimal $\gamma$ given after the theorem is employed, thus updating the posterior and the parameter alternately. We saw that the approximation of $\gamma$ is good both by comparing the results of the final posterior in \Cref{table:results} and the coefficients of the empirical risk and the complexity term in \Cref{fig:comparison_coefficients} with those obtained from the Seeger--Langford bound~\citep{langford2001bounds,seeger2002pac} with \citet[Appendix A]{reeb2018learning}'s gradients. After a few iterations, once the empirical risk is small and the \Cref{cor:fast_rate_bound} is a good approximation of \Cref{th:fast_rate_bound_strong}, the gradients are close to each other.

\begin{remark}
    \citet{lotfi2022pac} obtain even tighter population risk certificates for networks on the MNIST dataset (11.6 \%)    
    considering a compression approach to the PAC Bayes bound from~\citet{catoni2007pac}. To be precise, they considered a deterministic posterior that returns a quantized version of the network's parameters, thus reducing \Cref{th:catoni_pac_bayes} to the MDL formalism from~\Cref{sec:mdl_and_occams_razor}. They described the parameters of the networks with a tunable prefix-free variable-length code that acts as the description language $\desc$. Then, both the quantized parameters and the quantization levels of the code are learnt simultaneously to minimize a variant of the MDL generalization guarantee in the \Cref{th:catoni_pac_bayes} equivalent of~\eqref{eq:mdl_generalization_compression} and choosing an appropriate parameter $\lambda$ with a grid search and the use of the union bound. Therefore, their results could be tightened further using our strengthened version from~\Cref{th:catoni_pac_bayes_uniform}. Nonetheless, the goal of this example is not to propose a method that obtains state-of-the-art certificates, but to showcase that the tightness of the tractable bounds in \Cref{sec:bounds_bounded_losses} can improve methods that employ PAC-Bayes bounds to find a suitable posterior.
\end{remark}

\begin{table}[ht]
  \caption{Population risk certificate, empirical risk, and normalized dependency of the posterior obtained with PAC-Bayes with backprop~\citep{rivasplata2019pac,perez2021tighter} using Gaussian priors and different objectives. The best risk certificates are highlighted in bold face, and the second best is highlighted in italics. $^*$The gradients for the Seeger--Langford~\citep{langford2001bounds,seeger2002pac} bound are not calculated from the bound but hard-coded following \citep[Appendix A]{reeb2018learning}}
  \label{table:results}
  \centering
  \begin{tabular}{lccc}
    \toprule
    Objective & Risk certificate & Empirical risk & Normalized dependency \\
    \midrule
    \Cref{th:rivasplata_pac_bayes}~\citep{rivasplata2019pac} & 0.20870 & 0.11372 & 0.03117 \\
    \Cref{th:thiemann_pac_bayes}~\citep{thiemann2017strongly} & 0.21159 &  0.11053 &  0.03526 \\
    \Cref{th:mcallester}~\citep{mcallester2003pac} & 0.23658 & 0.23658 & 0.02715 \\
    \Cref{th:fast_rate_bound_strong} [ours] & \textbf{0.17354} & 0.07064 & 0.04556 \\
    \Cref{cor:fast_rate_bound} [ours] & \textbf{0.17501} & 0.07054 & 0.04649 \\
    \Cref{th:mixed_rate_bound} [ours] & \textit{0.19763} & 0.09214 & 0.04159 \\
    \midrule 
    \Cref{th:seeger_langford_pac_bayes}~\citep{langford2001bounds,seeger2002pac}$^*$ & \textbf{0.16922} & 0.06701 & 0.04594 \\
    \bottomrule
  \end{tabular}
\end{table}

\begin{figure}
    \centering
    \includegraphics[width=0.6\textwidth]{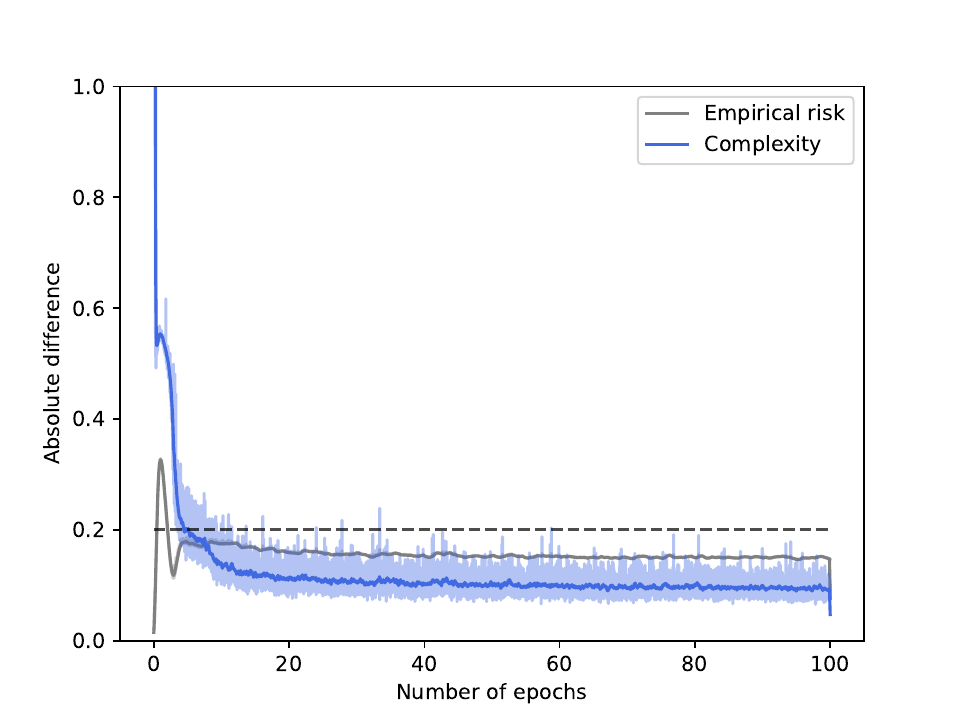}
    \caption{Absolute difference between the coefficients of the empirical risk (gray) and the complexity term (blue) of the gradients of the Seeger--Langford bound~\citep{langford2001bounds,seeger2002pac} from \citep{reeb2018learning} and the fast-rate bound (\Cref{cor:fast_rate_bound}) using the approximately optimal $\gamma$.}
    \label{fig:comparison_coefficients}
\end{figure}

\subsection{Experimental Details}

All calculations were performed using the original code from PAC-Bayes with backprop: \url{https://github.com/mperezortiz/PBB}. The file modified to include our bounds and the hard-coded gradients from \citet{reeb2018learning} is \texttt{bounds.py}. The convolutional network architecture consists of two convolutional layers with 32 and 64 filters, respectively, and a kernel size of 3. The last convolutional layer is followed by a max pooling layer with a kernel size of 2 and two linear layers with 128 and 10 nodes respectively. Between all layers, there is a ReLU activation function.

For all experiments, the standard deviation of the prior was $\sigma_0 = 0.1$. The learning rate was 0.01 for all experiments except for \citet{rivasplata2019pac}'s \Cref{th:rivasplata_pac_bayes} objective which was 0.005. The momentum was 0.99 for all objectives except from \citet{thiemann2017strongly}'s \Cref{th:thiemann_pac_bayes} which was 0.95. The number of Monte Carlo samples was $m=150,000$, the minimum probability $p_{\textnormal{min}}$ (see \citep{perez2021tighter} for the details) was $10^{-5}$, and the confidence parameters were $\beta' = 0.01$ and $\beta = 0.025$ respectively. The networks were trained for 100 epochs and a batch size of 250 to mimic the setting in \citep{perez2021tighter}.

To find the hyper-parameters, we used the same grid search as \citep{perez2021tighter}. That is, the standard deviation of the prior was selected over $\{ 0.005, 0.01, 0.02, 0.03, 0.04, 0.05, 0.1 \}$, the learning rate over $\{0.001, 0.005, 0.01 \}$, and the momentum over $\{0.95, 0.99 \}$. Therefore, the confidence parameters were updated to $\beta' \leftarrow \nicefrac{\beta'}{42}$ and $\beta \leftarrow \nicefrac{\beta}{42}$ respectively to comply with the union bound and maintain the guarantees.

All experiments were done on a TESLA V100 with 32GB of memory. Each full run takes approximately 110 hours with most of the time taken on the Monte Carlo sampling for the risk certificates calculation. For  $42 \cdot 5$ runs, this amounts to approximately 23,100 hours, which is around 32 months.  Since the time was prohibitive for us, the hyper-parameter search was done without the Monte Carlo sampling, where each run took around $25$ minutes amounting to a total of 87.5 hours or less than 4 days. Then, the final certificates were calculated using the full Monte Carlo sampling adding an extra 550 hours or around 23 days. In summary, the total amount of computing was approximately 27 days.

\section{PAC-Bayes Bounds With a Smaller Union Bound Cost}
\label[appendix]{app:pac_bayes_smaller_union_bound_cost}

As discussed in \Cref{subsec:losses_with_bounded_CGF}, the union bound cost of the PAC-Bayesian bounds of the type~\Cref{th:pac_bayes_chernoff_analogue} can be improved at the cost of a multiplicative factor of $e$ to the relative entropy. Below, we present the parallel of \Cref{th:pac_bayes_chernoff_analogue} with this improved union bound cost, and extending this result to the other theorems in~\Cref{sec:bounds_unbounded_losses} follows analogously almost verbatim.

\begin{theorem}
    \label{th:pac_bayes_chernoff_analogue_loglog}
    Consider a loss function $\ell$ with a bounded CGF(\Cref{def:bounded_cgf}). Let $\bQ$ be any prior independent of $S$ and define the event $\cE = \{ \relent(\bP_W^S \lVert \bQ) \leq n \}$. Then, for every $\beta \in (0,1)$ with probability no smaller than $1-\beta$
    \begin{align*}
        \bE^S \big[ \poprisk(W) \big] \leq &\bI_{\cE} \cdot \Bigg[ \bE^S \big[ \emprisk(W,S) \big] + \psi_*^{-1} \bigg( \frac{e \max \{\relent(\bP_{W}^S \lVert \bQ),1\} + \log \frac{2+\log n}{\beta}} {n} \bigg) \Bigg] \\
        &+ \bI_{\cE^c} \cdot \esssup \bE^S \big[ \poprisk(W) \big]
    \end{align*}
    holds \emph{simultaneously} for all posteriors $\bP_W^S$.
\end{theorem}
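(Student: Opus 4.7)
The plan is to mirror the proof of the first PAC-Bayes Chernoff analogue (Theorem~\ref{th:pac_bayes_chernoff_analogue}) but replace the uniform quantization of the event space by a \emph{geometric} one, along the lines already used by Langford--Catoni and later Maurer. The starting point is the parameterized inequality of Lemma~\ref{lemma:extension_banerjee}: for every $\lambda\in(0,b)$ and every $\beta\in(0,1)$, with probability at least $1-\beta$ the event
\begin{equation*}
    \cB_\lambda \;=\; \Bigl\{\exists\, \bP_W^S:\, \bE^S[\gen(W,S)] > \tfrac{1}{\lambda}\bigl(\tfrac{\relent(\bP_W^S\Vert\bQ)+\log\frac{1}{\beta}}{n}+\psi(\lambda)\bigr)\Bigr\}
\end{equation*}
fails, \emph{uniformly} over all posteriors. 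The obstacle, as before, is that the $\lambda$ that minimizes the right-hand side depends on the dataset through $\relent(\bP_W^S\Vert\bQ)$, so we cannot simply infimize inside the probability.

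First, I would partition the ``interesting'' event $\cE=\{\relent(\bP_W^S\Vert\bQ)\leq n\}$ into the geometric slabs $\cE_0 = \{\relent(\bP_W^S\Vert\bQ)\leq 1\}$ and $\cE_k = \{e^{k-1}<\relent(\bP_W^S\Vert\bQ)\leq e^k\}$ for $k=1,\dots,K$, where $K=\lceil\log n\rceil$; these cover $\cE$ and there are at most $K+1\leq 2+\log n$ of them (restricting, as in the proof of Theorem~\ref{th:pac_bayes_chernoff_analogue}, to those indices with $\bP[\cE_k]>0$ to avoid conditioning on null sets).

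Next, conditionally on $\cE_k$, the data-dependent relative entropy is controlled by the \emph{deterministic} upper bound $e^k$, so the right-hand side of Lemma~\ref{lemma:extension_banerjee} can be bounded by $\tfrac{1}{\lambda}\bigl(\tfrac{e^k+\log\frac{1}{\beta}}{n}+\psi(\lambda)\bigr)$. Because $e^k$ does not depend on $S$, I can now minimize this expression in $\lambda\in(0,b)$ \emph{independently of the training set}: by Lemma~\ref{lemma:boucheron_convex_conjugate_inverse} the infimum equals $\psi_*^{-1}\bigl(\tfrac{e^k+\log\frac{1}{\beta}}{n}\bigr)$ (handling the boundary case $\lambda\to b$ by continuity as in Remark~\ref{rem:infimum_psi}). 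Calling $\lambda_k$ the optimizer gives an event $\cB_{\lambda_k}$ of probability $\leq\beta$ outside of which the uniform bound $\psi_*^{-1}\bigl(\tfrac{e^k+\log\frac{1}{\beta}}{n}\bigr)$ holds on $\cE_k$.

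Then I would re-express $e^k$ in terms of the actual relative entropy on $\cE_k$: for $k\geq 1$ one has $e^k \leq e\cdot e^{k-1}\leq e\cdot\relent(\bP_W^S\Vert\bQ)$, and for $k=0$ one has $e^k = 1 \leq e$, so in every case $e^k \leq e\cdot\max\{\relent(\bP_W^S\Vert\bQ),1\}$. Monotonicity of $\psi_*^{-1}$ then converts the per-slab bound into the desired data-dependent form. Define $\cB'$ as the event that \emph{some} posterior violates the claimed bound on $\cE$, plus the probability-zero part of $\cE^c$ where the essential supremum fails; then $\bP[\cB'\cap\cE_k]\leq\bP[\cB_{\lambda_k}]\leq\beta$ and $\bP[\cB'\cap\cE^c]=0$. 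By the law of total probability and the union bound over the $\leq 2+\log n$ slabs, $\bP[\cB']\leq(2+\log n)\beta$. Finally, the substitution $\beta\leftarrow\beta/(2+\log n)$ produces the statement of the theorem. The only mildly delicate point is the bookkeeping at $k=0$, where the ``$\max\{\relent,1\}$'' correction is needed precisely because the geometric grid bottoms out at $1$ rather than at $0$; everything else is a direct transcription of the uniform-grid argument with the union-bound factor shrunk from $n$ to $2+\log n$.
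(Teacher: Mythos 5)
Your proposal is correct and follows essentially the same route as the paper's own proof in the appendix: the same geometric slabs $\cE_0=\{\relent(\bP_W^S\Vert\bQ)\le 1\}$, $\cE_k=\{e^{k-1}<\relent(\bP_W^S\Vert\bQ)\le e^k\}$, per-slab data-independent optimization of $\lambda$ via \Cref{lemma:boucheron_convex_conjugate_inverse}, the bound $e^k\le e\max\{\relent(\bP_W^S\Vert\bQ),1\}$ to re-introduce the data-dependent quantity, and a union bound over at most $2+\log n$ slabs followed by the substitution $\beta\leftarrow\beta/(2+\log n)$. No gaps.
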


\begin{proof}
    Let $\cB_\lambda$ be the complement of the event in~\Cref{lemma:extension_banerjee} such that $\bP[\cB_\lambda] < \beta$ and consider the sub-events $\cE_0 \coloneqq \{\relent(\bP_{W}^S \lVert \bQ) \in [0,1] \}$, $\cE_1 \coloneqq \{ \relent(\bP_{W}^S \lVert \bQ) \in (1,e] \}$, and $\cE_k \coloneqq \{ \relent(\bP_W^S \lVert \bQ) \in (e^{k-1}, e^k] \}$ for all $k = 2, \ldots, n$, which form a covering of the event $\cE \coloneqq \{ \relent(\bP_W^S \lVert \bQ) \leq n \}$. Furthermore, define $\cK \coloneqq \{ k \in \bN \cup \{0\}: 0 \leq k \leq n \textnormal{ and } \bP[\cE_k] > 0 \}.$  For all $k \in \cK \setminus \{ 0 \}$, given the event $\cE_k$, with probability no more than $\bP[\cB_\lambda | E_k]$, there exists a posterior $\bP_W^S$ such that
    \begin{equation}
        \label{eq:pac_bayes_cgf_with_lambda_and_k_loglog}
        \bE^S \big[ \poprisk(W) \big] > \bE^S \big[ \emprisk(W,S) \big] + \frac{1}{\lambda} \bigg[ \frac{e^k + \log \frac{1}{\beta}}{n} + \psi(\lambda) \bigg],
    \end{equation}
    for all $\lambda \in (0, b)$. The right-hand side of \eqref{eq:pac_bayes_cgf_with_lambda_and_k_loglog} can be minimized with respect to $\lambda$ \emph{independently of the training set $S$}. Let $\cB_{\lambda_k}$ be the event resulting from this minimization and note that $\bP[\cB_{\lambda_k}] < \beta$. According to  \citep[Lemma 2.4]{boucheron2003concentration}, this ensures that, with probability no more than $\bP[\cB_{\lambda_k} | \cE_k]$, there exists a posterior $\bP_W^S$ such that
    \begin{equation*}
        \bE^S \big[ \poprisk(W) \big] > \bE^S \big[ \emprisk(W,S) \big] +\psi_{*}^{-1} \bigg( \frac{e^k + \log \frac{1}{\beta}}{n} \bigg),
    \end{equation*}
    where $\psi_*$ is the convex conjugate of $\psi$ and where $\psi_*^{-1}$ is a non-decreasing concave function. Given $\cE_k$, since $e^k < e\relent(\bP_W^S \Vert \bQ)$, with probability no larger than $\bP[\cB_{\lambda_k} | \cE_k]$, there exists a posterior $\bP_W^S$ such that
    \begin{equation*}
        \bE^S \big[ \poprisk(W) \big] > \bE^S \big[ \emprisk(W,S) \big] +\psi_{*}^{-1} \bigg( \frac{e\relent(\bP_W^S \Vert \bQ) + \log \frac{1}{\beta}}{n} \bigg).
    \end{equation*}
    Now, define $\cB'$ as the event stating that there exists a posterior $\bP_W^S$ such that
    \begin{align*}
        \bE^S \big[ \poprisk(W) \big] > &\bI_{\cE} \cdot \Bigg[ \bE^S \big[ \emprisk(W,S) \big] + \psi_{*}^{-1} \bigg( \frac{e\max \{ \relent(\bP_{W}^{S} \Vert \bQ), 1\} + \log \frac{e}{\beta}}{n} \bigg) \Bigg] \\
        &+ \bI_{\cE^c} \cdot \esssup \bE^S \big[ \poprisk(W) \big]
    \end{align*}
    where $\bP[\cB' | \cE_k] \bP[\cE_k] \leq \bP[\cB_{\lambda_k} | \cE_k] \bP[\cE_k] \leq \bP[\cB_{\lambda_k}] \leq \beta$ for all $k \in \cK$, and where $\bP[\cB' \cap \cE^c] = 0$ by the definition of the essential supremum. Note that, if $\{ 0 \} \in \cK$, the case for $k=0$ is handled by the addition of the maximum $\max \{  \relent(\bP_{W}^{S} \Vert \bQ), 1\}$ to the equation defining the event $\cB'$. Therefore, the probability of $\cB'$ is bounded as
    \begin{equation*}
        \bP[\cB'] = \sum_{k \in \cK} \bP[\cB' | \cE_k] \bP[\cE_k] + \bP[\cB' \cap \cE^c] < (2 + \log n) \beta.
    \end{equation*}
    Finally, the substitution $\beta \leftarrow \beta/ (2 + \log n)$ completes the proof.
\end{proof}

\section{Alquier's Alternative Truncated Loss for Losses With Bounded Moments}
\label[appendix]{app:alternative_choice_of_loss}

As discussed in~\Cref{rem:alternative_choice_of_loss}, \citet[Theorem 2.7]{alquier2006transductive} presented a result similar to \Cref{lemma:alquier_truncation_method_refined_bouned_moment} for losses with a bounded $p$-th moment. However, he did not obtain it with the straightforward technique outlined in~\Cref{subsec:losses_with_bounded_moment}. Instead, he considered the truncated loss function 
\begin{equation*}
    \ell_{p,\nicefrac{n}{\lambda}}(w,z) = \bigg[ \ell(w,z) - \frac{1}{p} \Big( \frac{p - 1}{p} \Big)^{p-1} \Big( \frac{\lambda}{n} \Big)^{p - 1} \cdot |\ell(w,z)|^p \bigg]_+.
\end{equation*}

Importantly, this loss function satisfies that $\ell_{p,\nicefrac{n}{\lambda}} \leq \nicefrac{n}{\lambda}$. Then, let $\poprisk_{p,\nicefrac{n}{\lambda}}$ be the population risk associated to $\ell_{p,\nicefrac{n}{\lambda}}$. It directly follows that
\begin{equation*}
    \bE^S \big[\poprisk(W) \big] \leq \bE^S \big[\poprisk_{p,\nicefrac{n}{\lambda}}\big] + \frac{1}{p} \Big(\frac{p-1}{p} \Big)^{p-1}  \Big( \frac{\lambda}{n} \Big)^{p - 1} \cdot \bE^S|\ell(W,Z)|^p.
\end{equation*}

\looseness=-1 In this way, like before, the term $\poprisk_{p,\nicefrac{n}{\lambda}}$ can be bounded using any standard PAC-Bayes bound for bounded losses and now the second term is bounded by construction. As before, we will present the result using our fast-rate bound from~\Cref{th:fast_rate_bound_strong} instead of a bound \emph{à la} Catoni like~\Cref{th:catoni_pac_bayes}. For this purpose, let $\emprisk_{p, \nicefrac{n}{\lambda}}$ be the empirical risk associated to the loss $\ell_{p, \nicefrac{n}{\lambda}}$.

\begin{lemma}[{Adaptation of \citep[Theorem 2.7]{alquier2006transductive}}]
    \label{lemma:alquier_truncation_method_modified_bounded_moment}
    Consider a loss $\ell(w,Z)$ with a $p$-th moment bounded by $m_p$ for all $w \in \cW$. Then, for every $\beta \in (0,1)$ and all $\lambda > 0$, with probability no smaller than $1 - \beta$
    \begin{align*}
    \label{eq:original_bound_to_alquier}
       \bE^S \big[ \poprisk(W) \big] 
       \leq \kappa_1 \cdot \bE^S \big[ \emprisk_{p, \nicefrac{n}{\lambda}}(W,S) \big] + \kappa_2 \cdot &\frac{\relent(\bP_W^S \Vert \bQ) + \log \frac{\xi(n)}{\beta}}{\lambda} \\
       &+ \kappa_3 \cdot \frac{n}{\lambda}+ \frac{m_p}{p} \Big(\frac{p-1}{p} \Big)^{p-1}  \Big( \frac{\lambda}{n} \Big)^{p - 1}
    \end{align*}
    holds \emph{simultaneously} for all posteriors $\bP_W^S$, all $c \in (0,1]$, and all $\gamma > 1$.
\end{lemma}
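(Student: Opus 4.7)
The plan is to follow exactly the recipe sketched immediately before the lemma statement, namely to reduce to the bounded-range PAC-Bayes setting via Alquier's truncated loss and then apply the fast-rate bound of \Cref{th:fast_rate_bound_strong} (in the form of \Cref{rem:fast_rate_strong_bounded_b}). The proof has three conceptual ingredients: a pointwise decomposition of the loss into its truncation plus a controlled polynomial remainder, a sharp boundedness estimate for the truncated loss, and a direct application of the fast-rate PAC-Bayes inequality.

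First, I would establish the pointwise inequality
\begin{equation*}
    \ell(w,z) \leq \ell_{p,\nicefrac{n}{\lambda}}(w,z) + \frac{1}{p}\Big(\frac{p-1}{p}\Big)^{p-1}\Big(\frac{\lambda}{n}\Big)^{p-1} |\ell(w,z)|^{p},
\end{equation*}
which is immediate from the definition $\ell_{p,\nicefrac{n}{\lambda}}(w,z) = [\,\ell(w,z) - c|\ell(w,z)|^{p}\,]_+$, since dropping the positive part can only decrease the right-hand side. Taking an expectation over $Z\sim \bP_Z$ and then over the posterior $\bP_W^S$, and finally using $\bE[|\ell(W,Z)|^{p}] \leq m_p$ (valid because the $p$-th moment is uniformly bounded in $w$), yields
\begin{equation*}
    \bE^S \big[\poprisk(W)\big] \leq \bE^S \big[\poprisk_{p,\nicefrac{n}{\lambda}}(W)\big] + \frac{m_p}{p}\Big(\frac{p-1}{p}\Big)^{p-1}\Big(\frac{\lambda}{n}\Big)^{p-1}.
\end{equation*}

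Second, I would verify that the truncated loss $\ell_{p,\nicefrac{n}{\lambda}}$ is bounded in $[0,\nicefrac{n}{\lambda}]$. The function $u \mapsto u - c u^{p}$ on $[0,\infty)$ is maximized at $u^\star = (pc)^{-1/(p-1)}$ with maximum value $\tfrac{p-1}{p}(pc)^{-1/(p-1)}$; plugging in $c = \tfrac{1}{p}(\tfrac{p-1}{p})^{p-1}(\tfrac{\lambda}{n})^{p-1}$ gives exactly $\nicefrac{n}{\lambda}$. Hence $\ell_{p,\nicefrac{n}{\lambda}}$ takes values in $[0,\nicefrac{n}{\lambda}]$, which is precisely the hypothesis needed to invoke \Cref{th:fast_rate_bound_strong} for losses with range $[0,b]$ with $b = \nicefrac{n}{\lambda}$ (see \Cref{rem:fast_rate_strong_bounded_b}).

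Finally, applying that fast-rate bound to the (bounded) loss $\ell_{p,\nicefrac{n}{\lambda}}$ with the same prior $\bQ$ gives, with probability at least $1-\beta$, simultaneously for all posteriors, all $c\in(0,1]$ and all $\gamma>1$,
\begin{equation*}
    \bE^S\big[\poprisk_{p,\nicefrac{n}{\lambda}}(W)\big] \leq \kappa_1 \cdot \bE^S\big[\emprisk_{p,\nicefrac{n}{\lambda}}(W,S)\big] + \kappa_2 \cdot \frac{n}{\lambda}\cdot \frac{\relent(\bP_W^S\Vert\bQ) + \log\frac{\xi(n)}{\beta}}{n} + \kappa_3 \cdot \frac{n}{\lambda},
\end{equation*}
where the factor $\nicefrac{n}{\lambda}$ appears in the complexity and bias terms because of the loss range, per \Cref{rem:fast_rate_strong_bounded_b}. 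Combining this with the decomposition from the first step and simplifying yields the claim. There is no real obstacle here: the only step that is not purely mechanical is the boundedness computation, and that is a single-variable optimization whose optimizer has been engineered into the definition of $\ell_{p,\nicefrac{n}{\lambda}}$, so the constants line up exactly to give the stated range $\nicefrac{n}{\lambda}$ and thus the stated bound.
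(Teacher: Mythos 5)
Your proposal is correct and follows essentially the same route as the paper's own argument: the pointwise decomposition $\ell \leq \ell_{p,\nicefrac{n}{\lambda}} + \frac{1}{p}(\frac{p-1}{p})^{p-1}(\frac{\lambda}{n})^{p-1}|\ell|^p$, the observation that $\ell_{p,\nicefrac{n}{\lambda}} \in [0,\nicefrac{n}{\lambda}]$, and an application of the fast-rate bound of \Cref{th:fast_rate_bound_strong} via \Cref{rem:fast_rate_strong_bounded_b} with $b = \nicefrac{n}{\lambda}$. The only addition is that you carry out the single-variable optimization showing the range is exactly $\nicefrac{n}{\lambda}$, which the paper asserts without computation; just beware the minor notational clash between your truncation constant $c$ and the parameter $c \in (0,1]$ of the fast-rate bound.
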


Comparing \Cref{lemma:alquier_truncation_method_modified_bounded_moment} to the truncation method with the straightforward \Cref{lemma:alquier_truncation_method_refined_bouned_moment}, we see that the result stemming from \citet{alquier2006transductive}'s modified construction improves the constant of the term associated to the tail from $\nicefrac{1}{p-1}$ to $(\nicefrac{p-1}{p})^{p-1} \cdot \nicefrac{1}{p}$. For $p=2$, the constant is $4$ times smaller changing from $1$ to $\nicefrac{1}{4}$; and for $p \to \infty$ the constant is $e$ times smaller, although both tend to 0. On the other hand, $\emprisk_{\leq \nicefrac{n}{\lambda}}$ has the potential to be smaller than $\emprisk_{p, \nicefrac{n}{\lambda}}$. The results derived in the rest of the paper use \Cref{lemma:alquier_truncation_method_refined_bouned_moment} as a starting point, but analogous results trivially follow from \Cref{lemma:alquier_truncation_method_modified_bounded_moment} with slightly different constants and changing $\emprisk_{\leq \nicefrac{n}{\lambda}}$ to $\emprisk_{p, \nicefrac{n}{\lambda}}$.

\section{A Parameter-Free Bound of a PAC-Bayesian Bound on Martingales}
\label[appendix]{app:closed_form_parameter_free_wang}

\citet{wang2015pac} and \citet{haddouche2023pacbayes} investigate the setting where the dataset $S$ is considered to be a sequence $S^* \coloneqq (Z_i)_{i \geq 1}$ such that $Z_i \in \cZ$, but where there is no restriction in the distribution of the samples $Z_i$, that is, every sample $Z_i$ can depend on all the previous ones. For every $n$, they let $S_n \coloneqq (Z_1, \ldots, Z_n)$ be the restriction of $S^*$ to its first $n$ points. Then, they consider the sequence of $\sigma$-fields $(\ccF_i)_{i \geq 1}$ to be a filtration adapted to $S^*$, for instance $\ccF_i = \sigma(Z_1, \ldots, Z_i)$. Finally, they consider a martingale difference sequence $(X_i(S_i, w))_{i \geq 1}$ indexed by a hypothesis $w \in \cW$ so that $\bE^{\ccF_{i-1}} \big[ X_i(S_i,w) \big] = 0$ for all $w \in \cW$. For instance, let $Y_0 = \sum_{i=1}^n \bE \bE\big[ \ell(w,Z_i) \big]$ and $Y_i(S_i, w) = \sum_{i=1}^n \bE^{\ccF_i} \big[ \ell(w, Z_i) \big]$ for all $i \geq 1$, then $X_i(S_i, w) = Y_i - Y_{i-1}$. Finally, for all $w \in \cW$, they define the martingale $M_n(w) \coloneqq \sum_{i=1}^n X_i(S_i,w)$ and follow \citet{bercu2008exponential} to also define
\begin{equation*}
    [M]_n(w) \coloneqq \sum_{i=1}^n X_i(S_i,w)^2 \textnormal{ and } \langle M \rangle_n(w) \coloneqq \bE^{\ccF_{i-1}} \bE\big[ X_i(S_i,w)^2 \big],
\end{equation*}
where $[M]_n(w)$ acts as an empirical variance term and $\langle M \rangle_n(w)$ as its theoretical counterpart~\citep{haddouche2023pacbayes}.
Then, their main anytime-valid bound for martingales is the following.

\begin{theorem}[{\citet[Theorem 2.4]{wang2015pac}}]
    \label{th:wang_main_th}
    Let $\bQ$ be any prior independent of $S_n$ and $(M_n(w))_{n \geq 1}$ be any sequence of martingales indexed by $w \in \cW$. Then, for all $\lambda > 0$, all $\beta \in (0,1)$ with probability no smaller than $1-\beta$
    \begin{equation}
        \label{eq:wang_main_th}
        \Big| \bE^{S_n} \big[M_n(W)\big] \Big| \leq \frac{\relent(\bP_W^{S_n} \Vert \bQ) + \log \frac{2(n+1)^2}{\beta}}{\lambda} + \frac{\lambda}{6} \cdot \bE^{S_n} \big[ [M]_n(W) + 2 \langle M \rangle_n(W) \big]
    \end{equation}
    holds \emph{simultaneously} for all posteriors $\bP_W^S$ and all $n \geq 1$.
\end{theorem}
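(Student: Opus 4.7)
The plan is to derive the bound through the standard PAC-Bayesian recipe layered on top of a Bercu--Touati-type exponential martingale inequality. The starting point will be the fact that, for each fixed hypothesis $w\in\cW$ and each $\lambda>0$, the process
$V_n(w)\coloneqq \exp\bigl(\lambda M_n(w) - \tfrac{\lambda^2}{6}([M]_n(w) + 2\langle M\rangle_n(w))\bigr)$
satisfies $\bE[V_n(w)]\leq 1$ for every $n\geq 1$. This is a martingale analogue of the usual sub-Gaussian MGF bound, where $[M]_n$ plays the role of an empirical quadratic variation and $\langle M\rangle_n$ of its predictable counterpart; the specific constants $\tfrac{1}{6}$ and $2$ are exactly the ones that come out of \citet{bercu2008exponential}'s truncated-Taylor-expansion argument.

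With the exponential inequality in hand, the next step is to integrate against the prior: Fubini yields $\bE[\bE_{w\sim\bQ}[V_n(w)]]\leq 1$ for each $n\geq 1$. Markov's inequality then gives, for every fixed $n$ and every $\beta_n\in(0,1)$, that $\bE_{w\sim\bQ}[V_n(w)]\leq 1/\beta_n$ with probability at least $1-\beta_n$. Applying the Gibbs variational principle (\Cref{lemma:dv_and_gvp}) to $g(w)=\lambda M_n(w) - (\lambda^2/6)([M]_n(w)+2\langle M\rangle_n(w))$ transfers this bound from the prior $\bQ$ to any posterior $\bP_W^{S_n}$, yielding that, on this event,
$\lambda\,\bE^{S_n}[M_n(W)] - \tfrac{\lambda^2}{6}\bE^{S_n}[[M]_n(W)+2\langle M\rangle_n(W)] \leq \relent(\bP_W^{S_n}\Vert\bQ) + \log(1/\beta_n)$
holds simultaneously for all $\bP_W^{S_n}$. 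Dividing by $\lambda>0$ produces the one-sided PAC-Bayes bound for a single $n$.

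To promote this to an anytime-valid, two-sided bound, first take $\beta_n=\beta/(2(n+1)^2)$ and apply a union bound over $n\geq 1$: the total failure probability is at most $(\beta/2)\sum_{n\geq 1}1/(n+1)^2=(\beta/2)(\pi^2/6-1)<\beta/2$. Then repeat the whole argument applied to the martingale $-M_n(w)$, which satisfies the same exponential inequality by the symmetry of the $V_n$ construction in $\lambda$, and combine the two one-sided bounds with a second union bound costing $\beta/2$. The resulting $\log(2(n+1)^2/\beta)$ in the statement precisely accounts for the two union bounds (the factor of $2$ for the sign and $(n+1)^2$ for the time index).

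The main obstacle is establishing the exponential inequality $\bE[V_n(w)]\leq 1$ with the prescribed constants. It reduces inductively, via the tower property conditioned on $\ccF_{i-1}$, to checking that $\bE^{\ccF_{i-1}}\bigl[\exp\bigl(\lambda X_i(S_i,w) - \tfrac{\lambda^2}{6}(X_i(S_i,w)^2 + 2\bE^{\ccF_{i-1}}[X_i(S_i,w)^2])\bigr)\bigr]\leq 1$, which is the step where the constants $\tfrac{1}{6}$ and $2$ are forced by the inequality $e^x \leq 1 + x + \tfrac{x^2}{2} + \tfrac{x^2}{6}\cdot\mathrm{(correction)}$ combined with $X_i$ being a martingale difference ($\bE^{\ccF_{i-1}}[X_i]=0$). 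Once this pointwise exponential inequality is granted, every remaining step (Fubini, Markov, Donsker--Varadhan, union bound) is routine PAC-Bayesian machinery and introduces no additional technical subtlety, since the parameter $w$ enters only as an index and the martingale property is with respect to the filtration $(\ccF_n)$ that is shared across hypotheses.
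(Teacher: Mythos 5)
The paper never proves this statement—it is imported verbatim from \citet[Theorem 2.4]{wang2015pac} (with the quantities $[M]_n$ and $\langle M\rangle_n$ taken from \citet{bercu2008exponential}), and the surrounding appendix only optimizes its parameter $\lambda$—so there is no in-paper proof to compare against. Your reconstruction is correct and follows the route of the cited source: the Bercu--Touati exponential supermartingale bound $\bE[V_n(w)]\leq 1$, whose constants $\tfrac{1}{6}$ and $2$ come from applying the elementary inequality $e^{z-z^2/6}\leq 1+z+\tfrac{z^2}{3}$ conditionally to $z=\lambda X_i$ (rather than the Taylor-type bound you sketch), followed by Fubini, Markov, the change of measure of \Cref{lemma:dv_and_gvp}, and the weighted union bound over $n\geq 1$ and the two signs, which yields exactly the $\log\frac{2(n+1)^2}{\beta}$ term.
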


\begin{theorem}[{\citet[Theorem 2.1]{haddouche2023pacbayes}}]
    \label{th:haddouche_main_th}
    Let $\bQ$ be any prior independent of $S_n$ and $(M_n(w))_{n \geq 1}$ be any sequence of martingales indexed by $w \in \cW$. Then, for all $\lambda > 0$, all $\beta \in (0,1)$, with probability no smaller than $1-\beta$
    \begin{equation*}
        \label{eq:haddouche_main_th}
        \Big| \bE^{S_n} \big[M_n(W)\big] \Big| \leq \frac{\relent(\bP_W^{S_n} \Vert \bQ) + \log \frac{2}{\beta}}{\lambda} + \frac{\lambda}{2} \cdot \bE^{S_n} \big[ [M]_n(W) + \langle M \rangle_n(W) \big]
    \end{equation*}
    holds \emph{simultaneously} for all posteriors $\bP_W^S$ and all  $n \geq 1$.
\end{theorem}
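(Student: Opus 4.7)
The plan is to derive this result by combining the Bercu--Touati exponential supermartingale inequality with a change of measure via the Gibbs variational principle, and, crucially, to invoke Ville's maximal inequality rather than Markov plus a union bound over $n$ in order to harvest anytime validity cheaply. This last choice is precisely what collapses the $\log\!\tfrac{2(n+1)^2}{\beta}$ penalty of Wang's \Cref{th:wang_main_th} down to $\log\!\tfrac{2}{\beta}$ here. Concretely, for each fixed $w \in \cW$ and each $\lambda \in \bR$, the Bercu--Touati bound states that
\begin{equation*}
    V_n^\lambda(w) \coloneqq \exp\!\Big(\lambda M_n(w) - \tfrac{\lambda^2}{2}\big([M]_n(w) + \langle M \rangle_n(w)\big)\Big)
\end{equation*}
is a non-negative $(\ccF_n)$-supermartingale with $\bE[V_n^\lambda(w)] \leq 1$. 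Averaging against the data-independent prior $\bQ$ and setting $\bar V_n^\lambda \coloneqq \bE_{w \sim \bQ}[V_n^\lambda(w)]$, Fubini (valid because $\bQ$ is independent of the filtration and the underlying spaces are standard Borel) together with the linearity of conditional expectation shows that $(\bar V_n^\lambda)_{n \geq 1}$ remains a non-negative supermartingale with $\bE[\bar V_n^\lambda] \leq 1$.

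Next, I would apply Ville's maximal inequality to both $\bar V_n^{+\lambda}$ and $\bar V_n^{-\lambda}$, each at level $\beta/2$, and take the union bound over the two signs. This produces an event of probability at least $1-\beta$ on which $\log \bar V_n^{\pm\lambda} \leq \log(2/\beta)$ \emph{simultaneously} for every $n \geq 1$. On this same event, the Gibbs variational principle from~\Cref{lemma:dv_and_gvp}, applied to $g(w) = \pm\lambda M_n(w) - \tfrac{\lambda^2}{2}([M]_n(w) + \langle M \rangle_n(w))$ under the prior $\bQ$, furnishes for every posterior $\bP_W^{S_n}$ the lower bound
\begin{equation*}
    \log \bar V_n^{\pm\lambda} \geq \pm\lambda\,\bE^{S_n}\!\big[M_n(W)\big] - \tfrac{\lambda^2}{2}\,\bE^{S_n}\!\big[[M]_n(W) + \langle M \rangle_n(W)\big] - \relent(\bP_W^{S_n} \Vert \bQ).
\end{equation*}
Rearranging, dividing by $\lambda > 0$, and taking the maximum over the two sign choices to assemble the absolute value yields the announced inequality \emph{simultaneously} for every $\bP_W^{S_n}$ and every $n \geq 1$.

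The main technical hurdle is the Bercu--Touati exponential supermartingale itself, whose proof proceeds by a careful Taylor expansion of $\log(1 + \lambda X_i + \tfrac{\lambda^2}{2} X_i^2)$ on the martingale differences $X_i$ combined with the conditional property $\bE^{\ccF_{i-1}}[X_i(S_i,w)] = 0$; once this building block is accepted, everything else is a routine reassembly of tools already developed in the monograph. A minor subtlety worth flagging is the Gibbs step's absolute-continuity requirement $\bP_W^{S_n} \ll \bQ$, which is handled by the usual convention that the relative entropy is set to $+\infty$ otherwise, so that the bound holds trivially and no additional assumption on the posterior is needed.
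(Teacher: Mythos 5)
Your argument is correct, and it is essentially the proof of the cited source: the monograph itself does not reprove this statement (it quotes \citet[Theorem 2.1]{haddouche2023pacbayes} and only optimizes the parameter in the companion bound of \citet{wang2015pac}), and Haddouche and Guedj's own derivation follows exactly your route — the Bercu--Touati exponential supermartingale, averaging over the data-free prior, Ville's maximal inequality at level $\nicefrac{\beta}{2}$ for each sign, and the Donsker--Varadhan/Gibbs change of measure, with the $\bP_W^{S_n} \not\ll \bQ$ case handled by the $\relent \to \infty$ convention. No gaps to flag.
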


In what follows, we will focus on the result from \citet{wang2015pac} as it has the smaller constants. Taking a closer look at \Cref{th:wang_main_th}, we realize it has a similar shape to \Cref{lemma:extension_banerjee} for the particular case when the loss is sub-Gaussian, where the role of the sub-Gaussian parameter is taken by the sum of the ``variance'' terms $[M]_n(W) + 2 \langle M \rangle_n(W)$. Therefore, it appears we may directly employ the technique to derive the Chernoff analogue from the proof of \Cref{th:pac_bayes_chernoff_analogue}. However, one needs to take into account the fact that the ``optimal'' parameter $\lambda$ now depends on this ``variance'' terms, which are also dependent on the training set $S_n$ and on the number of samples $n$.

To optimize the bound from \Cref{th:wang_main_th} we will then proceed in two steps. The first step is to optimize the parameter $\lambda$ for a \emph{fixed} number of samples $n$ in a similar fashion to \Cref{th:pac_bayes_chernoff_analogue}, which results in \Cref{th:paramter_free_anytime_valid_martingales}. Then, the second step is to extend this result to an anytime-valid bound using \Cref{th:standard_to_anytime_valid} at a cost in the complexity term of $\cO(\nicefrac{\log n}{n})$.

For the first step,  define the event $\cB_{n,\lambda}$ as the complement of the event in~\eqref{eq:wang_main_th} for a \emph{fixed} number of samples $n$. Then, we can proceed similarly to the proof of \Cref{th:pac_bayes_chernoff_analogue} noticing that, for each number of samples $n$, the complement of the event 
\begin{equation*}
    \cE_n \coloneqq \Big \{ \bE^{S_n} \big[ [M]_n(W) + 2 \langle M \rangle_n(W) \big] \relent(\bP_W^S \Vert \bQ) \leq n^2 \Big\}
\end{equation*}
is uninteresting as the bound is non-vanishing given $\cE_n^c$. This produces the following PAC-Bayes bound for a fixed number of samples $n$.%

\begin{theorem}[{Parameter-free bound on martingales}]
    \label{th:paramter_free_anytime_valid_martingales}
    Let $\bQ$ be any prior independent of $S_n$ and $(M_n(w))_{n \geq 1}$ be any sequence of martingales indexed by $w \in \cW$. Further, define $\xi'(n) \coloneqq 2en(n+1)^2 \log(en) \leq 2e(n+1)^3$. Then, for every $\beta \in (0,1)$, with probability no smaller than $1-\beta$
    \begin{align*}
        \Big| \bE^{S_n} \big[&M_n(W)\big] \Big| \leq \\
        &\bI_{\cE_n} \cdot \frac{2}{\sqrt{6}} \cdot \sqrt{ \bE^{S_n} \big[ [M]_n(W) + \langle M \rangle_n(W) + 1 \big]\Big( \relent(\bP_W^{S_n} \Vert \bQ) + \log \frac{\xi'(n)}{\beta} \Big)} \\
        &+ \bI_{\cE_n^c} \cdot \esssup \Big| \bE^{S_n} \big[M_n(W)\big] \Big|
    \end{align*}
    holds \emph{simultaneously} for all posteriors $\bP_W^S$, where $\cE_n$ is the event $$\cE_n \coloneqq \Big\{ \bE^{S_n} \big[ [M]_n(W) + 2 \langle M \rangle_n(W) \big] \relent(\bP_W^S \Vert \bW_W) \leq n^2 \Big\}.$$
\end{theorem}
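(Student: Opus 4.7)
The plan is to apply the events' space quantization technique developed in Subsection~\ref{subsec:losses_with_bounded_CGF} (see the proofs of Theorem~\ref{th:pac_bayes_chernoff_analogue} and Theorem~\ref{th:pac_bayes_chernoff_analogue_loglog}) to the PAC-Bayesian martingale bound of Theorem~\ref{th:wang_main_th}, adapting it to the fact that the unconstrained optimal parameter $\lambda$ now depends on \emph{two} random quantities. Writing $K_n \coloneqq \relent(\bP_W^{S_n} \Vert \bQ)$ and $V_n \coloneqq \bE^{S_n}[[M]_n(W) + 2\langle M\rangle_n(W)]$, Wang's bound reads
\begin{equation*}
|\bE^{S_n}[M_n(W)]| \leq \tfrac{K_n + c_n}{\lambda} + \tfrac{\lambda V_n}{6}
\end{equation*}
for every fixed $\lambda > 0$ chosen before the draw of the data, where $c_n \coloneqq \log(2(n+1)^2/\beta)$. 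The unconstrained minimizer $\lambda^\star = \sqrt{6(K_n + c_n)/V_n}$ is unusable since it depends on the random objects $K_n$ and $V_n$ (both functions of $S_n$ and of the posterior $\bP_W^{S_n}$), so one must approximate $\lambda^\star$ by a data-independent parameter on each piece of a partition of the event space.

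Working on the interesting event $\cE_n = \{V_n K_n \leq n^2\}$ (outside of which the bound is trivially controlled by the essential supremum, exactly as in Theorem~\ref{th:pac_bayes_chernoff_analogue}), I would discretize $\cE_n$ with a hybrid grid. For $K_n$ I would use a linear grid of unit spacing over $[0,n]$, yielding at most $n$ sub-events and contributing, by rounding, the additive $+1$ in the relative entropy that was inherited in the proof of Theorem~\ref{th:pac_bayes_chernoff_analogue}. For $V_n$ I would use a geometric grid of ratio $e$ (analogously to Theorem~\ref{th:pac_bayes_chernoff_analogue_loglog}), which yields $\lceil 2\log(en)\rceil$ sub-events, a multiplicative factor of $\sqrt{e}$ that can be absorbed into the constants, and the additive $+1$ inside the square root coming from anchoring the grid at $V_n = 1$. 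On each sub-event I would pick the data-independent $\lambda_{j,k}$ that is optimal at the upper endpoints of that sub-event, apply Wang's bound at that fixed $\lambda_{j,k}$ with confidence $\beta / N$ where $N \in \mathcal{O}(n\log(en))$ is the total number of sub-events, and then conclude by the union bound.

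The main obstacle will be the careful bookkeeping of constants so that the rounding of the geometric grid on $V_n$ and the truncation of $K_n$ by its linear grid combine to yield exactly the factor $\xi'(n) = 2en(n+1)^2\log(en)$ in the logarithmic term together with the clean leading constant $2/\sqrt{6}$, and so that the replacement of $\bE^{S_n}[[M]_n + 2\langle M\rangle_n]$ by the expression $\bE^{S_n}[[M]_n + \langle M\rangle_n + 1]$ under the square root emerges naturally from the grid endpoints rather than requiring a loose relaxation. The $+1$ here plays a role fully analogous to the one in the semi-empirical quantity $\Sigma_n^2$ of Theorem~\ref{th:parameter_free_anytime_valid_bounded_2nd_moment}, reflecting that the geometric grid must be anchored at a strictly positive lower endpoint; matching this with the coefficient change from $2\langle M\rangle_n$ to $\langle M\rangle_n$ is where the bulk of the calculation will sit.
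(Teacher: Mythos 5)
Your overall strategy is indeed the paper's: quantize the event space in the two data-dependent quantities $K_n=\relent(\bP_W^{S_n}\Vert\bQ)$ and $V_n=\bE^{S_n}\big[[M]_n(W)+2\langle M\rangle_n(W)\big]$ that enter the optimal $\lambda$ of \Cref{th:wang_main_th}, choose the per-cell optimal data-independent $\lambda$, take a union bound, and dispose of $\cE_n^c$ with the essential supremum. Two execution choices, however, would keep you from the stated bound. First, your grid does not cover $\cE_n$: the event only constrains the product $V_nK_n\le n^2$, and since $V_n$ can be smaller than one, $K_n$ can be as large as $n^2$ on $\cE_n$; the paper therefore indexes \emph{both} quantities by unit ceilings $k,l$ ranging up to $n^2$ subject to $kl\le n^2$, rather than confining $K_n$ to $[0,n]$. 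Second, the paper also uses a unit (ceiling) grid on $V_n$, so that on each cell $l\le V_n+1$ exactly parallels $k\le K_n+1$; substituting $\lambda_{k,l}=\sqrt{\tfrac{6}{l}\big(k+\log\tfrac{2(n+1)^2}{\beta}\big)}$ then yields directly the constant $\tfrac{2}{\sqrt{6}}$, the additive $+1$ under the square root, and, after paying the grid cardinality ($n\log(en)$ in the paper's count) in the union bound together with the $e$ from $k\le K_n+1$, the factor $\xi'(n)=2en(n+1)^2\log(en)$. Your geometric grid of ratio $e$ on $V_n$ instead produces a multiplicative $\sqrt{e}$ on the square-root term; that factor cannot be ``absorbed into the constants'' if the target is the stated $\tfrac{2}{\sqrt{6}}$, and the $+1$ does not come from anchoring the grid at $1$ but from the ceiling rounding.

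On the point you flag as the bulk of the calculation, the change of coefficient from $2\langle M\rangle_n$ to $\langle M\rangle_n$: no grid choice produces it. The paper's own derivation carries $\bE^{S_n}\big[[M]_n(W)+2\langle M\rangle_n(W)+1\big]$ through the per-cell optimization, and the factor of two is dropped only in the final display and the theorem statement; so you should prove the bound with $2\langle M\rangle_n$ and not plan the argument around recovering the smaller coefficient from grid endpoints. Finally, note that since only the unconditional guarantee $\bP[\cB_{n,\lambda}]<\beta$ is available, the paper bounds each term $\bP[\cB'_n\mid\cE_{n,k,l}]\,\bP[\cE_{n,k,l}]$ by $\beta$ and pays the number of cells, which is the same bookkeeping as your ``confidence $\beta/N$ per cell.''
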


\begin{proof}
    Consider a fixed number of samples $n$. Let $\cB_{n,\lambda}$ be the complement of the event in~\eqref{eq:wang_main_th} such that $\bP[\cB_{n,\lambda}] < \beta$ and consider the sub-events 
    \begin{align*}
        \cE_{n,1,l} &\coloneqq \left \{ \relent(\bP_W^{S_n} \Vert \bQ) \leq 1 \textnormal{ and } \left \lceil \bE^{S_n} \big[ [M]_n(W) + 2 \langle M \rangle_n(W) \big] \right \rceil = l \right \}, \\
        \cE_{n,k, 1} &\coloneqq \left \{ \left \lceil \relent(\bP_W^{S_n} \Vert \bQ) \right \rceil = k \textnormal{ and }  \bE^{S_n} \big[ [M]_n(W) + 2 \langle M \rangle_n(W) \big] \leq 1 \right \}, \textnormal{ and} \\
        \cE_{n,k,l} &\coloneqq \left \{ \left \lceil \relent(\bP_W^{S_n} \Vert \bQ) \right \rceil = k  \textnormal{ and } \left \lceil \bE^{S_n} \big[ [M]_n(W) + 2 \langle M \rangle_n(W) \big] \right \rceil = l \right \},
    \end{align*}
    for all $k, l = 2, \cdots, n^2$ such that $k l \leq n^2$, which form a covering of $\cE_n$. Furthermore, define $\cK \coloneqq \{ (k,l) : 1 \leq k l \leq n^2 \textnormal{ and } \bP[\cE_{n,k,l}] > 0 \}$. For all pairs of indices $(k,l) \in \cK$, given the event $\cE_{n,k,l}$, with probability no more than $\bP[\cB_{n,\lambda} | \cE_{n,k,l}]$, there exists a posterior $\bP_W^{S_n}$ such that
    \begin{equation}
        \label{eq:pac_bayes_martingale_with_lambda}
        \Big| \bE^{S_n} \big[M_n(W)\big] \Big| > \frac{k + \log \frac{2(n+1)^2}{\beta}}{\lambda} + \frac{\lambda}{6} \cdot l.
    \end{equation}
    for all $\lambda \in (0, b)$. The parameter that optimizes the right-hand side of~\eqref{eq:pac_bayes_martingale_with_lambda} is
    \begin{equation*}
        \lambda = \lambda_{k,l} = \sqrt{\frac{6}{l} \Big(k + \log \frac{2(n+1)^2}{\beta} \Big)}.
    \end{equation*}
    Substituting the optimal $\lambda_{k,l}$ and using that $k \leq \relent(\bP_W^{S_n} \Vert \bQ) + 1$ and $l \leq \bE^{S_n} \big[ [M]_n(W) + 2 \langle M \rangle_n(W) + 1\big]$ yields that, given the event $\cE_{n,k,l}$, with probability smaller or equal than $\bP[\cB_{n,\lambda_{k,l}} | \cE_{n,k,l}]$, there exists a posterior $\bP_W^{S_n}$ such that
    \begin{align*}
        \Big| \bE^{S_n} \big[&M_n(W)\big] \Big| > \\
        &\frac{2}{\sqrt{6}} \cdot \sqrt{ \bE^{S_n} \big[ [M]_n(W) + \langle M \rangle_n(W) + 1 \big]\Big( \relent(\bP_W^{S_n} \Vert \bQ) + \log \frac{2e(n+1)^2}{\beta} \Big)}.
    \end{align*}

    Now, define $\cB'_n$ as the event stating that there exists a posterior $\bP_W^{S_n}$ such that
    \begin{align*}
       \Big| \bE^{S_n} \big[&M_n(W)\big] \Big| > \\
       &\bI_\cE \cdot \frac{2}{\sqrt{6}} \cdot \sqrt{ \bE^{S_n} \big[ [M]_n(W) + 2 \langle M \rangle_n(W) + 1 \big]\Big( \relent(\bP_W^{S_n} \Vert \bQ) + \log \frac{2e}{\beta} \Big)} \\
        &+ \bI_{\cE} \cdot \esssup | \bE^{S_n} M_n(W) |,
    \end{align*}
    where $\bP[\cB'_n | \cE_{n,k,l}] \bP[\cE_{n,k,l}] \leq \bP[\cB_{n,\lambda_{k,l}} | \cE_{n,k,l}] \bP[\cE_{n,k,l}] \leq \bP[\cB_{n,\lambda_{k,l}}] < \beta$ for all $(k,l) \in \cK$, and where $\bP[\cB'_n \cap \cE_n^c] = 0$ by the definition of the essential supremum. Therefore, by the law of total probability, the probability of $\cB'_n$ is bounded as
    \begin{equation*}
        \bP[\cB'_n] = \sum_{(k,l) \in \cK} \bP[\cB'_n | \cE_{n,k,l}] \bP[\cE_{n,k,l}] + \bP[\cB'_n \cap \cE_n^c] < n(1 + \log n) \beta = n \log (en) \beta.
    \end{equation*}
    Finally, let $\beta_n = n \log(en) \beta$ so that, with probability no larger than $\beta_n$, there exists a posterior $\bP_W^{S_n}$ such that
    \begin{align*}
        \Big| &\bE^{S_n} \big[M_n(W)\big] \Big| > \\
        &\bI_\cE \cdot \frac{2}{\sqrt{6}} \cdot \sqrt{ \bE^{S_n} \big[ [M]_n(W) + 2 \langle M \rangle_n(W) + 1 \big]\Big( \relent(\bP_W^{S_n} \Vert \bQ) + \log \frac{\xi'(n)}{\beta_n} \Big)}  \\
        &+ \bI_{\cE} \cdot \esssup | \bE^{S_n} M_n(W) |.
    \end{align*}
    Finally, the substitution $\beta_n \leftarrow \beta$ completes the proof.
\end{proof}

\looseness=-1 This technique can be extended to the corollary bound of \citet{haddouche2023pacbayes} for batch learning with i.i.d. data yielding~\Cref{th:parameter_free_anytime_valid_bounded_2nd_moment}, where we write $S_n = S$ to simplify the reading in the main text. Note again that we are using the particularization of \citet{haddouche2023pacbayes} with the constants from \citet{wang2015pac}.

Finally, for the second step, \Cref{th:paramter_free_anytime_valid_martingales,th:parameter_free_anytime_valid_bounded_2nd_moment} can be converted back to anytime-valid bounds using \Cref{th:standard_to_anytime_valid}. The resulting bound is exactly the same substituting $\log \xi'(n)$ for $\log \xi''(n)$, where $\xi''(n) \coloneqq \nicefrac{e \pi^2 (n+1)^2 n^3 \log (en)}{3}$.

In case that one desires to have a bound without a $\log n$ term, one may consider employing the technique outlined for~\Cref{th:pac_bayes_chernoff_analogue_no_cutoff} with~\citet{haddouche2023pacbayes}'s~\Cref{th:haddouche_main_th} instead of~\citet{wang2015pac}'s~\Cref{th:wang_main_th}.

\end{subappendices}

\chapter{Privacy and Generalization}
\label{ch:privacy_and_generalization}
\renewcommand{\thefigure}{\thechapter.\arabic{figure}}
In this chapter, we discuss further the relationships between privacy and generalization that we already pointed out in~\Cref{subsec:privacy_as_stability} and throughout the manuscript. To be precise, in~\Cref{ch:expected_generalization_error,ch:pac_bayesian_generalization}, we saw that the generalization error is bounded from above by a function that depends on the dependence of the algorithm's output hypothesis and the training data. Moreover, a learning algorithm is private if its output does not reveal much about the instances of the training set. Therefore, if an algorithm is private, the dependence of the algorithm's output and the training data is small and it generalizes.

In~\Cref{sec:review_privacy_frameworks}, we start reviewing two different frameworks for privacy: \emph{maximal leakage}~\citep{issa2020operational} and \emph{differential privacy}~\citep{dwork2006calibrating,dwork2014algorithmic}. We choose to review these frameworks since they have attractive operational meanings. Maximal leakage refers to the highest potential increase in an attacker's probability to guess a sensitive variable from your training set after observing the output hypothesis, as opposed to before observation. Differential privacy, on the other hand, captures how close is the probability of observing the output hypothesis if instead of using the real training set, we used any other set differing in one instance. 

Then, in~\Cref{sec:maximal_leakage_privacy}, we show how algorithms with a bounded maximal leakage generalize. To do so, we relate how a bounded maximal leakage implies that the quantities employed in~\Cref{ch:expected_generalization_error,ch:pac_bayesian_generalization} are also bounded. The results from this section are not published and we consider them folklore within the community. We see that these results coincide with other results in the community~\citep{esposito2021generalization,hellstrom2020generalization}. 

\looseness=-1 After that, in~\Cref{sec:differential_privacy_generalization}, we study how differentially private algorithms generalize. First, as before, we consider a simple relationship between the parameters of differentially private algorithms and the quantities employed in~\Cref{ch:expected_generalization_error,ch:pac_bayesian_generalization}. With this simple analysis, we obtain bounds that depend on the privacy parameter, but that do not decrease with the number of samples. That is, the bounds only guarantee that the algorithm generalizes if the privacy parameter decreases with the number of samples on the training set. These simple bounds are of the same order and similar to the bounds from the literature that we survey in~\Cref{subsec:other_results_dp_gen}. In \Cref{subsec:pure_dp_gdp_discrete}, we study the particular case of differentially private algorithms that are \emph{permutation invariant} and operate on \emph{discrete data}. In this case, we first show that any permutation invariant algorithm generalizes, and then we show that if the algorithm is also differentially private, then the generalization guarantees are tighter. Despite their advantage in the asymptotic regime, these bounds are only preferred to the standard ones when the ratio between the data cardinality and the number of samples is small, that is, when $|\cZ| \ll n$. This part of the section is mostly based on our paper~\citep{rodriguez2021upper}, although it is updated based on the (chronologically posterior) developments from~\Cref{ch:expected_generalization_error,ch:pac_bayesian_generalization}.

\looseness=-1 We wrote this chapter, like the previous ones, with a didactic intent. We wanted to strengthen the intuition we obtained throughout the manuscript that private algorithms generalize as well as survey some of the current understanding on the topic.

\section{A Short Review on Privacy Frameworks}
\label{sec:review_privacy_frameworks}

Recall from~\Cref{subsec:renyi_divergence} that a \emph{privacy mechanism} $\bM$ is a channel, described by the Markov kernel $\bP_Y^X$, that processes an input $X$ and generates a sanitized version $Y$ that is not informative about the input $X$. In the context of machine learning, we say that an algorithm $\bA$, described by the Markov kernel $\bP_W^S$, is a \emph{private algorithm} if the hypothesis is not informative about the training instances. That is, an algorithm $\bA$ is a private algorithm if it is a privacy mechanism.

Constructing a private algorithm is simple. The algorithm $\bA(S) = 0$ a.s. is maximally private, as the returned hypothesis $0$ contains no information about the training set. In general, any algorithm independent of the training data such that $\bP_W^S = \bQ$ has this property. However, these algorithms are not useful. The goal of a private algorithm is to balance the so-called \emph{privacy-utility trade-off}, where the returned hypothesis should be informative about the underlying structure of the training data related to the downstream task (high utility), while not being informative about the specific instances in the training set (high privacy).

While the definition of utility is task-specific, the definition of privacy can be agnostic to the downstream task. Therefore, there have been several frameworks that attempt to quantify the amount of privacy guaranteed by an algorithm~\citep{issa2020operational,saeidian2023apointwise,saeidian2023bpointwise,dwork2006calibrating,dwork2014algorithmic,dwork2016concentrated,mironov2017renyi}. In this manuscript, we will only discuss \emph{maximal leakage}~\citep{issa2020operational,saeidian2023apointwise,saeidian2023bpointwise} and \emph{differential privacy}~\citep{dwork2006calibrating,dwork2014algorithmic} due to their operational meaning.

\subsection{Maximal Leakage}
\label{subsec:maximal_leakage}

Consider that there is a sensitive, discrete variable $U$ that depends on the training data. If an algorithm is private, one would expect that the probability of correctly guessing the sensitive variable $U$ when having access to the returned hypothesis $W$ is close to the probability of correctly guessing it without access. Moreover, this should be true for every possible discrete variable $U$ that depends on the training data, as we may consider every such description as sensitive.

\citet{issa2020operational} formalized this idea and defined the \emph{maximal leakage} as the supremum of the logarithm of the ratio of these two probabilities with respect to every such variable. Namely,
\begin{equation*}
    \cL(S \to W) \coloneqq \sup_{\bP_{U}^{S} }  \log \frac{\sup_{\bP_{\hat{U}}^W} \bP \big[ U = \hat{U} \big]}{\max_{u \in \cU} \bP \big[U = u \big]},
\end{equation*}
where the supremum over the Markov kernels $\bP_{U}^{S}$ characterizes the worst-case sensitive variable $U$, and the supremum over the Markov kernels $\bP_{\hat{U}}^W$ characterizes the best attacker trying to guess the sensitive variable.

Instead of considering a sensitive, discrete variable $U$, one could think of an arbitrary gain function $g : \cX \times \cU \to \bR_+$ representing the adversary's objective. For example, as described by~\citet[Examples 1-4]{saeidian2023apointwise}, the gain function can describe multiple common situations such as:
\begin{itemize}
    \item \textit{Membership inference}. Imagine that the adversary wants to infer if an individual $i$ belongs to the training set $s$. Then, we may partition the training set according to the participation of the said individual. That is,  $s_{1} = \{ z \in s : z \textnormal{ contains the data of } i \}$ and $s_{0} = s \setminus s_{1}$. Then, the adversary does a binary guess $\cU = \{ 0, 1 \}$ and their gain function can be $g(s,u) = \bI_{\{ s = s_u \}}(s,u)$.
    \item \textit{Guessing a function of the training data}. Like in the maximal leakage described above, we may think of a sensitive variable $U$ that is related by a (possible randomized) function $f$ to the training data, that is $U = f(S)$. In this case, consider that $U$ lies in some metric space $(\cU,\rho)$. Then, the gain function can simply be $g(s,u) = \rho \big( f(s), u \big)$.
\end{itemize}
As for maximal leakage, we may consider that an algorithm is private if the expected gain when having access to the returned hypothesis is close to the expected gain without access. Also, this should be true for every possible gain function, as we want to protect the data from every possible attack.

\citet{m2012measuring} formalized this idea and defined the \emph{maximal $g$-leakage} as the supremum of the logarithm of the ratio of these two expected gains with respect to every such gain function. Namely,
\begin{equation*}
    \sup_g \cL_g(S \to W) \coloneqq \sup_g \log \frac{\bE \big[ \max_{u \in \cU} \bE^{W} \big[ g(S,u) \big]\big]}{\max_{u \in \cU} \bE[g(S,u)]}.
\end{equation*}

Interestingly, these two privacy measures are equivalent~\citep{issa2020operational,alvim2014additive} and are equal to the essential supremum of the Rényi divergence of order $\infty$ of the algorithm's channel with respect to the marginal distribution of the hypothesis, that is
\begin{equation*}
    \cL(S \to W) = \esssup_{\bP_S} \renyidiv{\infty} (\bP_W^S \Vert \bP_W).
\end{equation*}
If the data is discrete, the maximal leakage also coincides with other operational privacy notions as the \emph{min-capacity} of the \emph{min-entropy leakage}~\citep{braun2009quantitative, smith2009foundations} and the \emph{multiplicative Bayesian capacity} of the \emph{multiplicative Bayesian leakage}~\citep{alvim2020science}.

\subsection{Differential Privacy}
\label{subsec:dp}

Another operationalization of privacy is given by~\emph{differential privacy} (DP)~\citep{dwork2006calibrating,dwork2014algorithmic}. An algorithm is $(\varepsilon, \delta)$-DP if for every pair of neighbouring training sets $s$ and $s'$ and every event $\cA \subseteq \cW$
\begin{equation*}
    \bP_W^{S=s}[\cA] \leq e^\varepsilon \bP_W^{S=s'}[\cA] + \delta,
\end{equation*}
where we recall that two training sets are neighbours if they differ in at most one element. If $\delta=0$, then the algorithm is just $\varepsilon$-DP.
The latter setting, when $\delta = 0$, is often referred to as \emph{pure DP}, while the former, when $\delta > 0$, is referred to as \emph{approximate DP}. The reason is that, although $\delta$ cannot be exactly mapped to the probability of failure~\citep{meiser2018approximate}, it is often understood as such. That is, as the probability that the $\varepsilon$-DP guarantee does not hold.

As discussed in~\Cref{subsec:privacy_as_stability}, this notion of privacy states that an algorithm is private if it is stable. That is, an algorithm is DP if, when presented with two neighbouring training sets, the algorithm produces two hypotheses with ``similar'' distributions. Therefore, if we consider that each instance of the training set belongs to one individual, DP guarantees that the impact of the individual on the output hypothesis is small. In fact, consider the real training set $s$ and any other neighbour training set $s'$. Then, the Type I and Type II errors of every hypothesis test to distinguish if the outcome of the algorithm comes from $s$ or $s'$ are bounded from below by quantities depending on the privacy parameters $\varepsilon$ and $\delta$~\citep{balle2020hypothesis,wasserman2010statistical,kairouz2015composition}.

The hypothesis testing interpretation of DP led~\citet{dong2021gaussian} to define \emph{$f$-differential privacy} ($f$-DP). In this framework, an algorithm is said to be $f$-DP if the trade-off curve described by the Type-I and Type-II errors of the aforementioned hypothesis test lies above the non-increasing, convex function $f$. The function $f$ is thus the parameter that determines the privacy level of the algorithm.
A particular, and important, parameterization of these algorithms is given by \emph{$\mu$-Gaussian differentially private} ($\mu$-GDP) algorithms, where $f$ is defined as the trade-off function between two unit-variance Gaussian distributions with mean $0$ and $\mu$, respectively. That is, an algorithm is $\mu$-GDP if distinguishing between any two neighbouring input training sets is, at least, as hard as distinguishing between two Gaussian random variables with unit variance and means at a distance $\mu$.

Due to their operational, intuitive formulation, we will mainly focus on pure DP and GDP. Two important properties of $\varepsilon$-DP and $\mu$-GDP algorithms are:
\begin{enumerate}
\item \emph{They are KL-stable}\footnote{The definition of KL-stability in~\cite{bassily2016algorithmic} differs from~\eqref{eq:dp_distance_neighbour} and~\eqref{eq:gdp_distance_neighbour} in the explicit quantity on the right-hand side of the bounds. However, the idea that the relative entropy is bounded remains.}~\cite[Definition 4.2]{bassily2016algorithmic}.
That is, given any two (fixed) neighbouring training sets $s$ and $s'$, the relative entropy of their respective output distributions is bounded by~\cite[Lemma D.8]{dong2021gaussian}
 \begin{equation}
    \relent(\bP_W^{S=s} \Vert \bP_W^{S=s'}) \leq  \varepsilon\tanh \left( \frac{\varepsilon}{2} \right)
  \label{eq:dp_distance_neighbour}
 \end{equation}
if $\bA$ is $\varepsilon$-DP, and by~\cite[Theorem.~2.10]{dong2021gaussian}
 \begin{equation}
  \relent(\bP_W^{S=s} \Vert \bP_W^{S=s'}) \leq \frac{1}{2}\mu^2
  \label{eq:gdp_distance_neighbour}
 \end{equation}
if $\bA$ is $\mu$-GDP.

\item \emph{They possess group privacy}.
Given two training sets that differ in at most $k$ instances, an $\varepsilon$-DP algorithm $\bA$ is $k\varepsilon$-DP~\cite[Theorem.~2.2]{dwork2014algorithmic} and a $\mu$-GDP algorithm $\bA$ is $k\mu$-GDP~\cite[Theorem.~2.14]{dong2021gaussian}.
\end{enumerate}

A direct consequence of these two properties is condensed below.

\begin{proposition}[{\citep{dong2021gaussian,dwork2014algorithmic}}]
\label{claim:diff_priv_distance}
Given an algorithm $\mathbb{A}$, if two (fixed) training sets $s$ and $s'$ differ in at most $k$ instances, then the relative entropy of their respective output distributions is bounded from above as
\begin{equation}
	\relent(\bP_W^{S=s} \Vert \bP_W^{S=s'}) \leq k\varepsilon\tanh \left( \frac{k\varepsilon}{2} \right) \leq\min \left\{ \frac{1}{2}k^2\varepsilon^2, k\varepsilon \right\}
	\label{eq:dp_distance}
\end{equation}
if $\mathbb{A}$ is $\varepsilon$-DP, and as
\begin{equation}
	\relent(\bP_W^{S=s} \Vert \bP_W^{S=s'}) \leq \frac{1}{2} k^2 \mu^2
	\label{eq:gdp_distance}
\end{equation}
if $\mathbb{A}$ is $\mu$-GDP.
\end{proposition}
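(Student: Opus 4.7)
The plan is to combine the two properties stated immediately above the proposition in a single, two-step argument. First, I would invoke the group privacy property: since $s$ and $s'$ differ in at most $k$ instances, an algorithm $\mathbb{A}$ that is $\varepsilon$-DP becomes $k\varepsilon$-DP when restricted to this pair of inputs, and one that is $\mu$-GDP becomes $k\mu$-GDP. This effectively reduces the general ``$k$-apart'' case to the neighbouring case with inflated privacy parameters, so no new probabilistic machinery is required beyond the group privacy theorems already cited from Dwork--Roth and Dong--Roth--Su.

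Second, I would apply the KL-stability property to the same pair of inputs, now with the inflated parameters. For the $\varepsilon$-DP case, inequality~\eqref{eq:dp_distance_neighbour} immediately yields
\begin{equation*}
\relent(\bP_W^{S=s} \Vert \bP_W^{S=s'}) \leq k\varepsilon \tanh\!\left( \frac{k\varepsilon}{2} \right),
\end{equation*}
and for the $\mu$-GDP case, inequality~\eqref{eq:gdp_distance_neighbour} with parameter $k\mu$ yields $\relent(\bP_W^{S=s} \Vert \bP_W^{S=s'}) \leq \tfrac{1}{2}(k\mu)^2$, matching~\eqref{eq:gdp_distance}.

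All that remains is to justify the simpler second inequality in~\eqref{eq:dp_distance}, namely $k\varepsilon \tanh(k\varepsilon/2) \leq \min\{\tfrac{1}{2} k^2 \varepsilon^2,\, k\varepsilon\}$. Both pieces follow from elementary properties of $\tanh$ on $[0,\infty)$: using $\tanh(x) \leq x$ gives $k\varepsilon \tanh(k\varepsilon/2) \leq k\varepsilon \cdot k\varepsilon/2 = \tfrac{1}{2} k^2 \varepsilon^2$, which is the tighter estimate for small $k\varepsilon$, while $\tanh(x) \leq 1$ gives $k\varepsilon \tanh(k\varepsilon/2) \leq k\varepsilon$, which is the tighter one for large $k\varepsilon$. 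Taking the minimum of the two produces the stated bound.

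Since each step is essentially a direct appeal to a previously established result, I do not expect any conceptual obstacle; the only mild subtlety is to be clear that ``group privacy'' in the two frameworks preserves the structural form of the definition, so that the neighbouring-case KL-stability bounds can be applied verbatim with $k\varepsilon$ or $k\mu$ in place of $\varepsilon$ or $\mu$. This is precisely what the cited theorems guarantee, so the proof reduces to stringing the two implications together and then dispatching the elementary $\tanh$ inequalities.
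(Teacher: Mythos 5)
Your proof is correct and follows exactly the route the paper takes: the proposition is presented there as "a direct consequence" of the group-privacy and KL-stability properties, with the final estimate obtained from $\tanh(x)\leq x$ and $\tanh\leq 1$, just as you argue. Nothing further is needed.
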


The last inequality of~\eqref{eq:dp_distance} is obtained using the first term of the Taylor expansion of $\tanh$ and the fact that $\tanh \leq 1$. We note that, if $\varepsilon \geq \nicefrac{2}{k}$, the linear approximation in~\eqref{eq:dp_distance} is tighter than the quadratic one; moreover, this approximation becomes increasingly accurate as $k$ increases, given a fixed $\varepsilon$. Incidentally, the linear upper bound may also be obtained through the bound on the max-information from~\citet[Remark~3.1]{dwork2014algorithmic}.

\section{Maximal Leakage and Generalization}
\label{sec:maximal_leakage_privacy}

Recall the definition of the Rényi divergence from~\Cref{def:renyi_divergence}. Then, since the logarithm is a monotonic non-decreasing function we may write the maximal leakage as
\begin{equation*}
    \cL(S \to W) = \esssup_{\bP_S} \esssup_{\bP_W^{S}} \log \frac{\rmd \bP_W^S}{\rmd \bP_W}.
\end{equation*}

Once we write the essential supremum in this form, we may note that if the maximum leakage is bounded by $\varepsilon$, then 
\begin{equation*}
    \log \frac{\rmd \bP_W^S}{\rmd \bP_W} \leq \varepsilon \textnormal { a.s.,} \hspace{2em} \relent(\bP_W^S \Vert \bP_W) \leq \varepsilon \textnormal{ a.s.,} \hspace{1em} \textnormal{and} \hspace{1em} \minf(W;S) \leq \varepsilon.
\end{equation*}

Therefore, if an algorithm is private in the sense that it has a maximal leakage bounded by $\varepsilon$, then it generalizes. Indeed, every expected generalization error and PAC-Bayesian bounds from~\Cref{ch:expected_generalization_error,ch:pac_bayesian_generalization} that involves these quantities can be bounded from above by changing these quantities by $\varepsilon$. 

This is one of the reasons why information-theoretic generalization bounds are attractive. We can determine that \emph{if an algorithm is private, then it generalizes} only by noting that the information captured by the algorithm is bounded if the algorithm is private. As an example, let us consider the most specific type of generalization bounds: single-draw PAC-Bayesian bounds~\Cref{subsec:levels_of_specificity,sec:single-draw-pac-bayes}. We will show how the bounds for losses with a bounded range, losses with a bounded CGF, and losses with a bounded moment can be employed to show that private algorithms generalize. In fact, for bounded losses and losses with a bounded CGF, the bounds will be almost identical to the classical ``small-kl'' and Chernoff inequalities from~\Cref{prop:small_kl,prop:chernoff}. The only difference will be that, where before there was only a cost $\frac{1}{n} \cdot \log \frac{1}{\beta}$ for the precision of the guarantee, now there will be a cost of $\frac{1}{n} \big( \varepsilon +  \log \frac{1}{\beta} \big)$. Therefore, we can clearly determine the effect of the information leakage in the generalization guarantee by the amount of maximal leakage of the algorithm.

\subsection{Losses With a Bounded Range}
\label{subsec:maximal_leakage_losses_bounded_range}

To start, we may consider the Seeger--Langford bound from~\Cref{th:seeger_langford_pac_bayes}. Then, if we choose the data independent prior $\bQ$ to be $\bP_W$ we obtain the following result.

\begin{theorem}
    \label{th:maximal_leakage_generalizes_bounded_bounded_small_kl}
    Consider a loss with a range bounded in $[0,1]$. If an algorithm $\bA$ has a maximal leakage bounded by $\varepsilon$, then, for every $\beta \in (0,1)$, with probability no smaller than $1 - \beta$
    \begin{equation*}
        \relentber \big( \emprisk(W,S) \Vert \poprisk(W) \big) \leq \frac{\varepsilon + \log \frac{\xi(n)}{\beta}}{n}.
    \end{equation*}
\end{theorem}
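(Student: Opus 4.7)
The plan is to derive the bound as an almost-immediate corollary of the single-draw PAC-Bayesian Seeger--Langford inequality (\Cref{th:single-draw-seeger-langford}), since the statement involves $\emprisk(W,S)$ and $\poprisk(W)$ directly rather than their posterior averages. Concretely, I would instantiate that result with the marginal prior $\bQ = \bP_W = \bP_W^S \circ \bP_S$, which is independent of $S$ by construction. The required absolute continuity $\bP_W^S \ll \bP_W$ holds $\bP_S$-almost surely because any $\bP_W$-null event $\cE$ satisfies $0 = \bP_W[\cE] = \bE[\bP_W^S[\cE]]$, forcing $\bP_W^S[\cE] = 0$ almost surely. Plugging this choice into the single-draw Seeger--Langford bound yields, with probability at least $1-\beta$,
\begin{equation*}
    \relentber(\emprisk(W,S) \Vert \poprisk(W)) \leq \frac{\log \frac{\rmd \bP_W^S}{\rmd \bP_W}(W) + \log \frac{\xi(n)}{\beta}}{n}.
\end{equation*}

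Next I would invoke the characterization of maximal leakage displayed at the start of \Cref{sec:maximal_leakage_privacy}, namely
\begin{equation*}
    \cL(S\to W) = \esssup_{\bP_S} \esssup_{\bP_W^S} \log \frac{\rmd \bP_W^S}{\rmd \bP_W},
\end{equation*}
so that the hypothesis $\cL(S\to W) \leq \varepsilon$ is equivalent to the pointwise almost-sure bound $\log \frac{\rmd \bP_W^S}{\rmd \bP_W}(W) \leq \varepsilon$. Intersecting the probability-$(1-\beta)$ event from the PAC-Bayesian inequality with this almost-sure event preserves the confidence level, and substituting the bound into the right-hand side yields the claimed inequality.

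I do not expect a genuine obstacle: the argument is essentially a substitution. The only mild subtlety is the mismatch between the theorem statement (single-draw, featuring $\emprisk(W,S)$ and $\poprisk(W)$ without expectations) and the preamble's reference to the standard Seeger--Langford bound (\Cref{th:seeger_langford_pac_bayes}, which controls $\bE^S[\emprisk(W,S)]$ and $\bE^S[\poprisk(W)]$ instead). I would resolve this by using the single-draw version because it matches the statement verbatim; note that using the standard bound would, via the weaker almost-sure consequence $\relent(\bP_W^S \Vert \bP_W) \leq \varepsilon$ of the leakage hypothesis, lead to the same numerical bound but applied to the posterior-averaged risks, which is strictly weaker than what is stated.
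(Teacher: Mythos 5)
Your proof is correct and follows the same basic strategy as the paper's one-line derivation: choose the prior $\bQ = \bP_W$ in a Seeger--Langford-type bound and replace the dependence term by $\varepsilon$ using the almost-sure consequences of bounded maximal leakage listed at the start of \Cref{sec:maximal_leakage_privacy}. The only divergence is which variant of that bound you invoke. The paper's preamble cites the standard, posterior-averaged bound of \Cref{th:seeger_langford_pac_bayes} together with $\relent(\bP_W^S \Vert \bP_W) \leq \varepsilon$ a.s., which literally yields the inequality for $\bE^S[\emprisk(W,S)]$ and $\bE^S[\poprisk(W)]$; you instead use the single-draw bound of \Cref{th:single-draw-seeger-langford} together with $\log \frac{\rmd \bP_W^S}{\rmd \bP_W}(W) \leq \varepsilon$ a.s., which matches the statement of \Cref{th:maximal_leakage_generalizes_bounded_bounded_small_kl} verbatim. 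Since the whole of \Cref{sec:maximal_leakage_privacy} is explicitly framed around single-draw guarantees, your reading is the right one and the citation in the preamble is best understood as a slip that you have correctly identified and repaired. Two small caveats: (i) your argument for $\bP_W^S \ll \bP_W$ a.s.\ (from $\bP_W[\cE]=0 \Rightarrow \bP_W^S[\cE]=0$ a.s.) leaves the exceptional null set depending on $\cE$; this detour is unnecessary, since finiteness of the maximal leakage---an essential supremum of R\'enyi divergences of order $\infty$---already entails the a.s.\ absolute continuity under the paper's conventions; and (ii) calling the posterior-averaged conclusion ``strictly weaker'' is not quite accurate: the averaged and single-draw forms sit at different specificity levels and neither implies the other, although the single-draw form is indeed the one asserted here.
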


Then, we may employ our developments from~\Cref{subsec:a_fast_rate_bound_mi} to transform the ``small-kl'' inequality into a fast-rate bound, leading to the following result. For convenience, as the result will be employed in this form in~\Cref{subsec:maximal_leakage_losses_bounded_moment}, we present the result for losses with a range bounded in $[0,b]$, while noting that it can be generally extended for losses with a range bounded in $[a,b]$ by scaling and centering.

\begin{theorem}
    \label{th:maximal_leakage_generalizes_bounded_bounded_fast_rate}
    Consider a loss with a range bounded in $[0,b]$. If an algorithm $\bA$ has a maximal leakage bounded by $\varepsilon$, then, for every $\beta \in (0,1)$, with probability no smaller than $1 - \beta$
    \begin{equation*}
        \poprisk(W) \leq c \gamma \log \mleft( \frac{\gamma}{\gamma - 1} \mright) \cdot \emprisk(W,S) + b c \gamma \cdot \frac{\varepsilon + \log \frac{\xi(n)}{\beta}}{n} + b \kappa(c) \gamma.
    \end{equation*}
    holds \emph{simultaneously} for all $\gamma > 1$ and all $c \in (0,1]$, where $\kappa(c) \coloneqq 1 - c (1 - \log c)$.
\end{theorem}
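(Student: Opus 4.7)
The plan is to reduce to the $[0,1]$ case, invoke \Cref{th:maximal_leakage_generalizes_bounded_bounded_small_kl}, and then convert its $\relentber$-form statement into the desired linear upper bound via the variational representation of the relative entropy borrowed from $f$-divergences (\Cref{cor:variational_representation_relative_entropy}). The whole derivation mirrors, almost verbatim, the passage from \Cref{lemma:small_kl_mi} to \Cref{th:mi_bound_bounded_fast_rate} in \Cref{subsec:a_fast_rate_bound_mi}, adapted to a loss with range in $[0,b]$ as in \Cref{rem:fast_rate_strong_bounded_b}.

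First, I rescale. The loss $\tilde{\ell} \coloneqq \ell/b$ has range in $[0,1]$, and its population and empirical risks equal $\poprisk(W)/b$ and $\emprisk(W,S)/b$ respectively. Since the maximal leakage $\cL(S \to W)$ depends only on the joint $\bP_{W,S}$ and not on the loss, the hypothesis $\cL(S \to W) \leq \varepsilon$ is preserved. Applying \Cref{th:maximal_leakage_generalizes_bounded_bounded_small_kl} to $\tilde{\ell}$ therefore yields, with probability no smaller than $1-\beta$,
\begin{equation*}
\relentber\!\Big(\tfrac{\emprisk(W,S)}{b} \,\Big\Vert\, \tfrac{\poprisk(W)}{b}\Big) \;\leq\; \frac{\varepsilon + \log \frac{\xi(n)}{\beta}}{n}.
\end{equation*}

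Second, I apply \Cref{cor:variational_representation_relative_entropy} to $\relentber(\hat{r} \Vert r)$ with a two-valued test function taking values $g_0 = g(0)$ and $g_1 = g(1)$. As in the derivation of~\eqref{eq:relentber_f_div_after_change_variable}, I introduce the reparametrization $\gamma \coloneqq e^{g_0}/(e^{g_0}-e^{g_1})$, which enforces $g_0 \geq g_1$ and hence $\gamma > 1$, so that the resulting inequality becomes an upper bound on $r$ rather than a lower bound. This gives
\begin{equation*}
\relentber(\hat{r} \Vert r) \;\geq\; 1 + g_0 - e^{g_0} - \log\!\big(\tfrac{\gamma}{\gamma-1}\big)\hat{r} + \gamma^{-1} e^{g_0}\, r.
\end{equation*}

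Third, I set $c \coloneqq e^{-g_0}$. Since the optimal $g_0$ equals $\log(\gamma/(\gamma-r)) \geq 0$, this parameter lies in the admissible range $(0,1]$. Substituting the two preceding displays into one another and solving for $r = \poprisk(W)/b$ yields, simultaneously for all $\gamma > 1$ and $c \in (0,1]$,
\begin{equation*}
\frac{\poprisk(W)}{b} \;\leq\; c \gamma \log\!\big(\tfrac{\gamma}{\gamma-1}\big) \cdot \frac{\emprisk(W,S)}{b} + c \gamma \cdot \frac{\varepsilon + \log \frac{\xi(n)}{\beta}}{n} + \kappa(c)\gamma.
\end{equation*}
Multiplying both sides by $b$ and rearranging recovers the stated bound, with the complexity and bias terms carrying the factor $b$ and the empirical-risk coefficient left unchanged, exactly as in \Cref{rem:fast_rate_strong_bounded_b}.

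There is essentially no substantive obstacle. The two points requiring care are: (i) keeping the direction of inequality correct in the variational step, which is guaranteed by requiring $g_0 \geq g_1$ so that $\gamma > 1$; and (ii) applying the $b$-scaling at the end only to the complexity and bias terms, while leaving the coefficient of the empirical risk as $c\gamma\log(\gamma/(\gamma-1))$.
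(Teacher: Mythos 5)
Your proposal is correct and follows essentially the same route as the paper: apply the maximal-leakage small-kl bound (\Cref{th:maximal_leakage_generalizes_bounded_bounded_small_kl}), convert it to the linear form via the variational representation of $\relentber$ with the $\gamma$ and $c=e^{-g_0}$ reparametrization exactly as in \Cref{subsec:a_fast_rate_bound_mi}, and handle the $[0,b]$ range by rescaling as in \Cref{rem:fast_rate_strong_bounded_b}. No gaps; the only cosmetic remark is that the bound holds for every $c \in (0,1]$ simply because the variational inequality holds for every $g_0 \geq 0$, the optimality of $g_0 = \log(\gamma/(\gamma-r))$ only explaining why nothing is lost by that restriction.
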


\subsection{Losses With a Bounded CGF}
\label{subsec:maximal_leakage_losses_bounded_cgf}

Consider the bound from~\eqref{eq:single-draw-chernoff-intermediate} for losses with a bounded CGF in the sense of~\Cref{def:bounded_cgf}. Let us choose the data independent prior $\bQ$ to be $\bP_W$. Then, for all $\lambda \in (0,b)$, with probability no smaller than $1 - \beta$
\begin{equation*}
    \gen(W,S) \leq \frac{1}{\lambda} \mleft( \frac{\varepsilon + \log \frac{1}{\beta}}{n} + \psi(\lambda) \mright).
\end{equation*}

Note that the right-hand side of this equation does not contain any random element. Therefore, we can safely optimize the parameter $\lambda$ using~\Cref{lemma:boucheron_convex_conjugate_inverse} without the need for the events' space discretization technique from~\Cref{subsec:losses_with_bounded_CGF}. The optimization results in the following generic bound.

\begin{theorem}
    \label{th:maximal_leakage_generalizes_bounded_cgf}
    Consider a loss with a bounded CGF (\Cref{def:bounded_cgf}). If an algorithm $\bA$ has a maximal leakage bounded by $\varepsilon$, then, for every $\beta \in (0,1)$, with probability no smaller than $1 - \beta$
    \begin{equation*}
        \gen(W,S) \leq \psi_*^{-1} \mleft( \frac{\varepsilon + \log \frac{1}{\beta}}{n} \mright).
    \end{equation*}
\end{theorem}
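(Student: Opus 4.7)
The plan is to follow exactly the route sketched in the paragraph just before the statement, namely to start from the single-draw PAC-Bayes Chernoff precursor and eliminate the parameter $\lambda$ cleanly because the maximal-leakage assumption makes the bound deterministic in $\lambda$. Concretely, the first step will be to instantiate~\eqref{eq:single-draw-chernoff-intermediate} with the particular choice of data-independent prior $\bQ = \bP_W$. This yields that, for every $\beta \in (0,1)$ and every fixed $\lambda \in (0,b)$, with probability at least $1-\beta$,
\begin{equation*}
    \gen(W,S) \leq \frac{1}{\lambda} \mleft( \frac{\log \frac{\rmd \bP_W^S}{\rmd \bP_W}(W) + \log \frac{1}{\beta}}{n} + \psi(\lambda) \mright).
\end{equation*}

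The second step will be to invoke the maximal-leakage hypothesis. As noted at the start of~\Cref{sec:maximal_leakage_privacy}, writing the Rényi divergence of order infinity as an essential supremum of the log-Radon--Nikodym derivative shows that $\cL(S \to W) \leq \varepsilon$ implies $\log \frac{\rmd \bP_W^S}{\rmd \bP_W}(W) \leq \varepsilon$ almost surely. Substituting this almost-sure inequality into the previous display gives, with probability at least $1-\beta$ and \emph{simultaneously} for every $\lambda \in (0,b)$,
\begin{equation*}
    \gen(W,S) \leq \frac{1}{\lambda} \mleft( \frac{\varepsilon + \log \frac{1}{\beta}}{n} + \psi(\lambda) \mright).
\end{equation*}
The critical observation here is that the right-hand side no longer depends on $W$ or $S$, so the optimization over $\lambda$ can be carried out pointwise on the deterministic envelope without incurring any union-bound cost, in contrast to the PAC-Bayesian setting of~\Cref{subsec:losses_with_bounded_CGF} where the data-dependence of $\relent(\bP_W^S \Vert \bQ)$ forced the events'-space quantization.

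The third step will be to take the infimum over $\lambda \in (0,b)$ and apply~\Cref{lemma:boucheron_convex_conjugate_inverse}, which gives precisely
\begin{equation*}
    \inf_{\lambda \in (0,b)} \mleft\{ \frac{1}{\lambda} \mleft( \frac{\varepsilon + \log \frac{1}{\beta}}{n} + \psi(\lambda) \mright) \mright\} = \psi_*^{-1}\mleft( \frac{\varepsilon + \log \frac{1}{\beta}}{n} \mright),
\end{equation*}
and thereby yields the claimed bound. There is no substantive obstacle: the only point that needs a line of justification is that the swap between ``for every fixed $\lambda$, with high probability'' and ``with high probability, for every $\lambda$'' is legitimate here, and this is immediate precisely because after substituting the a.s.\ maximal-leakage bound the statement holds on a single event of probability at least $1-\beta$ uniformly in $\lambda$. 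Edge cases in the infimum (e.g.\ when it is attained only as $\lambda \to b$) are handled by continuity exactly as in~\Cref{rem:infimum_psi}.
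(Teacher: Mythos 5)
Your proposal is correct and follows essentially the same route as the paper's own proof: instantiate~\eqref{eq:single-draw-chernoff-intermediate} with $\bQ = \bP_W$, replace $\log \frac{\rmd \bP_W^S}{\rmd \bP_W}(W)$ by $\varepsilon$ using the almost-sure consequence of bounded maximal leakage, and then optimize $\lambda$ via \Cref{lemma:boucheron_convex_conjugate_inverse}, which is legitimate precisely because the right-hand side is deterministic so the optimal $\lambda$ can be fixed independently of the data (no events'-space discretization or union bound needed). The uniform-in-$\lambda$ phrasing in your second step is not even required—applying the fixed-$\lambda$ statement at the deterministic (near-)optimizer, with the continuity handling of \Cref{rem:infimum_psi}, already yields the claim, exactly as in the paper.
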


\Cref{th:maximal_leakage_generalizes_bounded_cgf} is essentially the same bound from~\citet[Corollary 3]{esposito2021generalization}, but written in the style used throughout the manuscript. This result is a generalization of the specific theorem for sub-Gaussian losses from~\citep[Corollary 4]{esposito2021generalization}, which was later recovered as well in~\citep[Corollary 4]{hellstrom2020generalization}. For example, if a loss has a range bounded in $[a,b]$, with probability no smaller than $1 - \beta$
\begin{equation*}
    \gen(W,S) \leq (b-a) \sqrt{\frac{\varepsilon + \log \frac{1}{\beta}}{n}}.
\end{equation*}

\subsection{Losses With a Bounded Moment}
\label{subsec:maximal_leakage_losses_bounded_moment}

Consider~\Cref{th:maximal_leakage_generalizes_bounded_bounded_fast_rate} and the the single-draw version of~\Cref{lemma:alquier_truncation_method_refined_bouned_moment} that we can obtain using our refinement of~\citet{alquier2006transductive}'s truncation technique from~\Cref{subsec:interpolating_between_slow_and_fast_rate,subsec:losses_with_bounded_moment}. Then, we have that if the loss $\ell(w,Z)$ has a $p$-th moment bounded by $m_p$ for all $w \in \cW$ and the algorithm has a maximal leakage bounded by $\varepsilon$, then for every $\beta \in (0,1)$ and all $\lambda > 0$, with probability no smaller than $1 - \beta$
\begin{equation*}
    \poprisk(W) \leq \kappa_1 \cdot \emprisk_{\leq \nicefrac{n}{\lambda}}(W,S) + \kappa_2 \cdot \frac{\varepsilon + \log \frac{\xi(n)}{\beta}}{n} + \kappa_3 \cdot \frac{n}{\lambda} + \frac{m_p}{p-1} \mleft( \frac{\lambda}{n} \mright)^{p-1},
\end{equation*}
holds \emph{simultaneously} for all $c \in (0,1]$ and all $\gamma > 1$, where $\kappa_1 \coloneqq c \gamma \log \big( \nicefrac{\gamma}{(\gamma -1)} \big)$, $\kappa_2 \coloneqq c \gamma$, and $\kappa_3 \coloneqq \gamma \big( 1 - c(1 - \log c)\big)$.

Now, as in~\Cref{subsec:maximal_leakage_losses_bounded_cgf}, the optimal value of the parameter $\lambda > 0$ does not depend on any random element. Therefore, it can be safely optimized without the need for the events' space discretization technique from~\Cref{subsec:losses_with_bounded_CGF}. The optimization leads to the following result.

\begin{theorem}
    \label{th:maximal_leakage_generalizes_bounded_moment}
    Consider a loss $\ell(w,Z)$ with a $p$-th moment bounded by $m_p$ for all $w \in \cW$. If an algorithm $\bA$ has a maximal leakage bounded by $\varepsilon$, then, for every $\beta \in (0,1)$, with probability no smaller than $1 - \beta$
    \begin{equation*}
        \poprisk(W) \leq \kappa_1 \cdot \emprisk_{\leq t^\star}(W,S) + m_p^{\frac{1}{p}} \mleft(\frac{p}{p-1}\mright) \mleft( \kappa_2 \cdot \frac{\varepsilon + \log \frac{\xi(n)}{\beta}}{n} + \kappa_3 \mright)^{\frac{p-1}{p}}
    \end{equation*}
    holds \emph{simultaneously} for all $c \in (0,1]$ and all $\gamma > 1$, where $\kappa_1 \coloneqq c \gamma \log \big( \nicefrac{\gamma}{(\gamma -1)} \big)$, $\kappa_2 \coloneqq c \gamma$, $\kappa_3 \coloneqq \gamma \big( 1 - c(1 - \log c)\big)$, and
    \begin{equation*}
        t^\star \coloneqq m_p^\frac{1}{p} \mleft( \kappa_2 \cdot \frac{\varepsilon + \log \frac{\xi(n)}{\beta}}{n} \mright)^{\frac{p-1}{p}}.
    \end{equation*}
\end{theorem}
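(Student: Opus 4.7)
The plan is to mirror the argument sketched in the text immediately preceding the theorem, which is itself the single-draw analogue of the proof of Theorem \ref{th:alquier_truncation_method_refined_adaptive_lambda}, combined with the replacement strategy from Sections \ref{subsec:maximal_leakage_losses_bounded_range} and \ref{subsec:maximal_leakage_losses_bounded_cgf}. First, I would decompose the loss as $\ell = \ell_{<n/\lambda} + \ell_{\geq n/\lambda}$ following Alquier's truncation method from Section \ref{subsec:losses_with_bounded_moment}, so that $\poprisk(W) = \poprisk_{<n/\lambda}(W) + \poprisk_{\geq n/\lambda}(W)$. The truncated loss $\ell_{<n/\lambda}$ takes values in $[0, n/\lambda]$, so I would apply the single-draw fast-rate bound for bounded losses with maximal leakage (Theorem \ref{th:maximal_leakage_generalizes_bounded_bounded_fast_rate}) with $b = n/\lambda$ to control $\poprisk_{<n/\lambda}(W)$ in terms of $\emprisk_{<n/\lambda}(W,S)$. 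For the unbounded remainder $\poprisk_{\geq n/\lambda}(W)$, I would use the same tail calculation as in \eqref{eq:unbounded_remainder_moment_tail_bound}, upper bounding it by $\tfrac{m_p}{p-1}(\lambda/n)^{p-1}$ via Markov's inequality on the $p$-th moment.

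Combining these two steps produces, for every $\beta \in (0,1)$ and every fixed $\lambda > 0$, with probability no smaller than $1-\beta$,
\begin{equation*}
    \poprisk(W) \leq \kappa_1 \cdot \emprisk_{\leq n/\lambda}(W,S) + \kappa_2 \cdot \frac{\varepsilon + \log \frac{\xi(n)}{\beta}}{\lambda} + \kappa_3 \cdot \frac{n}{\lambda} + \frac{m_p}{p-1} \left(\frac{\lambda}{n}\right)^{p-1},
\end{equation*}
simultaneously for all $c \in (0,1]$ and all $\gamma > 1$. The crucial observation, analogous to the one made in Sections \ref{subsec:maximal_leakage_losses_bounded_cgf} and \ref{subsec:maximal_leakage_losses_bounded_moment}, is that once the dependency term has been upper bounded by the deterministic quantity $\varepsilon$, the right-hand side is a deterministic function of $\lambda$ (modulo the empirical risk term, which is monotone in the truncation level). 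Hence, in contrast to the PAC-Bayesian proofs of Theorems \ref{th:alquier_truncation_method_refined_adaptive_lambda} and \ref{sec:single-draw-bounded-moment}, there is no need to invoke the event-space quantization technique from Section \ref{subsubsec:optimizing_parameters_in_probabilistic_statements}; the optimal $\lambda$ can be chosen directly.

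Finally, I would carry out the one-dimensional optimization. Setting the derivative of the $\lambda$-dependent complexity-plus-tail terms to zero yields $\lambda^\star$ satisfying $(\lambda^\star/n)^p = (\kappa_2 C/n + \kappa_3)/m_p$, where $C \coloneqq \varepsilon + \log\frac{\xi(n)}{\beta}$; equivalently, $t^\star = n/\lambda^\star = m_p^{1/p}(\kappa_2 C/n + \kappa_3)^{-1/p}$, matching the stated truncation level. Substituting back, both the complexity term $(\kappa_2 C + \kappa_3 n)/\lambda^\star$ and the tail term $\tfrac{m_p}{p-1}(\lambda^\star/n)^{p-1}$ collapse to scalar multiples of $m_p^{1/p}(\kappa_2 C/n + \kappa_3)^{(p-1)/p}$, and they combine with multiplicative constant $1 + \tfrac{1}{p-1} = \tfrac{p}{p-1}$, producing the bound in the theorem. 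Since the whole argument holds simultaneously over $c$ and $\gamma$ by Theorem \ref{th:maximal_leakage_generalizes_bounded_bounded_fast_rate}, no further union bound is needed. I do not anticipate a real obstacle here: the only subtlety is checking that the deterministic optimization is truly legitimate, which reduces to the essential-supremum bound $\log \frac{\rmd \bP_W^S}{\rmd \bP_W} \leq \varepsilon$ a.s.\ implied by $\cL(S\to W)\leq \varepsilon$, so that the maximal-leakage analogue of the fast-rate bound applies pointwise rather than only in expectation.
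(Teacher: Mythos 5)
Your proposal is correct and follows essentially the same route as the paper: truncate the loss, apply the single-draw maximal-leakage fast-rate bound of \Cref{th:maximal_leakage_generalizes_bounded_bounded_fast_rate} with range $b=\nicefrac{n}{\lambda}$, bound the unbounded remainder via the Markov-type tail calculation of \eqref{eq:unbounded_remainder_moment_tail_bound}, and then optimize $\lambda$ directly, since replacing the dependency term by the deterministic $\varepsilon$ makes the event-space quantization of \Cref{subsubsec:optimizing_parameters_in_probabilistic_statements} unnecessary. The only point of divergence is immaterial: your optimization yields $t^\star = m_p^{\nicefrac{1}{p}}\big(\kappa_2 \cdot \nicefrac{(\varepsilon + \log \frac{\xi(n)}{\beta})}{n} + \kappa_3\big)^{-\nicefrac{1}{p}}$, which is the form consistent with \Cref{th:alquier_truncation_method_refined_adaptive_lambda}, whereas the exponent $\nicefrac{(p-1)}{p}$ and the missing $\kappa_3$ in the theorem's displayed $t^\star$ appear to be a typo in the statement rather than a gap in your argument.
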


Finally, one could obtain the analogue results for~\Cref{th:bounded_variance_high_probability} for losses with a bounded variance.

\section{Differential Privacy and Generalization}
\label{sec:differential_privacy_generalization}

Like we did in~\Cref{sec:maximal_leakage_privacy} above, recall the definition of (pure) differential privacy and the group privacy property from~\Cref{subsec:dp}. Then, if an algorithm is $\varepsilon$-DP, for every two training sets $s$ and $s'$ that differ in every element
\begin{equation*}
    \bP_W^{S=s} \leq e^{n \varepsilon} \bP_W^{S=s'} \textnormal{ almost surely.}
\end{equation*}
Therefore, it follows that for any $\varepsilon$-DP algorithm
\begin{equation*}
    \log \frac{\rmd \bP_W^S}{\rmd \bQ} \leq n \varepsilon \textnormal { a.s.,} \hspace{2em} \relent(\bP_W^S \Vert \bQ) \leq n \varepsilon \textnormal{ a.s.,} \hspace{1em} \textnormal{and} \hspace{1em} \minf(W;S) \leq n\varepsilon,
\end{equation*}
where $\bQ = \bP_W^{S=s'}$ is chosen to be the distribution induced by the algorithm on a fictitious, fixed training set, and where the last inequality follows form the \emph{golden formula} from~\Cref{prop:properties_minf}. Note that the distribution $\bQ$ still does not depend on the data, as $s'$ can be a fictitious sequence of $n$ elements from the instance space $\cZ$ that is completely independent of the real training set $s$.

In this way, we may repeat the procedures from~\Cref{sec:maximal_leakage_privacy} substituting these information measures by $n \varepsilon$ and observing the new privacy guarantees. For example, %
the analogue of the single-draw PAC Bayesian guarantee for algorithms with a bounded maximal leakage from~\Cref{th:maximal_leakage_generalizes_bounded_cgf,th:maximal_leakage_generalizes_bounded_bounded_small_kl,th:maximal_leakage_generalizes_bounded_moment} to $\varepsilon$-DP algorithms is the following.

\begin{theorem}
    \label{th:dp_generalizes_bounded_cgf}
    Consider an $\varepsilon$-DP algorithm $\bA$.
    \begin{enumerate}
        \item If the loss has a range bounded in $[0,1]$, then for every $\beta \in (0,1)$, with probability no smaller than $1 - \beta$
        \begin{equation*}
            \relentber \big( \emprisk(W,S) \Vert \poprisk(W) \big) \leq \varepsilon  + \frac{\log \frac{\xi(n)}{\beta}}{n}.
        \end{equation*}
        \item If the loss has a range bounded in $[0,b]$, then for every $\beta \in (0,1)$, with probability no smaller than $1 - \beta$
        \begin{equation*}
            \poprisk(W) \leq c \gamma \log \mleft( \frac{\gamma}{\gamma - 1} \mright) \cdot \emprisk(W,S) + b c \gamma \cdot \mleft( \varepsilon + \frac{ \log \frac{\xi(n)}{\beta}}{n} \mright) + b \kappa(c) \gamma.
        \end{equation*}
        holds \emph{simultaneously} for all $\gamma > 1$ and all $c \in (0,1]$, where $\kappa(c) \coloneqq 1 - c (1 - \log c)$.
        \item If the loss has a bounded CGF (\Cref{def:bounded_cgf}), then for every $\beta \in (0,1)$, with probability no smaller than $1 - \beta$
        \begin{equation*}
            \gen(W,S) \leq \psi_*^{-1} \mleft( \varepsilon + \frac{\log \frac{1}{\beta}}{n} \mright).
        \end{equation*}
        \item If the loss $\ell(w,Z)$ has a $p$-th moment bounded by $m_p$ for all $w \in \cW$, then for every $\beta \in (0,1)$, with probability no smaller than $1 - \beta$
        \begin{equation*}
            \poprisk(W) \leq \kappa_1 \cdot \emprisk_{\leq t^\star}(W,S) + m_p^{\frac{1}{p}} \mleft(\frac{p}{p-1}\mright) \mleft( \kappa_2 \cdot \mleft( \varepsilon + \frac{ \log \frac{\xi(n)}{\beta}}{n}\mright) + \kappa_3 \mright)^{\frac{p-1}{p}}
        \end{equation*}
        holds \emph{simultaneously} for all $c \in (0,1]$ and all $\gamma > 1$, where $\kappa_1 \coloneqq c \gamma \log \big( \nicefrac{\gamma}{(\gamma -1)} \big)$, $\kappa_2 \coloneqq c \gamma$, $\kappa_3 \coloneqq \gamma \big( 1 - c(1 - \log c)\big)$, and
        \begin{equation*}
            t^\star \coloneqq m_p^\frac{1}{p} \mleft( \kappa_2 \cdot \mleft( \varepsilon + \frac{\log \frac{\xi(n)}{\beta}}{n} \mright)\mright)^{\frac{p-1}{p}}.
        \end{equation*}
        
    \end{enumerate}    
\end{theorem}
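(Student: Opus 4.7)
The overall strategy is to treat each of the four parts as a direct corollary of the corresponding single-draw PAC-Bayesian bound from Section 4.5, applied with a cleverly chosen data-independent reference distribution. Concretely, fix any deterministic training set $s' \in \cZ^n$ (the ``fictitious'' set already mentioned in the text above the theorem) and take the prior to be $\bQ = \bP_W^{S=s'}$. Because $s'$ does not depend on $S$, $\bQ$ is a legitimate data-independent prior. The key observation is that the group-privacy property of $\varepsilon$-DP, invoked with $k=n$ (since $S$ and $s'$ may differ in all $n$ coordinates), yields $\bP_W^{S=s}[\cA] \leq e^{n\varepsilon} \bP_W^{S=s'}[\cA]$ for every measurable $\cA$ and every realization $s$. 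This translates directly into the deterministic bound
\[
\log \frac{\rmd \bP_W^S}{\rmd \bQ}(w) \;\leq\; n\varepsilon \qquad \text{for all } w \text{ in the support, } \bP\text{-a.s.}
\]
So the Radon--Nikodym derivative appearing in the single-draw bounds is a.s.\ controlled by the deterministic quantity $n\varepsilon$.

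With this in hand, parts (1) and (2) follow by substituting the upper bound $n\varepsilon$ for $\log \tfrac{\rmd \bP_W^S}{\rmd \bQ}(W)$ in Theorem~4.5.2 (the single-draw Seeger--Langford bound) and in Theorem~4.5.3 (the single-draw fast-rate bound for losses bounded in $[0,b]$), respectively. Since the relative-entropy-type term is replaced by a deterministic quantity, no events'-space discretization is needed, so the logarithmic penalty reduces to the clean $\log \tfrac{\xi(n)}{\beta}$ that appears in the statement. Crucially, the ``simultaneously for all $c\in(0,1]$ and $\gamma>1$'' quantifiers survive because they already held in the underlying single-draw fast-rate theorem.

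For parts (3) and (4), the cleanest route is to return to the parametric intermediate bound obtained in Section~4.5.2 (the single-draw analogue of Lemma~4.2.1, before optimizing $\lambda$ via the events'-space technique). That intermediate bound states that for every fixed $\lambda \in (0,b)$ and every $\beta \in (0,1)$, with probability at least $1-\beta$,
\[
\gen(W,S) \;\leq\; \frac{1}{\lambda}\bigg( \frac{\log \tfrac{\rmd\bP_W^S}{\rmd \bQ}(W) + \log \tfrac{1}{\beta}}{n} + \psi(\lambda) \bigg).
\]
Substituting the deterministic bound $n\varepsilon$ yields a right-hand side that has no random dependence on the data besides $\psi(\lambda)$, so the optimization of $\lambda$ can be carried out directly via Lemma~2.3.2 (the inverse-convex-conjugate lemma) without paying any union-bound cost. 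This recovers part~(3) verbatim. Part~(4) then follows by feeding part~(2) into the refined Alquier truncation machinery of Lemma~4.4.2 (single-draw version) exactly as in the proof of Theorem~5.2.4, again with $\lambda$ optimized directly because the complexity term is now deterministic.

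The one subtle point — and the only place the argument requires care — is verifying that taking $\bQ = \bP_W^{S=s'}$ is actually admissible in each cited single-draw bound, i.e.\ that $\bP_W^S \ll \bQ$ almost surely. This is immediate from pure DP: the inequality $\bP_W^{S=s}[\cA] \leq e^{n\varepsilon} \bP_W^{S=s'}[\cA]$ implies in particular that $\bQ[\cA]=0$ forces $\bP_W^{S=s}[\cA]=0$ for every realization $s$. Once this is checked, everything else is direct substitution.
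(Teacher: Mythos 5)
Your proposal is correct and follows essentially the same route as the paper: choose the prior $\bQ = \bP_W^{S=s'}$ for a fictitious, fixed training set, use group privacy with $k=n$ to get $\log \frac{\rmd \bP_W^S}{\rmd \bQ} \leq n\varepsilon$ almost surely, and substitute this deterministic bound into the single-draw Seeger--Langford, fast-rate, bounded-CGF, and truncated-moment bounds, optimizing $\lambda$ directly since the complexity term is no longer random. Your explicit verification that pure DP guarantees $\bP_W^S \ll \bQ$ almost surely is a detail the paper leaves implicit but is indeed the right admissibility check.
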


Comparing~\Cref{th:maximal_leakage_generalizes_bounded_cgf} and~\Cref{th:dp_generalizes_bounded_cgf}, we observe that while a bounded maximal leakage readily implies generalization, that an algorithm is $\varepsilon$-DP only implies that it generalizes if $\varepsilon$ is a decreasing function of $n$. In fact, this naive result is in line with the literature in generalization guarantees for DP algorithms that we survey in~\Cref{subsec:other_results_dp_gen}.

For this reason, we wonder if stronger generalization guarantees for pure DP algorithms are possible beyond the simple analysis from above. In~\citep{rodriguez2021upper}, we found an affirmative answer for both ``in expectation'' and PAC-Bayesian guarantees to this question for \emph{permutation invariant algorithms} operating on \emph{discrete data}. We extend the analysis to GDP algorithms as well. Our results in this regard are given in~\Cref{subsec:pure_dp_gdp_discrete}. 

\subsection{Generalization Bounds for Pure DP and GDP Algorithms}
\label{subsec:pure_dp_gdp_discrete}

As mentioned above, we consider \emph{permutation invariant algorithms} that operate on \emph{discrete data}. When we refer to discrete data, we mean data whose instance space $\cZ$ is countable. Moreover, when we refer to permutation invariant algorithms we refer to algorithms whose output does not depend on the order of the training instances. Below, we give a formal definition of this term for clarity.

\begin{definition}
    \label{def:permutation_invariant_algorithms}
    An algorithm $\bA : \cZ^n \to \cP(\cW)$ is said to be \emph{permutation invariant} if it operates on a \emph{set} of instances instead of on a sequence of instances. This means that the hypothesis generated by the algorithm with the training set $s = (z_1, z_2, \ldots, z_n)$ has the same distribution as the one generated with every other permutation $\mathrm{Per}(s)$ of the training set. That is, $\bP_W^{S=s} = \bP_W^{S=\mathrm{Per}(s)}$ for every permutation $\mathrm{Per}$.    
\end{definition}

Before moving on to the results, let us describe a situation motivating the study of the generalization error of DP permutation invariant algorithms operating on discrete data.

\begin{example}
    \label{example:icu}
    Consider a medical setting where the doctors want to determine if an ICU patient should enter a treatment $B$ or should remain with the current treatment $A$.
    To make their decision, they want the aid of a model $f_W$ that takes as input a set of hand-crafted features $X = (F_1, F_2, \dots, F_k)$ and predicts if the patient will survive a certain (fixed) time after entering treatment $B$. 
    An example of such a hand-crafted feature (which showcases its discrete nature) is the systolic blood pressure classified in the following intervals: lower than 120 mmMg, between 120 and 129 mmHg, between 130 and 139 mmHg, between 140 and 179 mmHg, or higher than 180 mmHg.
    With this purpose, the doctors collect a set of $n$ historical records $Z_i$ containing the aforementioned features $X_i$ and if the patient survived $Y_i$.
    Then, they design the model $f_W$ (that is, the hypothesis $W$) employing the historical records $\lbrace Z_1, \ldots, Z_n \rbrace$ (that is, the training set $S$) ensuring that it is $\varepsilon$-DP to preserve the patients' anonymity, for example, with private logistic regression~\cite[Section~4]{yu2014differentially}.
    The model achieves a certain accuracy $\alpha \in [0,1]$.
    Finally, they wonder how well, on average, the model will describe new patients' data for the $0-1$ loss function $\ell(w,z)=\bI_{\{f_w(x) \neq y\}}(w,z)$.
\end{example}

\looseness=-1 The selection of the features in~\Cref{example:icu} reflects some real choices made by physicians from Karolinska Institutet in internal studies. This kind of quantization is commonplace in the medical community, especially in studies related to blood pressure assessment, and often the total cardinality of the data is small~\citep{eguchi2009optimal,muntner2019blood,ma2022development,eguchi2009optimal}. 

Regarding the choice of the permutation invariant algorithm, we chose a privatized version of logistic regression since it is a simple, often interpretable algorithm liked by the medical community (see~\citep{boateng2019review} and the references therein). However, other common algorithms like gradient descent are also permutation invariant and can be privatized by applying some privacy mechanism on the result such as the randomized response or the Laplace mechanisms~\citep{dwork2014algorithmic}.

Our results are based on a mathematical formulation to bound from above the relative entropy $\relent(\bP_W^S \Vert \bQ)$, and therefore bounding the generalization error based on the theorems from~\Cref{ch:expected_generalization_error,ch:pac_bayesian_generalization}. In this formulation, we choose the prior $\bQ$ to be a variational approximation, in the form of a mixture, of the marginal distribution $\bP_W$. Since DP and GDP algorithms are KL-stable, the mixture elements are selected to be the hypotheses' conditional distributions that best cover the set of possible training sets. The rule for selecting such a cover of the space is determined by the level of stability of the algorithm.

We develop a strategy, based on the method of types, to find explicit upper bounds on the relative entropy using this formulation {when the data is discrete}. First, we show that permutation invariant algorithms have a mutual information in $\cO(\log n)$, and therefore, they generalize, regardless of whether they are private or not. Then, we leverage the stability property of $\varepsilon$-DP and $\mu$-GDP algorithms to obtain tighter bounds. To be precise, we obtain bounds on the relative entropy and the mutual information with a rate of $$\cO\mleft(|\cZ| \log \frac{\varepsilon n}{|\cZ|} \mright) \hspace{2em} \textnormal{and} \hspace{2em} \cO \mleft(|\cZ| \log (\varepsilon \sqrt{n \log n})\mright)$$ for $\varepsilon$-DP algorithms and of $$\cO\mleft(|\cZ| \log \mleft( |\cZ| \mu^2 n^2\mright) \mright) \hspace{2em} \textnormal{and} \hspace{2em} \cO\mleft(|\cZ| \log (|\cZ| \mu^2 n \log n)\mright)$$ for $\mu$-GDP algorithms.

To be precise, we derive each of our results considering that the cardinality of the data is finite, that is $|\cZ| < \infty$. The extension to general countable instance spaces $\cZ$ follows immediately as, in that case, the bounds are vacuous.

\subsubsection{A Review of the Method of Types}
\label{subsubsec:method_of_types}

Recall that a training set is defined as a sequence $S = (Z_1, Z_2, \ldots, Z_n)$ where each instance $Z_i$ is i.i.d. according to the probability distribution $\bP_Z$. Then, the training set $S$ is distributed according to $\bP_S = \bP_Z^{\otimes n}.$

\begin{definition}[{\hspace{1sp}\cite[Section 11.1]{Cover2006}}]
\label{def:type}
The \emph{type} $T_{s}$, or \emph{empirical probability distribution}, of a training set $s \in \cZ^n$ is the relative proportion of occurrences of each symbol from $\cZ$.
That is, $T_{s}(z) = \frac{N(z|s)}{n}$ for all $z \in \cZ$, where $$N(z|s) \coloneqq |\{ z' \in s: z = z' \}|$$ is the number of times the symbol $z$ appears in the training set $s$.
The set of all possible types of sequences of elements from $\cZ$ of length $n$ is denoted by $\cT_{\cZ,n}$.
\end{definition}

An interesting property of the types is that, although the number of elements in $\cZ^n$ scales exponentially in the number of samples $n$, the total number of types only scales polynomially in $n$.
It is known that $|\cT_{\cZ,n}| \leq (n+1)^{|\cZ|}$ \cite[Theorem~11.1.1]{Cover2006}.
This can be marginally improved by the following proposition, which is tighter for finite $n$ and alphabets of small cardinality, especially binary alphabets. The proof of this proposition is given in~\Cref{app:proof_claim_num_types} since it does not contribute to the discussion.

\begin{proposition}
\label{claim:num_types}
$|\cT_{\cZ,n}| \leq (n+1)^{|\cZ|-1}$, with equality if and only if $|\cZ|=2$.
\end{proposition}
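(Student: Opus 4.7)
The plan is to use a direct enumeration of types via the stars-and-bars identity, and then sharpen the classical bound $|\cT_{\cZ,n}| \leq (n+1)^{|\cZ|}$ by recognizing that the count is a single binomial coefficient that can be compared factor-by-factor with $(n+1)^{|\cZ|-1}$.

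First I would recall from~\Cref{def:type} that a type $T_s \in \cT_{\cZ,n}$ is completely described by the vector of multiplicities $\big(N(z|s)\big)_{z \in \cZ}$, which is a tuple of non-negative integers summing to $n$. Hence $|\cT_{\cZ,n}|$ equals the number of weak compositions of $n$ into $|\cZ|$ parts, which by the standard stars-and-bars argument equals $\binom{n+|\cZ|-1}{|\cZ|-1}$. Writing this binomial coefficient as the telescoping product
\begin{equation*}
    \binom{n+|\cZ|-1}{|\cZ|-1} = \prod_{j=1}^{|\cZ|-1} \frac{n+j}{j}
\end{equation*}
sets up the comparison with $(n+1)^{|\cZ|-1} = \prod_{j=1}^{|\cZ|-1}(n+1)$.

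The key step is then the elementary observation that, for every integer $j \geq 1$ and every integer $n \geq 0$, one has $(n+j)/j \leq n+1$, which rearranges to $n \leq jn$ and is immediate for $j \geq 1$. Multiplying these inequalities over $j = 1, \ldots, |\cZ|-1$ gives the desired bound $|\cT_{\cZ,n}| \leq (n+1)^{|\cZ|-1}$.

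For the equality claim, I would note that $(n+j)/j = n+1$ holds (for $n \geq 1$) if and only if $j = 1$. Therefore, the product inequality is tight if and only if the only index appearing is $j = 1$, i.e.\ $|\cZ| - 1 = 1$, which gives $|\cZ| = 2$. A direct check confirms the two directions: when $|\cZ| = 2$, $\binom{n+1}{1} = n+1 = (n+1)^{|\cZ|-1}$; when $|\cZ| \geq 3$ and $n \geq 1$, the factor with $j = 2$ already contributes $(n+2)/2 < n+1$, producing strict inequality. No step here looks like a serious obstacle; the only care needed is to record that the statement implicitly assumes $n \geq 1$, since for $n = 0$ there is a single (empty) type and the bound is trivially an equality for every $|\cZ|$.
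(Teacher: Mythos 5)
Your proposal is correct and follows essentially the same route as the paper: both start from the exact count $|\cT_{\cZ,n}| = \binom{n+|\cZ|-1}{|\cZ|-1}$, rewrite it as the product $\prod_{j=1}^{|\cZ|-1}\bigl(1+\tfrac{n}{j}\bigr)$, and bound each factor by $n+1$, with equality forced at $|\cZ|=2$. The only difference is cosmetic (your stars-and-bars derivation versus the paper's citation, and your explicit $n=0$ caveat), so no further comparison is needed.
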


Another interesting property of the type $T_{s}$ of a sequence $s = (z_1, z_2, \ldots, z_n)$ is that it is equal to the type of any permutation of the same sequence, that is, $T_{s} = T_{\mathrm{Per}(s)}$, where $\mathrm{Per}$ is a random permutation.
Therefore, the type $T_{s}$ uniquely identifies the training set $s$ with elements $\lbrace z_1, z_2, \ldots, z_n \rbrace$, where the order of the instances is irrelevant.
In this way, we can describe the distance between two training sets $s$ and $s'$ by means of their types $T_s$ and $T_{s'}$.

\begin{definition}
\label{def:dist}
 The \emph{distance} between two training sets $s$ and $s'$ is defined as the minimum number of instances that needs to be changed in $s$ to obtain $s'$.
 It is proportional to the total variation distance between the types $T_s$ and $T_{s'}$,
 \begin{equation*}
  d(s,s') \coloneqq \frac{1}{2} \sum_{z \in\cZ} \big| N(z|s) - N(z|s') \big|
  = \frac{n}{2} \cdot \tv(T_s , T_{s'}).
 \end{equation*}
\end{definition}

\subsubsection{An Auxiliary Lemma}

The following lemma, inspired by the variational approximation given in~\cite{hershey_2007_approx}, bounds the relative entropy between two probability distributions {$\bP$ and $\bQ$}, where the latter is a mixture probability distribution. As mentioned previously, this lemma will be the foundation of our formulation to bound the relative entropy $\relent(\bP_W^{S=s} \Vert \bQ)$ by considering the prior $\bQ$ to be a mixture of posterior distributions $\bP_W^{S=s'}$ with fictitious sets $s'$ covering the space $\cZ^n$.

\begin{lemma}
\label{lemma:mixture_kl_ub}
Let $\bP$ and $\bQ$ be two probability distributions such that $\bP \ll \bQ$.
Let also $\bQ$ be a finite mixture of probability distributions such that $\bQ = \sum_b \omega_b \bQ_b$, where $\sum_b \omega_b = 1$, $\omega_b \geq 0$, and $\bP \ll \bQ_b$ for all $b$.
Then, the following inequalities hold:
\begin{align}
	\relent(\bP \Vert \bQ) &\leq - \log \left( \sum\nolimits_b \omega_b \exp \big( {-} \relent(\bP \Vert \bQ_b) \big) \right) 
	\label{eq:lemma_ub_1} \\
	&\leq \min_b \big\lbrace \relent(\bP \Vert \bQ_b) - \log \omega_b \big\rbrace .
	\label{eq:lemma_ub_2} 
\end{align}
\end{lemma}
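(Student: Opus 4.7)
My plan is to split the lemma into two separate arguments: inequality \eqref{eq:lemma_ub_1} is the substantive part and will follow from a single well-chosen application of Jensen's inequality, while inequality \eqref{eq:lemma_ub_2} is an immediate algebraic consequence of \eqref{eq:lemma_ub_1}.

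For \eqref{eq:lemma_ub_2}, I would simply note that every summand $\omega_b e^{-\relent(\bP \Vert \bQ_b)}$ is non-negative, so for each index $b^\star$ the full sum dominates the single term $\omega_{b^\star} e^{-\relent(\bP \Vert \bQ_{b^\star})}$. Taking logarithms, flipping signs, and minimizing over $b^\star$ produces the stated bound. This step is essentially a one-liner and requires no further machinery.

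For \eqref{eq:lemma_ub_1}, my plan is to rewrite the Radon--Nikodym derivative of the mixture via its linearity. The hypothesis $\bP \ll \bQ_b$ for every $b$ guarantees that each $\tfrac{\rmd \bQ_b}{\rmd \bP}$ is well-defined $\bP$-a.s., and that $\tfrac{\rmd \bQ}{\rmd \bP} = \sum_b \omega_b \tfrac{\rmd \bQ_b}{\rmd \bP}$ $\bP$-a.s. I would then write
\begin{equation*}
    -\relent(\bP \Vert \bQ) = \bE_{x \sim \bP}\mleft[\log \sum\nolimits_b \omega_b \exp \mleft( \log \tfrac{\rmd \bQ_b}{\rmd \bP}(x) \mright) \mright]
\end{equation*}
and apply Jensen's inequality to the convex log-sum-exp map $(y_b) \mapsto \log \sum_b \omega_b e^{y_b}$ (a standard fact from convex analysis), pulling the expectation through the exponents to obtain
\begin{equation*}
    -\relent(\bP \Vert \bQ) \geq \log \sum\nolimits_b \omega_b \exp \mleft( \bE_{x \sim \bP}\mleft[\log \tfrac{\rmd \bQ_b}{\rmd \bP}(x)\mright] \mright) = \log \sum\nolimits_b \omega_b e^{-\relent(\bP \Vert \bQ_b)}.
\end{equation*}
Flipping signs yields \eqref{eq:lemma_ub_1}.

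Since the proof boils down to one application of Jensen to log-sum-exp, the main obstacle I anticipate is measure-theoretic bookkeeping rather than any conceptual difficulty: I need to check that the Jensen step remains valid in the extended sense (allowing some $\relent(\bP \Vert \bQ_b)$ to be $+\infty$, in which case the corresponding summand contributes zero to the log-sum-exp), and that the pointwise mixture identity for Radon--Nikodym derivatives holds $\bP$-a.s. Both are routine under the hypothesis $\bP \ll \bQ_b$ for all $b$, so the proof should go through without surprises.
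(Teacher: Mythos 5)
Your proposal is correct, and for the key inequality \eqref{eq:lemma_ub_1} it takes a more direct route than the paper. The paper follows the variational approximation of Hershey and Olsen: it first restricts $\bQ$ and $\bQ_b$ to the support of $\bP$, introduces auxiliary weights $\lbrace \phi_b \rbrace_b$ with $\sum_b \phi_b = 1$, applies Jensen to $-\log$ of the reweighted sum to get a family of bounds $\hat{\rmD}_{\textnormal{KL}}(\bP \Vert \tilde{\bQ}; \lbrace \phi_b \rbrace_b)$, and then optimizes the $\phi_b$ by a Lagrangian computation, the optimum $\phi_b^\star \propto \omega_b e^{-\relent(\bP \Vert \bQ_b)}$ yielding \eqref{eq:lemma_ub_1}; your single application of Jensen to the convex log-sum-exp map lands on exactly that optimized bound in one step, so the two arguments are dual views of the same inequality, with yours being shorter and the paper's buying, as a by-product, the whole family of intermediate variational bounds for suboptimal weights (useful when the optimal ones are intractable). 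Your derivation of \eqref{eq:lemma_ub_2} from \eqref{eq:lemma_ub_1} coincides with the paper's. One small technical point to fix in the write-up: $\bP \ll \bQ_b$ does \emph{not} by itself make $\frac{\rmd \bQ_b}{\rmd \bP}$ a genuine Radon--Nikodym derivative (that would need $\bQ_b \ll \bP$); you should either work with densities $p, q_b$ with respect to a common dominating measure such as $\mu = \bP + \sum_b \bQ_b$ and use $\nicefrac{q_b}{p}$, which is well-defined and positive $\bP$-a.s., or equivalently take the reciprocal of $\frac{\rmd \bP}{\rmd \bQ_b}$, which is $\bP$-a.s. positive and finite. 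With that reading, the mixture identity and the extended-value Jensen step (terms with $\relent(\bP \Vert \bQ_b) = \infty$ contributing zero) go through as you anticipate; note the paper's own proof performs the analogous bookkeeping by passing to the restricted measures $\tilde{\bQ}_b$, so this is a presentational repair rather than a gap.
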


Given the element of the mixture $\bQ_b$, \Cref{eq:lemma_ub_2} depicts the trade-off between its similarity with the distribution $\bP$ and its responsibility $\omega_b$.
Intuitively, this equation tells us that the relative entropy between $\bP$ and $\bQ$ is bounded from above by the divergence between $\bP$ and the closest, most probable element of $\bQ$.
Particularly, if the weights $\omega_b$ are the same for every element of the mixture, then the bound~\eqref{eq:lemma_ub_2} is controlled by the element that is closest to $\bP$ in relative entropy.
This behavior of the bound will be useful in the proofs of the main results.
This bound is especially tight when one element of the mixture is either very probable or much closer to $\bP$ than the others.
In the scenario where neither of these two conditions is met, the bound~\eqref{eq:lemma_ub_2} is loose and~\eqref{eq:lemma_ub_1} is preferred.

This bound can be useful elsewhere as we show in~\Cref{subsec:sgld} to find tighter bounds for the SGLD algorithm. In~\citep[Section VI. A]{rodriguez2021upper}, we also describe how this lemma can be employed to sharpen the analysis of lower bounds on the minimax error with Fano's method.

\subsubsection{A Simple Upper Bound Through the Method of Types}

Recall that our objective is to find an upper bound for $\relent(\bP_W^{S=s} \Vert \bQ)$ for a fixed training set $s$, and where we may choose any data-independent prior $\bQ$. 

Let us consider the prior $\bQ$ to be a mixture of all the conditional distributions $\bP_W^{S=s'}$ of the hypothesis given a fictitious training set $s'$, with mixture probability (or responsibility) of $\omega_{s'}$; that is, 
\begin{equation*}
    \bQ = \sum_{s' \in \cS} \omega_{s'} \bP_{W}^{T_{s'}},
\end{equation*}
where we take into account that $\bP_{W}^{S=s'} = \bP_{W}^{T_{s'}}$ a.s. for all training sets $s' \in \cS$, since a training set $s'$ with finite elements is completely characterized by its type $T_{s'}$. Recall again that the distribution $\bQ$ is still \emph{data-independent}, since it does not use the knowledge of the specific training set $s$, but only uses the knowledge of the instance space $\cZ$ and the number of samples $n$ from the problem.

Then, we may leverage \Cref{lemma:mixture_kl_ub} to obtain a more tractable upper bound. That is, 
\begin{align}
 \relent(\bP_W^{S=s} \Vert \bQ) \leq \min_{s' \in \cS} \left \lbrace \relent \mleft( \bP_{W}^{S=s} \Vert \bP_{W}^{T_{\smash{s'}}} \mright) - \log \omega_{s'} \right \rbrace.
 \label{eq:prop1_bnd1}
\end{align}

A naive approach is to consider an equiprobable mixture $\bQ$, that is, $\omega_s = |\cS|^{-1} = |\cT_{\cZ,n}|^{-1}$ for all $s \in \cS$. Then, we have that $  \relent(\bP_W^{S=s} \Vert \bQ) \leq \log |\cS|$, which, combined with \Cref{claim:num_types} leads to the following proposition.

\begin{proposition}
\label{prop:simple}
    Consider a discrete instance space $|\cZ| < \infty$. Then, for all training sets $s \in \cZ^n$, there exists a distribution $\bQ$ on $\cW$ such that
    \begin{equation*}
         \relent(\bP_W^{S=s} \Vert \bQ) \leq (|\cZ|-1) \log(1+n).
    \end{equation*}
\end{proposition}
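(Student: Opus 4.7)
The plan is to construct a data-independent prior $\bQ$ as a uniform mixture indexed by the set of types and then apply the mixture bound from \Cref{lemma:mixture_kl_ub}. Specifically, I would take
\begin{equation*}
    \bQ = \sum_{s' \in \cT_{\cZ,n}} \frac{1}{|\cT_{\cZ,n}|} \bP_{W}^{T_{s'}},
\end{equation*}
which is manifestly independent of the observed training set $s$ since it depends only on $\cZ$ and $n$. The key structural observation enabling this construction is that the (permutation-invariant) algorithms considered in this section satisfy $\bP_{W}^{S=s'} = \bP_{W}^{T_{s'}}$ almost surely, so the posterior depends on the training set only through its type, and hence there are only $|\cT_{\cZ,n}|$ distinct conditional distributions to average over.

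Next, I would invoke the second inequality~\eqref{eq:lemma_ub_2} of \Cref{lemma:mixture_kl_ub}, evaluating the minimum at the type of the observed training set, namely $s' = s$. Because $\bP_{W}^{S=s} = \bP_{W}^{T_{s}}$ by permutation invariance, this particular summand contributes $\relent(\bP_{W}^{T_{s}} \Vert \bP_{W}^{T_{s}}) = 0$, while the logarithmic penalty contributes $-\log(1/|\cT_{\cZ,n}|) = \log |\cT_{\cZ,n}|$. This yields the intermediate bound
\begin{equation*}
    \relent(\bP_W^{S=s} \Vert \bQ) \leq \log |\cT_{\cZ,n}|.
\end{equation*}

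Finally, I would apply \Cref{claim:num_types} to bound $|\cT_{\cZ,n}| \leq (1+n)^{|\cZ|-1}$, and take logarithms to arrive at the desired estimate $(|\cZ|-1)\log(1+n)$.

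There is no substantial obstacle here; the argument is essentially spelled out in the paragraphs preceding the proposition and combines two already-established ingredients. The conceptual point worth emphasizing is that permutation invariance is what collapses the potentially exponential support $\cZ^n$ of the data-dependent posterior onto the polynomial-sized set $\cT_{\cZ,n}$, so that a uniform mixture over types pays only a logarithmic price in $n$ and a linear price in the alphabet size $|\cZ|$.
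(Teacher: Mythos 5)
Your proposal is correct and matches the paper's own argument essentially step for step: a uniform mixture of the type-conditional distributions $\bP_W^{T_{s'}}$ as the prior, the second inequality of \Cref{lemma:mixture_kl_ub} evaluated at the type of the observed set (killing the relative-entropy term and leaving $\log|\cT_{\cZ,n}|$), and \Cref{claim:num_types} to conclude. No differences worth noting.
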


Note that in the proof of~\Cref{prop:simple} we do not leverage the properties of \Cref{lemma:mixture_kl_ub} at their fullest, since we do not take advantage of the combination of the relative entropy and the mixture probabilities for the minimization~\eqref{eq:prop1_bnd1}.
However, if we note again that $\minf(S;W) \leq \bE_{s \sim \bP_S} \big[ \relent(\bP_W^{S=s} \Vert \bQ) \big]$ for every distribution $\bQ \in \cP(\cW)$ (\Cref{prop:properties_minf}), we observe how \Cref{lemma:mixture_kl_ub} allows us, naively, to obtain the same bound on the mutual information that one would obtain through the following decomposition of the mutual information:
\begin{align}
    \minf(W;S) = \ent(S) - \ent(S|W) \leq \ent(S) \leq \log |\cS| \leq (|\cZ|-1) \log(1+n), \label{eq:simple_obvious}
\end{align}
where the first inequality  is due to the non-negativity of the entropy for discrete random variables (\Cref{prop:entropy_properties_discrete}).

\begin{remark}
\label{rem:multinomial}
We note that~\eqref{eq:simple_obvious} may be improved using the tighter bound on the entropy of a multinomial distribution from~\cite[Theorem~3.4]{kaji_bounds_2015}.
More precisely, assuming that $n$ is sufficiently large and after some  algebraic manipulations, the bound states that
\begin{equation*}
    \minf(S;W) \in \cO\mleft( \frac{|\cZ| - 1}{2} \log \left( \frac{n}{|\cZ|} \right) + 2 |\cZ|\mright), %
\end{equation*}
which grows more slowly than~\eqref{eq:simple_obvious} with respect to $n$, but has a more involved expression and interpretation.
\end{remark}

\Cref{prop:simple} states the fact that, if $|\cZ|$ is finite, the number of possible training sets grows polynomially in $n$, which is independent of how the algorithm works and which training set $s$ is at hand.
In other words, even if the algorithm tries to memorize each instance in the training set, $\relent(\bP_W^{S=s} \Vert \bQ)$ cannot grow faster than logarithmically in $n$. Therefore, according to the bounds in~\Cref{ch:expected_generalization_error,ch:pac_bayesian_generalization}, that depend on the ratio $\nicefrac{\relent(\bP_W^{S=s} \Vert \bQ)}{n}$, permutation invariant algorithms generalize.

In the sequel, we restrict ourselves to algorithms that are $\varepsilon$-DP or $\mu$-GDP, and thus we can improve upon this simple upper bound.

\subsubsection{Upper Bounds on the Relative Entropy of Private Algorithms}

As previously mentioned, $\varepsilon$-DP and $\mu$-GDP algorithms are smooth, or stable, in the sense that close input training sets produce similar output hypothesis distributions. Therefore, similarly to what we did above to obtain~\Cref{prop:simple}, we may employ~\Cref{lemma:mixture_kl_ub} to bound $\relent(\bP_W^{S=s} \Vert \bQ)$ using a mixture distribution of posterior distributions $\bP_W^{S=s'}$ as a prior. However, in this case, the mixture only considers a sub-collection $\cS' \subseteq \cS$ of all possible training sets. Then, due to the smoothness of the algorithm, the relative entropy $\relent(\bP_W^{S=s} \Vert \bP_W^{S=s'})$ is small if the training sets $s$ and $s'$ are close.

Specifically, we may define the collection $\cS'$ by partitioning the space of training sets $\cS$ in equal-sized hypercubes and taking their ``central'' training set. More precisely, since a training set $s$ is uniquely defined by its type $T_s$, we may note that all the types lie inside the unit hypercube $[0,1]^{|\cZ|-1}$, as the last dimension is completely determined by the first $|\cZ| -1$. Then, we may split the $[0,1]$ interval of each dimension in $1 \leq t \leq n$ parts, resulting in a hypercube cover of $t^{|\cZ|-1}$ smaller hypercubes. The type at the center of each smaller hypercube represents a training set of the collection $\cS'$.
In this way, the cover guarantees that the maximal distance inside a hypercube, that is, $\max_{s \in \cS} \min_{s' \in \cS'} d(s,s')$ is bounded. Then, if the mixture components are equiprobable and $\omega_{s'} = |\cS'|^{-1} = t^{-(|\cZ|-1)}$, the bound from \Cref{lemma:mixture_kl_ub} trades the relative entropy $\relent(\bP_W^{S=s} \Vert \bP_W^{S=s'})$, which decreases as the maximal distance decreases by increasing the number of hypercubes $|\cS'|$, for the logarithm of the responsibility $- \log \omega_{s'}$, which increases as the number of hypercubes $|\cS'|$ grows. 

The choice of the number of splits $t$ of each interval $[0,1]$ of each dimension is done based on the stability properties of each algorithm determined by their DP parameters $\varepsilon$ and $\mu$ and results in the following proposition. The complete proof is a bit technical and the details do not add to the discussion and therefore is deferred to~\Cref{app:proof_prop_mGDP_1}.

\begin{proposition}
\label{prop:mi_mGDP}
Consider a discrete instance space $|\cZ| < \infty$ and an algorithm $\bA$ characterized by a Markov kernel $\bP_W^S$. Then, for every training set $s \in \cZ^n$:
\begin{enumerate}
\item If $\bA$ is $\varepsilon$-DP and $\varepsilon \leq 1$, then there exists a distribution $\bQ$ on $\cW$ such that
\begin{equation}
	\relent(\bP_W^{S=s} \Vert \bQ) \leq (|\cZ|-1) \log \big(1 + e \varepsilon N \big).
	\label{eq:prop_eDP_1_main}
\end{equation}
\item If $\bA$ is $\mu$-GDP and $\mu \leq \nicefrac{1}{\sqrt{|\cZ|-1}}$, then there exists a distribution $\bQ$ on $\cW$ such that
\begin{equation}
	\relent(\bP_W^{S=s} \Vert \bQ) \leq \frac{1}{2}(|\cZ|-1) \log \big( 1+ e (|\cZ| - 1) \mu^2 N^2 \big).
 	\label{eq:prop_mGDP_1_main}
\end{equation}
\end{enumerate} 
For lower privacy guarantees, that is, $\varepsilon > 1$ or $\mu > \nicefrac{1}{\sqrt{|\cZ|-1}}$, the upper bounds on $\relent(\bP_W^{S=s} \Vert \bQ)$ are no better than the one in \Cref{prop:simple}.
\end{proposition}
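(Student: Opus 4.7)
The plan is to instantiate \Cref{lemma:mixture_kl_ub} with a carefully chosen mixture prior $\bQ$, following the sketch given just before the statement. First I would fix a positive integer $t \leq n$ (to be optimized later) and partition the free part $[0,1]^{|\cZ|-1}$ of the type simplex into $t^{|\cZ|-1}$ hypercubes of side $1/t$. In each cell I pick a representative training set $s'$ whose type $T_{s'} \in \cT_{\cZ,n}$ lies in that cell (snapping the centre of the cell to a valid type, which is possible because types have resolution $1/n \leq 1/t$). The resulting sub-collection $\cS'$ has cardinality $|\cS'| \leq t^{|\cZ|-1}$, and any $s \in \cZ^n$ has a neighbour $s' \in \cS'$ whose type differs from $T_s$ by at most $1/t$ in each free coordinate. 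Setting $\bQ = \sum_{s' \in \cS'} |\cS'|^{-1} \bP_W^{S=s'}$ and invoking \Cref{lemma:mixture_kl_ub} with equal weights yields
\begin{equation*}
 \relent(\bP_W^{S=s} \Vert \bQ) \leq \min_{s' \in \cS'} \relent(\bP_W^{S=s} \Vert \bP_W^{S=s'}) + (|\cZ|-1) \log t .
\end{equation*}

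Next I would translate the $\ell_\infty$ bound on the type difference into a bound on the training-set distance via \Cref{def:dist}: since $d(s,s') = (n/2)\tv(T_s,T_{s'})$ and the last coordinate of $T_s - T_{s'}$ is the negative sum of the other $|\cZ|-1$, a cell of side $1/t$ forces $d(s,s') \leq c\,n(|\cZ|-1)/t$ for a small explicit constant $c$. At this point group privacy from \Cref{claim:diff_priv_distance} is plugged in: for $\varepsilon$-DP I would use the linear bound $\relent \leq k\varepsilon$, which is the appropriate regime under the hypothesis $\varepsilon \leq 1$, giving a minimisation objective of roughly $n\varepsilon(|\cZ|-1)/t + (|\cZ|-1)\log t$; for $\mu$-GDP I would use the quadratic bound $\relent \leq k^2\mu^2/2$, leading to $n^2\mu^2(|\cZ|-1)^2/t^2 + (|\cZ|-1)\log t$ up to constants.

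Then I would optimise over $t \in \{1, 2, \dots\}$. The unconstrained minimisers are proportional to $n\varepsilon$ in the DP case and to $\sqrt{|\cZ|-1}\,\mu n$ in the GDP case, which are $\geq 1$ exactly when $\varepsilon \leq 1$ and $\mu \leq 1/\sqrt{|\cZ|-1}$ respectively; this is where the hypotheses of the proposition enter, and outside this regime the trivial choice $t=1$ just recovers \Cref{prop:simple}. Substituting the optimal $t$ back in, using $\log(1+x)$ in place of $\log x$ to absorb the boundary case $t=1$, and keeping track of the Euler constant that appears through the stationary-point identity $\log(ex) = \log x + 1$, should yield the stated bounds $(|\cZ|-1)\log(1 + e\varepsilon n)$ and $\tfrac12 (|\cZ|-1)\log(1 + e(|\cZ|-1)\mu^2 n^2)$.

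The main obstacle I expect is bookkeeping rather than conceptual. Two technical points need care: (i) the representative $s' \in \cS'$ must be an \emph{actual} training set, so one must verify that snapping the cell centre to the nearest element of $\cT_{\cZ,n}$ only inflates $d(s,s')$ by a term that is absorbed into the constant $c$, and (ii) the integer constraint $t \in \bN$ forces a rounding of the unconstrained minimiser, and one has to check that $\lceil \cdot \rceil$ costs at most a constant factor inside the logarithm, which is what makes the clean $\log(1+e\varepsilon n)$ and $\log(1+e(|\cZ|-1)\mu^2 n^2)$ forms come out with no leftover constants. Once these two bookkeeping points are handled, the rest is routine optimisation of a one-parameter upper bound.
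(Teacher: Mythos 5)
Your plan is essentially the paper's proof: cover the count/type cube with $t^{|\cZ|-1}$ equal cells, take the equal-weight mixture of the posteriors at the cell representatives as $\bQ$, apply \Cref{lemma:mixture_kl_ub} to get $\min_{s'}\relent(\bP_W^{S=s}\Vert\bP_W^{S=s'}) + (|\cZ|-1)\log t$, control the relative-entropy term through group privacy (\Cref{claim:diff_priv_distance}, linear for $\varepsilon$-DP and quadratic for $\mu$-GDP), and optimize $t$, with the optimal choices $t=\varepsilon n$ and $t=\sqrt{|\cZ|-1}\,\mu n$ producing the factor $e$ and the ``$1+$'' exactly as you describe.

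The one point to straighten out is your regime analysis, which has the two boundary cases swapped. The unconstrained minimizer $t^\star=\varepsilon n$ satisfies $t^\star\geq 1$ iff $\varepsilon\geq \nicefrac{1}{n}$, not iff $\varepsilon\leq 1$; the hypotheses $\varepsilon\leq 1$ and $\mu\leq \nicefrac{1}{\sqrt{|\cZ|-1}}$ are what guarantee the \emph{upper} constraint $t^\star\leq n$ (one cannot split each coordinate into more than $n$ useful parts, and $t\leq n$ is also what lets the $+\nicefrac{1}{2}$ rounding term be absorbed into $\nicefrac{n}{t}$ in the distance bound). When that upper constraint binds, i.e.\ $\varepsilon>1$ or $\mu>\nicefrac{1}{\sqrt{|\cZ|-1}}$, one is forced to take one cell per type and the bound is no better than \Cref{prop:simple}, which is the proposition's final sentence — this is \emph{not} the $t=1$ case. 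The choice $t=1$ is instead what handles the \emph{lower} boundary $\varepsilon<\nicefrac{1}{n}$ (resp.\ $\mu<\nicefrac{1}{(n\sqrt{|\cZ|-1})}$), where it yields $(|\cZ|-1)\varepsilon n$ (resp.\ the quadratic analogue), and it is precisely this case that the ``$1+$'' inside the logarithm absorbs, via $x\leq\log(1+ex)$ for $x\leq 1$. With that bookkeeping corrected, your argument coincides with the paper's.
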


Note that the upper bounds~\eqref{eq:prop_eDP_1_main} and~\eqref{eq:prop_mGDP_1_main} are tighter than the general one from~\Cref{prop:simple} once the algorithms have at least a privacy guarantee of $\varepsilon \leq \nicefrac{1}{e}$ or $\mu\leq \nicefrac{1}{\sqrt{e(|\cZ|-1)}}$ (since $\sqrt{1+n^2} < 1+n$ for $n\geq 1$).

The upper bounds from \Cref{prop:simple} and~\Cref{prop:mi_mGDP} can be tightened if a more accurate value for the number of hypercubes needed to cover the space of training sets is used. To be precise, since the types are empirical probability distributions they lay in the $(|\cZ|-1)$-simplex. In other words, if we consider the unit hypercube $[0,1]^{|\cZ|-1}$, there are only types inside the hypervolume comprised between the origin and the $(|\cZ|-2)$-simplex. Proceeding equally as before, we obtain the following proposition, whose complete proof is in~\Cref{app:proof_prop_mGDP_2}. Although the result is tighter, this improvement comes at the expense of losing simplicity in the final expressions.

\begin{proposition}
\label{prop:mi_mGDP_2}
Consider a discrete instance space $|\cZ| < \infty$ and an algorithm $\bA$ characterized by a Markov kernel $\bP_W^S$. Then, for every training set $s \in \cZ^n$:
\begin{enumerate}
\item If $\bA$ is $\varepsilon$-DP and $\varepsilon \leq \nicefrac{1}{n}$, then there exists a distribution $\bQ$ on $\cW$ such that
\begin{align}
	\relent (\bP_W^{S=s} \Vert \bQ)\leq (|\cZ|-1)(1+\varepsilon n) - \frac{1}{2} \log \mleft(2 \pi (|\cZ|-1)\mright),
	\label{eq:prop_eDP_2_main_0}
\end{align}
and if $\nicefrac{1}{n} < \varepsilon \leq 1$, then there exists a distribution $\bQ$ on $\cW$ such that
\begin{align}
	\relent (\bP_W^{S=s} \Vert \bQ)\leq (|\cZ|-1) &\log \left(1 + \frac{2}{|\cZ|-1} \varepsilon n  \right) \nonumber \\ &+(|\cZ|-1) \log \left( \frac{e^2}{2} \right) -\frac{1}{2} \log  \mleft( 2 \pi (|\cZ|-1) \mright).
	\label{eq:prop_eDP_2_main}
\end{align}

\item If $\bA$ is $\mu$-GDP and $\mu \leq \nicefrac{1}{\big(n\sqrt{|\cZ|-1}\big)}$, then there exists a distribution $\bQ$ on $\cW$ such that
\begin{align}
    \MoveEqLeft[1]
    \relent (\bP_W^{S=s} \Vert \bQ)\leq (|\cZ|-1) \left( 1+ \frac{|\cZ|-1}{2} \mu^2 n^2 \right) -\frac{1}{2} \log \mleft(2 \pi (|\cZ|-1)\mright),
    \label{eq:prop_mGDP_2_main_0}
\end{align}
and if $\nicefrac{1}{\big(n\sqrt{|\cZ|-1}\big)} < \mu \leq \nicefrac{1}{\sqrt{|\cZ|-1}}$, then there exists a distribution $\bQ$ on $\cW$ such that
\begin{align}
    \relent (\bP_W^{S=s} \Vert \bQ)\leq (|\cZ|-1) &\log \left(1 + \frac{2}{\sqrt{|\cZ| -1}} \mu n \right) \nonumber \\ &+(|\cZ|-1) \log \left( \frac{ e^\frac{3}{2} }{ 2 } \right) -\frac{1}{2} \log \mleft(2 \pi (|\cZ|-1)\mright).
	\label{eq:prop_mGDP_2_main}
\end{align}
\item For any algorithm $\bA$ there exists a distribution $\bQ$ on $\cW$ such that
\begin{align}
	\relent (\bP_W^{S=s} \Vert \bQ)\leq (|\cZ|-1) \log &\left(1 + \frac{n}{|\cZ|-1} \right) \nonumber \\ 
    &+(|\cZ|-1) -\frac{1}{2} \log \mleft(2 \pi (|\cZ|-1)\mright).
	\label{eq:prop_eDP_2_main_2}
\end{align}
\end{enumerate}
\end{proposition}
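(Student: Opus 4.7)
The plan is to follow the same mixture-prior strategy that yielded Proposition~\ref{prop:mi_mGDP}, but to exploit the fact that types are empirical probability distributions and therefore lie on the $(|\cZ|-1)$-dimensional simplex rather than on the full unit hypercube $[0,1]^{|\cZ|-1}$. The simplex has volume $1/(|\cZ|-1)!$ inside the hypercube, so any hypercube cover of resolution $1/t$ that only keeps the hypercubes intersecting the simplex will have roughly $t^{|\cZ|-1}/(|\cZ|-1)!$ elements instead of $t^{|\cZ|-1}$. Applying Stirling's lower bound $\log((|\cZ|-1)!) \geq (|\cZ|-1)\log\bigl(\tfrac{|\cZ|-1}{e}\bigr) + \tfrac{1}{2}\log(2\pi(|\cZ|-1))$ will automatically produce the extra $-\tfrac{1}{2}\log(2\pi(|\cZ|-1))$ savings that appear in every item of the statement, and will also contribute the $\log(e^2/2)$ or $\log(e^{3/2}/2)$ constants visible in~\eqref{eq:prop_eDP_2_main} and~\eqref{eq:prop_mGDP_2_main}.

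More concretely, for a given resolution $t$, I will take $\cS'\subseteq\cS$ as the collection of ``central'' types of the hypercubes that meet the simplex, assign equal mixture weights $\omega_{s'}=1/|\cS'|$, and invoke Lemma~\ref{lemma:mixture_kl_ub} in the form~\eqref{eq:lemma_ub_2}:
\[
\relent(\bP_W^{S=s}\Vert\bQ)\leq \min_{s'\in\cS'}\relent(\bP_W^{S=s}\Vert\bP_W^{T_{s'}}) + \log|\cS'|.
\]
Within each hypercube of side $1/t$, two types differ in total variation by at most something of order $(|\cZ|-1)/(2t)$ (or $\sqrt{|\cZ|-1}/(2t)$ when we use an $\ell_2$ argument for the GDP case), so by Definition~\ref{def:dist} there is an $s'\in\cS'$ with $d(s,s')\leq k$ for an explicit $k$ depending on $n$, $t$, and $|\cZ|$. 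Then the KL-stability property collected in Proposition~\ref{claim:diff_priv_distance} yields $\relent(\bP_W^{S=s}\Vert\bP_W^{T_{s'}})\leq k\varepsilon\tanh(k\varepsilon/2)\leq\min\{k^2\varepsilon^2/2,k\varepsilon\}$ in the DP case and $\leq\tfrac{1}{2}k^2\mu^2$ in the GDP case. For part~3 (any algorithm), the stability term is dropped and the bound is just $\log|\cS'|$ with $t=n$, giving~\eqref{eq:prop_eDP_2_main_2} after applying Stirling once more.

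The main obstacle will be choosing $t$ to balance the two competing terms in each privacy regime and then simplifying the resulting bound to the stated closed form. I anticipate four sub-cases: (i) For $\varepsilon\leq 1/n$ and for $\mu\leq 1/(n\sqrt{|\cZ|-1})$, the trivial cover $t=1$ is already optimal because the linear/quadratic stability term $n\varepsilon$ or $\tfrac{1}{2}n^2\mu^2$ is dominated by a constant; plugging in produces~\eqref{eq:prop_eDP_2_main_0} and~\eqref{eq:prop_mGDP_2_main_0} directly. (ii) For $1/n<\varepsilon\leq 1$, I will choose $t\approx\lceil\varepsilon n\rceil$ and use $k^2\varepsilon^2/2$ together with Stirling; the $\log(1+\tfrac{2\varepsilon n}{|\cZ|-1})$ form in~\eqref{eq:prop_eDP_2_main} suggests that a careful handling of the ``+\,1'' in the cover count (to keep the argument of the logarithm positive for small $t$) and of the boundary hypercubes of the simplex is what produces the factor $2/(|\cZ|-1)$ inside the log. (iii) The $\mu$-GDP case is analogous with $t\approx\lceil\mu n\sqrt{|\cZ|-1}\rceil$ and produces~\eqref{eq:prop_mGDP_2_main}.

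The hardest step will be (ii) and (iii): obtaining the explicit constants $e^2/2$ and $e^{3/2}/2$ and the precise arguments of the logarithms, because they depend on the exact way in which Stirling is applied, on how the boundary of the simplex is treated by the grid, and on how the stability bound $k\varepsilon\tanh(k\varepsilon/2)$ is relaxed. I expect this to be an exercise in bookkeeping rather than a genuinely new idea: by tracking the leading-order volume count $t^{|\cZ|-1}/(|\cZ|-1)!$ together with the Stirling correction and by picking $t$ so that the stability term equals $(|\cZ|-1)$ up to a constant, each of the four bounds in the statement should fall out after straightforward algebra, exactly as in the (deferred) proof of Proposition~\ref{prop:mi_mGDP}.
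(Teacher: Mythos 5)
Your plan follows the paper's proof route: cover only the count vectors lying under the $(|\cZ|-2)$-simplex, take the hypercube centers as an equiprobable mixture prior in \Cref{lemma:mixture_kl_ub}, control the relative entropy to the nearest center by group privacy (\Cref{claim:diff_priv_distance}), and let Stirling applied to $-\log(|\cZ|-1)!$ generate the $-\tfrac{1}{2}\log(2\pi(|\cZ|-1))$ term; the high-privacy bounds come from forcing $t=1$ and the algorithm-free bound from one hypercube per type. However, there is a concrete gap in your regime (ii). With a cover of resolution $t$, the guaranteed distance to the nearest center is $k \lesssim \tfrac{n}{t}(|\cZ|-1)$, so your choice $t\approx\varepsilon n$ gives $k\varepsilon\approx|\cZ|-1\geq 2$, and the quadratic relaxation $\tfrac{1}{2}k^2\varepsilon^2$ you propose evaluates to roughly $\tfrac{1}{2}(|\cZ|-1)^2$: an additive term quadratic in $|\cZ|$, which cannot be absorbed into the constant $(|\cZ|-1)\log(e^2/2)$ of \eqref{eq:prop_eDP_2_main}. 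In this regime you must use the linear branch $k\varepsilon$ of \eqref{eq:dp_distance} (as the paper does; see the remark closing the proof of \Cref{prop:mi_mGDP}). Then, with $t=\alpha\varepsilon n$, the stability term is $(|\cZ|-1)/\alpha$, Stirling collapses the sum into $(|\cZ|-1)\log\bigl(\tfrac{e^{1/\alpha+1}}{|\cZ|-1}\bigl(\alpha\varepsilon n+\tfrac{|\cZ|-2}{2}\bigr)\bigr)-\tfrac{1}{2}\log(2\pi(|\cZ|-1))$, and bounding $\tfrac{|\cZ|-2}{|\cZ|-1}\leq 1$ and taking $\alpha=1$ produces exactly the factor $\tfrac{2}{|\cZ|-1}$ inside the logarithm and the constant $e^2/2$; these come from the count bound plus Stirling, not from boundary bookkeeping. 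Keeping the quadratic branch instead forces $t\propto\sqrt{|\cZ|-1}\,\varepsilon n$ and only recovers the weaker GDP-shaped bound with $\varepsilon$ in place of $\mu$.

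Two smaller points. First, the volume heuristic $t^{|\cZ|-1}/(|\cZ|-1)!$ should be replaced by the exact count of grid hypercubes covering the region under the simplex, $S_{|\cZ|-1}(t)=\binom{t+|\cZ|-2}{|\cZ|-1}\leq\tfrac{1}{(|\cZ|-1)!}\bigl(t+\tfrac{|\cZ|-2}{2}\bigr)^{|\cZ|-1}$ (\Cref{lemma:number_of_hypercubes_under_simplex}); the $\tfrac{|\cZ|-2}{2}$ shift is what makes the final constants rigorous. Second, in the GDP case there is no $\ell_2$ shortcut: group privacy is stated in terms of the number of differing instances, i.e.\ half the $\ell_1$ distance between count vectors, so the within-hypercube distance bound is the same $\tfrac{n}{t}(|\cZ|-1)$ as in the DP case, and $\sqrt{|\cZ|-1}$ enters only through the choice $t=\alpha\sqrt{|\cZ|-1}\,\mu n$, which at $\alpha=1$ yields the constant $e^{3/2}/2$ in \eqref{eq:prop_mGDP_2_main}.
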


Note that, for large $\varepsilon n$ or $\mu n$, the gap between~\eqref{eq:prop_eDP_1_main} and~\eqref{eq:prop_eDP_2_main}, and the gap between~\eqref{eq:prop_mGDP_1_main} and~\eqref{eq:prop_mGDP_2_main} grows as
\begin{equation}
	(|\cZ|-1) \log \left( \frac{ |\cZ|-1 }{ e }\right) +\frac{1}{2} \log 2 \pi (|\cZ|-1).
	\label{eq:diff_prop2_prop3}
\end{equation}
This highlights the benefit of \Cref{prop:mi_mGDP_2} for alphabets of large cardinality.

We also note that~\eqref{eq:prop_eDP_2_main_2} is a valid upper bound on the relative entropy $\relent(\bP_W^{S=s} \Vert \bQ)$ for algorithms that are not stable. Similarly to \Cref{prop:simple}, the covering mixture used here takes every possible training set. However, this bound improves upon~\Cref{prop:simple} by using a more exact value for the total number of training sets. Consequently, the gap between~\Cref{prop:simple} and~\eqref{eq:prop_eDP_2_main_2} also scales as~\eqref{eq:diff_prop2_prop3} for large $n$.

To get an idea of how these bounds compare with each other, let us recover~\Cref{example:icu} and let us assume that the cardinality of the data, that is $|\cZ| = |\cX \times \cY|$, is fixed to $100$, that the probability required by the physicians is $\beta = 0.05$, and that they achieved an empirical risk of $0.05$. Then, consider two scenarios:
\begin{enumerate}
    \item In the first scenario, the privacy parameter $\varepsilon$ is fixed to $0.1$ and the number of samples $n$ varies from $1$ to $10,000$.
    \item In the second scenario, the number of samples $n$ is fixed to $5,000$ and the privacy parameter $\varepsilon$ varies from $10^{-4}$ to $1$.
\end{enumerate}

\Cref{fig:comparison_propositions_icu_kl} showcases how the proposed bounds achieve non-vacuous generalization error and depicts the trade-offs between privacy, the number of samples, and generalization. More specifically, we see how the more private (smaller $\varepsilon$) and the more samples $n$ we have, the better we can guarantee generalization.

\begin{figure}
    \centering
    \includegraphics[width=0.45\textwidth]{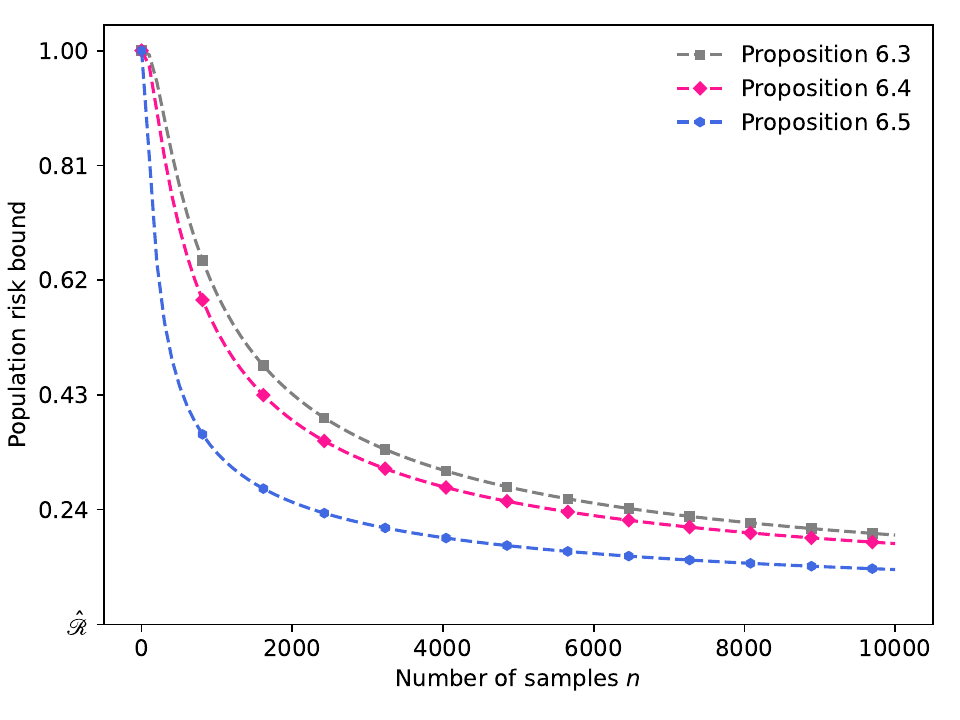}
    \includegraphics[width=0.45\textwidth]{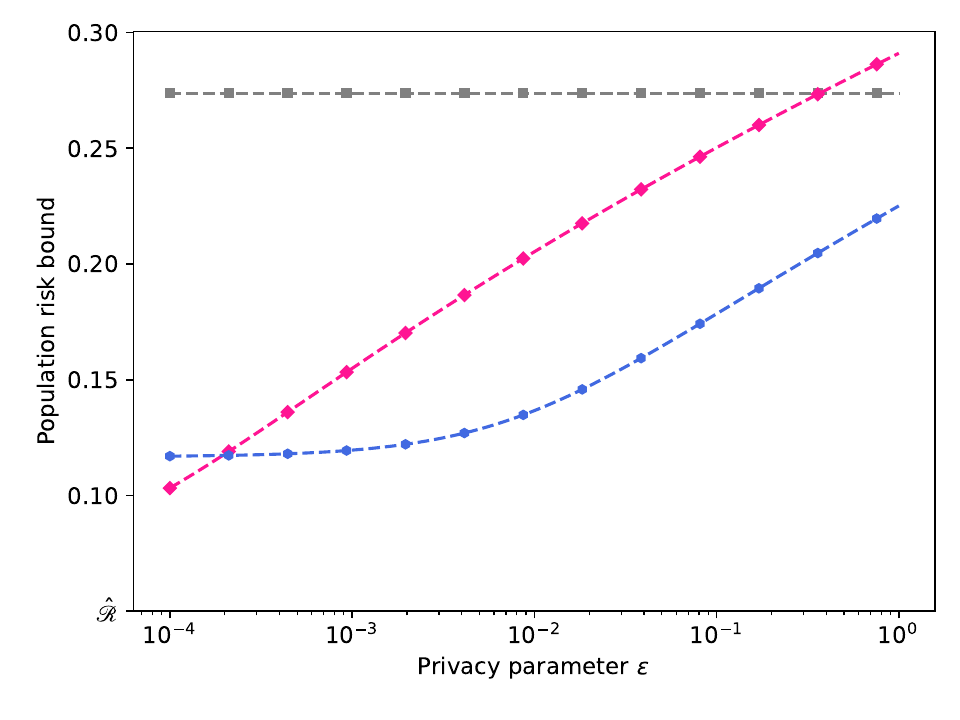}
    \caption{Comparison of the generalization guarantees combining \Cref{prop:simple,prop:mi_mGDP,prop:mi_mGDP_2} with the Seeger--Langford bound~\citep{seeger2002pac,langford2001bounds} from~\Cref{th:seeger_langford_pac_bayes} under the setting of~\Cref{example:icu}. The cardinality of the data is fixed to $|\cZ| = 100$, probability parameter is fixed to $\beta=0.05$, and the empirical risk is $\bE^{S}[\emprisk(W,S)] = 0.05$. In the left figure, the privacy parameter is fixed to $\varepsilon=0.1$ and the number of samples $n$ varies from $1$ to $10,000$. In the right figure, the number of samples is fixed to $n = 5,000$ and the privacy parameter $\varepsilon$ varies from $10^{-4}$ to $1$. In both figures, the $y$-axis starts at the value of the empirical risk.}
    \label{fig:comparison_propositions_icu_kl}
\end{figure}

So far, we have found upper bounds on the relative entropy $\relent(\bP_W^{S=s} \Vert \bQ)$ that hold for every $s \in \cS$, that is, they are \emph{uniform bounds}. However, we know from information theory that there is a collection, called the \emph{typical set} $\cT$, of types that are \emph{typical}. This means that the probability of randomly sampling a training set with a type in the typical set is close to one~\citep[Chapters 3 and 10]{Cover2006}. Therefore, noting that $\minf(W;S) \leq \bE \big[ \relent(\bP_W^{S} \Vert \bQ) \big]$ (\Cref{prop:properties_minf}), we may only cover the typical set $\cT$, and rely on the low probability of observing a training set from outside $\cT$ to bound the mutual information $\minf(W;S)$. More precisely, by the law of total expectation
\begin{equation}
    \label{eq:prop_mi_mGDP_3_minf_law_total_expectation}
    \minf(W;S) \leq \bE^{\{S \in \cT\}} \big[ \relent(\bP_W^S \Vert \bQ) \big] \cdot \bP \big[ S \in \cT \big] + \bE^{\{  S \not \in \cT \}} \big[ \relent(\bP_W^S \Vert \bQ) \big] \cdot \bP \big[ S \not \in \cT \big],
\end{equation}
where the first term is small due to the control of the relative entropy $\relent(\bP_W^S \Vert \bQ)$ using the marginal $\bQ$ tailored for the typical training sets, and the second term is small since the probability $\bP \big[ S \not \in \cT \big]$ vanishes.
In this way, we can provide sharper generalization guarantees in expectation. This procedure leads to the following proposition, whose proof is in~\Cref{app:proof_prop_mGDP_3}.

\begin{proposition}
\label{prop:mi_mGDP_3}

Consider a discrete instance space $|\cZ| < \infty$ and an algorithm $\bA$ such that $W = \bA(S)$.
\begin{enumerate}
\item If $\bA$ is $\varepsilon$-DP and $\varepsilon \leq 2$, then 
\begin{equation}
    \minf(W;S) \leq |\cZ| \log \big(1+ e \varepsilon \sqrt{n \log n} \big) + 2|\cZ|\frac{\varepsilon}{n},
 \label{eq:prop_eDP_3_main_1}
\end{equation}
while if $\varepsilon>2$, 
\begin{equation}
     \minf(W;S) \leq |\cZ| \log \big( 1+ 2\sqrt{n\log n} \big) + 2|\cZ|\frac{\varepsilon}{n}.
 \label{eq:prop_eDP_3_main_2}
\end{equation}
\item If $\bA$ is $\mu$-GDP and $\mu \leq \nicefrac{2}{\sqrt{|\cZ|}}$, then 
\begin{equation}
    \minf(W;S) \leq \frac{|\cZ|}{2} \log \left(1+ e |\cZ| \mu^2 n \log n\right) + |\cZ| \mu^2,
 \label{eq:prop_mGDP_3_main_1}
\end{equation}
while if $\mu>\nicefrac{2}{\sqrt{|\cZ|}}$,
\begin{equation}
    \minf(W;S) \leq |\cZ| \log \big( 1+ 2\sqrt{n\log n} \big) + |\cZ| \mu^2.
 \label{eq:prop_mGDP_3_main_2}
\end{equation}
\end{enumerate}
\end{proposition}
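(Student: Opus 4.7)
My proof plan starts from the \emph{golden formula} (\Cref{prop:properties_minf}), which gives $\minf(W;S) \le \bE[\relent(\bP_W^S \Vert \bQ)]$ for every data-independent prior $\bQ$. The idea is to choose $\bQ$ adaptively to the geometry of the \emph{typical} training sets, and then to split the expectation exactly as in~\eqref{eq:prop_mi_mGDP_3_minf_law_total_expectation}. Concretely, I would define the typical set of training sets as
\begin{equation*}
    \cT \coloneqq \mleft\{ s \in \cZ^n : \lVert T_s - \bP_Z \rVert_\infty \le \tau \mright\}
\end{equation*}
for some $\tau = \tau(n) \to 0$, and control $\bP[S\not\in\cT]$ by Hoeffding's inequality together with the union bound over the $|\cZ|$ coordinates: $\bP[S \not\in \cT] \le 2|\cZ| e^{-2n\tau^2}$. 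Choosing $\tau \asymp \sqrt{\log n/n}$ then forces this probability to decay like $|\cZ|/n^{c}$ for a suitably large constant $c$, which is exactly what is needed for the atypical contribution to be absorbed into the small additive terms $2|\cZ|\varepsilon/n$ and $|\cZ|\mu^2$ appearing in the statement.

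For the typical contribution I would re-run the mixture covering argument used in \Cref{prop:mi_mGDP,prop:mi_mGDP_2}, but restricted to the much smaller region of the simplex occupied by typical types. That region sits inside a $(|\cZ|-1)$-dimensional hyperrectangle of side length $2\tau$, so partitioning each of the $|\cZ|-1$ coordinates into $t$ equal parts yields a cover of only $t^{|\cZ|-1}$ hypercubes (versus the $t^{|\cZ|-1}$ cubes of side $1/t$ needed before). The central type of each cube defines a component $\bP_W^{T_{s'}}$ of the equiprobable mixture $\bQ$, and for any $s\in\cT$ the nearest centre differs from $s$ by at most $k \le (|\cZ|-1) n\tau/t$ instances. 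Plugging this $k$ into the group privacy bound of \Cref{claim:diff_priv_distance} and invoking \Cref{lemma:mixture_kl_ub}~\eqref{eq:lemma_ub_2} gives, for $s\in\cT$,
\begin{equation*}
    \relent(\bP_W^{S=s} \Vert \bQ) \le \min\mleft\{ k\varepsilon\tanh\!\mleft(\tfrac{k\varepsilon}{2}\mright),\ \tfrac{1}{2}k^2\mu^2 \mright\} + (|\cZ|-1)\log t .
\end{equation*}
Now $t$ is optimised as in the proofs of \Cref{prop:mi_mGDP,prop:mi_mGDP_2}: for $\varepsilon$-DP one picks $t$ to balance $\varepsilon n\tau/t$ against $\log t$, which yields the $\log(1+e\varepsilon\sqrt{n\log n})$ term after substituting $\tau \asymp \sqrt{\log n/n}$; for $\mu$-GDP one uses the quadratic stability bound $\tfrac12 k^2\mu^2$, which naturally produces the $\tfrac{|\cZ|}{2}\log(1+e|\cZ|\mu^2 n\log n)$ form.

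For the atypical contribution I would simply bound $\bE^{\{S\not\in\cT\}}[\relent(\bP_W^S \Vert \bQ)]$ by the universal estimate~\eqref{eq:prop_eDP_2_main_2} from \Cref{prop:mi_mGDP_2}, which is of order $(|\cZ|-1)\log(1+n/(|\cZ|-1))$. Multiplying by $\bP[S\not\in\cT] = \cO(|\cZ|/n^c)$ makes this contribution summable into the small $\cO(\varepsilon/n)$ (resp.\ $\cO(\mu^2)$) correction appearing in the statement, provided $c$ is chosen large enough; this is where the precise selection of $\tau$ matters. The two case distinctions $\varepsilon \le 2$ versus $\varepsilon > 2$ (and analogously for $\mu$) arise because once privacy is weak enough that $\varepsilon\sqrt{n\log n} \gtrsim \sqrt{n}$, the covering-based bound becomes worse than the trivial bound obtained by letting the cover run over the whole typical hyperrectangle with one sample's resolution; in that regime one must fall back to the $\log(1+2\sqrt{n\log n})$ expression, while the additive $2|\cZ|\varepsilon/n$ (resp.\ $|\cZ|\mu^2$) absorbs the atypical term.

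The main obstacle is the joint calibration of three parameters: the typicality radius $\tau$, the grid resolution $t$, and the resulting KL radius $k\varepsilon$ (or $k\mu$). Each appears in both terms of~\eqref{eq:prop_mi_mGDP_3_minf_law_total_expectation}, and pushing one of them to improve the typical contribution inflates either the cover cost $(|\cZ|-1)\log t$ or the atypical probability $2|\cZ|e^{-2n\tau^2}$. Additionally, one must verify that the small-$\varepsilon$ (resp.\ small-$\mu$) regime in which the stability bound linearises is exactly the regime in which the $\sqrt{n\log n}$ factor matches the stated bounds. I expect the bookkeeping around these constants, together with the explicit derivation of the residual terms $2|\cZ|\varepsilon/n$ and $|\cZ|\mu^2$ from the atypical expectation, to be the most delicate part of the argument.
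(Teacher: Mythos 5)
Your overall architecture matches the paper's proof: golden formula plus the total-expectation split~\eqref{eq:prop_mi_mGDP_3_minf_law_total_expectation}, a strong-typical set of radius $\tau \asymp \sqrt{\log n/n}$ controlled by Hoeffding and the union bound, a mixture prior built from a grid covering only the typical region, group privacy via \Cref{claim:diff_priv_distance} inside each cell, optimization of the grid resolution $t$, and a fallback to one type per cell when $\varepsilon>2$ (resp.\ $\mu > \nicefrac{2}{\sqrt{|\cZ|}}$). However, there is a genuine gap in how you handle the atypical term. You propose to bound $\bE^{\{S\notin\cT\}}[\relent(\bP_W^S\Vert\bQ)]$ by the universal estimate~\eqref{eq:prop_eDP_2_main_2} of \Cref{prop:mi_mGDP_2}, but that estimate holds for a \emph{different} prior (the mixture covering the whole simplex), whereas the golden formula fixes a single $\bQ$ for both terms of the decomposition — here the mixture supported only on the typical-set cover. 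For an atypical $s$, the divergence to that $\bQ$ is not controlled by~\eqref{eq:prop_eDP_2_main_2}. Moreover, even if one could patch this, your residual would be of order $|\cZ|\log n \cdot \bP[S\notin\cT]$, which does not scale with $\varepsilon$ (resp.\ $\mu^2$) and therefore cannot reproduce the stated correction terms $2|\cZ|\varepsilon/n$ and $|\cZ|\mu^2$ (e.g.\ it does not vanish as $\varepsilon\to 0$ at fixed $n$).

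The paper closes this step differently: for $s\notin\cT$ it applies the log-sum-exp form~\eqref{eq:lemma_ub_1} of \Cref{lemma:mixture_kl_ub} followed by Jensen's inequality, which bounds $\relent(\bP_W^{S=s}\Vert\bQ)$ by $\max_{s'}\relent(\bP_W^{S=s}\Vert\bP_W^{S=s'})$ over the mixture centres — with no $-\log\omega_{s'}$ penalty — and then uses the worst-case group-privacy bound at distance at most $n$, namely $\varepsilon n$ for $\varepsilon$-DP and $\tfrac12\mu^2 n^2$ for $\mu$-GDP. Multiplying by $\bP[S\notin\cT]\le 2|\cZ|/n^2$ yields exactly $2|\cZ|\varepsilon/n$ and $|\cZ|\mu^2$. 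You would need to replace your atypical-term argument by this (or an equivalent stability-based) bound. A smaller discrepancy: you cover a $(|\cZ|-1)$-dimensional region, whereas the paper deliberately covers in $|\cZ|$ dimensions for a cleaner final expression, which is where the factors $|\cZ|$ (rather than $|\cZ|-1$) in the stated bounds come from; your variant would be marginally tighter but would not match the stated constants without adjustment.
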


If we compare this asymptotic behavior with the one from~\Cref{prop:mi_mGDP,prop:mi_mGDP_2} we see now how the privacy coefficients $\varepsilon$ and $\mu$ multiply $\sqrt{n \log n}$ instead of $n$.
We note that $\sqrt{n \log n} < n$ for $n \geq 1$, which highlights the benefit of discriminating between typical and non-typical training sets. However, in practice, as we will see next in~\Cref{subsec:other_results_dp_gen}, \Cref{prop:mi_mGDP_2} is often tighter than this result in the common operating regions prior to the asymptotic regime.

\begin{remark}
We note that the common Laplace or Gaussian mechanisms that assure $\varepsilon$-DP and $\mu$-GDP add random noise of variance proportional to the inverse of the square of the parameter $\varepsilon$ or $\mu$~\cite[Section~3.3]{dwork2014algorithmic}, \cite[Theorem~2.7]{dong2021gaussian}.
If we compare the upper bounds obtained for $\varepsilon$-DP and $\mu$-GDP algorithms in \Cref{prop:mi_mGDP,prop:mi_mGDP_2,prop:mi_mGDP_3}, we observe that their asymptotic behaviors appear to be almost identical by letting $\mu = \varepsilon /\sqrt{|\cZ|-1}$.\footnote{In the case of \Cref{prop:mi_mGDP_3}, we would need $\mu = \nicefrac{\varepsilon}{\sqrt{|\cZ|}}$ to be more precise.}
This suggests that, even if both measures of privacy are not equivalent, $\mu$-GDP algorithms need to be ``noisier'' than $\varepsilon$-DP algorithms, as $|\cZ|$ grows, in order to obtain similar generalization performance according to our bounds.

The main reason for this behavior is that our results depend on the stability of the algorithm, measured by the relative entropy between the hypotheses obtained by two training sets at a distance $k$ (see \Cref{claim:diff_priv_distance}).
If we let $\varepsilon = \mu = \gamma$, then the tighter upper bound of the relative entropy for $\gamma$-DP algorithms is always smaller than the upper bound for $\gamma$-GDP algorithms.
Moreover, the looser upper bound of the relative entropy for $\gamma$-DP algorithms, that is, $k\gamma$, is smaller than the upper bound for $\gamma$-GDP algorithms once $\gamma > \nicefrac{2}{k}$.
Therefore, the hypothesis distribution of $\mu$-GDP algorithms changes more rapidly with the distance between two training sets, thus making these kinds of algorithms less stable (or smooth).
If we recall that the distance (in expectation) between two training sets grows with the size of the sample space $\cZ$, we can intuit the relationship between $\mu$ and $\varepsilon$ in our results for generalization.
\end{remark}

\subsubsection{Generalization Error of Other Stable Algorithms}

Although our main results deal with private algorithms, they can be adapted to other families of algorithms with stability or smoothness properties. 
That is, if the relative entropy between the distributions of the output hypotheses, produced by an algorithm fed with two training sets $s$ and $s'$ at a distance $k$, is bounded from above by some function $\phi(k)$, then \Cref{prop:mi_mGDP,prop:mi_mGDP_2,prop:mi_mGDP_3} can be replicated with this quantity in mind.
Note that when $k=1$, this property is known as $\sqrt{\phi(1)/2}$-KL-stability \cite[Definition 4.2]{bassily2016algorithmic}.

For example, the privacy framework of \emph{Rényi differential privacy} (RDP) states that an algorithm $\bA$ is $(\alpha,\varepsilon)$-RDP if $\renyidiv{\alpha}(\bP_W^{S=s} \Vert \bP_W^{S=s'} ) \leq \varepsilon$ for any two neighbouring datasets $s$ and $s'$~\cite{mironov2017renyi}.
Given the monotonicity of the Rényi divergence with respect to $\alpha$, we have that $\relent(\bP_W^{S=s} \Vert \bP_W^{S=s'} ) \leq \renyidiv{\alpha}(\bP_W^{S=s} \Vert \bP_W^{S=s'} )$ for every $\alpha \geq 1$. The case $0< \alpha <1$ is more challenging but still possible to bound.
Furthermore, this framework can also be extended for group privacy~\cite[Proposition~2]{mironov2017renyi}, which enables the use of our results.

\subsubsection{Particularization to Specific Private Algorithms}

The generalization error of specific learning algorithms that have been proved to be $\varepsilon$-DP or $\mu$-GDP can be characterized.
For instance, the Noisy-SGD and Noisy-Adam~\cite[Algorithms~1 and~2]{bu2019deep}, private, regularized variants of the common stochastic gradient descent (SGD) and Adam~\cite[Algorithm~1]{kingma2014adam} algorithms, are approximately $\frac{|v|}{n} \sqrt{T (e^{\nicefrac{1}{\sigma^2}}-1)}$-GDP, where $|v|$ is the batch size, $T$ is the number of iterations made by the algorithm, and $\sigma^2$ is the noise variance~\cite{bu2019deep}.
This example also highlights the benefit of private algorithms for generalization, given that the privacy parameter $\mu = \frac{|v|}{n} \sqrt{T (e^{\nicefrac{1}{\sigma^2}}-1)}$, decreases with the inverse of the number of samples $n$.

Similarly, there are common algorithms like Markov chain Monte Carlo (MCMC) that can be proved to be $(\alpha,\varepsilon)$-RDP~\cite{heikkila2019differentially} and others like classical logistic regression that can be adapted to be differentially private~\cite{yu2014differentially} with tunable privacy parameters.
Employing the aforementioned characterization of the generalization capabilities of Rényi differentially private algorithms, one would be able to obtain bounds on the generalization error of MCMC.

\subsection{Further Bibliography Connecting DP and Generalization}
\label{subsec:other_results_dp_gen}

The characterization of the generalization error for DP algorithms has predominantly been performed under the single-draw PAC-Bayesian framework. Currently, the expected generalization error bounds for DP algorithms are derived either from (i) results similar to those in~\Cref{ch:expected_generalization_error} using bounds on other measures of dependence between the hypothesis $W$ and the training set $S$, or from (ii) showing the relationship between privacy and stability (see~\Cref{subsec:privacy_as_stability} and~\citep{JMLR:v17:15-313}) and the bounds for stable algorithms (see \Cref{subsec:uniform_stability} and the references therein).

As we will see, these results are often of the same nature as the direct results we obtained in the beginning of~\Cref{sec:differential_privacy_generalization}. That is, the generalization is controlled by the privacy parameter but it does not decrease with the number of samples. Therefore, to guarantee the algorithm's generalization, they require that the privacy parameter decreases with the number of samples.

\subsubsection{Single-Draw PAC-Bayesian Bounds}

For pure DP algorithms, the first bounds on the generalization error are due to~\citet{dwork2015preserving}. More precisely, consider a loss with a range bounded in $[0,1]$, \citet[Theorem 9]{dwork2015preserving} stated that, for every $\tau > 0$, $\beta \in (0,1)$, and every $\varepsilon$-DP algorithm, as long as the number of samples is $n \geq \frac{12}{\tau^2} \log \nicefrac{4}{\beta}$ and $\varepsilon \leq \nicefrac{\tau}{2}$, then $\bP[\gen(W,S) > \tau] \leq \beta$. If we fix $\tau = 2\varepsilon$, we may get a statement with a terminology more similar to the one we have been using throughout the manuscript. That is, for every $\varepsilon$-DP, with probability no smaller than $1 - 4 \exp(- \nicefrac{n \varepsilon^2}{3})$
\begin{equation*}
    \gen(W,S) \leq 2 \varepsilon.
\end{equation*}

In that article, \citet[Theorem 11]{dwork2015preserving} also propose a generalization bound for pure DP algorithms, although this depends on the probability of not generalizing with the worst possible training set, and therefore we do not write it in this survey. Later, \citet[Corollary 7]{dwork2015generalization} improved upon that result and showed that for every $\varepsilon$-DP algorithm, with probability no smaller than $1 - 3 \exp(-\varepsilon^2 n)$
\begin{equation}
    \label{eq:dwork_pure_dp_gen}
    \gen(W,S) \leq \varepsilon L'
\end{equation}
for losses $\ell(w,\cdot)$ that are $L'$-Lipschitz for all $w \in \cW$. Actually, their result is given for the \emph{sensitivity} of the loss, but in the context of generalization, this is equivalent to $\nicefrac{1}{n}$ times the Lipschitz constant with respect to the instances. For bounded losses in $[0,1]$, then $L' = 1$. Therefore, they improved the result in~\citep{dwork2015preserving} by increasing the probability under which the generalization error is controlled by the parameter $\varepsilon$ and by reducing the constant in front of the privacy parameter.

Another result for pure DP algorithms comes from~\citet[Lemma 22]{jung2021new}, that states that for every $\varepsilon$-DP algorithm and every $\beta \in (0,1)$, with probability no smaller than $1 - \beta$
\begin{equation*}
    \gen(W,S) \leq e^\varepsilon - 1 + \sqrt{\frac{2 \log \frac{2}{\beta}}{n}}.
\end{equation*}
This result is generally looser than~\eqref{eq:dwork_pure_dp_gen} as $e^\varepsilon - 1 \geq \varepsilon$, but it works for every $\beta \in (0,1)$ and can be useful in the small data regime. The same is true for our simple result from~\Cref{th:dp_generalizes_bounded_cgf}, which is also looser than~\eqref{eq:dwork_pure_dp_gen} but works for every $\beta \in (0,1)$. However,  \Cref{th:dp_generalizes_bounded_cgf} is generally still stronger than~\citet[Lemma 22]{jung2021new} since (i) the dependence with $\varepsilon$ is linear and not exponential and (ii) the extra dependence with $\log \nicefrac{1}{\beta}$ is of a fast rate instead of a slow one.

For approximate DP, \citet[Theorem 10]{dwork2015preserving} also presented a generalization bound for losses with a range bounded in $[0,1]$. Adapted to the terminology employed throughout the manuscript, their results state that for every $(\varepsilon, \delta)$-DP algorithm such that $\delta \geq \exp \big( - \frac{\log 2}{\varepsilon} - \frac{n \varepsilon}{3}\big)$, with probability no smaller than $1 - 4 \exp(- \nicefrac{n \varepsilon^2}{3})$
\begin{equation*}
    \gen(W,S) \leq 4 \varepsilon.
\end{equation*}
 An alternative bound was given later by~\citet[Theorem 8]{dwork2015generalization}, where they show that for every $(\varepsilon, \delta)$-DP algorithm such that $\delta \geq 8 \exp \big( - \frac{n \varepsilon}{2}\big)$, with probability no smaller than $1 - \frac{16}{\varepsilon^2} \log \big(\frac{2}{\varepsilon}\big) \exp \big( - \frac{n \varepsilon^2}{2}\big)$
\begin{equation*}
    \gen(W,S) \leq 4 \varepsilon.
\end{equation*}
Then, \citet[Theorem 7.2]{bassily2016algorithmic} built on their work and proved that for every $(\varepsilon, \delta)$-DP algorithm such that $\varepsilon \in (0,\nicefrac{1}{3})$ and $\delta \in( 4 \varepsilon \exp (-n\varepsilon^2), \nicefrac{\varepsilon}{4})$, with probability no smaller than $1 - \frac{4 \varepsilon}{3}  \exp (-n\varepsilon^2)$
\begin{equation*}
    \gen(W,S) \leq 18 \varepsilon L'
\end{equation*}
for losses $\ell(w,\cdot)$ that are $L'$-Lipschitz for all $w \in \cW$.

Finally, \citet[Lemmata 7 and 15]{jung2021new} proved similar results as the ones they had for pure DP algorithms. For losses with a range bounded in $(0,1)$, for every $(\varepsilon,\delta)$-DP algorithm and every $\beta \in (0,1)$, with probability no smaller than $1 - \beta$
\begin{equation*}
    \gen(W,S) \leq e^\varepsilon - 1 + \frac{2 \delta}{\beta}.
\end{equation*}
Similarly, for losses $\ell(w,\cdot)$ that are $L'$-Lipschitz for all $w \in \cW$, they show that for every $(\varepsilon, \delta)$-DP algorithm and every $\beta \in (0,1)$, with probability no smaller than $1 - \beta$
\begin{equation*}
    \gen(W,S) \leq L' \mleft( e^\varepsilon - 1 + \frac{4\delta}{\beta} \mright).
\end{equation*}

\subsubsection{Expected Generalization Error Bounds}

\citet{dwork2014algorithmic} introduced the concept of \emph{max-information} $\minf_\infty$ and $\beta$-\emph{approximate max-information} $\minf_\infty^\beta$ in order to find bounds on the generalization error of algorithms. These quantities are defined as
\begin{equation*}
    \minf_\infty^\beta(W;S) \coloneqq \log \sup_{\cA \subseteq \cW \times \cS : \bP_{W,S}[\cA] > \beta}\frac{\bP_{W,S} \big[ \cA ] - \beta}{ (\bP_W \otimes \bP_S)[\cA]}
\end{equation*}
and $\minf_\infty(W;S) = \minf_\infty^0(W;S) = \renyidiv{\infty}(\bP_{W,S} \Vert \bP_W \otimes \bP_S)$ respectively. The relationship between approximate max information and the generalization error is not immediate. However, it is easy to see that $\minf(W;S) \leq \minf_\infty(W;S)$, and therefore any bound on this quantity can be readily employed in the bounds from~\Cref{ch:expected_generalization_error}. In particular, \citet[Theorem 7]{dwork2015generalization} noted that every $\varepsilon$-DP algorithm has a bounded max-information, reaching the same bound we gave at the start~\Cref{sec:differential_privacy_generalization}, namely that $\minf(W;S) \leq \minf_\infty(W;S) \leq n \varepsilon$. Recall that since the expected generalization bounds from~\Cref{ch:expected_generalization_error} depend on the ratio $\nicefrac{\minf(W;S)}{n}$, this kind of results only work if the privacy parameter $\varepsilon$ decreases with the number of samples.

A stronger statement was later proved by~\citet[Proposition 1.4 and Theorem 1.10]{bun2016concentrated}, although they did not show it in the context of bounding the generalization error. They showed that any $\varepsilon$-DP algorithm has a bounded mutual information, that is $\minf(W;S) \leq \frac{1}{2} n \varepsilon^2 $, improving upon the previous statement as long as $\varepsilon \leq 2$. 

Finally, \citet[Lemma 8]{JMLR:v17:15-313} showed that any $\varepsilon$-DP algorithm is uniformly stable with parameter $(e^\varepsilon - 1)$. Moreover, if $\varepsilon \leq 1$, then it is uniformly stable with parameter $2 \varepsilon$. Combing this with the fact that if an algorithm is uniformly stable with parameter $\gamma$, then $\bE \big[ \gen(W,S) \big] \leq \gamma$, we see that the findings from~\citet{JMLR:v17:15-313} also show expected generalization errors that do not decrease with the number of samples.

We may compare these results with the ones we developed in~\Cref{subsec:pure_dp_gdp_discrete}. At first glance, it is clear that our results are asymptotically tighter
since the expected generalization error vanishes as the number of samples grows, as opposed to the previous results, for which the generalization error bound remains constant. Nonetheless, our bounds depend on the cardinality of the input space, which is not the case for the other bounds;
those known bounds may thus provide a better characterization of the generalization error in the small sample regime. 
In order to better clarify the comparison between these results and those presented in this manuscript, we recover~\Cref{example:icu} and observe three different scenarios where three important parameters are varied:
\begin{enumerate}
    \item In the first scenario, the empirical risk $\bE[\emprisk(W,S)]$ is fixed to $0.05$, the cardinality of the data $|\cZ|$ is set to $100$, the privacy parameter $\varepsilon$ is set to $0.6$, and the number of samples $n$ vary from $1$ to $10,000$.
    \item In the second scenario, the cardinality of the data $|\cZ|$ and the empirical risk $\bE[\emprisk(W,S)]$ are still fixed to $100$ and $0.05$, respectively, but now the number of samples $n$ is fixed to $2,500$ and the privacy parameter $\varepsilon$ varies from $0$ to $2$.
    \item In the third scenario, the privacy parameter $\varepsilon$ is fixed to $0.6$, the number of data samples $n$ and the empirical risk $\bE[\emprisk(W,S)]$ are also fixed to $2,500$ and $0.05$, respectively, and the data cardinality $|\cZ|$ now varies from $2$ to $1,000$. 
\end{enumerate}

The results are presented in \Cref{fig:comparison_propositions_icu_minf}.
In the first scenario, the results show that, as mentioned before, the bounds from \citet{dwork2015generalization}, \citet{bun2016concentrated}, and~\citet{JMLR:v17:15-313} outperform the presented bounds in the small sample regime and are outperformed as the number of samples increases.
The results on the second scenario show how, for a moderate cardinality and number of samples, the presented bounds can outperform the bounds from \citet{dwork2015generalization}, \citet{bun2016concentrated}, and~\citet{JMLR:v17:15-313} when the privacy parameter is not very small.
Nonetheless, for very private algorithms ($\varepsilon \ll 1$), their bounds are generally tighter due to their linear (or even quadratic) dependency with $\varepsilon$.
Finally, the results on the third scenario show how, for a moderate number of samples, the results from this work are only applicable in the small cardinality regime; when the cardinality of the data increases, the bounds from \citet{dwork2015generalization}, \citet{bun2016concentrated}, and~\citet{JMLR:v17:15-313} are preferred.

\begin{figure}
    \centering
    \includegraphics[width=0.45\textwidth]{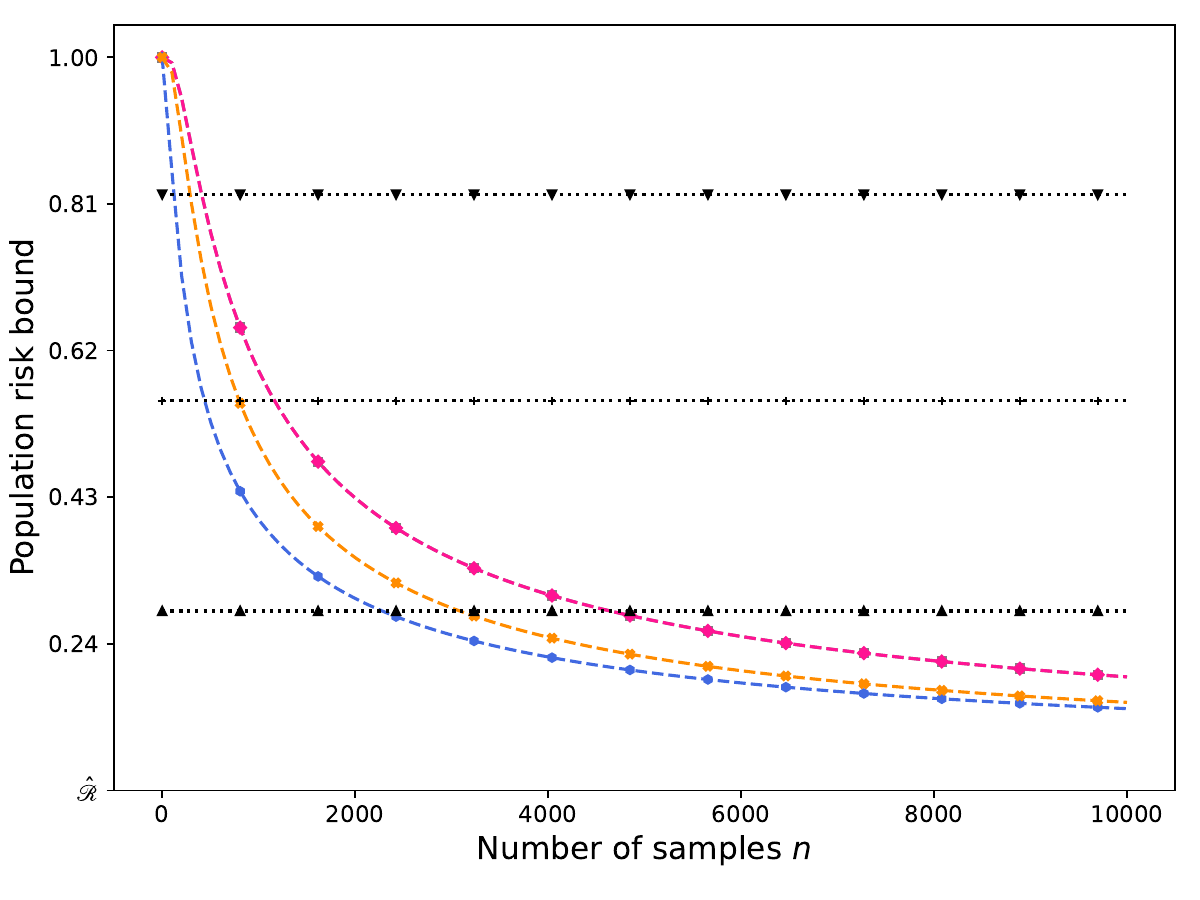}
    \includegraphics[width=0.45\textwidth]{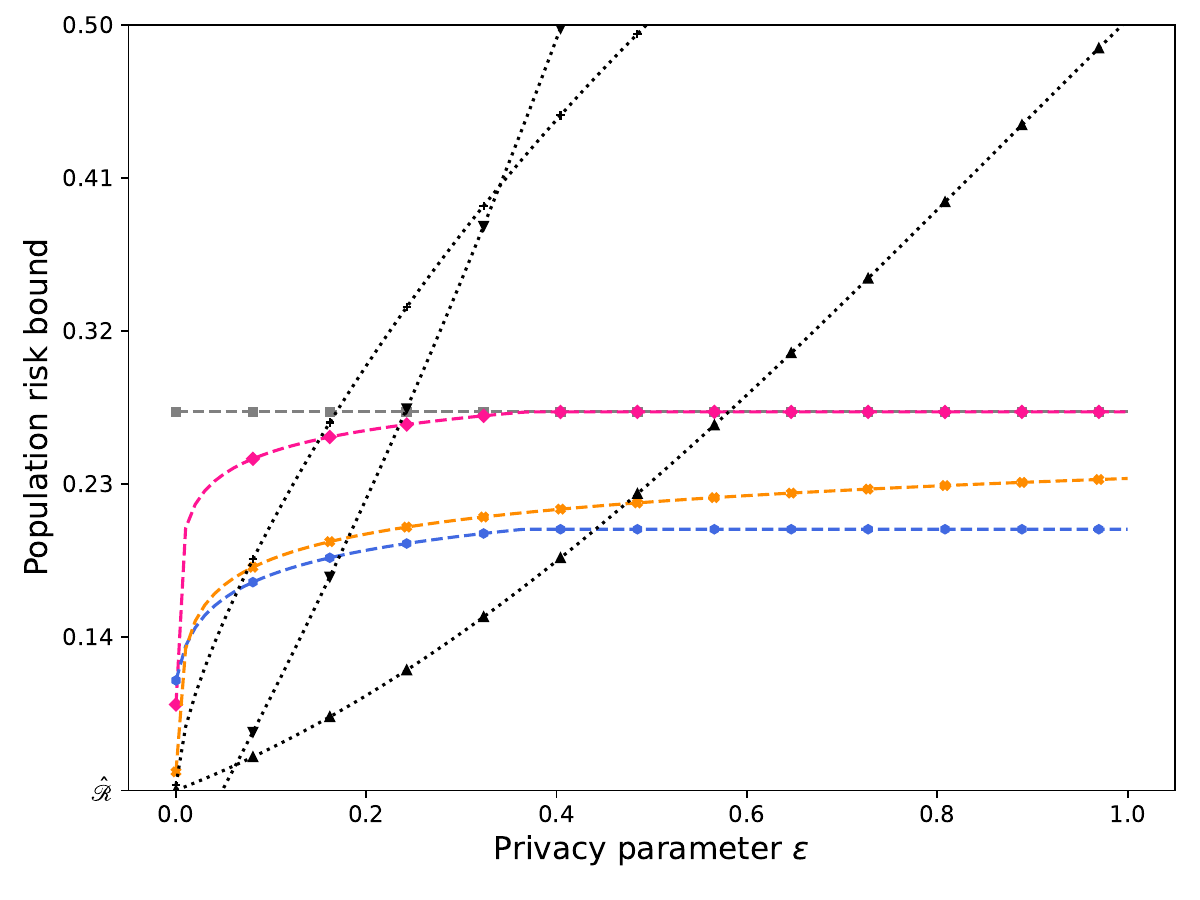}
    \includegraphics[width=0.45\textwidth]{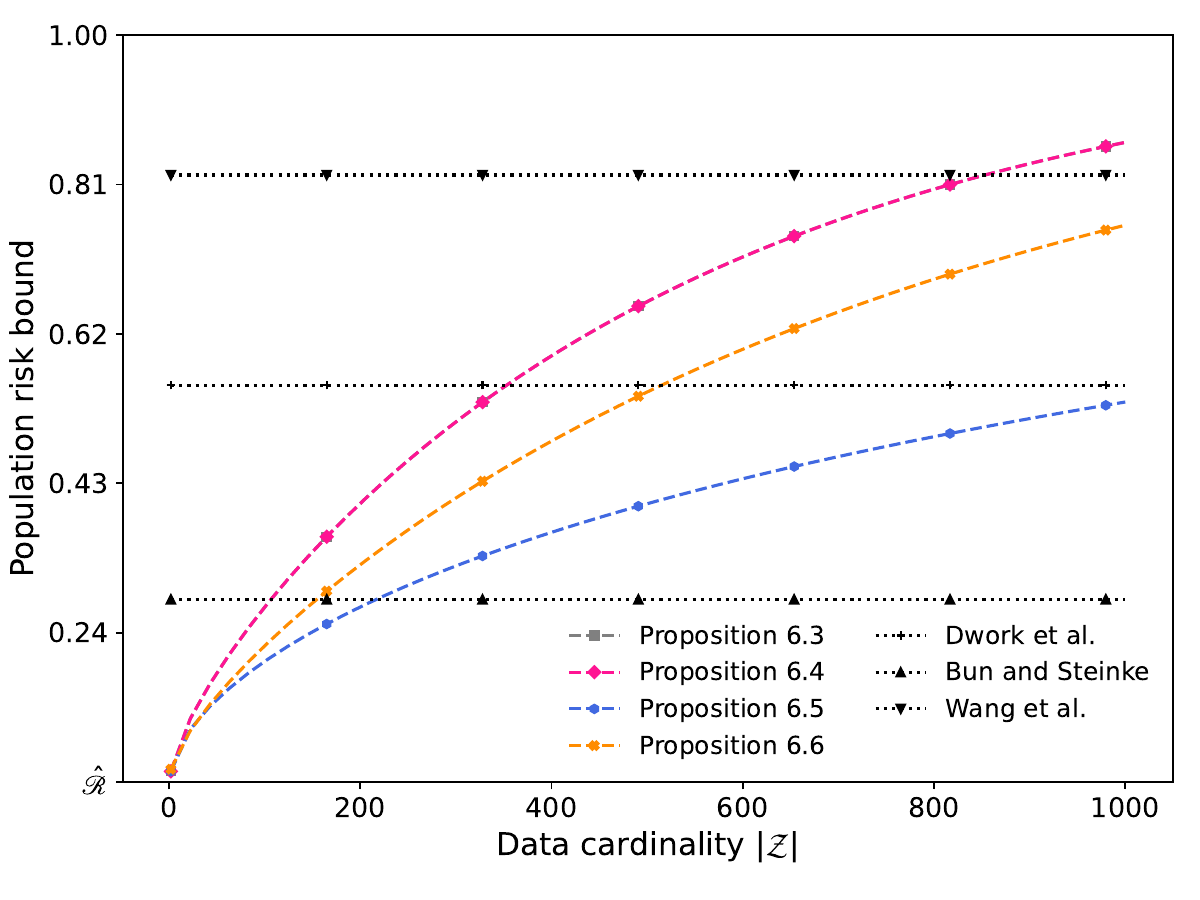}
    \caption{Comparison of the different expected generalization bounds for $\varepsilon$-DP algorithms in the setting of~\Cref{example:icu}. In particular, we obtain the guarantees combining \Cref{prop:simple,prop:mi_mGDP,prop:mi_mGDP_2,prop:mi_mGDP_3} and the results from~\citet{dwork2015generalization} and~\citet{bun2016concentrated} with \Cref{lemma:small_kl_mi},  and the result from~\citet{JMLR:v17:15-313} with the bounds in expectation from uniform stability of~\Cref{subsec:uniform_stability}. In the top-left figure, the cardinality of the data is fixed to $|\cZ| = 100$, the probability parameter is fixed to $\beta=0.05$, and the empirical risk is $\bE^{S}[\emprisk(W,S)] = 0.05$. In the left figure, the privacy parameter is fixed to $\varepsilon=0.1$ and the number of samples $n$ varies from $1$ to $10,000$. In the right figure, the number of samples is fixed to $n = 2,500$ and the privacy parameter $\varepsilon$ varies from $10^{-4}$ to $1$. In both figures, the $y$-axis starts at the value of the empirical risk.}
    \label{fig:comparison_propositions_icu_minf}
\end{figure}
 
\begin{subappendices}

\section{An Upper Bound on the Number of Types}
\label[appendix]{app:proof_claim_num_types}

The exact number of types is given by \cite[Problem~2.1]{csiszar_2011_information}
\begin{equation*}
	|\cT_{\cZ,n}| = \binom{n+|\cZ|-1}{|\cZ|-1}.
\end{equation*}
Then, we may bound $|\cT_{\cZ,n}|$ from above as follows,
\begin{align*}
	|\cT_{\cZ,n}| &= \frac{(n+|\cZ|-1)!}{(|\cZ|-1)!\,n!} \\
	&= \prod\nolimits_{i=1}^{|\cZ|-1} \left(1 + \frac{n}{i} \right)  \\
	&\leq \prod\nolimits_{i=1}^{|\cZ|-1} (1 + n)  \\
	&= (n+1)^{|\cZ|-1},
\end{align*}
with equality if and only if $|\cZ| = 2$.

\section{Upper Bound on the Relative Entropy Between a Distribution and a Mixture of Distributions}
\label{app:proof_lemma_mixture}

This section of the appendix is devoted to proving~\Cref{lemma:mixture_kl_ub}. 
We start the proof by defining the restricted measures $\tilde{\bQ}$ and $\tilde{\bQ}_b$ to be $\bQ$ and $\bQ_b$ in the support of $\bP$ and $0$ everywhere else. That is, 
\begin{equation*}
    \tilde{\bQ} \coloneqq \sum_b \omega_b \tilde{\bQ}_b = \sum_b \omega_b \bQ_b \cdot \bI_{\mathrm{supp}(\bP)}.
\end{equation*}

We note that these definitions ensure that $\tilde{\bQ} \ll \bP$ and $\tilde{\bQ}_b \ll \bP$ for all $b$, while maintaining the property that $\bP \ll \tilde{\bQ}$ and $\bP \ll \tilde{\bQ}_b$ for all $b$, since $\mathrm{supp}(\tilde{\bQ}) = \mathrm{supp}(\tilde{\bQ}_b) = \mathrm{supp}(\bP)$ for all $b$. In this way, we can manipulate the expression of the relative entropy as follows:
\begin{align*}
    \relent(\bP \Vert \bQ) &= \bE_{x \sim \bP} \bigg[ \log \frac{\rmd \bP}{\rmd \bQ}(x) \bigg]  \\
    &\stack{a}{=} \bE_{x \sim \bP} \Bigg[ \log \frac{\rmd \bP}{\rmd\tilde{\bQ}}(x) \Bigg]  \\ 
    &\stack{b}{=} - \bE_{x \sim \bP} \Bigg[ \log \frac{\rmd\tilde{\bQ}}{\rmd \bP}(x) \Bigg] &\\
    &= - \bE_{x \sim \bP} \Bigg[ \log \frac{\rmd \big( \sum_b \omega_b \tilde{\bQ}_b \big) }{\rmd \bP} (x)\Bigg]  \\
    &\stack{c}{=} - \bE_{x \sim \bP} \Bigg[ \log \Bigg( \sum_b \omega_b \frac{\rmd \tilde{\bQ}_b}{\rmd \bP}(x) \Bigg) \Bigg], 
\end{align*}
where (a) stems from the fact that the expectation will integrate $\nicefrac{\rmd \bP}{\rmd \bQ}$ over the union of sets of the support of $\bP$, where $\nicefrac{\rmd \bP}{\rmd \bQ} = \nicefrac{\rmd \bP}{\rmd \tilde{\bQ}}$ ($\bP$-a.s.), and (b) and (c) stem from \cite[Exercise 9.27]{mcdonald1999course}.

Now, if we consider a set of positive coefficients $\lbrace \phi_b \rbrace_b$ such that $\sum_b \phi_b = 1$ we have that
\begin{align}
    \relent(\bP \Vert \bQ) &= - \bE_{x \sim \bP} \Bigg[ \log \Bigg( \sum_b \phi_b \frac{\omega_b}{\phi_b} \frac{\rmd \tilde{\bQ}_b}{\rmd \bP}(x) \Bigg) \Bigg] \nonumber \\
    &\leq - \bE_{x \sim \bP} \Bigg[ \sum_b \phi_b \log \Bigg( \frac{\omega_b}{\phi_b} \frac{\rmd \tilde{\bQ}_b}{\rmd \bP} \Bigg) \Bigg] \nonumber \\
    &\coloneqq \hat{\rmD}_{\textnormal{KL}}(\bP\, \|\, \tilde{\bQ}\, ;\, \lbrace \phi_b \rbrace_b),
    \label{eq:approx_var_kl}
\end{align}
where the inequality stems from the convexity of $- \log$ and Jensen's inequality. 

We can now tighten the last inequality by minimizing the convex function $\hat{\rmD}_{\textnormal{KL}}(\bP\, \|\, \tilde{\bQ}\, ;\, \lbrace \phi_b \rbrace_b)$ over the linear {constraint} $\sum_b \phi_b = 1$ with the Lagrangian $\cL(\lbrace \phi_b \rbrace_b, \lambda) = \hat{\rmD}_{\textnormal{KL}}(\bP\, \|\, \tilde{\bQ}\, ;\, \lbrace \phi_b \rbrace_b) + \lambda(\sum_b \phi_b -1)$. The optimal value of the Lagrangian is given by
\begin{equation}
    \frac{\partial \cL(\lbrace \phi_b \rbrace_b, \lambda)}{\partial \phi_b} = 0 \Leftrightarrow \phi_b^\star = \frac{\omega_b}{\exp(\lambda + 1)} \exp \big( {-} \relent(\bP \Vert \tilde{\bQ}_b) \big), \nonumber
\end{equation}
where we use the fact that $\nicefrac{\rmd\tilde{\bQ}_b}{\rmd\bP} = \left(\nicefrac{\rmd\bP}{\rmd\tilde{\bQ}_b}\right)^{-1}$ \cite[Exercise 9.27]{mcdonald1999course} and that $\bP \ll \tilde{\bQ}_b$ for all $b$.
Then, each $\phi_b^\star$ satisfies the linear {constraint} when
\begin{equation}
    \phi_b^\star = \frac{\omega_b \exp \big({-} \relent(\bP \Vert \tilde{\bQ}_b) \big)}{\sum_{b'} \omega_{b'} \exp \big({-} \relent(\bP \Vert \tilde{\bQ}_{b'}) \big)}. \nonumber
\end{equation}
Therefore, we can recover~\eqref{eq:lemma_ub_1} as follows:
\begin{align}
    \MoveEqLeft[2]
    \hat{\rmD}_{\textnormal{KL}}(\bP\, \|\, \tilde{\bQ}\, ;\, \lbrace \phi_b^\star \rbrace_b) \nonumber \\
    &= -\bE_{x \sim \bP} \left[\sum_b \phi_b^\star \left( \log \left( \omega_b \frac{\rmd \tilde{\bQ}_b}{\rmd\bP}(x) \right) - \log \phi_b^\star \right) \right]  \nonumber \\
    &= - \bE_{x \sim \bP} \left[\sum_b \phi_b^\star \left( \log \left( \omega_b \frac{\rmd \tilde{\bQ}_b}{\rmd \bP}(x) \right) \right. \right. \nonumber \\
    & \qquad \qquad \qquad \left. \left. - \log \omega_b + \relent(\bP \Vert \tilde{\bQ}_b) +  \log \left(\sum_{b'}\omega_{b'} \exp \big( {-} \relent(\bP \Vert \tilde{\bQ}_b) \big) \right)  \right) \right]  \nonumber \\
    & = -\log \left(\sum_{b'}\omega_{b'} \exp \big( {-} \relent(\bP \Vert \tilde{\bQ}_{b'}) \big) \right), \label{eq:lemma_ub_1_proof}
\end{align}
where the last equality comes from the fact that $\sum_b \phi_b^\star = 1$, the fact that in $\relent(\bP \Vert \tilde{\bQ}_{b})$ the expectation with respect to $\bP$ will integrate $\log \frac{\rmd \bP}{\rmd\tilde{\bQ}_{b}}$ over the support of $\bP$, and the claim that
\begin{equation*}
    \bE_{x \sim \bP} \left[\sum_b \phi_b^\star \left( \log \left( \omega_b \frac{\rmd \tilde{\bQ}_b}{\rmd \bP}(x) \right) - \log \omega_b + \relent(\bP \Vert \tilde{\bQ}_b) \right ) \right] = 0,
\end{equation*}
or equivalently, 
\begin{align*}
    \bE_{x \sim \bP} \left[\sum_b \phi_b^\star \left( \log \left( \omega_b \frac{\rmd \tilde{\bQ}_b}{\rmd \bP}(x) \right) \right) \right] = \sum_b \phi_b^\star \left(  \log \omega_b - \relent(\bP \Vert \tilde{\bQ}_b) \right),
\end{align*}
which stems again from the fact that $\nicefrac{\rmd\tilde{\bQ}_b}{\rmd \bP} = \left(\nicefrac{\rmd\bP}{\rmd\tilde{\bQ}_b}\right)^{-1}$ \cite[Exercise 9.27]{mcdonald1999course} and that $\bP \ll \tilde{\bQ}_b$ for all $b$. 
Finally, we can leverage the log-sum-exp bounds in~\eqref{eq:approx_var_kl} and~\eqref{eq:lemma_ub_1_proof} to obtain a more comprehensible upper bound on the relative entropy:
\begin{align*}
	\relent(\bP \Vert \bQ) &\leq  - \log \left( \sum_{b} \omega_{b} \exp \big( {-} \relent(\bP \Vert \bQ_b) \big) \right)  \\
	&= - \log \left( \sum_{b} \exp \big( {-} \relent(\bP \Vert \bQ_b) + \log (\omega_b) \big) \right)  \\
	&\leq - \log \exp \max_b \big\lbrace {-} \relent(\bP \Vert \bQ_b) + \log \omega_b \big\rbrace \\
	&\leq \min_b \big\lbrace \relent(\bP \Vert \bQ_b) - \log \omega_b \big \rbrace,
\end{align*}
which is exactly the bound~\eqref{eq:lemma_ub_2}.

\section{Proof of \texorpdfstring{\Cref{prop:mi_mGDP}}{Proposition 6.4}}
\label[appendix]{app:proof_prop_mGDP_1}

\looseness=-1 The proof takes advantage of the property that a private algorithm, for example, $\varepsilon$-DP or $\mu$-GDP, produces statistically similar outputs for neighbouring training sets.
As in the proof of \Cref{prop:simple}, we employ \Cref{lemma:mixture_kl_ub} to obtain an upper bound on $\relent(\bP_W^{S=s} \Vert \bQ)$ but, this time, the mixture distribution $\bQ$ only uses some $\bP_W^{T_s}$.

We start by noting that all the types $T_s$ lie inside the unit hypercube $[0,1]^{|\cZ|-1}$. Although $T_s$ is a vector of $|\cZ|$ dimensions, since the last dimension is completely defined by the preceding $|\cZ|-1$, its possible values are located in a $(|\cZ|-1)$-dimensional subspace. This is the intuition behind \Cref{claim:num_types}. 

To simplify the analysis, instead of studying the types, we focus on the vector of counts $N_s$, which is a scaled version of the type, that is, $N_s(z) = N(z|s)$ for all $z \in \cZ$. As with the types, the first $|\cZ|-1$ dimensions completely define the vector of counts, and thus $N_s$ lies in a $[0,n]^{|\cZ|-1}$ hypercube.

We split the $[0,n]$ interval of each dimension in $1 \leq t\leq n$ parts, thus resulting in a $[0,n]^{|\cZ|-1}$ hypercube cover of $t^{|\cZ|-1}$ smaller hypercubes.\footnote{We note that there are only $n+1$ coordinates in every dimension to choose as the center of a small hypercube. If we were to choose $t = n+1$, then we would have one hypercube per type, and thus we would not exploit the fact that the algorithm is smooth.}
The types defined by the first $|\cZ|-1$ components of the vector of counts at the center of these smaller hypercubes are the ones selected to create the mixture (see \Cref{fig:hypercubes_simplex_prop2} for an illustration).
We note that the side of every small hypercube has length 
\begin{equation*}
	l \coloneqq \frac{n}{t}
\end{equation*}
and, more importantly, it has $l' \coloneqq \lfloor l \rfloor + 1$ atoms.

\begin{figure}[t]
 \centering
 \includegraphics{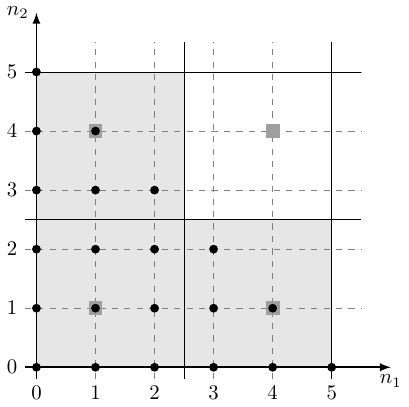}
 \caption{Example for $n=5$ samples, $|\cZ|=3$ dimensions, and $t=2$ parts per dimension.
 Here, $n_1$ and $n_2$ represent the counts of the first and second letters of the alphabet $\cZ$, that is, $n_i = N_s(z_i)$.
 The black dots represent the possible training sets and the gray squares, the considered training sets for the mixture.
 The highlighted area represents the hypercubes that contain possible training sets.
\label{fig:hypercubes_simplex_prop2}}
\end{figure}

If $l'$ is odd, we choose the central atom as the corresponding coordinate for the center of the small hypercube. On the other hand, if $l'$ is even, we enlarge the small hypercube in one unit and choose the center of this enlarged hypercube. Hence, the distance between the center and any other atom in the small hypercube, in only one of the dimensions is, at most,
\begin{equation*}
	|N_s(z_i) - N_{s'}(z_i)| \leq \frac{\lfloor l \rfloor + 1}{2} \leq \frac{n}{2t} + \frac{1}{2}
\end{equation*}
for all $i \in [|\cZ| - 1]$, where $z_i$ is the $i$-th element of $\cZ$. Therefore, for the first $|\cZ|-1$ components, the maximal distance between any vector of counts and the vector of counts at the center of the small hypercube is bounded by $(|\cZ|-1) \frac{\nicefrac{n}{t} + 1}{2}$. In the worst case, all the counts in these first dimensions are off the center with the same sign (for example, $N_s(z_i) - N_{s'}(z_i) = \frac{n/t + 1}{2}$ for all $i \in \big[ |\cZ|-1 \big]$) and the last dimension (not directly included, but existent in the vector of counts) has to compensate for it. Hence, the distance on the last dimension of the vector of counts is bounded from above by
\begin{equation*}
	|N_s(z_{|\cZ|}) - N_{s'}(z_{|\cZ|})| \leq \left(\frac{n}{2t} + \frac{1}{2}\right)(|\cZ|-1).
\end{equation*}

The covering devised this way makes sure that the distance $d(s,s_i)$, as described in \Cref{def:dist}, between any training $s$ belonging to the $i$-th hypercube and its center $s_i$ is upper-bounded as follows,
\begin{equation}
     d(s,s_i) \leq \left(\frac{n}{2t} + \frac{1}{2}\right)(|\cZ|-1) \leq \frac{n}{t}(|\cZ| - 1),
     \label{eq:d_max}
\end{equation}
\looseness=-1 since $t \leq n$.
We then replicate the proof of \Cref{prop:simple} from~\eqref{eq:prop1_bnd1} using this new mixture and the properties of $\varepsilon$-DP and $\mu$-GDP algorithms, that is, \Cref{claim:diff_priv_distance}.

\subsection{\texorpdfstring{$\varepsilon$}{e}-DP Algorithms}

We have that
\begin{align}
 \relent(\bP_W^{S=s} \Vert \bQ) &\leq \min_{s' \in \cS'} \left \lbrace \relent \big(\bP_W^{S=s} \Vert \bP_W^{T_{s'}}\big) - \log \omega_{s'} \right \rbrace \nonumber\\
 &\leq \frac{n}{t} (|\cZ|-1) \varepsilon + (|\cZ|-1) \log t,
 \label{eq:prop2_bnd1_dp}
\end{align}
where the second inequality follows from~\eqref{eq:dp_distance} in \Cref{claim:diff_priv_distance}, \eqref{eq:d_max}, and the fact that there are at most $t^{|\cZ|-1}$ smaller hypercubes.
The value of $t$ that minimizes~\eqref{eq:prop2_bnd1_dp} is
\begin{equation*}
 t = \varepsilon n,
\end{equation*}
and the following bound is obtained after replacing this value into~\eqref{eq:prop2_bnd1_dp},
\begin{equation}
	\relent(\bP_W^{S=s} \Vert \bQ) \leq (|\cZ|-1) \log \left( e \varepsilon n \right).
 \label{eq:prop2_bnd2_dp}
\end{equation}
However, this result is only meaningful if the condition $1 \leq t \leq n$, as mentioned earlier in the proof, holds true, i.e., $1/n \leq \varepsilon \leq 1$. 

On the one hand, if the optimal $t$ is such that $t > n$, we still need to choose $t=n+1$, as the maximum covering that can be designed selects one hypercube per type.
\Cref{prop:simple} has already addressed this situation and its result is tighter than~\eqref{eq:prop2_bnd1_dp}, with $t=n+1$, because of the loose upper bound~\eqref{eq:d_max}.
On the other hand, if the optimal $t$ is such that $t< 1$, we still need to choose $t=1$; a smaller $t$ implies that we are covering a volume larger than $[0,n]^{|\cZ|-1}$. The following upper bound is found in this way,
\begin{equation}
 \relent(\bP_W^{S=s} \Vert \bQ) \leq  (|\cZ| - 1) \varepsilon n
 \label{eq:prop2_bnd3_dp}
\end{equation}
and it is worse than the simple bound from the beginning of~\Cref{sec:differential_privacy_generalization}. 

We may further combine the bounds~\eqref{eq:prop2_bnd2_dp} and~\eqref{eq:prop2_bnd3_dp} into the more compact, albeit looser bound~\eqref{eq:prop_eDP_1_main}. We note that $x<\log(1+ex)$ if $x\leq 1$, which is equivalent to \eqref{eq:prop2_bnd3_dp}$<$\eqref{eq:prop_eDP_1_main} if $\varepsilon n < 1$, precisely the region where~\eqref{eq:prop2_bnd3_dp} is valid.

\subsection{\texorpdfstring{$\mu$}{m}-GDP Algorithms}

If we leverage~\eqref{eq:gdp_distance} from \Cref{claim:diff_priv_distance} instead of~\eqref{eq:dp_distance}, we now have that
\begin{align}
 \relent(\bP_W^{S=s} \Vert \bQ) 
 &\leq \min_{s' \in \cS'} \left \lbrace \relent\big(\bP_W^{S=s} \Vert \bP_W^{T_{s'}}\big) - \log \omega_{s'} \right \rbrace \nonumber\\
 &\leq \frac{1}{2} \frac{N^2}{t^2} (|\cZ|-1)^2 \mu^2 + (|\cZ|-1) \log t,
 \label{eq:prop2_bnd1_gdp}
\end{align}
where the value of $t$ that minimizes~\eqref{eq:prop2_bnd1_gdp} is now
\begin{equation*}
 t = \sqrt{|\cZ|-1}\, \mu n,
\end{equation*}
which yields
\begin{equation}
 \relent(\bP_W^{S=s} \Vert \bQ) \leq (|\cZ|-1) \log \left( \sqrt{ e (|\cZ| - 1)}\, \mu n \right).
 \label{eq:prop2_bnd2_gdp}
\end{equation}

If we operate analogously as with $\varepsilon$-DP algorithms, we obtain that when $\mu \leq \nicefrac{1}{\sqrt{|\cZ|-1}}$, then $\relent(\bP_W^{S=s} \Vert \bQ)$ is bounded from above by~\eqref{eq:prop_mGDP_1_main}, and that otherwise \Cref{prop:simple} is tighter.
This concludes the proof of \Cref{prop:mi_mGDP}.

\begin{remark}
A keen reader might notice that the upper bound on the relative entropy in~\eqref{eq:prop2_bnd1_dp} is not the tighter formula from~\eqref{eq:dp_distance} using $\tanh$.
Instead, we chose the linear upper bound for simplicity.

As mentioned in \Cref{subsec:dp}, this approximation is better than a quadratic one if $\varepsilon\geq \nicefrac{2}{k}$.
Had we chosen this other approximation, the result for $\varepsilon$-DP would be the same as the one for $\mu$-GDP.
Comparing~\eqref{eq:prop2_bnd2_dp} and~\eqref{eq:prop2_bnd2_gdp}, with $\varepsilon$ instead of $\mu$, we see that~\eqref{eq:prop2_bnd2_dp} is looser whenever $|\cZ|\leq 1+e$, that is, only for binary and ternary alphabets.

For typical training sets, where $|\cZ|$ could be in the hundreds or more, the linear approximation is thus very tight.
Therefore, in the remaining proofs, we use the linear approximation in~\eqref{eq:dp_distance} for the relative entropy between outputs of $\varepsilon$-DP algorithms.
\end{remark}

\section{Proof of \texorpdfstring{\Cref{prop:mi_mGDP_2}}{Proposition 6.5}}
\label[appendix]{app:proof_prop_mGDP_2}

In the proof of \Cref{prop:mi_mGDP}, we devised a covering of the whole space $[0,n]^{|\cZ|-1}$ with $t^{|\cZ|-1}$ small hypercubes.
However, we observe that many of the hypercubes we designed contain no vector of counts. Particularly, there are only counts inside the hypervolume comprised between the origin and the $(|\cZ|-2)$-simplex; in the example from \Cref{fig:hypercubes_simplex_prop2}, this volume is inside the highlighted area.
This simplex defines the manifold where the first $|\cZ|-1$ dimensions of the vector of counts sum up to $n$ in the $[0,n]^{|\cZ|-1}$ hypercube. Therefore, there are no possible vectors of counts above it since any training set is restricted to $n$ samples; however, there are many vectors below it since the $|\cZ|$-th dimension ``compensates'' for the unseen samples in the first $|\cZ|-1$ dimensions. 

For this reason, we only keep the hypercubes strictly needed to cover the hypervolume under the $(|\cZ|-2)$-simplex. This hypervolume is a hyperpyramid of height $n$ and its base is the hypervolume under the $(|\cZ|-3)$-simplex. Moreover, this hypervolume has $|\cZ|-1$ perpendicular edges of length $n$ which intersect at the origin.
For example, the hypervolume under the $1$-simplex is the right triangle with a vertex in the origin and two edges that go from $0$ to $n$ on both axes. The number of hypercubes needed to cover the hypervolume under the $(|\cZ|-2)$-simplex is given in the following lemma, whose proof is in \Cref{app:proof_lemma_hypercubes}.

\begin{lemma}
\label{lemma:number_of_hypercubes_under_simplex}
The minimum number of hypercubes of a regular $t^{\times k}$ grid on the $[0,n]^K$ hypercube that covers the hypervolume under the $(k-1)$-simplex is
\begin{equation}
    S_k(t) = \frac{1}{k!} \frac{(t+k-1)!}{(t-1)!} \leq \frac{1}{k!} \left(t+\frac{k-1}{2}\right)^{k}.
    \label{eq:number_of_hypercubes_under_simplex}
\end{equation}
\end{lemma}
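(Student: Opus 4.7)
The plan is to reduce the geometric counting problem to a standard stars-and-bars combinatorial identity, and then to bound the resulting factorial expression by the arithmetic--geometric mean (AM--GM) inequality. I write $k$ for $K$ throughout, which appears to be a typographical inconsistency in the statement.

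First I would set up the grid coordinates. Index each small hypercube of the regular $t^{\times k}$ grid on $[0,n]^k$ by a tuple $(i_1,\ldots,i_k) \in \{0,1,\ldots,t-1\}^k$, with the convention that the hypercube sits at $\prod_{j=1}^{k} \bigl[i_j n/t,\, (i_j+1) n/t\bigr]$. The key geometric observation is that the minimum of $\sum_j x_j$ over this small hypercube is attained at its lowest corner and equals $(n/t)\sum_j i_j$. Consequently, a hypercube meets the open simplex $\{x \in [0,n]^k : \sum_j x_j < n\}$ (equivalently, contributes positive volume to the region under the $(k-1)$-simplex) if and only if $\sum_j i_j \leq t-1$. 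Hypercubes with $\sum_j i_j = t$ only touch the simplex on a measure-zero face and are therefore not needed in the covering, while the upper constraint $i_j \leq t-1$ is automatic from $i_j \leq \sum_j i_j \leq t-1$.

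Next I would count. The number of non-negative integer tuples $(i_1,\ldots,i_k)$ with $\sum_{j=1}^{k} i_j \leq t-1$ is, by the standard stars-and-bars argument (introducing a slack variable $i_{k+1} \geq 0$ so that $\sum_{j=1}^{k+1} i_j = t-1$), exactly $\binom{t+k-1}{k} = \frac{(t+k-1)!}{k!\,(t-1)!}$, which matches the claimed equality $S_k(t) = \frac{1}{k!}\cdot \frac{(t+k-1)!}{(t-1)!}$. A quick sanity check with $k=2$, $t=2$, $n=5$ recovers the three squares shown in Figure 6.2, which confirms the correspondence and, in particular, that the corner hypercube with $\sum i_j = t$ is correctly excluded.

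Finally, for the inequality I would rewrite $\frac{(t+k-1)!}{(t-1)!} = \prod_{j=0}^{k-1}(t+j)$, a product of $k$ consecutive positive reals, and apply AM--GM:
\begin{equation*}
    \Biggl(\prod_{j=0}^{k-1}(t+j)\Biggr)^{1/k} \;\leq\; \frac{1}{k}\sum_{j=0}^{k-1}(t+j) \;=\; t + \frac{k-1}{2}.
\end{equation*}
Raising both sides to the $k$-th power and dividing by $k!$ yields the desired bound $S_k(t) \leq \frac{1}{k!}\bigl(t + \frac{k-1}{2}\bigr)^k$. I do not expect any genuine obstacle here: the only slightly delicate point is the geometric step of identifying which hypercubes must belong to the cover, and in particular arguing that the boundary hypercubes with $\sum i_j = t$ may be discarded. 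Once that correspondence is set, both the counting and the AM--GM bound are routine.
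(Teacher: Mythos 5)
Your proof is correct, but it takes a genuinely different route from the paper's. You identify the covering cells explicitly: a grid cell indexed by $(i_1,\ldots,i_k)$ has its minimal coordinate sum at the lowest corner, so it intersects the region under the simplex with positive volume exactly when $\sum_j i_j \leq t-1$ (and these cells are also all necessary, since their interiors are pairwise disjoint and each meets the region), after which a stars-and-bars count gives $\binom{t+k-1}{k}$ in one step; the cells with $\sum_j i_j = t$ touch the simplex only at their lowest corner, which is already covered by an included neighbour, so discarding them is legitimate. The paper instead never indexes the cells: it builds the layer-by-layer recursion $S_k(t)=\sum_{j=1}^{t}S_{k-1}(j)$ (a hyperpyramid whose base has $S_{k-1}(t)$ blocks, the next level $S_{k-1}(t-1)$, and so on) and verifies the closed form by induction via the telescoping identity $f_{k-1}(t)=f_k(t)-f_k(t-1)$. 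For the upper bound, you apply AM--GM once to all $k$ consecutive factors $t,t+1,\ldots,t+k-1$, whereas the paper (Proposition in the corresponding appendix) pairs factors equidistant from the centre and uses $(t+a)(t+b)\leq\bigl(t+\tfrac{a+b}{2}\bigr)^2$, splitting into odd and even cases — essentially a pairwise instance of the same inequality. Your route is shorter and makes the combinatorial identity transparent; the paper's route stays closer to the geometric picture used in the surrounding propositions and avoids invoking AM--GM for more than two terms. One cosmetic slip: the three-square sanity check corresponds to the highlighted region in the figure accompanying the proof of the $\varepsilon$-DP covering proposition ($t=2$), not to the figure illustrating this lemma ($t=4$, ten squares); this does not affect the argument.
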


From \Cref{lemma:number_of_hypercubes_under_simplex} we know that we only need 
\begin{align}
    S_{|\cZ|-1}(t) &= \frac{1}{(|\cZ|-1)!} \frac{(t + |\cZ| - 2)!}{(t-1)!} \nonumber \\
    &\leq \frac{1}{(|\cZ|-1)!} \left(t + \frac{|\cZ|-2}{2} \right)^{|\cZ|-1}
    \label{eq:min_number_hypercubes}
\end{align}
hypercubes to cover all the possible vectors of counts in the $[0,n]^{|\cZ|-1}$ hypercube instead of the $t^{|\cZ|-1}$ used in \Cref{prop:mi_mGDP}.
We then replicate the proof of \Cref{prop:simple} from~\eqref{eq:prop1_bnd1} using the properties of $\varepsilon$-DP and $\mu$-GDP algorithms, as we did in \Cref{prop:mi_mGDP}.

\subsection{\texorpdfstring{$\varepsilon$}{e}-DP Algorithms}

If we use the bound~\eqref{eq:dp_distance} in \Cref{claim:diff_priv_distance}, \eqref{eq:d_max}, and the fact that the number of smaller hypercubes is bounded by~\eqref{eq:min_number_hypercubes}, we obtain that
\begin{align}
\relent(\bP_W^{S=s} \Vert \bQ) &\leq \min_{s' \in \cS'} \left \lbrace \relent \big(\bP_W^{S=s} \Vert \bP_W^{T_{s'}} \big) - \log \omega_{s'} \right \rbrace \nonumber\\
 &\leq \frac{N}{t} (|\cZ|-1) \varepsilon + (|\cZ|-1) \log \left(t + \frac{|\cZ|-2}{2} \right)  - \log (|\cZ|-1)!.
 \label{eq:prop3_bnd1_dp}
\end{align}
The analytical expression for the value of $t$ that minimizes~\eqref{eq:prop3_bnd1_dp} is quite convoluted, so we study solutions of the form $t=\alpha \varepsilon N$, $\alpha > 0$, based on the results from \Cref{prop:mi_mGDP}.
Then, in order to meet the condition $1 \leq t \leq n$, we need that $\nicefrac{1}{(\alpha N)} \leq \varepsilon \leq \nicefrac{1}{\alpha}$. 
Furthermore, we bound from above the last term in~\eqref{eq:prop3_bnd1_dp} using Stirling's formula, that is, 
\begin{align}
-\log(|\cZ|-1)! \leq -\frac{1}{2} \log 2 \pi (|\cZ|-1) \quad- (|\cZ|-1) \log(|\cZ|-1) + (|\cZ|-1).
    \label{eq:prop3_stirling}
\end{align}
This way, we obtain the upper bound 
\begin{align}
   \relent(\bP_W^{S=s} \Vert \bQ) 
    &\leq (|\cZ|-1) \log \left( \frac{\alpha e^{\frac{1}{\alpha}+1}}{|\cZ|-1} \varepsilon N + \frac{1}{2} \frac{|\cZ|-2}{|\cZ|-1} e^{\frac{1}{\alpha}+1} \right)  -\frac{1}{2} \log 2 \pi (|\cZ|-1) \nonumber \\ 
    &\leq (|\cZ|-1) \log \left( \frac{e^{\frac{1}{\alpha}+1}}{2} \left(1 + \frac{2\alpha}{|\cZ| -1} \varepsilon N \right) \right)  -\frac{1}{2} \log 2 \pi (|\cZ|-1).
    \label{eq:prop3_alpha_bnd_dp}
\end{align}

Although it is possible to obtain an analytical formula for the optimal value of $\alpha$ in~\eqref{eq:prop3_alpha_bnd_dp}, we obtain no insights from it.
Instead, we provide a suboptimal value by analyzing the argument of the logarithm, in particular its derivative with respect to $\alpha$.
We observe that, as $\nicefrac{\varepsilon N}{(|\cZ|-1)}$ increases, the minimizing $\alpha$ tends to $1$ from above; if $\varepsilon n= |\cZ|-1$, then $\alpha \approx 1.37$ is optimal, which is already quite close to the limit.
Therefore, we obtain~\eqref{eq:prop_eDP_2_main} by setting $\alpha$ to 1.

Having set $\alpha=1$, if the optimal $t\ (=\varepsilon N)$ is such that $t < 1$, we fix $t = 1$ in~\eqref{eq:prop3_bnd1_dp} as explained in the proof of \Cref{prop:mi_mGDP}.
After some algebraic manipulations, we obtain the bound~\eqref{eq:prop_eDP_2_main_0}.

\subsection{\texorpdfstring{$\mu$}{m}-GDP Algorithms}

If we leverage~\eqref{eq:gdp_distance} in \Cref{claim:diff_priv_distance} instead of~\eqref{eq:dp_distance}, we have that
\begin{align}
\relent(\bP_W^{S=s} \Vert \bQ) 
 &\leq  \min_{s' \in \cS'} \left \lbrace \relent \big(\bP_W^{S=s} \Vert \bP_W^{T_{s'}} \big) - \log \omega_{s'} \right \rbrace \nonumber\\
 &\leq \frac{1}{2} \frac{N^2}{t^2} (|\cZ|-1)^2 \mu^2 + (|\cZ|-1) \log \left(t + \frac{|\cZ|-2}{2} \right)  - \log (|\cZ|-1)!.
 \label{eq:prop3_bnd1_gdp}
\end{align}
As for $\varepsilon$-DP algorithms, the analytical expression for the value of $t$ that minimizes~\eqref{eq:prop3_bnd1_gdp} is quite convoluted, so we study solutions of the form  $t=\alpha \sqrt{|\cZ|-1}\, \mu N$, based on the results from \Cref{prop:mi_mGDP}.
Then, in order to meet the condition $1 \leq t \leq n$, we need that
\begin{equation*}
    \frac{1}{\alpha n \sqrt{|\cZ|-1}} \leq \mu \leq \frac{1}{\alpha \sqrt{|\cZ|-1}},
\end{equation*}
and we obtain the upper bound
\begin{align}
\MoveEqLeft[1]
   \relent(\bP_W^{S=s} \Vert \bQ) \nonumber \\
    &\leq (|\cZ|-1) \log \left( \frac{\alpha e^{\frac{1}{2\alpha^2}+1}}{\sqrt{|\cZ|-1}} \mu n + \frac{1}{2} \frac{|\cZ|-2}{|\cZ|-1} e^{\frac{1}{2\alpha^2}+1} \right)  -\frac{1}{2} \log 2 \pi (|\cZ|-1) \nonumber \\ 
    &\leq (|\cZ|-1) \log \left( \frac{e^{\frac{1}{2\alpha^2}+1}}{2} \left( 1 + \frac{2\alpha}{\sqrt{|\cZ|-1}} \mu n \right) \right) -\frac{1}{2} \log 2 \pi (|\cZ|-1).
    \label{eq:prop3_alpha_bnd_gdp}
\end{align}

We analyze again the derivative with respect to $\alpha$ of the argument of the logarithm in~\eqref{eq:prop3_alpha_bnd_gdp}. We now observe that, as $\nicefrac{\mu n}{\sqrt{|\cZ|-1}}$ increases, the minimizing $\alpha$ also tends to 1 from above; if $\mu n = \sqrt{|\cZ| -1}$, $\alpha \approx 1.19$ is optimal, which is already quite close to the limit. Therefore, we obtain~\eqref{eq:prop_mGDP_2_main} by setting $\alpha$ to 1.

If the optimal value of $t$ is such that $t < 1$, we proceed similarly as with $\varepsilon$-DP algorithms and we obtain the remaining bound~\eqref{eq:prop_mGDP_2_main_0}.

\subsection{General Algorithms}

For both $\varepsilon$-DP and $\mu$-GDP, if the privacy guarantees are not good enough, the minimizations in~\eqref{eq:prop3_bnd1_dp} and~\eqref{eq:prop3_bnd1_gdp} result in an optimal value of $t> n$.
As explained in the proof of \Cref{prop:mi_mGDP}, we choose $t=n+1$ as, otherwise, we would have more smaller hypercubes than types.
In this case, there is one type per hypercube and we may replicate the proof of \Cref{prop:simple} and continue from~\eqref{eq:prop1_bnd1}.
That is,
\begin{align*}
\relent(\bP_W^{S=s} \Vert \bQ) 
 &\leq \min_{s' \in \cS} \left \lbrace \relent \big(\bP_W^{S=s} \Vert \bP_W^{T_{s'}} \big) - \log \omega_{s'} \right \rbrace \\
 &\stack{a}{=} \log S_{|\cZ|-1}(n+1) \\
 &\leq (|\cZ|-1) \log \left(n+1 + \frac{|\cZ|-2}{2} \right) - \log (|\cZ|-1)!,
\end{align*}
where $(a)$ is due to $\omega_s = |\cS|^{-1} = \big(S_{|\cZ|-1}(n+1)\big)^{-1}$ for all $s \in \cS$. 
After replacing the factorial with Stirling's approximation~\eqref{eq:prop3_stirling} and performing some algebraic manipulations, we obtain the bound~\eqref{eq:prop_eDP_2_main_2}.
This concludes the proof of \Cref{prop:mi_mGDP_2}.

\subsection{Proof of \texorpdfstring{\Cref{lemma:number_of_hypercubes_under_simplex}}{Lemma 6.2}}
\label[appendix]{app:proof_lemma_hypercubes}

\begin{figure}[t]
 \centering
 \includegraphics{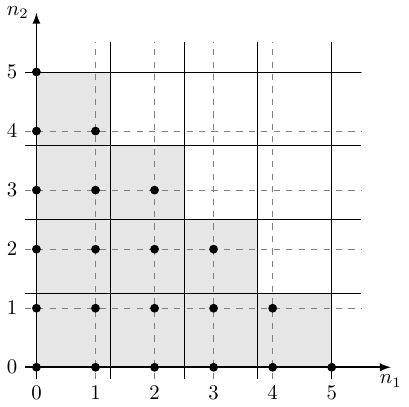}
 \caption{Example for $n=5$ samples, $k=2$ dimensions, and $t=4$ parts per dimension. Here, $n_1$ and $n_2$ are the dimensions of the $[0,5]^{2}$ hypercube. The number of highlighted blocks is $S_{2}(4)=10$. 
 \label{fig:hypercubes_simplex}}
\end{figure}

We start the proof by observing that, if the segment $[0,n]$ is divided in $t$ parts, we need all $t$ parts to cover it, that is, $S_1(t) = t$.
Then, we note that the number of squares from a regular $t \times t$ grid needed to cover the area under the $1$-simplex is given by
\begin{equation*}
    S_2(t) = \sum_{j=1}^t (t-j+1) = \frac{1}{2} t(t+1),
\end{equation*}
since we need all vertical squares in the first column, and one less for every additional column, see \Cref{fig:hypercubes_simplex} for a visual example.
We observe that it is possible to rewrite this equation as
\begin{equation*}
    S_2(t) = \sum_{j=1}^t j = \sum_{j=1}^t S_1(j).
\end{equation*}
This formulation is quite intuitive since the base of the pyramid has $S_1(t)$ blocks, the second level has $S_1(t-1)$ blocks, and so on, until we reach the level $t$ which has $S_1(1)$ blocks.
Furthermore, this recursion applies to any dimension, that is,
\begin{equation}
    S_k(t) = \sum_{j=1}^t S_{k-1}(j).
    \label{eq:hypercubes_simplex_2}
\end{equation}
For example, the number of cubes needed to cover the volume under the $2$-simplex is given by
\begin{equation*}
    S_3(t) = \sum_{j=1}^t \frac{1}{2} j(j+1) = \frac{1}{6} t(t+1)(t+2).
\end{equation*}

In what follows, we show that the recursion in~\eqref{eq:hypercubes_simplex_2} may be directly computed as $S_k(t)=f_k(t)$, where
\begin{equation}
    f_k(t) \coloneqq \frac{1}{k!} \frac{(t+k-1)!}{(t-1)!}.
 \label{eq:hypercubes_simplex_3}
\end{equation} 
We prove this by induction.
We have already shown that $S_k(t)=f_k(t)$ for $k=1$, $k=2$, and $k=3$. Now, assuming it is true for $k-1$, we show it also holds for $k$.
First, we note the following equality

\begin{equation}
 f_{k-1}(t) = \begin{cases}
               f_k(t) - f_k(t-1) & \textnormal{if } t>1,\\
               1 & \textnormal{if } t=1,
              \end{cases}
 \label{eq:hypercubes_simplex_4}
\end{equation}
which may be obtained by some simple algebraic manipulations.
Then, assuming $t>1$,
\begin{align*}
    S_k(t) &= \sum_{j=1}^t S_{k-1}(j) \\
    &\stack{a}{=} \sum_{j=1}^t f_{k-1}(j) \\
    &\stack{b}{=} 1+ \sum_{j=2}^t f_k(j) - f_k(j-1) \\
    &= f_k(t),
\end{align*}
where $(a)$ comes from the induction assumption, and $(b)$ is due to~\eqref{eq:hypercubes_simplex_4}.
The case $t=1$ follows trivially since $f_k(1)=1$ for any $k \in \mathbb{Z}_+$; thus, the first part of~\eqref{eq:number_of_hypercubes_under_simplex} is proved.

The upper bound on $S_k(t)$, the second part of~\eqref{eq:number_of_hypercubes_under_simplex}, stems from the following small result, whose proof is in~\Cref{app:proof_factorials_to_exponential}.

\begin{proposition}
\label{claim:factorials_to_exponential}
For any $t,m \in \mathbb{Z}_+$,
\begin{equation}
    \frac{(t+m)!}{(t-1)!} \leq \left(t+\frac{m}{2}\right)^{m+1}{.}
    \label{eq:claim_statement}
\end{equation}
\end{proposition}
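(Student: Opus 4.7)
The plan is to recognize the left-hand side as a product of $m+1$ consecutive integers, namely
\[
\frac{(t+m)!}{(t-1)!} = t(t+1)(t+2)\cdots(t+m) = \prod_{i=0}^{m}(t+i),
\]
and to recognize the right-hand side as the $(m+1)$-th power of their arithmetic mean. Indeed, the arithmetic mean of the $m+1$ integers $t, t+1, \ldots, t+m$ is
\[
\frac{1}{m+1}\sum_{i=0}^{m}(t+i) = t + \frac{m}{2},
\]
using the standard formula for the sum of an arithmetic progression.

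With this reformulation, the inequality is exactly the statement that the geometric mean of the $m+1$ positive numbers $t, t+1, \ldots, t+m$ is bounded above by their arithmetic mean. So the main step will be to invoke the AM-GM inequality, which gives
\[
\left(\prod_{i=0}^{m}(t+i)\right)^{\!1/(m+1)} \leq \frac{1}{m+1}\sum_{i=0}^{m}(t+i) = t + \frac{m}{2}.
\]
Raising both sides to the $(m+1)$-th power preserves the inequality (both sides are non-negative), yielding the claim.

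There is no real obstacle here: the proof is a one-line application of AM-GM once the algebraic identification of both sides is made. The only mild care needed is to confirm that both sides are positive (so that exponentiation by $m+1$ is monotonic), which is immediate since $t, m \in \mathbb{Z}_+$ implies $t \geq 1$ and hence every factor $t+i$ is a positive integer. I would therefore present the proof as: (i) rewrite the ratio of factorials as a product of consecutive integers, (ii) compute the arithmetic mean of these integers, and (iii) apply AM-GM and take $(m+1)$-th powers.
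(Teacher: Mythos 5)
Your proof is correct, but it takes a different route from the paper. You identify $\frac{(t+m)!}{(t-1)!}$ as the product of the $m+1$ consecutive integers $t,\dots,t+m$, note that their arithmetic mean is $t+\frac{m}{2}$, and apply the AM--GM inequality once to get the bound, uniformly in $m$. The paper instead proves only the two-factor inequality $(t+a)(t+b)\leq\left(t+\frac{a+b}{2}\right)^{2}$ (a consequence of completing the square) and then pairs the factors that are equidistant from the centre of the product, which forces a case distinction on the parity of $m$ (an odd number of factors leaves the middle factor $t+\frac{m}{2}$ unpaired, an even number pairs up completely). Your argument is shorter and avoids the parity split, at the cost of invoking the general $(m+1)$-variable AM--GM; the paper's argument is self-contained in that it only needs the elementary two-variable case, which it proves directly. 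Both are valid, and your observation that positivity of every factor (since $t\geq 1$) justifies raising to the $(m+1)$-th power is the only care required.
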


We may then proceed to bound the corresponding
{factor} in~\eqref{eq:hypercubes_simplex_3} with~\eqref{eq:claim_statement}, where we set $m=k-1$. This concludes the proof of \Cref{lemma:number_of_hypercubes_under_simplex}.

\subsection{Proof of \Cref{claim:factorials_to_exponential}}
\label[appendix]{app:proof_factorials_to_exponential}

We start the proof by noting the following upper bound, that is valid for every $t,a,b\in\bR$:
\begin{equation}
    (t+a)(t+b)\leq \left(t +\frac{a+b}{2}\right)^{2},
    \label{eq:bound_prod_center}
\end{equation}
which is a simple consequence of the equality
\begin{equation*}
    (t+a)(t+b)+\frac{(a-b)^2}{4} = \left(t +\frac{a+b}{2}\right)^{2}.
\end{equation*}

Therefore, if $m = 2l$, where $l \in \mathbb{Z}_+$, there is an odd number of %
{factors} ($2l+1$) on the left-hand side of~\eqref{eq:claim_statement}. We may pair all the %
{factors} that are equidistant from the center, with the exception of the middle one, and bound them using~\eqref{eq:bound_prod_center} to obtain
\begin{equation*}
    (t+l)^{2l}(t+l) = \left(t+\frac{m}{2}\right)^{m+1}.
\end{equation*}
On the other hand, if $m = 2l + 1$, where $l \in \mathbb{Z}_+$, there is now an even number of %
{factors} ($2l+2$) on the left-hand side of~\eqref{eq:claim_statement}. 
We may nonetheless pair all the %
{factors} that are equidistant from the center and, by using~\eqref{eq:bound_prod_center}, also obtain the bound
\begin{equation*}
    \left(t + \frac{2l +1}{2}\right)^{2l+2} = \left(t+\frac{m}{2}\right)^{m+1}.
\end{equation*}

\section{Proof of \texorpdfstring{\Cref{prop:mi_mGDP_3}}{Proposition 6.6}}
\label[appendix]{app:proof_prop_mGDP_3}

In the proofs of \Cref{prop:mi_mGDP,prop:mi_mGDP_2}, we designed a covering of the whole space of training sets to approximate the marginal distribution of the output hypothesis.
However, as the number of samples $n$ increases, the training sets that are more likely to be chosen accumulate on a certain region of space.
Similarly to typical sequences, we may define typical training sets given the connection between training sets and types.
We recall the definition of the \emph{strong typical} set~\citep[Section 10.6]{Cover2006}:
\begin{align}
    \cT_\varepsilon^n = \big \{ &s \in \cZ^n : |T_s(z) - \bP_Z(z) | \leq \varepsilon \textnormal{ for every } z \in \cZ  \textnormal{ such that } \bP_Z(z) > 0 \big \} \nonumber \\
    & \cap \big \{ s \in \cZ^n : N(z | s) = 0  \leq \varepsilon \textnormal{ for every } z \in \cZ \textnormal{ such that } \bP_Z(z) = 0 \big \}
     \label{eq:prop4_typical_set}
\end{align}
where $T_{s}(z)$ and $N(z|s)$ are defined in \Cref{def:type}.
For simplicity, we denote the strong typical set as $\cT$ in the sequel.
We also assume that $\bP_Z(z)>0$ for all $z \in \cZ$; otherwise, we may eliminate the elements with zero probability and reduce the cardinality of $\cZ$.

Directly from the definition in~\eqref{eq:prop4_typical_set}, we have that, for all $s \in \cT$ and all $z \in \cZ$,
\begin{equation*}
 \big| N(z|s) - n \bP_Z(z) \big| \leq n \varepsilon,
\end{equation*}
and by Hoeffding's inequality~\eqref{eq:hoeffding} and the union bound,
\begin{equation*}
  \bP_{S}[\cZ^n \setminus \cT] \leq 2|\cZ| e^{-2n\varepsilon^2}.
\end{equation*}
If we choose $\varepsilon=\sqrt{\frac{\log n}{n}}$, we get that the typical vector of counts  $N_s$ is found inside a $|\cZ|$-dimensional hypercube\footnote{Here we note that the covering is performed in a $|\cZ|$-dimensional space, unlike the previous proofs where we covered $(|\cZ|-1)$-dimensional spaces. Although this leads to a looser bound, the final expression is more manageable.}
 of side
\begin{equation}
 l_{\cT} \leq 2\sqrt{n \log n},
 \label{eq:prop4_l_T}
\end{equation}
with a probability
\begin{equation}
 \bP_S[\cT] \geq 1 - \frac{2|\cZ|}{n^2}.
 \label{eq:prop4_prob_s}
\end{equation}

We may now devise a covering of the set $\cT$ by splitting each dimension in $t$ parts. Thus, the side of each small hypercube has length
\begin{equation*}
 l \coloneqq \frac{l_{\cT}}{t} \leq \frac{2\sqrt{n \log n}}{t},
\end{equation*}
where $t\leq2\sqrt{n \log n}$ or, otherwise, we have less than one type per hypercube.
Following a similar analysis as in the previous proofs, we have that, for every dimension $i \in \big[|\cZ|\big]$, the distance between the center and any other atom in the small hypercube is bounded as follows,
\begin{align*}
|N_s(z_i) - N_{s'}(z_i)| \leq \frac{\lfloor l \rfloor + 1}{2} \leq \frac{\sqrt{n \log n}}{t} + \frac{1}{2} \leq \frac{2\sqrt{n \log n}}{t},
\end{align*}
where the last inequality is due to the bound on $t$.
Therefore, the distance $d(s,s_i)$, as described in \Cref{def:dist}, between any training set $s$ belonging to the $i$-th hypercube and its center $s_i$ is bounded as
\begin{equation}
 d(s,s_i) \leq \frac{\sqrt{n \log n}}{t} |\cZ|.
 \label{eq:prop4_d_max}
\end{equation}

We now proceed to bound the desired mutual information using the \emph{golden formula} from~\Cref{prop:properties_minf} and the law of total expectation,
\begin{align}
    \minf(W;S) \leq \bE_{s \sim\bP_S^\cT} &\big[ \relent (\bP_W^{S=s} \Vert \bQ) \big] \cdot \bP_S[\cT] \nonumber\\
 & + \bE_{s \sim\bP_S^{\cS \setminus \cT}} \big[ \relent (\bP_W^{S=s} \Vert \bQ)  \big] \cdot \bP_S[\cS \setminus \cT],
\label{eq:prop4_total_exp}
\end{align}
which is a re-writing of~\eqref{eq:prop_mi_mGDP_3_minf_law_total_expectation} in a notation that better serves the purposes of this proof, and 
where the second term on the right-hand side of~\eqref{eq:prop4_total_exp} may be bounded from above using~\eqref{eq:prop4_prob_s} and~\eqref{eq:lemma_ub_1} from \Cref{lemma:mixture_kl_ub}.
More precisely, for all $s \notin \cT$,
\begin{align}
    \relent (\bP_W^{S=s} \Vert \bQ) & \stack{a}{\leq} - \log \left( \sum\nolimits_{s'} \omega_{s'} \exp \big( {-} \relent\big(\bP_W^{S=s} \Vert \bP_W^{S=s'} \big) \big) \right) \nonumber \\
	&\stack{b}{\leq} \sum\nolimits_{s'} \omega_{s'} \relent\big(\bP_W^{S=s} \Vert \bP_W^{S=s'} \big) \nonumber \\ 
	&\leq \max_{s'} \big\lbrace \relent\big(\bP_W^{S=s} \Vert \bP_W^{S=s'} \big) \big\rbrace,
	\label{eq:worst_case_div}
\end{align}
where $(a)$ is due to~\eqref{eq:lemma_ub_1} and considering the mixture $\bQ$ as the weighted sum of $\bP_W^{S=s'}$, where each $s'$ is the training set at the center of the covering hypercubes; and $(b)$ stems from Jensen's inequality.
Next, we may leverage \Cref{claim:diff_priv_distance} in order to find the upper bounds for $\varepsilon$-DP and $\mu$-GDP algorithms.

\subsection{\texorpdfstring{$\varepsilon$}{e}-DP Algorithms}

If we assume the worst case in bound~\eqref{eq:dp_distance} from \Cref{claim:diff_priv_distance}, we have that, for every $s \notin \cT$, \Cref{eq:worst_case_div} leads to
\begin{equation*}
	\relent (\bP_W^{S=s} \Vert \bQ) \leq \varepsilon n.
	\label{eq:worst_case_div_dp}
\end{equation*}
Therefore, since~\eqref{eq:worst_case_div} does not depend on $S$, and taking into account~\eqref{eq:prop4_prob_s}, we have that
\begin{equation}
	\bE_{s \sim\bP_S^{\cS \setminus \cT}} \big[ \relent (\bP_W^{S=s} \Vert \bQ) \big] \cdot \bP_S[\cS \setminus \cT] \leq  \frac{2 |\cZ| \varepsilon}{n}.
	\label{eq:second_term_rhs_prop4_dp}
\end{equation}

The first term on the right-hand side of~\eqref{eq:prop4_total_exp} may be analyzed similarly as in the previous proofs but considering only the covering of $\cT$; this reduced covering determines the maximum distance~\eqref{eq:prop4_d_max} inside each small hypercube.
In other words,
\begin{align}
\bE_{s \sim\bP_S^\cT} \big[ \relent (\bP_W^{S=s} \Vert \bQ) \big] \cdot \bP_S \big[ \cT \big] \leq \frac{\sqrt{n \log n}}{t} |\cZ| \varepsilon +|\cZ| \log t,
 \label{eq:prop4_partial_exp}
\end{align}
where we simply used that $\bP_S \big[ \cT \big] \leq 1$. The value of $t$ that minimizes this expression is 
\begin{equation}
t = \sqrt{n \log n} \varepsilon,
\label{eq:prop4_opt_t_dp}
\end{equation}
where we need to verify that $1 \leq t \leq 2 \sqrt{n \log n}$. If this condition holds, we may replace~\eqref{eq:prop4_opt_t_dp} in~\eqref{eq:prop4_partial_exp} and, jointly with~\eqref{eq:prop4_total_exp} and~\eqref{eq:second_term_rhs_prop4_dp}, obtain
\begin{equation}
    \minf(W;S) \leq |\cZ| \log(e \varepsilon \sqrt{n \log n}) + 2|\cZ| \frac{\varepsilon}{n}.
\label{eq:prop4_mut_inf_1_dp}
\end{equation}
Given the constraint on $t$, the range of $\varepsilon$ for which this bound is valid is 
\begin{equation*}
\frac{1}{\sqrt{n \log n}} \leq \varepsilon \leq 2.
\end{equation*}

If $\varepsilon < \nicefrac{1}{\sqrt{n \log n}}$, which corresponds to an optimal $t$ such that $t < 1$ according to~\eqref{eq:prop4_opt_t_dp}, we choose $t=1$ as it was noted in the proof of \Cref{prop:mi_mGDP}. In this case, the mutual information is bounded as follows,
\begin{equation}
	\minf(W;S) \leq |\cZ| \varepsilon \sqrt{n \log n} + 2|\cZ| \frac{\varepsilon}{n}.
	\label{eq:prop4_mut_inf_2_dp}
\end{equation}
We then proceed to combine~\eqref{eq:prop4_mut_inf_1_dp} and~\eqref{eq:prop4_mut_inf_2_dp} into a single, more compact bound. Using a similar argument as in the proof of \Cref{prop:mi_mGDP}, we obtain~\eqref{eq:prop_eDP_3_main_1}.

Finally, if $\varepsilon > 2$, which corresponds to $t > 2 \sqrt{n \log n}$, we fix $t = 2 \sqrt{n \log n} + 1$. In this case, we are including each training set in $\mathcal{T}$ into the mixture. Instead of replacing this value of $t$ in~\eqref{eq:prop4_partial_exp}, we note that a uniform covering of $\cT$ results in
\begin{align}
 \bE_{s \sim\bP_S^\cT} \big[ \relent (\bP_W^{S=s} \Vert \bQ) \big] \cdot \bP_S[\cT] &\leq \sum_{s \in \cT}\bP_S[s] \cdot \min_{s' \in \cT} \left \lbrace \relent \big( \bP_W^{S=s} \Vert \bP_{W}^{T_{s'}}\big) - \log \omega_{s'} \right \rbrace \nonumber \\
 &=\bP_S[\cT] \cdot \log |\cT| \nonumber\\
 &\leq |\cZ| \log \big( 1+ 2\sqrt{n\log n} \big),
 \label{eq:prop4_large_epsilon}
\end{align}
where the first inequality is due to \Cref{lemma:mixture_kl_ub}, and the last inequality follows from the fact that $\cT$ is contained by a $\cZ$-dimensional hypercube of side $2\sqrt{n\log n}$ as stated in~\eqref{eq:prop4_l_T}. Combining~\eqref{eq:prop4_total_exp}, \eqref{eq:second_term_rhs_prop4_dp}, and~\eqref{eq:prop4_large_epsilon}, we obtain~\eqref{eq:prop_eDP_3_main_2}.

\subsection{\texorpdfstring{$\mu$}{m}-GDP Algorithms}

If we assume now the worst case in bound~\eqref{eq:gdp_distance} from \Cref{claim:diff_priv_distance} and taking into account~\eqref{eq:worst_case_div}, we have that
\begin{equation*}
	\relent (\bP_W^{S=s} \Vert \bQ) \leq \frac{1}{2} \mu^2 n^2,
\end{equation*}
which in combination with~\eqref{eq:prop4_prob_s} means that
\begin{equation*}
	\bE_{s \sim\bP_S^{S \setminus \cT}} \big[ \relent (\bP_W^{S=s} \Vert \bQ) \big]\cdot \bP_S[\cS \setminus \cT] \leq |\cZ| \mu^2.
\end{equation*}

Taking into account \Cref{claim:diff_priv_distance} and that we are considering only the covering of $\cT$, the first term on the right-hand side of~\eqref{eq:prop4_total_exp} is now 
\begin{align*}
\bE_{s \sim\bP_S^\cT} \big[ \relent (\bP_W^{S=s} \Vert \bQ) \big] \cdot \bP_S [ \cT ] \leq \frac{1}{2} \frac{n \log n}{t^2} |\cZ|^2 \mu^2 +|\cZ| \log t,
\end{align*}
which is minimized when $t = \sqrt{|\cZ| n \log n} \mu$.
If we operate analogously as for $\varepsilon$-DP algorithms, we obtain~\eqref{eq:prop_mGDP_3_main_1} when $\mu \leq \nicefrac{2}{\sqrt{|\cZ|}}$, and~\eqref{eq:prop_mGDP_3_main_2} otherwise.
This concludes the proof of \Cref{prop:mi_mGDP_3}.
    
\end{subappendices}

\chapter{Discussion and Future Work}
\label{ch:discussion_and_future_work}
In this manuscript, we studied the information-theoretic framework for generalization. This framework draws from information theory to define measures of dependence between the algorithm and the training set, which are then used to characterize the algorithm's generalization. We delved deep into the framework analyzing existing results, coming up with new ones, uncovering some of its limitations, and unveiling some of its connections with the field of privacy.

In~\Cref{ch:expected_generalization_error}, we surveyed, analyzed, and compared most of the relevant literature on information-theoretic expected generalization guarantees. Here, we provided a comprehensive, structured, and systematic method to generate new guarantees considering different dependencies of the algorithm's output hypothesis: the whole training set, individual data instances, or random subsets of the training set. This method only requires the existence of a lemma decoupling two random variables of the form of the Donsker and Varadhan~\Cref{lemma:dv_and_gvp}, the Kantorovich--Rubinstein duality~\Cref{lemma:kantorovich_rubinstein_duality}, or the $f$-divergences decoupling~\Cref{lemma:variational_representation_f_divergences}. Therefore, it is agnostic to the measure of dependence considered. %

Furthermore, in this chapter, we collected and distilled some of the recent work employing the information-theoretic relative-entropy generalization error bounds to characterize the generalization capabilities of noisy, iterative algorithms. Also, we discuss how these techniques have been recently used to characterize non-noisy algorithms, such as stochastic gradient descent, by comparing them to a noisy version. Moreover, we also showed that these bounds are tight, in the sense that there always exist non-trivial problems whose generalization can be precisely characterized by them. Later, we uncovered some of the limitations of these bounds. To be precise, we noted that there is a convex, Lipschitz function and a bounded domain where virtually every relative entropy-based information-theoretic guarantee is vacuous for gradient descent. However, the framework of uniform stability ensures that gradient descent always generalizes in that setting. %

By the individualized nature of the information-theoretic framework to generalization, a complete characterization of the situations under which it is advantageous is impossible. Still, many other algorithms and situations may benefit from this kind of study outside of noisy iterative algorithms, such as the compression schemes studied in~\citep{steinke2020reasoning,haghifam2021towards}. For example, there has been little study of the generalization guarantees of non-parameterized algorithms, such as decision trees or random forests, where adapting these approaches could be beneficial.
Moreover, while there have been subsequent works to ours showing how, for every algorithm, virtually every relative entropy-based information-theoretic guarantee is vacuous~\citep{livni2023information,attias2024information}, this does not include some of the newer guarantees outlined in~\Cref{sec:futher_advances_bounds_using_information_measures}. Furthermore, these limitations only extend to situations where the loss is convex, Lipschitz, and the hypothesis space is bounded. Also, the limiting situations described both by us and in~\citep{livni2023information,attias2024information} are situations where the instance space dimension grows with the number of samples. Hence, a future research topic would be to have a more extensive characterization of the limitations of these guarantees. Namely, studying if they extend to other situations outside of convex, Lipschitz functions with bounded hypothesis space and studying if they extend to situations where the dimension of the instance space is fixed and independent of the number of samples.

To deal with some of the limitations of the relative entropy-based guarantees, we introduced and analyzed new bounds based on the Wasserstein distance. The Wasserstein distance (i) avoids the setback of the relative entropy (and other $f$-divergences and the Rényi divergence) of depending on the Radon--Nikodym derivative between the algorithm's output hypothesis distribution with respect to some reference, and (ii) takes into account the geometry of the space to compare these two distributions. %
Nonetheless, an exciting future avenue is to specialize these guarantees to specific situations to have concrete bounds that give us insight into the generalization of certain algorithms. For example, they could be used to study the generalization of noisy, iterative algorithms like we did with the relative entropy-based bounds.

Later, in~\Cref{ch:pac_bayesian_generalization}, we studied the literature and the sub-field of PAC-Bayesian generalization. First, we studied the conventional PAC-Bayesian bounds, and later we noted that every result of this kind could be extended to single-draw PAC-Bayesian bounds.%
For PAC-Bayesian bounds, we separated our study into bounded and unbounded losses:
\begin{itemize}
    \item For losses with a bounded range, we developed an equivalent expression for the Seeger--Langford~\citep{seeger2002pac,langford2001bounds} bound. This bound is also equivalent to a variant of the result from~\citet{catoni2007pac}. The main attractions of our new bound are (i) that it exhibits a linear dependence with the empirical risk of the algorithm and the dependence measure divided by the number of samples (and thus it has a fast rate of $\nicefrac{1}{n}$); and (ii) that it has an explicit form of the distribution of the optimal algorithm for that expression, that is, the algorithm that minimizes simultaneously the empirical risk and the generalization error. %

    A follow-up of this work would be to find situations where we can exploit this closed-form expression to derive algorithms with better generalization guarantees. 
    
    \item For losses with a potentially unbounded range, we developed bounds for losses with both light and heavy tails. For losses with light tails, that is, that have a bounded cumulant generating function, we obtained a PAC-Bayes analogue to the Chernoff inequality. Then, for losses with heavy tails, either with only a bounded $p$-th moment or a bounded variance, we also derived new bounds. An interesting property of the bounds for losses with a bounded $p$-th moment is that their relationship with the number of samples $n$ interpolates between a fast rate $\nicefrac{1}{n}$, when all the moments are bounded, and a slow rate $\nicefrac{1}{\sqrt{n}}$, when only the second moment is bounded. Moreover, to develop these results, we craft a method to optimize parameters in probabilistic statements. %

    A natural follow-up of this line of work is twofold. On one end, one may want to characterize particular situations where the loss has a known tail behavior with the bounds we developed. On the other end, one may wish to study different situations where a parameter needs to be optimized in probabilistic statements and seamlessly apply our technique to uncover new results.
\end{itemize}

As we discussed for the guarantees in expectation, the relative entropy, even though it is a convenient and tractable metric, has several shortcomings~\citep{livni2020limitation,bassily2018learners,haghifam2023limitations}. Therefore, a recent and exciting line of work is starting to substitute this measure of dependence in the PAC-Bayesian guarantees with other $f$-divergences~\citep{esposito2021generalization,ohnishi2021novel,kuzborskij2024better,hellstrom2020generalization}, Rényi divergences~\citep{esposito2021generalization,begin2016pac}, or integral probability metrics like the Wasserstein distance~\citep{amit2022integral,haddouche2023wasserstein,viallard2024tighter,viallard2024learning}. 

Another exciting line of research is to characterize the generalization error of interactive algorithms such as online and reinforcement learning. Here, the data instances are not independent of each other, and the results must be anytime valid. There has already been some research on the topic~\citep{haddouche2023pacbayes,seldin2012pac,chugg2023unified,flynn2023pac}. However, the field is still young and there are multiple interesting threads to pull.

Finally, in~\Cref{ch:privacy_and_generalization}, we studied the connection between privacy and generalization. More precisely, we considered the connection between generalization and the privacy frameworks of maximal leakage and differential privacy. First, we collected the folklore knowledge that algorithms with a bounded maximal leakage generalize. 
Then, we developed generalization error bounds for differentially private algorithms that, for a fixed privacy parameter $\varepsilon$, vanish as the number of samples increases. However, these guarantees are restricted to the case of permutation invariant algorithms that operate on discrete data.
A clear future avenue is to investigate further under which conditions such a feat is possible. This can be either (i) obtaining new bounds that decrease with the number of samples for a fixed privacy parameter in general, or in particular settings; or (ii) describing a set of situations under which this kind of bounds is impossible.

\chapter*{Acknowledgements}
We thank Borja's PhD thesis opponent, \emph{Prof. Benjamin Guedj}; the grading committee members, \emph{Prof. Po-Ling Loh}, \emph{Prof. Giuseppe Durisi}, and \emph{Prof. Alexandre Proutiere}; and the advance reviewer, \emph{Prof. Magnus Jansson}, for their precious feedback.

We also extend our gratitude to the co-authors of some of the papers we discuss in this monograph based on Borja's PhD, \emph{Dr. Mahdi Haghifam}, \emph{Prof. Daniel Roy}, \emph{Dr. Gintare Karolina Dziugaite}, and \emph{Prof. Omar Rivasplata}. Similarly, we want to thank \emph{Prof. Gergely Neu}, \emph{Prof. Pierre Alquier}, \emph{Prof. Maria Pérez-Ortiz}, and \emph{Prof. Mario Díaz} for giving us valuable feedback on some of the papers we used to write this tutorial.

Finally, Borja Rodríguez-Gálvez and Mikael Skoglund were funded, in part, by the Swedish Research Council under contract 2019-03606. Ragnar Thobaben was also funded, in part, by the Swedish Research Council under contract 2021-05266.

\bibliographystyle{IEEEtranN}
\renewcommand{\bibname}{References}%
\bibliography{Refs} %

\begin{thebibliography}{258}
\providecommand{\natexlab}[1]{#1}
\providecommand{\url}[1]{#1}
\csname url@samestyle\endcsname
\providecommand{\newblock}{\relax}
\providecommand{\bibinfo}[2]{#2}
\providecommand{\BIBentrySTDinterwordspacing}{\spaceskip=0pt\relax}
\providecommand{\BIBentryALTinterwordstretchfactor}{4}
\providecommand{\BIBentryALTinterwordspacing}{\spaceskip=\fontdimen2\font plus
\BIBentryALTinterwordstretchfactor\fontdimen3\font minus
  \fontdimen4\font\relax}
\providecommand{\BIBforeignlanguage}[2]{{%
\expandafter\ifx\csname l@#1\endcsname\relax
\typeout{** WARNING: IEEEtranN.bst: No hyphenation pattern has been}%
\typeout{** loaded for the language `#1'. Using the pattern for}%
\typeout{** the default language instead.}%
\else
\language=\csname l@#1\endcsname
\fi
#2}}
\providecommand{\BIBdecl}{\relax}
\BIBdecl

\bibitem[Jiang et~al.(2017)Jiang, Jiang, Zhi, Dong, Li, Ma, Wang, Dong, Shen,
  and Wang]{jiang2017artificial}
F.~Jiang, Y.~Jiang, H.~Zhi, Y.~Dong, H.~Li, S.~Ma, Y.~Wang, Q.~Dong, H.~Shen,
  and Y.~Wang, ``Artificial intelligence in healthcare: past, present and
  future,'' \emph{Stroke and vascular neurology}, vol.~2, no.~4, 2017.

\bibitem[Yurtsever et~al.(2020)Yurtsever, Lambert, Carballo, and
  Takeda]{yurtsever2020survey}
E.~Yurtsever, J.~Lambert, A.~Carballo, and K.~Takeda, ``A survey of autonomous
  driving: Common practices and emerging technologies,'' \emph{IEEE access},
  vol.~8, pp. 58\,443--58\,469, 2020.

\bibitem[Badue et~al.(2021)Badue, Guidolini, Carneiro, Azevedo, Cardoso,
  Forechi, Jesus, Berriel, Paixao, Mutz, et~al.]{badue2021self}
C.~Badue, R.~Guidolini, R.~V. Carneiro, P.~Azevedo, V.~B. Cardoso, A.~Forechi,
  L.~Jesus, R.~Berriel, T.~M. Paixao, F.~Mutz \emph{et~al.}, ``Self-driving
  cars: A survey,'' \emph{Expert Systems with Applications}, vol. 165, p.
  113816, 2021.

\bibitem[Bolton and Hand(2002)]{bolton2002statistical}
R.~J. Bolton and D.~J. Hand, ``Statistical fraud detection: A review,''
  \emph{Statistical science}, vol.~17, no.~3, pp. 235--255, 2002.

\bibitem[Xu and Raginsky(2017)]{xu2017information}
A.~Xu and M.~Raginsky, ``Information-theoretic analysis of generalization
  capability of learning algorithms,'' \emph{Advances in Neural Information
  Processing Systems (NeurIPS)}, vol.~30, 2017.

\bibitem[Russo and Zou(2016)]{russo2016controlling}
D.~Russo and J.~Zou, ``Controlling bias in adaptive data analysis using
  information theory,'' in \emph{Artificial Intelligence and Statistics}.\hskip
  1em plus 0.5em minus 0.4em\relax PMLR, 2016, pp. 1232--1240.

\bibitem[Bu et~al.(2020{\natexlab{a}})Bu, Zou, and
  Veeravalli]{bu2020tightening}
Y.~Bu, S.~Zou, and V.~V. Veeravalli, ``Tightening mutual information-based
  bounds on generalization error,'' \emph{IEEE Journal on Selected Areas in
  Information Theory}, vol.~1, no.~1, pp. 121--130, 2020.

\bibitem[Negrea et~al.(2019)Negrea, Haghifam, Dziugaite, Khisti, and
  Roy]{negrea2019information}
J.~Negrea, M.~Haghifam, G.~K. Dziugaite, A.~Khisti, and D.~M. Roy,
  ``Information-theoretic generalization bounds for {SGLD} via data-dependent
  estimates,'' \emph{Advances in Neural Information Processing Systems
  (NeurIPS)}, vol.~32, 2019.

\bibitem[Steinke and Zakynthinou(2020)]{steinke2020reasoning}
T.~Steinke and L.~Zakynthinou, ``Reasoning about generalization via conditional
  mutual information,'' in \emph{Conference on Learning Theory (COLT)}.\hskip
  1em plus 0.5em minus 0.4em\relax PMLR, 2020, pp. 3437--3452.

\bibitem[Esposito et~al.(2021)Esposito, Gastpar, and
  Issa]{esposito2021generalization}
A.~R. Esposito, M.~Gastpar, and I.~Issa, ``Generalization error bounds via
  r{\'e}nyi-, $f$-divergences and maximal leakage,'' \emph{IEEE Transactions on
  Information Theory}, vol.~67, no.~8, pp. 4986--5004, 2021.

\bibitem[Ohnishi and Honorio(2021)]{ohnishi2021novel}
Y.~Ohnishi and J.~Honorio, ``Novel change of measure inequalities with
  applications to {PAC}-{B}ayesian bounds and {M}onte {C}arlo estimation,'' in
  \emph{International Conference on Artificial Intelligence and Statistics
  (AISTATS)}.\hskip 1em plus 0.5em minus 0.4em\relax PMLR, 2021, pp.
  1711--1719.

\bibitem[Hellstr{\"o}m and Durisi(2020)]{hellstrom2020generalization}
F.~Hellstr{\"o}m and G.~Durisi, ``Generalization bounds via information density
  and conditional information density,'' \emph{IEEE Journal on Selected Areas
  in Information Theory}, vol.~1, no.~3, pp. 824--839, 2020.

\bibitem[Shawe-Taylor et~al.(1996)Shawe-Taylor, Bartlett, Williamson, and
  Anthony]{shawe1996framework}
J.~Shawe-Taylor, P.~L. Bartlett, R.~C. Williamson, and M.~Anthony, ``A
  framework for structural risk minimisation,'' in \emph{Conference on
  Computational Learning Theory (COLT)}, 1996, pp. 68--76.

\bibitem[Rissanen(1978)]{rissanen1978modeling}
J.~Rissanen, ``Modeling by shortest data description,'' \emph{Automatica},
  vol.~14, no.~5, pp. 465--471, 1978.

\bibitem[Barron(1991)]{barron1991complexity}
A.~R. Barron, ``Complexity regularization with application to artificial neural
  networks,'' in \emph{Nonparametric functional estimation and related
  topics}.\hskip 1em plus 0.5em minus 0.4em\relax Springer, 1991, pp. 561--576.

\bibitem[Barron and Cover(1991)]{barron1991minimum}
A.~R. Barron and T.~M. Cover, ``Minimum complexity density estimation,''
  \emph{IEEE transactions on information theory}, vol.~37, no.~4, pp.
  1034--1054, 1991.

\bibitem[Kearns et~al.(1995)Kearns, Mansour, Ng, and
  Ron]{kearns1995experimental}
M.~Kearns, Y.~Mansour, A.~Y. Ng, and D.~Ron, ``An experimental and theoretical
  comparison of model selection methods,'' in \emph{Conference on Computational
  Learning Theory (COLT)}, 1995, pp. 21--30.

\bibitem[McAllester(1998)]{mcallester1998some}
D.~A. McAllester, ``Some {PAC}-{B}ayesian theorems,'' in \emph{Conference on
  Computational Learning Theory (COLT)}, 1998, pp. 230--234.

\bibitem[McAllester(1999)]{mcallester1999pac}
------, ``{PAC}-{B}ayesian model averaging,'' in \emph{Conference on
  Computational Learning Theory (COLT)}, 1999, pp. 164--170.

\bibitem[McAllester(2003)]{mcallester2003pac}
------, ``{PAC}-{B}ayesian stochastic model selection,'' \emph{Machine
  Learning}, vol.~51, no.~1, pp. 5--21, 2003.

\bibitem[Seeger(2002)]{seeger2002pac}
M.~Seeger, ``{PAC}-{B}ayesian generalisation error bounds for {G}aussian
  process classification,'' \emph{Journal of machine learning research},
  vol.~3, no. Oct, pp. 233--269, 2002.

\bibitem[Langford and Seeger(2001)]{langford2001bounds}
J.~Langford and M.~Seeger, ``Bounds for averaging classifiers,'' School of
  Computer Science, Carnegie Mellon University, Tech. Rep., 2001.

\bibitem[Catoni(2003)]{catoni2003pac}
O.~Catoni, ``A {PAC}-{B}ayesian approach to adaptive classification,'' CNRS –
  Laboratoire de Probabilités et Modèles Aléatoires, Uni- versité Paris,
  Tech. Rep., 2003.

\bibitem[Catoni(2007)]{catoni2007pac}
------, ``{PAC}-{B}ayesian supervised classification: the thermodynamics of
  statistical learning,'' \emph{arXiv preprint arXiv:0712.0248}, 2007.

\bibitem[Tolstikhin and Seldin(2013)]{tolstikhin2013pac}
I.~O. Tolstikhin and Y.~Seldin, ``{PAC}-{B}ayes-empirical-{B}ernstein
  inequality,'' \emph{Advances in Neural Information Processing Systems
  (NeurIPS)}, vol.~26, 2013.

\bibitem[Thiemann et~al.(2017)Thiemann, Igel, Wintenberger, and
  Seldin]{thiemann2017strongly}
N.~Thiemann, C.~Igel, O.~Wintenberger, and Y.~Seldin, ``A strongly quasiconvex
  {PAC}-{B}ayesian bound,'' in \emph{International Conference on Algorithmic
  Learning Theory (ALT)}.\hskip 1em plus 0.5em minus 0.4em\relax PMLR, 2017,
  pp. 466--492.

\bibitem[Rivasplata et~al.(2019)Rivasplata, Tankasali, and
  Szepesv{\'a}ri]{rivasplata2019pac}
O.~Rivasplata, V.~M. Tankasali, and C.~Szepesv{\'a}ri, ``{PAC}-{B}ayes with
  backprop,'' \emph{arXiv preprint arXiv:1908.07380}, 2019.

\bibitem[Catoni(2004)]{catoni2004statistical}
O.~Catoni, \emph{Statistical learning theory and stochastic optimization:
  {\'E}cole d'Et{\'e} de Probabilit{\'e}s de Saint-Flour, Summer School
  XXXI-2001}.\hskip 1em plus 0.5em minus 0.4em\relax Springer Science \&
  Business Media, 2004, vol. 1851.

\bibitem[Guedj and Pujol(2021)]{guedj2021still}
B.~Guedj and L.~Pujol, ``Still no free lunches: the price to pay for tighter
  {PAC}-{B}ayes bounds,'' \emph{Entropy}, vol.~23, no.~11, p. 1529, 2021.

\bibitem[Alquier(2006)]{alquier2006transductive}
P.~Alquier, ``Transductive and inductive adaptative inference for regression
  and density estimation,'' \emph{University Paris 6}, 2006.

\bibitem[Alquier and Guedj(2018)]{alquier2018simpler}
P.~Alquier and B.~Guedj, ``Simpler {PAC}-{B}ayesian bounds for hostile data,''
  \emph{Machine Learning}, vol. 107, no.~5, pp. 887--902, 2018.

\bibitem[Holland(2019)]{holland2019pac}
M.~Holland, ``{PAC}-{B}ayes under potentially heavy tails,'' \emph{Advances in
  Neural Information Processing Systems (NeurIPS)}, vol.~32, 2019.

\bibitem[Kuzborskij and Szepesv{\'a}ri(2019)]{kuzborskij2019efron}
I.~Kuzborskij and C.~Szepesv{\'a}ri, ``Efron-{S}tein {PAC}-{B}ayesian
  inequalities,'' \emph{arXiv preprint arXiv:1909.01931}, 2019.

\bibitem[Haddouche et~al.(2021)Haddouche, Guedj, Rivasplata, and
  Shawe-Taylor]{haddouche2021pac}
M.~Haddouche, B.~Guedj, O.~Rivasplata, and J.~Shawe-Taylor, ``{PAC}-{B}ayes
  unleashed: {G}eneralisation bounds with unbounded losses,'' \emph{Entropy},
  vol.~23, no.~10, p. 1330, 2021.

\bibitem[Haddouche and Guedj(2023{\natexlab{a}})]{haddouche2023pacbayes}
M.~Haddouche and B.~Guedj, ``{PAC}-{B}ayes generalisation bounds for
  heavy-tailed losses through supermartingales,'' \emph{Transactions on Machine
  Learning Research}, 2023.

\bibitem[Wang et~al.(2015)Wang, Shen, Miao, Chen, and Xu]{wang2015pac}
Z.~Wang, L.~Shen, Y.~Miao, S.~Chen, and W.~Xu, ``{PAC}-{B}ayesian inequalities
  of some random variables sequences,'' \emph{Journal of Inequalities and
  Applications}, vol. 2015, no.~1, pp. 1--8, 2015.

\bibitem[Goldwasser and Micali(1984)]{Goldwasser1984ProbabilisticE}
S.~Goldwasser and S.~Micali, ``Probabilistic encryption,'' \emph{J. Comput.
  Syst. Sci.}, vol.~28, pp. 270--299, 1984.

\bibitem[Dwork and Naor(2010)]{dwork2010difficulties}
C.~Dwork and M.~Naor, ``On the difficulties of disclosure prevention in
  statistical databases or the case for differential privacy,'' \emph{Journal
  of Privacy and Confidentiality}, vol.~2, no.~1, 2010.

\bibitem[Smith(2009)]{smith2009foundations}
G.~Smith, ``On the foundations of quantitative information flow,'' in
  \emph{International Conference on Foundations of Software Science and
  Computational Structures}.\hskip 1em plus 0.5em minus 0.4em\relax Springer,
  2009, pp. 288--302.

\bibitem[Issa et~al.(2020)Issa, Wagner, and Kamath]{issa2020operational}
I.~Issa, A.~B. Wagner, and S.~Kamath, ``An operational approach to information
  leakage,'' \emph{IEEE Transactions on Information Theory}, vol.~66, no.~3,
  pp. 1625--1657, 2020.

\bibitem[Mário et~al.(2012)Mário, Chatzikokolakis, Palamidessi, and
  Smith]{m2012measuring}
S.~A. Mário, K.~Chatzikokolakis, C.~Palamidessi, and G.~Smith, ``Measuring
  information leakage using generalized gain functions,'' in \emph{2012 IEEE
  25th Computer Security Foundations Symposium}.\hskip 1em plus 0.5em minus
  0.4em\relax IEEE, 2012, pp. 265--279.

\bibitem[Alvim et~al.(2014)Alvim, Chatzikokolakis, McIver, Morgan, Palamidessi,
  and Smith]{alvim2014additive}
M.~S. Alvim, K.~Chatzikokolakis, A.~McIver, C.~Morgan, C.~Palamidessi, and
  G.~Smith, ``Additive and multiplicative notions of leakage, and their
  capacities,'' in \emph{2014 IEEE 27th Computer Security Foundations
  Symposium}.\hskip 1em plus 0.5em minus 0.4em\relax IEEE, 2014, pp. 308--322.

\bibitem[Dwork et~al.(2006)Dwork, McSherry, Nissim, and
  Smith]{dwork2006calibrating}
C.~Dwork, F.~McSherry, K.~Nissim, and A.~Smith, ``Calibrating noise to
  sensitivity in private data analysis,'' in \emph{Theory of Cryptography:
  Third Theory of Cryptography Conference, TCC 2006, New York, NY, USA, March
  4-7, 2006. Proceedings 3}.\hskip 1em plus 0.5em minus 0.4em\relax Springer,
  2006, pp. 265--284.

\bibitem[Dwork et~al.(2014)Dwork, Roth, et~al.]{dwork2014algorithmic}
C.~Dwork, A.~Roth \emph{et~al.}, ``The algorithmic foundations of differential
  privacy,'' \emph{Foundations and Trends{\textregistered} in Theoretical
  Computer Science}, vol.~9, no. 3--4, pp. 211--407, 2014.

\bibitem[Dwork et~al.(2015{\natexlab{a}})Dwork, Feldman, Hardt, Pitassi,
  Reingold, and Roth]{dwork2015preserving}
C.~Dwork, V.~Feldman, M.~Hardt, T.~Pitassi, O.~Reingold, and A.~L. Roth,
  ``Preserving statistical validity in adaptive data analysis,'' in \emph{ACM
  Symposium on Theory of Computing (STOC)}, 2015, pp. 117--126.

\bibitem[Dwork et~al.(2015{\natexlab{b}})Dwork, Feldman, Hardt, Pitassi,
  Reingold, and Roth]{dwork2015generalization}
C.~Dwork, V.~Feldman, M.~Hardt, T.~Pitassi, O.~Reingold, and A.~Roth,
  ``Generalization in adaptive data analysis and holdout reuse,''
  \emph{Advances in Neural Information Processing Systems (NeurIPS)}, vol.~28,
  2015.

\bibitem[Wang et~al.(2016)Wang, Lei, and Fienberg]{JMLR:v17:15-313}
Y.-X. Wang, J.~Lei, and S.~E. Fienberg, ``Learning with differential privacy:
  Stability, learnability and the sufficiency and necessity of {ERM}
  principle,'' \emph{Journal of Machine Learning Research}, vol.~17, no. 183,
  pp. 1--40, 2016.

\bibitem[Jung et~al.(2021)Jung, Ligett, Neel, Roth, Sharifi-Malvajerdi, and
  Shenfeld]{jung2021new}
C.~Jung, K.~Ligett, S.~Neel, A.~Roth, S.~Sharifi-Malvajerdi, and M.~Shenfeld,
  ``A new analysis of differential privacy’s generalization guarantees,'' in
  \emph{ACM Symposium on Theory of Computing (STOC)}, 2021, pp. 9--9.

\bibitem[Gray(2009)]{gray2009probability}
R.~M. Gray, \emph{Probability, random processes, and ergodic properties},
  2nd~ed.\hskip 1em plus 0.5em minus 0.4em\relax Springer, 2009.

\bibitem[Kallenberg(2002)]{kallenberg1997foundations}
O.~Kallenberg, \emph{Foundations of modern probability}, 2nd~ed.\hskip 1em plus
  0.5em minus 0.4em\relax Springer, 2002.

\bibitem[McDonald and Weiss(2013)]{mcdonald1999course}
J.~N. McDonald and N.~A. Weiss, \emph{A course in real analysis}, 2nd~ed.\hskip
  1em plus 0.5em minus 0.4em\relax Elsevier, 2013.

\bibitem[Norman L.~Johnson(1994)]{johnson1994continuous}
N.~B. Norman L.~Johnson, Samuel~Kotz, \emph{Continuous Univariate
  Distributions}.\hskip 1em plus 0.5em minus 0.4em\relax John Wiley \& Sons
  Inc., 1994, vol.~1.

\bibitem[Asmussen et~al.(2016)Asmussen, Jensen, and
  Rojas-Nandayapa]{asmussen2016laplace}
S.~Asmussen, J.~L. Jensen, and L.~Rojas-Nandayapa, ``On the {L}aplace transform
  of the lognormal distribution,'' \emph{Methodology and Computing in Applied
  Probability}, vol.~18, pp. 441--458, 2016.

\bibitem[Banerjee and Mont{\'u}far(2021)]{banerjee2021information}
P.~K. Banerjee and G.~Mont{\'u}far, ``Information complexity and generalization
  bounds,'' in \emph{2021 IEEE International Symposium on Information Theory
  (ISIT)}.\hskip 1em plus 0.5em minus 0.4em\relax IEEE, 2021, pp. 676--681.

\bibitem[Zhang(2006)]{zhang2006information}
T.~Zhang, ``Information-theoretic upper and lower bounds for statistical
  estimation,'' \emph{IEEE Transactions on Information Theory}, vol.~52, no.~4,
  pp. 1307--1321, 2006.

\bibitem[Cover and Thomas(2006)]{Cover2006}
T.~M. Cover and J.~A. Thomas, \emph{Elements of Information Theory},
  2nd~ed.\hskip 1em plus 0.5em minus 0.4em\relax Wiley-Interscience, 2006.

\bibitem[Polyanskiy and Wu(2023)]{polianskyi2022}
Y.~Polyanskiy and Y.~Wu, \emph{Information Theory: From Coding to Learning},
  1st~ed.\hskip 1em plus 0.5em minus 0.4em\relax Cambridge University Press,
  2023.

\bibitem[MacKay(2003)]{mackay2003information}
D.~J. MacKay, \emph{Information theory, inference and learning
  algorithms}.\hskip 1em plus 0.5em minus 0.4em\relax Cambridge university
  press, 2003.

\bibitem[Dwork and Rothblum(2016)]{dwork2016concentrated}
C.~Dwork and G.~N. Rothblum, ``Concentrated differential privacy,'' \emph{arXiv
  preprint arXiv:1603.01887}, 2016.

\bibitem[Mironov(2017)]{mironov2017renyi}
I.~Mironov, ``R{\'e}nyi differential privacy,'' in \emph{2017 IEEE 30th
  computer security foundations symposium (CSF)}.\hskip 1em plus 0.5em minus
  0.4em\relax IEEE, 2017, pp. 263--275.

\bibitem[Bun and Steinke(2016)]{bun2016concentrated}
M.~Bun and T.~Steinke, ``Concentrated differential privacy: Simplifications,
  extensions, and lower bounds,'' in \emph{Theory of Cryptography
  Conference}.\hskip 1em plus 0.5em minus 0.4em\relax Springer, 2016, pp.
  635--658.

\bibitem[Balle et~al.(2020)Balle, Barthe, Gaboardi, Hsu, and
  Sato]{balle2020hypothesis}
B.~Balle, G.~Barthe, M.~Gaboardi, J.~Hsu, and T.~Sato, ``Hypothesis testing
  interpretations and {R}{\'e}nyi differential privacy,'' in
  \emph{International Conference on Artificial Intelligence and Statistics
  (AISTATS)}.\hskip 1em plus 0.5em minus 0.4em\relax PMLR, 2020, pp.
  2496--2506.

\bibitem[Saeidian et~al.(2023{\natexlab{a}})Saeidian, Cervia, Oechtering, and
  Skoglund]{saeidian2023apointwise}
S.~Saeidian, G.~Cervia, T.~J. Oechtering, and M.~Skoglund, ``Pointwise maximal
  leakage,'' \emph{IEEE Transactions on Information Theory}, 2023.

\bibitem[Saeidian et~al.(2023{\natexlab{b}})Saeidian, Cervia, Oechtering, and
  Skoglund]{saeidian2023bpointwise}
------, ``Pointwise maximal leakage on general alphabets,'' in \emph{2023 IEEE
  International Symposium on Information Theory (ISIT)}, 2023, pp. 388--393.

\bibitem[Shannon(1948)]{shannon1948mathematical}
C.~E. Shannon, ``A mathematical theory of communication,'' \emph{The Bell
  system technical journal}, vol.~27, no.~3, pp. 379--423, 1948.

\bibitem[Donsker and Varadhan(1975)]{donsker1975asymptotic}
M.~D. Donsker and S.~S. Varadhan, ``Asymptotic evaluation of certain {M}arkov
  process expectations for large time, {I},'' \emph{Communications on Pure and
  Applied Mathematics}, vol.~28, no.~1, pp. 1--47, 1975.

\bibitem[Gibbs(1902)]{gibbs1902elementary}
J.~W. Gibbs, \emph{Elementary principles in statistical mechanics: developed
  with especial reference to the rational foundations of thermodynamics}.\hskip
  1em plus 0.5em minus 0.4em\relax C. Scribner's sons, 1902.

\bibitem[Gouverneur et~al.(2022)Gouverneur, Rodr{\'\i}guez-G{\'a}lvez,
  Oechtering, and Skoglund]{gouverneur2022information}
A.~Gouverneur, B.~Rodr{\'\i}guez-G{\'a}lvez, T.~J. Oechtering, and M.~Skoglund,
  ``An information-theoretic analysis of {B}ayesian reinforcement learning,''
  \emph{Allerton}, 2022.

\bibitem[Gouverneur et~al.(2023)Gouverneur, Rodr{\'\i}guez-G{\'a}lvez,
  Oechtering, and Skoglund]{gouverneur2023thompson}
------, ``Thompson sampling regret bounds for contextual bandits with
  sub-{G}aussian rewards,'' in \emph{IEEE International Symposium on
  Information Theory (ISIT)}.\hskip 1em plus 0.5em minus 0.4em\relax IEEE,
  2023.

\bibitem[Gouverneur et~al.(2024)Gouverneur, Rodr{\'\i}guez-G{\'a}lvez,
  Oechtering, and Skoglund]{gouverneur2024chained}
------, ``Chained information-theoretic bounds and tight regret rate for linear
  bandit problems,'' \emph{arXiv preprint arXiv:2403.03361}, 2024,
  \emph{{S}ubmitted to the Conference on Learning Theory (COLT).}

\bibitem[Wainwright(2019)]{wainwright2019high}
M.~J. Wainwright, \emph{High-dimensional statistics: A non-asymptotic
  viewpoint}, ser. Cambridge series in tatistical and probabilistic
  mathematics.\hskip 1em plus 0.5em minus 0.4em\relax Cambridge university
  press, 2019, vol.~48.

\bibitem[Pinsker(1964)]{pinsker1964information}
M.~S. Pinsker, ``Information and information stability of random variables and
  processes,'' \emph{Holden-Day}, 1964.

\bibitem[Bretagnolle and Huber(1978)]{bretagnolle1978estimation}
J.~Bretagnolle and C.~Huber, ``Estimation des densit{\'e}s: risque minimax,''
  \emph{S{\'e}minaire de probabilit{\'e}s de Strasbourg}, vol.~12, pp.
  342--363, 1978.

\bibitem[Kullback(1967)]{kullback1967lower}
S.~Kullback, ``A lower bound for discrimination information in terms of
  variation (corresp.),'' \emph{IEEE transactions on Information Theory},
  vol.~13, no.~1, pp. 126--127, 1967.

\bibitem[Csiszár(1967)]{csiszar1967information}
I.~Csiszár, ``Information-type measures of difference of probability
  distributions and indirect observations,'' \emph{Studia Sci. Math.
  Hungarica}, vol.~2, 1967.

\bibitem[Kemperman(1969)]{kemperman1969optimum}
J.~Kemperman, ``On the optimum rate of transmitting information,'' \emph{The
  Annals of Mathematical Statistics}, pp. 2156--2177, 1969.

\bibitem[Harremo{\"e}s and Vajda(2011)]{harremoes2011pairs}
P.~Harremo{\"e}s and I.~Vajda, ``On pairs of $ f $-divergences and their joint
  range,'' \emph{IEEE Transactions on Information Theory}, vol.~57, no.~6, pp.
  3230--3235, 2011.

\bibitem[Van~Erven and Harremos(2014)]{van2014renyi}
T.~Van~Erven and P.~Harremos, ``R{\'e}nyi divergence and {K}ullback--{L}eibler
  divergence,'' \emph{IEEE Transactions on Information Theory}, vol.~60, no.~7,
  pp. 3797--3820, 2014.

\bibitem[Gr{\"u}nwald(2007)]{grunwald2007minimum}
P.~D. Gr{\"u}nwald, \emph{The minimum description length principle}.\hskip 1em
  plus 0.5em minus 0.4em\relax MIT press, 2007.

\bibitem[Boyd and Vandenberghe(2004)]{boyd2004convex}
S.~P. Boyd and L.~Vandenberghe, \emph{Convex optimization}.\hskip 1em plus
  0.5em minus 0.4em\relax Cambridge university press, 2004.

\bibitem[Hiriart-Urruty and Lemar{\'e}chal(2004)]{hiriart2004fundamentals}
J.-B. Hiriart-Urruty and C.~Lemar{\'e}chal, \emph{Fundamentals of convex
  analysis}.\hskip 1em plus 0.5em minus 0.4em\relax Springer Science \&
  Business Media, 2004.

\bibitem[Boucheron et~al.(2003)Boucheron, Lugosi, and
  Bousquet]{boucheron2003concentration}
S.~Boucheron, G.~Lugosi, and O.~Bousquet, ``Concentration inequalities,'' in
  \emph{Summer school on machine learning}.\hskip 1em plus 0.5em minus
  0.4em\relax Springer, 2003, pp. 208--240.

\bibitem[Villani(2009)]{villani2009optimal}
C.~Villani, \emph{Optimal transport: old and new}.\hskip 1em plus 0.5em minus
  0.4em\relax Springer, 2009.

\bibitem[van Handel(2014)]{van2014probability}
R.~van Handel, ``Probability in high dimension,'' Princeton University, NJ,
  Tech. Rep., 2014.

\bibitem[Valiant(1984)]{valiant1984theory}
L.~G. Valiant, ``A theory of the learnable,'' \emph{Communications of the ACM},
  vol.~27, no.~11, pp. 1134--1142, 1984.

\bibitem[Shalev-Shwartz and Ben-David(2014)]{shalev2014understanding}
S.~Shalev-Shwartz and S.~Ben-David, \emph{Understanding machine learning: From
  theory to algorithms}.\hskip 1em plus 0.5em minus 0.4em\relax Cambridge
  university press, 2014.

\bibitem[Dudley et~al.(1991)Dudley, Gin{\'e}, and Zinn]{dudley1991uniform}
R.~M. Dudley, E.~Gin{\'e}, and J.~Zinn, ``Uniform and universal
  {G}livenko-{C}antelli classes,'' \emph{Journal of Theoretical Probability},
  vol.~4, no.~3, pp. 485--510, 1991.

\bibitem[Shalev-Shwartz et~al.(2010)Shalev-Shwartz, Shamir, Srebro, and
  Sridharan]{shalev2010learnability}
S.~Shalev-Shwartz, O.~Shamir, N.~Srebro, and K.~Sridharan, ``Learnability,
  stability and uniform convergence,'' \emph{Journal of Machine Learning
  Research}, vol.~11, pp. 2635--2670, 2010.

\bibitem[Zhang et~al.(2021{\natexlab{a}})Zhang, Bengio, Hardt, Recht, and
  Vinyals]{zhang2021understanding}
C.~Zhang, S.~Bengio, M.~Hardt, B.~Recht, and O.~Vinyals, ``Understanding deep
  learning (still) requires rethinking generalization,'' \emph{Communications
  of the ACM}, vol.~64, no.~3, pp. 107--115, 2021.

\bibitem[Nagarajan and Kolter(2019)]{nagarajan2019uniform}
V.~Nagarajan and J.~Z. Kolter, ``Uniform convergence may be unable to explain
  generalization in deep learning,'' \emph{Advances in Neural Information
  Processing Systems (NeurIPS)}, vol.~32, 2019.

\bibitem[Negrea et~al.(2020)Negrea, Dziugaite, and Roy]{negrea2020defense}
J.~Negrea, G.~K. Dziugaite, and D.~Roy, ``In defense of uniform convergence:
  Generalization via derandomization with an application to interpolating
  predictors,'' in \emph{International Conference on Machine Learning
  (ICML)}.\hskip 1em plus 0.5em minus 0.4em\relax PMLR, 2020, pp. 7263--7272.

\bibitem[Mohri et~al.(2018)Mohri, Rostamizadeh, and
  Talwalkar]{mohri2018foundations}
M.~Mohri, A.~Rostamizadeh, and A.~Talwalkar, \emph{Foundations of machine
  learning}.\hskip 1em plus 0.5em minus 0.4em\relax MIT press, 2018.

\bibitem[Hoorfar and Hassani(2008)]{hoorfar2008inequalities}
A.~Hoorfar and M.~Hassani, ``Inequalities on the {L}ambert {W} function and
  hyperpower function,'' \emph{J. Inequal. Pure and Appl. Math}, vol.~9, no.~2,
  pp. 5--9, 2008.

\bibitem[Koltchinskii and Panchenko(2000)]{koltchinskii2000rademacher}
V.~Koltchinskii and D.~Panchenko, ``Rademacher processes and bounding the risk
  of function learning,'' in \emph{High dimensional probability II}.\hskip 1em
  plus 0.5em minus 0.4em\relax Springer, 2000, pp. 443--457.

\bibitem[Bartlett and Mendelson(2002)]{bartlett2002rademacher}
P.~L. Bartlett and S.~Mendelson, ``Rademacher and {G}aussian complexities: Risk
  bounds and structural results,'' \emph{Journal of Machine Learning Research},
  vol.~3, no. Nov, pp. 463--482, 2002.

\bibitem[Barlett(1996)]{bartlett1996sample}
P.~Barlett, ``The sample complexity of pattern classification with neural
  networks,'' \emph{the size of the weights is more important than the size of
  the network, Technical report, Australian National University}, 1996.

\bibitem[Neyshabur et~al.(2015)Neyshabur, Tomioka, and
  Srebro]{neyshabur2015norm}
B.~Neyshabur, R.~Tomioka, and N.~Srebro, ``Norm-based capacity control in
  neural networks,'' in \emph{Conference on Learning Theory (COLT)}.\hskip 1em
  plus 0.5em minus 0.4em\relax PMLR, 2015, pp. 1376--1401.

\bibitem[Bartlett et~al.(2017)Bartlett, Foster, and
  Telgarsky]{bartlett2017spectrally}
P.~L. Bartlett, D.~J. Foster, and M.~J. Telgarsky, ``Spectrally-normalized
  margin bounds for neural networks,'' \emph{Advances in Neural Information
  Processing Systems (NeurIPS)}, vol.~30, 2017.

\bibitem[Golowich et~al.(2018)Golowich, Rakhlin, and Shamir]{golowich2018size}
N.~Golowich, A.~Rakhlin, and O.~Shamir, ``Size-independent sample complexity of
  neural networks,'' in \emph{Conference on Learning Theory (COLT)}.\hskip 1em
  plus 0.5em minus 0.4em\relax PMLR, 2018, pp. 297--299.

\bibitem[Liang et~al.(2019)Liang, Poggio, Rakhlin, and Stokes]{liang2019fisher}
T.~Liang, T.~Poggio, A.~Rakhlin, and J.~Stokes, ``Fisher-{R}ao metric,
  geometry, and complexity of neural networks,'' in \emph{International
  Conference on Artificial Intelligence and Statistics (AISTATS)}.\hskip 1em
  plus 0.5em minus 0.4em\relax PMLR, 2019, pp. 888--896.

\bibitem[Vapnik(1999)]{vapnik1999nature}
V.~Vapnik, \emph{The nature of statistical learning theory}.\hskip 1em plus
  0.5em minus 0.4em\relax Springer science \& business media, 1999.

\bibitem[de~Ockham et~al.(1974)de~Ockham, Boehner, G{\'a}l, and
  Brown]{de1974summa}
G.~de~Ockham, P.~Boehner, G.~G{\'a}l, and S.~Brown, \emph{Summa logicae}.\hskip
  1em plus 0.5em minus 0.4em\relax University of Notre Dame Press, 1974.

\bibitem[Ball(2016)]{ball2016tyranny}
\BIBentryALTinterwordspacing
P.~Ball, ``The tyranny of simple explanations,'' August 2016. [Online].
  Available:
  \url{https://www.theatlantic.com/science/archive/2016/08/occams-razor/495332/}
\BIBentrySTDinterwordspacing

\bibitem[Blumer et~al.(1987)Blumer, Ehrenfeucht, Haussler, and
  Warmuth]{blumer1987occam}
A.~Blumer, A.~Ehrenfeucht, D.~Haussler, and M.~K. Warmuth, ``Occam's razor,''
  \emph{Information processing letters}, vol.~24, no.~6, pp. 377--380, 1987.

\bibitem[Shawe-Taylor and Williamson(1997)]{shawe1997pac}
J.~Shawe-Taylor and R.~C. Williamson, ``A {PAC} analysis of a {B}ayesian
  estimator,'' in \emph{Conference on Computational Learning Theory (COLT)},
  1997, pp. 2--9.

\bibitem[Lotfi et~al.(2022)Lotfi, Finzi, Kapoor, Potapczynski, Goldblum, and
  Wilson]{lotfi2022pac}
S.~Lotfi, M.~Finzi, S.~Kapoor, A.~Potapczynski, M.~Goldblum, and A.~G. Wilson,
  ``{PAC}-{B}ayes compression bounds so tight that they can explain
  generalization,'' \emph{Advances in Neural Information Processing Systems
  (NeurIPS)}, vol.~35, pp. 31\,459--31\,473, 2022.

\bibitem[Rogers and Wagner(1978)]{rogers1978finite}
W.~H. Rogers and T.~J. Wagner, ``A finite sample distribution-free performance
  bound for local discrimination rules,'' \emph{The Annals of Statistics}, pp.
  506--514, 1978.

\bibitem[Devroye and Wagner(1979{\natexlab{a}})]{devroye1979adistribution}
L.~Devroye and T.~Wagner, ``Distribution-free inequalities for the deleted and
  holdout error estimates,'' \emph{IEEE Transactions on Information Theory},
  vol.~25, no.~2, pp. 202--207, 1979.

\bibitem[Devroye and Wagner(1979{\natexlab{b}})]{devroye1979bdistribution}
------, ``Distribution-free performance bounds with the resubstitution error
  estimate (corresp.),'' \emph{IEEE Transactions on Information Theory},
  vol.~25, no.~2, pp. 208--210, 1979.

\bibitem[Bousquet and Elisseeff(2002)]{bousquet2002stability}
O.~Bousquet and A.~Elisseeff, ``Stability and generalization,'' \emph{The
  Journal of Machine Learning Research}, vol.~2, pp. 499--526, 2002.

\bibitem[Kutin and Niyogi(2002)]{kutin2002almost}
S.~Kutin and P.~Niyogi, ``Almost-everywhere algorithmic stability and
  generalization error,'' in \emph{Conference on Uncertainty in Artificial
  Intelligence (UAI)}, 2002, pp. 275--282.

\bibitem[Rakhlin et~al.(2005)Rakhlin, Mukherjee, and
  Poggio]{rakhlin2005stability}
A.~Rakhlin, S.~Mukherjee, and T.~Poggio, ``Stability results in learning
  theory,'' \emph{Analysis and Applications}, vol.~3, no.~04, pp. 397--417,
  2005.

\bibitem[Mukherjee et~al.(2006)Mukherjee, Niyogi, Poggio, and
  Rifkin]{mukherjee2006learning}
S.~Mukherjee, P.~Niyogi, T.~Poggio, and R.~Rifkin, ``Learning theory: stability
  is sufficient for generalization and necessary and sufficient for consistency
  of empirical risk minimization,'' \emph{Advances in Computational
  Mathematics}, vol.~25, pp. 161--193, 2006.

\bibitem[Feldman and Vondrak(2018)]{feldman2018generalization}
V.~Feldman and J.~Vondrak, ``Generalization bounds for uniformly stable
  algorithms,'' \emph{Advances in Neural Information Processing Systems
  (NeurIPS)}, vol.~31, 2018.

\bibitem[Feldman and Vondrak(2019)]{feldman2019high}
------, ``High probability generalization bounds for uniformly stable
  algorithms with nearly optimal rate,'' in \emph{Conference on Learning Theory
  (COLT)}.\hskip 1em plus 0.5em minus 0.4em\relax PMLR, 2019, pp. 1270--1279.

\bibitem[Bousquet et~al.(2020)Bousquet, Klochkov, and
  Zhivotovskiy]{bousquet2020sharper}
O.~Bousquet, Y.~Klochkov, and N.~Zhivotovskiy, ``Sharper bounds for uniformly
  stable algorithms,'' in \emph{Conference on Learning Theory (COLT)}.\hskip
  1em plus 0.5em minus 0.4em\relax PMLR, 2020, pp. 610--626.

\bibitem[Klochkov and Zhivotovskiy(2021)]{klochkov2021stability}
Y.~Klochkov and N.~Zhivotovskiy, ``Stability and deviation optimal risk bounds
  with convergence rate $ o (1/n) $,'' \emph{Advances in Neural Information
  Processing Systems (NeurIPS)}, vol.~34, pp. 5065--5076, 2021.

\bibitem[Hardt et~al.(2016)Hardt, Recht, and Singer]{hardt2016train}
M.~Hardt, B.~Recht, and Y.~Singer, ``Train faster, generalize better: Stability
  of stochastic gradient descent,'' in \emph{International Conference on
  Machine Learning (ICML)}.\hskip 1em plus 0.5em minus 0.4em\relax PMLR, 2016,
  pp. 1225--1234.

\bibitem[Kuzborskij and Lampert(2018)]{kuzborskij2018data}
I.~Kuzborskij and C.~Lampert, ``Data-dependent stability of stochastic gradient
  descent,'' in \emph{International Conference on Machine Learning
  (ICML)}.\hskip 1em plus 0.5em minus 0.4em\relax PMLR, 2018, pp. 2815--2824.

\bibitem[Lei and Ying(2020)]{lei2020fine}
Y.~Lei and Y.~Ying, ``Fine-grained analysis of stability and generalization for
  stochastic gradient descent,'' in \emph{International Conference on Machine
  Learning (ICML)}.\hskip 1em plus 0.5em minus 0.4em\relax PMLR, 2020, pp.
  5809--5819.

\bibitem[Bassily et~al.(2020)Bassily, Feldman, Guzm{\'a}n, and
  Talwar]{bassily2020stability}
R.~Bassily, V.~Feldman, C.~Guzm{\'a}n, and K.~Talwar, ``Stability of stochastic
  gradient descent on nonsmooth convex losses,'' \emph{Advances in Neural
  Information Processing Systems (NeurIPS)}, vol.~33, pp. 4381--4391, 2020.

\bibitem[Charles and Papailiopoulos(2018)]{charles2018stability}
Z.~Charles and D.~Papailiopoulos, ``Stability and generalization of learning
  algorithms that converge to global optima,'' in \emph{International
  Conference on Machine Learning (ICML)}.\hskip 1em plus 0.5em minus
  0.4em\relax PMLR, 2018, pp. 745--754.

\bibitem[Lei(2023)]{lei2023stability}
Y.~Lei, ``Stability and generalization of stochastic optimization with
  nonconvex and nonsmooth problems,'' in \emph{The Thirty Sixth Annual
  Conference on Learning Theory (COLT)}.\hskip 1em plus 0.5em minus 0.4em\relax
  PMLR, 2023, pp. 191--227.

\bibitem[Raginsky et~al.(2016)Raginsky, Rakhlin, Tsao, Wu, and
  Xu]{raginsky2016information}
M.~Raginsky, A.~Rakhlin, M.~Tsao, Y.~Wu, and A.~Xu, ``Information-theoretic
  analysis of stability and bias of learning algorithms,'' in \emph{IEEE
  Information Theory Workshop (ITW)}.\hskip 1em plus 0.5em minus 0.4em\relax
  IEEE, 2016, pp. 26--30.

\bibitem[Oneto et~al.(2017)Oneto, Ridella, and Anguita]{oneto2017differential}
L.~Oneto, S.~Ridella, and D.~Anguita, ``Differential privacy and
  generalization: Sharper bounds with applications,'' \emph{Pattern Recognition
  Letters}, vol.~89, pp. 31--38, 2017.

\bibitem[Rodr{\'\i}guez-G{\'a}lvez
  et~al.(2021{\natexlab{a}})Rodr{\'\i}guez-G{\'a}lvez, Bassi, and
  Skoglund]{rodriguez2021upper}
B.~Rodr{\'\i}guez-G{\'a}lvez, G.~Bassi, and M.~Skoglund, ``Upper bounds on the
  generalization error of private algorithms,'' \emph{IEEE Transactions on
  Information Theory}, 2021.

\bibitem[Kulynych et~al.(2022)Kulynych, Yang, Yu, B{\l}asiok, and
  Nakkiran]{kulynych2022you}
B.~Kulynych, Y.-Y. Yang, Y.~Yu, J.~B{\l}asiok, and P.~Nakkiran, ``What you see
  is what you get: Principled deep learning via distributional
  generalization,'' \emph{Advances in Neural Information Processing Systems
  (NeurIPS)}, vol.~35, pp. 2168--2183, 2022.

\bibitem[Bun et~al.(2023)Bun, Gaboardi, Hopkins, Impagliazzo, Lei, Pitassi,
  Sivakumar, and Sorrell]{bun2023stability}
M.~Bun, M.~Gaboardi, M.~Hopkins, R.~Impagliazzo, R.~Lei, T.~Pitassi,
  S.~Sivakumar, and J.~Sorrell, ``Stability is stable: Connections between
  replicability, privacy, and adaptive generalization,'' in \emph{ACM Symposium
  on Theory of Computing (STOC)}, 2023, pp. 520--527.

\bibitem[Bassily et~al.(2016)Bassily, Nissim, Smith, Steinke, Stemmer, and
  Ullman]{bassily2016algorithmic}
R.~Bassily, K.~Nissim, A.~Smith, T.~Steinke, U.~Stemmer, and J.~Ullman,
  ``Algorithmic stability for adaptive data analysis,'' in \emph{ACM symposium
  on Theory of Computing (STOC)}, 2016, pp. 1046--1059.

\bibitem[Alabdulmohsin(2015)]{alabdulmohsin2015algorithmic}
I.~M. Alabdulmohsin, ``Algorithmic stability and uniform generalization,''
  \emph{Advances in Neural Information Processing Systems (NeurIPS)}, vol.~28,
  2015.

\bibitem[Alabdulmohsin(2017)]{alabdulmohsin2017information}
I.~Alabdulmohsin, ``An information-theoretic route from generalization in
  expectation to generalization in probability,'' in \emph{Artificial
  intelligence and statistics}.\hskip 1em plus 0.5em minus 0.4em\relax PMLR,
  2017, pp. 92--100.

\bibitem[Alquier(2021)]{alquier2021user}
P.~Alquier, ``User-friendly introduction to {PAC}-{B}ayes bounds,'' \emph{arXiv
  preprint arXiv:2110.11216}, 2021.

\bibitem[Hellstr{\"o}m et~al.(2023)Hellstr{\"o}m, Durisi, Guedj, and
  Raginsky]{hellstrom2023generalization}
F.~Hellstr{\"o}m, G.~Durisi, B.~Guedj, and M.~Raginsky, ``Generalization
  bounds: perspectives from information theory and {PAC}-{B}ayes,'' \emph{arXiv
  preprint arXiv:2309.04381}, 2023.

\bibitem[Bassily et~al.(2018)Bassily, Moran, Nachum, Shafer, and
  Yehudayoff]{bassily2018learners}
R.~Bassily, S.~Moran, I.~Nachum, J.~Shafer, and A.~Yehudayoff, ``Learners that
  use little information,'' in \emph{International Conference on Algorithmic
  Learning Theory (ALT)}.\hskip 1em plus 0.5em minus 0.4em\relax PMLR, 2018,
  pp. 25--55.

\bibitem[Livni and Moran(2020)]{livni2020limitation}
R.~Livni and S.~Moran, ``A limitation of the {PAC}-{B}ayes framework,''
  \emph{Advances in Neural Information Processing Systems (NeurIPS)}, vol.~33,
  pp. 20\,543--20\,553, 2020.

\bibitem[Haghifam et~al.(2023)Haghifam, Rodr{\'\i}guez-G{\'a}lvez, Thobaben,
  Skoglund, Roy, and Dziugaite]{haghifam2023limitations}
M.~Haghifam, B.~Rodr{\'\i}guez-G{\'a}lvez, R.~Thobaben, M.~Skoglund, D.~M. Roy,
  and G.~K. Dziugaite, ``Limitations of information-theoretic generalization
  bounds for gradient descent methods in stochastic convex optimization,'' in
  \emph{International Conference on Algorithmic Learning Theory (ALT)}.\hskip
  1em plus 0.5em minus 0.4em\relax PMLR, 2023, pp. 663--706.

\bibitem[Livni(2023)]{livni2023information}
R.~Livni, ``Information theoretic lower bounds for information theoretic upper
  bounds,'' in \emph{Advances in Neural Information Processing Systems
  (NeurIPS)}, 2023.

\bibitem[Dziugaite and Roy(2017)]{dziugaite2017computing}
G.~K. Dziugaite and D.~M. Roy, ``Computing nonvacuous generalization bounds for
  deep (stochastic) neural networks with many more parameters than training
  data,'' in \emph{Conference on Uncertainty in Artificial Intelligence (UAI)},
  2017.

\bibitem[Dziugaite and Roy(2018)]{dziugaite2018data}
------, ``Data-dependent {PAC}-{B}ayes priors via differential privacy,''
  \emph{Advances in Neural Information Processing Systems (NeurIPS)}, vol.~31,
  2018.

\bibitem[P{\'e}rez-Ortiz et~al.(2021)P{\'e}rez-Ortiz, Rivasplata, Shawe-Taylor,
  and Szepesv{\'a}ri]{perez2021tighter}
M.~P{\'e}rez-Ortiz, O.~Rivasplata, J.~Shawe-Taylor, and C.~Szepesv{\'a}ri,
  ``Tighter risk certificates for neural networks,'' \emph{Journal of Machine
  Learning Research}, vol.~22, no.~1, pp. 10\,326--10\,365, 2021.

\bibitem[Zhou et~al.(2019)Zhou, Veitch, Austern, Adams, and
  Orbanz]{zhou2019non}
W.~Zhou, V.~Veitch, M.~Austern, R.~P. Adams, and P.~Orbanz, ``Non-vacuous
  generalization bounds at the {I}magenet scale: A {PAC}-{B}ayesian compression
  approach,'' in \emph{International Conference on Learning Representations
  (ICLR)}, 2019.

\bibitem[Rodríguez-Gálvez et~al.(2024)Rodríguez-Gálvez, Thobaben, and
  Skoglund]{rodriguez2023morepac}
B.~Rodríguez-Gálvez, B.~Thobaben, and M.~Skoglund, ``More {PAC}-{B}ayes
  bounds: From bounded losses, to losses with general tail behaviors, to
  anytime validity,'' \emph{Journal of Machine Learning Research}, vol.~25,
  2024.

\bibitem[Grunwald et~al.(2021)Grunwald, Steinke, and
  Zakynthinou]{grunwald2021pac}
P.~Grunwald, T.~Steinke, and L.~Zakynthinou, ``{PAC}-{B}ayes, mac-{B}ayes and
  conditional mutual information: Fast rate bounds that handle general {VC}
  classes,'' in \emph{Conference on Learning Theory (COLT)}.\hskip 1em plus
  0.5em minus 0.4em\relax PMLR, 2021, pp. 2217--2247.

\bibitem[Hellstr{\"o}m and Durisi(2022)]{hellstrom2022new}
F.~Hellstr{\"o}m and G.~Durisi, ``A new family of generalization bounds using
  samplewise evaluated {CMI},'' in \emph{Advances in Neural Information
  Processing Systems (NeurIPS)}, 2022.

\bibitem[Neu et~al.(2021)Neu, Dziugaite, Haghifam, and Roy]{neu2021information}
G.~Neu, G.~K. Dziugaite, M.~Haghifam, and D.~M. Roy, ``Information-theoretic
  generalization bounds for stochastic gradient descent,'' in \emph{Conference
  on Learning Theory (COLT)}.\hskip 1em plus 0.5em minus 0.4em\relax PMLR,
  2021, pp. 3526--3545.

\bibitem[Dalalyan and Tsybakov(2008)]{dalalyan2008aggregation}
A.~Dalalyan and A.~B. Tsybakov, ``Aggregation by exponential weighting, sharp
  {PAC}-{B}ayesian bounds and sparsity,'' \emph{Machine Learning}, vol.~72, no.
  1-2, pp. 39--61, 2008.

\bibitem[Salmon and Dalalyan(2011)]{salmon2011optimal}
J.~Salmon and A.~Dalalyan, ``Optimal aggregation of affine estimators,'' in
  \emph{Conference on Learning Theory (COLT)}.\hskip 1em plus 0.5em minus
  0.4em\relax JMLR Workshop and Conference Proceedings, 2011, pp. 635--660.

\bibitem[Dalalyan and Salmon(2012)]{dalalyan2012sharp}
A.~S. Dalalyan and J.~Salmon, ``Sharp oracle inequalities for aggregation of
  affine estimators,'' \emph{Annals of Statistics}, vol.~40, no.~4, pp.
  2327--2355, 2012.

\bibitem[Maurer(2004)]{maurer2004note}
A.~Maurer, ``A note on the {PAC} {B}ayesian theorem,'' \emph{arXiv preprint
  cs/0411099}, 2004.

\bibitem[Germain et~al.(2015)Germain, Lacasse, Laviolette, March, and
  Roy]{germain2015risk}
P.~Germain, A.~Lacasse, F.~Laviolette, M.~March, and J.-F. Roy, ``Risk bounds
  for the majority vote: From a {PAC}-{B}ayesian analysis to a learning
  algorithm,'' \emph{Journal of Machine Learning Research}, vol.~16, no.~26,
  pp. 787--860, 2015.

\bibitem[Alquier and Biau(2013)]{alquier2013sparse}
P.~Alquier and G.~Biau, ``Sparse single-index model,'' \emph{Journal of Machine
  Learning Research}, vol.~14, no.~1, 2013.

\bibitem[Guedj and Alquier(2013)]{guedj2013pac}
B.~Guedj and P.~Alquier, ``{PAC}-{B}ayesian estimation and prediction in sparse
  additive models,'' \emph{Electronic Journal of Statistics}, vol.~7, pp.
  264--291, 2013.

\bibitem[Rivasplata et~al.(2020)Rivasplata, Kuzborskij, Szepesv{\'a}ri, and
  Shawe-Taylor]{rivasplata2020pac}
O.~Rivasplata, I.~Kuzborskij, C.~Szepesv{\'a}ri, and J.~Shawe-Taylor,
  ``{PAC}-{B}ayes analysis beyond the usual bounds,'' \emph{Advances in Neural
  Information Processing Systems (NeurIPS)}, vol.~33, pp. 16\,833--16\,845,
  2020.

\bibitem[Gelfand and Mitter(1991)]{gelfand1991recursive}
S.~B. Gelfand and S.~K. Mitter, ``Recursive stochastic algorithms for global
  optimization in $r^d$,'' \emph{SIAM Journal on Control and Optimization},
  vol.~29, no.~5, pp. 999--1018, 1991.

\bibitem[Welling and Teh(2011)]{welling2011bayesian}
M.~Welling and Y.~W. Teh, ``Bayesian learning via stochastic gradient
  {L}angevin dynamics,'' in \emph{International Conference on Machine Learning
  (ICML)}, 2011, pp. 681--688.

\bibitem[Haghifam et~al.(2020)Haghifam, Negrea, Khisti, Roy, and
  Dziugaite]{haghifam2020sharpened}
M.~Haghifam, J.~Negrea, A.~Khisti, D.~M. Roy, and G.~K. Dziugaite, ``Sharpened
  generalization bounds based on conditional mutual information and an
  application to noisy, iterative algorithms,'' \emph{Advances in Neural
  Information Processing Systems (NeurIPS)}, vol.~33, pp. 9925--9935, 2020.

\bibitem[Rodr{\'\i}guez-G{\'a}lvez et~al.(2020)Rodr{\'\i}guez-G{\'a}lvez,
  Bassi, Thobaben, and Skoglund]{rodriguez2020randomsubset}
B.~Rodr{\'\i}guez-G{\'a}lvez, G.~Bassi, R.~Thobaben, and M.~Skoglund, ``On
  random subset generalization error bounds and the stochastic gradient
  {L}angevin dynamics algorithm,'' in \emph{IEEE Information Theory Workshop
  (ITW)}.\hskip 1em plus 0.5em minus 0.4em\relax IEEE, 2020.

\bibitem[Audibert(2004)]{audibert2004better}
J.-Y. Audibert, ``A better variance control for {PAC}-{B}ayesian
  classification,'' \emph{Preprint}, vol. 905, 2004.

\bibitem[Germain et~al.(2009)Germain, Lacasse, Laviolette, and
  Marchand]{germain2009pac}
P.~Germain, A.~Lacasse, F.~Laviolette, and M.~Marchand, ``{PAC}-{B}ayesian
  learning of linear classifiers,'' in \emph{International Conference on
  Machine Learning (ICML)}, 2009, pp. 353--360.

\bibitem[B{\'e}gin et~al.(2016)B{\'e}gin, Germain, Laviolette, and
  Roy]{begin2016pac}
L.~B{\'e}gin, P.~Germain, F.~Laviolette, and J.-F. Roy, ``{PAC}-{B}ayesian
  bounds based on the r{\'e}nyi divergence,'' in \emph{Artificial Intelligence
  and Statistics}.\hskip 1em plus 0.5em minus 0.4em\relax PMLR, 2016, pp.
  435--444.

\bibitem[Juditsky et~al.(2008)Juditsky, Rigollet, and
  Tsybakov]{juditsky2008learning}
A.~B. Juditsky, P.~Rigollet, and A.~Tsybakov, ``Learning by mirror averaging,''
  \emph{Annals of Statistics}, vol.~36, no.~5, pp. 2183--2206, 2008.

\bibitem[Dudley(2010)]{dudley2010universal}
R.~M. Dudley, ``Universal donsker classes and metric entropy,'' in
  \emph{Selected Works of RM Dudley}.\hskip 1em plus 0.5em minus 0.4em\relax
  Springer, 2010, pp. 345--365.

\bibitem[Pollard(1984)]{pollard1984convergence}
D.~Pollard, \emph{Convergence of stochastic processes}.\hskip 1em plus 0.5em
  minus 0.4em\relax David Pollard, 1984.

\bibitem[Rodr{\'\i}guez-G{\'a}lvez et~al.(2024)Rodr{\'\i}guez-G{\'a}lvez,
  Rivasplata, Thobaben, and Skoglund]{rodriguez2024moments}
B.~Rodr{\'\i}guez-G{\'a}lvez, O.~Rivasplata, R.~Thobaben, and M.~Skoglund, ``A
  note on generalization bounds for losses with bounded moments,'' in
  \emph{IEEE International Symposium on Information Theory (ISIT)}.\hskip 1em
  plus 0.5em minus 0.4em\relax IEEE, 2024.

\bibitem[Rodr{\'\i}guez-G{\'a}lvez
  et~al.(2021{\natexlab{b}})Rodr{\'\i}guez-G{\'a}lvez, Thobaben, Bassi, and
  Skoglund]{rodriguez2021tighter}
B.~Rodr{\'\i}guez-G{\'a}lvez, R.~Thobaben, G.~Bassi, and M.~Skoglund, ``Tighter
  expected generalization error bounds via {W}asserstein distance,'' in
  \emph{Advances in Neural Information Processing Systems (NeurIPS)}, 2021.

\bibitem[Pensia et~al.(2018)Pensia, Jog, and Loh]{pensia2018generalization}
A.~Pensia, V.~Jog, and P.-L. Loh, ``Generalization error bounds for noisy,
  iterative algorithms,'' in \emph{IEEE International Symposium on Information
  Theory (ISIT)}.\hskip 1em plus 0.5em minus 0.4em\relax IEEE, 2018, pp.
  546--550.

\bibitem[Haghifam et~al.(2021)Haghifam, Dziugaite, Moran, and
  Roy]{haghifam2021towards}
M.~Haghifam, G.~K. Dziugaite, S.~Moran, and D.~Roy, ``Towards a unified
  information-theoretic framework for generalization,'' \emph{Advances in
  Neural Information Processing Systems (NeurIPS)}, vol.~34, pp.
  26\,370--26\,381, 2021.

\bibitem[Harutyunyan et~al.(2021)Harutyunyan, Raginsky, Ver~Steeg, and
  Galstyan]{harutyunyan2021information}
H.~Harutyunyan, M.~Raginsky, G.~Ver~Steeg, and A.~Galstyan,
  ``Information-theoretic generalization bounds for black-box learning
  algorithms,'' \emph{Advances in Neural Information Processing Systems
  (NeurIPS)}, vol.~34, pp. 24\,670--24\,682, 2021.

\bibitem[Wang and Mao(2023)]{Wang2023TighterIG}
Z.~Wang and Y.~Mao, ``Tighter information-theoretic generalization bounds from
  supersamples,'' in \emph{International Conference on Machine Learning
  (ICML)}, 2023.

\bibitem[Wang and Mao(2024)]{wang2024sample}
------, ``Sample-conditioned hypothesis stability sharpens
  information-theoretic generalization bounds,'' \emph{Advances in Neural
  Information Processing Systems (NeurIPS)}, vol.~36, 2024.

\bibitem[Asadi et~al.(2018)Asadi, Abbe, and Verd{\'u}]{asadi2018chaining}
A.~Asadi, E.~Abbe, and S.~Verd{\'u}, ``Chaining mutual information and
  tightening generalization bounds,'' \emph{Advances in Neural Information
  Processing Systems (NeurIPS)}, vol.~31, 2018.

\bibitem[Asadi and Abbe(2020)]{asadi2020chaining}
A.~R. Asadi and E.~Abbe, ``Chaining meets chain rule: Multilevel entropic
  regularization and training of neural networks,'' \emph{Journal of Machine
  Learning Research}, vol.~21, no.~1, pp. 5453--5484, 2020.

\bibitem[Chatzigeorgiou(2013)]{chatzigeorgiou2013bounds}
I.~Chatzigeorgiou, ``Bounds on the {L}ambert function and their application to
  the outage analysis of user cooperation,'' \emph{IEEE Communications
  Letters}, vol.~17, no.~8, pp. 1505--1508, 2013.

\bibitem[Vapnik and Chervonenkis(1971)]{vapnik1971uniform}
V.~Vapnik and A.~Y. Chervonenkis, ``On the uniform convergence of relative
  frequencies of events to their probabilities,'' \emph{Theory of Probability
  and its Applications}, vol.~16, no.~2, p. 264, 1971.

\bibitem[Devroye et~al.(1996)Devroye, Gy{\"o}rfi, Lugosi, Devroye, Gy{\"o}rfi,
  and Lugosi]{devroye1996vapnik}
L.~Devroye, L.~Gy{\"o}rfi, G.~Lugosi, L.~Devroye, L.~Gy{\"o}rfi, and G.~Lugosi,
  ``Vapnik-{C}hervonenkis theory,'' \emph{A probabilistic theory of pattern
  recognition}, pp. 187--213, 1996.

\bibitem[Hellstr{\"o}m and Durisi(2021{\natexlab{a}})]{hellstrom2021fast}
F.~Hellstr{\"o}m and G.~Durisi, ``Fast-rate loss bounds via conditional
  information measures with applications to neural networks,'' in \emph{IEEE
  International Symposium on Information Theory (ISIT)}.\hskip 1em plus 0.5em
  minus 0.4em\relax IEEE, 2021, pp. 952--957.

\bibitem[Steinke(2016)]{steinke2016upper}
T.~A. Steinke, ``Upper and lower bounds for privacy and adaptivity in
  algorithmic data analysis,'' Ph.D. dissertation, Harvard, 2016.

\bibitem[Zhou et~al.(2022)Zhou, Tian, and Liu]{zhou2022individually}
R.~Zhou, C.~Tian, and T.~Liu, ``Individually conditional individual mutual
  information bound on generalization error,'' \emph{IEEE Transactions on
  Information Theory}, vol.~68, no.~5, pp. 3304--3316, 2022.

\bibitem[Boucheron et~al.(2005)Boucheron, Bousquet, and
  Lugosi]{boucheron2005theory}
S.~Boucheron, O.~Bousquet, and G.~Lugosi, ``Theory of classification: A survey
  of some recent advances,'' \emph{ESAIM: probability and statistics}, vol.~9,
  pp. 323--375, 2005.

\bibitem[Wang et~al.(2019)Wang, Diaz, Santos~Filho, and
  Calmon]{wang2019information}
H.~Wang, M.~Diaz, J.~C.~S. Santos~Filho, and F.~P. Calmon, ``An
  information-theoretic view of generalization via {W}asserstein distance,'' in
  \emph{IEEE International Symposium on Information Theory (ISIT)}.\hskip 1em
  plus 0.5em minus 0.4em\relax IEEE, 2019, pp. 577--581.

\bibitem[Zhang et~al.(2021{\natexlab{b}})Zhang, Liu, and Tao]{zhang2021optimal}
J.~Zhang, T.~Liu, and D.~Tao, ``An optimal transport analysis on generalization
  in deep learning,'' \emph{IEEE Transactions on Neural Networks and Learning
  Systems}, 2021.

\bibitem[Orabona(2019)]{orabona2019modern}
F.~Orabona, ``A modern introduction to online learning,'' \emph{arXiv preprint
  arXiv:1912.13213}, 2019.

\bibitem[Palomar and Verd{\'u}(2008)]{palomar2008lautum}
D.~P. Palomar and S.~Verd{\'u}, ``Lautum information,'' \emph{IEEE Transactions
  on Information Theory}, vol.~54, no.~3, pp. 964--975, 2008.

\bibitem[Lopez and Jog(2018)]{lopez2018generalization}
A.~T. Lopez and V.~Jog, ``Generalization error bounds using {W}asserstein
  distances,'' in \emph{IEEE Information Theory Workshop (ITW)}.\hskip 1em plus
  0.5em minus 0.4em\relax IEEE, 2018, pp. 1--5.

\bibitem[Gao and Pavel(2018)]{gao2018properties}
B.~Gao and L.~Pavel, ``On the properties of the softmax function with
  application in game theory and reinforcement learning,'' 2018.

\bibitem[Steinwart and Christmann(2008)]{steinwart2008support}
I.~Steinwart and A.~Christmann, \emph{Support vector machines}.\hskip 1em plus
  0.5em minus 0.4em\relax Springer Science \& Business Media, 2008.

\bibitem[Chen et~al.(2014)Chen, Fox, and Guestrin]{chen2014stochastic}
T.~Chen, E.~Fox, and C.~Guestrin, ``Stochastic gradient {H}amiltonian {M}onte
  {C}arlo,'' in \emph{International Conference on Machine Learning
  (ICML)}.\hskip 1em plus 0.5em minus 0.4em\relax PMLR, 2014, pp. 1683--1691.

\bibitem[Li et~al.(2019)Li, Luo, and Qiao]{li2019generalization}
J.~Li, X.~Luo, and M.~Qiao, ``On generalization error bounds of noisy gradient
  methods for non-convex learning,'' in \emph{International Conference on
  Learning Representations (ICLR)}, 2019.

\bibitem[Wang et~al.(2021)Wang, Huang, Gao, and Calmon]{wang2021analyzing}
H.~Wang, Y.~Huang, R.~Gao, and F.~Calmon, ``Analyzing the generalization
  capability of {SGLD} using properties of {G}aussian channels,''
  \emph{Advances in Neural Information Processing Systems (NeurIPS)}, vol.~34,
  pp. 24\,222--24\,234, 2021.

\bibitem[Wang et~al.(2023)Wang, Gao, and Calmon]{wang2023generalization}
H.~Wang, R.~Gao, and F.~P. Calmon, ``Generalization bounds for noisy iterative
  algorithms using properties of additive noise channels,'' \emph{Journal of
  Machine Learning Research}, vol.~24, no.~26, pp. 1--43, 2023.

\bibitem[Futami and Fujisawa(2024)]{futami2024time}
F.~Futami and M.~Fujisawa, ``Time-independent information-theoretic
  generalization bounds for {SGLD},'' \emph{Advances in Neural Information
  Processing Systems (NeurIPS)}, vol.~36, 2024.

\bibitem[Wang and Mao(2021)]{wang2021generalization}
Z.~Wang and Y.~Mao, ``On the generalization of models trained with {SGD}:
  Information-theoretic bounds and implications,'' in \emph{International
  Conference on Learning Representations (ICLR)}, 2021.

\bibitem[Mou et~al.(2018)Mou, Wang, Zhai, and Zheng]{mou2018generalization}
W.~Mou, L.~Wang, X.~Zhai, and K.~Zheng, ``Generalization bounds of {SGLD} for
  non-convex learning: Two theoretical viewpoints,'' in \emph{Conference on
  Learning Theory (COLT)}.\hskip 1em plus 0.5em minus 0.4em\relax PMLR, 2018,
  pp. 605--638.

\bibitem[Hochreiter and Schmidhuber(1997)]{hochreiter1997flat}
S.~Hochreiter and J.~Schmidhuber, ``Flat minima,'' \emph{Neural computation},
  vol.~9, no.~1, pp. 1--42, 1997.

\bibitem[Keskar et~al.(2016)Keskar, Mudigere, Nocedal, Smelyanskiy, and
  Tang]{keskar2016large}
N.~S. Keskar, D.~Mudigere, J.~Nocedal, M.~Smelyanskiy, and P.~T.~P. Tang, ``On
  large-batch training for deep learning: Generalization gap and sharp
  minima,'' in \emph{International Conference on Learning Representations
  (ICLR)}, 2016.

\bibitem[Dinh et~al.(2017)Dinh, Pascanu, Bengio, and Bengio]{dinh2017sharp}
L.~Dinh, R.~Pascanu, S.~Bengio, and Y.~Bengio, ``Sharp minima can generalize
  for deep nets,'' in \emph{International Conference on Machine Learning
  (ICML)}.\hskip 1em plus 0.5em minus 0.4em\relax PMLR, 2017, pp. 1019--1028.

\bibitem[Izmailov et~al.(2018)Izmailov, Wilson, Podoprikhin, Vetrov, and
  Garipov]{izmailov2018averaging}
P.~Izmailov, A.~Wilson, D.~Podoprikhin, D.~Vetrov, and T.~Garipov, ``Averaging
  weights leads to wider optima and better generalization,'' in
  \emph{Conference on Uncertainty in Artificial Intelligence (UAI)}, 2018, pp.
  876--885.

\bibitem[He et~al.(2019)He, Huang, and Yuan]{he2019asymmetric}
H.~He, G.~Huang, and Y.~Yuan, ``Asymmetric valleys: Beyond sharp and flat local
  minima,'' \emph{Advances in Neural Information Processing Systems (NeurIPS)},
  vol.~32, 2019.

\bibitem[Chaudhari et~al.(2019)Chaudhari, Choromanska, Soatto, LeCun, Baldassi,
  Borgs, Chayes, Sagun, and Zecchina]{chaudhari2019entropy}
P.~Chaudhari, A.~Choromanska, S.~Soatto, Y.~LeCun, C.~Baldassi, C.~Borgs,
  J.~Chayes, L.~Sagun, and R.~Zecchina, ``Entropy-{SGD}: Biasing gradient
  descent into wide valleys,'' \emph{Journal of Statistical Mechanics: Theory
  and Experiment}, vol. 2019, no.~12, p. 124018, 2019.

\bibitem[Attias et~al.(2024)Attias, Dziugaite, Haghifam, Livni, and
  Roy]{attias2024information}
I.~Attias, G.~K. Dziugaite, M.~Haghifam, R.~Livni, and D.~M. Roy, ``Information
  complexity of stochastic convex optimization: Applications to generalization
  and memorization,'' \emph{arXiv preprint arXiv:2402.09327}, 2024.

\bibitem[Bun et~al.(2014)Bun, Ullman, and Vadhan]{bun2014fingerprinting}
M.~Bun, J.~Ullman, and S.~Vadhan, ``Fingerprinting codes and the price of
  approximate differential privacy,'' in \emph{ACM symposium on Theory of
  Computing (STOC)}, 2014, pp. 1--10.

\bibitem[Kamath et~al.(2019)Kamath, Li, Singhal, and
  Ullman]{kamath2019privately}
G.~Kamath, J.~Li, V.~Singhal, and J.~Ullman, ``Privately learning
  high-dimensional distributions,'' in \emph{Conference on Learning Theory
  (COLT)}.\hskip 1em plus 0.5em minus 0.4em\relax PMLR, 2019, pp. 1853--1902.

\bibitem[Shalev-Shwartz et~al.(2009)Shalev-Shwartz, Shamir, Srebro, and
  Sridharan]{shalev2009stochastic}
S.~Shalev-Shwartz, O.~Shamir, N.~Srebro, and K.~Sridharan, ``Stochastic convex
  optimization.'' in \emph{COLT}, vol.~2, no.~4, 2009, p.~5.

\bibitem[Cauchy et~al.(1847)]{cauchy1847methode}
A.~Cauchy \emph{et~al.}, ``M{\'e}thode g{\'e}n{\'e}rale pour la r{\'e}solution
  des systemes d’{\'e}quations simultan{\'e}es,'' \emph{Comp. Rend. Sci.
  Paris}, vol.~25, no. 1847, pp. 536--538, 1847.

\bibitem[Bubeck(2015)]{bubeck2015convex}
S.~Bubeck, ``Convex optimization: Algorithms and complexity,''
  \emph{Foundations and Trends{\textregistered} in Machine Learning}, vol.~8,
  no. 3-4, pp. 231--357, 2015.

\bibitem[Orabona(2020)]{lastiterate}
\BIBentryALTinterwordspacing
F.~Orabona. (2020) Last iterate of {SGD} converges (even in unbounded domains).
  [Online]. Available:
  \url{https://parameterfree.com/2020/08/07/last-iterate-of-sgd-converges-even-in-unbounded-domains/#lemmalast_average}
\BIBentrySTDinterwordspacing

\bibitem[Zhang(2004)]{zhang2004solving}
T.~Zhang, ``Solving large scale linear prediction problems using stochastic
  gradient descent algorithms,'' in \emph{International Conference on Machine
  Learning (ICML)}, 2004, p. 116.

\bibitem[Amir et~al.(2021)Amir, Koren, and Livni]{amir2021sgd}
I.~Amir, T.~Koren, and R.~Livni, ``{SGD} generalizes better than {GD} (and
  regularization doesn’t help),'' in \emph{Conference on Learning Theory
  (COLT)}, 2021, pp. 63--92.

\bibitem[Sekhari et~al.(2021)Sekhari, Sridharan, and Kale]{sekhari2021sgd}
A.~Sekhari, K.~Sridharan, and S.~Kale, ``{SGD}: The role of implicit
  regularization, batch-size and multiple-epochs,'' \emph{Advances in Neural
  Information Processing Systems (NeurIPS)}, vol.~34, pp. 27\,422--27\,433,
  2021.

\bibitem[Mitzenmacher and Upfal(2005)]{mitzenmacher2017probability}
M.~Mitzenmacher and E.~Upfal, \emph{Probability and Computing: Randomized
  Algorithms and Probabilistic Analysis}.\hskip 1em plus 0.5em minus
  0.4em\relax Cambridge university press, 2005.

\bibitem[Freund(1998)]{freund1998self}
Y.~Freund, ``Self bounding learning algorithms,'' in \emph{Conference on
  Computational Learning Theory (COLT)}, 1998, pp. 247--258.

\bibitem[Langford and Blum(2003)]{langford2003microchoice}
J.~Langford and A.~Blum, ``Microchoice bounds and self bounding learning
  algorithms,'' \emph{Machine Learning}, vol.~51, pp. 165--179, 2003.

\bibitem[Rivasplata(2022)]{rivasplata2022pac}
O.~Rivasplata, ``{PAC}-{B}ayesian computation,'' Ph.D. dissertation, UCL
  (University College London), 2022.

\bibitem[Langford and Caruana(2001)]{langford2001not}
J.~Langford and R.~Caruana, ``({N}ot) bounding the true error,'' \emph{Advances
  in Neural Information Processing Systems (NeurIPS)}, vol.~14, 2001.

\bibitem[Seldin et~al.(2012)Seldin, Laviolette, Cesa-Bianchi, Shawe-Taylor, and
  Auer]{seldin2012pac}
Y.~Seldin, F.~Laviolette, N.~Cesa-Bianchi, J.~Shawe-Taylor, and P.~Auer,
  ``{PAC}-{B}ayesian inequalities for martingales,'' \emph{IEEE Transactions on
  Information Theory}, vol.~58, no.~12, pp. 7086--7093, 2012.

\bibitem[de~la Peña et~al.(2007)de~la Peña, Lai, and Lai]{de2007pseudo}
V.~H. de~la Peña, M.~J.~K. Lai, and T.~L. Lai, ``Pseudo-maximization and
  self-normalized processes,'' \emph{Probability Surveys}, vol.~4, pp.
  172--192, 2007.

\bibitem[Kakade et~al.(2008)Kakade, Sridharan, and
  Tewari]{kakade2008complexity}
S.~M. Kakade, K.~Sridharan, and A.~Tewari, ``On the complexity of linear
  prediction: Risk bounds, margin bounds, and regularization,'' \emph{Advances
  in Neural Information Processing Systems (NeurIPS)}, vol.~21, 2008.

\bibitem[Chugg et~al.(2023)Chugg, Wang, and Ramdas]{chugg2023unified}
B.~Chugg, H.~Wang, and A.~Ramdas, ``A unified recipe for deriving
  (time-uniform) pac-bayes bounds,'' \emph{Journal of Machine Learning
  Research}, vol.~24, no. 372, pp. 1--61, 2023.

\bibitem[Guedj(2019)]{guedj2019primer}
B.~Guedj, ``A primer on {PAC}-{B}ayesian learning,'' in \emph{Proceedings of
  the French Mathematical Society}, vol.~33.\hskip 1em plus 0.5em minus
  0.4em\relax Soci{\'e}t{\'e} Math{\'e}matique de France, 2019, pp. 391--414.

\bibitem[Lattimore and Szepesv{\'a}ri(2020)]{lattimore2020bandit}
T.~Lattimore and C.~Szepesv{\'a}ri, \emph{Bandit algorithms}.\hskip 1em plus
  0.5em minus 0.4em\relax Cambridge University Press, 2020.

\bibitem[Seldin(2023)]{seldinNotes}
\BIBentryALTinterwordspacing
Y.~Seldin, ``Machine learning. the science of selection under uncertainty,''
  \emph{Lecture Notes}, 2023. [Online]. Available:
  \url{https://sites.google.com/site/yevgenyseldin/teaching?authuser=0}
\BIBentrySTDinterwordspacing

\bibitem[Marton(1996)]{marton1996measure}
K.~Marton, ``A measure concentration inequality for contracting {M}arkov
  chains,'' \emph{Geometric \& Functional Analysis GAFA}, vol.~6, no.~3, pp.
  556--571, 1996.

\bibitem[Kuzborskij et~al.(2024)Kuzborskij, Jun, Wu, Jang, and
  Orabona]{kuzborskij2024better}
I.~Kuzborskij, K.-S. Jun, Y.~Wu, K.~Jang, and F.~Orabona, ``Better-than-{KL}
  {PAC}-{B}ayes bounds,'' \emph{arXiv preprint arXiv:2402.09201}, 2024.

\bibitem[Amit et~al.(2022)Amit, Epstein, Moran, and Meir]{amit2022integral}
R.~Amit, B.~Epstein, S.~Moran, and R.~Meir, ``Integral probability metrics
  {PAC}-{B}ayes bounds,'' \emph{Advances in Neural Information Processing
  Systems (NeurIPS)}, vol.~35, pp. 3123--3136, 2022.

\bibitem[Haddouche and Guedj(2023{\natexlab{b}})]{haddouche2023wasserstein}
M.~Haddouche and B.~Guedj, ``Wasserstein {PAC}-{B}ayes learning: A bridge
  between generalisation and optimisation,'' \emph{arXiv preprint
  arXiv:2304.07048}, 2023.

\bibitem[Viallard et~al.(2024{\natexlab{a}})Viallard, Haddouche, Simsekli, and
  Guedj]{viallard2024learning}
P.~Viallard, M.~Haddouche, U.~Simsekli, and B.~Guedj, ``Learning via
  {W}asserstein-based high probability generalisation bounds,'' \emph{Advances
  in Neural Information Processing Systems (NeurIPS)}, vol.~36, 2024.

\bibitem[Viallard et~al.(2024{\natexlab{b}})Viallard, Haddouche,
  {\c{S}}im{\c{s}}ekli, and Guedj]{viallard2024tighter}
P.~Viallard, M.~Haddouche, U.~{\c{S}}im{\c{s}}ekli, and B.~Guedj, ``Tighter
  generalisation bounds via interpolation,'' \emph{arXiv preprint
  arXiv:2402.05101}, 2024.

\bibitem[Foong et~al.(2021)Foong, Bruinsma, Burt, and Turner]{foong2021tight}
A.~Foong, W.~Bruinsma, D.~Burt, and R.~Turner, ``How tight can {PAC}-{B}ayes be
  in the small data regime?'' \emph{Advances in Neural Information Processing
  Systems (NeurIPS)}, vol.~34, pp. 4093--4105, 2021.

\bibitem[B{\'e}gin et~al.(2014)B{\'e}gin, Germain, Laviolette, and
  Roy]{begin2014pac}
L.~B{\'e}gin, P.~Germain, F.~Laviolette, and J.-F. Roy, ``{PAC}-{B}ayesian
  theory for transductive learning,'' in \emph{Artificial Intelligence and
  Statistics}.\hskip 1em plus 0.5em minus 0.4em\relax PMLR, 2014, pp. 105--113.

\bibitem[Catoni(2015)]{catoni2015pac}
O.~Catoni, ``{PAC}-{B}ayes bounds for supervised classification,''
  \emph{Measures of Complexity: Festschrift for Alexey Chervonenkis}, pp.
  287--302, 2015.

\bibitem[Reeb et~al.(2018)Reeb, Doerr, Gerwinn, and Rakitsch]{reeb2018learning}
D.~Reeb, A.~Doerr, S.~Gerwinn, and B.~Rakitsch, ``Learning {G}aussian processes
  by minimizing {PAC}-{B}ayesian generalization bounds,'' \emph{Advances in
  Neural Information Processing Systems (NeurIPS)}, vol.~31, 2018.

\bibitem[Wu and Seldin(2022)]{wu2022split}
Y.-S. Wu and Y.~Seldin, ``Split-kl and {PAC}-{B}ayes-split-kl inequalities for
  ternary random variables,'' \emph{Advances in Neural Information Processing
  Systems (NeurIPS)}, vol.~35, pp. 11\,369--11\,381, 2022.

\bibitem[Jang et~al.(2023)Jang, Jun, Kuzborskij, and Orabona]{jang2023tighter}
K.~Jang, K.-S. Jun, I.~Kuzborskij, and F.~Orabona, ``Tighter {PAC}-{B}ayes
  bounds through coin-betting,'' \emph{arXiv preprint arXiv:2302.05829}, 2023.

\bibitem[Wu et~al.(2021)Wu, Masegosa, Lorenzen, Igel, and
  Seldin]{wu2021chebyshev}
Y.-S. Wu, A.~Masegosa, S.~Lorenzen, C.~Igel, and Y.~Seldin,
  ``Chebyshev-{C}antelli {PAC}-{B}ayes-{B}ennett inequality for the weighted
  majority vote,'' \emph{Advances in Neural Information Processing Systems
  (NeurIPS)}, vol.~34, pp. 12\,625--12\,636, 2021.

\bibitem[Mhammedi et~al.(2019)Mhammedi, Gr{\"u}nwald, and
  Guedj]{mhammedi2019pac}
Z.~Mhammedi, P.~Gr{\"u}nwald, and B.~Guedj, ``{PAC}-{B}ayes un-expected
  {B}ernstein inequality,'' \emph{Advances in Neural Information Processing
  Systems (NeurIPS)}, vol.~32, 2019.

\bibitem[Hellstr{\"o}m and
  Durisi(2021{\natexlab{b}})]{hellstrom2021corrections}
F.~Hellstr{\"o}m and G.~Durisi, ``Corrections to “{G}eneralization bounds via
  information density and conditional information density”,'' \emph{IEEE
  Journal on Selected Areas in Information Theory}, vol.~2, no.~3, pp.
  1072--1073, 2021.

\bibitem[Mai and Alquier(2015)]{mai2015bayesian}
T.~T. Mai and P.~Alquier, ``A {B}ayesian approach for noisy matrix completion:
  {O}ptimal rate under general sampling distribution,'' \emph{Electronic
  Journal of Statistics}, vol.~9, no.~1, pp. 823--841, 2015.

\bibitem[Kaufmann et~al.(2016)Kaufmann, Capp{\'e}, and
  Garivier]{kaufmann2016complexity}
E.~Kaufmann, O.~Capp{\'e}, and A.~Garivier, ``On the complexity of best arm
  identification in multi-armed bandit models,'' \emph{Journal of Machine
  Learning Research}, vol.~17, pp. 1--42, 2016.

\bibitem[Ville(1939)]{ville1939etude}
J.~Ville, ``{\'E}tude critique de la notion de collectif,'' \emph{Bull. Amer.
  Math. Soc}, vol.~45, no.~11, p. 824, 1939.

\bibitem[Bercu and Touati(2008)]{bercu2008exponential}
B.~Bercu and A.~Touati, ``Exponential inequalities for self-normalized
  martingales with applications,'' \emph{Annals of Applied Probability},
  vol.~18, pp. 1848--1869, 2008.

\bibitem[Braun et~al.(2009)Braun, Chatzikokolakis, and
  Palamidessi]{braun2009quantitative}
C.~Braun, K.~Chatzikokolakis, and C.~Palamidessi, ``Quantitative notions of
  leakage for one-try attacks,'' \emph{Electronic Notes in Theoretical Computer
  Science}, vol. 249, pp. 75--91, 2009.

\bibitem[Alvim et~al.(2020)Alvim, Chatzikokolakis, McIver, Morgan, Palamidessi,
  and Smith]{alvim2020science}
M.~S. Alvim, K.~Chatzikokolakis, A.~McIver, C.~Morgan, C.~Palamidessi, and
  G.~Smith, \emph{The Science of Quantitative Information Flow}.\hskip 1em plus
  0.5em minus 0.4em\relax Springer, 2020.

\bibitem[Meiser(2018)]{meiser2018approximate}
S.~Meiser, ``Approximate and probabilistic differential privacy definitions,''
  \emph{Cryptology ePrint Archive}, 2018.

\bibitem[Wasserman and Zhou(2010)]{wasserman2010statistical}
L.~Wasserman and S.~Zhou, ``A statistical framework for differential privacy,''
  \emph{Journal of the American Statistical Association}, vol. 105, no. 489,
  pp. 375--389, 2010.

\bibitem[Kairouz et~al.(2015)Kairouz, Oh, and
  Viswanath]{kairouz2015composition}
P.~Kairouz, S.~Oh, and P.~Viswanath, ``The composition theorem for differential
  privacy,'' in \emph{International Conference on Machine Learning
  (ICML)}.\hskip 1em plus 0.5em minus 0.4em\relax PMLR, 2015, pp. 1376--1385.

\bibitem[Dong et~al.(2021)Dong, Roth, and Su]{dong2021gaussian}
J.~Dong, A.~Roth, and W.~Su, ``{G}aussian differential privacy,'' \emph{Journal
  of the Royal Statistical Society}, 2021.

\bibitem[Yu et~al.(2014)Yu, Rybar, Uhler, and Fienberg]{yu2014differentially}
F.~Yu, M.~Rybar, C.~Uhler, and S.~E. Fienberg, ``Differentially-private
  logistic regression for detecting multiple-{SNP} association in {GWAS}
  databases,'' in \emph{International Conference on Privacy in Statistical
  Databases}.\hskip 1em plus 0.5em minus 0.4em\relax Springer, 2014, pp.
  170--184.

\bibitem[Eguchi et~al.(2009)Eguchi, Kuruvilla, Ogedegbe, Gerin, Schwartz, and
  Pickering]{eguchi2009optimal}
K.~Eguchi, S.~Kuruvilla, G.~Ogedegbe, W.~Gerin, J.~E. Schwartz, and T.~G.
  Pickering, ``What is the optimal interval between successive home blood
  pressure readings using an automated oscillometric device?'' \emph{Journal of
  hypertension}, vol.~27, no.~6, p. 1172, 2009.

\bibitem[Muntner et~al.(2019)Muntner, Einhorn, Cushman, Whelton, Bello, Drawz,
  Green, Jones, Juraschek, Margolis, et~al.]{muntner2019blood}
P.~Muntner, P.~T. Einhorn, W.~C. Cushman, P.~K. Whelton, N.~A. Bello, P.~E.
  Drawz, B.~B. Green, D.~W. Jones, S.~P. Juraschek, K.~L. Margolis
  \emph{et~al.}, ``Blood pressure assessment in adults in clinical practice and
  clinic-based research: {JACC} scientific expert panel,'' \emph{Journal of the
  American College of Cardiology}, vol.~73, no.~3, pp. 317--335, 2019.

\bibitem[Ma et~al.(2022)Ma, Zhong, Duan, Shen, Qin, and Hu]{ma2022development}
G.~Ma, Z.~Zhong, Y.~Duan, Z.~Shen, N.~Qin, and D.~Hu, ``Development and
  validation of a self-quantification scale for patients with hypertension,''
  \emph{Frontiers in Public Health}, vol.~10, p. 849859, 2022.

\bibitem[Boateng and Abaye(2019)]{boateng2019review}
E.~Y. Boateng and D.~A. Abaye, ``A review of the logistic regression model with
  emphasis on medical research,'' \emph{Journal of data analysis and
  information processing}, vol.~7, no.~4, pp. 190--207, 2019.

\bibitem[{Hershey} and {Olsen}(2007)]{hershey_2007_approx}
J.~R. {Hershey} and P.~A. {Olsen}, ``Approximating the {K}ullback {L}eibler
  divergence between {G}aussian mixture models,'' in \emph{2007 IEEE
  International Conference on Acoustics, Speech, and Signal Processing
  (ICASSP)}, vol.~4, Apr. 2007, pp. 317--320.

\bibitem[{Kaji}(2015)]{kaji_bounds_2015}
Y.~{Kaji}, ``Bounds on the entropy of multinomial distribution,'' in \emph{IEEE
  International Symposium on Information Theory (ISIT)}, Jun. 2015, pp.
  1362--1366.

\bibitem[Bu et~al.(2020{\natexlab{b}})Bu, Dong, Long, and Su]{bu2019deep}
Z.~Bu, J.~Dong, Q.~Long, and W.~J. Su, ``Deep learning with {G}aussian
  differential privacy,'' \emph{Harvard data science review}, vol. 2020,
  no.~23, 2020.

\bibitem[Kingma and Ba(2014)]{kingma2014adam}
D.~P. Kingma and J.~Ba, ``{Adam: A method for stochastic optimization},''
  \emph{arXiv preprint arXiv:1412.6980}, 2014.

\bibitem[Heikkil{\"a} et~al.(2019)Heikkil{\"a}, J{\"a}lk{\"o}, Dikmen, and
  Honkela]{heikkila2019differentially}
M.~Heikkil{\"a}, J.~J{\"a}lk{\"o}, O.~Dikmen, and A.~Honkela, ``Differentially
  private {M}arkov chain {M}onte {C}arlo,'' in \emph{Advances in Neural
  Information Processing Systems (NeurIPS)}, 2019, pp. 4113--4123.

\bibitem[Csisz{\'a}r and K{\"o}rner(2011)]{csiszar_2011_information}
I.~Csisz{\'a}r and J.~K{\"o}rner, \emph{{Information Theory: Coding Theorems
  for Discrete Memoryless Systems}}, 2nd~ed.\hskip 1em plus 0.5em minus
  0.4em\relax Cambridge University Press, 2011.

\bibitem[Flynn et~al.(2023)Flynn, Reeb, Kandemir, and Peters]{flynn2023pac}
H.~Flynn, D.~Reeb, M.~Kandemir, and J.~Peters, ``{PAC}-{B}ayes bounds for
  bandit problems: A survey and experimental comparison,'' \emph{IEEE
  Transactions on Pattern Analysis and Machine Intelligence}, 2023.

\end{thebibliography}

\end{document}